\documentclass[11pt,a4paper]{article}

\usepackage[utf8]{inputenc}
\usepackage[T1]{fontenc}
\usepackage{amsmath,amsthm,amssymb}
\usepackage{mathrsfs}
\usepackage{bm}
\usepackage[dvipsnames]{xcolor}
\usepackage{hyperref}
\usepackage{cleveref}
\usepackage{enumitem}
\usepackage{titletoc}
\usepackage{multirow}
\usepackage{subcaption}
\usepackage{graphicx}
\usepackage{tikz}
\usepackage{bbm}
\usepackage{wrapfig}
\usetikzlibrary{arrows.meta,positioning,shapes.geometric,fit,calc,matrix}
\usepackage[margin=1in]{geometry}
\usepackage{authblk}
\usepackage{natbib}
\usepackage{multicol}
\usepackage{booktabs}
\usepackage{parskip}

\usepackage{algorithm}
\usepackage{algpseudocode}

\hypersetup{
   breaklinks=true,
   colorlinks=true,
   linkcolor=Bittersweet,
   citecolor=Periwinkle,
   urlcolor=gray
}

\definecolor{red}{HTML}{ca0020}
\definecolor{lightred}{HTML}{f4a582}
\definecolor{lightblue}{HTML}{92c5de}
\definecolor{green}{HTML}{008837}
\definecolor{blue}{HTML}{2c7bb6}

\usepackage{tcolorbox}
\tcbuselibrary{theorems,skins,breakable}

\newtcolorbox{keyeqbox}{
    colback=blue!6, colframe=gray!70,
    boxrule=1.0pt, boxsep=3pt,
    left=4pt, right=4pt,
    enlarge left by=-4pt, enlarge right by=-4pt
}

\newtcolorbox{thmbox}{
    colback=green!6, colframe=gray!70,
    boxrule=1.0pt, boxsep=3pt,
    left=4pt, right=4pt,
    enlarge left by=-4pt, enlarge right by=-4pt
}

\newtcolorbox{defbox}{
    colback=orange!6, colframe=gray!70,
    boxrule=1.0pt, boxsep=3pt,
    left=4pt, right=4pt,
    enlarge left by=-4pt, enlarge right by=-4pt
}

\newtheorem{theorem}{Theorem}[section]
\newtheorem{lemma}[theorem]{Lemma}
\newtheorem{proposition}{Proposition}[section]
\newtheorem{corollary}[theorem]{Corollary}
\newtheorem{definition}[theorem]{Definition}
\newtheorem{remark}{Remark}[section]

\newenvironment{talign*}
 {\let\displaystyle\textstyle\csname align*\endcsname}
 {\endalign}
\newenvironment{talign}
 {\let\displaystyle\textstyle\csname align\endcsname}
 {\endalign}

\renewcommand{\eqref}[1]{(\ref{#1})}

\title{Rare Event Analysis via Stochastic Optimal Control}

\author[1,2]{Yuanqi Du\thanks{This work was Y.D.'s summer internship project at Microsoft Research New England, mentored by C.~D.-E. Correspondence: \href{mailto:yuanqidu@microsoft.com}{yuanqidu@microsoft.com}, \href{mailto:carlesd@microsoft.com}{carlesd@microsoft.com}}}
\author[3]{Jiajun He}
\author[1]{Dinghuai Zhang}
\author[4]{Eric Vanden-Eijnden}
\author[1]{Carles Domingo-Enrich\thanks{Corresponding author.}}

\affil[1]{Microsoft Research New England}
\affil[2]{Cornell University}
\affil[3]{University of Cambridge}
\affil[4]{Courant Institute of Mathematical Sciences, NYU}

\date{}

\begin{document}
\maketitle

\begin{abstract}
Rare events, from biomolecular conformational changes, phase transitions, to chemical reactions, are central to the behavior of many physical systems, yet they are extremely difficult to study computationally because unbiased simulations seldom produce them. Transition Path Theory (TPT) provides a rigorous statistical framework for analyzing such events: it characterizes the ensemble of reactive trajectories between two designated metastable states (reactant and product), and its central object---the committor function, which gives the probability that the system will next reach the product rather than the reactant---encodes all essential kinetic and thermodynamic information. We introduce a framework that casts committor estimation as a stochastic optimal control (SOC) problem. In this formulation the committor defines a feedback control---proportional to the gradient of its logarithm---that actively steers trajectories toward the reactive region, thereby enabling efficient sampling of reactive paths. To solve the resulting hitting-time control problem we develop two complementary objectives: a direct backpropagation loss and a principled off-policy \textit{Value Matching} loss, for which we establish first-order optimality guarantees. We further address metastability, which can trap controlled trajectories in intermediate basins, by introducing an alternative sampling process that preserves the reactive current while lowering effective energy barriers. On both reversible and non-reversible dynamics, the framework yields markedly more accurate committor estimates, reaction rates, and transition path samples than existing methods.
\end{abstract}

\medskip
\noindent\textbf{Keywords:} Stochastic optimal control $\cdot$ Rare event analysis $\cdot$ Transition path theory

\section{Introduction}
\label{sec:intro}

\looseness=-1
Rare events in physical systems---biomolecular conformational changes, phase transitions, chemical reactions---are infrequent by nature yet key to how these systems evolve~\citep{onsager1944crystal,levenspiel1998chemical,bolhuis2002transition,seifert2008stochastic}. Their scarcity makes them notoriously difficult to observe in direct numerical simulations. Quantifying such transitions requires both thermodynamic information (e.g.\ free-energy differences) and kinetic information (e.g.\ reaction rates). While classical theories such as the Arrhenius equation and transition state theory relate rates to activation barriers~\citep{arrhenius1889dissociationswarme,eyring1935activated}, modern rare-event analysis is largely organized around the \emph{committor function}, a central object of Transition Path Theory (TPT)~\citep{vanden2006towards,vanden2010transition} that encodes the full ensemble of reactive trajectories.

\paragraph{Setup and dynamics.}
Let $X_t\in\mathcal{X}\subseteq\mathbb{R}^d$ denote the state of the system evolving according to the stochastic differential equation
\begin{equation}
\label{eq:reference0}
  \mathrm{d}X_t = b(X_t)\,\mathrm{d}t + \sigma\,\mathrm{d}W_t,
\end{equation}
where $b:\mathcal{X}\to\mathbb{R}^d$ is the drift, $\sigma\in\mathbb{R}^{d\times d}$ is a constant volatility coefficient (with $D = \frac{1}{2}\sigma\sigma^{\top}$ the diffusion tensor), and $W_t$ is a standard $d$-dimensional Wiener process. Under appropriate conditions on~$b$ and~$\sigma$, the process~\eqref{eq:reference0} is ergodic with respect to a unique stationary density~$\rho$ satisfying the Fokker--Planck equation $\nabla \cdot (b\rho - D\nabla\rho) = 0$.

An important special case is the \emph{reversible} (gradient) setting, in which the dynamics follow the overdamped Langevin equation $dX_t = -\nabla U(X_t)\,dt + \sqrt{2\beta^{-1}}\,dW_t$ for a potential $U:\mathcal{X}\to\mathbb{R}$ and inverse temperature $\beta = 1/(k_BT)$, and the stationary density is the Boltzmann--Gibbs distribution
\begin{equation}
\label{eq:equilibrium}
  \rho(x) = Z^{-1}\,e^{-\beta U(x)}, \qquad Z = \int_{\mathcal{X}} e^{-\beta U(x)}\,\mathrm{d}x.
\end{equation}
Our framework applies equally to the underdamped (kinetic) Langevin equation, where the state includes both position and momentum and the diffusion tensor is degenerate (see~\Cref{subsec:examples}); the marginal of the stationary density over the positions is the same Boltzmann--Gibbs distribution~\eqref{eq:equilibrium}. When $\beta$ is large, $\rho$ concentrates around the local minima of~$U$, and transitions between them become rare. More generally, metastability can also arise in high-dimensional systems from entropic effects, even at moderate temperatures, as well as in irreversible (non-gradient) dynamics where the stationary density is not known in closed form.

\paragraph{Committor function.}
Let $A,B\subset\mathcal{X}$ be two disjoint closed sets representing the reactant and product regions, respectively. In practice, these sets are chosen to be metastable states of the dynamics~\eqref{eq:reference0}: each should concentrate significant stationary probability mass, and they should be separated by a region where $\rho$ is very small, so that transitions between them are rare. Define the first hitting time $\tau = \inf\{t\ge0 : X_t\in A\cup B\}$, and let $\mathbb{P}_x$ denote the law of the process~\eqref{eq:reference0} started at $X_0 = x$. The \emph{committor function} (in short \emph{committor}, also known as splitting probability~\citep{onsager1938initial} or $p_{\mathrm{fold}}$~\citep{du1998transition}) is the function $q:\mathcal{X}\to[0,1]$ given by \looseness=-1
\begin{equation}
\label{eq:committor}
  q(x) \;=\; \mathbb{P}_x \bigl(X_{\tau}\in B\bigr),
\end{equation}
i.e.\ the probability that a trajectory of~\eqref{eq:reference0} starting at~$x$ reaches~$B$ before~$A$. By definition, $q=1$ in~$B$ and $q=0$ in~$A$; the complementary probability of reaching~$A$ before~$B$ is simply $1-q$. The level sets of~$q$ foliate the transition region $\mathcal{X}\setminus(A\cup B)$: in particular, the isocommittor surface $\{x:q(x)=\tfrac{1}{2}\}$ is the stochastic separatrix, and the stationary density restricted to this surface identifies the transition state ensemble. Once an accurate committor is available, key kinetic and thermodynamic quantities---including reaction rates, rate constants, and the density of reactive trajectories---follow from standard TPT identities~\citep{vanden2010transition} (see \Cref{sec:background}).

\paragraph{Challenges and prior work.}
Computing~$q$ in high dimensions is difficult because the committor is most informative in the transition region, which is rarely visited by unbiased dynamics. Existing approaches exploit several equivalent characterizations: $q$ solves a stationary backward Kolmogorov equation (BKE)~\citep{vanden2006towards}, admits a Feynman--Kac representation~\citep{li2020solving,strahan2023predicting}, and can be learned from first-hitting labels via maximum likelihood~\citep{peters2006obtaining}. In practice, each of these formulations requires effective sampling of the transition region, which has motivated the coupling of enhanced sampling mechanisms---umbrella sampling, metadynamics, and related techniques---with the learning objective~\citep{rotskoff2022active,hasyim2022supervised,li2019computing,lin2024deep,kang2024computing,trizio2025everything,wang2025estimating}. Although the committor is an optimal reaction coordinate and its gradient can in principle be used to sample reactive trajectories~\citep{lu2015reactive,cameron2014flows}, computing it accurately remains a major challenge. In particular, it is unclear how to set up a feedback loop in which an approximate committor guides the generation of reactive trajectories, whose data in turn improve the committor estimate, and under what conditions such an iterative procedure converges. The SOC formulation we develop below provides a principled framework for doing so.

\paragraph{Our approach.}
We address these questions by formulating committor estimation as a \emph{stochastic optimal control} (SOC) problem. Under this formulation the committor is linked to a value function through a Cole--Hopf transform, and the optimal feedback control takes a particularly simple form: it is given by  $\sigma^\top\nabla\log q(x)$. This connection provides the principled feedback loop alluded to above: a current committor estimate defines a control that steers trajectories toward the transition region, generating reactive-path data that in turn refine the committor. It also clarifies which stationary distributions are induced by different boundary behaviors (e.g.\ reactive density vs.\ tilted equilibrium density).

On the algorithmic side, we develop two complementary objectives for the resulting hitting-time control problem. First, we present an \emph{on-policy} objective that directly optimizes a controlled-trajectory cost. Second, we introduce \emph{Value Matching}~(VM), a principled \emph{off-policy} objective derived from a path-integral characterization of the value function; VM admits first-order optimality guarantees and fixed-point convergence results under the practical use of capped hitting times and a running committor estimate. Finally, to mitigate metastability---where controlled trajectories can become trapped in intermediate basins and fail to reach $A\cup B$ on reasonable time scales---we introduce an alternative sampling process that preserves the reactive current while lowering effective energy barriers.

The SOC formulation and the learning objectives are developed for the general dynamics~\eqref{eq:reference0}, without assuming reversibility. Our numerical experiments demonstrate the effectiveness of the proposed approach on both reversible and non-reversible dynamics, including a periodically-driven one.

\textbf{Contributions.} Our main contributions are:
\begin{itemize}[left=3pt,itemsep=-1pt]
  \item \textit{SOC reformulation of committor learning:} a unified view connecting committors, value functions, feedback controls, and transition paths.
  \item \textit{Two learning objectives:} a direct backpropagation baseline (REACT-DBP) and a principled off-policy Value Matching objective (REACT-VM).
  \item \textit{Theoretical guarantees:} first-order optimality and fixed-point convergence results that account for capped horizons and iterative training.
  \item \textit{Addressing metastability:} a reactive-current-preserving sampling method that accelerates barrier crossing and improves learning efficiency.
  \item \textit{Empirical results:} significantly improved accuracy on benchmark systems for committors, reaction rates, and transition paths.
\end{itemize}

\paragraph{Outline.}
\Cref{sec:background} reviews transition path theory and standard committor formulations. \Cref{sec:methodology} develops the SOC connection and its sampling implications. \Cref{sec:solving_SOC} presents practical objectives and algorithms, including direct backpropagation and value matching, and we validate the framework empirically in~\Cref{sec:experiment}.
\section{Transition path theory and committor estimation} \label{sec:background}

\subsection{The committor and its role in TPT} \label{subsec:committor_tpt}

\paragraph{Setup.}
Consider the diffusion process
\begin{equation}\label{eq:SDE_bg}
dX_t = b(X_t)\,dt + \sigma\,dW_t,
\end{equation}
with generator $\mathcal{L} = b\cdot\nabla + D:\nabla^2$, where $D = \tfrac{1}{2}\sigma\sigma^{\top}$ is the (constant) diffusion tensor\footnote{In the appendices, we consider state-dependent diffusion tensors for completeness.}, and $D:\nabla^2 \triangleq \sum_{i,j=1}^d D_{ij} \partial_{ij}$. We assume the process is ergodic with stationary density~$\rho$ satisfying the Fokker-Planck equation
\begin{equation}\label{eq:FPE_bg}
\nabla \cdot (b\rho - D\nabla\rho) = 0.
\end{equation}
The time-reversed process has generator $\tilde{\mathcal{L}} = \tilde b\cdot\nabla + D:\nabla^2$ with drift $\tilde b = -b + 2\rho^{-1}D\nabla\rho$. When detailed balance holds ($b\rho = D\nabla\rho$), the process is reversible and $\tilde{\mathcal{L}} = \mathcal{L}$.

\paragraph{Committor PDEs.}
Let $A$ and $B$ be two disjoint target sets. The \emph{forward committor} $q(x) = \mathbb{P}_x(X_{\tau_{A\cup B}} \in B)$ is the probability of reaching~$B$ before~$A$, and satisfies
\begin{equation}\label{eq:BKE_bg}
\begin{cases}
\mathcal{L}\,q(x) = 0, & x \notin A \cup B, \\
q(x) = 0, & x \in A, \\
q(x) = 1, & x \in B.
\end{cases}
\end{equation}
The \emph{backward committor} $\tilde q(x)$ is the probability that the process last came from~$A$ rather than~$B$, and satisfies
\begin{equation}\label{eq:BKE_back_bg}
\begin{cases}
\tilde{\mathcal{L}}\,\tilde q(x) = 0, & x \notin A \cup B, \\
\tilde q(x) = 1, & x \in A, \\
\tilde q(x) = 0, & x \in B.
\end{cases}
\end{equation}
When detailed balance holds, $\tilde q = 1 - q$. The discrete-space, discrete-time analog of this setup---including finite-horizon, time-inhomogeneous chains in the spirit of \citet{helfmann2020extending}---is developed in \Cref{sec:discrete_mc}.

\paragraph{Reactive density.}
Once $q$ and $\tilde q$ are known, transition path theory~\citep{vanden2006towards,vanden2010transition} provides a complete statistical characterization of reactive trajectories---those portions of a stationary trajectory that last left~$A$ and will next reach~$B$. The \emph{reactive density}
\begin{equation}\label{eq:reactive_density_bg}
\rho_\mathrm{R}(x) = \frac{\rho(x)\,q(x)\,\tilde q(x)}{Z_{AB}}, \qquad Z_{AB} = \int_{\mathcal{X}} \rho(x)\,q(x)\,\tilde q(x) \, \mathrm{d}x,
\end{equation}
gives the probability of observing the system at~$x$ while on a reactive trajectory. In the reversible case, $\tilde q = 1-q$ and $\rho_\mathrm{R} \propto \rho\,q(1-q)$.

\paragraph{Reactive trajectories and Doob's $h$-transform.}Conditioning the dynamics~\eqref{eq:SDE_bg} on reaching~$B$ before~$A$ produces a process with a modified drift~\citep{hartmann2013characterization,lu2015reactive}:
\begin{equation}\label{eq:doob_bg}
dX_t = \big(b(X_t) + 2D\nabla\log q(X_t)\big)\,dt + \sigma\,dW_t.
\end{equation}
This is the \emph{Doob's $h$-transform} of the reference process \eqref{eq:reference0} with $h = q$: the additional drift $2D\nabla\log q$ steers the process toward~$B$, and the resulting path measure is exactly the law of~\eqref{eq:SDE_bg} conditioned on the event $\{X_{\tau_{A\cup B}} \in B\}$. Reactive trajectories are obtained by starting~\eqref{eq:doob_bg} from a distribution on~$\partial A$ and running until absorption at~$\partial B$.

The reactive density~$\rho_\mathrm{R}$ is the stationary density of this process in $(A\cup B)^c$ (with source on~$\partial A$ and sink on~$\partial B$). Equivalently, its probability current
\begin{equation}\label{eq:mod_FPE_bg}
J_\mathrm{R} := (b + 2D\nabla\log q)\,\rho_\mathrm{R} - D\nabla\rho_\mathrm{R}
\end{equation}
is divergence-free in $(A\cup B)^c$: $\nabla \cdot J_\mathrm{R} = 0$ (proof in \Cref{thm:stationary}). An algebraic computation (see \Cref{lem:flux_reactive}) shows that $J_\mathrm{R}$ admits the explicit form
\begin{equation}\label{eq:reactive_current_bg}
J_\mathrm{R} = \frac{1}{Z_{AB}} \big(q\tilde q\,J + \rho\tilde q\,D\nabla q - \rho q\,D\nabla\tilde q \big),
\end{equation}
where $J = b\rho - D\nabla\rho$ is the stationary probability current of the original process. In the reversible case ($J = 0$, $\tilde q = 1-q$), this reduces to $J_\mathrm{R} = \frac{1}{Z_{AB}} \rho\,D\nabla q$. $J_{\mathrm{R}}$ is discontinuous at $\partial A$ and $\partial B$, because $\nabla q$, $\nabla \tilde q$ are zero on $A \cup B$, and discontinuous at their boundaries. At stationarity, the current discontinuity must be balanced out with mass creation at $\partial A$ (the \emph{source}) and mass destruction at $\partial B$ (the \emph{sink}); see \Cref{lem:mass_creation_destruction} for the source and sink rates. 

More generally, for any $\kappa \in (0,+\infty)$, the process
\begin{equation}\label{eq:reactive_SDE_kappa_gen}
dX_t = b_\kappa(X_t)\,dt + \sqrt{\kappa}\,\sigma\,dW_t, \qquad b_\kappa = \frac{1+\kappa}{2}(b + 2D\nabla\log q) + \frac{\kappa-1}{2}(\tilde{b} + 2D\nabla\log\tilde q)
\end{equation}
also has stationary density $\rho_\mathrm{R}$ and stationary current $J_\mathrm{R}$ under the same source--sink conditions (see \Cref{prop:kappa_app}). 
The drift~$b_\kappa$ interpolates between the forward Doob's drift $b + 2D\nabla\log q$ (at $\kappa = 1$) and the backward Doob's drift $\tilde b + 2D\nabla\log\tilde q$, which enters with a minus sign as $\kappa \to 0$.
In the limit $\kappa \to 0$, the noise vanishes and the dynamics reduce to the deterministic ODE $\dot X_t = J_\mathrm{R}(X_t)/\rho_\mathrm{R}(X_t)$, whose trajectories follow the streamlines of the reactive current. Reducing~$\kappa$ below~$1$ lowers the effective noise so that trajectories reach~$B$ faster, but requires knowledge of~$\rho$ and~$\tilde q$ in addition to~$q$.

\paragraph{Reaction statistics.}
The \emph{reaction rate} $\nu_R$ measures the total reactive flux from~$A$ to~$B$ through any dividing surface $\Sigma$ separating~$A$ from~$B$:
\begin{equation}\label{eq:reaction_rate_bg}
\nu_R = Z_{AB}\int_\Sigma J_\mathrm{R} \cdot n\,dS.
\end{equation}
The result is independent of~$\Sigma$ since $\nabla \cdot J_\mathrm{R} = 0$. It admits a volume integral representation that depends only on the forward committor~\citep{vanden2006towards,lu2015reactive}:
\begin{equation}\label{eq:reaction_rate_vol}
\nu_R = \int_{\mathcal{X}} \rho\,\nabla q \cdot D\nabla q\,dx.
\end{equation}
To see this, integrate $\nabla \cdot (\rho\,q\,D\nabla q) = \rho\,\nabla q \cdot D\nabla q
+ q \nabla \cdot (\rho\,D\nabla q)$ over $(A\cup B)^c$: the left-hand side becomes a boundary integral that equals $\nu_R$ (since at~$\partial A$, where $q = 0$ and $\tilde q = 1$, we have $Z_{AB} J_\mathrm{R} = \rho\,D\nabla q$), and the second term on the right vanishes because $\mathcal{L}q = 0$ and $\nabla \cdot J = 0$.

The quantities
\begin{equation}\label{eq:p_AB}
p_B = \int_{\mathcal{X}} q(x)\,\rho(x)\,dx, \qquad p_A = \int_{\mathcal{X}} (1-q(x))\,\rho(x)\,dx = 1 - p_B
\end{equation}
give the probability that the process will next reach~$B$ or~$A$, respectively, when it is at equilibrium. 
The \emph{directional rate constants}
\begin{equation}\label{eq:rate_constants_bg}
k_{AB} = \frac{\nu_R}{p_A}, \qquad k_{BA} = \frac{\nu_R}{p_B}
\end{equation}
quantify the reaction frequency conditioned on the process heading toward~$A$ or toward~$B$. 

\emph{In summary}, an accurate forward committor~$q$ suffices to compute the reaction rate~$\nu_R$, the rate constants $k_{AB}$ and $k_{BA}$, and the reactive drift $b + 2D\nabla\log q$. The backward committor~$\tilde q$ is additionally needed for the reactive density~$\rho_\mathrm{R}$ and the reactive current~$J_\mathrm{R}$. 

\subsection{Examples}\label{subsec:examples}

\paragraph{Overdamped Langevin dynamics.}
A canonical instance of~\eqref{eq:SDE_bg} is the overdamped Langevin equation with potential~$U$ and inverse temperature $\beta = 1/(k_BT)$:
\begin{equation}\label{eq:SDE_rev}
dX_t = -\nabla U(X_t)\,dt + \sqrt{2\beta^{-1}}\,dW_t,
\end{equation}
for which $b = -\nabla U$, $\sigma = \sqrt{2\beta^{-1}}\,I_d$, and $D = \beta^{-1} I_d$. The stationary density is $\rho \propto e^{-\beta U}$, and detailed balance holds ($b\rho = D\nabla\rho$, equivalently $J = 0$), so the process is reversible. Consequently $\tilde{\mathcal{L}} = \mathcal{L}$, $\tilde q = 1-q$, and the generator can be written in the self-adjoint form $\mathcal{L}q = \beta^{-1}\rho^{-1}\nabla\cdot(\rho\nabla q)$. The committor minimizes the Dirichlet energy
\begin{equation}\label{eq:kbe_variational}
q = \arg\min_{\substack{f|_A = 0,\, f|_B = 1}} \beta^{-1}\int_{\mathcal{X}} |\nabla f|^2\,\rho\,dx.
\end{equation}
Since $\tilde q = 1-q$, the reactive density, current, and velocity simplify to
\begin{align}
\rho_\mathrm{R}(x) &= \frac{\rho(x)\,q(x)\,(1-q(x))}{Z_{AB}}, \label{eq:reactive_density_rev} \\
J_\mathrm{R}(x) &= \frac{\beta^{-1}\rho(x)\,\nabla q(x)}{Z_{AB}}, \label{eq:reactive_current_rev} \\
b_\mathrm{R}(x) &= \frac{J_\mathrm{R}(x)}{\rho_\mathrm{R}(x)} = \beta^{-1}\,\frac{\nabla q(x)}{q(x)(1-q(x))} = \beta^{-1}\big(\nabla\log q(x) - \nabla\log(1-q(x))\big). \label{eq:reactive_velocity_rev}
\end{align}
The decomposition of the reactive velocity into $\nabla\log q$ and $-\nabla\log(1-q)$ reflects the two forces driving the transition: attraction toward~$B$ and repulsion from~$A$.

The Doob's $h$-transformed process~\eqref{eq:doob_bg} becomes $dX_t = (-\nabla U + 2\beta^{-1}\nabla\log q)\,dt + \sqrt{2\beta^{-1}}\,dW_t$. The $\kappa$-family~\eqref{eq:reactive_SDE_kappa_gen} simplifies to
\begin{equation}\label{eq:reactive_SDE_kappa}
\begin{split}
dX_t &= \kappa\, b(X_t)\,dt + \beta^{-1}\big((1+\kappa)\nabla\log q(X_t) - (1-\kappa)\nabla\log(1-q(X_t))\big)\,dt 
\\ &\quad 
+ \sqrt{2\kappa\beta^{-1}}\,dW_t,
\end{split}
\end{equation}
and in the limit $\kappa \to 0$ reduces to the deterministic ODE
\begin{equation}\label{eq:reactive_ODE}
\dot{X}_t = \beta^{-1}\big(\nabla\log q(X_t) - \nabla\log(1-q(X_t))\big) = b_\mathrm{R}(X_t),
\end{equation}
whose trajectories follow the streamlines of~$J_\mathrm{R}$, with $Z_{AB} = \int_{\mathcal{X}} q(1-q)\rho\,dx$.

The reaction rate becomes $\nu_R = \beta^{-1}\int_{\mathcal{X}} |\nabla q|^2\,\rho\,dx$, and the equilibrium constant
\begin{equation}\label{eq:K_eq}
K_{\mathrm{eq}} = \frac{k_{AB}}{k_{BA}} = \frac{p_B}{p_A} = \frac{\int_{\mathcal{X}} q\,\rho\,dx}{\int_{\mathcal{X}} (1-q)\,\rho\,dx}
\end{equation}
can be estimated from~$q$ alone. This is the setting used in our numerical experiments (\Cref{sec:result_rates}). The extension to the state-dependent setting with position-dependent (anisotropic) diffusion is developed in \Cref{eq:subsection_overdamped}.

\paragraph{Underdamped Langevin dynamics.}
A second important instance is the underdamped (kinetic) Langevin equation, in which the state variable is the pair $(r,p) \in \mathbb{R}^d \times \mathbb{R}^d$ of position and velocity:
\begin{equation}\label{eq:SDE_underdamped}
\begin{cases}
dR_t = P_t\,dt, \\[3pt]
dP_t = -\nabla U(R_t)\,dt - \gamma\, P_t\,dt + \sqrt{2\gamma\beta^{-1}}\,dW_t,
\end{cases}
\end{equation}
where $U$ is the potential, $\gamma > 0$ the friction coefficient, and $\beta = 1/(k_BT)$ the inverse temperature. This is a special case of~\eqref{eq:SDE_bg} with full state 
$X = (R,P)$, drift $b(r,p) = (p,\, -\nabla U(r) - \gamma p)$, 
and noise matrix~$\sigma$ having zero upper block and $\sqrt{2\gamma\beta^{-1}}\,I_d$ lower block. The diffusion tensor $D = \tfrac{1}{2}\sigma\sigma^{\top}$ is degenerate: it acts only on the velocity components, so the generator is hypoelliptic rather than elliptic.

The stationary density is the Boltzmann--Gibbs distribution 
$\rho(r,p) \propto \exp\!\big(-\beta\big(U(r) + \tfrac{1}{2}|p|^2\big)\big)$. 
The dynamics are \emph{not} reversible: the stationary current $J = b\rho - D\nabla\rho$ does not vanish (the Hamiltonian part of the drift is antisymmetric). However, the process satisfies a \emph{momentum-reversal symmetry}: letting 
$\mathcal{R}(r,p) = (r,-p)$
denote the velocity-flip involution, the time-reversed process is obtained by flipping the sign of the velocity. Consequently, the backward committor is related to the forward one by
\begin{equation}\label{eq:qtilde_underdamped}
\tilde q(r,p) = 1 - q(r,-p),
\end{equation}
provided the sets $A$ and $B$ are symmetric under $R$ (i.e.\ 
$(r,p) \in A \iff (r,-p) \in A$,
and similarly for $B$). In many applications the sets $A$ and $B$ are defined by position alone 
($A = A_{r} \times \mathbb{R}^d$, $B = B_{r} \times \mathbb{R}^d$), 
in which case this symmetry holds automatically.

The relation~\eqref{eq:qtilde_underdamped} implies that the reactive density and current can be expressed in terms of the forward committor alone, just as in the reversible case, though the formulas are different:
\begin{equation}\label{eq:reactive_density_underdamped}
\rho_\mathrm{R}(r,p) = \rho(r,p)\,q(r,p)\,(1 - q(r,-p)).
\end{equation}
In particular, $\rho_\mathrm{R}$ is not symmetric under velocity reversal, reflecting the directional character of reactive trajectories. The state-dependent generalization, with position-dependent diffusion and anisotropic mass and friction matrices, is developed in \Cref{eq:subsection_underdamped}.

\subsection{Existing approaches to committor estimation} \label{subsec:existing_approaches}

\paragraph{PDE/variational formulation.} The committor satisfies the backward Kolmogorov equation $\mathcal{L}q = 0$ in $(A\cup B)^c$ with boundary conditions $q|_A = 0$, $q|_B = 1$. In the reversible case, self-adjointness of~$\mathcal{L}$ implies that this is the Euler--Lagrange equation of the variational problem
\begin{equation}
\label{eq:kbe_variational_ea}
q = \arg\min_{\hat q}\int_{\mathcal{X}}\nabla\hat q(x) \cdot D\nabla\hat q(x)\,\rho(x)\,\mathrm{d}x,
\end{equation}
subject to $\hat q|_A=0$ and $\hat q|_B=1$. Neural networks and tensor networks~\citep{chen2023committor} have been used to parameterize $\hat q$, with boundary conditions enforced either as soft constraints~\citep{khoo2019solving} or through the parameterization itself~\citep{li2019computing}. Since the committor varies most steeply in the transition region $\mathcal{X}\setminus(A\cup B)$, accurate learning requires sampling configurations that are seldom visited by unbiased dynamics. To address this, committor learning has been coupled with enhanced sampling techniques such as umbrella sampling~\citep{rotskoff2022active,hasyim2022supervised}, metadynamics~\citep{li2019computing,lin2024deep}, and biased potentials constructed from intermediate committor estimates~\citep{kang2024computing}. Several approaches exploit an iterative loop in which a provisional committor guides sampling, which in turn refines the estimate~\citep{rotskoff2022active,kang2024computing,wang2025estimating}. Despite their empirical success, these methods leave open the question of how to optimally couple committor estimation and sampling in a principled and theoretically grounded manner. Moreover, the variational formulation~\eqref{eq:kbe_variational_ea} is specific to the reversible case; for irreversible dynamics, one must work directly with the BKE or with probabilistic representations. Alternatively, one can directly minimize the PDE residual $\int|\mathcal{L}\hat q|^2\,\rho\,dx$ (plus boundary penalties), an approach that applies to the general (possibly irreversible) case since it does not require self-adjointness~\citep{khoo2019solving,raissi2019physics}.\looseness=-1

\paragraph{Feynman--Kac formulation.} Because $q$ satisfies the BKE, one can cast learning as matching conditional expectations under the Markov semigroup:
\begin{equation}
\label{eq:fk}
\begin{split}
&q = \arg\min_{\hat q}\;\mathbb{E}_{X_0\sim \mu_0}\bigl[\bigl|\hat q(X_0)-\mathbb{E}\bigl[\hat q(X_{\tau(T)})\mid X_0\bigr]\bigr|^2\bigr] \\
&\text{s.t.}\quad \hat q|_A=0,\quad \hat q|_B=1,
\end{split}
\end{equation}
where $\mu_0$ is an arbitrary initial distribution with full support on $\mathcal{X}\setminus(A\cup B)$ and $\tau(T):=\min\{T,\,\tau\}$ is the hitting time capped at horizon $T$. \cite{li2022semigroup,strahan2023predicting} proposed objectives based on~\eqref{eq:fk}, while \cite{mitchell2024committor} considered a logarithmic variant.\footnote{\cite{li2022semigroup,strahan2023predicting,mitchell2024committor} used a fixed horizon $T$ rather than $\tau(T)$. Using $\tau(T)$ is preferable because it assigns a reward to every trajectory that reaches $A\cup B$ before $T$.} Prior to our work, these approaches lacked firm theoretical grounding: (i) uniqueness of the committor as the minimizer of~\eqref{eq:fk} (or its logarithmic version) had not been established, and (ii) no algorithmic guarantees such as first-order optimality conditions were known. A further limitation is scalability: while~\eqref{eq:fk} admits a scalable loss (see~\Cref{subsec:strahan}), the logarithmic version does not (see~\Cref{subsec:committor}).

\paragraph{Maximum-likelihood formulation.} \cite{peters2006obtaining} formulated committor estimation as maximum likelihood:
\begin{equation}
\label{eq:mle}
q = \arg\min_{\hat q}\;-\,\mathbb{E}_{X_0\sim \mu_0}\bigl[\mathbf{1}_B(X_\tau)\log\hat q(X_0) + \mathbf{1}_A(X_\tau)\log\bigl(1-\hat q(X_0)\bigr)\bigr],
\end{equation}
where $\mu_0$ is again an arbitrary initial distribution with full support on $\mathcal{X}\setminus(A\cup B)$. Neural networks have been used to parameterize $\hat q$~\citep{jung2019artificial,sun2022multitask,jung2023machine}, but this formulation requires reactive trajectories, which are typically generated by TPS shooting~\citep{hummer2004transition}.

\paragraph{The sampling bottleneck.}
All three formulations ultimately hinge on obtaining informative samples from the transition region, a difficulty that manifests regardless of whether the objective is posed in PDE, probabilistic, or likelihood form. This motivates methods that actively bias sampling toward reactive configurations. The stochastic optimal control framework we develop next provides a principled mechanism to do so: it uses a running committor estimate to define a control that guides sampling, and it yields objectives with theoretical convergence guarantees.

\section{Committor estimation via stochastic optimal control}
\label{sec:methodology}

We now show how stochastic optimal control (SOC) provides a unified framework for committor estimation\footnote{\cite{hartmann2013characterization}[Sec.~6.1,~7.2] were the first to formulate committor estimation as a SOC problem; we compare their framework with ours in~\Cref{subsec:hartmann}.}. A key ingredient is a rescaling of the boundary conditions that removes the logarithmic singularities inherent in the Cole--Hopf transform and in the optimal control drifts.

\paragraph{Rescaled committors.} Let $\xi \in (0,\tfrac{1}{2})$ be a small parameter. 
Rather than imposing $q|_A = 0$ and $q|_B = 1$ (resp. $\tilde q|_A = 1$ and $\tilde q|_B = 0$) as in~\eqref{eq:BKE_bg}--\eqref{eq:BKE_back_bg}, we consider the solutions of the BKEs with shifted boundary conditions:
\begin{equation}
\label{eq:kbe_xi}
    \begin{cases}
        \mathcal{L} q_{\xi}(x) = 0, &x 
        \notin A \cup B, \\
        q_{\xi}(x) = \xi, &x \in A, \\
        q_{\xi}(x) = 1 - \xi, &x \in B.
    \end{cases}
    \qquad\qquad\qquad
    \begin{cases}
        \tilde{\mathcal{L}} \tilde q_{\xi}(x) = 0, &x 
        \notin A \cup B, \\
        \tilde q_{\xi}(x) = 1- \xi, &x \in A, \\
        \tilde q_{\xi}(x) = \xi, &x \in B.
    \end{cases}
\end{equation}

We refer to the solutions of \eqref{eq:kbe_xi} as the rescaled forward and reverse committor functions. The rescaled committors are related to the original committors $q, \tilde q$ that solve~\eqref{eq:BKE_bg}--\eqref{eq:BKE_back_bg} by the affine maps 
\begin{align} \label{eq:rescaled_committor_committor}
q_{\xi} &= \xi (1-q) + (1-\xi)\,q, \qquad \tilde{q}_{\xi} = \xi(1-\tilde{q}) + (1-\xi)\,\tilde{q},\\
q_{\xi} &= \xi + (1-2\xi)\,q, \qquad \qquad \;\,\tilde{q}_{\xi} = \xi + (1-2\xi)\,\tilde{q},
\end{align}
so that learning $q_{\xi}$ (resp. $\tilde q_{\xi}$) is equivalent to learning $q$ (resp. $\tilde{q}$).

Below we describe the SOC framework for learning forward committors; reverse committors can be learned analogously, replacing $\mathcal{L}$, $b$ by $\tilde{\mathcal{L}}$, $\tilde{b}$, and flipping the boundary conditions, although that is not considered explicitly in this paper.

\paragraph{Cole--Hopf transform and HJB equation.} Since $q_{\xi} \geq \xi > 0$ everywhere, the value function
\begin{keyeqbox}
\begin{equation} \label{eq:cole_hopf}
    \Phi(x) = -\log q_{\xi}(x)
\end{equation}
\end{keyeqbox}
is bounded, with $\Phi|_A = -\log\xi$ and $\Phi|_B = -\log(1-\xi)$. The Cole--Hopf transform turns the linear BKE~\eqref{eq:kbe_xi} into a stationary Hamilton--Jacobi--Bellman (HJB) equation:
\begin{equation}
\label{eq:HJB_setup}
    \begin{cases}
        0 = - \mathcal{L} \Phi(x) + |\nabla \Phi(x)|^2_D, \qquad &x \in \mathcal{S}^c, \\
        \Phi(x) = -\log \xi, \qquad &x \in A, \\
        \Phi(x) = -\log(1-\xi), \qquad &x \in B,
    \end{cases}
\end{equation}
where $|\nabla\Phi|_D^2 = \nabla\Phi \cdot D\nabla\Phi$.

\begin{proof}[Proof sketch of the equivalence BKE $\leftrightarrow$ HJB]
Since $q_{\xi} = e^{-\Phi}$,
\begin{equation}
    \nabla q_{\xi} = -\nabla\Phi\, e^{-\Phi}, \qquad \frac{\partial^2 q_{\xi}}{\partial x_i \partial x_j} = \Big(-\frac{\partial^2 \Phi}{\partial x_i \partial x_j} + \frac{\partial \Phi}{\partial x_i}\frac{\partial \Phi}{\partial x_j}\Big) e^{-\Phi}.
\end{equation}
Substituting into~\eqref{eq:kbe_xi} and dividing by $e^{-\Phi}$ yields~\eqref{eq:HJB_setup}, and conversely.
\end{proof}

\paragraph{SOC hitting-time problem.} Equation~\eqref{eq:HJB_setup} arises as the HJB equation of the following SOC problem.

\begin{thmbox}
\begin{proposition} \label{prop:SOC}
Let $\mu_0$ be any distribution with full support on $\mathcal{S}^c$. Consider the controlled process
\begin{equation}
    \label{eq:controlled_SDE_main}
    \mathrm{d}X^u_t = \big(b(X^u_t) + \sigma\,u(X^u_t)\big) \, \mathrm{d}t + \sigma\,\mathrm{d}W_t, \qquad X^u_0 \sim \mu_0,
\end{equation}
where $\tau$ is the hitting time of $\mathcal{S} := A \cup B$ as in~\eqref{eq:committor}. Let $\mathbb{E}$ denote expectation over both $X^u_0 \sim \mu_0$ and the Brownian noise in~\eqref{eq:controlled_SDE_main}. Then the minimizer of
\begin{equation} \label{eq:control_problem_def_main}
    \min_{u \in \mathcal{U}} \mathbb{E}\Big[\frac{1}{2} \int_0^{\tau} |u(X^u_t)|^2 \, \mathrm{d}t + g(X^u_{\tau})\Big], \qquad X^u_0 \sim \mu_0,
\end{equation}
with the bounded terminal cost
\begin{equation} \label{eq:terminal_cost}
    g(x) = \begin{cases} -\log\xi, & x \in A, \\ -\log(1-\xi), & x \in B, \end{cases}
\end{equation}
is the Markovian control 
\begin{equation} \label{eq:u_star}
    u^{\star}(x) = -\sigma^{\top}\nabla \Phi(x) = \sigma^{\top}\nabla \log q_{\xi}(x).
\end{equation}
where $\Phi$ is the value function for \eqref{eq:control_problem_def_main}, which is defined as 
\begin{equation} \label{eq:value_function}
    \Phi(x) = \min_{u \in \mathcal{U}} \mathbb{E}_x\Big[ \frac{1}{2}\int_0^{\tau} |u(X^u_t)|^2 \, \mathrm{d}t + g(X^u_{\tau})\Big],
\end{equation}
and is related to the rescaled committor $q_{\xi}$ through equation \eqref{eq:cole_hopf}.
\end{proposition}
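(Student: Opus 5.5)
The plan is a verification argument. Take the function $\Phi = -\log q_\xi$ defined by the Cole--Hopf transform \eqref{eq:cole_hopf}. By the equivalence BKE $\leftrightarrow$ HJB shown above, $\Phi$ is a bounded classical solution of \eqref{eq:HJB_setup}. I will show two things: for every admissible control $u\in\mathcal{U}$ and every starting point $x$, the cost functional is at least $\Phi(x)$; and equality holds for $u^\star=-\sigma^\top\nabla\Phi$. Together these identify $\Phi$ as the value function \eqref{eq:value_function} and $u^\star$ as the optimal Markovian feedback. Minimality for \eqref{eq:control_problem_def_main} then follows by integrating against $\mu_0$. Full support of $\mu_0$ makes the minimizer essentially unique on $\mathcal{S}^c$, since the inequality is pointwise in $x$.

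The key step is Itô's formula applied to $\Phi(X^u_t)$ along \eqref{eq:controlled_SDE_main}. The generator of the controlled process is $\mathcal{L}+\sigma u\cdot\nabla$, which gives
\begin{equation*}
d\Phi(X^u_t) = \big(\mathcal{L}\Phi + \nabla\Phi\cdot\sigma u\big)(X^u_t)\,dt + \nabla\Phi(X^u_t)^\top\sigma\,dW_t .
\end{equation*}
Using the HJB equation $\mathcal{L}\Phi = |\nabla\Phi|_D^2 = \tfrac12|\sigma^\top\nabla\Phi|^2$ in $\mathcal{S}^c$, we complete the square:
\begin{equation*}
\mathcal{L}\Phi + \nabla\Phi\cdot\sigma u = \tfrac12|u+\sigma^\top\nabla\Phi|^2 - \tfrac12|u|^2 .
\end{equation*}
Integrate from $0$ to $\tau\wedge T$ and take expectations; the stochastic integral has zero mean after localization. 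This yields
\begin{equation*}
\Phi(x) = \mathbb{E}_x\Big[\Phi(X^u_{\tau\wedge T}) + \tfrac12\int_0^{\tau\wedge T}|u|^2\,dt - \tfrac12\int_0^{\tau\wedge T}|u+\sigma^\top\nabla\Phi|^2\,dt\Big].
\end{equation*}
Now let $T\to\infty$. On $\{\tau<\infty\}$ we have $\Phi(X^u_\tau)=g(X^u_\tau)$ by the boundary conditions. The running-cost terms converge by monotone convergence, and $\Phi(X^u_{\tau\wedge T})$ converges by bounded convergence because $\Phi\in[-\log(1-\xi),-\log\xi]$. We obtain
\begin{equation*}
\Phi(x)=J(u;x)-\tfrac12\mathbb{E}_x\int_0^\tau|u+\sigma^\top\nabla\Phi|^2\,dt .
\end{equation*}
Hence $J(u;x)\ge\Phi(x)$, with equality if and only if $u=-\sigma^\top\nabla\Phi$ along the trajectory. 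Since $\nabla\Phi=-\nabla q_\xi/q_\xi$, this control equals $\sigma^\top\nabla\log q_\xi$, as in \eqref{eq:u_star}.

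The main obstacle is the technical justification of the limit $T\to\infty$, and it comes with the definition of $\mathcal{U}$. We need $\tau<\infty$ a.s. under the controls considered, or else a convention that assigns infinite cost to controls that never hit $\mathcal{S}$. I would restrict $\mathcal{U}$ to controls with $\mathbb{E}_x[\int_0^\tau|u|^2dt]<\infty$ and $\tau<\infty$ a.s., which excludes trivially unbounded costs. I would then verify that $u^\star$ itself belongs to $\mathcal{U}$. This is cleanest via the Doob $h$-transform interpretation: the process with drift $b+2D\nabla\log q_\xi$ is the reference process reweighted by $q_\xi(X_\tau)/q_\xi(x)$. Its hitting time is therefore a.s. finite whenever the reference $\tau$ is, for instance by ergodicity and positive mass of $A\cup B$. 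Its cost is finite because the Girsanov relative entropy equals $\mathbb{E}[\log(q_\xi(X_\tau)/q_\xi(x))]\le\log(1/\xi)$. Regularity of $q_\xi$, namely $C^2$ in $\mathcal{S}^c$ and continuous up to a smooth boundary, is assumed from elliptic theory, so that Itô's formula applies up to $\tau$.
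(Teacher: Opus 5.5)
Your argument is correct, and its core is the same as the paper's. It\^o's formula for $\Phi(X^u_t)$, the HJB identity $\mathcal{L}\Phi=\tfrac12|\sigma^\top\nabla\Phi|^2$, and completing the square to get $J(u;x)=\Phi(x)+\tfrac12\mathbb{E}_x\int_0^\tau|u+\sigma^\top\nabla\Phi|^2\,\mathrm{d}t$ are exactly Step~2 of the paper's proof of \Cref{thm:vf_regularity} (cf.\ \eqref{eq:no_exp}).

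The difference is upstream, in how the value function is tied to $q_\xi$.
\begin{itemize}
\item \textbf{The paper starts from the value function.} It uses the Gibbs variational principle (\Cref{thm:value_functions}, \Cref{lem:variational_path}) to show the infimum equals $-\log\mathbb{E}_x[e^{-g(X_\tau)}]=-\log q_\xi$. It then applies Feynman--Kac (\Cref{thm:feynman_kac}) to get $e^{-\Phi}\in C^{2+\alpha}$ and hence the HJB equation. Only after that does it run the verification step to identify $u^\star$.
\item \textbf{You start from the committor.} You take $-\log q_\xi$ as a candidate, and the verification identity itself gives both the lower bound and its attainment. You therefore need neither the KL/Gibbs duality nor the martingale-representation step.
\end{itemize}

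Your route is the classical self-contained verification theorem. It is also more careful than the paper on the analytic details: you localize the stochastic integral, pass $T\to\infty$ using boundedness of $\Phi$, and use the Doob $h$-transform to check that $u^\star$ gives a.s.\ finite $\tau$ and finite energy. The paper's Step~2 simply sets $\mathbb{E}[S^u_\tau]=0$ and leaves admissibility of $-\sigma^\top\nabla\Phi$ implicit.

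The paper's route buys generality and reuse in exchange. Its path-integral formula holds with an arbitrary running cost $f$. The same argument also yields the path-measure identity \eqref{eq:path_integral_characterization}, which the later Value Matching analysis and \Cref{rem:SOC_probabilistic} depend on.

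Both routes rest on the same elliptic-regularity input: you assume it for $q_\xi$, and the paper imports it through Feynman--Kac.
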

\end{thmbox}

The proof of this proposition is given in~\Cref{eq:vf_regularity}. A discrete-time, discrete-space analog of \Cref{prop:SOC}---with tilted transition kernels in place of drift controls and a log-sum-exp Bellman equation in place of~\eqref{eq:HJB_setup}---is developed in \Cref{sec:discrete_mc}. The condition that $\mu_0$ has full support on $\mathcal{S}^c$ ensures that the loss controls $u$ everywhere; the choice of $\mu_0$ does not affect the optimizer $u^*$ but does affect the conditioning of the loss in practice, a point we return to in \cref{sec:solving_SOC}. The value function \eqref{eq:value_function}
admits a useful path-integral characterization:
\begin{equation} \label{eq:value_function_path_integral}
    \Phi(x) = -\log \mathbb{E}_x\big[ \exp \big( - g(X_{\tau}) \big)\big],
\end{equation}
where $X$ solves~\eqref{eq:reference0} and $\mathbb{E}_x$ denotes expectation under $\mathbb{P}_x$, the law of the process started at $X_0 = x$. This follows from the Feynman--Kac formula: setting $\varphi(x) = \mathbb{E}_x[\exp(-g(X_\tau))]$ yields $\mathcal{L}\varphi = 0$ on $\mathcal{S}^c$ with boundary data $\varphi|_{\partial\mathcal{S}} = e^{-g}$, and the Cole--Hopf transform $\varphi = e^{-\Phi}$ recovers~\eqref{eq:HJB_setup}.

\begin{remark}[Probabilistic view of SOC] \label{rem:SOC_probabilistic}
The objective~\eqref{eq:control_problem_def_main} can be obtained from a KL divergence between path measures. Let $\mathbb{P}$ denote the path measure of the uncontrolled process~\eqref{eq:reference0} with $X_0 \sim \mu_0$, and let $\mathbb{P}^{u}$ denote the path measure of the controlled process~\eqref{eq:controlled_SDE_main}. Define the target path measure $\mathbb{P}^{\star}$ via the Radon--Nikodym derivative
\begin{equation}
    \frac{\mathrm{d}\mathbb{P}^{\star}}{\mathrm{d}\mathbb{P}}(X) = \frac{\exp(-g(X_{\tau}))}{Z(X_0)},
    \qquad Z(x) = \mathbb{E}_x\big[\exp(-g(X_\tau))\big] = e^{-\Phi(x)} = q_{\xi}(x).
\end{equation}
Using Girsanov's theorem (see~\Cref{subsec:girsanov}), we have
\begin{equation}
    \mathrm{D}_{\mathrm{KL}}(\mathbb{P}^u\| \mathbb{P}^*) 
    = \mathbb{E}\Big[\tfrac{1}{2}\int_0^\tau |u(X_t)|^2\,\mathrm{d}t + g(X_\tau)\Big] + \mathrm{const.}
\end{equation}
For a detailed review, see~\citep{singh2025variational}.
\end{remark}

\paragraph{The optimally controlled SDE.} Per \Cref{prop:SOC}, the optimal control is given by $u^{\star}(x) = \sigma^{\top}\nabla \log q_{\xi}(x)$. 
Since $q_{\xi} \in [\xi, 1-\xi]$, 
the drift $2D\nabla\log q_\xi$ is non-singular everywhere---this is the main benefit of working with the rescaled committor. Any learned approximation $\phi \approx \Phi$ yields a feedback control $u(x) = -\sigma^{\top}\nabla \phi(x)$, and the optimally controlled drift in~\eqref{eq:controlled_SDE_main} becomes $b(x) + 2D\,\nabla\log q_{\xi}(x)$. That is, the optimally controlled SDE is 
\begin{equation}
\label{eq:X_u_star_main}
    \mathrm{d}X^{u^{\star}}_t = \big(b(X^{u^{\star}}_t) + 2D\,\nabla \log q_{\xi}(X^{u^{\star}}_t) \big) \, \mathrm{d}t + \sigma\, \mathrm{d}W_t.
\end{equation}
This is formally similar to the Doob's $h$-transform SDE~\eqref{eq:doob_bg}, with $q_\xi$ in place of $q$. The two are not equivalent: 
since the drift $2D\nabla\log q_\xi$ is bounded everywhere, trajectories of~\eqref{eq:X_u_star_main} are neither fully repelled from $\partial A$ nor fully absorbed at $\partial B$, so they are not the law of
\eqref{eq:reference0} conditioned on reaching $B$ before $A$. Nevertheless, given $q_\xi$ we can recover $\nabla\log q$ by inverting the linear map~\eqref{eq:rescaled_committor_committor} and
simulate~\eqref{eq:doob_bg} to obtain reactive trajectories.

\paragraph{Rescaled reactive density.} We define the rescaled reactive density $\rho^{(\xi)}_{\mathrm{R}}$ in analogy with the reactive density \eqref{eq:reactive_density_bg}, but with the rescaled committors instead of the original ones:
\begin{equation}\label{eq:rescaled_reactive_density}
\rho^{(\xi)}_\mathrm{R}(x) = \frac{\rho(x)\,q_{\xi}(x)\,\tilde q_{\xi}(x)}{Z^{(\xi)}_{AB}}, \qquad Z^{(\xi)}_{AB} = \int_{\mathcal{X}} \rho(x)\,q_{\xi}(x)\,\tilde q_{\xi}(x) \, \mathrm{d}x,
\end{equation}
Just like $\rho_\mathrm{R}$ for Doob's SDE \eqref{eq:doob_bg}, $\rho^{(\xi)}_{\mathrm{R}}$ is stationary on $(A\cup B)^c$ for the optimally controlled SDE~\eqref{eq:X_u_star_main}, but only with a nontrivial
 source on $\partial A$ and sink on $\partial B$ (see \Cref{subsec:reactive}). The associated probability current $J^{(\xi)}_{\mathrm{R}}$ admits expressions analogous to \eqref{eq:mod_FPE_bg} and
 \eqref{eq:reactive_current_bg}, written in terms of $q_\xi$, $\tilde q_\xi$ and their gradients. As in the case $\xi = 0$ (cf. discussion after \eqref{eq:reactive_current_bg}), $J^{(\xi)}_{\mathrm{R}}$ is
 discontinuous at $\partial A$ and $\partial B$ because $\nabla q_\xi$ and $\nabla\tilde q_\xi$ vanish inside $A\cup B$ but are nonzero just outside, and at stationarity this discontinuity must be balanced by 
mass
 creation on $\partial A$ (the source, sustaining the reactive flux) and mass destruction on $\partial B$ (the sink, absorbing it); explicit source and sink rates are given 
in \Cref{lem:mass_creation_destruction}.
 Mirroring~\eqref{eq:reactive_SDE_kappa_gen}, for any $\kappa > 0$, the solution of
\begin{equation}\label{eq:rescaled_reactive_SDE_kappa}
dX_t = b_\kappa(X_t)\,dt + \sqrt{\kappa}\,\sigma\,dW_t, \qquad b_\kappa = 
\frac{1+\kappa}{2}(b + 2D\nabla\log q_{\xi}) + \frac{\kappa-1}{2}(\tilde{b} + 2D\nabla\log\tilde q_{\xi})
\end{equation}
also has stationary density $\rho^{(\xi)}_\mathrm{R}$ and stationary current $J^{(\xi)}_\mathrm{R}$ under the same source--sink conditions (\Cref{prop:kappa_app}) as when $\kappa=1$. 

\subsection{SOC problems with capped hitting times}
\label{subsec:SOC_hitting_times}

The trajectories of~\eqref{eq:control_problem_def_main}--\eqref{eq:controlled_SDE_main} may be very long when both $A$ and $B$ are hard to reach, making direct solution expensive. Following~\cite{strahan2023predicting,mitchell2024committor}, we cap trajectories at a horizon $T$ to keep costs bounded, and for paths that do not hit $A \cup B$ before $T$ we use the current estimate as terminal cost. This leads to the fixed-point problems
\begin{enumerate}[left=-5pt,label=(\roman*)]
\item \emph{Variational fixed-point problem:}
\begin{equation} \label{eq:fixed_point_1} \tag{FP1}
\begin{cases}
    \forall x \in \mathcal{S}^c: \quad \Phi_1(x) = \inf_{u \in \mathcal{A}} \big\{ \mathbb{E}_x\big[\int_0^{\tau(T)} \frac{1}{2}|u(X_t^{u})|^2 \, \mathrm{d}t + \Phi_1(X^{u}_{\tau(T)})\big] \big\}, \\
    \forall x \in \partial \mathcal{S}: \quad \Phi_1(x) = g(x),
\end{cases}
\end{equation}
\item \emph{Path-integral fixed-point problem:}
\begin{equation} \label{eq:fixed_point_2} \tag{FP2}
\begin{cases}
    \forall x \in \mathcal{S}^c: \quad \Phi_2(x) = - \log \mathbb{E}_x\big[ \exp \big( - \Phi_2(X_{\tau(T)}) \big) \big], \\
    \forall x \in \partial \mathcal{S}: \quad \Phi_2(x) = g(x).
\end{cases}
\end{equation}
\end{enumerate}
The following theorem shows that solutions to~\eqref{eq:fixed_point_1} and~\eqref{eq:fixed_point_2} are unique and coincide with $\Phi$ under mild conditions (proof in~\Cref{subsec:optimality}):
\begin{thmbox}
\begin{theorem} \label{thm:V_1_V_2}
    Suppose that $b$ is smooth with at most polynomial growth of its derivatives, that $\mathcal{S} = A \cup B \subset \mathbb{R}^d$ is a closed set, and that $g$ is as in~\eqref{eq:terminal_cost}.
    \begin{enumerate}[left=0pt,label=(\roman*)]
        \item Assume, in addition, that $\Phi \in C^{2+\alpha}(\mathcal{S}^c)$ solves the HJB equation~\eqref{eq:HJB_setup} for some $\alpha \in (0,1)$ (guaranteed, e.g., when $\sigma\sigma^{\top}$ is uniformly elliptic, or hypoelliptic in the sense of H\"ormander, together with a $C^{2+\alpha}$ boundary $\partial\mathcal{S}$ and $\tau < \infty$ almost surely), and that the equilibrium distribution $\rho$ of the uncontrolled dynamics~\eqref{eq:reference0} has a finite $m$-th moment. Then every \emph{continuous} solution $\Phi_1$ of Eq.~\eqref{eq:fixed_point_1} that is bounded below and has polynomial growth of degree at most $m$ satisfies $\Phi_1 = \Phi$ as defined by Eqs.~\eqref{eq:value_function}--\eqref{eq:value_function_path_integral}, and the infimum in Eq.~\eqref{eq:fixed_point_1} is attained by the optimal control $u^{\star} = -\sigma^{\top}\nabla \Phi$.
        \item If $\Phi_2$ is a solution of Eq.~\eqref{eq:fixed_point_2} that is bounded below and has at most polynomial growth, then $\Phi_2 = \Phi$ as defined by Eqs.~\eqref{eq:value_function}--\eqref{eq:value_function_path_integral}.
    \end{enumerate}
\end{theorem}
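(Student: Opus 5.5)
The plan is to reduce both fixed-point problems to a statement about the linear object $\varphi = e^{-\Phi}$ and then use a contraction/iteration argument over successive horizons of length $T$.

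For \emph{part (ii)}, I set $\psi = e^{-\Phi_2}$, which is bounded above since $\Phi_2$ is bounded below. Then \eqref{eq:fixed_point_2} reads $\psi(x) = \mathbb{E}_x[\psi(X_{\tau(T)})]$ on $\mathcal{S}^c$, with $\psi = e^{-g}$ on $\partial\mathcal{S}$. Define the stopping times $\tau_k = \tau \wedge kT$. By the strong Markov property, iterating the identity $k$ times gives
\[
\psi(x) = \mathbb{E}_x[\psi(X_{\tau\wedge kT})] = \mathbb{E}_x\big[e^{-g(X_\tau)}\mathbf{1}_{\tau\le kT}\big] + \mathbb{E}_x\big[\psi(X_{kT})\mathbf{1}_{\tau> kT}\big].
\]
The first term converges to $\mathbb{E}_x[e^{-g(X_\tau)}] = e^{-\Phi(x)}$ by monotone convergence, using $\tau<\infty$ a.s. The second term is bounded by $\sup\psi\cdot\mathbb{P}_x(\tau>kT)\to 0$. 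Hence $\psi = e^{-\Phi}$, which by \eqref{eq:value_function_path_integral} gives $\Phi_2=\Phi$. Only boundedness below is needed here; I would remark that the polynomial-growth hypothesis is not used. I would also check that the fixed-point identity at points where $\tau\le T$ is consistent with the boundary condition.

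For \emph{part (i)}, I first show that $\Phi$ itself solves \eqref{eq:fixed_point_1} with $u^\star$ attaining the infimum. Apply Itô's formula to $\Phi(X^u_t)$ on $[0,\tau(T)]$ and use the HJB equation \eqref{eq:HJB_setup}. Completing the square gives
\[
\Phi(x) = \mathbb{E}_x\Big[\int_0^{\tau(T)}\tfrac12|u|^2\,dt + \Phi(X^u_{\tau(T)})\Big] - \mathbb{E}_x\Big[\int_0^{\tau(T)}\tfrac12|u+\sigma^\top\nabla\Phi|^2\,dt\Big].
\]
The local martingale term vanishes in expectation after localization; this uses boundedness of $\Phi$ (it lies between $-\log(1-\xi)$ and $-\log\xi$) and the $C^{2+\alpha}$ regularity of $\Phi$. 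So $\Phi$ is a solution with minimizer $u^\star$.

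For uniqueness, let $\Phi_1$ be any continuous solution. I will compare it with $\Phi$ via a verification argument in both directions.

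\emph{Upper bound.} Fix any control $u$ and concatenate $k$ horizons. Using the fixed-point property at $X^u_{jT}$ repeatedly gives
\[
\Phi_1(x)\le \mathbb{E}_x\Big[\int_0^{\tau\wedge kT}\tfrac12|u|^2\,dt + \Phi_1(X^u_{\tau\wedge kT})\Big].
\]
I take $u=u^\star$, for which $\tau<\infty$ a.s. (the controlled process is a Doob transform, so hitting is at least as likely). To send $k\to\infty$ I need $\mathbb{E}[\Phi_1(X^{u^\star}_{kT})\mathbf{1}_{\tau>kT}]\to0$. This is where the polynomial growth of degree $m$ and the finite $m$-th moment of $\rho$ enter: I would bound $\mathbb{E}|X_{kT}|^m$ uniformly in $k$ via ergodicity, then combine this with Hölder's inequality and $\mathbb{P}(\tau>kT)\to0$. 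This yields $\Phi_1\le\Phi$.

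\emph{Lower bound.} Let $u_1$ be $\varepsilon$-optimal controls for $\Phi_1$ at each stage. The same telescoping gives
\[
\Phi_1 \ge \mathbb{E}\Big[\textstyle\int\tfrac12|u_1|^2\,dt+\Phi_1(X_{\tau\wedge kT})\Big]-k\varepsilon_k.
\]
Since $\Phi_1$ is bounded below, the tail term is at least $\inf\Phi_1\cdot\mathbb{P}(\tau>kT)$. By the \Cref{prop:SOC} definition of $\Phi$ as an infimum over controls, the right-hand side is $\ge\Phi-o(1)$. A cleaner route is to use the KL representation of \Cref{rem:SOC_probabilistic} on finite horizons, which shows that each step's infimum equals $-\log\mathbb{E}_x[e^{-\Phi_1(X_{\tau(T)})}]$. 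If that Gibbs variational identity holds for $\Phi_1$ bounded below, it reduces (i) directly to (ii), and I would try this first.

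\textbf{Main obstacle.} The hardest step is controlling the tail term $\mathbb{E}[\Phi_1(X_{kT})\mathbf{1}_{\tau>kT}]$ for an unbounded $\Phi_1$, that is, establishing uniform-in-$k$ moment bounds along the controlled or uncontrolled process from the moment assumption on $\rho$. A secondary difficulty is justifying the measurable selection of near-optimal controls.
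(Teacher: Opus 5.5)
Your part (ii) is the paper's telescoping argument. The paper applies dominated convergence to the whole exponential instead of splitting off $\{\tau\le kT\}$, but it is the same proof. Your verification that $\Phi$ itself satisfies (FP1) with minimizer $u^\star$ is also fine.

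The genuine gap is the upper bound $\Phi_1\le\Phi$ in part (i). From $\Phi_1(x)\le\mathbb{E}_x\big[\int_0^{\tau\wedge kT}\tfrac12|u^\star|^2\,dt+\Phi_1(X^{u^\star}_{\tau\wedge kT})\big]$ you need $\mathbb{E}_x[\Phi_1(X^{u^\star}_{kT})\mathbf{1}_{\tau>kT}]\to0$ for a \emph{fixed} starting point $x$, where $\Phi_1$ may grow like $|y|^m$. A finite $m$-th moment of the equilibrium $\rho$ does not give this, for three reasons:
\begin{itemize}
\item Ergodicity controls convergence of laws or of time averages. It does not bound $\sup_k\mathbb{E}_x|X_{kT}|^m$, let alone give uniform integrability of $|X_{kT}|^m$ from a point start.
\item Hölder against $\mathbb{P}_x(\tau>kT)\to0$ needs moments of order strictly greater than $m$, and these are not assumed.
\item The paper's own point-start remark adds a Foster--Lyapunov drift condition for exactly this reason.
\end{itemize}

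The missing idea is to give up on a pointwise bound at this stage. Set $w=\Phi_1-\Phi$. The competitor inequality iterates to $w(x)\le\mathbb{E}_x[w(X^{u^\star}_{\tau(nT)})]$, and the paper integrates this against an invariant probability $\hat\rho$ of the controlled diffusion. Under stationarity $X^{u^\star}_{nT}\sim\hat\rho$ for every $n$, so the family is identically distributed and hence uniformly integrable. Vitali then gives $\int w\,d\hat\rho\le0$. This becomes $w\equiv0$ only after combining it with three further ingredients: a separately proved $w\ge0$, full support of $\hat\rho$, and the \emph{continuity} of $\Phi_1$. Your proposal never uses the continuity hypothesis, which signals that something is missing.

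In the committor setting you could average against $\rho$ itself. Since $\mathcal{L}q_\xi=0$ off $\mathcal{S}$, you have $\frac{d\mathbb{P}^{u^\star}}{d\mathbb{P}}\big|_{\mathcal{F}_{t\wedge\tau}}=q_\xi(X_{t\wedge\tau})/q_\xi(X_0)\le(1-\xi)/\xi$. So the controlled tail is dominated by the uncontrolled one, whose $\rho$-average vanishes by stationarity and Vitali. This same bound is the correct reason why $\tau<\infty$ a.s.\ under $u^\star$; ``hitting is at least as likely'' is not a valid justification.

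Your ``cleaner route'' does not close (i) either. In the paper the infimum in (FP1) runs over time-independent feedback controls, as the appendix version states explicitly. For that class the Gibbs variational principle gives only an inequality, $\Phi_1\ge\mathcal{T}_2[\Phi_1]:=-\log\mathbb{E}_x[e^{-\Phi_1(X_{\tau(T)})}]$. The reason is that the minimizer of the horizon-$\tau(T)$ problem is $\sigma^\top\nabla_x\log\mathbb{E}_x[e^{-\Phi_1(X_{\tau(T-t)})}]$. That control is time-dependent unless $\Phi_1$ already solves (FP2), so invoking equality is circular. If the infimum ranged over all adapted controls your reduction would be valid, but then the continuity, moment and regularity hypotheses of (i) would be pointless.

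What the inequality does give, more directly than your $\varepsilon$-optimal concatenation (which needs measurable selection), is the lower bound. The operator $\mathcal{T}_2$ is monotone, so $\mathcal{T}_2^n[\Phi_1]$ decreases to a bounded-below fixed point of (FP2), which equals $\Phi$ by part (ii). This is the paper's Step 3, and it supplies the $w\ge0$ needed above.
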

\end{thmbox}
Thus, finding fixed points of either~\eqref{eq:fixed_point_1} or~\eqref{eq:fixed_point_2} suffices. \Cref{thm:V_1_V_2} is the specialization to the committor setting of the general \Cref{thm:V_1_V_2_appendix}, which allows a running state cost $f$ and a general terminal cost $g$; here $f \equiv 0$ and $g > 0$ by~\eqref{eq:terminal_cost}, so the nonnegativity conditions $f \ge 0$ and $g \ge K > 0$ required there hold automatically. To prove \Cref{thm:V_1_V_2}, we first show that solutions of \eqref{eq:fixed_point_1} also solve \eqref{eq:fixed_point_2}, and applying the tower property of conditional expectation in a telescoping argument, we prove that the only solution of~\eqref{eq:fixed_point_2} is~\eqref{eq:value_function_path_integral}.
Building on this result, in~\Cref{subsec:kl_method} we propose a heuristic objective to learn $\Phi$, and in~\Cref{sec:value_matching} we introduce Value Matching (off-policy, with guarantees). Since learned value functions $\phi$ induce controls of the form $-\sigma^{\top}\nabla \phi$, all losses we consider take scalar-valued functions $\phi$ as argument, and the goal is to obtain $\phi \approx \Phi = -\log q_{\xi}$ after training.

\section{Solving the SOC committor estimation problem} 
\label{sec:solving_SOC}

This section describes practical objectives for learning the value function $\Phi$ (and therefore the committor) from simulated trajectories. Recall from~\Cref{sec:methodology} that the SOC structure implies that any value estimate $\phi \approx \Phi$ induces a feedback control $u(x)=-\sigma^{\top}\nabla \phi(x)$. In practice, we combine three ingredients: (i) capped trajectories via $\tau(T)$ to control compute, (ii) objectives that can be optimized with stochastic gradients, and (iii) a neural network parameterization that enforces boundary conditions (\Cref{subsec:parameterizing_committor}). We present two complementary approaches, direct backpropagation (\Cref{subsec:kl_method})   
and Value Matching (\Cref{sec:value_matching}) that offer trade-offs between computational cost, theoretical guarantees, and flexibility in the choice of sampling process. \looseness=-1

\begin{algorithm}[t]
\caption{REACT training procedure}
\label{alg:react}
\begin{algorithmic}[1]
\Require Dynamics $(b,\sigma)$, sets $A,B$, horizon $T$, initial distribution $\mu_0$, regularization parameter $\xi$, optimizer, training steps $N$, noise factor $\kappa$
\State Initialize neural network parameters $\theta$ and value estimate $\phi_\theta$ using the committor parameterization in~\Cref{subsec:parameterizing_committor}.
\For{$n=1,\dots,N$}
  \State Sample initial states $X_0 \sim \mu_0$, optionally from the reactive density~\eqref{eq:reactive_density_bg}.
  \State \textbf{REACT-DBP (on-policy, backprop through SDE)}:
  \State \hspace{2mm} Sample capped trajectories from the controlled SDE~\eqref{eq:DBP_SDE}. 
  \State \hspace{2mm} Take one optimizer step on the DBP objective~\eqref{eq:DBP_1}.
  \State \textbf{REACT-VM (off-policy)}:
  \State \hspace{2mm} Form the detached value function $\bar{\phi}\gets\texttt{sg}(\phi_\theta)$ and sampling drift $v$ as in~\eqref{eq:approximate_v}.
  \State \hspace{2mm} Sample capped trajectories $X^{v,\kappa}$ from the SDE~\eqref{eq:VM_SDE}.
  \State \hspace{2mm} Take one optimizer step on the Value Matching objective~\eqref{eq:final_vm}.
\EndFor
\State \Return learned rescaled committor $\hat{q}_{\xi} = e^{-\phi_\theta}$ (and the corresponding unrescaled committor), plus derived quantities (rates, equilibrium constants, sampled transition paths).
\end{algorithmic}
\end{algorithm}

\subsection{Direct backpropagation (DBP)}
\label{subsec:kl_method}

We begin with a direct on-policy objective obtained by treating the terminal cost in the fixed-point formulation~\eqref{eq:fixed_point_1} as fixed during each update. This yields a loss that can be differentiated by backpropagating through the SDE dynamics (a standard approach in deep SOC; see~\cite[Sec.~5.1.1]{domingoadjoint2025}). Concretely, for a candidate value function $\phi$ (meant to approximate $\Phi$), we consider:
\begin{equation}
\label{eq:DBP_1}
    \mathcal{L}_{\mathrm{DBP}}(\phi) = \mathbb{E}\Big[\int_0^{\texttt{sg}(\tau(T))} |\nabla \phi(X_t^{\phi})|_D^2 \, \mathrm{d}t + 
    \texttt{sg}(\phi)(X^{\phi}_{\texttt{sg}(\tau(T))})\Big],
\end{equation}
where the controlled trajectories satisfy
\begin{equation}
\label{eq:DBP_SDE}
    \mathrm{d}X_t^{\phi} = \big(b(X^{\phi}_t) - 2D\,\nabla \phi(X^{\phi}_t)\big) \, \mathrm{d}t + \sigma\,\mathrm{d}W_t, \quad X^{\phi}_0 \sim \mu_0.
\end{equation}
Here DBP stands for direct backpropagation and $\texttt{sg}$ denotes a stopped-gradient copy (detached from the computational graph). Note that at optimality $\phi = \Phi = -\log q_{\xi}$, the drift in~\eqref{eq:DBP_SDE} reduces to $b + 2D\nabla\log q_{\xi}$, i.e.\ the bounded rescaled counterpart of the Doob's $h$-transformed drift~\eqref{eq:doob_bg}. The practical choices in~\eqref{eq:DBP_1} are worth highlighting:
\begin{enumerate}[left=-5pt,label=(\roman*)]
    \item \emph{Gradient flow through trajectories.} Even though $\texttt{sg}(\phi)$ detaches the explicit terminal evaluation, $\phi$ still affects the loss through the controlled trajectory $X^{\phi}$.
    \item \emph{Boundary conditions.} The loss does not explicitly include the boundary cost $g$, so boundary conditions must be enforced by the parameterization of $\phi$ (see~\Cref{subsec:parameterizing_committor}).
    \item \emph{Capped hitting time.} Since $\tau(T)$ depends on the trajectory (and therefore on $\phi$), we use $\texttt{sg}(\tau(T))$ to avoid differentiating through a stopping time; in general $\tau(T)$ is not differentiable with respect to the parameters of $\phi$.
\end{enumerate}
These features make~\eqref{eq:DBP_1} practical, but they also limit theory: the effective gradient is biased due to the stopped-gradient choices, and the method requires backpropagating through the full SDE trajectory, which is memory-intensive for long horizons. Both limitations are addressed by the value matching method described next.

\subsection{Value Matching}
\label{sec:value_matching}

We now introduce Value Matching (VM), a loss function designed to learn $\Phi$ from trajectories generated by an arbitrary sampling process. VM is built from the path-integral characterization~\eqref{eq:value_function_path_integral} and the fixed-point formulation~\eqref{eq:fixed_point_2}. 
For simplicity and ease of presentation, we consider a state space $\mathcal{X} = \mathbb{R}^d$ going forward, although the extension to $\mathcal{X} \subset \mathbb{R}^d$ is possible.

To motivate VM, we characterize the value function via likelihood ratios between controlled path measures. Let $\mathbb{P}^{u^{\star}}$ be the path measure of the optimally controlled process~\eqref{eq:X_u_star_main}, with initial distribution $\mu_0$. 
For a scalar function $\phi$, let $\mathbb{P}^{\phi}$
denote the path measure induced by~\eqref{eq:controlled_SDE_main} with control $u(x)=-\sigma^{\top}\nabla \phi(x)$, and with the same initial distribution. 
Consider the capped process $X^{v}=(X^v_t)_{0\le t\le\tau(T)}$ that solves $\mathrm{d}X^{v}_t = v(X^{v,\kappa}_t) \, \mathrm{d}t + \sigma\, \mathrm{d}W_t$, also with initial distribution $\mu_0$.
Combining Girsanov's theorem with the path-integral characterization of the optimal control yields:
\begin{equation}
\label{eq:log_RN}
\begin{split}
    \log \frac{\mathrm{d}\mathbb{P}^{u^{\star}}}{\mathrm{d}\mathbb{P}^{\phi}}(X^{v}) &= \log \frac{\mathrm{d}\mathbb{P}}{\mathrm{d}\mathbb{P}^{\phi}}(X^{v}) + \log \frac{\mathrm{d}\mathbb{P}^{u^{\star}}}{\mathrm{d}\mathbb{P}}(X^{v}) \\
    &= \int_0^{\tau(T)} \big\langle \sigma^{\top}\nabla \phi(X^{v}_t),\, \mathrm{d}W_t \big\rangle 
    + \int_0^{\tau(T)} \big\langle \nabla \phi(X^{v}_t),\, v(X^v_t) - b(X^v_t) + 
    D \nabla \phi (X^{v}_t) \big\rangle \, \mathrm{d}t
    \\
    &\quad - \boldsymbol{1}_{\mathcal{S}}(X^{v}_{\tau(T)})\, g(X^{v}_{\tau(T)}) - \boldsymbol{1}_{\mathcal{S}^c}(X^{v}_{\tau(T)})\, \Phi(X^{v}_{\tau(T)}) + \Phi(X^{v}_0).
\end{split}
\end{equation}
At optimality $u^{\star}=-\sigma^{\top}\nabla \Phi$, which means that $\mathbb{P}^{u^{\star}}=\mathbb{P}^{\Phi}$. Hence, the right-hand side vanishes identically when $\phi=\Phi$. This observation motivates a loss function that penalizes deviations from zero using trajectories generated under an arbitrary drift $v$. Since $\Phi$ is unknown, VM replaces the initial and terminal evaluations of $\Phi$ by evaluations of $\phi$, yielding a self-consistent objective:
\begin{equation}
\label{eq:VM_kappa_1}
\begin{split}
    \mathcal{L}_{\mathrm{VM}_{v}}(\phi) &= \mathbb{E} \Bigl[ \tfrac{1}{2}\Bigl( \int_0^{\tau(T)} \big\langle \sigma^{\top}\nabla \phi(X^{v}_t),\, \mathrm{d}W_t \big\rangle 
    + \int_0^{\tau(T)} \big\langle \nabla \phi(X^{v}_t),\, v(X^v_t) - b(X^v_t) + 
    D \nabla \phi (X^{v}_t) \big\rangle \, \mathrm{d}t
    \\
    &\qquad\quad - \boldsymbol{1}_{\mathcal{S}}(X^{v}_{\tau(T)})\, g(X^{v}_{\tau(T)}) - \boldsymbol{1}_{\mathcal{S}^c}(X^{v}_{\tau(T)})\, \phi(X^{v}_{\tau(T)}) + \phi(X^{v}_0)
    \Bigr)^2 \Bigr].
\end{split}    
\end{equation}
The key advantage is that $v$ is free: we can sample trajectories off-policy, including at different noise levels,
as shown below in the general form of the VM loss.

\begin{defbox}
\begin{definition}[Value Matching loss function]
\label{def:VM}
Given $\kappa \in (0,\infty)$ and a drift $v: 
\mathbb{R}^d\to\mathbb{R}^d$ that is locally Lipschitz with linear growth, consider trajectories $X^{v,\kappa}$ satisfying
\begin{equation} \label{eq:VM_SDE}
    \mathrm{d}X^{v,\kappa}_t = v(X^{v,\kappa}_t) \, \mathrm{d}t + \sqrt{\kappa}\,\sigma\, \mathrm{d}W_t,
\end{equation}
with $X^{v,\kappa}_0 \sim \mu_0$.
For $T>0$, the VM loss is
\begin{equation}
\label{eq:L_VM_v_kappa}
\begin{split}
    \mathcal{L}_{\mathrm{VM}_{v,\kappa}}(\phi) &=
    \mathbb{E} \Bigl[ \tfrac{1}{2} \Bigl( \int_0^{\tau(T)} \big\langle \nabla \phi(X^{v,\kappa}_t),\, \tfrac{1}{\kappa}\, \mathrm{d}X^{v,\kappa}_t + \big(-\tfrac{1}{\kappa}\,b(X^{v,\kappa}_t) + D\,\nabla \phi(X^{v,\kappa}_t) \big) \, \mathrm{d}t \big\rangle \\
    &\qquad\quad + \tfrac{1}{\kappa}\, \phi(X^{v,\kappa}_0) + \big(\boldsymbol{1}_{\mathcal{S}}(X^{v,\kappa}_{\tau(T)}) - \tfrac{1}{\kappa}\big)\,\phi(X^{v,\kappa}_{\tau(T)}) - \boldsymbol{1}_{\mathcal{S}}(X^{v,\kappa}_{\tau(T)})\, g(X^{v,\kappa}_{\tau(T)})
    \Bigr)^2 \Bigr].
\end{split}
\end{equation}
\end{definition}
\end{defbox}

The next theorem states that VM is principled: first-order critical points correspond to the true value function.
\begin{thmbox}
\begin{theorem}[First-order optimality of the VM loss with capped hitting times] \label{thm:capped_vm_loss} 
For $\alpha \in (0,1)$ arbitrary, define
\begin{equation}
\label{eq:Phi_def}
\begin{split}
\mathcal{F} &= \{ \phi : \mathbb{R}^d \to \mathbb{R} \mid \phi \in C^{2+\alpha}(\mathbb{R}^d),\; \phi,\, \nabla \phi,\, \nabla^2 \phi \text{ grow at most polynomially} \}.
\end{split}
\end{equation}
If $\hat{\phi} \in \mathcal{F}$ is a first-order critical point of $\mathcal{L}_{\mathrm{VM}_{v,\kappa}}$, i.e.\ for any perturbation $\eta \in C^{2+\alpha}(\mathbb{R}^d)$ we have $\frac{\partial}{\partial \varepsilon} \mathcal{L}_{\mathrm{VM}_{v,\kappa}}(\hat{\phi} + \varepsilon \eta) \big|_{\varepsilon = 0} = 0$, then $\hat{\phi}(x) = \Phi(x)$ and $u^{\star}(x) = -\sigma^{\top}\nabla \hat{\phi}(x)$ for all $x \in \mathbb{R}^d$.
\end{theorem}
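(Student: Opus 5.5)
The plan is to reduce the first-order condition to the statement that a certain pathwise residual vanishes. We then identify that vanishing with the HJB equation~\eqref{eq:HJB_setup}, or equivalently the fixed-point problem~\eqref{eq:fixed_point_2}, and conclude with the uniqueness part of \Cref{thm:V_1_V_2}.

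\emph{Step 1: rewrite the residual.} For $\phi\in\mathcal{F}$, write $\mathcal{L}_{\mathrm{VM}_{v,\kappa}}(\phi)=\mathbb{E}[\tfrac12 R(\phi)^2]$. Under~\eqref{eq:VM_SDE}, Itô's formula gives $\mathrm{d}\phi(X_t)=\langle\nabla\phi,\mathrm{d}X_t\rangle+\kappa\,D:\nabla^2\phi\,\mathrm{d}t$. Substituting this removes the boundary terms $\tfrac1\kappa\phi(X_0)-\tfrac1\kappa\phi(X_{\tau(T)})$ and turns $R(\phi)$ into a time integral plus a terminal term:
\[
R(\phi)=\int_0^{\tau(T)} \big(\mathcal{H}_\kappa\phi\big)(X_t)\,\mathrm{d}t+\boldsymbol 1_{\mathcal S}(X_{\tau(T)})\big(\phi-g\big)(X_{\tau(T)}).
\]
Here $\mathcal{H}_\kappa\phi$ is a local nonlinear operator built from $-D:\nabla^2\phi$, $b\cdot\nabla\phi$ and $|\nabla\phi|_D^2$. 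I will track the $\kappa$-dependent coefficients carefully. I would rather not expand $\mathrm{d}X$ as $v\,\mathrm{d}t+\sqrt\kappa\sigma\,\mathrm{d}W$ here, because then $v$ and the martingale part reappear. The sanity check is that at $\phi=\Phi$ this expression must reduce to the log-likelihood ratio in~\eqref{eq:log_RN}, which vanishes pathwise. Hence $\Phi$ is a global minimizer with zero loss, and the content of the theorem is the converse.

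\emph{Step 2: the first variation.} For $\eta\in C^{2+\alpha}$ the condition reads $\mathbb{E}[R(\hat\phi)\,\delta R(\hat\phi;\eta)]=0$. Here $\delta R$ is linear in $\eta$, namely $\int_0^{\tau(T)}(\mathcal{H}'_{\kappa,\hat\phi}\eta)(X_t)\,\mathrm{d}t+\boldsymbol 1_{\mathcal S}\eta(X_{\tau(T)})$, with $\mathcal{H}'$ the linearized operator (the $|\nabla\phi|_D^2$ term contributes $2D\nabla\hat\phi\cdot\nabla\eta$).

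I first take $\eta$ supported near $\partial\mathcal S$ and inside $\mathcal S$; this isolates the terminal term and forces $\hat\phi=g$ on $\partial\mathcal S$. Next I localize $\eta$ in $\mathcal S^c$. Using the Markov property at intermediate times, I would define the adjoint quantity $a(x)$, the conditional expectation of the remaining residual given the current state. The condition then becomes $\int a\,\mathcal{H}'\eta\,\mu_t\,\mathrm{d}t=0$, where $\mu_t$ are the (killed) marginals of $X^{v,\kappa}$, which have positive density because $\mu_0$ has full support. This yields a linear elliptic or hypoelliptic equation: the formal adjoint of $\mathcal{H}'_{\kappa,\hat\phi}$ annihilates the occupation density weighted by $a$, and $a$ itself solves a backward Kolmogorov equation for the sampling process with source $\mathcal{H}_\kappa\hat\phi$.

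\emph{Step 3: conclude.} From the coupled system of Step 2 I want to show that $\mathcal{H}_\kappa\hat\phi\equiv0$ on $\mathcal S^c$. The route I would try first is the choice $\eta=\hat\phi-\Phi$, or a variation aligned with the residual itself. This choice should produce $\mathbb{E}[R(\hat\phi)^2]$-type positivity: by convexity of $|\cdot|_D^2$, the difference $R(\hat\phi)-R(\Phi)=R(\hat\phi)$ is bounded by the linearization along $\hat\phi-\Phi$, so criticality forces $\mathbb{E}[R(\hat\phi)^2]\le0$. Once $\mathcal{H}_\kappa\hat\phi=0$ in $\mathcal S^c$ and $\hat\phi=g$ on $\partial\mathcal S$, Itô's formula under the uncontrolled dynamics shows that $\hat\phi$ solves~\eqref{eq:fixed_point_2}. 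Since $\hat\phi\in\mathcal F$ has polynomial growth and $\hat\phi\ge\min g>0$ near the boundary, part (ii) of \Cref{thm:V_1_V_2} gives $\hat\phi=\Phi$, and hence $u^\star=-\sigma^\top\nabla\hat\phi$.

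The main obstacle is Step 3. Criticality is only a first-order condition on a nonconvex functional, so the argument must exploit the specific structure: the residual is a log-Radon--Nikodym derivative, and the quadratic term in $\phi$ appears exactly as in Girsanov. That structure has to turn the first variation into a sign-definite quantity. Controlling the capped time $\tau(T)$ and justifying interchanges under polynomial growth is routine by comparison.
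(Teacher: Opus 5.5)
The decisive step, turning criticality into a sign-definite identity, is missing, and the convexity argument you propose for it does not work.

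After It\^o's formula the residual is $\kappa$-free. This uses the appendix form~\eqref{eq:VM_loss_f} of the loss, where the drift enters as $-b$; with the main-text $-\tfrac1\kappa b$ your sanity check at $\Phi$ fails for $\kappa\neq1$. Explicitly, $R(\phi)=\int_0^{\tau(T)}\big(|\nabla\phi|_D^2-\mathcal{L}\phi\big)(X_t)\,\mathrm{d}t+\boldsymbol{1}_{\mathcal{S}}(X_{\tau(T)})\,(\phi-g)(X_{\tau(T)})$, which is pathwise convex in $\phi$.

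\begin{itemize}
\item Convexity only gives $R(\hat\phi)\le\delta R(\hat\phi;\hat\phi-\Phi)$. Multiplying by $R(\hat\phi)$, which takes both signs, does not yield $\mathbb{E}[R(\hat\phi)\,\delta R(\hat\phi;\hat\phi-\Phi)]\ge\mathbb{E}[R(\hat\phi)^2]$. The loss $\mathbb{E}[R^2]$ is genuinely nonconvex.
\item $\hat\phi-\Phi\notin C^{2+\alpha}(\mathbb{R}^d)$, because $\nabla\Phi$ jumps across $\partial\mathcal{S}$, so it is not an admissible perturbation.
\item Your preliminary move in Step 2 also fails. For $\eta$ supported in the interior of $\mathcal{S}$ the variation vanishes identically, since paths are stopped on $\partial\mathcal{S}$. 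For $\eta$ straddling $\partial\mathcal{S}$ the interior integral does not drop out. So you cannot extract $\hat\phi=g$ on $\partial\mathcal{S}$ on its own, and the adjoint system you describe is inert without a specific test function.
\end{itemize}

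The missing idea is that choice of test function. After It\^o, the first variation is $\delta R(\hat\phi;\eta)=\boldsymbol{1}_{\mathcal{S}}(X_{\tau(T)})\,\eta(X_{\tau(T)})-\int_0^{\tau(T)}(\mathcal{L}^{\hat\phi}\eta)(X_t)\,\mathrm{d}t$. Here $\mathcal{L}^{\hat\phi}=(b-2D\nabla\hat\phi)\cdot\nabla+D:\nabla^2$ is the generator of the diffusion controlled by $-\sigma^\top\nabla\hat\phi$.

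Let $\eta^\star$ solve the linear Dirichlet--Poisson problem $\mathcal{L}^{\hat\phi}\eta^\star=\mathcal{L}\hat\phi-|\nabla\hat\phi|_D^2$ in $\mathcal{S}^c$, with $\eta^\star=\hat\phi-g$ on $\partial\mathcal{S}$ (existence as in \Cref{rem:regularity_DP}). Then $\delta R(\hat\phi;\eta^\star)=R(\hat\phi)$ almost surely, on capped and hitting paths alike, so criticality forces $\mathbb{E}[R(\hat\phi)^2]=0$. This is exactly what the paper obtains with $\eta=-\eta^\star$, via a Girsanov change to $\bar X^\kappa$ and its generalized Dynkin representation (\Cref{thm:pde-dynkin}).

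From $R(\hat\phi)=0$ almost surely you still need a support argument to get pointwise statements. With $\mu_0$ of full support on $\mathcal{S}^c$ and uniform ellipticity, paths that stay in a small ball inside $\mathcal{S}^c$ throughout $[0,T]$ have positive probability. On such paths $R(\hat\phi)=\int_0^T(|\nabla\hat\phi|_D^2-\mathcal{L}\hat\phi)\,\mathrm{d}t$, which forces the HJB residual to vanish on $\mathcal{S}^c$. It then follows that $\hat\phi=g$ on the reachable part of $\partial\mathcal{S}$. The paper instead uses Stroock--Varadhan plus continuity of the residual functional, followed by a telescoping Radon--Nikodym argument.

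Your ending via \Cref{thm:V_1_V_2}(ii) is a workable alternative from that point. However, both the optional-stopping step for $e^{-\hat\phi}$ and that theorem need a global lower bound on $\hat\phi$ over $\mathcal{S}^c$, which $\hat\phi\ge\min g$ near $\partial\mathcal{S}$ does not provide.
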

\end{thmbox}

While $v$ and $\kappa$ are in principle arbitrary, they should be chosen to make training efficient. In the idealized case where $q_{\xi}$ and $\tilde{q}_{\xi}$ are known, a natural choice is the drift of the $\kappa$-rescaled reactive process~\eqref{eq:rescaled_reactive_SDE_kappa} introduced in~\Cref{sec:methodology}:
\begin{equation}
\begin{split} \label{eq:v_rescaled_reactive}
v(x) &= 
\tfrac{1+\kappa}{2}\big(b(x) + 2D\nabla\log q_{\xi}(x)\big) + \tfrac{\kappa-1}{2}\big(\tilde b(x) + 2D\nabla\log\tilde q_{\xi}(x)\big).
\end{split}
\end{equation}
During training we only have access to approximate committors $\hat{q}_{\xi}$, $\hat{\tilde{q}}_{\xi}$ and value function estimates $\phi=-\log \hat{q}_{\xi}$, $\tilde{\phi}=-\log \hat{\tilde{q}}_{\xi}$. For overdamped and underdamped Langevin, these quantities can be expressed solely in terms of the approximate forward committors: $\hat{\tilde{q}}_{\xi}(x) = 1 - \hat{q}_{\xi}(x)$ and $\hat{\tilde{q}}_{\xi}(r,p) = 1 - \hat{q}_{\xi}(r,-p)$, respectively.
Writing $\hat{q}_{\xi}=\exp(-\bar{\phi})$, $\hat{\tilde{q}}_{\xi}=\exp(-\bar{\tilde{\phi}})$ with $\bar{\phi}=\texttt{sg}(\phi)$, $\bar{\tilde{\phi}}=\texttt{sg}(\tilde{\phi})$ detached from the computational graph yields the practical choice
\begin{equation} \label{eq:approximate_v}
    v(x) = 
    \tfrac{1+\kappa}{2}\big(b(x) - 2D\nabla \bar{\phi}(x)\big) + \tfrac{\kappa-1}{2}\big(\tilde b(x) - 2D\nabla \bar{\tilde{\phi}}(x) \big).
\end{equation}
With this choice, the VM loss~\eqref{eq:L_VM_v_kappa} can be rewritten in a form convenient for implementation (see~\Cref{cor:particular_VM}):
\begin{equation}
\label{eq:final_vm}
\begin{split}
    &\mathcal{L}_{\mathrm{VM}_{v,\kappa}}(\phi) \\ &= \mathbb{E} \Bigl[ \tfrac{1}{2} \Bigl( \int_0^{\tau(T)} \big\langle \nabla \phi(X^{v,\kappa}_t), \tfrac{1}{\sqrt{\kappa}} \sigma \, \mathrm{d}W_t
    + \big( \tfrac{1}{\kappa} - 1 \big) \big( b(X^{v,\kappa}_t) - D \nabla \log \rho(X^{v,\kappa}_t) 
    \big) \, \mathrm{d}t \\ &\qquad\qquad\qquad\qquad + D \big(
    \nabla \phi(X^{v,\kappa}_t) \!-\!(\tfrac{1}{\kappa}\!+\!1)\!\nabla \bar{\phi}(X^{v,\kappa}_t)
    \!+ \!(\tfrac{1}{\kappa}\!-\!1) \nabla \bar{\tilde{\phi}}(X^{v,\kappa}_t)
    \big) \, \mathrm{d}t
    \big\rangle \\ &\qquad\quad + \! \tfrac{1}{\kappa} \phi(X^{v,\kappa}_0) 
    \! + \! (\mathrm{1}_{\mathcal{S}}(X^{v,\kappa}_{\tau(T)}) \! - \! \tfrac{1}{\kappa})
    \phi(X^{v,\kappa}_{\tau(T)})
    \! - \! \mathrm{1}_{\mathcal{S}}(X^{v,\kappa}_{\tau(T)}) g(X^{v,\kappa}_{\tau(T)})
    \Bigr)^2 \Bigr].
\end{split}
\end{equation}
Note that the term $b - D \nabla \log \rho$ vanishes in the reversible case (overdamped Langevin). In \Cref{eq:subsection_overdamped} and \Cref{eq:subsection_underdamped} we particularize this loss function for overdamped and underdamped Langevin, respectively.

The Value Matching loss is a general-purpose SOC objective, not limited to hitting-time problems. We develop VM for finite-horizon, time-dependent SOC problems in~\Cref{subsec:vm_finite} and for uncapped hitting-time problems ($T=+\infty$) in~\Cref{subsec:vm_loss}, where first-order optimality is established. These arguments also underpin the more delicate capped hitting-time result of~\Cref{thm:capped_vm_loss}.

A common thread across these proofs is a non-standard generalization of the Dynkin representation theorem (\Cref{thm:parabolic-dynkin,thm:pde-dynkin}), which to our knowledge has not appeared in the literature. We use it to write the HJB-residual functional $F$ associated to a candidate value function $\hat{\phi}$ as an It\^o-type sum $F(\bar X^{\kappa}) = \eta^{\star}(\bar X^{\kappa}_0) + \int_0^T \langle \sqrt{\kappa}\sigma^{\top}\nabla\eta^{\star}(\bar X^{\kappa}_t,t), \mathrm{d}W_t\rangle + (\text{drift correction})$, where $\eta^{\star}$ solves a specific parabolic PDE. Choosing the perturbation $\eta = -\eta^{\star}$ then collapses the directional derivative into the manifestly non-positive form $\partial_\epsilon \mathcal{L}_{\mathrm{VM}_{v,\kappa}}(\hat{\phi}+\epsilon\eta)|_{\epsilon=0} = -\mathbb{E}\big[F(X^{v,\kappa})^2\big]$, so that the first-order condition forces $F \equiv 0$ almost surely---which is precisely the HJB equation, and hence $\hat{\phi} = \Phi$.

\looseness=-1
The finite-horizon, time-dependent version of the VM loss is closely related to the BSDE loss introduced by \cite{nusken2023interpolating} (for which no theoretical guarantees are provided). We further relate VM to existing SOC and RL losses---including the FBSDE loss~\citep{weinan2017deep,han2018solving}, the moment and log-variance losses~\citep{nusken2021solving}, and the Soft Actor-Critic (SAC) objective~\citep{haarnoja2018soft}---in~\Cref{subsec:comparison_SOC_losses,subsec:SAC}. In~\Cref{subsec:girsanov_gradient}, we review the REINFORCE-style loss introduced in \cite{hua2024efficient} to tackle the same SOC problems.

Under a reproducing-kernel-Hilbert-space (RKHS) parameterization of the rescaled log-committor $\phi = \langle \theta, \psi \rangle_{\mathcal{F}}$, the resulting VM losses become \emph{quartic} functions of the parameters~$\theta$. To our knowledge, the optimization-landscape geometry of SOC-derived path-integral losses---including the BSDE, FBSDE, moment, log-variance, and SAC losses discussed above, as well as our Value Matching loss---has received little attention in the literature; \Cref{sec:convergence_proof} takes a first step by exposing this quartic structure.

Finally, discrete-time, discrete-space analogs of VM---one for the time-homogeneous capped-hitting-time setting (mirroring \Cref{thm:capped_vm_loss}) and one for the time-inhomogeneous finite-horizon setting (mirroring \Cref{thm:vm_finite})---are developed in~\Cref{sec:discrete_mc}. \looseness=-1

\subsection{Choosing the initial distribution $\mu_0$ and the rollout sampling scheme}
Although the optimal control in \Cref{prop:SOC} is agnostic to the initial distribution $\mu_0$, and theoretically the algorithms described in this section can work with any initial distribution, choosing $\mu_0$ well is fundamental to learn $\Phi = -\log q_{\xi}$ in regions of interest.  

A sensible criterion is to set $\mu_0$ such that in the idealized case where $\phi=-\log q_{\xi}$, the occupancy measure of the process $X^{v,\kappa}$ (with $v$ given by \eqref{eq:approximate_v}) is equal to the rescaled reactive density $\rho^{(\xi)}_\mathrm{R}$ defined in \eqref{eq:rescaled_reactive_density}. The reason for this is that when $\xi \ll 1$, the factors $q_{\xi} \,\tilde q_{\xi}$ tilt the mass of $\rho^{(\xi)}_\mathrm{R}$ towards transition regions, where trajectories are similarly likely to end up at (or come from) $A$ and $B$\footnote{In fact, we want to choose $\xi$ small enough that $\rho^{(\xi)}_\mathrm{R}$ has substantial mass in transition regions.}. To accomplish this, we use that under $\phi=-\log q_{\xi}$, the process $X^{v,\kappa}$ solves the SDE \eqref{eq:rescaled_reactive_SDE_kappa}, and thus, has stationary distribution equal to $\rho^{(\xi)}_\mathrm{R}$ as long as the source and sink conditions are enforced appropriately.
We develop two theoretically exact birth-death mechanisms to implement these conditions:
\begin{itemize}[left=3pt]
\item \emph{Boundary birth-death} (\Cref{subsec:different_noise_levels}): particles are born on~$\partial A$ from a distribution determined by the committors and killed on~$\partial B$ via an exponential clock driven by boundary local times, which are also computed from the committors. 
\item \emph{Interior birth-death} (\Cref{sec:mollified_tpt_appendix}): the rescaled committor is extended to a global solution of a mollified equation (\Cref{prop:xi_extension}), with smooth killing rates $\alpha \geq 0$ in~$A$ and $\beta \geq 0$ in~$B$. The reactive density then satisfies the global Fokker--Planck equation (\Cref{prop:mod_FPE_soft})
\begin{equation} \label{eq:mod_FPE_soft_main}
    \nabla\!\cdot\! J^{(\xi)}_R = \alpha\,\rho\,q_{\xi} - \frac{\beta}{q_{\xi}}\,\rho^{(\xi)}_R,
\end{equation}
where the source term is implemented through respawning from $p_{\mathrm{birth}} \propto \alpha\,\rho\,q_{\xi}$ inside~$A$ and the sink term is handled via volumetric killing at rate $\beta/q_{\xi}$ inside~$B$. This avoids boundary local times entirely; in the limit $\alpha,\beta \to \infty$, it recovers the boundary approach. The exact rates $\alpha$, $\beta$ that ensure that the mollified committor matches the rescaled committor over $(A \cup B)^c$ are not easily computable in general, but using constant rates provides an acceptable approximation.
\end{itemize}
That is, to sample rollouts for the algorithms in Sections~\ref{subsec:kl_method}--\ref{sec:value_matching}, we can simulate the SDE \eqref{eq:VM_SDE} with $v$ in \eqref{eq:approximate_v}, and use either of the birth-death mechanisms above, approximating $q_{\xi} \approx e^{-\phi}$: when a particle is killed at $\partial B$ or $B$, it respawns at $\partial A$ or $A$ accordingly. We obtain rollouts with horizon $\tau(T) := \min\{T,\tau\}$ by breaking up instances of this Markov process into chunks of length $T$, or $\tau$ if the process hits $\partial A$ or $\partial B$. Observe that this procedure is implicitly setting an initial distribution $\mu_0$, although it is not possible to write it down explicitly.

\paragraph{Practical sampling of rollouts.} For underdamped and overdamped Langevin, we have access to the unnormalized log-densities of $\rho$ and $\rho^{(\xi)}_{\mathrm{R}}$ (assuming that $\phi = - \log q_{\xi}$). Observe the forward (resp. reverse) committors from $B$ to $A$ are equal to one minus to the forward (resp. reverse) committors from $A$ to $B$. These two observations motivate the Markov process, which also has occupancy measure equal to the rescaled reactive density $\rho^{(\xi)}_{\mathrm{R}}$:  
\begin{enumerate}[label=(\roman*)]
    \item Sample a starting point $y \sim \rho^{(\xi)}_{\mathrm{R}}$, e.g. using an MCMC method.
    \item Pick $A$ or $B$ as the target with probability 1/2. 
    If the target is $B$, run the SDE \eqref{eq:VM_SDE}, killing particles at $\partial B$ or $B$ according to either of the death mechanisms described above. If the target is $A$, run the SDE \eqref{eq:VM_SDE}, replacing $q_{\xi}$, $\tilde{q}_{\xi}$ by $1-q_{\xi}$, $1-\tilde{q}_{\xi}$ resp., killing particles at $\partial A$ or $A$ according to either of the death mechanisms. 
    \item When the particle is killed, return to (i).
\end{enumerate}
Like before, when running this procedure we must approximate $q_{\xi} \approx e^{-\phi}$, and we obtain rollouts by breaking of a process instance into chunks of length $\tau(T)$. In practice, we kill particles every time the process hits the target boundary, because that allows us to avoid computing local times while introducing a small bias when $\xi$ is small. Off-policy losses such as the VM loss handle such mismatches gracefully. 

\subsection{Parameterizing the committor}
\label{subsec:parameterizing_committor}

The learned rescaled committor must satisfy the regularized boundary conditions $\hat{q}_{\xi}|_A = \xi$ and $\hat{q}_{\xi}|_B = 1-\xi$ (cf.~\Cref{sec:methodology}), respect the $A$--$B$ symmetry, and be initialized stably. Assume we have access to a distance-like function $\mathrm{dist}$ such that $\mathrm{dist}(x,A)$ and $\mathrm{dist}(x,B)$ are efficiently computable. We parameterize the committor via
\begin{align} \label{eq:q_parameterization}
\hat{q}_{\xi}(x) &= \xi\, \frac{\mathrm{dist}(x, B)\,\psi^{(2)}_{\theta}(x)}{\mathrm{dist}(x, A)\,\psi_{\theta}^{(1)}(x) +\mathrm{dist}(x, B)\,\psi_{\theta}^{(2)}(x)} + (1-\xi)\, \frac{\mathrm{dist}(x, A)\,\psi^{(1)}_{\theta}(x)}{\mathrm{dist}(x, A)\,\psi_{\theta}^{(1)}(x) +\mathrm{dist}(x, B)\,\psi_{\theta}^{(2)}(x)},\nonumber\\
&= \xi + (1-2\xi)\, \frac{\mathrm{dist}(x, A)\,\psi^{(1)}_{\theta}(x)}{\mathrm{dist}(x, A)\,\psi_{\theta}^{(1)}(x) +\mathrm{dist}(x, B)\,\psi_{\theta}^{(2)}(x)},
\end{align}
where $\psi_{\theta}=(\psi^{(1)}_{\theta},\psi^{(2)}_{\theta}) : \mathbb{R}^d \to [0,+\infty)^2$ is a neural network (e.g.\ with a softplus output). The distance factors enforce $\hat{q}_{\xi} = \xi$ on $A$ and $\hat{q}_{\xi} = 1-\xi$ on $B$ by construction, and if $\psi_{\theta}$ is initialized near-constant, then $\hat{q}_{\xi}$ starts from a geometric guess induced purely by the configuration-space geometry. This provides a stable and symmetry-respecting initialization.

The value-function estimate used in the losses of~\Cref{subsec:kl_method,sec:value_matching}
is then
\begin{equation}
\label{eq:phi_parameterization}
\phi(x) = -\log \hat{q}_{\xi}(x),
\end{equation}
which is bounded since $\hat{q}_{\xi} \in [\xi, 1-\xi]$ by construction (cf.\ the discussion of the $\xi$-regularization in~\Cref{sec:methodology}). In particular, $\phi|_A = -\log\xi = g|_A$ and $\phi|_B = -\log(1-\xi) = g|_B$, so the boundary conditions of the SOC problem~\eqref{eq:terminal_cost} are automatically satisfied.

\section{Experiments} \label{sec:experiment}

We organize the numerical results into two broad classes of dynamics. The first class, \emph{(generalized) reversible dynamics}, includes overdamped Langevin dynamics and underdamped Langevin dynamics, the latter being reversible after momentum reversal. The second class, \emph{non-reversible dynamics}, includes stationary non-equilibrium dynamics with a steady probability current and periodically-driven dynamics with a time-dependent force field. Across these settings, we test three capabilities of REACT: accurate committor estimation, reliable estimation of reaction statistics derived from the committor, and direct sampling of transition paths from the learned SOC control. Full system details are provided in~\Cref{appendix:system_details}.

\subsection{(Generalized) Reversible Dynamics}

\subsubsection{Overdamped Langevin systems}

Overdamped Langevin dynamics provide a canonical testbed for committor estimation. We consider three analytical two-dimensional potentials that isolate complementary difficulties: a symmetric triple-well landscape, the asymmetric Müller--Brown potential, and a rugged Müller--Brown potential with additional metastable structure. Reference committors are computed numerically using the finite-element method. We compare against two BKE-based baselines: the plain variational BKE objective~\eqref{eq:kbe_variational} and the BKE objective with importance sampling~\citep{kang2024computing}; details are given in~\Cref{appendix:baseline}. We do not include the Feynman--Kac formulation~\eqref{eq:fk} or the maximum-likelihood formulation~\eqref{eq:mle}, since the former requires estimating conditional expectations and the latter requires extensive transition-path data. For REACT, we evaluate the direct-backpropagation objective REACT-DBP (\Cref{subsec:kl_method}) and the off-policy Value Matching objective REACT-VM (\Cref{sec:value_matching}). For REACT-VM, the parameter $\kappa$ controls the annealed sampling dynamics used to lower effective barriers while preserving the relevant reactive flux. Hyperparameters are reported in~\Cref{appendix:hyperparameters}.

\begin{table*}[t]
  \centering
  \caption{Quantitative evaluation of REACT on the three overdamped systems, compared with solving the variational formulation of BKE with (BKE-IS) and without (BKE) importance sampling. We report: mean absolute error of the committor $q$ over the domain, reaction rate $\nu_R$, directional rate constants $k_{AB}$ and $k_{BA}$, basin weights $p_A$ and $p_B$, reactive weight $Z_{AB}$, and equilibrium constant $K_{\mathrm{eq}}$. All quantities are unitless.}
  \label{tab:main_result}
  \vspace{1mm}

  \setlength{\tabcolsep}{4pt}
  \renewcommand{\arraystretch}{1.15}
  \scriptsize

  \textbf{Triple-well potential}\\[2pt]
  \resizebox{\textwidth}{!}{%
  \begin{tabular}{l c c c c c c c c}
    \toprule
    Method & Mean $q$ error & $\nu_R$ & $k_{AB}$ & $k_{BA}$ & $p_A$ & $p_B$ & $Z_{AB}$ & $K_{\mathrm{eq}}$ \\
    \midrule
    FEM & --- & $1.74\!\times\!10^{-2}$ & $3.47\!\times\!10^{-2}$ & $3.47\!\times\!10^{-2}$ & $5.0\!\times\!10^{-1}$ & $5.0\!\times\!10^{-1}$ & $1.50\!\times\!10^{-2}$ & $1.0\!\times\!10^{0}$ \\
    \midrule
    BKE & $8.77 \!\pm\! 0.97 \!\times\! 10^{-3}$ & $1.65 \!\pm\! 0.05 \!\times\! 10^{-2}$ & $3.30 \!\pm\! 0.10 \!\times\! 10^{-2}$ & $3.30 \!\pm\! 0.11 \!\times\! 10^{-2}$ & $5.00 \!\pm\! 0.00 \!\times\! 10^{-1}$ & $5.00 \!\pm\! 0.00 \!\times\! 10^{-1}$ & $2.10 \!\pm\! 0.08 \!\times\! 10^{-2}$ & $1.00 \!\pm\! 0.00 \!\times\! 10^{0}$ \\
    BKE (IS) & $8.87 \!\pm\! 0.47 \!\times\! 10^{-2}$ & $9.53 \!\pm\! 0.26 \!\times\! 10^{-3}$ & $1.91 \!\pm\! 0.05 \!\times\! 10^{-2}$ & $1.90 \!\pm\! 0.05 \!\times\! 10^{-2}$ & $5.00 \!\pm\! 0.01 \!\times\! 10^{-1}$ & $5.01 \!\pm\! 0.01 \!\times\! 10^{-1}$ & $1.20 \!\pm\! 0.44 \!\times\! 10^{-1}$ & $1.00 \!\pm\! 0.00 \!\times\! 10^{0}$ \\
    \midrule
    REACT-DBP & $6.95 \!\pm\! 0.64 \!\times\! 10^{-3}$ & $1.80 \!\pm\! 0.00 \!\times\! 10^{-2}$ & $3.60 \!\pm\! 0.01 \!\times\! 10^{-2}$ & $3.60 \!\pm\! 0.00 \!\times\! 10^{-2}$ & $5.00 \!\pm\! 0.01 \!\times\! 10^{-1}$ & $5.00 \!\pm\! 0.01 \!\times\! 10^{-1}$ & $1.44 \!\pm\! 0.10 \!\times\! 10^{-2}$ & $1.00 \!\pm\! 0.00 \!\times\! 10^{0}$ \\
    REACT-VM ($\kappa\!=\!1.0$) & $4.88 \!\pm\! 1.44 \!\times\! 10^{-3}$ & $1.80 \!\pm\! 0.00 \!\times\! 10^{-2}$ & $3.59 \!\pm\! 0.00 \!\times\! 10^{-2}$ & $3.59 \!\pm\! 0.00 \!\times\! 10^{-2}$ & $5.00 \!\pm\! 0.00 \!\times\! 10^{-1}$ & $5.00 \!\pm\! 0.00 \!\times\! 10^{-1}$ & $1.37 \!\pm\! 0.00 \!\times\! 10^{-2}$ & $1.00 \!\pm\! 0.00 \!\times\! 10^{0}$ \\
    REACT-VM ($\kappa\!=\!0.8$) & $4.16 \!\pm\! 1.32 \!\times\! 10^{-3}$ & $1.80 \!\pm\! 0.00 \!\times\! 10^{-2}$ & $3.60 \!\pm\! 0.01 \!\times\! 10^{-2}$ & $3.60 \!\pm\! 0.01 \!\times\! 10^{-2}$ & $5.00 \!\pm\! 0.00 \!\times\! 10^{-1}$ & $5.00 \!\pm\! 0.00 \!\times\! 10^{-1}$ & $1.37 \!\pm\! 0.03 \!\times\! 10^{-2}$ & $1.00 \!\pm\! 0.00 \!\times\! 10^{0}$ \\
    \bottomrule
  \end{tabular}%
  }

  \vspace{4mm}

  \textbf{Müller--Brown potential}\\[2pt]
  \resizebox{\textwidth}{!}{%
  \begin{tabular}{l c c c c c c c c}
    \toprule
    Method & Mean $q$ error & $\nu_R$ & $k_{AB}$ & $k_{BA}$ & $p_A$ & $p_B$ & $Z_{AB}$ & $K_{\mathrm{eq}}$ \\
    \midrule
    FEM & --- & $4.96\!\times\!10^{-4}$ & $5.09\!\times\!10^{-4}$ & $1.93\!\times\!10^{-2}$ & $9.74\!\times\!10^{-1}$ & $2.57\!\times\!10^{-2}$ & $2.57\!\times\!10^{-4}$ & $2.64\!\times\!10^{-2}$ \\
    \midrule
    BKE & $1.12 \!\pm\! 0.35 \!\times\! 10^{-1}$ & $3.50 \!\pm\! 0.83 \!\times\! 10^{-4}$ & $3.59 \!\pm\! 0.85 \!\times\! 10^{-4}$ & $1.36 \!\pm\! 0.32 \!\times\! 10^{-2}$ & $9.77 \!\pm\! 0.00 \!\times\! 10^{-1}$ & $2.58 \!\pm\! 0.00 \!\times\! 10^{-2}$ & $6.87 \!\pm\! 3.18 \!\times\! 10^{-5}$ & $2.64 \!\pm\! 0.00 \!\times\! 10^{-2}$ \\
    BKE (IS) & $6.78 \!\pm\! 2.47 \!\times\! 10^{-2}$ & $5.91 \!\pm\! 0.71 \!\times\! 10^{-4}$ & $5.53 \!\pm\! 0.36 \!\times\! 10^{-4}$ & $2.07 \!\pm\! 0.11 \!\times\! 10^{-2}$ & $1.06 \!\pm\! 0.06 \!\times\! 10^{0}$ & $2.85 \!\pm\! 0.20 \!\times\! 10^{-2}$ & $2.56 \!\pm\! 0.41 \!\times\! 10^{-4}$ & $2.67 \!\pm\! 0.03 \!\times\! 10^{-2}$ \\
    \midrule
    REACT-DBP & $4.98 \!\pm\! 0.61 \!\times\! 10^{-2}$ & $5.44 \!\pm\! 0.51 \!\times\! 10^{-4}$ & $5.57 \!\pm\! 0.52 \!\times\! 10^{-4}$ & $2.09 \!\pm\! 0.16 \!\times\! 10^{-2}$ & $9.76 \!\pm\! 0.00 \!\times\! 10^{-1}$ & $2.60 \!\pm\! 0.05 \!\times\! 10^{-2}$ & $8.87 \!\pm\! 5.42 \!\times\! 10^{-4}$ & $2.66 \!\pm\! 0.05 \!\times\! 10^{-2}$ \\
    REACT-VM ($\kappa\!=\!1.0$) & $3.20 \!\pm\! 1.08 \!\times\! 10^{-2}$ & $5.01 \!\pm\! 0.25 \!\times\! 10^{-4}$ & $5.13 \!\pm\! 0.25 \!\times\! 10^{-4}$ & $1.96 \!\pm\! 0.10 \!\times\! 10^{-2}$ & $9.77 \!\pm\! 0.00 \!\times\! 10^{-1}$ & $2.56 \!\pm\! 0.00 \!\times\! 10^{-2}$ & $2.69 \!\pm\! 0.16 \!\times\! 10^{-4}$ & $2.63 \!\pm\! 0.00 \!\times\! 10^{-2}$ \\
    REACT-VM ($\kappa\!=\!0.8$) & $2.66 \!\pm\! 0.67 \!\times\! 10^{-2}$ & $4.94 \!\pm\! 0.07 \!\times\! 10^{-4}$ & $5.06 \!\pm\! 0.07 \!\times\! 10^{-4}$ & $1.93 \!\pm\! 0.03 \!\times\! 10^{-2}$ & $9.77 \!\pm\! 0.00 \!\times\! 10^{-1}$ & $2.57 \!\pm\! 0.00 \!\times\! 10^{-2}$ & $2.57 \!\pm\! 0.18 \!\times\! 10^{-4}$ & $2.63 \!\pm\! 0.00 \!\times\! 10^{-2}$ \\
    \bottomrule
  \end{tabular}%
  }

  \vspace{4mm}

  \textbf{Rugged Müller--Brown potential}\\[2pt]
  \resizebox{\textwidth}{!}{%
  \begin{tabular}{l c c c c c c c c}
    \toprule
    Method & Mean $q$ error & $\nu_R$ & $k_{AB}$ & $k_{BA}$ & $p_A$ & $p_B$ & $Z_{AB}$ & $K_{\mathrm{eq}}$ \\
    \midrule
    FEM & --- & $8.43\!\times\!10^{-3}$ & $9.36\!\times\!10^{-3}$ & $8.54\!\times\!10^{-2}$ & $9.01\!\times\!10^{-1}$ & $9.88\!\times\!10^{-2}$ & $4.92\!\times\!10^{-3}$ & $1.10\!\times\!10^{-1}$ \\
    \midrule
    BKE & $5.55 \!\pm\! 1.77 \!\times\! 10^{-2}$ & $8.90 \!\pm\! 0.82 \!\times\! 10^{-3}$ & $9.88 \!\pm\! 0.91 \!\times\! 10^{-3}$ & $9.04 \!\pm\! 0.86 \!\times\! 10^{-2}$ & $9.01 \!\pm\! 0.00 \!\times\! 10^{-1}$ & $9.86 \!\pm\! 0.04 \!\times\! 10^{-2}$ & $5.03 \!\pm\! 0.33 \!\times\! 10^{-3}$ & $1.09 \!\pm\! 0.00 \!\times\! 10^{-1}$ \\
    BKE (IS) & $5.62 \!\pm\! 0.24 \!\times\! 10^{-2}$ & $6.83 \!\pm\! 0.47 \!\times\! 10^{-3}$ & $8.08 \!\pm\! 0.53 \!\times\! 10^{-3}$ & $4.42 \!\pm\! 0.37 \!\times\! 10^{-2}$ & $8.45 \!\pm\! 0.02 \!\times\! 10^{-1}$ & $1.55 \!\pm\! 0.02 \!\times\! 10^{-1}$ & $6.81 \!\pm\! 0.28 \!\times\! 10^{-2}$ & $1.83 \!\pm\! 0.03 \!\times\! 10^{-1}$ \\
    \midrule
    REACT-DBP & $8.28 \!\pm\! 0.15 \!\times\! 10^{-2}$ & $1.46 \!\pm\! 0.31 \!\times\! 10^{-2}$ & $1.62 \!\pm\! 0.35 \!\times\! 10^{-2}$ & $1.50 \!\pm\! 0.26 \!\times\! 10^{-1}$ & $9.03 \!\pm\! 0.04 \!\times\! 10^{-1}$ & $9.66 \!\pm\! 0.38 \!\times\! 10^{-2}$ & $7.41 \!\pm\! 3.97 \!\times\! 10^{-3}$ & $1.07 \!\pm\! 0.05 \!\times\! 10^{-1}$ \\
    REACT-VM ($\kappa\!=\!1.0$) & $1.29 \!\pm\! 0.06 \!\times\! 10^{-2}$ & $8.52 \!\pm\! 0.07 \!\times\! 10^{-3}$ & $9.45 \!\pm\! 0.07 \!\times\! 10^{-3}$ & $8.67 \!\pm\! 0.08 \!\times\! 10^{-2}$ & $9.02 \!\pm\! 0.00 \!\times\! 10^{-1}$ & $9.82 \!\pm\! 0.01 \!\times\! 10^{-2}$ & $5.99 \!\pm\! 0.08 \!\times\! 10^{-3}$ & $1.09 \!\pm\! 0.00 \!\times\! 10^{-1}$ \\
    REACT-VM ($\kappa\!=\!0.8$) & $1.82 \!\pm\! 0.48 \!\times\! 10^{-2}$ & $8.51 \!\pm\! 0.12 \!\times\! 10^{-3}$ & $9.43 \!\pm\! 0.13 \!\times\! 10^{-3}$ & $8.71 \!\pm\! 0.14 \!\times\! 10^{-2}$ & $9.02 \!\pm\! 0.00 \!\times\! 10^{-1}$ & $9.77 \!\pm\! 0.04 \!\times\! 10^{-2}$ & $5.82 \!\pm\! 0.14 \!\times\! 10^{-3}$ & $1.08 \!\pm\! 0.01 \!\times\! 10^{-1}$ \\
    REACT-VM ($\kappa\!=\!0.5$) & $1.84 \!\pm\! 0.28 \!\times\! 10^{-2}$ & $8.56 \!\pm\! 0.12 \!\times\! 10^{-3}$ & $9.49 \!\pm\! 0.13 \!\times\! 10^{-3}$ & $8.73 \!\pm\! 0.12 \!\times\! 10^{-2}$ & $9.02 \!\pm\! 0.00 \!\times\! 10^{-1}$ & $9.80 \!\pm\! 0.01 \!\times\! 10^{-2}$ & $5.49 \!\pm\! 0.16 \!\times\! 10^{-3}$ & $1.09 \!\pm\! 0.00 \!\times\! 10^{-1}$ \\
    REACT-VM ($\kappa\!=\!0.3$) & $1.61 \!\pm\! 0.28 \!\times\! 10^{-2}$ & $8.48 \!\pm\! 0.15 \!\times\! 10^{-3}$ & $9.40 \!\pm\! 0.17 \!\times\! 10^{-3}$ & $8.67 \!\pm\! 0.16 \!\times\! 10^{-2}$ & $9.02 \!\pm\! 0.00 \!\times\! 10^{-1}$ & $9.78 \!\pm\! 0.01 \!\times\! 10^{-2}$ & $5.18 \!\pm\! 0.13 \!\times\! 10^{-3}$ & $1.08 \!\pm\! 0.00 \!\times\! 10^{-1}$ \\
    \bottomrule
  \end{tabular}%
  }

\end{table*}

\begin{figure}[t]
\centering
\captionsetup[subfigure]{font=tiny,skip=0pt}
\begin{subfigure}{0.86\textwidth}
    \centering
    \includegraphics[width=\linewidth]{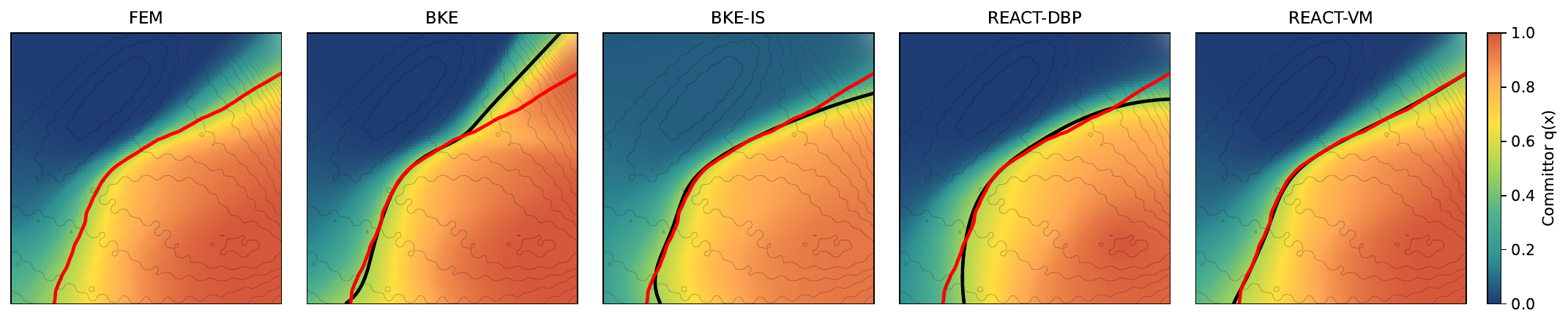}
    \caption{Committor estimates}
\end{subfigure}

\begin{subfigure}{0.24\textwidth}
    \centering
    \includegraphics[width=\linewidth]{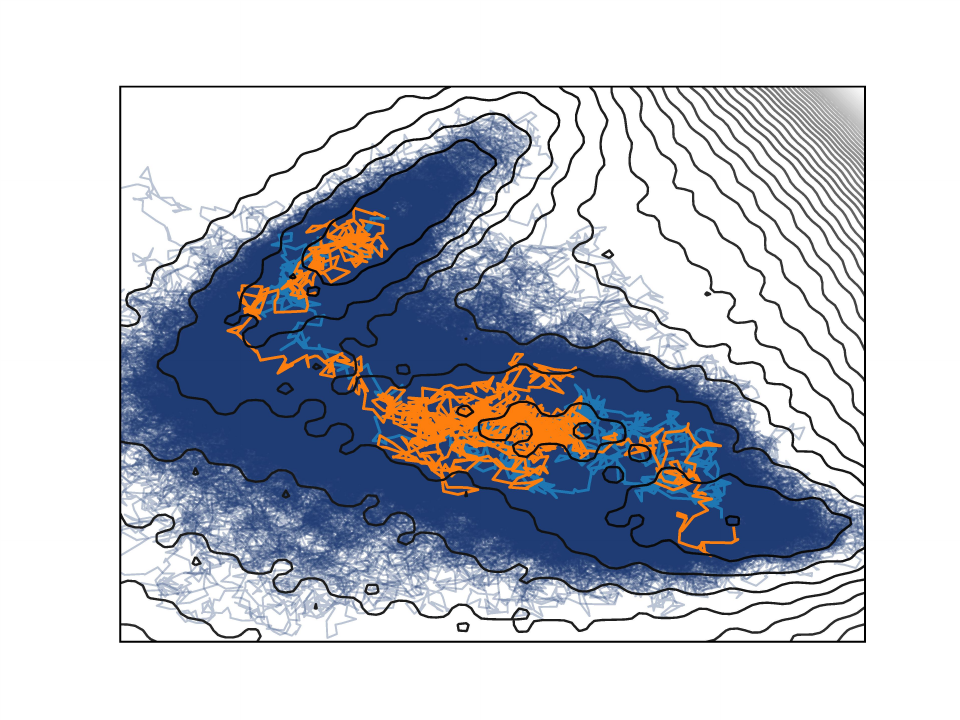}
    \caption{TPS paths}
\end{subfigure}
\hfill
\begin{subfigure}{0.24\textwidth}
    \centering
    \includegraphics[width=\linewidth]{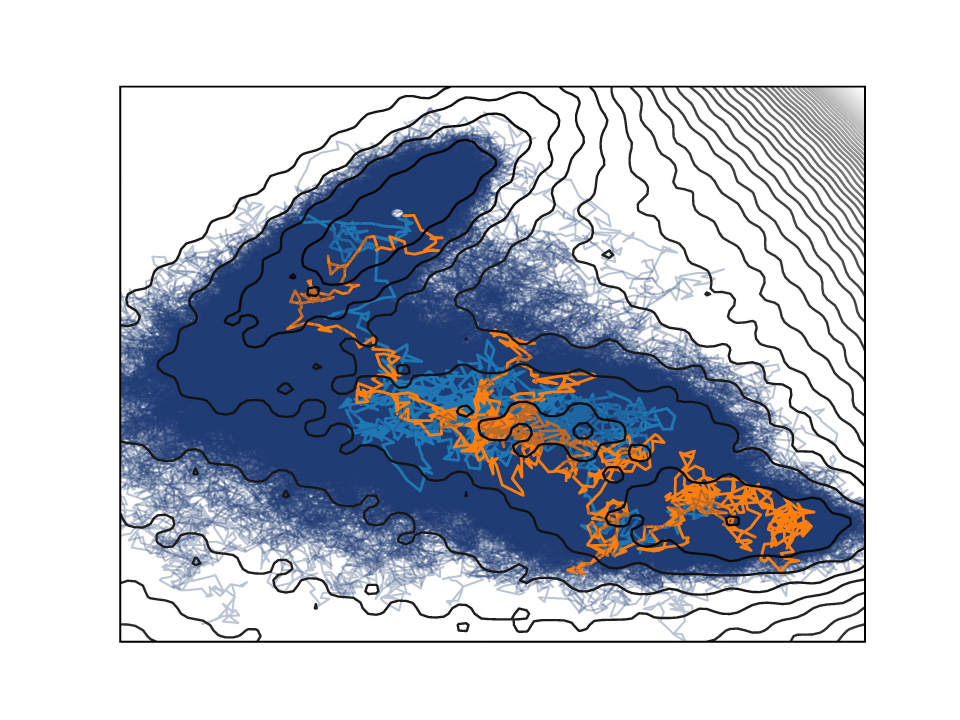}
    \caption{REACT-VM paths}
\end{subfigure}
\hfill
\begin{subfigure}{0.24\textwidth}
    \centering
    \includegraphics[width=\linewidth]{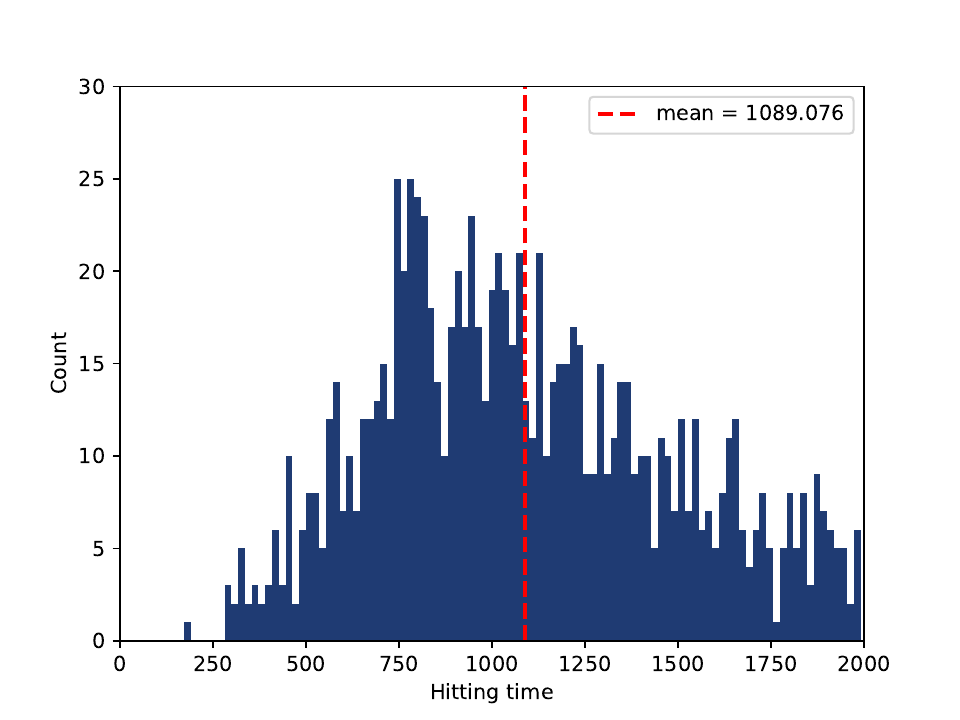}
    \caption{TPS hitting times}
\end{subfigure}
\hfill
\begin{subfigure}{0.24\textwidth}
    \centering
    \includegraphics[width=\linewidth]{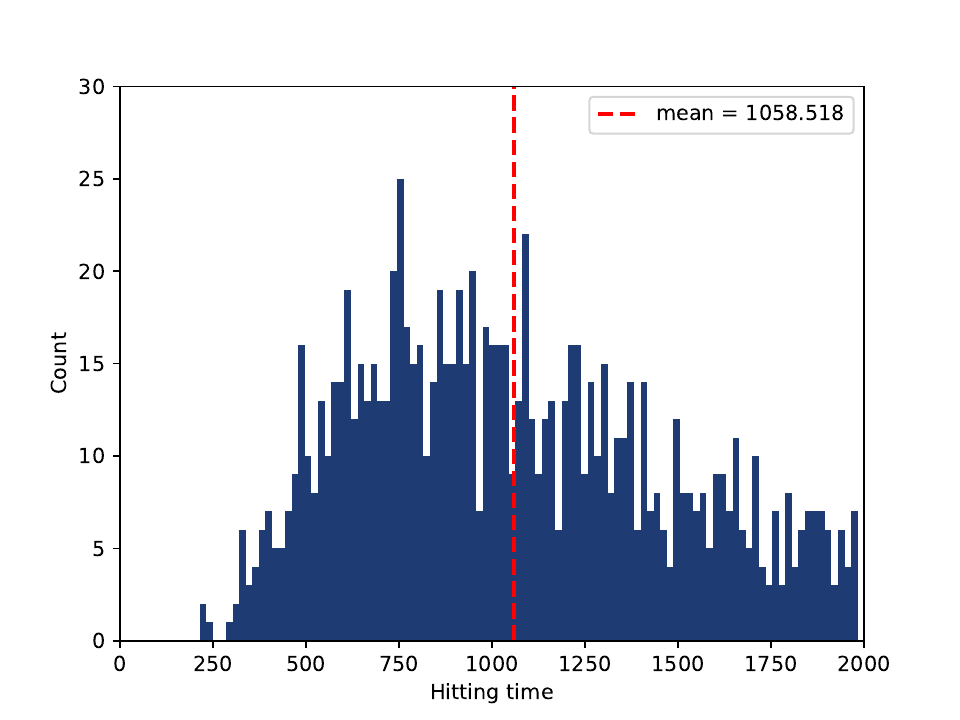}
    \caption{REACT-VM hitting times}
\end{subfigure}
\vspace{-2mm}
\caption{Rugged Müller--Brown results, shown as the most metastable overdamped benchmark. Red curves in (a) mark the finite-element $q=0.5$ transition surface; black curves mark the learned $q=0.5$ surface. Panels (b,c) compare transition-path ensembles sampled by TPS and by the learned REACT-VM control, while panels (d,e) compare the corresponding hitting-time distributions. Full overdamped committor, path, and flux are shown in~\Cref{fig:viz_committor_all,fig:viz_tps,fig:viz_flux}.}
\label{fig:viz_committor}
\end{figure}

\paragraph{Learning committor functions}
\label{sec:result_committor}

In~\Cref{tab:main_result}, we report the mean absolute error between the learned committor and the finite-element reference solution, evaluated over the full computational domain. REACT-VM achieves the lowest committor error on all three benchmarks. REACT-DBP improves over the BKE baselines on the triple-well and Müller--Brown systems, but its advantage disappears on the rugged Müller--Brown landscape, where additional metastable structure makes on-policy sampling substantially more difficult. This contrast highlights the practical value of the VM objective: by decoupling the learning objective from the sampling process, REACT-VM can exploit off-policy, annealed trajectories without sacrificing the optimality structure of the SOC formulation. Decreasing $\kappa$ improves committor learning on the triple-well and Müller--Brown systems by easing barrier crossing; on the rugged system, the $\kappa=1$ sampler gives the best pointwise committor error, while the annealed variants remain competitive. \Cref{fig:viz_committor} shows the hardest benchmark in the main text; full committor panels and error maps are provided in~\Cref{fig:viz_committor_all,fig:error_committor}.

\paragraph{Estimating reaction statistics}
\label{sec:result_rates}

Once the committor is learned, we estimate the reaction rate $\nu_R$, directional rate constants $k_{AB}$ and $k_{BA}$, basin weights $p_A$ and $p_B$, reactive weight $Z_{AB}$, and equilibrium constant $K_{\mathrm{eq}}$ using the TPT formulas introduced in~\Cref{subsec:committor_tpt} (Eqs.~\eqref{eq:reactive_density_bg},~\eqref{eq:reaction_rate_bg}--\eqref{eq:rate_constants_bg} and \eqref{eq:K_eq}). We evaluate these quantities by Monte Carlo integration with equilibrium samples; details are given in~\Cref{appendix:rate}.

The reaction statistics in~\Cref{tab:main_result} amplify the trends observed for the committor itself. Errors in the transition region can produce disproportionately large errors in rates and reactive weights, a sensitivity noted previously by~\citet{hasyim2022supervised}. This is visible for the BKE baselines, whose rate estimates can deviate by large factors even when some aggregate quantities remain accurate. By contrast, REACT-VM gives stable reaction statistics across all three systems: the relative errors in the main kinetic quantities remain below 10\%, and on the rugged Müller--Brown system they are close to 1\%.

\paragraph{Sampling transition paths}
\label{sec:result_paths}

Once the committor is learned, we sample transition paths by running the Doob's $h$-transform~\eqref{eq:doob_bg} of the reference dynamics~\eqref{eq:SDE_bg} with $\nabla\log q$ replaced by $\nabla\log\hat{q}$, which is the optimal importance-sampling strategy for generating $A\to B$ reactive trajectories. \Cref{fig:viz_committor} shows the corresponding TPS and REACT-VM path ensembles for the rugged Müller--Brown system; additional path comparisons, hitting-time distributions, and deterministic reactive flux visualizations for all overdamped systems are provided in~\Cref{fig:viz_tps,fig:viz_flux}. Overall, REACT-VM recovers comparable transition corridors and hitting-time distributions while avoiding an explicit MCMC procedure over path space, i.e. transition path sampling.

\subsubsection{Underdamped Langevin systems}

Many physical systems are naturally second order and are better modeled by underdamped rather than overdamped Langevin dynamics. We evaluate this setting on a one-dimensional double-well potential, where the phase-space committor can be computed by the finite-difference method.

\begin{figure}[t]
\centering

\begin{minipage}[c]{0.38\textwidth}
\centering
\captionof{table}{Quantitative evaluation of REACT on the 1D underdamped double-well system.}
\label{tab:react_underdamped_doublewell}
\begin{tabular}{lc}
\toprule
Method & Mean $q$ error \\
\midrule
REACT-DBP & $0.039 \pm 0.003$ \\
REACT-VM  & $0.022 \pm 0.001$ \\
\bottomrule
\end{tabular}
\end{minipage}
\hfill
\begin{minipage}[c]{0.60\textwidth}
\centering
\includegraphics[width=\linewidth]{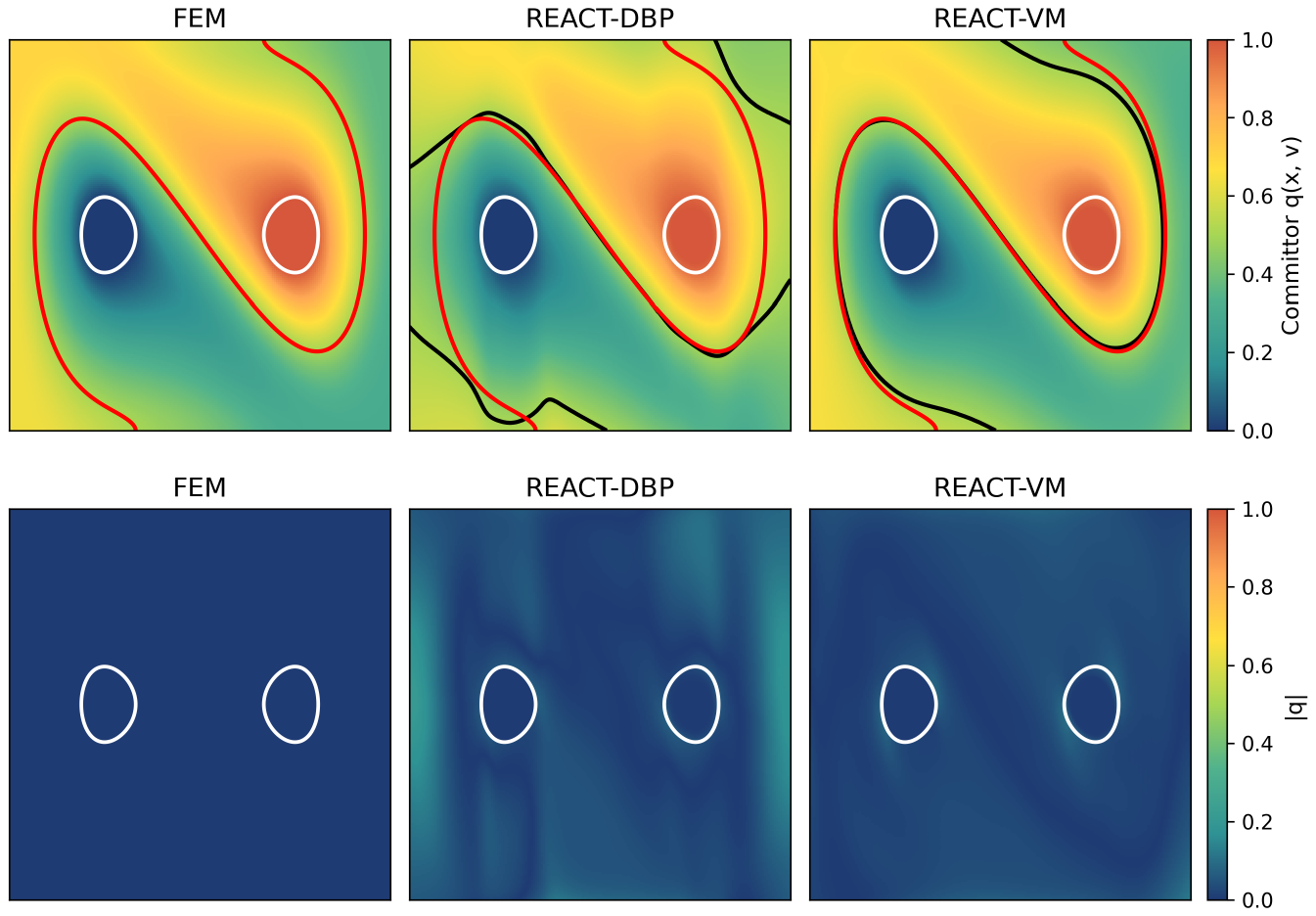}
\vspace{-6mm}
\caption{Ground-truth and learned committors on the 1D underdamped double-well system, with position $r$ on the horizontal axis and velocity $p$ on the vertical axis. The regions enclosed by the white curves denote the two target states.}
\label{fig:viz_underdamped_tps}
\end{minipage}

\end{figure}

\paragraph{1D double-well potential} As shown in~\Cref{tab:react_underdamped_doublewell}, both objectives learn accurate phase-space committors for the underdamped double-well system, with REACT-VM reducing the mean error from $0.039$ to $0.022$. This setting is more stringent than the overdamped benchmarks because the noise is degenerate: it acts directly on velocity, while position evolves through the deterministic transport term. Consequently, the committor depends not only on position but also on momentum direction; two states at the same position can have different probabilities of reaching the product well depending on whether their velocity points toward or away from the barrier. The visualization in~\Cref{fig:viz_underdamped_tps} reflects this phase-space asymmetry.

\subsection{Non-Reversible Dynamics}

Beyond detailed balance, many systems of interest exhibit stationary probability currents or explicitly time-dependent forcing. We therefore consider two non-reversible settings: the stationary Maier--Stein model, which has a non-zero steady current, and a periodically-driven triple-well system, whose committor is time dependent. Reference committors for both systems are computed using finite-difference methods.

\subsubsection{Stationary non-equilibrium dynamics: Maier--Stein system}

\begin{figure}
\begin{minipage}[c]{0.45\textwidth}
\centering
\captionof{table}{Quantitative evaluation of REACT on the Maier--Stein system.}
\label{tab:react_maier_stein}
\begin{tabular}{lcc}
\toprule
Method & Mean $q$ error & $q=0.5$ \\
\midrule
REACT-DBP & $0.024\pm 0.002$ & $0.498\pm0.003$ \\
REACT-VM  & $0.028\pm 0.008$ & $0.511\pm 0.013$ \\
\bottomrule
\end{tabular}
\end{minipage}
\hfill
\begin{minipage}[c]{0.48\textwidth}
\centering
\includegraphics[width=\linewidth]{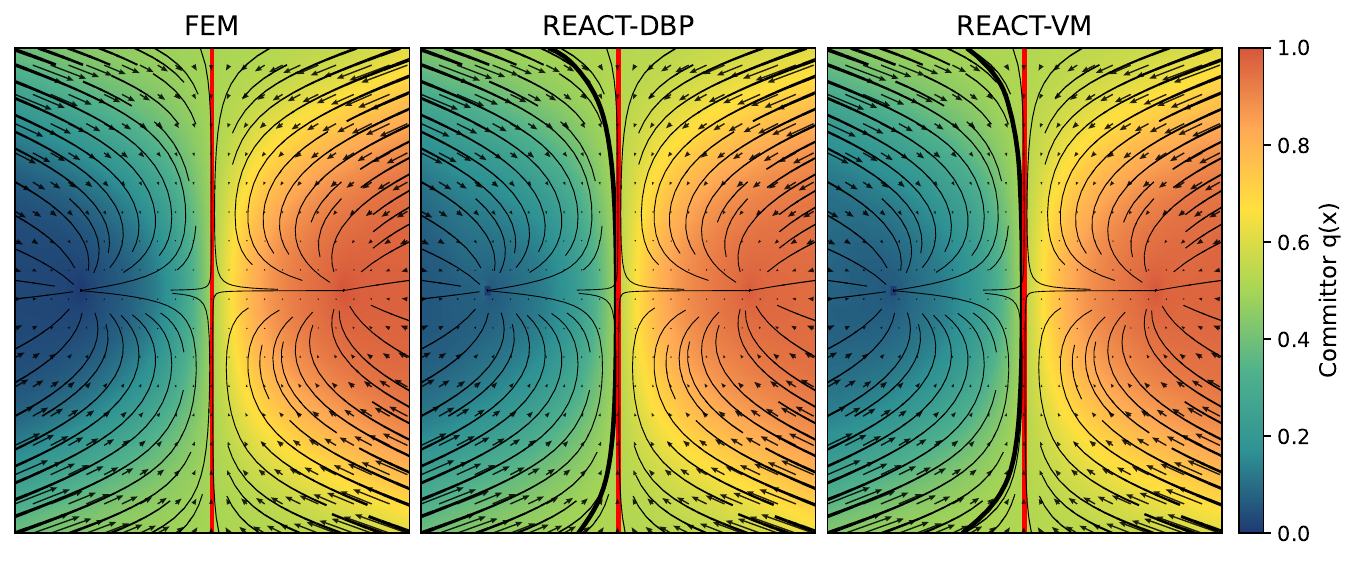}
\vspace{-6mm}
\caption{Ground-truth and learned committors on the Maier--Stein system, overlaid with the force field of the system.}
\label{fig:viz_maier_stein}
\end{minipage}
\end{figure}

\begin{figure}[t]
\centering
\begin{minipage}[c]{0.36\textwidth}
\centering
\captionof{table}{Quantitative evaluation of REACT on the periodically-driven triple-well system.}
\label{tab:react_results}
\scriptsize
\setlength{\tabcolsep}{2pt}
\begin{tabular}{lcc}
\toprule
Method & Mean $q$ error & $q=0.5$ \\
\midrule
REACT-DBP & $0.015 \pm 0.002$ & $0.497 \pm 0.023$ \\
REACT-VM & $0.012 \pm 0.008$ & $0.510 \pm 0.003$ \\
\bottomrule
\end{tabular}
\end{minipage}
\hfill
\begin{minipage}[c]{0.58\textwidth}
\centering
\captionsetup[subfigure]{font=footnotesize,skip=1pt}
\begin{subfigure}[c]{0.41\linewidth}
    \centering
    \includegraphics[width=\linewidth]{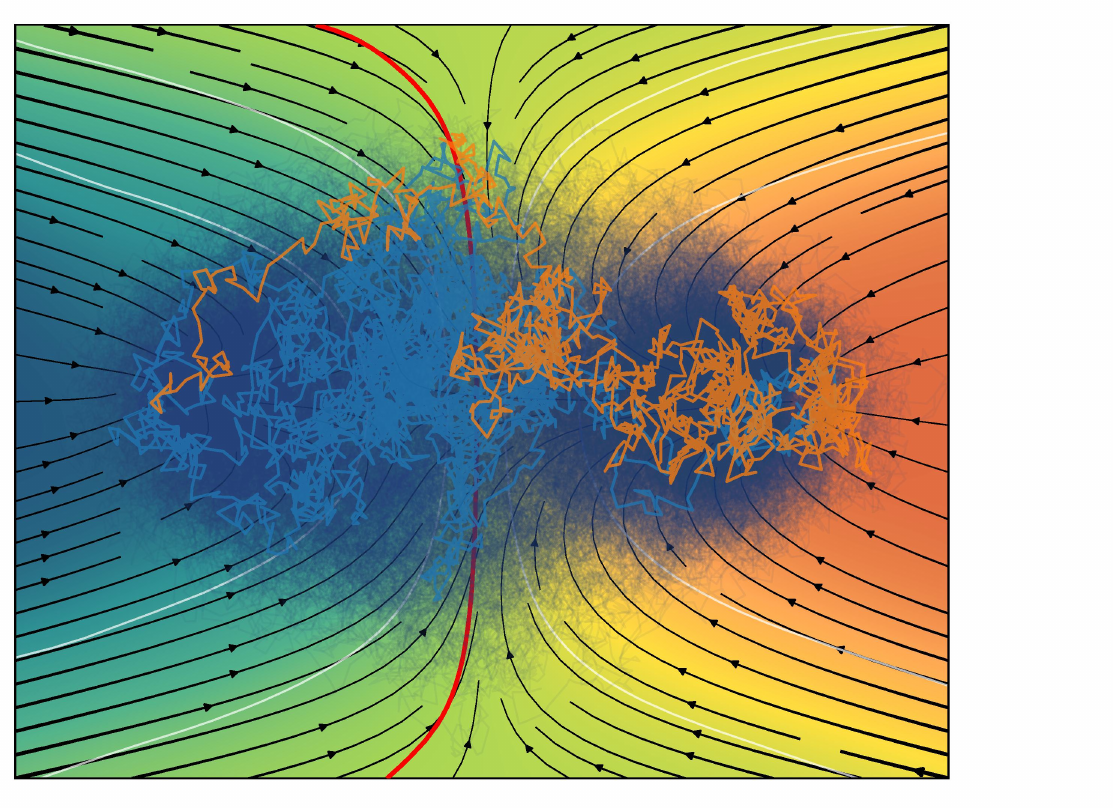}
    \caption{REACT-DBP}
\end{subfigure}
\hfill
\begin{subfigure}[c]{0.48\linewidth}
    \centering
    \includegraphics[width=\linewidth]{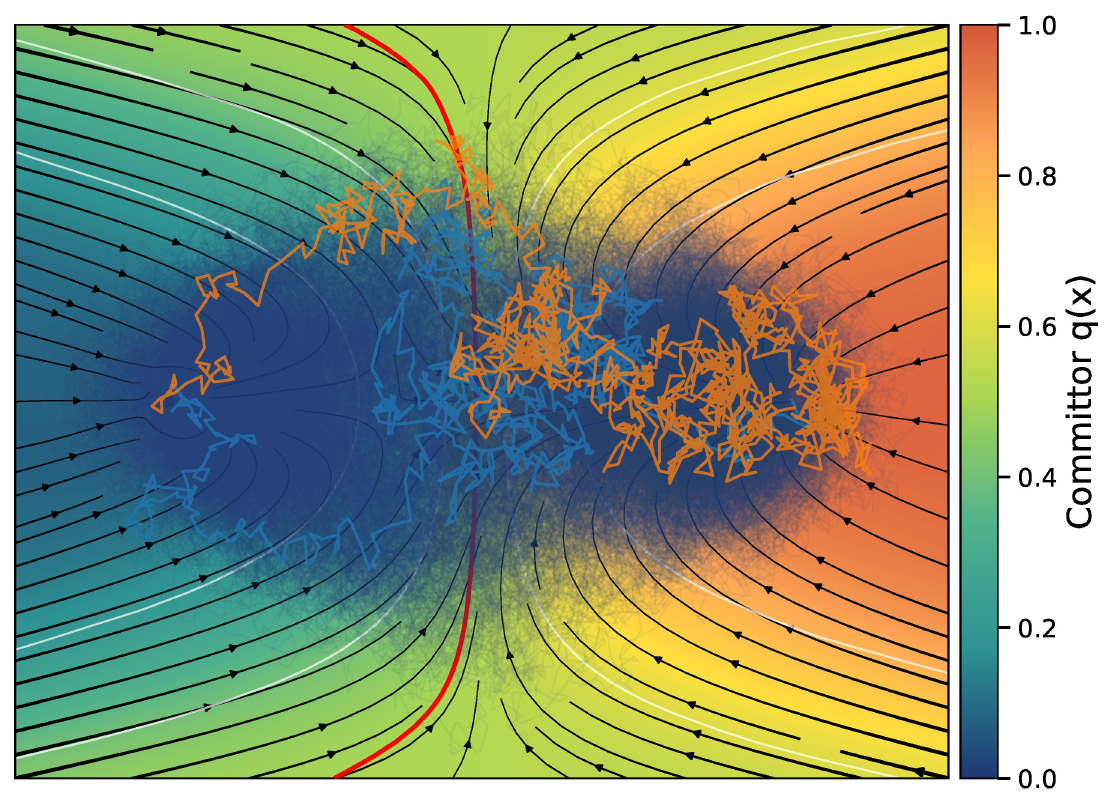}
    \caption{REACT-VM}
\end{subfigure}
\vspace{-3mm}
\caption{Transition path ensemble sampled by REACT for the Maier--Stein system, overlaid with the reactive force field.}
\label{fig:tp_maier}
\end{minipage}
\end{figure}

\Cref{tab:react_maier_stein,fig:viz_maier_stein} show that both objectives recover the Maier--Stein transition surface accurately despite the absence of detailed balance. This is a qualitatively different test from the reversible Langevin examples: the stationary current is nonzero, so the reactive flow is not simply a gradient-driven crossing of an energy barrier. REACT-DBP has a slightly lower mean committor error in this example, while REACT-VM remains close and gives a comparable $q=0.5$ surface. The small gap suggests that, for this relatively low-dimensional stationary non-equilibrium system, both the on-policy and off-policy objectives can capture the dominant transition geometry. The transition-path visualizations in~\Cref{fig:tp_maier} further show that the learned controls produce coherent reactive trajectories aligned with the non-equilibrium force field.

\subsubsection{Periodically-driven dynamics: triple-well system}

For the periodically-driven system, the committor is time dependent: the probability of reaching $B$ before $A$ depends on both the current state and the phase of the external forcing. We evaluate the learned committor at ten evenly spaced snapshots over one forcing period and report both the mean committor error and the learned $q=0.5$ transition surface in~\Cref{tab:react_results}. Both objectives remain accurate in this non-stationary setting, but REACT-VM gives the lowest mean error and a more stable estimate of the transition surface. This result is important because the forcing continuously deforms the reactive channel, so the method must learn a moving separatrix rather than a single stationary dividing surface. The visual comparison in~\Cref{fig:periodic_committor} shows that the learned committors track this phase-dependent deformation across the period, supporting the use of the VM formulation beyond stationary reversible dynamics.

\begin{figure}[t]
\centering
\includegraphics[width=0.9\textwidth]{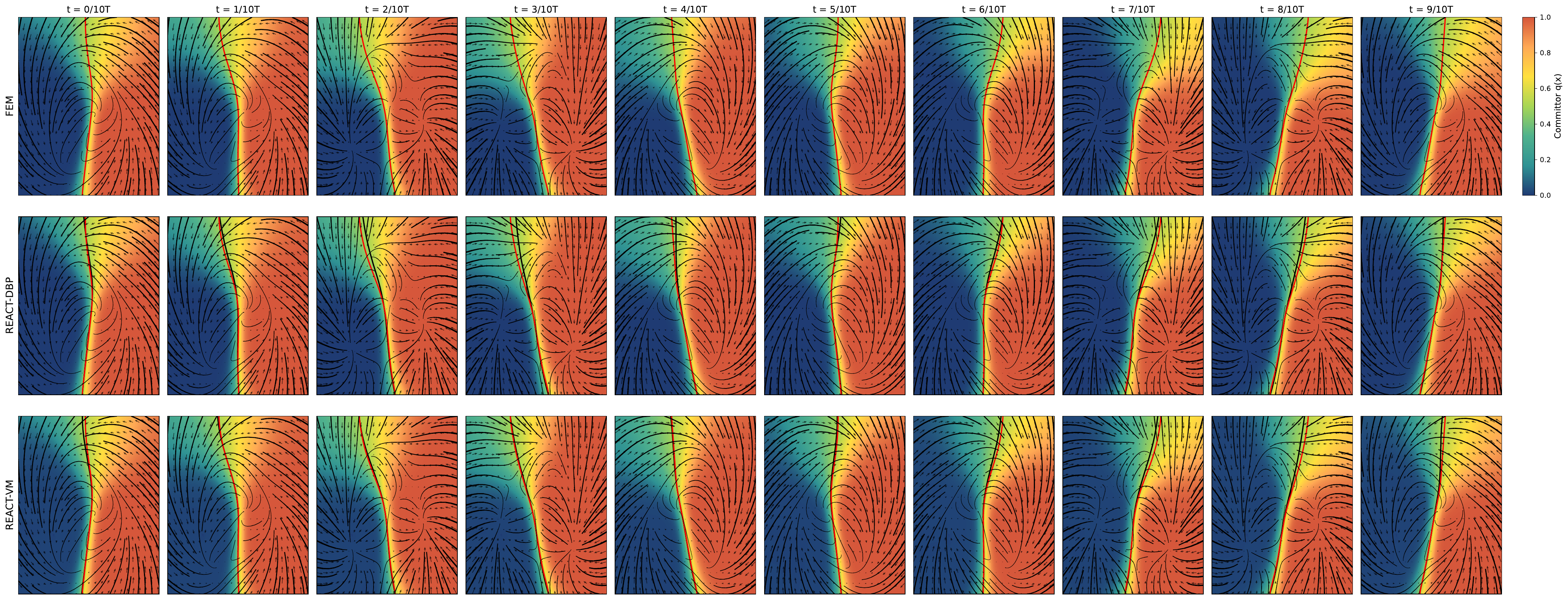}
\vspace{-3mm}
\caption{Ground-truth and learned committors on the periodically-driven triple-well system, overlaid with the time-dependent force field of the system.}
\label{fig:periodic_committor}
\end{figure}

\section{Conclusion}
\label{sec:conclusion}

We presented a unified framework for committor estimation based on stochastic optimal control. By linking the committor to a value function via the Cole--Hopf transform, we showed that the optimal feedback control takes the form of a Doob's $h$-transform, providing a principled mechanism for steering sampling toward the transition region. We developed two complementary training objectives---direct backpropagation (DBP) and Value Matching (VM)---offering different trade-offs between computational cost, theoretical guarantees, and flexibility in the choice of sampling process. For VM, we proved that first-order critical points recover the exact committor, both in the standard and capped-horizon settings. To mitigate metastability during training, we introduced annealed sampling dynamics that preserve the reactive current while lowering effective energy barriers, with the reactive density $\rho_R$ as stationary distribution. A boundary regularization ($\xi > 0$) eliminates the logarithmic singularities that would otherwise arise in the value function and control drifts.

The numerical results support this stochastic optimal control perspective across both (generalized) reversible and non-reversible dynamics. On the overdamped triple-well, Müller--Brown, and rugged Müller--Brown benchmarks, REACT-VM improves committor accuracy and yields more reliable reaction statistics than BKE-based baselines, especially in the transition region where rate estimates are most sensitive. The underdamped double-well experiment shows that the same framework extends to phase-space committors, where inertia and degenerate noise make the transition surface depend on both position and velocity. The Maier--Stein and periodically-driven triple-well experiments further demonstrate that the method is not tied to detailed balance: it can learn committors and sample reactive trajectories in the presence of stationary probability currents and explicitly time-dependent forcing.

Several directions remain open. The most important is scaling REACT to high-dimensional molecular systems, where rare-event analysis is most needed but where both committor representation and trajectory generation are substantially harder. This will likely require combining the SOC formulation with physically meaningful collective variables, symmetry-aware architectures, and sampling schemes that amortize the cost of long controlled trajectories. A second direction is to further develop the underdamped and non-reversible settings, where phase-space structure, degenerate diffusion, and time dependence introduce additional numerical and theoretical challenges. Finally, the learned committor and the associated reactive-current-preserving dynamics suggest a route toward adaptive enhanced sampling methods: using $\rho_R$ or the $\kappa$-rescaled dynamics~\eqref{eq:reactive_SDE_kappa_gen} as proposals could connect committor learning directly with practical umbrella sampling, replica exchange, or molecular simulation workflows.

\section*{Acknowledgments}
Y.D.\ acknowledges support from Cornell University. Y.D.\ thanks Carsten Hartmann and Roberto Car for the pointer to relevant work in committor estimation.
J.H.\ acknowledges support from the University of Cambridge Harding Distinguished Postgraduate Scholars Programme.

\bibliography{references}

@article{helfmann2020extending,
  title={Extending transition path theory: Periodically driven and finite-time dynamics},
  author={Helfmann, Luzie and Ribera Borrell, Enric and Sch{\"u}tte, Christof and Koltai, P{\'e}ter},
  journal={Journal of nonlinear science},
  volume={30},
  number={6},
  pages={3321--3366},
  year={2020},
  publisher={Springer}
}

@article{raissi2019physics,
  title={Physics-informed neural networks: A deep learning framework for solving forward and inverse problems involving nonlinear partial differential equations},
  author={Raissi, Maziar and Perdikaris, Paris and Karniadakis, George Em},
  journal={Journal of Computational Physics},
  volume={378},
  pages={686--707},
  year={2019},
  publisher={Elsevier}
}

@article{lu2015reactive,
  author  = {Lu, Jianfeng and Nolen, James},
  title   = {Reactive trajectories and the transition path process},
  journal = {Probability Theory and Related Fields},
  volume  = {161},
  pages   = {195--244},
  year    = {2015},
  doi     = {10.1007/s00440-014-0547-y},
  publisher = {Springer}
}

@article{cameron2014flows,
  author  = {Cameron, Maria and Vanden-Eijnden, Eric},
  title   = {Flows in complex networks: theory, algorithms, and application to {L}ennard--{J}ones cluster rearrangement},
  journal = {Journal of Statistical Physics},
  volume  = {156},
  number  = {3},
  pages   = {427--454},
  year    = {2014},
  publisher = {Springer}
}

@article{onsager1938initial,
  title={Initial recombination of ions},
  author={Onsager, Lars},
  journal={Physical Review},
  volume={54},
  number={8},
  pages={554},
  year={1938},
  publisher={APS}
}

@article{du1998transition,
  author  = {Du, R. and Pande, V. S. and Grosberg, A. Y. and Tanaka, T. and Shakhnovich, E. I.},
  title   = {On the transition coordinate for protein folding},
  journal = {J. Chem. Phys.},
  volume  = {108},
  pages   = {334--350},
  year    = {1998},
}

@article{kang2024computing,
  title={Computing the committor with the committor to study the transition state ensemble},
  author={Kang, Peilin and Trizio, Enrico and Parrinello, Michele},
  journal={Nature Computational Science},
  volume={4},
  number={6},
  pages={451--460},
  year={2024},
  publisher={Nature Publishing Group US New York}
}

@misc{baratta2023dolfinx,
  author       = {Baratta, Igor A and Dean, Joseph P and Dokken, J{\o}rgen S and Habera, Michal and HALE, Jack and Richardson, Chris N and Rognes, Marie E and Scroggs, Matthew W and Sime, Nathan and Wells, Garth N},
  title        = {DOLFINx: the next generation FEniCS problem solving environment},
  year         = {2024},
  publisher    = {Zenodo},
  doi          = {10.5281/zenodo.10447665},
  url          = {https://doi.org/10.5281/zenodo.10447665}
}

@article{vanden2010transition,
  title={Transition-path theory and path-finding algorithms for the study of rare events.},
  author={Vanden-Eijnden, Eric and E, Weinan},
  journal={Annual review of physical chemistry},
  volume={61},
  pages={391--420},
  year={2010}
}

@article{seifert2008stochastic,
  title={Stochastic thermodynamics: principles and perspectives},
  author={Seifert, Udo},
  journal={The European Physical Journal B},
  volume={64},
  pages={423--431},
  year={2008},
  publisher={Springer}
}

@article{singh2025variational,
  title={Variational path sampling of rare dynamical events},
  author={Singh, Aditya N and Das, Avishek and Limmer, David T},
  journal={Annual Review of Physical Chemistry},
  volume={76},
  year={2025},
  publisher={Annual Reviews}
}

@book{levenspiel1998chemical,
  title={Chemical reaction engineering},
  author={Levenspiel, Octave},
  year={1998},
  publisher={John wiley \& sons}
}

@article{vanden2006towards,
  title={Towards a theory of transition paths},
  author={Vanden-Eijnden, Eric and E, Weinan},
  journal={Journal of statistical physics},
  volume={123},
  number={3},
  pages={503--523},
  year={2006},
  publisher={Springer}
}

@article{strahan2023predicting,
  title={Predicting rare events using neural networks and short-trajectory data},
  author={Strahan, John and Finkel, Justin and Dinner, Aaron R and Weare, Jonathan},
  journal={Journal of computational physics},
  volume={488},
  pages={112152},
  year={2023},
  publisher={Elsevier}
}

@article{li2020solving,
  title={Solving for high dimensional committor functions using neural network with online approximation to derivatives},
  author={Li, Haoya and Khoo, Yuehaw and Ren, Yinuo and Ying, Lexing},
  journal={arXiv preprint arXiv:2012.06727},
  year={2020}
}

@article{chen2023committor,
  title={Committor functions via tensor networks},
  author={Chen, Yian and Hoskins, Jeremy and Khoo, Yuehaw and Lindsey, Michael},
  journal={Journal of Computational Physics},
  volume={472},
  pages={111646},
  year={2023},
  publisher={Elsevier}
}

@article{sun2022multitask,
  title={Multitask machine learning of collective variables for enhanced sampling of rare events},
  author={Sun, Lixin and Vandermause, Jonathan and Batzner, Simon and Xie, Yu and Clark, David and Chen, Wei and Kozinsky, Boris},
  journal={Journal of Chemical Theory and Computation},
  volume={18},
  number={4},
  pages={2341--2353},
  year={2022},
  publisher={ACS Publications}
}

@article{peters2006obtaining,
  title={Obtaining reaction coordinates by likelihood maximization},
  author={Peters, Baron and Trout, Bernhardt L},
  journal={The Journal of chemical physics},
  volume={125},
  number={5},
  year={2006},
  publisher={AIP Publishing}
}

@article{trizio2025everything,
  title={Everything everywhere all at once: a probability-based enhanced sampling approach to rare events},
  author={Trizio, Enrico and Kang, Peilin and Parrinello, Michele},
  journal={Nature Computational Science},
  pages={1--10},
  year={2025},
  publisher={Nature Publishing Group US New York}
}

@article{jung2019artificial,
  title={Artificial intelligence assists discovery of reaction coordinates and mechanisms from molecular dynamics simulations},
  author={Jung, Hendrik and Covino, Roberto and Hummer, Gerhard},
  journal={arXiv preprint arXiv:1901.04595},
  year={2019}
}

@article{jung2023machine,
  title={Machine-guided path sampling to discover mechanisms of molecular self-organization},
  author={Jung, Hendrik and Covino, Roberto and Arjun, A and Leitold, Christian and Dellago, Christoph and Bolhuis, Peter G and Hummer, Gerhard},
  journal={Nature Computational Science},
  volume={3},
  number={4},
  pages={334--345},
  year={2023},
  publisher={Nature Publishing Group US New York}
}

@article{hummer2004transition,
  title={From transition paths to transition states and rate coefficients},
  author={Hummer, Gerhard},
  journal={The Journal of chemical physics},
  volume={120},
  number={2},
  pages={516--523},
  year={2004},
  publisher={American Institute of Physics}
}

@inproceedings{li2022semigroup,
  title={A semigroup method for high dimensional committor functions based on neural network},
  author={Li, Haoya and Khoo, Yuehaw and Ren, Yinuo and Ying, Lexing},
  booktitle={Mathematical and Scientific Machine Learning},
  pages={598--618},
  year={2022},
  organization={PMLR}
}

@article{khoo2019solving,
  title={Solving for high-dimensional committor functions using artificial neural networks},
  author={Khoo, Yuehaw and Lu, Jianfeng and Ying, Lexing},
  journal={Research in the Mathematical Sciences},
  volume={6},
  pages={1--13},
  year={2019},
  publisher={Springer}
}

@article{li2019computing,
  title={Computing committor functions for the study of rare events using deep learning},
  author={Li, Qianxiao and Lin, Bo and Ren, Weiqing},
  journal={The Journal of Chemical Physics},
  volume={151},
  number={5},
  year={2019},
  publisher={AIP Publishing}
}

@article{wang2025estimating,
  title={Estimating Committor Functions via Deep Adaptive Sampling on Rare Transition Paths},
  author={Wang, Yueyang and Tang, Kejun and Wang, Xili and Wan, Xiaoliang and Ren, Weiqing and Yang, Chao},
  journal={arXiv preprint arXiv:2501.15522},
  year={2025}
}

@inproceedings{rotskoff2022active,
  title={Active importance sampling for variational objectives dominated by rare events: Consequences for optimization and generalization},
  author={Rotskoff, Grant M and Mitchell, Andrew R and Vanden-Eijnden, Eric},
  booktitle={Mathematical and Scientific Machine Learning},
  pages={757--780},
  year={2022},
  organization={PMLR}
}

@article{mitchell2024committor,
  title={Committor guided estimates of molecular transition rates},
  author={Mitchell, Andrew R and Rotskoff, Grant M},
  journal={Journal of Chemical Theory and Computation},
  volume={20},
  number={21},
  pages={9378--9393},
  year={2024},
  publisher={ACS Publications}
}

@article{lin2024deep,
  title={Deep learning method for computing committor functions with adaptive sampling},
  author={Lin, Bo and Ren, Weiqing},
  journal={arXiv preprint arXiv:2404.06206},
  year={2024}
}

@article{onsager1944crystal,
  title={Crystal statistics. I. A two-dimensional model with an order-disorder transition},
  author={Onsager, Lars},
  journal={Physical review},
  volume={65},
  number={3-4},
  pages={117},
  year={1944},
  publisher={APS}
}

@article{eyring1935activated,
  title={The activated complex in chemical reactions},
  author={Eyring, Henry},
  journal={The Journal of chemical physics},
  volume={3},
  number={2},
  pages={107--115},
  year={1935},
  publisher={American Institute of Physics}
}

@article{domingo2024taxonomy,
  title={A taxonomy of loss functions for stochastic optimal control},
  author={Domingo-Enrich, Carles},
  journal={arXiv preprint arXiv:2410.00345},
  year={2024}
}

@article{nusken2021solving,
  title={Solving high-dimensional Hamilton--Jacobi--Bellman PDEs using neural networks: perspectives from the theory of controlled diffusions and measures on path space},
  author={N{\"u}sken, Nikolas and Richter, Lorenz},
  journal={Partial differential equations and applications},
  volume={2},
  number={4},
  pages={48},
  year={2021},
  publisher={Springer}
}

@article{hua2024efficient,
  title={An Efficient On-Policy Deep Learning Framework for Stochastic Optimal Control},
  author={Hua, Mengjian and Lauri{\`e}re, Mathieu and Vanden-Eijnden, Eric},
  journal={arXiv preprint arXiv:2410.05163},
  year={2024}
}

@article{kappen2016adaptive,
  title={Adaptive importance sampling for control and inference},
  author={Kappen, Hilbert Johan and Ruiz, Hans Christian},
  journal={Journal of Statistical Physics},
  volume={162},
  number={5},
  pages={1244--1266},
  year={2016},
  publisher={Springer}
}

@inproceedings{domingoadjoint2025,
  title={Adjoint Matching: Fine-tuning Flow and Diffusion Generative Models with Memoryless Stochastic Optimal Control},
  author={Domingo-Enrich, Carles and Drozdzal, Michal and Karrer, Brian and Chen, Ricky TQ},
  booktitle={The Thirteenth International Conference on Learning Representations},
  year={2025}
}

@article{domingo2024stochastic,
  title={Stochastic optimal control matching},
  author={Domingo-Enrich, Carles and Han, Jiequn and Amos, Brandon and Bruna, Joan and Chen, Ricky TQ},
  journal={Advances in Neural Information Processing Systems},
  volume={37},
  pages={112459--112504},
  year={2024}
}

@article{yan2022learning,
  title={Learning nonequilibrium control forces to characterize dynamical phase transitions},
  author={Yan, Jiawei and Touchette, Hugo and Rotskoff, Grant M},
  journal={Physical Review E},
  volume={105},
  number={2},
  pages={024115},
  year={2022},
  publisher={APS}
}

@article{singh2024splitting,
  title={Splitting probabilities as optimal controllers of rare reactive events},
  author={Singh, Aditya N and Limmer, David T},
  journal={The Journal of Chemical Physics},
  volume={161},
  number={5},
  year={2024},
  publisher={AIP Publishing}
}

@article{singh2023variational,
  title={Variational deep learning of equilibrium transition path ensembles},
  author={Singh, Aditya N and Limmer, David T},
  journal={The Journal of Chemical Physics},
  volume={159},
  number={2},
  year={2023},
  publisher={AIP Publishing}
}

@article{hartmann2013characterization,
  title={Characterization of rare events in molecular dynamics},
  author={Hartmann, Carsten and Banisch, Ralf and Sarich, Marco and Badowski, Tomasz and Sch{\"u}tte, Christof},
  journal={Entropy},
  volume={16},
  number={1},
  pages={350--376},
  year={2013},
  publisher={Molecular Diversity Preservation International (MDPI)}
}

@article{du2024doob,
  title={Doob's Lagrangian: A Sample-Efficient Variational Approach to Transition Path Sampling},
  author={Du, Yuanqi and Plainer, Michael and Brekelmans, Rob and Duan, Chenru and Noe, Frank and Gomes, Carla P and Aspuru-Guzik, Alan and Neklyudov, Kirill},
  journal={Advances in Neural Information Processing Systems},
  volume={37},
  pages={65791--65822},
  year={2024}
}

@article{hartmann2012efficient,
  title={Efficient rare event simulation by optimal nonequilibrium forcing},
  author={Hartmann, Carsten and Sch{\"u}tte, Christof},
  journal={Journal of Statistical Mechanics: Theory and Experiment},
  volume={2012},
  number={11},
  pages={P11004},
  year={2012},
  publisher={IOP Publishing}
}

@article{holdijk2024stochastic,
  title={Stochastic Optimal Control for Collective Variable Free Sampling of Molecular Transition Paths},
  author={Holdijk, Lars and Du, Yuanqi and Hooft, Ferry and Jaini, Priyank and Ensing, Berend and Welling, Max},
  journal={Advances in Neural Information Processing Systems},
  volume={36},
  year={2023}
}

@article{bolhuis2002transition,
  title={Transition path sampling: Throwing ropes over rough mountain passes, in the dark},
  author={Bolhuis, Peter G and Chandler, David and Dellago, Christoph and Geissler, Phillip L},
  journal={Annual review of physical chemistry},
  volume={53},
  number={1},
  pages={291--318},
  year={2002},
  publisher={Annual Reviews 4139 El Camino Way, PO Box 10139, Palo Alto, CA 94303-0139, USA}
}

@article{arrhenius1889dissociationswarme,
  title={{\"U}ber die Dissociationsw{\"a}rme und den Einfluss der Temperatur auf den Dissociationsgrad der Elektrolyte},
  author={Arrhenius, Svante},
  journal={Zeitschrift f{\"u}r physikalische Chemie},
  volume={4},
  number={1},
  pages={96--116},
  year={1889},
  publisher={De Gruyter Oldenbourg}
}

@article{dellago1998transition,
  title={Transition path sampling and the calculation of rate constants},
  author={Dellago, Christoph and Bolhuis, Peter G and Csajka, F{\'e}lix S and Chandler, David},
  journal={The Journal of chemical physics},
  volume={108},
  number={5},
  pages={1964--1977},
  year={1998},
  publisher={American Institute of Physics}
}

@article{hasyim2022supervised,
  title={Supervised learning and the finite-temperature string method for computing committor functions and reaction rates},
  author={Hasyim, Muhammad R and Batton, Clay H and Mandadapu, Kranthi K},
  journal={The Journal of Chemical Physics},
  volume={157},
  number={18},
  year={2022},
  publisher={AIP Publishing}
}

@article{weinan2017deep,
  title={Deep learning-based numerical methods for high-dimensional parabolic partial differential equations and backward stochastic differential equations},
  author={E, Weinan and Han, Jiequn and Jentzen, Arnulf},
  journal={Communications in Mathematics and Statistics},
  volume={5},
  number={4},
  pages={349--380},
  year={2017},
  publisher={Springer}
}

@article{han2018solving,
  title={Solving high-dimensional partial differential equations using deep learning},
  author={Han, Jiequn and Jentzen, Arnulf and E, W.},
  journal={Proceedings of the National Academy of Sciences},
  volume={115},
  number={34},
  pages={8505--8510},
  year={2018},
  publisher={National Acad Sciences}
}

@inproceedings{stroock1972onthesupport,
  title={On the Support of Diffusion Processes with Applications to the Strong Maximum Principle},
  author={Daniel W. Stroock and S. R. S. Varadhan},
  year={1972},
  journal={Berkeley Symp. on Math. Statist. and Prob.},
  volume={3},
  pages={333--359},
}

@book{stroock1997multidimensional,
  title={Multidimensional Diffusion Processes},
  author={Stroock, D.W. and Varadhan, S.R.S.},
  series={Grundlehren der mathematischen Wissenschaften},
  year={1997},
  publisher={Springer Berlin Heidelberg}
}

@book{ikeda2014stochastic,
  title={Stochastic Differential Equations and Diffusion Processes},
  author={Ikeda, N. and Watanabe, S.},
  series={North-Holland Mathematical Library},
  year={2014},
  publisher={North Holland}
}

@inproceedings{pidstrigach2025conditioning,
title={Conditioning Diffusions Using Malliavin Calculus},
author={Jakiw Pidstrigach and Elizabeth Louise Baker and Carles Domingo-Enrich and George Deligiannidis and Nikolas N{\"u}sken},
booktitle={Forty-second International Conference on Machine Learning},
year={2025},
}

@article{haarnoja2018soft,
  title={Soft Actor-Critic: Off-Policy Maximum Entropy Deep Reinforcement Learning with a Stochastic Actor}, 
  author={Tuomas Haarnoja and Aurick Zhou and Pieter Abbeel and Sergey Levine},
  journal={arXiv preprint arXiv:1801.01290},
  year={2018}
}

@book{pham2009continuous,
  title={Continuous-time stochastic control and optimization with financial applications},
  author={Pham, Huy{\^e}n},
  volume={61},
  year={2009},
  publisher={Springer Science \& Business Media}
}

@article{hartmann2019variational,
    author = {Hartmann, Carsten and Kebiri, Omar and Neureither, Lara and Richter, Lorenz},
    title = {Variational approach to rare event simulation using least-squares regression},
    journal = {Chaos: An Interdisciplinary Journal of Nonlinear Science},
    volume = {29},
    number = {6},
    pages = {063107},
    year = {2019},
    month = {06},
}

@article{zhou2025solving,
      title={Solving Time-Continuous Stochastic Optimal Control Problems: Algorithm Design and Convergence Analysis of Actor-Critic Flow}, 
      author={Mo Zhou and Jianfeng Lu},
      year={2025},
      journal={arXiv preprint arXiv:2402.17208},
}

@article{ribera2025random_horizons,
  title={Reinforcement Learning with Random Time Horizons},
  author={Ribera Borrell, Enric and Richter, Lorenz and Sch{\"u}tte, Christof},
  journal={arXiv preprint arXiv:2506.00962},
  year={2025}
}

@article{nusken2023interpolating,
author = {Nüsken, Nikolas and Richter, Lorenz},
year = {2023},
month = {06},
pages = {31-64},
title = {Interpolating Between BSDEs and PINNs: Deep Learning for Elliptic and Parabolic Boundary Value Problems},
volume = {2},
journal = {Journal of Machine Learning},
}

@inproceedings{zhu2025mdns,
  author = {Zhu, Yuchen and Guo, Wei and Choi, Jaemoo and Liu, Guan-Horng and Chen, Yongxin and Tao, Molei},
  title = {{MDNS}: Masked Diffusion Neural Sampler via Stochastic Optimal Control},
  booktitle = {Advances in Neural Information Processing Systems},
  year = {2025},
  note = {arXiv:2508.10684},
}

\appendix
\clearpage
\appendix
\onecolumn
\vspace*{1pt}

\vbox{%
    \hsize\textwidth
    \linewidth\hsize
    \vskip 0.1in
    \hrule height 4pt
  \vskip 0.25in
  \vskip -\parskip%
    \centering
    {\LARGE\bf 
    {Rare Event Analysis via Stochastic Optimal Control \\Appendix}
    \par}
     \vskip 0.29in
  \vskip -\parskip
  \hrule height 1pt
  \vskip 0.09in%
  }

\startcontents[sections]
\printcontents[sections]{l}{1}{\setcounter{tocdepth}{3}}

\makeatletter
\renewcommand{\p@subsection}{} 
\makeatother

\section{Learning committor functions for overdamped and underdamped Langevin} \label{sec:overdamped_underdamped_langevin}

\subsection{The committor problems for state-dependent diffusions} \label{eq:committor_state_dependent}

Consider the diffusion process with state-dependent volatility coefficient
\begin{talign}\label{eq:SDE_bg_state_dependent}
\mathrm{d}X_t = b(X_t)\,\mathrm{d}t + \sigma(X_t) \,\mathrm{d}W_t,
\end{talign}
with generator $\mathcal{L} = b\cdot\nabla + D:\nabla^2$, where $D = \tfrac{1}{2}\sigma\sigma^{\top}$ is the diffusion tensor, and $D:\nabla^2 \triangleq \sum_{i,j=1}^d D_{ij} \partial_{ij}$. The Fokker-Planck equation corresponding to \eqref{eq:SDE_bg_state_dependent} is 
\begin{talign} \label{eq:FPE_state_dependent}
    \partial_t \rho_t = - \nabla \cdot (b \rho_t) + \nabla^2 : (D \rho_t), 
\end{talign}
where 
\begin{talign} \label{eq:nabla2_D_rho}
    \nabla^2 \!:\!(D \rho_t) \triangleq \nabla \cdot \big( \nabla \cdot (D \rho_t) \big) = \sum_{i,j=1}^{d} \frac{\partial^2}{\partial x_i \partial x_j} \big( D_{ij} \rho_t \big), \qquad \nabla \cdot (D \rho_t) \triangleq [\sum_{i=1}^{d} \partial_i (D \rho_t)_{ij} ]_{j=1}^{d}.
\end{talign}
The associated current/flux $j_t$, which lets us write the Fokker-Planck equation compactly as $\partial_t \rho_t = - \nabla \cdot j_t$, takes the form
\begin{talign} \label{eq:flux_def}
    j_t = b \rho_t - \nabla \cdot \big( D \rho_t \big).
\end{talign}
By the divergence theorem, for any region $\Omega$ with outward normal $n$, the probability mass exiting $\Omega$ per unit time can be written as
\begin{talign*}
    \frac{\mathrm{d}}{\mathrm{d}t} \int_{\Omega} \rho_t = - \int_{\partial \Omega} \langle j_t, \mathrm{d}n \rangle.
\end{talign*}
We assume the process \eqref{eq:SDE_bg_state_dependent} is ergodic with stationary density~$\rho$ satisfying the stationary Fokker--Planck equation
\begin{talign}\label{eq:FPE_bg_state_dependent}
\nabla \cdot (b\rho - \nabla\cdot (D \rho)) = 0.
\end{talign}
The time-reversed process has generator $\tilde{\mathcal{L}} = \tilde b\cdot\nabla + D:\nabla^2$ with drift 
\begin{talign} \label{eq:reverse_drift}
\tilde b = -b + 2\rho^{-1}D\nabla\rho + 2 \nabla \cdot D. 
\end{talign}
We can derive this by expanding $\nabla^2 : (D \rho_t) = \nabla \cdot (D \nabla \rho_t + \nabla \cdot D \rho_t)$ and rewriting the Fokker-Planck equation \eqref{eq:FPE_state_dependent} as
\begin{talign*}
    \partial_t \rho_t = \nabla \cdot \big((-b + 2\rho^{-1}D\nabla\rho_t + 2 \nabla \cdot D) \rho_t \big) - \nabla^2 : (D \rho_t),
\end{talign*}
When detailed balance holds ($b\rho = D\nabla\rho$), the process is reversible and $\tilde{\mathcal{L}} = \mathcal{L}$.

The forward and reverse (rescaled) committor problems for state-dependent diffusion coefficients are defined as in equation \eqref{eq:kbe_xi}.

\subsection{State-dependent overdamped Langevin} \label{eq:subsection_overdamped}
The overdamped 
Langevin SDEs presented in equation \eqref{eq:SDE_rev} 
can be generalized to the case in which the diffusion coefficient $\sigma$ is state-dependent and time-dependent. For simplicity, we omit the time dependency, as it does not change the equations. The state-dependent overdamped Langevin dynamics reads:
\begin{talign} \label{eq:reference_anisotropic}
    \mathrm{d}X_t = \big( - \beta D(X_t) \nabla U(X_t) + \nabla \cdot D(X_t) \big) \mathrm{d}t + \sigma(X_t) \mathrm{d}W_t, 
\end{talign}
where $D(x) = \frac{1}{2} \sigma \sigma^{\top}(x)$, and $\nabla \cdot D(x)$ is defined as in \eqref{eq:nabla2_D_rho}. That is,
\begin{talign} \label{eq:b_reference_anisotropic}
b(x) = - \beta D(x) \nabla U(x) + \nabla \cdot D(x).
\end{talign}
Note that setting $\sigma(x) = \sqrt{2 \beta^{-1}} \mathrm{I}$ we recover \eqref{eq:SDE_rev}.

\begin{lemma}[Stationary distribution and flux for overdamped Langevin] \label{lem:overdamped_langevin}
   The stationary distribution for \eqref{eq:reference_anisotropic} is $\rho \propto \exp(-\beta U)$, and it satisfies detailed balance, i.e. the flux $J$ corresponding to $\rho$ is zero. 
\end{lemma}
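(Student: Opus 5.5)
The plan is to compute the flux \eqref{eq:flux_def} directly for the candidate density $\rho = Z^{-1} e^{-\beta U}$ and show that it vanishes identically. Stationarity then follows at once from \eqref{eq:FPE_bg_state_dependent}: its left-hand side is $\nabla\cdot J = \nabla\cdot 0 = 0$. Uniqueness of the stationary density is not something to prove here; it is part of the standing ergodicity assumption made at the beginning of \Cref{eq:committor_state_dependent}. So the content of the lemma reduces to a single algebraic identity.

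The key step is to expand the divergence term using the convention in \eqref{eq:nabla2_D_rho} and the symmetry of $D$:
\begin{equation*}
\nabla\cdot(D\rho) = (\nabla\cdot D)\,\rho + D\nabla\rho .
\end{equation*}
For $\rho \propto e^{-\beta U}$ we have $\nabla\rho = -\beta\rho\nabla U$. Substituting the drift \eqref{eq:b_reference_anisotropic} gives
\begin{equation*}
J = b\rho - \nabla\cdot(D\rho) = \big(-\beta D\nabla U + \nabla\cdot D\big)\rho - (\nabla\cdot D)\rho + \beta\rho D\nabla U = 0 .
\end{equation*}
Hence $J \equiv 0$, which is detailed balance in the state-dependent sense. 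With $J = 0$, the reverse drift \eqref{eq:reverse_drift} reduces to $\tilde b = b$, because $2\rho^{-1}D\nabla\rho + 2\nabla\cdot D = 2b$. This confirms that $\tilde{\mathcal L} = \mathcal L$.

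The only point needing care, which I expect to be the main obstacle, is the index convention for $\nabla\cdot(D\rho)$. The paper sums over the first index, $\sum_i \partial_i (D\rho)_{ij}$, whereas the product-rule term $D\nabla\rho$ naturally has components $\sum_i D_{ji}\partial_i\rho$. These agree because $D = \tfrac12\sigma\sigma^\top$ is symmetric. The same symmetry makes the $\nabla\cdot D$ in the drift match the one produced by the product rule, so the two $\nabla\cdot D$ terms cancel exactly.

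I will also note two integrability requirements: $e^{-\beta U}$ must be integrable for $\rho$ to be normalizable, and $\sigma$ must be $C^1$ for $\nabla\cdot D$ to make sense. Both are implicit in the setting.
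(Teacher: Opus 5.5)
Your proof is correct and follows the paper's own argument. The paper likewise substitutes $\rho\propto e^{-\beta U}$ into the flux \eqref{eq:flux_def}, states that $J=(-\beta D\nabla U+\nabla\cdot D)\rho-\nabla\cdot(D\rho)=0$, and reads off stationarity from $\nabla\cdot J=0$; your product-rule expansion $\nabla\cdot(D\rho)=(\nabla\cdot D)\rho+D\nabla\rho$, which uses the symmetry of $D$, simply spells out the one-line cancellation the paper leaves implicit.
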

\begin{proof}
Applying \eqref{eq:flux_def}, the flux for $\rho \propto \exp(-\beta U)$ reads
\begin{talign*}
    J = (- \beta D \nabla U + \nabla \cdot D) \rho - \nabla \cdot \big( D \rho \big) = 0,
\end{talign*}
which directly implies that $\rho$ solves the stationary Fokker--Planck equation for \eqref{eq:reference_anisotropic}.
\end{proof}

\begin{remark} \label{eq:detailed_SOL}
Rearranging the proof of \Cref{lem:overdamped_langevin}, we see that requiring that $\rho$ satisfies detailed balance for the Fokker-Planck equation \eqref{eq:FPE_state_dependent} is equivalent to imposing that the dynamics is state-dependent overdamped Langevin.
\end{remark}

Applying \eqref{eq:reverse_drift}, the drift of the time-reversed process for the stationary density $\rho$ reads
\begin{talign} \label{eq:tilde_b_equals_b}
    \tilde{b}(x) = \beta D(X_t) \nabla U(X_t) - \nabla \cdot D(X_t) - \beta D(X_t) \nabla U(X_t) + 2 \nabla \cdot D(X_t) = b(x),
\end{talign}
which means that the backward SDE matches the forward SDE, and that the forward and reverse generators $\mathcal{L}$, $\tilde{\mathcal{L}}$ are equal. Plugging this into equation \eqref{eq:kbe_xi}, we obtain that the forward and reverse committor functions are related by
\begin{talign} \label{eq:tilde_q_equals_q}
    \tilde{q}(x) = 1 - q(x).
\end{talign}
Using the relations \eqref{eq:tilde_b_equals_b} and \eqref{eq:tilde_q_equals_q}, for overdamped Langevin, the (rescaled) reactive SDE defined in equations \eqref{eq:rescaled_reactive_SDE_kappa} and \eqref{eq:v_rescaled_reactive} simplifies to 
\begin{talign}
\begin{split}
\label{eq:rescaled_reactive_SDE_kappa_app}
dX_t &= b_\kappa(X_t)\,dt + \sqrt{\kappa}\,\sigma(X_t)\,dW_t, \\
\text{where } b_\kappa &= 
\kappa b + 2D\nabla\log q_{\xi} = \kappa \big( - \beta D \nabla U + \nabla \cdot D \big) + 2D\nabla\log q_{\xi}.
\end{split}
\end{talign}
And the drift $v$ and the Value Matching loss defined in equations \eqref{eq:approximate_v}-\eqref{eq:final_vm} (see \Cref{cor:particular_VM} for the derivation for state-dependent $\sigma$) simplify to
\begin{talign} \label{eq:approximate_v_app_overdamped}
    v(x) = 
    \kappa \big( - \beta D(x) \nabla U(x) + \nabla \cdot D(x) \big) - 2D(x)\nabla\bar{\phi}(x).
\end{talign}
and
\begin{talign}
\label{eq:final_vm_app_overdamped}
\begin{split}
    &\mathcal{L}_{\mathrm{VM}_{v,\kappa}}(\phi) \\ &= \mathbb{E} \Bigl[ \tfrac{1}{2} \Bigl( \int_0^{\tau(T)} \big\langle \nabla \phi(X^{v,\kappa}_t), \tfrac{1}{\sqrt{\kappa}} \sigma(X^{v,\kappa}_t) \, \mathrm{d}W_t
    \\ &\qquad\qquad\qquad\qquad + D \big(
    \nabla \phi(X^{v,\kappa}_t) \!-\!(\tfrac{1}{\kappa}\!+\!1)\!\nabla \bar{\phi}(X^{v,\kappa}_t)
    \!- \!(\tfrac{1}{\kappa}\!-\!1) \nabla \log (1\!-\!e^{-\bar{\phi}(X^{v,\kappa}_t)})
    \big) \, \mathrm{d}t
    \big\rangle \\ &\qquad\quad + \! \tfrac{1}{\kappa} \phi(X^{v,\kappa}_0) 
    \! + \! (\mathrm{1}_{\mathcal{S}}(X^{v,\kappa}_{\tau(T)}) \! - \! \tfrac{1}{\kappa})
    \phi(X^{v,\kappa}_{\tau(T)})
    \! - \! \mathrm{1}_{\mathcal{S}}(X^{v,\kappa}_{\tau(T)}) g(X^{v,\kappa}_{\tau(T)})
    \Bigr)^2 \Bigr].
\end{split}
\end{talign}

\subsection{State-dependent underdamped Langevin} \label{eq:subsection_underdamped}
A generalized version of the underdamped Langevin SDE \eqref{eq:SDE_underdamped}
reads:
\begin{talign} 
\begin{split} \label{eq:state_dependent_underdamped}
\begin{cases}
\mathrm{d}R_t = \frac{\beta}{\gamma} \tilde{D}(R_t) M^{-1}P_t\,\mathrm{d}t,\\
\mathrm{d}P_t =  \big( \! - \!\frac{\beta}{\gamma} \tilde{D}(R_t) \nabla U(R_t) \! + \! \frac{1}{\gamma} \nabla \cdot \tilde{D}(R_t) \! - \! \beta D(R_t, P_t) M^{-1}P_t \! - \! \nabla_{p} \cdot D(R_t, P_t) \big) \,\mathrm{d}t
\! + \! \sigma(R_t, P_t) \,\mathrm{d}W_t.
\end{cases}
\end{split}
\end{talign}
Here, 
\begin{itemize}
\item $D = \frac{1}{2} \sigma \sigma^{\top} \in \mathbb{R}^{3N \times 3N}$. We assume that $\sigma(r,p) = \sigma(r,-p)$, and consequently the analogous equality holds for $D$,
\item the preconditioner matrix $\tilde{D} \in \mathbb{R}^{3N \times 3N}$ is another symmetric matrix that may or may not be equal to $D$, 
\item the mass matrix $M \in \mathbb{R}^{3N \times 3N}$ is symmetric and positive definite, 
\item and the scalar $\lambda$ is the friction coefficient.
\end{itemize}
If we set $\sigma(r,p) = \sqrt{2 \gamma \beta^{-1}} \mathrm{I}$, $\tilde{D}(r) = D(r,p) = \gamma \beta^{-1} \mathrm{I}$, $M = \mathrm{I}$ we recover the SDE \eqref{eq:SDE_underdamped}.

The density \(\rho_t(r,p)\) of the joint random variable $(R_t,P_t)$ obeys the Fokker-Planck equation
\begin{talign} \label{eq:fp_underdamped}
\begin{split}
\partial_t \rho_t &= -\nabla_{r}\!\cdot\bigl(\frac{\beta}{\gamma} \tilde{D} M^{-1}p\,\rho_t \bigr) 
+\nabla_{p}\!\cdot\Bigl( (\frac{\beta}{\gamma} \tilde{D} \nabla U - \frac{1}{\gamma} \nabla \cdot \tilde{D} + \beta D M^{-1} p - \nabla_{p} \cdot D)\,\rho_t + \nabla_{p} \cdot \big( D \,\rho_t \big) \Bigr).
\end{split}
\end{talign}
We can write \eqref{eq:state_dependent_underdamped} compactly as 
\begin{talign} \label{eq:b_Sigma}
\mathrm{d}X_t = b(X_t) \mathrm{d}t + \Sigma(X_t) \, \mathrm{d}W_t, 
\end{talign}
where $X_t = (R_t, P_t)$ and
\begin{talign} \label{eq:b_sigma_underdamped}
\begin{split}
    b(r,p) \! &= \! 
    \begin{bmatrix}
        b^{(r)}(r,p) \\
        b^{(p)}(r,p)
    \end{bmatrix} \! = \! 
    \begin{bmatrix}
        \frac{\beta}{\gamma} \tilde{D}(r) M^{-1}p \\
        - \frac{\beta}{\gamma} \tilde{D}(r) \nabla U(r) + \frac{1}{\gamma} \, \nabla \cdot \tilde{D}(r) - \beta \,D(r,p)\,M^{-1} p + \nabla_{p} \cdot D(r,p)
    \end{bmatrix}, \\
    \Sigma(r,p) \! &= \! 
    \begin{bmatrix}
        0 & 0 \\
        0 & \sigma(r,p)
    \end{bmatrix}.
\end{split}
\end{talign}

\begin{lemma}[Stationary distribution and flux for underdamped Langevin]
    The stationary distribution for \eqref{eq:state_dependent_underdamped} is
    \begin{talign*}
    \rho(r,p)
    = Z^{-1}\exp\bigl(-\beta H(r,p)\bigr),
    \qquad
    H(r,p)=U(r)+\tfrac12\,p^{\top} M^{-1} p.
    \end{talign*}
    In general, $\rho$ does not satisfy detailed balance for \eqref{eq:state_dependent_underdamped}; its flux $J = b \rho - \nabla \cdot \big( D \rho \big)$ reads
    \begin{talign} \label{eq:flux_underdamped}
    J(r,p) &=
    \begin{bmatrix}
        \frac{\beta}{\gamma} \tilde{D}(r) M^{-1}p \ \rho(r,p) \\
        \big( - \frac{\beta}{\gamma} \tilde{D}(r) \nabla U(r) + \frac{1}{\gamma} \, \nabla \cdot \tilde{D}(r) \big) \rho(r,p)
    \end{bmatrix}.
\end{talign}
\end{lemma}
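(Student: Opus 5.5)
Verifying the lemma amounts to computing the flux $J = b\rho - \nabla\cdot(D\rho)$ of the candidate density $\rho = Z^{-1}e^{-\beta H}$, checking that it matches \eqref{eq:flux_underdamped}, and then showing $\nabla\cdot J = 0$. By \eqref{eq:fp_underdamped}, that is exactly stationarity. Throughout I use $\nabla_r \rho = -\beta \nabla U\,\rho$ and $\nabla_p \rho = -\beta M^{-1}p\,\rho$, with $M$ symmetric and constant.

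\textbf{Flux in the $r$-block.} The noise matrix $\Sigma$ in \eqref{eq:b_sigma_underdamped} has a zero upper block, so the full diffusion tensor $\tfrac12\Sigma\Sigma^\top$ has only a $pp$-block, namely $D(r,p)$. The divergence term therefore contributes nothing to the $r$-component, and $J^{(r)} = b^{(r)}\rho = \frac{\beta}{\gamma}\tilde D(r) M^{-1}p\,\rho$, as claimed.

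\textbf{Flux in the $p$-block.} The only derivatives that appear here are $\nabla_p$ derivatives acting on the $pp$-block. I expand
\[
\nabla_p\cdot(D\rho) = (\nabla_p\cdot D)\rho + D\nabla_p\rho = (\nabla_p\cdot D)\rho - \beta D M^{-1}p\,\rho .
\]
Subtracting this from $b^{(p)}\rho$, the two dissipative terms $-\beta D M^{-1}p\,\rho$ cancel. For the divergence terms I will fix the sign convention so that they also cancel: \eqref{eq:fp_underdamped} writes $-\nabla_p\cdot D$ in the drift, while \eqref{eq:b_sigma_underdamped} writes $+$, and I will adopt whichever sign makes the fluctuation–dissipation pair cancel. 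What remains is $J^{(p)} = \big(-\frac{\beta}{\gamma}\tilde D\nabla U + \frac{1}{\gamma}\nabla\cdot\tilde D\big)\rho$. This proves \eqref{eq:flux_underdamped}.

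\textbf{Divergence-free check.} I compute $\nabla_r\cdot J^{(r)} + \nabla_p\cdot J^{(p)}$ and organize it into three groups.
\begin{itemize}
\item The $r$-divergence hits $\tilde D(r)$ and gives $\frac{\beta}{\gamma}\big((\nabla\cdot\tilde D)\cdot M^{-1}p\big)\rho$. The $p$-divergence hits the $\rho$ factor in the $\nabla\cdot\tilde D$ term and gives $-\frac{\beta}{\gamma}(\nabla\cdot\tilde D)\cdot M^{-1}p\,\rho$. These cancel.
\item The $r$-divergence hits $\rho$ and gives $-\frac{\beta^2}{\gamma}\nabla U^\top\tilde D M^{-1}p\,\rho$. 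The $p$-divergence hits the $\tilde D\nabla U$ term and gives $+\frac{\beta^2}{\gamma}(\tilde D\nabla U)\cdot M^{-1}p\,\rho$. These cancel by symmetry of $\tilde D$.
\item $\nabla_p\cdot(M^{-1}p)$ does not arise, because $J^{(r)}$ is differentiated only in $r$, and $J^{(p)}$ has no $p$-dependence apart from $\rho$.
\end{itemize}
Hence $\nabla\cdot J = 0$, and $\rho$ is stationary.

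\textbf{Failure of detailed balance.} Generically $J^{(r)} \ne 0$ (for instance $\tilde D \neq 0$ and $p\neq 0$), so detailed balance does not hold.

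The only delicate step is the bookkeeping of the $\nabla\cdot\tilde D$ and $\nabla_p\cdot D$ terms: index conventions for the divergence of a matrix, and the sign inconsistency between the stated drift and the Fokker–Planck equation. I will handle it carefully using the convention \eqref{eq:nabla2_D_rho} together with the symmetry of $\tilde D$ and $D$. Uniqueness of the stationary density is taken from the standing ergodicity assumption (hypoellipticity) and not reproved.
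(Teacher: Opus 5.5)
Your proof is correct and is essentially the paper's argument in a different order. The paper splits the Fokker--Planck right-hand side into a transport part, whose divergence cancels through exactly your two pairings using the symmetry of $\tilde D$, and a friction--diffusion part, whose flux vanishes pointwise; you instead compute the flux first and then take its divergence.

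One small correction on the sign: \eqref{eq:fp_underdamped} already encodes $+\nabla_p\cdot D$ in the drift, since the coefficient of $\rho$ in its $p$-bracket is $-b^{(p)}$ with $b^{(p)}$ as in \eqref{eq:b_sigma_underdamped}. So the stray $-\nabla_p\cdot D$ appears only in the SDE \eqref{eq:state_dependent_underdamped}, and the $+$ sign you adopt is not a free convention but the one the lemma needs in order to hold.
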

\begin{proof}
We split the right-hand side of the Fokker-Planck equation \eqref{eq:fp_underdamped} into two parts:
\begin{talign} \label{eq:two_parts}
    \underbrace{-\nabla_{r}\cdot(\frac{\beta}{\gamma} \tilde{D}\,M^{-1} p \,\rho)\! + \! \nabla_{p} \cdot\bigl( \big( \frac{\beta}{\gamma} \tilde{D}\,\nabla U \! - \! \frac{1}{\gamma} \nabla \cdot \tilde{D} \big) \,\rho\bigr)}_{\text{Transport part}} \! + \! \underbrace{\nabla_{p}\!\cdot\Bigl( \big( \beta\,D \,M^{-1} p \! - \! \nabla_{p} \cdot D  \big) \,\rho \! + \! \nabla_{p} \cdot \big( D \,\rho \big) \Bigr)}_{\text{Friction-diffusion part}}. 
\end{talign}
We prove that each of these parts are zero.

\emph{Transport part}. 
Since $\nabla_{r} \rho = - \beta \nabla_{r} U$,
and 
$\nabla_{p} \rho = - \beta M^{-1} p \rho$,
we compute
\begin{talign*}
\begin{split}
-\nabla_{r}\cdot(\tilde{D}\,M^{-1} p\,\rho)
&= - \langle \tilde{D} \,M^{-1} p, \nabla_{r} \rho \rangle - \langle \nabla \cdot \tilde{D}, M^{-1} p \rangle \rho
\\ &= \beta \langle \tilde{D} \,M^{-1} p, \nabla U \rangle \rho - \langle \nabla \cdot \tilde{D}, M^{-1} p \rangle \rho
\end{split} \\
\begin{split}
\nabla_{p}\cdot \big( ( -\tilde{D}\,\nabla U + \frac{1}{\beta} \nabla \cdot \tilde{D}) \,\rho \big)
&= - \langle \tilde{D}\,\nabla U, \nabla_{p} \rho \rangle + \frac{1}{\beta} \langle \nabla \cdot \tilde{D}, \nabla_{p} \rho \rangle 
\\ &= \beta \langle \tilde{D} \,M^{-1} p, \nabla U \rangle \rho - \langle \nabla \cdot \tilde{D}, M^{-1} p \rangle \rho
\end{split}
\end{talign*}
and thus, their sum cancels.

\emph{Friction-diffusion part}. Observe that
\begin{talign*}
\big( \beta\,D \,M^{-1} p - \nabla_{p} \cdot D  \big) \,\rho
+ \nabla_{p}\cdot \big( D \rho \big)
=\big( \beta\,D \,M^{-1} p - \nabla_{p} \cdot D  \big) \,\rho
-\beta D M^{-1} p \,\rho + \nabla_{p} \cdot D \,\rho
=0,
\end{talign*}
which means that this part does not contribute to the flux, and consequently has zero contribution in \eqref{eq:two_parts}.

The expression \eqref{eq:flux_underdamped} for the flux follows directly from the fact that the friction-diffusion part does not contribute to the flux.
\end{proof}

Applying equation \eqref{eq:reverse_drift} to the SDE \eqref{eq:b_Sigma}, the drift of the time-reversed process for the stationary density $\rho$ reads
\begin{talign} \label{eq:b_tilde_b}
\begin{split}
    &\tilde{b}(r,p) = - b(r,p) +  \rho(r,p)^{-1} \big( \Sigma \Sigma^{\top} \big)(r,p) \nabla \rho(r,p) + \nabla \cdot \big( \Sigma \Sigma^{\top} \big)(r,p) \\ &= 
    \begin{bmatrix}
        -b^{(r)}(r,p) \\
        -b^{(p)}(r,p) + 2 \rho(r,p)^{-1} D(r,p) \nabla_{p} \rho(r,p) + 2 \nabla_{p} \cdot D(r,p)
    \end{bmatrix} \\ &= 
    \begin{bmatrix}
        - \frac{\beta}{\gamma} \tilde{D}(r) M^{-1}p \\
        \frac{\beta}{\gamma} \tilde{D}(r) \nabla U(r) \! - \! \frac{1}{\gamma} \nabla \cdot \tilde{D}(r) \! + \! \beta D(r,p) M^{-1} p \! - \! \nabla_{p} \cdot D(r,p) \! - \! 2 \beta \,D(r,p)\,M^{-1} p \!+ \! 2 \nabla_{p} \cdot D(r,p)
    \end{bmatrix}
    \\ &= 
    \begin{bmatrix}
        -\frac{\beta}{\gamma} \tilde{D}(r) M^{-1} p \\
        \frac{\beta}{\gamma} \tilde{D}(r) \nabla U(r) \! - \! \frac{1}{\gamma} \nabla \cdot \tilde{D}(r) \! - \! \beta D(r,-p) M^{-1} p \! + \! \nabla_{p} \cdot D(r,-p)
    \end{bmatrix} = \begin{bmatrix}
    b^{(r)}(r,-p) \\
    -b^{(p)}(r,-p)
    \end{bmatrix},
\end{split}
\end{talign}
where the second-to-last equality holds because $D(r,-p) = D(r,p)$ by assumption.

The stationary backward Kolmogorov equation for the forward committor reads
\begin{talign} \label{eq:BKE_forward_app}
\begin{cases}
    &\mathrm{Tr}\big( D(r,p) \nabla_{p,p}^2 q(r,p) \big) + 
    \langle b^{(r)}(r,p),
    \nabla_{r} q(r,p) \rangle 
    + \langle b^{(p)}(r,p),
    \nabla_{p} q(r,p) \rangle = 0, \\
    &q(r,p) = 0, \quad \forall (r,p) \in A, \\ &q(r,p) = 1, \quad \forall (r,p) \in B.
\end{cases}
\end{talign}
And using equation \eqref{eq:b_tilde_b},
the BKE 
for the backward committor reads
\begin{talign} \label{eq:BKE_backward_app}
\begin{cases}
    &\mathrm{Tr}\big( D(r,p) \nabla_{p,p}^2 \tilde{q}(r,p) \big) 
    + \langle b^{(r)}(r,-p),
    \nabla_{r} \tilde{q}(r,p) \rangle 
    - \langle b^{(p)}(r,-p),
    \nabla_{p} \tilde{q}(r,p) \rangle = 0, \\
    &\tilde{q}(r,p) = 1, \quad \forall (r,p) \in A, \\ &\tilde{q}(r,p) = 0, \quad \forall (r,p) \in B.
\end{cases}
\end{talign}
Observe that if $q(r,p)$ is a solution of \eqref{eq:BKE_forward_app}, then 
\begin{align} \label{eq:tilde_q_q_underdamped}
\tilde q(r,p) = 1 - q(r,-p)
\end{align}
is a solution of \eqref{eq:BKE_backward_app}. This holds because $\nabla_{r} \tilde{q}(r,p) = - \nabla_{r} q(r,-p)$, $\nabla_{p} \tilde{q}(r,p) = \nabla_{p} q(r,-p)$ and $\nabla_{p,p}^2 \tilde{q}(r,p) = - \nabla_{p,p}^2 q(r,-p)$, which means that the first equation in \eqref{eq:BKE_backward_app} is equivalent to
\begin{talign*}
- \mathrm{Tr}\big( D(r,p) \nabla_{p,p}^2 q(r,-p) \big) 
    - \langle b^{(r)}(r,-p),
    \nabla_{r} q(r,-p) \rangle 
    - \langle b^{(p)}(r,-p),
    \nabla_{p} q(r,-p) \rangle = 0,
\end{talign*}
which matches the first equation in \eqref{eq:BKE_forward_app} up to a change of variable $p \gets - p$ and a sign flip.

Using equations \eqref{eq:b_sigma_underdamped}, \eqref{eq:b_tilde_b} and \eqref{eq:tilde_q_q_underdamped}, for underdamped Langevin, the (rescaled) reactive SDE defined in equations \eqref{eq:rescaled_reactive_SDE_kappa} and \eqref{eq:v_rescaled_reactive} reads
\begin{talign}
\begin{split}
\label{eq:rescaled_reactive_SDE_kappa_app_underdamped}
dX_t &= b_\kappa(X_t)\,dt + \sqrt{\kappa}\,\Sigma(X_t)\,dW_t, \qquad \text{where }\\
b_\kappa(r,p) \! &= \!
\begin{bmatrix}
    \kappa b^{(r)}(r,p) \\
    \frac{1+\kappa}{2} \big( b^{(p)}(r,p) \! + \! 2D(r,p) \nabla_{p} \log q_{\xi}(r,p) \big) \! + \! \frac{\kappa-1}{2} \big(\! - \! b^{(p)}(r,-p) \! + \! 2D(r,-p) \nabla_{p} \log \big(1 \! - \! q_{\xi}(r,-p) \big) \big).
\end{bmatrix}.
\end{split}
\end{talign}
In this equation and below, $\nabla_{p} \log \big(1-q_{\xi}(r,-p) \big)$ denotes the gradient of $\log \big(1-q_{\xi}\big)$ with respect to momentum component, evaluated at $(r,-p)$.
And the drift $v$ and the Value Matching loss defined in equations \eqref{eq:approximate_v}-\eqref{eq:final_vm} (see \Cref{cor:particular_VM} for the derivation for state-dependent $\sigma$) simplify to
\begin{talign} 
\begin{split} \label{eq:approximate_v_underdamped_app}
    &v(r,p) = 
    \begin{bmatrix}
        v^{(r)}(r,p) \\
        v^{(p)}(r,p)
    \end{bmatrix}
    \\ &= 
    \begin{bmatrix}
    \kappa b^{(r)}(r,p)
    \\
    \tfrac{1+\kappa}{2}\big(b^{(p)}(r,p) \! - \! 2D(r,p)\bar \phi(r,p)\big) \! + \! \tfrac{\kappa-1}{2}\big(-b^{(p)}(r,-p) \! + \! 2D(r,-p)\nabla \log \big( 1 \! - \! \exp(-\bar{\phi}(r,-p)) \big) 
    \end{bmatrix}
\end{split}
\end{talign}
\begin{talign}
\label{eq:final_vm_underdamped_app}
\begin{split}
    &\mathcal{L}_{\mathrm{VM}_{v,\kappa}}(\phi) \\ &= \mathbb{E} \Bigl[ \tfrac{1}{2} \Bigl( \int_0^{\tau(T)} \Big\langle \nabla_p \phi(R^{v,\kappa}_t,P^{v,\kappa}_t), \tfrac{1}{\sqrt{\kappa}} \sigma(R^{v,\kappa}_t,P^{v,\kappa}_t) \, \mathrm{d}W_t
    \\ &\qquad\qquad\qquad\qquad 
    + \big( \tfrac{1}{\kappa} - 1 \big) \big(
    - \frac{\beta}{\gamma} \tilde{D}(R^{v,\kappa}_t) \nabla U(R^{v,\kappa}_t) + \frac{1}{\gamma} \, \nabla \cdot \tilde{D}(R^{v,\kappa}_t)
    \big) \, \mathrm{d}t \\ &\qquad\qquad\qquad\qquad + D(R^{v,\kappa}_t,P^{v,\kappa}_t) \big(
    \nabla_p \phi(R^{v,\kappa}_t,P^{v,\kappa}_t) \!-\!(\tfrac{1}{\kappa}\!+\!1)\!\nabla_p \bar{\phi}(R^{v,\kappa}_t,P^{v,\kappa}_t)
    \\ &\qquad\qquad\qquad\qquad\qquad\qquad\qquad - \!(\tfrac{1}{\kappa}\!-\!1) \nabla_p \log \big( 1 - \exp(-\bar{\phi}(R^{v,\kappa}_t,-P^{v,\kappa}_t)) \big)
    \big) \, \mathrm{d}t
    \Big\rangle \\ &\qquad\quad + \! \tfrac{1}{\kappa} \phi(R^{v,\kappa}_0,P^{v,\kappa}_0) 
    \! + \! (\mathrm{1}_{\mathcal{S}}(R^{v,\kappa}_{\tau(T)}, P^{v,\kappa}_{\tau(T)}) \! - \! \tfrac{1}{\kappa})
    \phi(R^{v,\kappa}_{\tau(T)}, P^{v,\kappa}_{\tau(T)})
    \! - \! \mathrm{1}_{\mathcal{S}}(R^{v,\kappa}_{\tau(T)}, P^{v,\kappa}_{\tau(T)}) g(R^{v,\kappa}_{\tau(T)}, P^{v,\kappa}_{\tau(T)})
    \Bigr)^2 \Bigr].
\end{split}
\end{talign}
In the last equation, we used that by equation \eqref{eq:flux_underdamped},
\begin{align}
\begin{split}
    b(r,p) 
        - D(r,p) \nabla_p \log \rho(r,p) - \nabla_p \cdot D(r,p)
    = 
        - \frac{\beta}{\gamma} \tilde{D}(r) \nabla U(r) + \frac{1}{\gamma} \, \nabla \cdot \tilde{D}(r).
\end{split}
\end{align}

\section{Sampling from the reactive density}

The reactive distribution and the ways to sample from it were first studied by \cite{vanden2006towards,vanden2010transition,lu2015reactive}. All these works considered the case $\xi = 0$. In this section, we generalize their results to $\xi \in [0,1/2)$, which is critical for our framework, and provide algorithmic sampling procedures for the reactive density, assuming access to the forward and reverse committor. 

\subsection{The reactive density as the solution of a stationary Fokker-Planck PDE} \label{subsec:reactive}

For $\xi \in [0,1/2)$, 
let $q_{\xi}$, $\tilde{q}_{\xi}$
be the rescaled forward and reverse committor functions as defined in equation \eqref{eq:kbe_xi}.
Consider the SDE controlled by the optimal control 
$u^{\star}(x) = \sigma^{\top}(x) \nabla \log q_{\xi}
$:
\begin{talign} \label{eq:optimally_controlled_SDE}
    \mathrm{d}X^{u^{\star}}_t = (b(X^{u^{\star}}_t) + 2 D(X^{u^{\star}}_t) \nabla \log q_{\xi}
    (X^{u^{\star}}_t)) \, \mathrm{d}t + \sigma(X^{u^{\star}}_t) \, \mathrm{d}W_t,
\end{talign}
where we used that $D = \frac{1}{2} \sigma \sigma^{\top}$,
and its corresponding Fokker-Planck equation:
\begin{talign} \label{eq:FPE_rho_t}
    \partial_t \rho_t = - \nabla \cdot \big( (b + 2 D \nabla \log q_{\xi}) \rho_t \big) + \nabla^2 \!:\!(D \rho_t).
\end{talign}
The following result characterizes the stationary densities of this dynamics. 

\begin{theorem} \label{thm:stationary}
Assume the uncontrolled diffusion has smooth invariant density $\rho$ and define
\begin{talign} \label{eq:def_z_xi_c_xi}
Z^{(\xi)}_{AB}:=\int_{\mathbb{R}^d}\rho(x)\,q_{\xi}(x)\tilde q_{\xi}(x)\,\mathrm{d}x, \qquad
C^{(\xi)}_{AB}:=\int_{\mathbb{R}^d}\rho(x)q_{\xi}(x)^2\,\mathrm{d}x.
\end{talign}
Then:
\begin{enumerate}[left=3pt,label=(\roman*)]
    \item The \emph{reactive stationary density}
    \begin{talign} \label{eq:react_dist}
        \rho^{(\xi)}_{\mathrm{R}}(x)=\frac{\rho(x)\,q_{\xi}(x)\tilde q_{\xi}(x)}{Z^{(\xi)}_{AB}}
    \end{talign}
    is stationary for \eqref{eq:FPE_rho_t} on $\mathbb{R}^d\setminus(A\cup B)$. 
    For overdamped Langevin,
    we have that $\tilde q=1-q$, and equation \eqref{eq:react_dist} reduces to
    \begin{talign*}
    \rho^{(\xi)}_{\mathrm{R}} \propto \rho \,q_{\xi}(1-q_{\xi}).
    \end{talign*}
    For underdamped Langevin, we have that $\tilde q(r,p)=1-q(r,-p)$, and consequently,
    \begin{talign*}
    \rho^{(\xi)}_{\mathrm{R}}(r,p) \propto \rho(r,p) \,q_{\xi}(r,p) \big(1-q_{\xi}(r,-p) \big).
    \end{talign*}
    \item For overdamped Langevin, the \emph{tilted stationary density}
    \begin{talign*}
        \rho^{(\xi)}_{\mathrm{teq}}(x)=\frac{\rho(x)q_{\xi}(x)^2}{C^{(\xi)}_{AB}}
    \end{talign*}
    is stationary for \eqref{eq:FPE_rho_t} over $\mathbb{R}^d$. 
\end{enumerate}
\end{theorem}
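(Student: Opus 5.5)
The plan is to verify stationarity directly by computing the probability current of the candidate density under the drift $b + 2D\nabla\log q_\xi$ and showing it is divergence-free on the relevant domain. For a density $\mu$ write the current of \eqref{eq:FPE_rho_t} as
\begin{equation*}
J[\mu] = (b + 2D\nabla\log q_\xi)\,\mu - \nabla\cdot(D\mu).
\end{equation*}
Stationarity means $\nabla\cdot J[\mu] = 0$. The key observation is a product rule for $\mu = \rho\, h$ with $h$ smooth:
\begin{equation*}
\nabla\cdot(D\rho h) = h\,\nabla\cdot(D\rho) + \rho D\nabla h .
\end{equation*}
Hence $J[\rho h] = hJ + 2\rho h D\nabla\log q_\xi - \rho D\nabla h$, where $J = b\rho - \nabla\cdot(D\rho)$ is the stationary current of the uncontrolled process, so $\nabla\cdot J = 0$.

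\textbf{Part (i).} Take $h = q_\xi\tilde q_\xi$ (dropping the constant $Z^{(\xi)}_{AB}$). Expanding $\nabla h$ gives
\begin{equation*}
Z^{(\xi)}_{AB} J[\rho^{(\xi)}_{\mathrm R}] = q_\xi\tilde q_\xi J + \rho\tilde q_\xi D\nabla q_\xi - \rho q_\xi D\nabla\tilde q_\xi,
\end{equation*}
which is the $\xi$-analogue of \eqref{eq:reactive_current_bg}. I then take the divergence term by term on $(A\cup B)^c$:
\begin{itemize}
\item $\nabla\cdot(q_\xi\tilde q_\xi J) = \nabla(q_\xi\tilde q_\xi)\cdot J$, using $\nabla\cdot J=0$.
\item $\nabla\cdot(\rho\tilde q_\xi D\nabla q_\xi) = \tilde q_\xi\nabla\cdot(\rho D\nabla q_\xi) + \rho\nabla\tilde q_\xi\cdot D\nabla q_\xi$.
\item The third term is handled symmetrically.
\end{itemize}
The cross terms $\rho\nabla\tilde q_\xi\cdot D\nabla q_\xi$ cancel by symmetry of $D$.

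It remains to rewrite $\nabla\cdot(\rho D\nabla f)$ in terms of the generators. Since $\rho b = J + \nabla\cdot(D\rho)$, one gets $\nabla\cdot(\rho D\nabla f) = \rho\mathcal L f - J\cdot\nabla f$. For the time reversal, $\rho\tilde b = -\rho b + 2D\nabla\rho + 2\rho\nabla\cdot D$, which gives $\nabla\cdot(\rho D\nabla f) = \rho\tilde{\mathcal L}f + J\cdot\nabla f$. Substituting, with $\mathcal L q_\xi = 0$ and $\tilde{\mathcal L}\tilde q_\xi = 0$ off $A\cup B$ by \eqref{eq:kbe_xi}, the divergence becomes
\begin{equation*}
(\tilde q_\xi\nabla q_\xi + q_\xi\nabla\tilde q_\xi)\cdot J - \tilde q_\xi J\cdot\nabla q_\xi - q_\xi J\cdot\nabla\tilde q_\xi = 0.
\end{equation*}
The overdamped and underdamped specializations then follow by substituting \eqref{eq:tilde_q_equals_q} and \eqref{eq:tilde_q_q_underdamped}. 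These relations carry over to the rescaled committors via the affine map \eqref{eq:rescaled_committor_committor}, since $1-q_\xi = \xi + (1-2\xi)(1-q)$.

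\textbf{Part (ii).} In the overdamped case $J=0$. With $h = q_\xi^2$,
\begin{equation*}
J[\rho q_\xi^2] = 2\rho q_\xi D\nabla q_\xi - 2\rho q_\xi D\nabla q_\xi = 0
\end{equation*}
\emph{pointwise} on all of $\mathbb R^d$, including inside $A\cup B$ where $q_\xi$ is constant. So the current vanishes identically and stationarity holds globally, with no source or sink.

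\textbf{Main obstacle.} I expect the delicate step to be the generator identities with state-dependent $D$: checking that the $\nabla\cdot D$ terms in $\tilde b$ from \eqref{eq:reverse_drift} produce exactly $\rho\tilde{\mathcal L}f + J\cdot\nabla f$. I will verify this by expanding $\nabla\cdot(D\rho)$ componentwise.

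A second point needing care is the gradient jump at $\partial(A\cup B)$ in part (ii). There $J[\rho q_\xi^2]$ vanishes on each side and is therefore continuous, so no distributional boundary terms arise. This is why (ii) holds on all of $\mathbb R^d$, whereas (i) holds only on the complement of $A\cup B$.
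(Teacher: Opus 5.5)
Your proposal is correct and follows essentially the same route as the paper. You compute the current of $\rho q_\xi\tilde q_\xi$ to obtain the $\xi$-analogue of \eqref{eq:reactive_current_bg}, take its divergence, and cancel the remaining terms using the forward and backward BKEs; for (ii), the current reduces to $q_\xi^2 J/C^{(\xi)}_{AB}$, which vanishes because $J=0$ in the overdamped case.

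The differences are minor. Packaging the BKE step as $\nabla\cdot(\rho D\nabla f) = \rho\mathcal{L}f - J\cdot\nabla f = \rho\tilde{\mathcal{L}}f + J\cdot\nabla f$ is tidier bookkeeping than the paper's term-by-term expansion. Your remark on the continuity of the current across $\partial(A\cup B)$ also justifies the global claim in (ii) more carefully than the paper does.
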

\begin{proof}
For simplicity, throughout the proof we use the short-hands $q = q_{\xi}$, $\tilde{q} = \tilde{q}_{\xi}$, $\rho_{\mathrm{R}} = \rho^{(\xi)}_{\mathrm{R}}$, $J_{\mathrm{R}} = J^{(\xi)}_{\mathrm{R}}$, $Z_{AB} = Z^{(\xi)}_{AB}$, $\rho_{\mathrm{teq}} = \rho^{(\xi)}_{\mathrm{teq}}$, $J_{\mathrm{teq}} = J^{(\xi)}_{\mathrm{teq}}$, $C_{AB} = C^{(\xi)}_{AB}$.
We can rewrite the Fokker-Planck equation \eqref{eq:FPE_rho_t} as follows:
\begin{talign}
\begin{split} \label{eq:partial_rho_t_0}
    \partial_t \rho_t 
    &= \nabla \cdot \big( (-b - 2D \nabla \log q) \rho_t 
    + \nabla \cdot (D \rho_t)
    \big) 
    \\ &= \nabla \cdot \big( (-b - 2D \nabla \log q) \rho_t
    + \nabla \cdot D \rho_t + D \nabla \rho_t
    \big),
\end{split}
\end{talign}
which means that the associated current is 
\begin{talign} \label{eq:current_j_t}
j_t = (b + 2D \nabla \log q) \rho_t
    - \nabla \cdot D \rho_t - D \nabla \rho_t.
\end{talign}
We prove \emph{(i)} first. For $\rho_{\mathrm{R}}=\rho q \tilde q /Z_{AB}$, the current \eqref{eq:current_j_t} reads
\begin{talign} 
\begin{split} \label{eq:reactive_flux}
    J_{\mathrm{R}}
    &=
    \frac{1}{Z_{AB}} \Big( q \tilde q \,\big( J  + 2D \nabla \log q \rho \big)
    - \rho\,\tilde q D\nabla q
    - \rho\,q D\nabla\tilde q \Big) \\ &= \frac{1}{Z_{AB}} \Big( q \tilde q J
    + \rho\,\tilde q D\nabla q
    - \rho\,q D\nabla\tilde q \Big),
\end{split}
\end{talign}
where $J:=b\rho - \nabla \cdot D \rho -D\nabla\rho$ is the current of the invariant density $\rho$ for the base dynamics \eqref{eq:reference0}.
Taking divergence on both sides of the equality, and using $\nabla\cdot J=0$ by construction,
\begin{talign} \label{eq:nabla_j_xi}
    Z_{AB} \nabla \cdot J_{\mathrm{R}} = \nabla \cdot \big( q \tilde q \big) J + \nabla \cdot \big(\rho\,\tilde q D\nabla q \big) - \nabla \cdot \big( \rho\,q D\nabla\tilde q \big).
\end{talign}
We develop the last two terms in the right-hand side separately.

\textbf{Term $\nabla \cdot \big(\rho\,\tilde q D\nabla q \big)$}. We have that
\begin{talign}
\begin{split} \label{eq:second_term_divergence}
    \nabla \cdot \big(\rho\,\tilde q D\nabla q \big) &= \tilde q \nabla \rho \cdot D\nabla q + \rho\, \nabla \tilde q \cdot D\nabla q + \rho\,\tilde q \nabla \cdot \big( D\nabla q \big) \\ &\overset{(i)}{=} \tilde q \nabla \rho \cdot D\nabla q + \rho\, \nabla \tilde q \cdot D\nabla q - \rho\,\tilde q b \cdot \nabla q + \rho\,\tilde q (\nabla \cdot D) \cdot \nabla q \\ &\overset{(ii)}{=} \tilde q \nabla q \cdot \big( - b \rho D + \nabla \cdot D \rho + D \nabla \rho \big) + \rho\, \nabla \tilde q \cdot D\nabla q
    \\ &\overset{(iii)}{=} - \tilde q \nabla q \cdot J + \rho\, \nabla \tilde q \cdot D\nabla q,
\end{split}
\end{talign}
where equality $(i)$ holds because $D : \nabla^2 q = - b \cdot \nabla q$ by the forward committor BKE in \eqref{eq:kbe_xi}, equality $(ii)$ holds because $D$ is symmetric, and equality $(iii)$ holds by the definition of $J$. 

\textbf{Term $\nabla \cdot \big( \rho\,q D\nabla\tilde q \big)$}. Analogously,
\begin{talign}
\begin{split} \label{eq:third_term_divergence}
    \nabla \cdot \big( \rho\,q D\nabla\tilde q \big) &= q \nabla \rho \cdot D\nabla \tilde q + \rho\, \nabla \tilde q \cdot D\nabla q + \rho\, q \nabla \cdot \big( D\nabla \tilde q \big) \\ &\overset{(i)}{=} \nabla \rho \cdot D\nabla \tilde q + \rho\, \nabla \tilde q \cdot D\nabla q + \rho\, q \big( b - 2 D \rho^{-1} \nabla \rho - 2 \nabla \cdot D \big) \cdot \nabla \tilde q + \rho\,q (\nabla \cdot D) \cdot \nabla \tilde q \\ &\overset{(ii)}{=} q \nabla \tilde q \cdot \big( b \rho D - \nabla \cdot D \rho - D \nabla \rho \big) + \rho\, \nabla \tilde q \cdot D\nabla q \\ &\overset{(iii)}{=} \tilde q \nabla q \cdot J + \rho\, \nabla \tilde q \cdot D\nabla q,
\end{split}    
\end{talign}
where equality $(i)$ holds because $D : \nabla^2 \tilde q = - \tilde b \cdot \nabla \tilde q = \big(b - 2 \rho^{-1} D \nabla \rho - 2 \nabla \cdot D \big) \cdot \nabla \tilde q$ by the reverse committor BKE in \eqref{eq:kbe_xi}, and equalities $(ii)$ and $(iii)$ hold by the same arguments as in \eqref{eq:second_term_divergence}. 

\textbf{Combining terms}. Plugging \eqref{eq:second_term_divergence} and \eqref{eq:third_term_divergence} into the right-hand side of \eqref{eq:nabla_j_xi} yields
\begin{talign}
    Z_{AB} \nabla \cdot J_{\mathrm{R}} = \nabla \cdot \big( q \tilde q \big) J - \tilde q \nabla q \cdot J + \rho\, \nabla \tilde q \cdot D\nabla q - \tilde q \nabla q \cdot J - \rho\, \nabla \tilde q \cdot D\nabla q = 0,
\end{talign}
which proves that $\rho_{\mathrm{R}}$ is stationary for \eqref{eq:FPE_rho_t} on $\mathbb{R}^d\setminus(A\cup B)$. The statements for overdamped and underdamped Langevin follow from the derivations in \Cref{sec:overdamped_underdamped_langevin}.
\\ Next, we prove \emph{(ii)}. For $\rho_{\mathrm{teq}}(x)=\rho(x)q(x)^2/C_{AB}$, the current \eqref{eq:current_j_t} reads
\begin{talign}
\begin{split} \label{eq:j_x_teq}
    J_{\mathrm{teq}} 
    &= 
    \frac{1}{C_{AB}} \big( (b + 2D \nabla \log q) \rho q^2 
    - \nabla \cdot D \rho q^2
    \\ &\qquad\quad - D \big( \nabla \rho q^2 + 2 \nabla q \rho q \big) \big) 
    \\ &= \frac{1}{C_{AB}} \big( q^2 J + 2D \nabla \log q \rho q^2 - 2 D \nabla q \rho q \big) = \frac{1}{C_{AB}} q^2 J.
\end{split}
\end{talign}
As shown in \Cref{sec:overdamped_underdamped_langevin}, for overdamped Langevin, $J = b(x)\,\rho(x)-
\nabla \cdot\bigl(D(x)\,\rho(x)\bigr) = 0$, which implies that $J_{\mathrm{teq}} = 0$.
\end{proof}

\begin{remark}[Comparison between the tilted equilibrium distribution and the reactive distribution]
    For overdamped Lagenvin, observe that $q_{\xi}^2$ takes value $(1-\xi)^2$ on $B$ and value $\xi^2$ on $A$ and values inbetween elsewhere, and that $q_{\xi} \tilde q_{\xi} = q_{\xi} (1-q_{\xi})$ takes value $\xi (1-\xi)$ on $A \cup B$, value $\frac{1}{4}$ on the hypersurface $\{x \, | \, q_{\xi}(x) = \frac{1}{2} \}$, and values inbetween elsewhere. Hence, comparatively, $\rho^{(\xi)}_{\mathrm{teq}}$ allocates more mass around $B$ and $\rho^{(\xi)}_{\mathrm{R}}$ allocates more mass around $\{x \, | \, q_{\xi}(x) = \frac{1}{2} \}$. Since learning the committor function well around $\{x \, | \, q_{\xi}(x) = \frac{1}{2} \}$ is more interesting for our algorithms and other applications, sampling from $\rho^{(\xi)}_{\mathrm{R}}$ is more relevant than sampling from $\rho^{(\xi)}_{\mathrm{teq}}$. 
    By \Cref{thm:stationary}, sampling from $\rho^{(\xi)}_{\mathrm{teq}}$ is straightforward as it can be accomplished simply by simulating the SDE \eqref{eq:optimally_controlled_SDE}. However, sampling from $\rho^{(\xi)}_{\mathrm{R}}$ is not as simple, as the behavior at the boundaries $\partial A$, $\partial B$ must be handled carefully.
    \Cref{subsec:different_noise_levels} is devoted to developing algorithms to sample from $\rho^{(\xi)}_{\mathrm{R}}$ in various ways. In the remainder of this section, we study the flux of $\rho^{(\xi)}_{\mathrm{R}}$ as a preliminary. 
\end{remark}

The following lemma shows that $\rho^{(\xi)}_{\mathrm{R}}$ does not satisfy detailed balance, as its flux is not zero on $\mathbb{R}^d \setminus (A \cup B)$.

\begin{lemma} [The flux of the reactive distribution $\rho^{(\xi)}_{\mathrm{R}}$] \label{lem:flux_reactive}
The flux $J^{(\xi)}_{\mathrm{R}}$ of the rescaled reactive distribution $\rho^{(\xi)}_{\mathrm{R}}$
is discontinuous at the boundary $\partial (A \cup B)$: 
\begin{enumerate}[label=(\roman*),left=3pt]
\item On $\mathbb{R}^d \setminus (A \cup B)$, the flux is 
\begin{talign} \label{eq:flux_expression}
J^{(\xi)}_{\mathrm{R}} = \frac{1}{Z^{(\xi)}_{AB}} \Big( q_{\xi} \tilde q_{\xi} J + \rho \,\tilde q_{\xi} D\nabla q_{\xi} - \rho \,q_{\xi} D\nabla\tilde q_{\xi} \Big),
\end{talign}
where $C_{\xi}$ is defined in \eqref{eq:def_z_xi_c_xi}, and $J :=b\rho - \nabla \cdot D \rho -D\nabla\rho$ is the current of the invariant density $\rho$ for the base dynamics \eqref{eq:reference0}.
For overdamped Langevin, this simplifies to 
\begin{talign} \label{eq:j_overdamped_outside}
J^{(\xi)}_{\mathrm{R}} = \frac{1}{Z^{(\xi)}_{AB}} D \nabla q_{\xi} \rho.
\end{talign}
\item On $\mathrm{int}(A \cup B)$, the flux is 
\begin{talign} \label{eq:j_xi_re}
J^{(\xi)}_{\mathrm{R}} = \frac{1}{Z^{(\xi)}_{AB}} q_{\xi} \tilde q_{\xi} J = \frac{1}{Z^{(\xi)}_{AB}} \xi (1-\xi) J.
\end{talign}
For overdamped Langevin, this simplifies to 
$J^{(\xi)}_{\mathrm{R}} = 0$.
\end{enumerate}
For $x \in \partial (A \cup B)$, the flux discontinuity $J_{\partial}$ is given by
\begin{talign} \label{eq:j_xi_partial_def}
    J^{(\xi)}_{\partial}(x) \! \triangleq \! \lim_{y \to x, y \in \mathbb{R}^d \setminus (A \cup B)} J^{(\xi)}_{\mathrm{R}}(y) \! - \! \lim_{y' \to x, y' \in A \cup B} J^{(\xi)}_{\mathrm{R}}(y') \! = \! \frac{1}{Z^{(\xi)}_{AB}} \big( \rho \,\tilde q_{\xi} D\nabla q_{\xi} \! - \! \rho \,q_{\xi} D\nabla\tilde q_{\xi} \big).
\end{talign}
\end{lemma}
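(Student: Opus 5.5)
The plan is to compute the current of $\rho^{(\xi)}_{\mathrm{R}}$ directly from the formula \eqref{eq:current_j_t} for the Fokker--Planck equation \eqref{eq:FPE_rho_t}, namely $j = (b + 2D\nabla\log q_\xi)\rho_t - (\nabla\cdot D)\rho_t - D\nabla\rho_t$, with $\rho_t = \rho q_\xi\tilde q_\xi/Z^{(\xi)}_{AB}$. I will treat the two regions separately, using that $q_\xi,\tilde q_\xi$ are constant inside $A\cup B$.

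\textbf{Exterior, part (i).} Expand $D\nabla(\rho q_\xi\tilde q_\xi) = q_\xi\tilde q_\xi D\nabla\rho + \rho\tilde q_\xi D\nabla q_\xi + \rho q_\xi D\nabla\tilde q_\xi$. Grouping $q_\xi\tilde q_\xi(b\rho - (\nabla\cdot D)\rho - D\nabla\rho) = q_\xi\tilde q_\xi J$, and noting that $2D\nabla\log q_\xi\cdot\rho q_\xi\tilde q_\xi = 2\rho\tilde q_\xi D\nabla q_\xi$, we recover \eqref{eq:flux_expression}. This is exactly the computation already performed in \eqref{eq:reactive_flux} in the proof of \Cref{thm:stationary}, so I will cite it. For overdamped Langevin, \Cref{lem:overdamped_langevin} gives $J=0$, and \eqref{eq:tilde_q_equals_q} transferred to the rescaled committors gives $\tilde q_\xi = 1-q_\xi$. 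The latter follows from \eqref{eq:rescaled_committor_committor}: $\tilde q_\xi = \xi + (1-2\xi)(1-q) = 1 - q_\xi$. Hence $\nabla\tilde q_\xi = -\nabla q_\xi$, and the bracket becomes $\rho D\nabla q_\xi(\tilde q_\xi + q_\xi) = \rho D\nabla q_\xi$, which is \eqref{eq:j_overdamped_outside}.

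\textbf{Interior, part (ii).} On $\mathrm{int}(A\cup B)$ the boundary conditions \eqref{eq:kbe_xi} make $q_\xi,\tilde q_\xi$ locally constant, so their gradients vanish there. The same expansion then leaves only $q_\xi\tilde q_\xi J/Z^{(\xi)}_{AB}$. On $A$ we have $q_\xi\tilde q_\xi=\xi(1-\xi)$, and on $B$ we have $(1-\xi)\xi$; both give \eqref{eq:j_xi_re}. In the overdamped case $J=0$. One subtlety is that the drift $2D\nabla\log q_\xi$ is interpreted inside $A\cup B$ with $\nabla q_\xi=0$; this is consistent with the convention stated after \eqref{eq:reactive_current_bg}.

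\textbf{Boundary jump.} For $x\in\partial(A\cup B)$, subtract the two one-sided limits. The terms $\rho$, $J$, $q_\xi$ and $\tilde q_\xi$ are continuous across the boundary. Continuity of $J$ uses smoothness of $\rho$ and of $b,D$, and continuity of $q_\xi,\tilde q_\xi$ up to the boundary holds by the BKE solutions with Dirichlet data. The $q_\xi\tilde q_\xi J$ contributions therefore cancel. What remains is the exterior limit of $\rho\tilde q_\xi D\nabla q_\xi - \rho q_\xi D\nabla\tilde q_\xi$, which gives \eqref{eq:j_xi_partial_def}.

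The main obstacle is justifying these one-sided limits. They require $q_\xi,\tilde q_\xi\in C^1$ up to $\partial(A\cup B)$ from the exterior, which holds under the boundary regularity assumed in \Cref{thm:V_1_V_2}: a $C^{2+\alpha}$ boundary and ellipticity or hypoellipticity. I would state this as a standing assumption rather than reprove it. The genuine discontinuity comes from the normal derivative of $q_\xi$ being generically nonzero outside while it vanishes inside.
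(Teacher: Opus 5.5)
Your proposal is correct and follows essentially the paper's argument. Part (i) is the algebraic current computation of \eqref{eq:reactive_flux}, specialized to the overdamped case via $J=0$ and $\tilde q_\xi = 1-q_\xi$; part (ii) uses that $q_\xi,\tilde q_\xi$ are constant on $\mathrm{int}(A\cup B)$, so their gradients vanish. Your explicit subtraction of one-sided limits for \eqref{eq:j_xi_partial_def} spells out a step the paper leaves implicit; it takes continuity of $\rho, J, q_\xi, \tilde q_\xi$ and exterior $C^1$ regularity up to the boundary as a standing assumption.
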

\begin{proof}
    We prove \emph{(i)} first. Equation \eqref{eq:flux_expression} is just a restatement of \eqref{eq:reactive_flux}.
    To prove \eqref{eq:j_overdamped_outside}, we use that for overdamped Langevin, detailed balance holds: $J = 0$, and that $\tilde q_{\xi} = 1- q_{\xi}$.
    To show \emph{(ii)}, we use that by the definitions of $q$ and $\tilde q$,
    $q_{\xi}(x) = \xi$, $\tilde q_{\xi}(x) = 1 - \xi$ for all $x \in A$, and $q_{\xi}(x) = 1-\xi$, $\tilde q_{\xi}(x) = \xi$ for all $x \in B$. Thus, in the interior of $A \cup B$, we have that $q_{\xi} = \xi \mathrm{1}_{A} + (1-\xi) \mathrm{1}_{B}$, $\tilde q_{\xi} = (1-\xi) \mathrm{1}_{A} + \xi \mathrm{1}_{B}$, which means that on $\mathrm{int}(A \cup B)$,
    \begin{talign*}
    \begin{split}
        \rho^{(\xi)}_{\mathrm{R}} &= \frac{1}{Z^{(\xi)}_{AB}} \rho \xi (1-\xi), \\
        \nabla q_{\xi} &= 0, \quad \nabla \tilde q_{\xi} = 0,
    \end{split}
    \end{talign*}
    and this implies equation \eqref{eq:j_xi_re}.
\end{proof}

The following lemma characterizes the hypersurface integral of the flux discontinuity along the boundaries $\partial A$, $\partial B$. It also specifies that rate of mass creation and destruction that must happen at these boundaries to balance out the flux discontinuity. 

\begin{lemma} \label{lem:mass_creation_destruction}
Let $n(x)$ be the outward unit normal of $\partial A$, and let $\mathrm{d}n(x) = n(x) \mathrm{d}S(x)$, $\mathrm{d}S(x)$ being the surface differential. The integral of $J_{\partial}$ across $\partial A$ is
\begin{talign} \label{eq:integral_partial_A}
    \int_{\partial A} \langle J^{(\xi)}_{\partial}, \mathrm{d}n \rangle = \frac{1}{2Z^{(\xi)}_{AB}} \int_{\partial A} \Big( \frac{\tilde q_{\xi}(x) \| \sigma^{\top} \nabla q_{\xi}(x) \|^2}{\|\nabla q_{\xi}(x)\|} + \frac{q(x) \| \sigma^{\top} \nabla \tilde q_{\xi}(x) \|^2}{\|\nabla \tilde q_{\xi}(x)\|} \Big) \rho(x) \, \mathrm{d}S(x)
\end{talign}
To make up for this discontinuity, the rate of mass creation per unit time and unit surface at $x \in \partial A$ is
\begin{talign} \label{eq:m_xi_def}
    m_{\xi}(x) = \frac{1}{2Z^{(\xi)}_{AB}} \Big( \frac{\tilde q_{\xi}(x) \| \sigma^{\top} \nabla q_{\xi}(x) \|^2}{\|\nabla q_{\xi}(x)\|} + \frac{q_{\xi}(x) \| \sigma^{\top} \nabla \tilde q_{\xi}(x) \|^2}{\|\nabla \tilde q_{\xi}(x)\|} \Big) \rho(x).
\end{talign}
Analogously, if $n(x)$ is the outward unit normal of $\partial B$, the integral of $J^{(\xi)}_{\partial}$ across $\partial B$ is
\begin{talign} \label{eq:integral_partial_B}
    \int_{\partial B} \langle J^{(\xi)}_{\partial}, \mathrm{d}n \rangle = - \frac{1}{2Z_{AB}} \int_{\partial B} \Big( \frac{\tilde q_{\xi}(x) \| \sigma^{\top} \nabla q_{\xi}(x) \|^2}{\|\nabla q_{\xi}(x)\|} + \frac{q_{\xi}(x) \| \sigma^{\top} \nabla \tilde q_{\xi}(x) \|^2}{\|\nabla \tilde q_{\xi}(x)\|} \Big) \rho(x) \, \mathrm{d}S(x).
\end{talign}
And to make up for the discontinuity at $\partial B$, the rate of mass destruction per unit time and unit surface at $x \in \partial B$ is also given by $m_{\xi}(x)$.

For overdamped Langevin, the rate of mass creation simplifies to 
\begin{talign} \label{eq:m_xi_def_overdamped}
    m_{\xi}(x) = \frac{1}{2Z^{(\xi)}_{AB}} \frac{\| \sigma^{\top} \nabla q_{\xi}(x) \|^2}{\|\nabla q_{\xi}(x)\|} \rho(x).
\end{talign}
\end{lemma}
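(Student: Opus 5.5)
The proof computes the jump $J^{(\xi)}_{\partial}$ from \Cref{lem:flux_reactive} on each boundary and then integrates its normal component. On $\partial A$ we have $q_\xi=\xi$ and $\tilde q_\xi=1-\xi$ identically, since $q_\xi,\tilde q_\xi$ are continuous up to the boundary. Consequently their tangential derivatives vanish from the exterior side, and both gradients are normal to $\partial A$.

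The key orientation facts are as follows. $q_\xi$ attains its minimum $\xi$ on $A$ (maximum principle for $\mathcal{L}q_\xi=0$ in $(A\cup B)^c$), so $q_\xi$ increases as we move away from $A$. Hence
\begin{equation*}
\nabla q_\xi = \|\nabla q_\xi\|\,n \quad\text{on } \partial A .
\end{equation*}
Likewise $\tilde q_\xi$ attains its maximum $1-\xi$ on $A$, so
\begin{equation*}
\nabla\tilde q_\xi = -\|\nabla\tilde q_\xi\|\,n .
\end{equation*}

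Substituting into \eqref{eq:j_xi_partial_def} and taking the inner product with $n$:
\begin{equation*}
\langle J^{(\xi)}_\partial,n\rangle = \frac{\rho}{Z^{(\xi)}_{AB}}\Big(\tilde q_\xi\,\langle n,Dn\rangle\|\nabla q_\xi\| + q_\xi\,\langle n,Dn\rangle\|\nabla\tilde q_\xi\|\Big).
\end{equation*}
Next we use $\langle n,Dn\rangle = \tfrac12\|\sigma^\top n\|^2$ and rewrite $\|\sigma^\top n\|^2\|\nabla q_\xi\| = \|\sigma^\top\nabla q_\xi\|^2/\|\nabla q_\xi\|$, and similarly for $\tilde q_\xi$. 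This gives the integrand of \eqref{eq:integral_partial_A}. Integrating against $\mathrm{d}S$ yields \eqref{eq:integral_partial_A}; the factor $q(x)$ there should be read as $q_\xi(x)$.

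For $\partial B$ the roles reverse. $q_\xi$ is maximal on $B$, so $\nabla q_\xi=-\|\nabla q_\xi\|n$ with $n$ the outward normal of $B$. $\tilde q_\xi$ is minimal on $B$, so $\nabla\tilde q_\xi = \|\nabla\tilde q_\xi\|n$. Both terms then flip sign, giving \eqref{eq:integral_partial_B} with the minus sign.

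To identify the creation and destruction rates, we write the distributional divergence of the global current. Integrating $\nabla\cdot J^{(\xi)}_{\mathrm R}$ over a thin shell around $\partial A$ and using the divergence theorem, the jump contributes a surface term $\langle J^{(\xi)}_\partial,n\rangle\,\delta_{\partial A}$. This is because the current points outward from $A$ into $(A\cup B)^c$, so the exterior value exceeds the interior value in the normal direction. Stationarity $\partial_t\rho=-\nabla\cdot J+(\text{sources})=0$ then forces a surface source of density $m_\xi=\langle J^{(\xi)}_\partial,n\rangle$ on $\partial A$. On $\partial B$ the normal jump is negative, which forces a sink of the same magnitude $m_\xi$.

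In the overdamped case we have $\tilde q_\xi=1-q_\xi$, so $\nabla\tilde q_\xi=-\nabla q_\xi$. The two terms then combine with coefficient $\tilde q_\xi+q_\xi=1$, giving \eqref{eq:m_xi_def_overdamped}.

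The main obstacle is justifying that the gradients are normal with the stated signs and nonvanishing on $\partial A$. The first tool is the Hopf lemma, which requires uniform ellipticity and a smooth boundary. Second, the interior flux $\xi(1-\xi)J/Z$ in \eqref{eq:j_xi_re} is continuous across the boundary on each side, so it correctly drops out of the jump. In the hypoelliptic case I would state the result under the assumption that $\nabla q_\xi\neq 0$ on the boundary and not attempt Hopf.
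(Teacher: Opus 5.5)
Your proposal is correct and takes the same route as the paper: on $\partial A$, align $\nabla q_\xi$ with the outward normal and $\nabla\tilde q_\xi$ against it (reversed on $\partial B$), substitute into the jump \eqref{eq:j_xi_partial_def}, use $D=\tfrac12\sigma\sigma^\top$, read off $m_\xi$ as the integrand, and get the overdamped case from $\tilde q_\xi=1-q_\xi$. The paper leaves implicit what you supply: the maximum-principle/Hopf justification of the gradient orientation and non-vanishing, the distributional-divergence argument identifying the source and sink, and the correction that $q(x)$ in \eqref{eq:integral_partial_A} should read $q_\xi(x)$.
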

\begin{proof}
    The outward normal $n$ of $\partial A$ has the same direction as $\nabla q_{\xi}$ and $-\nabla \tilde q_{\xi}$ for all points in $\partial A$, because $\partial A$ is a level set of $q_{\xi}$ and $\tilde q_{\xi}$. That is,
    \begin{talign} \label{eq:dn_expressions_partial_A}
        \mathrm{d}n(x) = \nabla q_{\xi}(x) / \|\nabla q_{\xi}(x)\| \, \mathrm{d}S(x), \qquad \mathrm{d}n(x) = -\nabla \tilde{q}_{\xi}(x) / \|\nabla \tilde{q}_{\xi}(x)\| \, \mathrm{d}S(x).
    \end{talign}
    Equation \eqref{eq:integral_partial_A} follows from using the expression of $J^{(\xi)}_{\partial}$ in \eqref{eq:j_xi_partial_def} and the expressions of $\mathrm{d}n$ in \eqref{eq:dn_expressions_partial_A}, as well as the definition $D = \frac{1}{2} \sigma \sigma^{\top}$. The rate of mass creation in equation \eqref{eq:m_xi_def} can be read off as the integrand of the integral in the right-hand side of \eqref{eq:integral_partial_A}. Mass must be created at $\partial A$ to make up for the flux discontinuity.   

    To prove \eqref{eq:integral_partial_B}, we follow the same structure, but in this case use that the outward normal $n$ of $\partial B$ has the same direction as $-\nabla q_{\xi}$ and $\nabla \tilde q_{\xi}$ for all points in $\partial B$, which explains the sign difference. In this case, mass needs to be destroyed at $\partial B$, because the integrand in the right-hand side of \eqref{eq:integral_partial_B} is $- m_{\xi}(x) < 0$.
\end{proof}

\subsection{Sampling from the reactive distribution with birth-death at $\partial A$, $\partial B$}
\label{subsec:different_noise_levels}

Following \Cref{lem:mass_creation_destruction}, to sample the reactive distribution we must simulate the SDE \eqref{eq:optimally_controlled_SDE} while ensuring that mass is created and destroyed at a rate $m_{\xi}$ at the boundaries $\partial A$ and $\partial B$, respectively. Mass creation, or \emph{birth}, can be handled by reinitializing particles with the distribution over $\partial A$ proportional to $m_{\xi}(x)$. For mass destruction, or \emph{death}, we need a criterion to decide when to kill a particle. For that we rely on \emph{boundary local times}, which keep track of the amount of time a particle has spent near a boundary.

\begin{definition}[Boundary local time] \label{def:boundary_local_time}
Let $X$ be the solution of the SDE \eqref{eq:optimally_controlled_SDE}. The boundary local time of $X$ on $\partial B$ is given by
\begin{talign*}
L_t^{\partial B}
:=
\lim_{\varepsilon\downarrow 0}
\frac{1}{2\varepsilon}
\int_0^t \mathbf{1}_{\{|\phi(X_s)|<\varepsilon\}}\,ds,
\end{talign*}
whenever the limit exists, where $\phi$ is a signed distance function to $\partial B$ (by definition, signed distance functions to $\partial B$ satisfy that $\|\nabla \phi(x)\| = 1$ for all $x \in \partial B$).
\end{definition}

\begin{remark}[Practical estimation of boundary local time]
In practice, if a trajectory $(X_{t_n})_{n=0}^N$ is simulated on a time grid $t_n = n\Delta t$, for $n=0,\dots,N$,
a natural estimator of the boundary local time is obtained by discretizing the occupation-time formula:
\begin{talign*}
L_t^{\partial A}
\approx
\widehat L_t^{\partial A,\varepsilon}
:=
\frac{\Delta t}{2\varepsilon}
\sum_{t_n \le t}
\mathbf{1}_{\{|\phi(X_{t_n})|<\varepsilon\}},
\end{talign*}
where $\varepsilon>0$ is a small boundary-layer width. Equivalently, one may use a kernel-smoothed version,
\begin{talign*}
\widehat L_t^{\partial A,\varepsilon,K}
:=
\Delta t \sum_{t_n \le t} K_\varepsilon(\phi(X_{t_n})),
\qquad
K_\varepsilon(z)=\frac{1}{\varepsilon}K\!\left(\frac{z}{\varepsilon}\right),
\end{talign*}
for a mollifier $K$ with unit mass. In practice, $\varepsilon$ should be chosen small relative to the geometry of $\partial A$, but large enough to be resolved by the time discretization; a common rule of thumb is to take $\varepsilon$ of order $\sqrt{\Delta t}$, matching the typical diffusive displacement over one time step.
\end{remark}

\begin{proposition}[Particle killing criterion] \label{prop:killing_time_tau_B}
Let $X$ be the solution of the SDE \eqref{eq:optimally_controlled_SDE}, $L_t^{\partial B}$ be the boundary local time of $X$ on $\partial B$, defined in \Cref{def:boundary_local_time}, and $m_{\xi}(x)$ be the boundary absorption rate per unit time and unit surface defined in \eqref{eq:m_xi_def}. For $x \in \partial B$, define the boundary killing rate per unit local time as
\begin{talign} \label{eq:lambda_xi_def}
\lambda_{\xi}(x) := \frac{m_{\xi}(x)}{\rho^{(\xi)}_{\mathrm{R}}(x)} = \frac{1}{2} \Big( \frac{\| \sigma^{\top} \nabla q_{\xi}(x) \|^2}{q_{\xi}(x) \|\nabla q_{\xi}(x)\|} + \frac{\| \sigma^{\top} \nabla \tilde q_{\xi}(x) \|^2}{\tilde q_{\xi}(x) \|\nabla \tilde q_{\xi}(x)\|} \Big)
\end{talign}
Then a killing time $\tau_B$ realizing the boundary absorption rate $m$ is given by
\begin{talign} \label{eq:killing_time_tau_B}
\tau_B
:=
\inf\big\{
t\ge 0:
\int_0^t \lambda_{\xi}(X_s)\, \mathrm{d}L_s^{\partial B}
\ge E
\big\},
\qquad
E\sim \mathrm{Exp}(1),
\end{talign}
where $E$ is independent of $X$.
\end{proposition}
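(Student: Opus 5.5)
The plan is to reduce the statement to the standard correspondence between killing at an additive functional and a boundary sink term in the Fokker--Planck equation, then match the sink density to $m_{\xi}$. Let $\hat X$ be the process obtained by running \eqref{eq:optimally_controlled_SDE} and killing at $\tau_B$, and let $\rho_t$ be the (sub-probability) density of $\hat X_t$ on $\{t<\tau_B\}$. Since $E\sim\mathrm{Exp}(1)$ is independent of $X$, we have
\begin{talign*}
\mathbb{P}(\tau_B>t\mid X)=\exp\Big(-\int_0^t\lambda_{\xi}(X_s)\,\mathrm{d}L^{\partial B}_s\Big).
\end{talign*}
Hence, for any test function $f$,
\begin{talign*}
\mathbb{E}\big[f(\hat X_t)\mathbf{1}_{\{t<\tau_B\}}\big]=\mathbb{E}\Big[f(X_t)\exp\Big(-\int_0^t\lambda_{\xi}(X_s)\,\mathrm{d}L^{\partial B}_s\Big)\Big].
\end{talign*}
This is a Feynman--Kac semigroup with a singular potential $\lambda_{\xi}\,\mathrm{d}L^{\partial B}$ concentrated on $\partial B$. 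The first step is to derive its forward equation. Apply It\^o's formula to $f(X_t)\exp(-\int_0^t\lambda_{\xi}\,\mathrm{d}L^{\partial B})$ and take expectations. This gives
\begin{talign*}
\frac{\mathrm{d}}{\mathrm{d}t}\int f\rho_t=\int(\mathcal{L}^{u^\star}f)\rho_t-\lim_{\varepsilon\downarrow0}\frac{1}{2\varepsilon}\int\mathbf{1}_{\{|\phi|<\varepsilon\}}\lambda_{\xi}f\rho_t ,
\end{talign*}
where $\phi$ is the signed distance function from \Cref{def:boundary_local_time}.

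The second step identifies the limit on the right. Using the occupation-time definition of $L^{\partial B}$ together with $\|\nabla\phi\|=1$ on $\partial B$, the coarea formula turns the limit into $\int_{\partial B}\lambda_{\xi}f\rho_t\,\mathrm{d}S$, with $\rho_t$ taken as its trace on $\partial B$ from the exterior. In weak form, the killed density therefore satisfies
\begin{talign*}
\partial_t\rho_t=-\nabla\cdot j_t-\lambda_{\xi}\,\rho_t\,\delta_{\partial B},
\end{talign*}
where $j_t$ is the current \eqref{eq:current_j_t}. In other words, mass is destroyed on $\partial B$ at rate $\lambda_{\xi}(x)\rho_t(x)$ per unit time and unit surface.

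The third step evaluates this rate at the reactive density. Setting $\rho_t=\rho^{(\xi)}_{\mathrm{R}}$, the destruction rate becomes $\lambda_{\xi}\rho^{(\xi)}_{\mathrm{R}}=m_{\xi}$ by the very definition \eqref{eq:lambda_xi_def}. By \Cref{lem:mass_creation_destruction}, $m_{\xi}$ is exactly the amount needed to balance the jump $\langle J^{(\xi)}_{\partial},n\rangle$ of the normal current across $\partial B$. Together with \Cref{thm:stationary}(i), which gives $\nabla\cdot J^{(\xi)}_{\mathrm{R}}=0$ off $A\cup B$, this shows that the distributional divergence of the current is cancelled on $\partial B$ by the killing term. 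So $\tau_B$ realizes the absorption rate $m_{\xi}$.

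I would also check the closed form in \eqref{eq:lambda_xi_def}. On $\partial B$ one has $\rho^{(\xi)}_{\mathrm{R}}=\rho q_{\xi}\tilde q_{\xi}/Z^{(\xi)}_{AB}$. Dividing $m_{\xi}$ by this quantity turns the factors $\tilde q_{\xi}$ and $q_{\xi}$ in $m_{\xi}$ into $1/q_{\xi}$ and $1/\tilde q_{\xi}$, respectively, which is exactly the stated expression.

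The main obstacle is the second step: making rigorous the passage from the occupation-time limit to a surface delta. This requires regularity of $\rho_t$ up to $\partial B$ and of $\partial B$ itself, and a careful convention for the local time. The symmetric $\frac{1}{2\varepsilon}$ normalization counts time on both sides of $\partial B$, so whether $\rho_t$ is continuous across $\partial B$ affects the constant. Here I would first assume $\partial B\in C^{2}$ and $\rho_t$ continuous across $\partial B$, so that the two one-sided layers each contribute half and the trace is unambiguous. The case of an exterior-only layer could then be handled by adjusting the normalization, which at worst changes $\lambda_{\xi}$ by a factor.
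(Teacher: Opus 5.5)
Your proposal is correct and follows the same route as the paper's sketch: the exponential clock run in local time removes mass on $\partial B$ at rate $\lambda_{\xi}\rho$ per unit time and unit surface, so requiring this to equal $m_{\xi}$ when the boundary density is $\rho^{(\xi)}_{\mathrm{R}}$ forces $\lambda_{\xi}=m_{\xi}/\rho^{(\xi)}_{\mathrm{R}}$, and the closed form follows by substituting \eqref{eq:m_xi_def} and \eqref{eq:react_dist}. Your Feynman--Kac and coarea derivation of the surface sink makes rigorous what the paper only asserts. Your continuity assumption is also the right one here: $q_{\xi},\tilde q_{\xi}$, and hence $\rho^{(\xi)}_{\mathrm{R}}$, are continuous across $\partial B$, so the symmetric $\tfrac{1}{2\varepsilon}$ normalization gives exactly $\int_{\partial B}\lambda_{\xi}\rho\,\mathrm{d}S$ with no extra factor.
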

\begin{proof}[Proof sketch]
The killing mechanism is implemented by an exponential clock run with respect to boundary local time. Hence the infinitesimal killing probability along a trajectory is $\lambda_{\xi}(X_t)\, dL_t^{\partial B}$.
Since $\rho^{(\xi)}_{\mathrm{R}}(x)$ is the boundary density at $x\in \partial B$, the absorbed mass per unit time and unit surface is given by $\lambda_{\xi}(x)\,\rho^{(\xi)}_{\mathrm{R}}(x)$.
Therefore, prescribing the absorbed mass to be $m_{\xi}(x)$ requires
\begin{talign*}
\lambda_{\xi}(x)\,\rho^{(\xi)}_{\mathrm{R}}(x)=m_{\xi}(x),
\end{talign*}
which yields $\lambda_{\xi}(x)= m_{\xi}(x) / \rho^{(\xi)}_{\mathrm{R}}(x)$. To prove the second equality in \eqref{eq:lambda_xi_def}, we substitute in the expressions for $m_{\xi}(x)$ and $\rho^{(\xi)}_{\mathrm{R}}(x)$ in equations \eqref{eq:m_xi_def} and \eqref{eq:react_dist}, and simplify.
\end{proof}

The following proposition shows that we can introduce a parameter $\kappa \geq 0$ that tunes the noise level of the SDE, while preserving $\rho^{(\xi)}_{\mathrm{R}}$ as the stationary distribution. Note that when $\kappa = 1$, we recover the SDE \eqref{eq:optimally_controlled_SDE}.
\begin{proposition} \label{prop:kappa_app}
For any $\kappa \geq 0$, the reactive density $\rho^{(\xi)}_{\mathrm{R}}$ is a stationary distribution of the SDE
\begin{talign} \label{eq:optimal_kappa_new}
    \mathrm{d}X_t &= \big( 
    \tfrac{1+\kappa}{2}\big(b(X_t) + 2D\nabla\log q_{\xi}(X_t)\big) + \tfrac{\kappa-1}{2}\big(\tilde b(X_t) + 2D\nabla\log\tilde q_{\xi}(X_t)\big)
    \big) \, \mathrm{d}t + \sqrt{\kappa} \sigma \, \mathrm{d}W_t,
\end{talign}
provided that the mass creation and destruction rates at $\partial A$ and $\partial B$ are $m_{\xi}$.
\end{proposition}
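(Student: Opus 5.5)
The plan is to show that the stationary current of $\rho^{(\xi)}_{\mathrm{R}}$ under \eqref{eq:optimal_kappa_new} is exactly $J^{(\xi)}_{\mathrm{R}}$ from \Cref{lem:flux_reactive}, independent of $\kappa$. Since \Cref{thm:stationary} already gives $\nabla\cdot J^{(\xi)}_{\mathrm{R}}=0$ on $\mathbb{R}^d\setminus(A\cup B)$, and \Cref{lem:mass_creation_destruction} gives the boundary discontinuity (which depends only on the current), stationarity with the same source--sink rates $m_\xi$ will then follow immediately.

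Write $b_\kappa$ for the drift in \eqref{eq:optimal_kappa_new}. The Fokker--Planck current for diffusion $\kappa D$ is
\begin{equation*}
j_\kappa = b_\kappa\rho_{\mathrm{R}} - \kappa\big(\nabla\cdot D\,\rho_{\mathrm{R}} + D\nabla\rho_{\mathrm{R}}\big).
\end{equation*}
I will decompose $b_\kappa = \tfrac{1+\kappa}{2}F + \tfrac{\kappa-1}{2}\tilde F$, where $F = b+2D\nabla\log q_\xi$ and $\tilde F=\tilde b+2D\nabla\log\tilde q_\xi$. Splitting the diffusive term as $\kappa = \tfrac{1+\kappa}{2}+\tfrac{\kappa-1}{2}$ gives
\begin{equation*}
j_\kappa = \tfrac{1+\kappa}{2}\,j_F + \tfrac{\kappa-1}{2}\,j_{\tilde F},
\end{equation*}
with $j_F = F\rho_{\mathrm{R}}-\nabla\cdot(D\rho_{\mathrm{R}})$ and $j_{\tilde F}=\tilde F\rho_{\mathrm{R}}-\nabla\cdot(D\rho_{\mathrm{R}})$. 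By \eqref{eq:reactive_flux}, $j_F=J^{(\xi)}_{\mathrm{R}}$. The key claim is then $j_{\tilde F} = -J^{(\xi)}_{\mathrm{R}}$, i.e.\ the time-reversed reactive process runs the same current backwards. Granting it, $j_\kappa = \big(\tfrac{1+\kappa}{2}-\tfrac{\kappa-1}{2}\big)J^{(\xi)}_{\mathrm{R}} = J^{(\xi)}_{\mathrm{R}}$ for every $\kappa\ge 0$.

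To prove the claim, I will repeat the computation of \eqref{eq:reactive_flux} with roles swapped. Using $\rho_{\mathrm{R}}\propto\rho q_\xi\tilde q_\xi$, one has $2D\nabla\log\tilde q_\xi\,\rho_{\mathrm{R}} - D\tilde q_\xi^{-1}\ldots$, which gives
\begin{equation*}
Z\,j_{\tilde F} = q_\xi\tilde q_\xi\tilde J + \rho q_\xi D\nabla\tilde q_\xi - \rho\tilde q_\xi D\nabla q_\xi .
\end{equation*}
Here $\tilde J=\tilde b\rho-\nabla\cdot(D\rho)$. From \eqref{eq:reverse_drift}, $\tilde b\rho = -b\rho+2D\nabla\rho+2\nabla\cdot D\,\rho$, hence $\tilde J = -J$. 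Substituting yields exactly $-Z J^{(\xi)}_{\mathrm{R}}$ by \eqref{eq:flux_expression}.

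The main obstacle is bookkeeping of the $\nabla\cdot D$ terms in the state-dependent case; the algebra itself is routine. There is also a conceptual subtlety at $\kappa=0$ and on the boundary. Inside $A\cup B$ the same identity shows the current is $\xi(1-\xi)J/Z$ for all $\kappa$, so the jump $J_\partial$ in \eqref{eq:j_xi_partial_def} is unchanged. Consequently the required creation and destruction rates are $m_\xi$, as in \Cref{lem:mass_creation_destruction}. Only the killing rate per unit local time would rescale with $\kappa$, which the statement does not concern.
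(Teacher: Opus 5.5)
Your proof is correct, and the key claim checks out once the missing expansion is written in: $Z^{(\xi)}_{AB}\,\nabla\cdot(D\rho^{(\xi)}_{\mathrm{R}}) = q_\xi\tilde q_\xi\,\nabla\cdot(D\rho) + \rho\tilde q_\xi D\nabla q_\xi + \rho q_\xi D\nabla\tilde q_\xi$. This expansion yields your formula for $Z^{(\xi)}_{AB}\,j_{\tilde F}$, and $\tilde J=-J$ follows from \eqref{eq:reverse_drift}.

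The paper organizes the computation differently. It adds and subtracts $(1-\kappa)\nabla\cdot(\nabla\cdot(D\rho_t))$ in the $\kappa=1$ Fokker--Planck equation \eqref{eq:FPE_rho_t}. At $\rho_t=\rho^{(\xi)}_{\mathrm{R}}$ it absorbs the subtracted diffusive flux into the drift as $(1-\kappa)(\nabla\cdot D + D\nabla\log\rho^{(\xi)}_{\mathrm{R}})$. It then matches this drift with $b_\kappa$, using $\nabla\log\rho^{(\xi)}_{\mathrm{R}} = \nabla\log\rho+\nabla\log q_\xi+\nabla\log\tilde q_\xi$ and the formula for $\tilde b$. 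You instead split both drift and diffusion into forward and backward Doob halves, and prove that the backward Doob current is $-J^{(\xi)}_{\mathrm{R}}$.

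Both routes reduce to the same pointwise identity $F+\tilde F = 2(\rho^{(\xi)}_{\mathrm{R}})^{-1}\nabla\cdot(D\rho^{(\xi)}_{\mathrm{R}})$. This identity is purely algebraic; the BKEs enter only through $\nabla\cdot J^{(\xi)}_{\mathrm{R}}=0$ in \Cref{thm:stationary}. The paper's version is shorter, since it never recomputes a current for the backward process. Yours explains why the family is $\kappa$-independent: forward and backward reactive dynamics carry opposite currents. It also makes explicit that the current itself, not just its divergence, is unchanged, including inside $A\cup B$. Hence the jump $J^{(\xi)}_{\partial}$ and the rates $m_\xi$ carry over, a point the paper's proof does not spell out.

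Two minor remarks. First, the algebra needs no change at $\kappa=0$, where the diffusive parts of $\tfrac12 j_F$ and $-\tfrac12 j_{\tilde F}$ simply cancel. Second, your aside that the killing rate per unit local time rescales with $\kappa$ depends on how local time is normalized. With the occupation-time definition of \Cref{def:boundary_local_time}, the balance $\lambda_\xi\,\rho^{(\xi)}_{\mathrm{R}} = m_\xi$ used in \Cref{prop:killing_time_tau_B} involves no $\kappa$, so it is safer to drop that remark.
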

\begin{proof}
Adding and subtracting $(1-\kappa) \nabla \cdot \big( \nabla \cdot \big( D \rho_t \big) \big)$ on the right-hand side of the Fokker-Planck equation \eqref{eq:FPE_rho_t} yields
\begin{talign}
\begin{split} \label{eq:FPE_rho_t_2}
    \partial_t \rho_t &= - \nabla \cdot \big( (b + 2 D \nabla \log q_{\xi}) \rho_t - (1-\kappa) \nabla \cdot \big( D \rho_t \big) \big) + \kappa \nabla \cdot \big( \nabla \cdot \big( D \rho_t \big) \big) \\
    &= - \nabla \cdot \big( (b + 2 D \nabla \log q_{\xi} - (1-\kappa)(\nabla \cdot D + D \nabla \log \rho_t) ) \rho_t \big) + \kappa \nabla \cdot \big( \nabla \cdot \big( D \rho_t \big) \big)
\end{split}
\end{talign}
Note that since $\rho_{\mathrm{R}}$ is a stationary solution of \eqref{eq:FPE_rho_t} over $\mathbb{R}^d \setminus (A \cup B)$, it is also a stationary solution of \eqref{eq:FPE_rho_t_2}. Plugging $\rho_t = \rho^{(\xi)}_{\mathrm{R}}$, and using that $\nabla \log \rho^{(\xi)}_{\mathrm{R}} = \nabla \log \rho + \nabla \log q_{\xi} + \nabla \log \tilde q_{\xi}$, yields
\begin{talign} 
\begin{split} \label{eq:stationary_optimal_2}
    0 &= - \nabla \cdot \Big( \big(b + 2 D \nabla \log q_{\xi} \\ &\qquad\qquad - (1-\kappa)(\nabla \cdot D + D (\nabla \log \rho + \nabla \log q_{\xi} + \nabla \log \tilde q_{\xi})) \big) 
    \rho^{(\xi)}_{\mathrm{R}} 
    \Big) + \kappa \nabla \cdot \big( \nabla \cdot \big( D \rho^{(\xi)}_{\mathrm{R}} \big) \big) \\ &= - \nabla \cdot \Big( \big(b - (1-\kappa)(\nabla \cdot D + D \nabla \log \rho ) \big) \rho^{(\xi)}_{\mathrm{R}} \Big) \\ &\quad - \nabla \cdot \Big( \big( (1 + \kappa) D \nabla \log q_{\xi} - (1 - \kappa) D \nabla \log \tilde q_{\xi} \big) 
    \rho^{(\xi)}_{\mathrm{R}} 
    \Big) + \kappa \nabla \cdot \big( \nabla \cdot \big( D \rho^{(\xi)}_{\mathrm{R}} \big) \big)
\end{split}
\end{talign}
Since $\tilde b = - b + 2 D \nabla \log \rho + 2 \nabla \cdot D$, this simplifies to
\begin{talign*}
    0 = - \nabla \cdot \Big( \big( 
    \tfrac{1+\kappa}{2}\big(b + 2D\nabla\log q_{\xi}\big) + \tfrac{\kappa-1}{2}\big(\tilde b + 2D\nabla\log\tilde q_{\xi}\big)
    \big) \rho^{(\xi)}_{\mathrm{R}} \Big) + \kappa \nabla \cdot \big( \nabla \cdot \big( D \rho^{(\xi)}_{\mathrm{R}} \big) \big).
\end{talign*}
And we can view this as the stationary Fokker-Planck equation for the SDE \eqref{eq:optimal_kappa_new}, which concludes the proof.
\end{proof}

Assembling \Cref{prop:killing_time_tau_B} and \Cref{prop:kappa_app} lets us construct a process to sample the reactive density, shown in the following result.
\begin{theorem}
Consider the birth-death process $X$ specified by
\begin{itemize}
    \item Birth: $X$ is initialized at $\partial A$ according to the distribution proportional to $m_{\xi}$ (defined in equation \eqref{eq:m_xi_def}).
    \item SDE: $X$ solves the SDE \eqref{eq:optimal_kappa_new}, for any $\kappa \in [0,+\infty)$.
    \item Death: $X$ is killed at the killing time $\tau_B$ defined in equation \eqref{eq:killing_time_tau_B}.
\end{itemize}
Then, the occupancy measure of $X$ is the (rescaled) reactive density $\rho_{\mathrm{R}}$. 
\end{theorem}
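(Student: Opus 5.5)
The plan is to identify the occupancy measure of the birth--death process with the unique solution of a stationary Fokker--Planck problem with boundary source and sink, and then check that $\rho^{(\xi)}_{\mathrm{R}}$ solves that problem. Let $\mu$ denote the (normalized) occupancy measure, i.e.\ the expected time spent in a set per cycle, divided by the expected cycle length. Equivalently, by the renewal structure (each particle is reborn on $\partial A$ when it dies), $\mu$ is the stationary density of the regenerated process. I will prove the claim by showing that $\mu$ satisfies, in the weak sense on $\mathbb{R}^d\setminus(A\cup B)$, the equation $\nabla\cdot J_\mu = 0$, where $J_\mu$ is the current of the SDE \eqref{eq:optimal_kappa_new} applied to $\mu$. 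The boundary conditions are an injection of mass at $\partial A$ with density proportional to $m_\xi$, and a removal of mass at $\partial B$ at rate $\lambda_\xi\,\mu\,\mathrm{d}L^{\partial B}$.

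First, I will derive this weak equation. Take a test function $f\in C_c^2$ and apply It\^o's formula over one cycle $[0,\tau_B]$, then take expectations. The result is
$$\mathbb{E}[f(X_{\tau_B})]-\mathbb{E}[f(X_0)] = \mathbb{E}\int_0^{\tau_B}\mathcal{L}_\kappa f(X_s)\,\mathrm{d}s,$$
with $\mathcal{L}_\kappa = b_\kappa\cdot\nabla + \kappa D:\nabla^2$. The term $\mathbb{E}[f(X_0)]$ equals $\int_{\partial A} f\, m_\xi\,\mathrm{d}S$ up to normalization. The term $\mathbb{E}[f(X_{\tau_B})]$ equals $\mathbb{E}\int_0^{\tau_B} f\,\lambda_\xi\,\mathrm{d}L^{\partial B}_s$, because the killing time defined by the exponential clock \eqref{eq:killing_time_tau_B} has compensator $\lambda_\xi\,\mathrm{d}L^{\partial B}$. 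By the occupation-time formula of \Cref{def:boundary_local_time}, this last expectation is $\int_{\partial B} f\,\lambda_\xi\,\mu\,\mathrm{d}S$ times the expected cycle length. This gives the weak stationary equation
$$\int \mathcal{L}_\kappa f\,\mu = \int_{\partial B} f\lambda_\xi\mu\,\mathrm{d}S - c\int_{\partial A} f m_\xi\,\mathrm{d}S .$$

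Second, I will verify that $\mu=\rho^{(\xi)}_{\mathrm{R}}$ satisfies this equation, using three inputs.
\begin{itemize}
\item By \Cref{prop:kappa_app}, $\nabla\cdot J_\mathrm{R}^{(\xi)}=0$ in the interior for every $\kappa$. The current for general $\kappa$ is the same as for $\kappa=1$, since the extra drift terms in the proof of \Cref{prop:kappa_app} exactly cancel the extra diffusion.
\item Integrating by parts therefore leaves only boundary terms, namely the normal flux $\langle J^{(\xi)}_\partial, n\rangle$ from \Cref{lem:flux_reactive}.
\item By \Cref{lem:mass_creation_destruction}, this normal flux equals $m_\xi$ on $\partial A$ and $-m_\xi$ on $\partial B$. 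Moreover $\lambda_\xi\rho^{(\xi)}_{\mathrm{R}}=m_\xi$ by \Cref{prop:killing_time_tau_B}.
\end{itemize}
Matching these terms shows the boundary contributions agree, with the constant $c$ fixed by total-mass balance: the total mass born per unit time equals the total mass killed. I will also need to check that no diffusive boundary term survives. That term is proportional to $\kappa D\nabla\rho^{(\xi)}_\mathrm{R}$ against $f$, and it is already included in the current $J^{(\xi)}_\mathrm{R}$, so integrating by parts once in divergence form avoids double counting.

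Third, uniqueness. The regenerated process is positive recurrent (assuming $\mathbb{E}\tau_B<\infty$), so its stationary distribution is unique. Equivalently, the adjoint boundary value problem has a unique normalized solution by the maximum principle, given ellipticity or H\"ormander hypoellipticity. Hence $\mu=\rho^{(\xi)}_\mathrm{R}$.

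The main obstacle is the first step: rigorously relating $\mathbb{E}\int\lambda_\xi\,\mathrm{d}L^{\partial B}$ to the surface integral $\int_{\partial B}\lambda_\xi\mu\,\mathrm{d}S$. This requires the local-time occupation formula with the normalization in \Cref{def:boundary_local_time}, and it requires handling the fact that the process spends zero Lebesgue time on $\partial B$ while still being able to cross into $B$. My first attempt will be to mollify: replace $\mathrm{d}L^{\partial B}$ by $\frac{1}{2\varepsilon}\mathbf{1}_{\{|\phi|<\varepsilon\}}\,\mathrm{d}s$, obtain a volumetric killing rate, and pass to the limit $\varepsilon\to0$. This mirrors the interior birth--death picture in \eqref{eq:mod_FPE_soft_main}. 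I also expect the case $\kappa=0$ to need separate treatment, since local time degenerates for a deterministic flow; there one instead kills at the first hitting time of $\partial B$.
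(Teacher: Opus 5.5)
For $\kappa>0$ your plan is correct and is the paper's argument made explicit. The paper gives no proof beyond saying the result follows by assembling \Cref{prop:kappa_app} with \Cref{prop:killing_time_tau_B}. Your renewal/It\^o derivation of the weak equation, the occupation formula for $L^{\partial B}$, and the $\kappa$-independence of the current at $\rho^{(\xi)}_{\mathrm{R}}$ are exactly the details it leaves out. Three small fixes.
(i) For $\kappa>0$ the particle enters $A$ and $B$, so your weak equation lives on all of $\mathbb{R}^d$, not on $\mathbb{R}^d\setminus(A\cup B)$. That is why the jump $J^{(\xi)}_{\partial}$ of \Cref{lem:flux_reactive}, rather than the exterior flux, is the right boundary term. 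You therefore also need $\nabla\cdot J^{(\xi)}_{\mathrm{R}}=0$ on $\mathrm{int}(A\cup B)$, which follows from \Cref{lem:flux_reactive}(ii) and $\nabla\cdot J=0$.
(ii) The terms $\kappa\rho^{(\xi)}_{\mathrm{R}}\langle D\nabla f,n\rangle$ from the two sides of each interface cancel because $\rho^{(\xi)}_{\mathrm{R}}$ is continuous. That, not ``double counting'', is why no diffusive boundary term survives.
(iii) Uniqueness of the invariant law of the regenerated process does not by itself show that $\rho^{(\xi)}_{\mathrm{R}}$ is that law; you need uniqueness for the forward transmission problem. The cleanest route is duality. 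Set $u_h(x)=\mathbb{E}_x\int_0^{\tau_B}h(X_s)\,\mathrm{d}s$, and use the transmission condition that local-time killing imposes on $u_h$ at $\partial B$ together with $\lambda_\xi\rho^{(\xi)}_{\mathrm{R}}=m_\xi$. Green's identity against $\rho^{(\xi)}_{\mathrm{R}}$ then gives $\int h\,\rho^{(\xi)}_{\mathrm{R}}\,\mathrm{d}x=\int_{\partial A}u_h\,m_\xi\,\mathrm{d}S$. This is the occupancy identity itself, and taking $h\equiv1$ gives $\mathbb{E}[\tau_B]=1/\int_{\partial A}m_\xi\,\mathrm{d}S$.

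The genuine gap is $\kappa=0$, which the statement explicitly allows. Your diagnosis there is not right. With the Lebesgue-time normalization of \Cref{def:boundary_local_time}, a deterministic path crossing $\partial B$ transversally with normal speeds $v^{\mathrm{out}}_n,v^{\mathrm{in}}_n$ accumulates the finite local time $\tfrac12\bigl(1/|v^{\mathrm{out}}_n|+1/|v^{\mathrm{in}}_n|\bigr)$. The real issue is that the zero-noise field equals $J^{(\xi)}_{\mathrm{R}}/\rho^{(\xi)}_{\mathrm{R}}$ outside $A\cup B$ but $J/\rho$ inside, and with it the conclusion itself fails on $\mathbb{R}^d$.

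In the reversible case $b_0=D\nabla q_\xi/\bigl(q_\xi(1-q_\xi)\bigr)$. This vanishes on $A\cup B$ and, by the Hopf lemma, has strictly positive outward normal component on the exterior side of $\partial A$. Particles born on $\partial A$ therefore move straight into $(A\cup B)^c$, follow the streamlines of $J^{(\xi)}_{\mathrm{R}}$ to $\partial B$, and come to rest there. There $L^{\partial B}$ grows at infinite rate, so the stated mechanism effectively reduces to your killing at first hitting. These particles spend no time in $A\cup B$, whereas $\rho^{(\xi)}_{\mathrm{R}}=\xi(1-\xi)\rho/Z^{(\xi)}_{AB}>0$ there. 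Hence at $\kappa=0$ the claim can hold only for the occupancy restricted to $(A\cup B)^c$ and renormalized. That version must be proved by a transport argument (divergence-free $J^{(\xi)}_{\mathrm{R}}$ with inflow density $m_\xi$ through $\partial A$), not by your full-space weak formulation.

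In the non-reversible case it is worse. At points of $\partial B$ where $J$ enters $B$, let $r\in(0,1)$ be the ratio of the interior to the exterior normal components of $J^{(\xi)}_{\mathrm{R}}$. A zero-noise crossing survives with probability $e^{-(1/r-r)/2}<r$, whereas the flux jump requires survival probability exactly $r$.
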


\subsection{Sampling from the reactive distribution with birth-death inside $A$, $B$}
\label{sec:mollified_tpt_appendix}

In the preceding subsection, birth and death events occur on the boundaries $\partial A$ and $\partial B$, requiring the estimation of boundary local times (\Cref{def:boundary_local_time}), and the ability to sample from a distribution on the boundary $\partial A$. An alternative approach---which avoids boundary local times entirely---is to extend the committor smoothly into~$A$ and~$B$ and use volumetric killing rates inside these regions. We develop this \emph{mollified} framework below.

\paragraph{Mollified committors.}
\label{app:mollified}
We defined the rescaled committors by imposing boundary conditions at $\partial A$, $\partial B$, and a BKE over $\mathbb{R}^d \setminus (A \cup B)$. To define the mollified committors, we consider PDEs over $\mathbb{R}^d$ which involve smooth functions $\alpha,\beta\ge 0$ supported in $A$ and $B$ respectively (with $\alpha$ and $\beta$ not both identically zero). Specifically, the \emph{mollified forward committor} satisfies
\begin{talign}\label{eq:BKE_soft}
\mathcal{L}q - (\alpha+\beta)q + \beta = 0, \qquad \forall x \in \mathbb{R}^d,
\end{talign}
and the \emph{mollified backward committor} satisfies
\begin{talign}\label{eq:BKE_back_soft}
\tilde{\mathcal{L}} \tilde q - (\alpha+\beta)\tilde q + \alpha = 0,  \qquad \forall x \in \mathbb{R}^d.
\end{talign}
Observe that over $\mathbb{R}^d \setminus (A \cup B)$, we are still imposing $\mathcal{L}q = 0$, $\tilde{\mathcal{L}} \tilde q = 0$.
Since $\alpha + \beta \ge 0$ (not identically zero), the operators $\mathcal{L} - (\alpha+\beta)$ and $\tilde{\mathcal{L}} - (\alpha+\beta)$ are invertible (the killing breaks the zero eigenvalue of $\mathcal{L}$), so equations~\eqref{eq:BKE_soft}--\eqref{eq:BKE_back_soft} are uniquely solvable. These act as soft killing rates: $\alpha$ absorbs into~$A$ and $\beta$ absorbs into~$B$.
By the Feynman--Kac formula, the solution to~\eqref{eq:BKE_soft} is
\begin{talign}\label{eq:FK_q}
q(x) = \mathbb{E}_x\!\left[\int_0^\infty \beta(X_t)\,e^{-\int_0^t(\alpha+\beta)(X_s)\,ds}\,dt\right],
\end{talign}
where $X_t$ solves the reference SDE~\eqref{eq:SDE_bg_state_dependent}. This is manifestly non-negative and bounded above by~$1$ (since $1-q$ satisfies the same equation with $\alpha$ and $\beta$ swapped). Similarly,
\begin{talign}\label{eq:FK_qR}
\tilde q(x) = \mathbb{E}_x^{\tilde b}\!\left[\int_0^\infty \alpha(\tilde X_t)\,e^{-\int_0^t(\alpha+\beta)(\tilde X_s)\,ds}\,dt\right],
\end{talign}
where $\tilde X_t$ solves the time-reversed SDE with drift $\tilde b$ defined in~\eqref{eq:reverse_drift}. The interpretation is: $q(x)$ is the probability that a particle started at~$x$, killed at rate $\alpha+\beta$, is absorbed by~$\beta$ (into~$B$) rather than by~$\alpha$ (into~$A$).

\begin{remark}[Well-posedness]\label{rem:well_posedness}
    The integral in~\eqref{eq:FK_q} converges because, by ergodicity, for large~$t$:
\begin{talign*}
\frac{1}{t}\int_0^t (\alpha+\beta)(X_s)\,ds \;\longrightarrow\; \langle\alpha+\beta\rangle_\rho = \int(\alpha+\beta)\,\rho\,dx > 0,
\end{talign*}
so the exponential factor decays like $e^{-t\langle\alpha+\beta\rangle_\rho}$, making the time integral finite. Uniqueness follows because any bounded solution $u$ of the homogeneous equation $[\mathcal{L} - (\alpha+\beta)]u = 0$ satisfies $u(x) = \mathbb{E}_x[u(X_t)\,e^{-\int_0^t(\alpha+\beta)(X_s)\,ds}]$, which tends to~$0$ as $t\to\infty$, forcing $u\equiv 0$.
\end{remark}

\paragraph{Modified Fokker--Planck equation with source and sink.}
\label{app:mod_FPE_soft}
In analogy with \Cref{subsec:reactive} and \Cref{subsec:different_noise_levels}, the reactive density is defined as 
$\rho_{\mathrm{R}} = \frac{1}{Z}\rho\,q\,\tilde q$, 
where $q$ and $\tilde q$ now solve the mollified equations~\eqref{eq:BKE_soft}--\eqref{eq:BKE_back_soft}, and $C = \int_{\mathbb{R}^d} \rho(x)\,q(x)\,\tilde q(x) \, \mathrm{d}x$. The reactive current $J_{\mathrm{R}} = 
(b + 2D\nabla\log q)\,\rho_\mathrm{R} - \nabla \cdot (D\rho_\mathrm{R})
$ still satisfies the identity
\begin{talign}\label{eq:JR_flux_new}
J_{\mathrm{R}} =
\frac{1}{Z} \big(q\tilde q\,J + \rho\tilde q\,D\nabla q - \rho q\,D\nabla\tilde q \big)
\end{talign}
since this is the same algebraic computation as in \Cref{lem:flux_reactive} (equation~\eqref{eq:reactive_flux}), and does not depend on the BKEs.

\begin{proposition}[Mollified reactive density PDE]
\label{prop:mod_FPE_soft}
The mollified reactive density satisfies
\begin{talign}\label{eq:mod_FPE_soft_new}
\nabla \cdot \big((b + 2D\nabla\log q)\,\rho_{\mathrm{R}} - \nabla \cdot (D\rho_{\mathrm{R}}) \big) = \frac{1}{C}\alpha\,\rho\, q - \frac{\beta}{q}\,\rho_{\mathrm{R}}.
\end{talign}
Equivalently, since $\rho_\mathrm{R} = \frac{1}{Z}\rho\,q\,\tilde q$:
\begin{talign}\label{eq:mod_FPE_soft_alt_new}
\nabla \cdot j_\mathrm{R} = \frac{1}{Z} \big( \alpha\,q - \beta\,\tilde q \big) \rho = \big(\frac{\alpha}{\tilde q} - \frac{\beta}{q}\big)\rho_\mathrm{R}.
\end{talign}
Since $\alpha = \beta = 0$ in $(A\cup B)^c$, over $(A\cup B)^c$
we have that $\nabla \cdot j_\mathrm{R} = 0$, recovering \Cref{thm:stationary}(i).
\end{proposition}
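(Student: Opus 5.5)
The proof is a direct computation of $\nabla\cdot J_{\mathrm R}$ from the algebraic identity \eqref{eq:JR_flux_new}, mirroring the proof of \Cref{thm:stationary}(i). The only change is that the mollified equations \eqref{eq:BKE_soft}--\eqref{eq:BKE_back_soft} replace the BKEs when we substitute for the second-order terms $D:\nabla^2 q$ and $D:\nabla^2\tilde q$. Throughout, $q,\tilde q$ denote the mollified committors and $Z$ the normalizing constant. I read the constant $C$ in \eqref{eq:mod_FPE_soft_new} as this same $Z$, since the text defines $C=\int\rho q\tilde q$.

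\textbf{Step 1.} Take the divergence of \eqref{eq:JR_flux_new}. Using $\nabla\cdot J=0$,
\[
Z\,\nabla\cdot J_{\mathrm R}=\nabla(q\tilde q)\cdot J+\nabla\cdot(\rho\tilde q D\nabla q)-\nabla\cdot(\rho q D\nabla\tilde q).
\]

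\textbf{Step 2.} Expand the second term exactly as in \eqref{eq:second_term_divergence}. The identity $\nabla\cdot(D\nabla q)=D:\nabla^2 q+(\nabla\cdot D)\cdot\nabla q$ holds unchanged. For the second-order term, use \eqref{eq:BKE_soft} in the form
\[
D:\nabla^2 q=-b\cdot\nabla q+(\alpha+\beta)q-\beta .
\]
This gives
\[
\nabla\cdot(\rho\tilde q D\nabla q)=-\tilde q\nabla q\cdot J+\rho\nabla\tilde q\cdot D\nabla q+\rho\tilde q\big((\alpha+\beta)q-\beta\big).
\]

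\textbf{Step 3.} Expand the third term as in \eqref{eq:third_term_divergence}. Use \eqref{eq:BKE_back_soft} in the form
\[
D:\nabla^2\tilde q=-\tilde b\cdot\nabla\tilde q+(\alpha+\beta)\tilde q-\alpha,
\qquad \tilde b=-b+2\rho^{-1}D\nabla\rho+2\nabla\cdot D .
\]
This gives
\[
\nabla\cdot(\rho q D\nabla\tilde q)=q\nabla\tilde q\cdot J+\rho\nabla\tilde q\cdot D\nabla q+\rho q\big((\alpha+\beta)\tilde q-\alpha\big).
\]
This $J$-term is $q\nabla\tilde q\cdot J$. The printed proof of \eqref{eq:third_term_divergence} writes $\tilde q\nabla q\cdot J$ at the last step, which appears to be a typo.

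\textbf{Step 4.} Subtract the two expansions and add $\nabla(q\tilde q)\cdot J=\tilde q\nabla q\cdot J+q\nabla\tilde q\cdot J$. The $J$-terms and the $\rho\nabla\tilde q\cdot D\nabla q$ terms cancel, and the $(\alpha+\beta)\rho q\tilde q$ terms cancel as well. What remains is
\[
Z\,\nabla\cdot J_{\mathrm R}=\alpha\rho q-\beta\rho\tilde q,
\]
which is \eqref{eq:mod_FPE_soft_alt_new}. Dividing by $\rho_{\mathrm R}=\rho q\tilde q/Z$ gives the form $(\alpha/\tilde q-\beta/q)\rho_{\mathrm R}$. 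Substituting $\beta\rho\tilde q/Z=(\beta/q)\rho_{\mathrm R}$ gives \eqref{eq:mod_FPE_soft_new}.

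The statement on $(A\cup B)^c$ follows immediately, since $\alpha$ and $\beta$ vanish there.

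\textbf{Main obstacle.} The delicate point is Step 3: the terms involving $\nabla\log\rho$ and $\nabla\cdot D$ must combine into exactly $+q\nabla\tilde q\cdot J$ so that they cancel against $\nabla(q\tilde q)\cdot J$. I would check this sign bookkeeping carefully, including the state-dependent $\nabla\cdot D$ contributions. I would also confirm that \eqref{eq:JR_flux_new} itself uses only the definition of $J$ and $\nabla\log q$, and not the BKE. The remaining steps are routine.
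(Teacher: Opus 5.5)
Your proposal is correct and follows the paper's proof essentially step for step: take the divergence of the algebraic current identity \eqref{eq:JR_flux_new}, expand the two divergence terms as in the proof of \Cref{thm:stationary}(i) with the killed equations substituted for $D:\nabla^2 q$ and $D:\nabla^2\tilde q$, and observe that everything cancels except $\alpha\rho q-\beta\rho\tilde q$. Your side remarks are also right: the $J$-term at the end of the paper's third-term expansion in the proof of \Cref{thm:stationary} should read $q\nabla\tilde q\cdot J$, as the paper itself writes in this proposition's proof. The constant $C$ is indeed the same normalizer $Z$.
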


\begin{proof}
The computation follows the proof of \Cref{thm:stationary}(i). By~\eqref{eq:JR_flux_new} and $\nabla \cdot J = 0$, we have $\nabla \cdot j_\mathrm{R} = \frac{1}{Z} \big( \nabla(q\tilde q)\cdot J + \nabla \cdot (\rho\tilde q\,D\nabla q) - \nabla \cdot (\rho q\,D\nabla\tilde q) \big)$, as in~\eqref{eq:nabla_j_xi}. The two divergence terms are expanded exactly as in~\eqref{eq:second_term_divergence}--\eqref{eq:third_term_divergence}, with the only difference that the BKEs now include killing terms: $D:\nabla^2 q = -b\cdot\nabla q + (\alpha+\beta)q - \beta$ and $D:\nabla^2 \tilde q = -\tilde b\cdot\nabla \tilde q + (\alpha+\beta)\tilde q - \alpha$. This gives
\begin{align*}
\nabla \cdot (\rho\tilde q\,D\nabla q) &= -\tilde q\nabla q\cdot J + \rho\nabla\tilde q\cdot D\nabla q + \rho\tilde q\big[(\alpha+\beta)q - \beta\big], \\
\nabla \cdot (\rho q\,D\nabla\tilde q) &= q\nabla\tilde q\cdot J + \rho\nabla q\cdot D\nabla\tilde q + \rho q\big[(\alpha+\beta)\tilde q - \alpha\big].
\end{align*}
The terms without $\alpha,\beta$ cancel exactly as before, leaving only the source terms:
\begin{talign*}
\nabla \cdot j_\mathrm{R} = \frac{1}{Z} \Big( \rho\tilde q\big[(\alpha+\beta)q - \beta\big] - \rho q\big[(\alpha+\beta)\tilde q - \alpha\big] \Big) = \frac{1}{Z} \big( \alpha\rho q - \beta\rho\tilde q \big).
\end{talign*}
\end{proof}

Integrating~\eqref{eq:mod_FPE_soft_alt_new} over the domain gives the conservation law
\begin{talign}\label{eq:rate_balance_mollified}
\int \alpha\,\rho\, q\,dx = \int \beta\,\rho\, \tilde q\,dx,
\end{talign}
the mollified analogue of the TPT rate formula.

\paragraph{The reactive process.}
\label{app:doob_reactive}
We can identify equations \eqref{eq:mod_FPE_soft_new}--\eqref{eq:mod_FPE_soft_alt_new} as the stationary Fokker-Planck equation for a process $X$ that satisfies the SDE
\begin{talign}\label{eq:SDE_soft}
dX_t = \big(b(X_t) + 2D\nabla\log q(X_t)\big)\,dt + \sigma\,dW_t,
\end{talign}
with the following birth-death mechanism:
\begin{itemize}
\item \textbf{Death:} a particle at position~$x$ is killed at rate $\beta(x)/q(x)$.
\item \textbf{Birth:} upon killing, the particle respawns at a new position $y$ drawn from
\begin{talign}
p_{\mathrm{birth}}(y) = \frac{\alpha(y)\,\rho(y)\,q(y)}{Z_{\mathrm{birth}}}, \qquad Z_{\mathrm{birth}} = \int \alpha\,\rho\,q\,dx.
\end{talign}
\end{itemize}
The birth distribution $p_{\mathrm{birth}} \propto \alpha\,\rho\,q$ is supported in~$A$ (where $\alpha > 0$), weighted by the stationary density~$\rho$ and the forward committor~$q$. In the hard-boundary limit ($\alpha\to\infty$ on $A$, $\beta\to\infty$ on~$B$), the birth and death concentrate on $\partial A$ and $\partial B$, recovering the boundary birth-death process of \Cref{subsec:different_noise_levels}.

\begin{remark}[Relation between the killed and reactive processes]\label{rem:killed_vs_reactive}
The mollified framework involves two distinct processes:
\begin{itemize}
\item The \emph{killed process}: the original SDE~\eqref{eq:SDE_bg_state_dependent} with killing at rate $\alpha+\beta$. Particles are absorbed and never replaced; this process has no stationary density and is used only to define the committors via~\eqref{eq:FK_q}--\eqref{eq:FK_qR}.
\item The \emph{reactive process}: the $h$-transformed SDE~\eqref{eq:SDE_soft} with drift $b + 2D\nabla\log q$, killing at rate $\beta/q$, and respawning from $p_{\mathrm{birth}} \propto \alpha\rho q$. The birth term recycles particles absorbed near~$B$ back to~$A$, maintaining the stationary density~$\rho_\mathrm{R}$.
\end{itemize}
\end{remark}

\paragraph{Connection to the rescaled
committor.}
\label{app:xi_connection}
We now show that the rescaled committor of~\eqref{eq:kbe_xi} (with $q|_A = \xi$, $q|_B = 1-\xi$) can always be extended to a global solution of the mollified equation~\eqref{eq:BKE_soft}.

\begin{proposition}\label{prop:xi_extension}
For any $\xi \in (0,\tfrac{1}{2})$, let $q$ solve~\eqref{eq:kbe_xi}. Then $q$ can be extended to a smooth function on all of~$\mathbb{R}^d$ satisfying the mollified equation $\mathcal{L}q - (\alpha+\beta)q + \beta = 0$ for some non-negative $\alpha$ supported in~$A$ and $\beta$ supported in~$B$.
\end{proposition}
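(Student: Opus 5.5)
The plan is to construct the extension by hand and then \emph{define} the killing rates through the equation itself. Inside $A$ we take $\beta=0$; the mollified equation $\mathcal{L}q-(\alpha+\beta)q+\beta=0$ then forces
\begin{equation*}
\alpha=\frac{\mathcal{L}q}{q}\quad\text{on } A .
\end{equation*}
Inside $B$ we take $\alpha=0$, which forces
\begin{equation*}
\beta=\frac{-\mathcal{L}q}{1-q}\quad\text{on } B .
\end{equation*}
The statement therefore reduces to building an extension $\bar q$ of $q_\xi$ such that:
\begin{itemize}
\item $\bar q$ is smooth across $\partial A\cup\partial B$;
\item $0<\bar q<1$ everywhere, so that the divisions above are legitimate;
\item $\mathcal{L}\bar q\ge 0$ in $A$ and $\mathcal{L}\bar q\le 0$ in $B$.
\end{itemize}
By the symmetry $q\mapsto 1-q$, $A\leftrightarrow B$ (the function $1-q_\xi$ solves the same problem with the roles of $A$ and $B$ exchanged), it suffices to treat $A$.

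\textbf{Step 1 (extension).} Work in a tubular neighbourhood of $\partial A$, with coordinates $(s,y)$, where $s$ is the signed distance ($s<0$ inside $A$) and $y\in\partial A$. Assuming a $C^{2+\alpha}$ boundary as in \Cref{thm:V_1_V_2}, Schauder regularity up to the boundary gives $q_\xi\in C^{2+\alpha}$ on the closure of the exterior. A Seeley/Whitney reflection then produces $q_{\mathrm{ext}}\in C^{2+\alpha}$ on a collar $\{-\delta<s\le 0\}$ that matches $q_\xi$ to second order. By the Hopf lemma $\partial_s q_\xi>0$ on $\partial A$, so $q_{\mathrm{ext}}<\xi$ just inside $A$ and is still positive on a thin collar.

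\textbf{Step 2 (convexification).} Set $\bar q=q_{\mathrm{ext}}+\lambda\,\chi(s)$, where $\chi$ vanishes to high order at $s=0$ and is convex in $s$. The point is that the normal second derivative, weighted by $D_{nn}=n\cdot Dn>0$, dominates the term $|\mathcal{L}q_{\mathrm{ext}}|=O(|s|^{\alpha})$ near the boundary. Deeper inside $A$, glue $\bar q$ monotonically to a positive constant $c_A<\xi$. On the constant region $\mathcal{L}\bar q=0$, so $\alpha=0$ there, which is allowed. On the glueing region, choose the profile in $s$ convex enough that $D_{nn}\partial_s^2\bar q$ beats the first-order terms $b\cdot\nabla\bar q$, which are bounded on the compact collar. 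Then $\alpha:=\mathcal{L}\bar q/\bar q\ge 0$ is supported in $A$. The construction in $B$ is the mirror image, with a concave profile and $\beta:=-\mathcal{L}\bar q/(1-\bar q)\ge 0$. Finally, mollify the profiles so that everything is smooth away from $\partial(A\cup B)$.

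\textbf{Main obstacle.} The delicate step is reconciling smoothness across $\partial A$ with the sign condition. From outside, $\mathcal{L}q_\xi=0$ up to $\partial A$, so a smooth $\alpha$ must vanish at $\partial A$ from inside. That in turn requires $\mathcal{L}\bar q\ge 0$ to hold with a \emph{flat} start. A flat correction $\chi$ such as $e^{-1/|s|}$ cannot dominate the $O(|s|^\alpha)$ residual of a mere $C^{2+\alpha}$ extension. What I would try first is to exploit that the data are smooth, so that $q_\xi$ is $C^\infty$ up to the boundary. Then a Borel-type extension can be chosen with $\mathcal{L}q_{\mathrm{ext}}$ vanishing to infinite order at $s=0$ (match all $s$-derivatives of $\mathcal{L}q_\xi=0$), after which a flat convex $\chi$ does dominate. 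If this fails, I would settle for $\alpha$ that is nonnegative, bounded and smooth only inside $\mathrm{int}(A)$. That weaker regularity still suffices for the Feynman--Kac representation~\eqref{eq:FK_q} and for \Cref{prop:mod_FPE_soft}.

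A second caveat is that Step 2 uses $D_{nn}>0$, i.e.\ ellipticity in the normal direction. In the hypoelliptic (underdamped) case this holds only where the boundary normal has a momentum component, and I would flag this as an additional assumption.
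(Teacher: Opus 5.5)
Your proposal is correct, and its reduction is the paper's. Since $\beta=0$ on $A$ and $\alpha=0$ on $B$, the killing rates are forced to be $\alpha=\mathcal{L}q/q$ and $\beta=\mathcal{L}q/(q-1)$. Everything then hinges on an extension with $0<q<1$ that is $\mathcal{L}$-subharmonic in $A$ and $\mathcal{L}$-superharmonic in $B$.

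The difference lies in the construction. The paper only asserts that such a continuation exists, and it enforces matching of $q$ and its normal derivative alone. Its own one-dimensional example, where $q''(0^-)=\alpha\xi\neq 0=q''(0^+)$, shows that this gives a $C^{1,1}$ extension with $\alpha$ jumping at $\partial A$. That is adequate for the downstream Feynman--Kac and birth--death uses, but not for the word ``smooth'' in the statement. Your observation that a smooth extension forces $\mathcal{L}\bar q$, hence $\alpha$, to vanish to infinite order at $\partial A$ is what the statement as written needs, and so is the Seeley extension plus flat convex correction. The price is smooth data up to the boundary and $n\cdot Dn>0$ along $\partial A$. You are right to flag that $n\cdot Dn>0$ fails at characteristic points in the hypoelliptic case, which the paper does not address.

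Two steps need tightening.

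\textbf{Choosing the flat correction.} The flat correction must be chosen \emph{after} the residual $r=\mathcal{L}q_{\mathrm{ext}}$. A fixed profile such as $e^{-1/|s|}$ is beaten by a flat residual of size $e^{-1/\sqrt{|s|}}$, and such a residual can always be produced by adding a flat function to $q_{\mathrm{ext}}$. Instead, take $\chi''$ to be a smooth, flat, monotone majorant of $2\sup_y|r(s,y)|/\inf n\cdot Dn$, with the infimum taken over the collar; every flat function admits such a majorant. Monotonicity gives $|\chi'(s)|\le|s|\,\chi''(s)$, which absorbs the first-order terms of $\mathcal{L}\chi(s)$ on a thin collar.

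\textbf{Gluing to $c_A$.} As written this step is too loose, for two reasons. A ``convex enough'' profile pushes $\bar q$ up, against the required descent to $c_A$. And where the profile flattens, its $\partial_s^2$ can no longer dominate the tangential and mixed second-order terms coming from the $y$-dependence of $q_{\mathrm{ext}}$. A clean fix is as follows.
\begin{itemize}
\item Work on a collar of width $\delta$ small relative to $\min_{\partial A}\partial_s q_\xi$, which is positive by Hopf. Then $u=q_{\mathrm{ext}}+\chi(s)$ is subharmonic on the collar and drops by roughly $\delta\min\partial_s q_\xi$ across it.
\item Choose $c_A$ within $O(\delta)$ of $\xi$ so that $u$ falls below $c_A-\epsilon$ before the inner edge of the collar.
\item Set $\bar q=c_A+G(u-c_A)$, where $G$ is a smooth, convex, nondecreasing approximation of $z\mapsto\max(z,0)$ that vanishes for $z\le-\epsilon$ and is the identity for $z\ge\epsilon$.
\end{itemize}
Then $\mathcal{L}\bar q=G'(u-c_A)\,\mathcal{L}u+G''(u-c_A)\,|\nabla u|_D^2\ge 0$, with $\bar q=u$ near $\partial A$ and $\bar q\equiv c_A$ deeper in $A$.

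Finally, your fallback undersells the finite-regularity case. Write $\theta$ for the H\"older exponent to avoid the clash with the killing rate. A non-flat correction with $\chi''\sim|s|^{\theta'}$, $\theta'<\theta$, beats the $O(|s|^{\theta})$ residual. It yields a killing rate in $C^{\theta'}$ that vanishes continuously at $\partial A$, which is all that Schauder theory and the Feynman--Kac representation require.
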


\begin{proof}[Proof sketch]
In $(A\cup B)^c$, the equation $\mathcal{L}q = 0$ holds and $\alpha = \beta = 0$, so the mollified equation is automatically satisfied. It remains to construct smooth extensions inside~$A$ and~$B$ with $0 < q < 1$ such that the implied killing rates
\[
\alpha = \frac{\mathcal{L}q}{q} \quad \text{inside } A, \qquad \beta = \frac{\mathcal{L}q}{q-1} \quad \text{inside } B
\]
are non-negative. Since $q > 0$ in~$A$, the condition $\alpha \ge 0$ requires $\mathcal{L}q \ge 0$ ($q$ is $\mathcal{L}$-subharmonic in~$A$). Since $q < 1$ in~$B$, the condition $\beta \ge 0$ requires $\mathcal{L}q \le 0$ ($q$ is $\mathcal{L}$-superharmonic in~$B$). Such extensions exist: one can smoothly continue~$q$ so that it decreases from~$\xi$ on~$\partial A$ toward a positive minimum inside~$A$ (subharmonic), and increases from~$1-\xi$ on~$\partial B$ toward a maximum below~$1$ inside~$B$ (superharmonic). The matching of~$q$ and its normal derivative across~$\partial A$ and~$\partial B$ is ensured by construction.
\end{proof}

The extension inside~$A$ and~$B$ is not unique---different choices yield different killing rates~$\alpha$ and~$\beta$---but this non-uniqueness is harmless: the reactive density~$\rho_\mathrm{R}$ and the reactive drift in $(A\cup B)^c$ depend only on~$q$ there, which is determined by~\eqref{eq:kbe_xi} independently of the extension. The hard-boundary limit $\xi \to 0$ recovers the classical formulation: the killing rates~$\alpha,\beta$ concentrate into hard absorption on~$\partial A$ and~$\partial B$.

\paragraph{Constant killing rates.}
\label{app:constant_killing}
A natural choice is to take $\alpha$ and $\beta$ constant inside~$A$ and~$B$:
\[
\alpha = \frac{\delta}{T_A}, \qquad \beta = \frac{\delta}{T_B},
\]
where $T_A = \max_{x\in A}\mathbb{E}_x[\tau_{\partial A}]$ and $T_B = \max_{x\in B}\mathbb{E}_x[\tau_{\partial B}]$ are the maximal mean exit times from~$A$ and~$B$ for the original process~\eqref{eq:SDE_bg_state_dependent}, and $\delta > 0$ is a free parameter. The Feynman--Kac representation inside~$A$ gives $q(x) = \xi\,\mathbb{E}_x[e^{-\alpha\tau_{\partial A}}]$. By Jensen's inequality,
\[
\xi\,e^{-\delta} \le q(x) \le \xi \qquad \text{inside } A,
\]
and similarly $1-\xi \le q(x) \le 1-\xi\,e^{-\delta}$ inside~$B$, so $q$ remains bounded away from~$0$ and~$1$. The birth distribution $p_{\mathrm{birth}}(y) \propto \rho(y)\,\mathbb{E}_y[e^{-\alpha\tau_{\partial A}}]$ inside~$A$ interpolates between:
\begin{itemize}
\item $\delta \to 0$: $p_{\mathrm{birth}} \to \rho|_A / \int_A \rho\,dx$ (respawn uniformly w.r.t.~$\rho$);
\item $\delta \gg 1$: $p_{\mathrm{birth}}$ concentrates near~$\partial A$, recovering the boundary picture of \Cref{subsec:different_noise_levels}.
\end{itemize}

\begin{remark}[Approximation from constant killing rates]\label{rem:approx}
Constant killing rates generically introduce a mismatch in~$\nabla q$ at~$\partial A$ and~$\partial B$ (unless $\alpha,\beta$ are chosen by derivative matching, as in the proof of \Cref{prop:xi_extension}). This mismatch is controlled: $\rho_\mathrm{R}$ in $(A\cup B)^c$ depends only on the exterior~$q$ (independently of the extension), so the error only affects the birth-death mechanism near~$\partial A$ and~$\partial B$. A respawned particle exits~$A$ in time~$O(T_A)$ regardless of where it starts, so the effect on sampled reactive trajectories in the exterior is small.
\end{remark}

\begin{remark}[Role of $\delta$]\label{rem:delta}
The parameter~$\delta$ does not affect $\rho_\mathrm{R}$ in the exterior (which is determined by~$q$ on $\mathcal{S}^c$), but it controls the throughput of the birth-death process: the killing and birth rates scale as~$\delta/T_B$ and~$\delta/T_A$ respectively. Taking~$\delta$ large concentrates the birth distribution near~$\partial A$, while $\delta\to 0$ gives the simple birth distribution $\rho|_A/\!\int_A\rho\,dx$.
\end{remark}

\paragraph{Example: pure diffusion in 1D.}
\label{app:example_1D}
Consider $b = 0$, $D = 1$ ($dX_t = \sqrt{2}\,dW_t$) on $[-1,2]$ with reflecting boundaries, $A = [-1,0]$, $B = [1,2]$. In the exterior $(0,1)$, $q'' = 0$ with $q(0) = \xi$, $q(1) = 1-\xi$ gives $q(x) = \xi + (1-2\xi)x$. Inside~$A$, solving $q'' = \alpha q$ with $q(0) = \xi$, $q'(-1) = 0$ yields
\[
q(x) = \xi\,\frac{\cosh[\sqrt{\alpha}\,(x+1)]}{\cosh(\sqrt{\alpha})}, \qquad x \in [-1,0],
\]
with $\alpha = 2\delta$ (since $T_A = 1/2$). The birth distribution $p_{\mathrm{birth}}(x) \propto \cosh[\sqrt{\alpha}\,(x+1)]/\cosh(\sqrt{\alpha})$ interpolates between the uniform distribution on~$A$ ($\delta \to 0$) and a point mass at $\partial A = \{0\}$ ($\delta \to \infty$). Matching the derivative $q'(0^-) = q'(0^+) = 1-2\xi$ determines $\alpha$ via $\xi\sqrt{\alpha}\tanh(\sqrt{\alpha}) = 1-2\xi$; for small~$\xi$, $\alpha \approx 1/\xi^2$ and the penetration depth is $\ell \approx \xi$. By symmetry, the same analysis applies to~$B = [1,2]$ with $1-q$ replacing~$q$ and $\beta = 2\delta$.

\section{Proofs}
\label{sec:first_order_optimality}

Throughout this section, $V$ denotes the value function of the general SOC problem (which may include a running cost~$f$); in the main text, $\Phi$ denotes the specialization to $f = 0$. The coefficients $b(x,t)$ and $\sigma(x,t)$ are allowed to depend on time for generality; the main text considers the autonomous (time-independent) case.

\subsection{Preliminaries}

\subsubsection{Admissible controls}
\begin{definition}[Admissible Control] \label{def:admissible_controls}
Fix an initial time \(t\in[0,T]\). A process $\alpha = (\alpha_{s})_{s\in[t,T]}$ is called an \emph{admissible control} (denoted \(\alpha\in\mathcal{A}\)) if it satisfies:
\begin{enumerate}[label=(\roman*),left=2pt]
    \item \textbf{Values in the action set:} 
    \begin{talign*}
      \alpha_{s}(\omega)\in A,\quad \text{for all }s\in[t,T],\ \omega\in\Omega,
    \end{talign*}
    where \(A\subset\mathbb{R}^{m}\) is the given control set.
    
    \item \textbf{Adaptedness / non-anticipativity:}
    \(\alpha\) is progressively measurable with respect to the filtration \(\{\mathcal{F}_{s}\}_{s\ge t}\) generated by the Brownian motion \(W\).  Equivalently,
    \begin{talign*}
      \alpha_{s} \text{ is } \mathcal{F}_{s}\text{-measurable for each }s\,.
    \end{talign*}
    
    \item \textbf{Integrability / well-posedness:}  There exists \(p>1\) (corresponding to the coercivity exponent of the running cost) such that
    \begin{talign} \label{eq:alpha_bound}
      \mathbb{E}\!\Bigl[\int_{t}^{T} \bigl|\alpha_{s}\bigr|^{p}\,\mathrm{d}s\Bigr] < \infty.
    \end{talign}
    If we assume that $b : [0,T]\times\mathbb{R}^d\times A \to \mathbb{R}^d$, and $\sigma : [0,T]\times \mathbb{R}^d\times A \to \mathbb{R}^{d\times k}$ satisfy:
    \begin{enumerate}[label=(\roman*)]
      \item Measurability: $(t,a)\mapsto b(x,t,a), \sigma(x,t,a)$ are Borel measurable, $x\mapsto b(x,t,a),\ \sigma(x,t,a)$ are continuous.
      \item Local Lipschitz in \(x\):  For each $R > 0$, there exists \(L_R>0\) such that $|b(x,t,a)-b(y,t,a)|+\|\sigma(x,t,a)-\sigma(y,t,a)\|\le L_R \| x-y\|$, for all \(t\in[0,T]\), \(a\in A\), and \(x,y\in\mathbb{R}^d\) such that $\|x\|,\|y\| < R$.
      \item Polynomial growth:  There exists \(K>0\) and $r > 1$ such that $|b(x,t,a)| + \|\sigma(x,t,a)\| \le K\,\big(1 + \lvert x\rvert^r + \lvert a\rvert^r\big)$, for all \((x,t,a)\).  
    \end{enumerate}
    then equation \eqref{eq:alpha_bound} ensures that the stochastic differential equation
    \begin{talign*}
      \mathrm{d}X_{s}^{x,t,\alpha}
      = b\bigl(X_{s}^{x,t,\alpha},s,\alpha_{s}\bigr)\,\mathrm{d}s 
      \;+\;\sigma\bigl(X_{s}^{x,t,\alpha},s,\alpha_{s}\bigr)\,\mathrm{d}W_{s},
      \quad X_{t}^{x,t,\alpha}=x,
    \end{talign*}
    admits a (strong or weak) solution on \([t,T]\).
\end{enumerate}
\end{definition}

\subsubsection{Girsanov theorem and KL divergence between path measures} \label{subsec:girsanov}
\begin{theorem}[Girsanov's Theorem for SDEs] \label{thm:girsanov}
Let \((\Omega,\mathcal F,(\mathcal F_t)_{t\ge0},\mathbb{Q})\) be a filtered probability space satisfying the usual conditions, and let $W=(W_t)_{t\ge0}$
be an $\mathbb{R}^m$--valued $\mathcal{F}_t$--Brownian motion under $\mathbb{P}$.  Suppose $\theta=(\theta_t)_{t\ge0}$
is an $\mathbb{R}^m$--valued, progressively measurable process
and let \(\tau\) be an \((\mathcal F_t)\)--stopping time with \(\tau<\infty\) a.s.  Suppose the ``random-horizon'' Novikov condition holds:
\begin{talign*}
\mathbb{E}_{\mathbb{P}}\Bigl[\exp\bigl(\tfrac12\!\int_0^{\tau}\|\theta_s\|^2\,\mathrm{d}s\bigr)\Bigr] < \infty.
\end{talign*}
Let $b\colon \mathbb{R}^d \times [0,T] \to\mathbb{R}^n$, $\sigma\colon \mathbb{R}^d \times [0,T] \to\mathbb{R}^{d\times m}$
be measurable functions satisfying the usual conditions \footnote{Measurability, local Lipschitzness, polynomial growth.} guaranteeing a unique strong solution $X$ of the SDE
\begin{talign*}
\begin{cases}
\mathrm{d}X_t = b(X_t,t)\,\mathrm{d}t \;+\;\sigma(X_t,t)\,\mathrm{d}W_t,\\
X_0 = x\in\mathbb{R}^d.
\end{cases}
\end{talign*}
Define the stopped exponential
\begin{talign} \label{eq:Z_t}
Z_t
\;=\;
\exp \big(
\int_0^{t\wedge\tau}\theta_s^\top\, \mathrm{d}W_s
      -\tfrac12\int_0^{t\wedge\tau}\|\theta_s\|^2\,\mathrm{d}s \big),
\quad t\ge0.
\end{talign}
Then:
\begin{enumerate}[label=(\roman*),left=2pt]
  \item \((Z_t)_{t\ge0}\) is a uniformly integrable \(\mathbb{Q}\)--martingale and in particular
  \(\mathbb{E}_{\mathbb{P}}[Z_\tau]=1\).
  \item One may define a new measure $\mathbb{Q}^{\theta}$
  on \(\mathcal F_\tau\) by
  \begin{talign} \label{eq:dP_theta_dP}
    \frac{\mathrm{d}\mathbb{Q}^{\theta}}{\mathrm{d}\mathbb{Q}}(W) := \frac{\mathrm{d}\mathbb{Q}^{\theta}}{\mathrm{d}\mathbb{Q}}\Bigm\vert_{\mathcal F_\tau}
    = Z_\tau,
  \end{talign}
  and \(\mathbb{Q}^{\theta}\sim \mathbb{Q}\) on \(\mathcal F_\tau\).
  \item Under \(\mathbb{Q}^{\theta}\), the stopped process
  \begin{talign} \label{eq:W_theta}
    W^{\theta}_{t\wedge\tau}
    \;:=\;
    W_{t\wedge\tau}
    \;
    -
    \;\int_0^{t\wedge\tau}\theta_s\,\mathrm{d}s
  \end{talign}
  is a \(d\)--dimensional \(\mathcal F_t\)--Brownian motion (up to time \(\tau\)).
  \item 
  Let $T > 0$. Given $W$, let $X^{\theta}$ and $X$ be the stopped processes that satisfy the following SDE up to the stopping time $\tau$:
  \begin{talign} \label{eq:X_theta_girsanov}
    \mathrm{d}X^{\theta}_t
    &= \big( b(X^{\theta}_t,t) + \sigma(X^{\theta}_t,t)\,\theta_t \big)\,\mathrm{d}t + \sigma(X^{\theta}_t,t)\,\mathrm{d}W_t,
    \quad X_0=x, \\
    \mathrm{d}X_t
    &= b(X_t,t)\,\mathrm{d}t + \sigma(X_t,t)\,\mathrm{d}W_t,
    \quad X_0=x. \label{eq:X_girsanov}
  \end{talign}
  Then for any functional $F$, we can write
  \begin{talign}
  \begin{split} \label{eq:exp_girsanov}
      \mathbb{E}_{W \sim \mathbb{Q}}[F(X^{\theta})] &= 
      \mathbb{E}_{W \sim \mathbb{Q}}[F(X) \frac{\mathrm{d}\mathbb{Q}^{\theta}}{\mathrm{d}\mathbb{Q}}
      (W)], 
  \end{split}
  \end{talign}
  which means that 
  \begin{talign} \label{eq:dP_dQ}
  \frac{\mathrm{d}\mathbb{P}^{\theta}}{\mathrm{d}\mathbb{P}}(X) = \frac{d\mathbb{Q}^{\theta}}{d\mathbb{Q}}(W).
  \end{talign}
  \item In particular, if we let $b \equiv 0$ and $\sigma \equiv 1$, we have that $X^{\theta}_{t\wedge \tau} = W_{t\wedge \tau} + \int_0^{t\wedge \tau} \theta_s \, \mathrm{d}s$, which following the notation in equation \eqref{eq:W_theta} means that $X^{\theta} = W^{-\theta}$. Hence, by equation \eqref{eq:dP_dQ}, the law of $W^{-\theta}$ is $\mathbb{Q}^{\theta}$, and similarly, the law of $W^{\theta}$ is $\mathbb{Q}^{-\theta}$.
\end{enumerate}
\end{theorem}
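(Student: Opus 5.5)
The plan is to reduce everything to the classical deterministic-horizon Girsanov theorem by absorbing the stopping time into the integrand. Set $\bar\theta_s := \theta_s \mathbf{1}_{\{s \le \tau\}}$. This is again progressively measurable, since $\{s\le\tau\}$ is $\mathcal F_s$-measurable. Then $Z_t = \mathcal{E}\big(\int_0^{\cdot}\bar\theta_s^\top \mathrm{d}W_s\big)_t$ is the ordinary stochastic exponential of a continuous local martingale $M_t = \int_0^t \bar\theta_s^\top\mathrm{d}W_s$. Its quadratic variation satisfies $\langle M\rangle_\infty = \int_0^\tau\|\theta_s\|^2\mathrm{d}s$. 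The random-horizon Novikov hypothesis is exactly Novikov's condition $\mathbb{E}[\exp(\tfrac12\langle M\rangle_\infty)]<\infty$ on $[0,\infty]$. Novikov's theorem then gives that $Z$ is a uniformly integrable martingale with $Z_\infty = Z_\tau$ and $\mathbb{E}[Z_\tau]=1$, which is (i). Part (ii) follows immediately: $Z_\tau>0$ a.s. (it is an exponential), so $\mathbb{Q}^\theta \ll \mathbb{Q}$ and $\mathbb{Q}\ll\mathbb{Q}^\theta$ on $\mathcal F_\tau$.

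For (iii), I would use Lévy's characterization. By Itô's product rule, $Z_t\,(W^\theta_{t\wedge\tau})^{i}$ is a $\mathbb{Q}$-local martingale, because the drift $-\bar\theta^i\,\mathrm{d}t$ cancels against the cross-variation $\mathrm{d}\langle Z, W^i\rangle = Z\bar\theta^i\mathrm{d}t$. The Bayes formula for conditional expectations under a change of measure then shows that $W^\theta_{\cdot\wedge\tau}$ is a $\mathbb{Q}^\theta$-local martingale. Its quadratic covariation is unchanged by the absolutely continuous drift and the equivalent change of measure, namely $(t\wedge\tau)I$. Hence it is a Brownian motion stopped at $\tau$. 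One can extend it past $\tau$ with an independent Brownian motion if a full Brownian motion is desired.

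For (iv), I would note that under $\mathbb{Q}^\theta$ the identity $W_{t\wedge\tau} = W^\theta_{t\wedge\tau} + \int_0^{t\wedge\tau}\theta_s\,\mathrm{d}s$ means the uncontrolled solution $X$, driven by $W$, satisfies up to $\tau$
\[
\mathrm{d}X_t = \big(b(X_t,t)+\sigma(X_t,t)\theta_t\big)\mathrm{d}t + \sigma(X_t,t)\,\mathrm{d}W^\theta_t ,
\]
where $W^\theta$ is a $\mathbb{Q}^\theta$-Brownian motion. This is the same SDE that $X^\theta$ solves under $\mathbb{Q}$ with driving noise $W$. Pathwise uniqueness from the local Lipschitz and polynomial growth assumptions, combined with Yamada--Watanabe, gives uniqueness in law. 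Therefore $\mathrm{Law}_{\mathbb{Q}^\theta}(X_{\cdot\wedge\tau}) = \mathrm{Law}_{\mathbb{Q}}(X^\theta_{\cdot\wedge\tau})$, and so
\[
\mathbb{E}_{\mathbb{Q}}[F(X^\theta)] = \mathbb{E}_{\mathbb{Q}^\theta}[F(X)] = \mathbb{E}_{\mathbb{Q}}[F(X)Z_\tau].
\]
The path-measure identity \eqref{eq:dP_dQ} is this statement read on path space. Finally, (v) is the special case $b\equiv0$, $\sigma\equiv1$: there $X=W$ and $X^\theta = W + \int\theta = W^{-\theta}$, so (iv) says that the law of $W^{-\theta}$ is $\mathbb{Q}^\theta$. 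Replacing $\theta$ by $-\theta$ gives the companion statement.

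The main obstacle is step (iv). Uniqueness in law requires $\theta$ to be expressible as a fixed functional of the state path, e.g. a feedback $\theta_t = u(X_t,t)$ as in all uses in the paper. Otherwise "the same SDE" under the two measures is ambiguous, because $\theta$ is defined in terms of $W$ rather than $X$. In addition, the stopping time must be the hitting time of the solution path, so that it transforms consistently between the two measures. I would first restrict to Markov feedback controls, or more generally progressively measurable functionals of the path, where this identification is clean. I would also verify that $\tau$, being a hitting time of $X$, is a measurable functional of the path, so that stopping commutes with the change of law.
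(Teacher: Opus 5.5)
Your proposal is correct and follows the same route as the paper. For (i)--(iii), the paper simply cites Novikov's and Girsanov's theorems, whereas you make the stopped-horizon version precise via $\bar\theta_s=\theta_s\mathbf{1}_{\{s\le\tau\}}$ and L\'evy's characterization. For (iv), both arguments substitute $W=W^\theta+\int_0^{\cdot}\theta_s\,\mathrm{d}s$, so that $X$ under $\mathbb{Q}^\theta$ solves the controlled SDE, and (v) is the special case $b\equiv 0$, $\sigma\equiv 1$.

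Your caveat in (iv) is a genuine restriction: $\theta$ must be a feedback functional of the state path and $\tau$ a hitting time of that path. The paper's substitution argument uses this tacitly, and for a general $W$-adapted $\theta$ (e.g.\ $\theta_t=W_t$) part (iv) actually fails. Proving it in the feedback setting, which covers every use of the theorem in the paper, is the right call.
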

\begin{proof}
    Point (i) is sometimes known as Novikov's theorem. Point (ii) is a direct consequence of point (i). Point (iii) is properly Girsanov's theorem. Next, we show that point (iv) follows from point (iii).
    In particular, in \eqref{eq:exp_girsanov} the equality $\mathbb{E}_{W \sim \mathbb{Q}}[F(X^{\theta})] = \mathbb{E}_{W \sim \mathbb{Q}}[F(X) \frac{\mathrm{d}\mathbb{Q}^{\theta}}{\mathrm{d}\mathbb{Q}}
    (W)]$ holds because when we replace $W$ by $W^{\theta}$ in the SDE \eqref{eq:X_theta_girsanov}, we obtain that
    \begin{talign*}
    \begin{split}
        \mathrm{d}X^{\theta}_t
        &= \big( b(X^{\theta}_t,t) + \sigma(X^{\theta}_t,t)\,\theta_t \big)\,\mathrm{d}t + \sigma(X^{\theta}_t,t)\,\mathrm{d}W^{\theta}_t \\ &= \big( b(X^{\theta}_t,t) + \sigma(X^{\theta}_t,t)\,\theta_t - \sigma(X^{\theta}_t,t)\,\theta_t\big)\,\mathrm{d}t + \sigma(X^{\theta}_t,t)\,\mathrm{d}W_t \\ &= b(X^{\theta}_t,t) \,\mathrm{d}t + \sigma(X^{\theta}_t,t)\,\mathrm{d}W_t,
    \end{split}
    \end{talign*}
    which is equal to the SDE \eqref{eq:X_girsanov} for $X$. The equality then holds because taking an expectation with respect to $W$ is equivalent to taking an expectation with respect to $W^{\theta}$ with the importance weight $\frac{d\mathbb{Q}^{\theta}}{d\mathbb{Q}}
    (W)$.
\end{proof}
\begin{corollary}[KL divergence between path measures] \label{cor:KL_path}
    Consider the processes $X^{\theta}$ and $X$ defined in \eqref{eq:X_theta_girsanov} and \eqref{eq:X_girsanov}, and the process $W^{-\theta}$ defined in \eqref{eq:W_theta} (up to a a sign flip on $\theta$).
    Let $\mathbb{P}$ and $\mathbb{P}^{\theta}$ be the path measures of $X$ and $X^{\theta}$, respectively, and $\mathbb{Q}$ and 
    $\mathbb{Q}^{\theta}$
    the path measures of the stopped Brownian motions $W$ and 
    $W^{-\theta}$ (by \cref{thm:girsanov}(v)).
    We have that 
    \begin{talign*}
    \begin{split}
        \mathrm{KL}(\mathbb{P}^{\theta}||\mathbb{P}) = \mathbb{E}_{W \sim \mathbb{Q}}\big[\tfrac12\!\int_0^{t}\|\theta_s\|^2\,\mathrm{d}s \big].
    \end{split}
    \end{talign*}
\end{corollary}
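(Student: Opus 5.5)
The plan is to compute the KL divergence directly from its definition, $\mathrm{KL}(\mathbb{P}^{\theta}\|\mathbb{P}) = \mathbb{E}_{X\sim\mathbb{P}^{\theta}}\big[\log \tfrac{\mathrm{d}\mathbb{P}^{\theta}}{\mathrm{d}\mathbb{P}}(X)\big]$. I will then transfer everything to the level of the driving Brownian paths using \Cref{thm:girsanov}(iv), namely the identity $\tfrac{\mathrm{d}\mathbb{P}^{\theta}}{\mathrm{d}\mathbb{P}}(X) = \tfrac{\mathrm{d}\mathbb{Q}^{\theta}}{\mathrm{d}\mathbb{Q}}(W) = Z_\tau$. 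Throughout, $\theta_s$ is regarded as a progressively measurable functional of the path, so it can be evaluated along whichever path is being sampled. This turns the statement into computing $\mathbb{E}_{\mathbb{Q}^{\theta}}[\log Z_\tau]$, where $\log Z_\tau = \int_0^{\tau}\theta_s^\top\,\mathrm{d}W_s - \tfrac12\int_0^{\tau}\|\theta_s\|^2\,\mathrm{d}s$ by \eqref{eq:Z_t}.

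The key step is to rewrite $\log Z_\tau$ in terms of the process that is a Brownian motion under $\mathbb{Q}^{\theta}$. By \Cref{thm:girsanov}(iii), $W^{\theta}_{t\wedge\tau} = W_{t\wedge\tau} - \int_0^{t\wedge\tau}\theta_s\,\mathrm{d}s$ is a Brownian motion up to $\tau$ under $\mathbb{Q}^{\theta}$. Substituting $\mathrm{d}W_s = \mathrm{d}W^{\theta}_s + \theta_s\,\mathrm{d}s$ gives
\begin{equation*}
\log Z_\tau = \int_0^{\tau}\theta_s^\top\,\mathrm{d}W^{\theta}_s + \tfrac12\int_0^{\tau}\|\theta_s\|^2\,\mathrm{d}s .
\end{equation*}
Taking $\mathbb{Q}^{\theta}$-expectations, the stochastic integral should have zero mean, which leaves $\mathbb{E}_{\mathbb{Q}^{\theta}}\big[\tfrac12\int_0^{\tau}\|\theta_s\|^2\,\mathrm{d}s\big]$.

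Finally, I will use \Cref{thm:girsanov}(v): the law of $W^{-\theta}$ (that is, $W$ shifted by $+\int\theta\,\mathrm{d}s$) under $\mathbb{Q}$ is $\mathbb{Q}^{\theta}$. Hence the last expectation equals $\mathbb{E}_{W\sim\mathbb{Q}}\big[\tfrac12\int_0^{\tau}\|\theta_s\|^2\,\mathrm{d}s\big]$, where $\theta$ and $\tau$ are evaluated along the controlled path $X^{\theta}$ driven by $W$. This is the claimed formula, with the integration horizon in the statement read as $\tau$.

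I expect the main obstacle to be justifying that $\mathbb{E}_{\mathbb{Q}^{\theta}}\big[\int_0^{\tau}\theta_s^\top\,\mathrm{d}W^{\theta}_s\big]=0$ at a random horizon. I would first aim to show $\mathbb{E}_{\mathbb{Q}^{\theta}}\big[\int_0^\tau\|\theta_s\|^2\,\mathrm{d}s\big]<\infty$, so that the stopped integral is an $L^2$-bounded martingale and optional stopping applies. If this cannot be deduced directly from the random-horizon Novikov condition, I would localize. Take $\tau_n = \tau\wedge\inf\{t:\int_0^t\|\theta_s\|^2\,\mathrm{d}s\ge n\}$, prove the identity at $\tau_n$, and pass to the limit: the quadratic term converges by monotone convergence. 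For the left-hand side, the KL divergence restricted to $\mathcal{F}_{\tau_n}$ increases to the KL divergence on $\mathcal{F}_\tau$, since relative entropy is lower semicontinuous and monotone under filtrations. As a result, the formula holds in $[0,\infty]$ even without a priori integrability.
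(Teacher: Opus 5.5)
Your proposal is correct and is essentially the paper's argument. The paper computes $\log \frac{\mathrm{d}\mathbb{Q}^{\theta}}{\mathrm{d}\mathbb{Q}}(W^{-\theta})$ along the controlled path under $\mathbb{Q}$, substituting $\mathrm{d}W^{-\theta}_s = \mathrm{d}W_s + \theta_s\,\mathrm{d}s$; you compute $\log Z_\tau$ under $\mathbb{Q}^{\theta}$ with $\mathrm{d}W_s = \mathrm{d}W^{\theta}_s + \theta_s\,\mathrm{d}s$ and transport back via \Cref{thm:girsanov}(v), which is the same computation viewed through the map $W \mapsto W^{-\theta}$. Your localization at $\tau_n$, together with monotone convergence of relative entropy along $\mathcal{F}_{\tau_n} \uparrow \mathcal{F}_{\tau}$, justifies that the stochastic integral has zero mean at the random horizon, a step the paper takes for granted.
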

\begin{proof}
    By the definition of $W^{-\theta}$, observe that $X^{\theta}$ satisfies that $\mathrm{d}X^{\theta}_t = b(t,X^{\theta}_t) \, \mathrm{d}t + \sigma(t,X^{\theta}_t) \, \mathrm{d}W^{-\theta}_t$.
    The map $W \mapsto X$
    which also maps 
    $W^{-\theta} \mapsto X^{\theta}$,
    is a bijection. The pushforward of $\mathbb{Q}$ by this map is 
    $\mathbb{P}$, 
    and the pushforward of 
    $\mathbb{Q}^{\theta}$ by it
    is 
    $\mathbb{P}^{\theta}$.
    Hence,
    we have that 
    \begin{talign*}
    \begin{split}
        \log\frac{\mathrm{d}\mathbb{P}^{\theta}}{\mathrm{d}\mathbb{P}}(X^{\theta}) &= \log\frac{\mathrm{d}\mathbb{Q}^{\theta}}{\mathrm{d}\mathbb{Q}}(W^{-\theta}) = \int_0^{\tau}\theta_s^\top\, \mathrm{d}W^{-\theta}_s
        -\tfrac12\int_0^{\tau}\|\theta_s\|^2\,\mathrm{d}s \\ &= \int_0^{\tau}\theta_s^\top\, (\mathrm{d}W_s + \theta_s \, \mathrm{d}s)
        -\tfrac12\int_0^{\tau}\|\theta_s\|^2\,\mathrm{d}s = \int_0^{\tau}\theta_s^\top\, \mathrm{d}W_s
        + \tfrac12\int_0^{\tau}\|\theta_s\|^2\,\mathrm{d}s,
    \end{split}
    \end{talign*}
    where the second equality holds by equations \eqref{eq:Z_t} and \eqref{eq:dP_theta_dP}. 
    Hence,
    \begin{talign*}
    \begin{split} 
        \mathrm{KL}(\mathbb{P}^{\theta}||\mathbb{P}) &=\mathbb{E}_{W \sim \mathbb{Q}} \big[\log\frac{\mathrm{d}\mathbb{P}^{\theta}}{\mathrm{d}\mathbb{P}}(X^{\theta}) \big] 
        = \mathbb{E}_{W \sim \mathbb{Q}}\big[ \int_0^{\tau}\theta_s^\top\, \mathrm{d}W_s+\tfrac12\int_0^{\tau}\|\theta_s\|^2\,\mathrm{d}s \big]
        \\ &= \mathbb{E}_{W \sim \mathbb{Q}}\big[\tfrac12\!\int_0^{\tau}\|\theta_s\|^2\,\mathrm{d}s \big].
    \end{split}
    \end{talign*}
\end{proof}
\begin{corollary} \label{cor:Girsanov_dX_t}
  Consider the processes $X$, $X^{\theta}$, $W$, $W^{\theta}$ as defined in \Cref{thm:girsanov}. We can alternatively write
  \begin{talign} 
  \begin{split} \label{eq:dQ_theta_dQ}
      \frac{\mathrm{d}\mathbb{P}^{\theta}}{\mathrm{d}\mathbb{P}}(X) = 
      \frac{\mathrm{d}\mathbb{Q}^{\theta}}{\mathrm{d}\mathbb{Q}}(W) &= \exp \big(\!\int_0^{\tau}\theta_s^\top\,\big(\sigma(X_s,s)^{-1} \big( \mathrm{d}X_s - b(X_s,s) \, \mathrm{d}s \big) - \tfrac12 \theta_s \, \mathrm{d}s \big) \big) \\ &=
      \exp 
      \big(\!\int_0^{\tau}\theta_s^\top\,\big(\sigma(X^{\theta}_s,s)^{-1} \big( \mathrm{d}X^{\theta}_s - b(X^{\theta}_s,s) \, \mathrm{d}s \big) + \tfrac12 \theta_s \, \mathrm{d}s \big) \big). 
  \end{split}
  \end{talign}
\end{corollary}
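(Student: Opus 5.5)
The plan is to prove the corollary by direct substitution into the stopped exponential $Z_\tau$ of \Cref{thm:girsanov}, treating the two displayed lines separately. The two lines describe the same likelihood ratio. The first is read along the reference path $X$, and the second along the controlled path $X^{\theta}$, in the form that already appeared in the proof of \Cref{cor:KL_path}. Throughout, I will assume $\sigma(x,s)$ is invertible; this is needed for $\sigma^{-1}$ to make sense.

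\textbf{First line.} The equality $\frac{\mathrm{d}\mathbb{P}^{\theta}}{\mathrm{d}\mathbb{P}}(X)=\frac{\mathrm{d}\mathbb{Q}^{\theta}}{\mathrm{d}\mathbb{Q}}(W)$ is exactly \eqref{eq:dP_dQ}. By \eqref{eq:dP_theta_dP} and \eqref{eq:Z_t}, the right-hand side equals $Z_\tau=\exp\big(\int_0^\tau\theta_s^\top\mathrm{d}W_s-\tfrac12\int_0^\tau\|\theta_s\|^2\mathrm{d}s\big)$. Since $X$ solves \eqref{eq:X_girsanov} up to $\tau$, we have $\mathrm{d}W_s=\sigma(X_s,s)^{-1}\big(\mathrm{d}X_s-b(X_s,s)\,\mathrm{d}s\big)$ on $[0,\tau]$. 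Substituting this and grouping the two integrals under one $\theta_s^\top(\cdot)$ gives the first exponential form. At this stage I will also note that the stochastic integrals against $\mathrm{d}W$ and against $\sigma^{-1}(\mathrm{d}X-b\,\mathrm{d}s)$ coincide. This holds because they are the same It\^o integral of the progressively measurable integrand $\theta$, stopped at $\tau$.

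\textbf{Second line.} Here I evaluate the likelihood ratio along $X^{\theta}$, following \Cref{cor:KL_path}. There it was shown that $\log\frac{\mathrm{d}\mathbb{P}^{\theta}}{\mathrm{d}\mathbb{P}}(X^{\theta})=\int_0^\tau\theta_s^\top\mathrm{d}W_s+\tfrac12\int_0^\tau\|\theta_s\|^2\mathrm{d}s$, where $W$ is the noise driving $X^\theta$ in \eqref{eq:X_theta_girsanov}. Writing $\theta_s^\top(\mathrm{d}W_s+\tfrac12\theta_s\,\mathrm{d}s)$ and expressing $\mathrm{d}W_s$ through $X^{\theta}$ gives an expression with exactly the structure of the second line, with $+\tfrac12\theta_s\,\mathrm{d}s$. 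The remaining task is to identify $\sigma^{-1}(\mathrm{d}X^\theta_s-b\,\mathrm{d}s)$ in the statement with this innovation $\mathrm{d}W_s$.

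For that identification I use the convention of \Cref{thm:girsanov}(v) and \Cref{cor:KL_path}. Under the bijection $W\mapsto X$, which also maps $W^{-\theta}\mapsto X^{\theta}$, the innovation of $X^\theta$ relative to the controlled drift is recovered as $\sigma^{-1}(\mathrm{d}X^{\theta}_s-b\,\mathrm{d}s)$ once the control drift $\sigma\theta_s\,\mathrm{d}s$ is accounted for in $b$, as the paper does when it calls $b+\sigma\theta$ the drift of $X^{\theta}$ in \eqref{eq:X_theta_girsanov}. With this identification, the second exponential equals $\exp\big(\int_0^\tau\theta_s^\top\mathrm{d}W_s+\tfrac12\int_0^\tau\|\theta_s\|^2\mathrm{d}s\big)$. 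This is the value of the same Radon--Nikodym derivative on the controlled path, in agreement with \Cref{cor:KL_path}.

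\textbf{Main obstacle.} The hard step is this notational identification for the increment in the second line. I will carry out the convention check explicitly. Concretely, I will substitute the defining SDE of $X^\theta$ and confirm that the exponent reduces to $\int\theta^\top\mathrm{d}W+\tfrac12\int\|\theta\|^2$, which matches the already-established formula of \Cref{cor:KL_path}. Everything else is routine rewriting, justified by the Novikov condition in \Cref{thm:girsanov}, which makes all stochastic integrals well defined up to $\tau$.
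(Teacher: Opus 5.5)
Your handling of the first line is correct and is exactly the paper's argument: substitute $\mathrm{d}W_s=\sigma(X_s,s)^{-1}\big(\mathrm{d}X_s-b(X_s,s)\,\mathrm{d}s\big)$ into $Z_\tau$. The gap is in the second line, at the step you yourself call the main obstacle. The ``notational identification'' is not a convention that an explicit check will confirm. In \Cref{thm:girsanov}, $X^{\theta}$ is driven by the same $W$ with drift $b+\sigma\theta$. So along $X^\theta$ one has $\sigma^{-1}\big(\mathrm{d}X^{\theta}_s-b\,\mathrm{d}s\big)=\mathrm{d}W_s+\theta_s\,\mathrm{d}s$, and the exponent of the second line, read literally, is
\[
\int_0^\tau\theta_s^\top\mathrm{d}W_s+\tfrac32\int_0^\tau\|\theta_s\|^2\,\mathrm{d}s ,
\]
not $\int_0^\tau\theta_s^\top\mathrm{d}W_s+\tfrac12\int_0^\tau\|\theta_s\|^2\,\mathrm{d}s$. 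You reach the latter only by replacing $b$ with $b+\sigma\theta$ inside the formula. That changes the statement, since the same symbol $b$ appears in the first line.

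Even granting that reading, you would only have shown that the second line equals $\frac{\mathrm{d}\mathbb{P}^{\theta}}{\mathrm{d}\mathbb{P}}(X^{\theta})$, the likelihood ratio evaluated on the controlled path as in \Cref{cor:KL_path}. The corollary instead asserts equality with $\frac{\mathrm{d}\mathbb{Q}^{\theta}}{\mathrm{d}\mathbb{Q}}(W)=Z_\tau$, whose exponent is $\int_0^\tau\theta_s^\top\mathrm{d}W_s-\tfrac12\int_0^\tau\|\theta_s\|^2\,\mathrm{d}s$. Built from the same $W$, these random variables differ by the factor $\exp\big(\int_0^\tau\|\theta_s\|^2\,\mathrm{d}s\big)$, or by $\exp\big(2\int_0^\tau\|\theta_s\|^2\,\mathrm{d}s\big)$ under the literal reading. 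So ``the same likelihood ratio read on different paths'' does not yield the chain of equalities.

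In fact no correct argument gives the $+\tfrac12$ form. Substituting $\mathrm{d}W_s=\sigma^{-1}\big(\mathrm{d}X^{\theta}_s-b\,\mathrm{d}s\big)-\theta_s\,\mathrm{d}s$ into $Z_\tau$ produces $-\tfrac32\theta_s\,\mathrm{d}s$ in place of $+\tfrac12\theta_s\,\mathrm{d}s$. The paper's proof arrives at $+\tfrac12$ only through a sign slip, writing $+\theta_t\,\mathrm{d}t$ in that substitution.

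What you can actually prove is the first line, together with $\frac{\mathrm{d}\mathbb{P}^{\theta}}{\mathrm{d}\mathbb{P}}(X^{\theta})=\exp\big(\int_0^\tau\theta_s^\top\big(\sigma^{-1}(\mathrm{d}X^{\theta}_s-b\,\mathrm{d}s)-\tfrac12\theta_s\,\mathrm{d}s\big)\big)$, i.e.\ the same path functional evaluated at $X^\theta$. This path-functional form is all that later proofs (e.g.\ Step 4 of \Cref{thm:vm_finite}) actually use.
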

\begin{proof}
    Using that 
    \begin{talign*}
    \begin{split}
    \mathrm{d}W_t &= \sigma(X_t,t)^{-1} \big( \mathrm{d}X_t - b(X^{\theta}_t,t) \, \mathrm{d}t \big),
    \end{split}
    \end{talign*}
    and
    \begin{talign*}
    \begin{split}
    \mathrm{d}W_t &= \sigma(X^{\theta}_t,t)^{-1} \big( \mathrm{d}X^{\theta}_t - ( b(X^{\theta}_t,t) + \sigma(X^{\theta}_t,t) \theta_t) \, \mathrm{d}t \big) \\ &= \sigma(X^{\theta}_t,t)^{-1} \big( \mathrm{d}X^{\theta}_t - b(X^{\theta}_t,t) \, \mathrm{d}t \big) + \theta_t \, \mathrm{d}t,
    \end{split}
    \end{talign*}
    we rewrite equation \eqref{eq:Z_t} as
    \begin{talign*}
    \begin{split}
        \frac{\mathrm{d}\mathbb{Q}^{\theta}}{\mathrm{d}\mathbb{Q}}(W) &= \exp \big(\!\int_0^{\tau}\theta_s^\top\,\big(\mathrm{d}W_s - \tfrac12 \theta_s \,\mathrm{d}s \big) \big) \\ &= \exp \big(\!\int_0^{\tau}\theta_s^\top\,\big(\sigma(X_s,s)^{-1} \big( \mathrm{d}X_s - b(X_s,s) \, \mathrm{d}s \big) - \tfrac12 \theta_s \, \mathrm{d}s \big) \big),
    \end{split}
    \end{talign*}
    and alternatively,
    \begin{talign*}
    \begin{split}
        \frac{\mathrm{d}\mathbb{Q}^{\theta}}{\mathrm{d}\mathbb{Q}}(W) &= \exp \big(\!\int_0^{\tau}\theta_s^\top\,\big(\mathrm{d}W_s - \tfrac12 \theta_s \,\mathrm{d}s \big) \big) \\ &= \exp \big(\!\int_0^{\tau}\theta_s^\top\,\big(\sigma(X^{\theta}_s,s)^{-1} \big( \mathrm{d}X^{\theta}_s - b(X^{\theta}_s,s) \, \mathrm{d}s \big) + \tfrac12 \theta_s \, \mathrm{d}s \big) \big),
    \end{split}
    \end{talign*}
    which proves 
    the second and third equalities of 
    equation \eqref{eq:dQ_theta_dQ}. To prove the first equality of \eqref{eq:dQ_theta_dQ}, we use equation \eqref{eq:dP_dQ} from \Cref{thm:girsanov}.
\end{proof}

\subsubsection{Two characterizations of the value function} \label{subsubsec:char_value}
\begin{theorem}[Characterizations of the value function] \label{thm:value_functions}
Let $\mathcal{A}$ be the set of admissible controls as defined in Definition \ref{def:admissible_controls}, and let $\tau$ be an arbitrary stopping time which is almost surely finite. Consider the value function
\begin{talign} 
\label{eq:control_problem_def}
    V(x,t) &= \inf_{u \in \mathcal{A}} \{ \mathbb{E}[\int_t^{\tau} \big( \frac{1}{2} \|u_s\|^2 + f(X^{u}_{s},s) \big) \, \mathrm{d}s + g(X^{u}_{\tau}) | X^{u}_t = x ] \}, \\
    \text{where } \mathrm{d}X^{u}_t &= (b(X^{u}_t,t) + \sigma(X^{u}_t,t) u_t) \, \mathrm{d}t + \sigma(X^{u}_t,t) \mathrm{d}W_t.
\label{eq:controlled_SDE}
\end{talign}
Suppose that $b\colon\mathbb{R}^d\to\mathbb{R}^d$, $\sigma\colon\mathbb{R}^d\to\mathbb{R}^{d\times d}$
are functions satisfying the usual conditions guaranteeing a unique strong solution $X$ of the SDE (measurability, local Lipschitzness, polynomial growth), and that $f$ and $g$ are bounded below and have at most polynomial growth.
Then, we can also write
\begin{talign*}
    V(x,t) = - \log \mathbb{E}\big[ \exp \big( - \int_t^{\tau} f(X_{s},s) \, \mathrm{d}s - g(X_{\tau}) \big) \, | \, X_t = x \big].
\end{talign*}
Moreover, the path probability measure $\mathbb{P}^{u^{\star}}$ of the process $X^{u^{\star}}$ induced by the optimal control $u^{\star}$ satisfies
\begin{talign} \label{eq:path_integral_characterization}
    \frac{\mathrm{d}\mathbb{P}^{u^{\star}}}{\mathrm{d}\mathbb{P}}(X) = \exp\big( - \int_0^{\tau} f(X_{s},s) \, \mathrm{d}s - g(X_{\tau}) + V(X,0)\big).
\end{talign}
\end{theorem}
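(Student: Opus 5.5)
The plan is the standard Gibbs variational principle (Donsker--Varadhan) argument on path space, organized through Girsanov's theorem (\Cref{thm:girsanov}) and \Cref{cor:KL_path}.

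\textbf{Step 1: define the candidate value and the tilted measure.} Fix $(x,t)$ and set
\begin{talign*}
\mathcal{W}(X) := \int_t^{\tau} f(X_s,s)\,\mathrm{d}s + g(X_\tau),
\end{talign*}
and $\tilde V(x,t) := -\log \mathbb{E}[e^{-\mathcal{W}(X)} \mid X_t = x]$. Since $f,g$ are bounded below, $e^{-\mathcal{W}}$ is bounded, so $\tilde V$ is finite and bounded below. Define the probability measure $\mathbb{P}^\star$ by
\begin{talign*}
\frac{\mathrm{d}\mathbb{P}^\star}{\mathrm{d}\mathbb{P}} = e^{-\mathcal{W}+\tilde V(x,t)}.
\end{talign*}

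\textbf{Step 2: rewrite the cost of an admissible control as a KL divergence.} For any admissible control $u$ with finite cost and $\mathrm{KL}(\mathbb{P}^u\|\mathbb{P})<\infty$, apply \Cref{cor:KL_path} with $\theta = u$. This gives
\begin{talign*}
\mathbb{E}\Big[\int_t^\tau \tfrac12\|u_s\|^2\,\mathrm{d}s\Big] = \mathrm{KL}(\mathbb{P}^u\|\mathbb{P}).
\end{talign*}
The expected running plus terminal cost is therefore $\mathrm{KL}(\mathbb{P}^u\|\mathbb{P}) + \mathbb{E}_{\mathbb{P}^u}[\mathcal{W}]$. Substituting $\log\frac{\mathrm{d}\mathbb{P}^u}{\mathrm{d}\mathbb{P}} = \log\frac{\mathrm{d}\mathbb{P}^u}{\mathrm{d}\mathbb{P}^\star} - \mathcal{W} + \tilde V$ yields
\begin{talign*}
J(u) = \mathrm{KL}(\mathbb{P}^u\|\mathbb{P}^\star) + \tilde V(x,t) \ \ge\ \tilde V(x,t).
\end{talign*}
Controls with infinite KL, or with $\mathbb{P}^u$ not absolutely continuous, have infinite cost and can be discarded. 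This proves $V \ge \tilde V$.

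\textbf{Step 3: attainment of the lower bound.} I must show that $\mathbb{P}^\star$ is the law of some controlled process with an admissible control. Define the martingale
\begin{talign*}
M_s := \mathbb{E}[e^{-\mathcal{W}} \mid \mathcal{F}_s],
\end{talign*}
which is positive and bounded. By martingale representation,
\begin{talign*}
\mathrm{d}M_s = M_s\, u^\star_s{}^{\!\top}\,\mathrm{d}W_s
\end{talign*}
for a progressively measurable $u^\star$. Hence $M_\tau/M_t$ is a stochastic exponential, and by Girsanov $\mathbb{P}^\star$ is the law of the SDE with drift $b + \sigma u^\star$. In the Markovian smooth case this is $u^\star = -\sigma^\top\nabla V$.

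I expect this step to be the main obstacle. Admissibility requires the integrability condition \eqref{eq:alpha_bound}, and the Novikov-type condition in \Cref{thm:girsanov} need not hold. To handle this, I would first check that $\mathrm{KL}(\mathbb{P}^\star\|\mathbb{P}) = \mathbb{E}_{\mathbb{P}^\star}[-\mathcal{W}] + \tilde V$ is finite, using that $\mathcal{W}$ is bounded below and has polynomial growth, together with moment bounds. Finite KL gives $\mathbb{E}_{\mathbb{P}^\star}\int\|u^\star\|^2 < \infty$. Where needed, I would localize with stopping times and pass to the limit by monotone convergence. If exact attainment fails, an approximating sequence $u_n$ with $\mathrm{KL}(\mathbb{P}^{u_n}\|\mathbb{P}^\star)\to 0$ still gives $V = \tilde V$ as an infimum.

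\textbf{Step 4: conclusion.} Steps 2 and 3 give $V = \tilde V$, which is the path-integral formula. The same computation shows $\mathbb{P}^{u^\star} = \mathbb{P}^\star$, so its density with respect to $\mathbb{P}$ is $e^{-\mathcal{W}+V}$. Taking $t=0$ gives \eqref{eq:path_integral_characterization}.
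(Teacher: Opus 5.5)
Your proposal is correct and follows essentially the paper's route. Both arguments are the Gibbs variational principle on path space: \Cref{cor:KL_path} turns the quadratic cost into $\mathrm{KL}(\mathbb{P}^u\|\mathbb{P})$, and martingale representation realizes the tilted density $e^{-\mathcal{W}+V}$ by a control. Your exact identity $J(u)=\mathrm{KL}(\mathbb{P}^u\|\mathbb{P}^\star)+\tilde V$ replaces the paper's relaxation to densities $L$ and its Lagrange-multiplier/direct-method step, which gives a cleaner lower bound.

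One small slip: $f$ bounded below does not make $e^{-\mathcal{W}}$ bounded when $\tau$ is unbounded. You need $f\ge 0$, as in the paper's later uses, or an exponential moment of $\tau$. The paper's proof tacitly requires the same finiteness of $\mathbb{E}[e^{-\mathcal{W}}]$ to normalize $L^*$.
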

\begin{proof}
Let $X$ and $X^{u}$ solve the following SDEs up to $\tau$:
\begin{talign*}
\begin{split}
    \mathrm{d}X_t &= b(X_t,t)\,\mathrm{d}t + \sigma(X_t,t)\,\mathrm{d}W_t, \quad X_0=x, \\
    \mathrm{d}X^{u}_t &= \big( b(X^{u}_t,t) + \sigma(X^{u}_t,t)\,u_t \big)\,\mathrm{d}t + \sigma(X^{u}_t,t)\,\mathrm{d}W_t, \quad X_0=x.
\end{split}
\end{talign*}
Let $\mathbb{P}_{(x,t)}$ and $\mathbb{P}^{u}_{(x,t)}$ be the path measures of the processes $X$ and $X^{u}$ with starting point $(t,x)$, respectively. By Corollary \ref{cor:KL_path}, we obtain that:
\begin{talign}
\begin{split} \label{eq:V_x_t}
    V(x,t) &= \inf_{u \in \mathcal{A}} \{ \mathbb{E}[\int_t^{\tau} \big( \frac{1}{2} \|u_s\|^2 + f(X^{u}_{s},s) \big) \, \mathrm{d}s + g(X^{u}_{\tau}) | X^{u}_t = x ] \} \\ &= \inf_{u \in \mathcal{A}} \{ \mathrm{KL}(\mathbb{P}^{u}_{(x,t)}||\mathbb{P}_{(x,t)}) + \mathbb{E}[\int_t^{\tau} f(X^{u}_{s},s) \, \mathrm{d}s + g(X^{u}_{\tau}) | X^{u}_t = x ] \} \\ &= \inf_{u \in \mathcal{A}} \{ \mathbb{E}_{\mathbb{P}^{u}_{(x,t)}}[ \log\frac{\mathrm{d}\mathbb{P}^{u}_{(x,t)}}{\mathrm{d}\mathbb{P}_{(x,t)}}(X^{u}) +\int_t^{\tau} f(X^{u}_{s},s) \, \mathrm{d}s + g(X^{u}_{\tau}) | X^{u}_t = x ] \}
    \\ &= \inf_{u \in \mathcal{A}} \{ \mathbb{E}_{\mathbb{P}_{(x,t)}}[ \big( \log\frac{\mathrm{d}\mathbb{P}^{u}_{(x,t)}}{\mathrm{d}\mathbb{P}_{(x,t)}}(X) +\int_t^{\tau} f(X_{s},s) \, \mathrm{d}s + g(X_{\tau}) \big) \frac{\mathrm{d}\mathbb{P}^{u}_{(x,t)}}{\mathrm{d}\mathbb{P}_{(x,t)}}(X) | X_t = x ] \}
\end{split}
\end{talign}
Observe that this is a variational problem over the restricted space of path measures induced by a control $u$. Namely, 
for any admissible control $u\in\mathcal{A}$, let
$L^u(X) \;=\;\frac{\mathrm{d}\mathbb{P}^u_{(x,t)}}{\mathrm{d}\mathbb{P}_{(x,t)}}(X)$, which means that $\mathbb{E}_{\mathbb{Q}}\bigl[L^u \,\big|\,X_t=x\bigr]=1$.
Set $\mathcal{W}(X)\;:=\;\int_t^{\tau}f\bigl(X_s,s\bigr)\,\mathrm{d}s \;+\; g\bigl(X_\tau\bigr)$,
so that the cost functional can be rewritten as 
\begin{talign*}
J(u)
=\mathbb{E}_{X \sim \mathbb{P}_{(x,t)}}\!\bigl[L^u(X) \bigl(\log L^u(X) + \mathcal{W}(X)\bigr)\bigr].
\end{talign*}
Consider the variational problem
\begin{talign} \label{eq:mathcal_J}
&\inf_{L} \mathcal{J}(L)
\;=\;
\mathbb{E}_{\mathbb{P}_{(x,t)}}\bigl[L\log L\bigr]
\;+\;
\mathbb{E}_{\mathbb{P}_{(x,t)}}\bigl[L\,\mathcal{W}\bigr], \\
&\text{subject to } L\ge0,\;\mathbb{E}_{\mathbb{P}_{(x,t)}}[L]=1.
\end{talign}
The optimal value of this problem is smaller or equal than $V(x,t)$, since the objective functional matches the one in the right-hand side of \eqref{eq:V_x_t} and the optimization space is larger. The following lemma characterizes the optimal solution of problem \eqref{eq:mathcal_J}, and shows that the optimal values are actually equal.

\begin{lemma}[Existence, Uniqueness, and Characterization] \label{lem:variational_path}
The minimization problem \eqref{eq:mathcal_J}
admits a unique solution $L^*$, and the corresponding control $u^*$ satisfying
\(\mathrm{d}\mathbb{P}_{(x,t)}^{u^*}/\mathrm{d}\mathbb{P}_{(x,t)}=L^*\)
is the unique minimizer of the original variational problem.  Moreover
\[
L^*(X)
\;=\;
\frac{\exp \bigl(-\mathcal{W}(X)\bigr)}{\mathbb{E}_{\mathbb{P}_{(x,t)}}\big[e^{-\mathcal{W}}\big]}.
\]
\end{lemma}
When we substitute the $L^*$ given by Lemma \ref{lem:variational_path} into \eqref{eq:mathcal_J}, we obtain that
\begin{talign*}
\begin{split}
    V(x,t)\;&=\;\mathbb{E}_{\mathbb{P}_{(x,t)}}\bigl[L^*\log L^*\bigr]
\;+\;
\mathbb{E}_{\mathbb{P}_{(x,t)}}\bigl[L^*\,\mathcal{W}\bigr] \\ &= \frac{\mathbb{E}_{\mathbb{P}_{(x,t)}}\big[\exp \big(-\mathcal{W}(X)\big) \big( -\mathcal{W}(X) - \log \mathbb{E}_{\mathbb{P}_{(x,t)}}\big[e^{-\mathcal{W}}\big] \big) \big]
\;+\;
\mathbb{E}_{\mathbb{P}_{(x,t)}}\big[\exp \big(-\mathcal{W}(X)\big)\,\mathcal{W}(X)\big]}{\mathbb{E}_{\mathbb{P}_{(x,t)}}\big[e^{-\mathcal{W}}\big]} \\ &= - \log \mathbb{E}_{\mathbb{P}_{(x,t)}}\big[e^{-\mathcal{W}}\big].
\end{split}
\end{talign*}
\end{proof}

\begin{remark}[Equivalence between adapted controls and Markov controls] \label{rem:equivalence}
Under standard compactness/continuity assumptions, restricting to feedback laws does not change the value function, because one can convert any adapted control $u = (u_s)_{s \geq 0}$ into a Markov control that attains the same value for $J(t,x;\cdot)$ through the mapping
    \begin{talign*}
      \tilde u(t,x)
      :=
      \mathbb{E} \bigl[
      u_t
      \,\big|\,X^{u}_t=x\bigr],
      \qquad t\ge 0,\;x\in\mathbb{R}.
    \end{talign*}
\end{remark}

\paragraph{Proof of Lemma \ref{lem:variational_path}} We use the short-hands and $\mathbb{E}$ for $\mathbb{E}_{\mathbb{P}_{(x,t)}}$ when it is clear by the context.

\textit{Step 1: Convexity and Existence.}
The map $L\mapsto\mathbb{E}[L\log L]$ is strictly convex on 
\(\{L\ge0:\,\mathbb{E}[L]=1\}\),
and $\mathcal{J}(L)$ adds a linear term $\mathbb{E}[L\,C]$.  By the direct method (lower-semi-continuity + coercivity) there is a unique minimizer $L^*$.

\textit{Step 2: First-Order Optimality.}
Introduce a Lagrange multiplier $\lambda\in\mathbb{R}$ for the constraint $\mathbb{E}[L]=1$.  For any variation $\delta L$ with $\mathbb{E}[\delta L]=0$ consider
\begin{talign*}
\frac{\mathrm{d}}{\mathrm{d}\varepsilon}
\Bigl\{
\mathcal{J}(L^*+\varepsilon\,\delta L)
\;-\;\lambda\,\mathbb{E}[L^*+\varepsilon\,\delta L]
\Bigr\}\Bigm|_{\varepsilon=0}
=0.
\end{talign*}
A direct calculation yields $\mathbb{E}\Bigl[\delta L\;\bigl(1 + \log L^* + C - \lambda\bigr)\Bigr]
=0$, for all $\delta L$ with mean zero.  Hence
$1 + \log L^*(X) + C(X) - \lambda = 0$,
so up to normalization $L^*(X)\;\propto\;\exp(-C(X) )$,
and enforcing $\mathbb{E}[L^*]=1$ gives the stated formula.

\textit{Step 3: Inducing the Control via Girsanov.}
Define the strictly positive $\mathbb{P}_{(x,t)}$--martingale
\begin{talign*}
Z_s
:= 
\frac{\mathbb{E}_{\mathbb{P}_{(x,t)}}\bigl[e^{-\mathcal{W}} \mid \mathcal{F}_s\bigr]}
     {\mathbb{E}_{\mathbb{P}_{(x,t)}}\bigl[e^{-\mathcal{W}}\bigr]},
\qquad t\le s\le \tau,
\end{talign*}
so that $Z_t=1$ and $Z_\tau=L^*$. By the Martingale Representation Theorem there is a unique $\mathbb{R}^d$--valued, $\{\mathcal{F}_s\}$--predictable process $u^*_s$ such that
\begin{talign*}
Z_s
\;=\;
\exp\Bigl(-\int_t^s u^*_r\,\mathrm{d} W_r \;-\;\tfrac12\!\int_t^s\|u^*_r\|^2\,\mathrm{d} r\Bigr).
\end{talign*}
Equivalently, in differential form,
$\mathrm{d} Z_s\;=\; -\,Z_s\,u^*_s\,\mathrm{d} W_s$.
By Girsanov's theorem this $u^*$ is exactly the drift-change that produces $\frac{\mathrm{d}\mathbb{P}_{(x,t)}^{u^*}}{\mathrm{d}\mathbb{P}_{(x,t)}}=Z_\tau=L^*$,
and strict convexity guarantees no other control can attain the same minimum.

\subsubsection{The Feynman-Kac formula for parabolic and elliptic problems
} \label{subsubsec:feynman_kac}
\begin{definition}[Hölder and parabolic Hölder spaces] \label{def:holder_space}
Let $\Omega \subset \mathbb{R}^d$ be an open domain, 
$k \in \mathbb{N}_0$ an integer, and $0<\alpha\le1$.  

\medskip
\noindent
\emph{(Elliptic case).}
A function $f:\Omega \to \mathbb{R}$ belongs to the Hölder space 
$C^{k+\alpha}(\Omega)$ if
\begin{enumerate}[label=(\roman*),left=2pt]
  \item All partial derivatives $D^\beta f$ exist and are continuous on $\Omega$ 
  for every multi-index $\beta$ with $|\beta|\le k$.
  \item For every multi-index $\beta$ with $|\beta|=k$, 
  the derivative $D^\beta f$ is Hölder continuous with exponent $\alpha$, i.e.
  \begin{talign*}
  [D^\beta f]_{C^\alpha(\Omega)}
  := \sup_{\substack{x,y\in\Omega \\ x\neq y}}
     \frac{|D^\beta f(x)-D^\beta f(y)|}{|x-y|^\alpha}
     < \infty.
  \end{talign*}
\end{enumerate}

\medskip
\noindent
\emph{(Parabolic case).}
Let $Q_T := [0,T]\times\Omega$.  
A function $f:Q_T \to \mathbb{R}$ belongs to the parabolic Hölder space 
$C^{k+\alpha,\,(k+\alpha)/2}(Q_T)$ if
\begin{enumerate}[label=(\roman*),left=2pt]
  \item All mixed derivatives $\partial_t^m D_x^\beta f$ exist and are continuous on $Q_T$ 
  whenever $2m+|\beta|\le k$.
  \item For each $2m+|\beta|=k$, the derivative $\partial_t^m D_x^\beta f$ is Hölder continuous 
  with respect to the \emph{parabolic distance}
  \begin{talign*}
     d\bigl((t,x),(s,y)\bigr) := |x-y| + |t-s|^{1/2}.
  \end{talign*}
  That is,
  \begin{talign*}
  [\partial_t^m D_x^\beta f]_{C^{\alpha,\alpha/2}(Q_T)}
  := \sup_{\substack{(t,x)\neq(s,y)\\ (t,x),(s,y)\in Q_T}}
     \frac{|\partial_t^m D_x^\beta f(t,x)-\partial_t^m D_x^\beta f(s,y)|}
          {d\bigl((t,x),(s,y)\bigr)^{\alpha}}
     < \infty.
  \end{talign*}
\end{enumerate}
\end{definition}
\begin{theorem}[Feynman--Kac for Parabolic Terminal-Value Problems]\label{thm:feynman_kac_parabolic}
Fix $T>0$. Let $b,\sigma : \mathbb{R}^d\times[0,T] \to\mathbb{R}^d,\mathbb{R}^{d\times d}$ be locally Lipschitz in $x$, Hölder continuous in $(x,t)$, and of at most linear growth, and assume uniform ellipticity of $\sigma\sigma^\top$, i.e., there exists $\lambda>0$ such that
\begin{talign*}
\xi^\top \sigma\sigma^\top(x,t)\,\xi \;\ge\; \lambda\,|\xi|^2
\qquad\text{for all }(x,t)\in\mathbb{R}^d\times[0,T],\ \xi\in\mathbb{R}^d.
\end{talign*}
Let $c,f\in C^{\alpha,\alpha/2}(\mathbb{R}^d\times[0,T])$ be bounded for some $0<\alpha<1$, and let $g\in C^{2+\alpha}(\mathbb{R}^d)$ with at most polynomial growth. For $(x,t)\in\mathbb{R}^d\times[0,T]$, let $X^{x,t}$ be the unique strong solution of
\begin{talign*}
X_s^{x,t}
\;=\; x \;+\; \int_t^s b(X_r^{x,t},r)\,\mathrm{d}r \;+\; \int_t^s \sigma(X_r^{x,t},r)\,\mathrm{d}W_r,
\qquad s\in[t,T],
\end{talign*}
and define
\begin{talign*}
u(x,t)
\;:=\;
\mathbb{E}\!\Bigg[
  e^{-\displaystyle\int_t^T c(X_r^{x,t},r)\,\mathrm{d}r}\,g\!\bigl(X_T^{x,t}\bigr)
  \;+\;\int_t^T
    e^{-\displaystyle\int_t^s c(X_r^{x,t},r)\,\mathrm{d}r}\,f\!\bigl(X_s^{x,t},s\bigr)\,\mathrm{d}s
\Bigg].
\end{talign*}
Then $u\in C^{2+\alpha,\,1+\alpha/2}\bigl(\mathbb{R}^d\times[0,T)\bigr)\cap C\bigl(\mathbb{R}^d\times[0,T]\bigr)$ and $u$ is the unique classical solution with at most polynomial growth of
\begin{talign*}
\begin{cases}
\partial_t u(x,t)\;+\;\mathcal{L}_t u(x,t)\;-\;c(x,t)\,u(x,t)\;=\;-\,f(x,t),
& (x,t)\in\mathbb{R}^d\times[0,T),\\[4pt]
u(x,T)\;=\;g(x), & x\in\mathbb{R}^d,
\end{cases}
\end{talign*}
where
\begin{talign*}
\mathcal{L}_t \varphi(x)
\;:=\; \frac{1}{2} \operatorname{Tr} \bigl(\sigma \sigma^{\top}(x,t)\,\nabla^2 \varphi(x)\bigr) \;+\; b(x,t)\cdot \nabla\varphi(x).
\end{talign*}
Moreover, the derivatives $\nabla u$ also have at most polynomial growth.
\end{theorem}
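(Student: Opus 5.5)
The plan is to avoid building $u$ from its probabilistic definition. Instead, I would first produce a classical solution $w$ of the terminal-value problem by analytic (Schauder) theory. A verification argument based on It\^o's formula then identifies $w$ with $u$, and that same argument gives uniqueness. Regularity and polynomial growth of $u$ and $\nabla u$ then transfer from $w$.

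\emph{Step 1 (existence of a classical solution).} The operator $\partial_t+\mathcal{L}_t-c$ is uniformly parabolic with locally H\"older coefficients, but $b$ may grow linearly and the domain is unbounded. I would therefore truncate.
\begin{itemize}
\item On balls $B_R$, solve the Cauchy--Dirichlet problem with terminal data $g$ and lateral data $g$, using the classical parabolic Schauder theory for bounded domains (Friedman, Ladyzhenskaya--Solonnikov--Ural'tseva). This yields $w_R\in C^{2+\alpha,1+\alpha/2}$.
\item To get $R$-independent growth bounds, use the barrier $\psi(x,t)=K e^{C(T-t)}(1+|x|^2)^{k}$, with $2k$ at least the growth degree of $g$. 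By linear growth of $b,\sigma$, one has $(\partial_t+\mathcal{L}_t-c)\psi\le -\|f\|_\infty$ once $C$ is large. The maximum principle then gives $|w_R|\le \psi$ uniformly in $R$.
\item Interior Schauder estimates on parabolic cylinders then give local $C^{2+\alpha,1+\alpha/2}$ bounds independent of $R$. Arzel\`a--Ascoli and a diagonal argument produce a limit $w$ solving the PDE on $\mathbb{R}^d\times[0,T)$, with $|w|\le\psi$.
\item Continuity up to $t=T$ follows from a local barrier argument at the terminal time, which uses $g\in C^{2+\alpha}$.
\end{itemize}

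\emph{Step 2 (verification, $w=u$).} Fix $(x,t)$ and set $Y_s=e^{-\int_t^s c(X_r,r)\,\mathrm{d}r}\,w(X_s,s)+\int_t^s e^{-\int_t^{r} c}\,f(X_r,r)\,\mathrm{d}r$. By It\^o's formula and the PDE, $Y$ is a local martingale on $[t,T)$; its martingale part is $\int e^{-\int c}\,\nabla w^\top\sigma\,\mathrm{d}W$. I would localize with $\tau_R=\inf\{s:|X_s|\ge R\}\wedge(T-\varepsilon)$ and take expectations to get $w(x,t)=\mathbb{E}[Y_{\tau_R}]$. Then let $R\to\infty$ and $\varepsilon\to0$ by dominated convergence. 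The dominating function comes from the bound $\mathbb{E}\sup_{s\le T}|X_s|^{p}<\infty$ for every $p$, which holds under linear growth of $b,\sigma$. It is combined with boundedness of $c$ and $f$, the polynomial bound $|w|\le\psi$, and continuity of $w$ at $t=T$. This gives $w(x,t)=u(x,t)$.

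\emph{Step 3 (uniqueness).} Any classical solution of polynomial growth satisfies the same identity by the argument of Step 2, so it equals $u$.

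\emph{Step 4 (polynomial growth of $\nabla u$).} Apply interior Schauder/gradient estimates on unit cylinders $Q_1(x_0,t_0)$. The H\"older norms of $b,\sigma$ on such a cylinder grow at most polynomially in $|x_0|$, and $\sup_{Q_2}|u|$ is polynomial in $|x_0|$ by Step 1. The constant in the estimate depends polynomially on these norms, so $|\nabla u(x_0,t_0)|$ is polynomially bounded. Near $t=T$, use the boundary Schauder estimate with the $C^{2+\alpha}$ data $g$.

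\textbf{Main obstacle.} I expect Step 1 to be the hardest: obtaining the classical solution on the unbounded domain with unbounded drift, together with the uniform barrier and keeping track of the polynomial dependence of the Schauder constants. My first attempt would be the truncation-plus-barrier scheme described above. If the Schauder constants turn out to depend badly on the linear growth of $b$, the fallback is to mollify and cut off $b$ to bounded coefficients $b_n$, solve with Friedman's fundamental-solution construction, and pass to the limit using the probabilistic representation of Step 2 for each $n$ together with $L^p$ stability of $X^{(n)}\to X$.
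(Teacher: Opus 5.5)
Your proposal is correct, but it runs in the opposite direction from the paper's sketch.

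\emph{The paper's route.} It starts from the expectation formula and applies the time-dependent It\^o formula to $u(X_s,s)$ directly. It then derives the PDE by comparing expectations and arguing that the integrand at $r=t$ must vanish. Only afterwards does it cite parabolic Schauder theory for $C^{2+\alpha,1+\alpha/2}$ regularity and the maximum principle for uniqueness. This is shorter and makes the link between the representation and the PDE transparent. As written, however, it presupposes the $C^{1,2}$ regularity of $u$ that it is supposed to prove. In addition, comparing only at $s=T$ yields $\int_t^T(\cdots)\,\mathrm{d}r=0$. That does not force the integrand to vanish at $r=t$ unless one also has the identity for every intermediate $s$, via the Markov property.

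\emph{Your route.} You first construct a classical solution $w$ analytically: Dirichlet problems on $B_R$, the polynomial barrier $\psi$, $R$-uniform interior Schauder bounds, a diagonal limit, and a terminal barrier. You then identify $w=u$ by localized It\^o verification. This is the standard rigorous ordering, and it buys three things:
\begin{itemize}
\item It\^o's formula is applied only to a function already known to be $C^{2,1}$.
\item Uniqueness in the polynomial-growth class comes out of the same verification. You avoid a maximum principle on an unbounded domain, which would itself need a Phragm\'en--Lindel\"of-type barrier like your $\psi$.
\item The growth of $u$ and $\nabla u$ is actually derived rather than asserted.
\end{itemize}
The price is that all the analytic work sits in Step~1.

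Two minor points. First, the corner compatibility condition $\mathcal{L}_Tg-c(\cdot,T)g+f(\cdot,T)=0$ on $\partial B_R$ generally fails. So $w_R$ is only smooth in the interior and continuous up to the parabolic boundary; that is all your maximum-principle and interior-estimate steps use. Second, your construction gives $C^{2+\alpha,1+\alpha/2}_{\mathrm{loc}}$, which is the most one can claim. Global H\"older membership fails with unbounded drift. For example, $d=1$, $\sigma=1$, $b(x)=x$, $c=f=0$, $g(x)=x^2$ gives $u=e^{2(T-t)}x^2+\tfrac12(e^{2(T-t)}-1)$, and this $\partial_t u$ is not globally $\alpha$-H\"older in $x$.
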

\begin{proof}[Sketch of proof]
Define $u$ by the stochastic representation above. Fix $(x,t)$ and abbreviate $X_s:=X_s^{x,t}$. For $R>0$ set the localization $\tau_R:=\inf\{s\ge t:\,|X_s|\ge R\}\wedge T$. For $s\in[t,T]$, consider
\begin{talign*}
Y_s
\;:=\;
e^{-\displaystyle\int_t^{s\wedge\tau_R} c(X_r,r)\,\mathrm{d}r}\,
u\bigl(X_{s\wedge\tau_R},s\wedge\tau_R\bigr)
\;+\;
\int_t^{s\wedge\tau_R}
e^{-\displaystyle\int_t^{r} c(X_\ell,\ell)\,\mathrm{d}\ell}\,
f\bigl(X_r,r\bigr)\,\mathrm{d}r.
\end{talign*}
By the time-dependent Itô formula applied to $u(X_s,s)$ and then multiplying by the Doléans factor $e^{-\int_t^{s}c}$, we obtain
\begin{talign*}
\mathrm{d}Y_s
\;=\;
e^{-\int_t^{s\wedge\tau_R} c(X_r,r)\,\mathrm{d}r}\,
\bigl(\partial_s u + \mathcal{L}_s u - c\,u + f\bigr)\bigl(X_{s\wedge\tau_R},s\wedge\tau_R\bigr)\,\mathrm{d}s
\;+\; \mathrm{d}M_s,
\end{talign*}
where $M$ is a local martingale. Taking expectations yields, for $s\in[t,T]$,
\begin{talign*}
\mathbb{E}[Y_s]
\;=\;
u(x,t)
\;+\;
\int_t^{s}
\mathbb{E}\!\Bigl[
e^{-\int_t^{r\wedge\tau_R} c(X_\ell,\ell)\,\mathrm{d}\ell}\,
\bigl(\partial_r u + \mathcal{L}_r u - c\,u + f\bigr)\bigl(X_{r\wedge\tau_R},r\wedge\tau_R\bigr)
\Bigr]\,\mathrm{d}r.
\end{talign*}
By boundedness of $c,f$ and polynomial growth of $u$ together with finite horizon $T$, the family $\{Y_s: s\in[t,T], R\ge1\}$ is uniformly integrable. Letting $s\uparrow T$ and using $u(T,\cdot)=g(\cdot)$ in the stochastic representation gives
\begin{talign*}
\mathbb{E}[Y_T]
\;=\;
\mathbb{E}\!\Bigl[
  e^{-\int_t^{T} c(X_r,r)\,\mathrm{d}r}\,g(X_T)
  \;+\;\int_t^{T}
    e^{-\int_t^{r} c(X_\ell,\ell)\,\mathrm{d}\ell}\,f(X_r,r)\,\mathrm{d}r
\Bigr]
\;=\; u(x,t).
\end{talign*}
Comparing the last two displays at $s=T$ and then letting $R\to\infty$ (dominated convergence via linear-growth/ellipticity moment bounds) yields
\begin{talign*}
\int_t^T
\mathbb{E}\!\Bigl[
e^{-\int_t^{r} c(X_\ell,\ell)\,\mathrm{d}\ell}\,
\bigl(\partial_r u + \mathcal{L}_r u - c\,u + f\bigr)\bigl(X_{r},r\bigr)
\Bigr]\,\mathrm{d}r
\;=\;0.
\end{talign*}
The integrand is continuous in $r$ with value $\bigl(\partial_t u + \mathcal{L}_t u - c\,u + f\bigr)(x,t)$ at $r=t$, hence this value must vanish; that is,
\begin{talign*}
\bigl(\partial_t u + \mathcal{L}_t u - c\,u + f\bigr)(x,t)=0.
\end{talign*}
The terminal condition follows from the definition of $u$. Standard parabolic Schauder theory then implies $u\in C^{2+\alpha,\,1+\alpha/2}$ on $\mathbb{R}^d \times [0,T)$, and uniqueness in this class with at most polynomial growth follows from the parabolic maximum principle.
\end{proof}
\begin{theorem}[Feynman--Kac for Elliptic Boundary-Value Problems with Exit Time] \label{thm:feynman_kac}
Let $\mathcal{D}\subset\mathbb{R}^d$ be a domain with $C^{2+\alpha}$ boundary $\partial\mathcal{D}$ for some $0<\alpha<1$.  Suppose
\begin{talign*}
X_t^x \;=\; x \;+\; \int_0^t b\bigl(X_s^x\bigr)\,\mathrm{d}s \;+\; \int_0^t \sigma\bigl(X_s^x\bigr)\,\mathrm{d}W_s,
\end{talign*}
where $b,\sigma\in C^{\alpha}(\overline{\mathcal D})$ and the generator
\begin{talign*}
\mathcal{L}u(x)
=\tfrac12\operatorname{Tr} \bigl(\sigma\sigma^\top(x)\,\nabla^2u(x)\bigr)
 \;+\; b(x)\cdot \nabla u(x)
\end{talign*}
is uniformly elliptic.  Define the exit time $\tau
=\inf\{\,t\ge0 : X_t^x\notin\mathcal D\}$,
and let
\begin{talign*}
c,f\in C^{\alpha}(\overline{\mathcal D}), 
\quad
g\in C^{2+\alpha}(\partial\mathcal D).
\end{talign*}
For each $x\in\mathcal D$, set
\begin{talign*}
u(x)
\;=\;
\mathbb{E}_x\big[
  e^{-\displaystyle\int_0^{\tau} c\bigl(X_s\bigr)\,\mathrm{d}s}\,
  g\bigl(X_{\tau}\bigr)
  \;+\;
  \int_0^{\tau}
    e^{-\displaystyle\int_0^s c\bigl(X_r\bigr)\,\mathrm{d}r}\,
    f\bigl(X_s\bigr)\,\mathrm{d}s
\big].
\end{talign*}
Suppose that either
\begin{itemize}
    \item $\mathcal{D}$ is bounded, or
    \item $\tau$ is finite almost surely, and $u$ has at most polynomial growth.
\end{itemize}
Then $u\in C^{2+\alpha}(\mathcal D)\cap C^0(\overline{\mathcal D})$ and is the unique classical solution of
\begin{talign*}
\begin{cases}
\mathcal{L}u(x)\;-\;c(x)\,u(x)\;=\;-\,f(x), & x\in\mathcal D,\\[6pt]
u(x)=g(x), & x\in\partial\mathcal D.
\end{cases}
\end{talign*}
Moreover, $u$ and its derivatives $\nabla u$ have at most polynomial growth.
\end{theorem}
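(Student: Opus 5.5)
The plan follows the parabolic case (\Cref{thm:feynman_kac_parabolic}): first a probabilistic verification argument that any classical solution must equal the stochastic representation (uniqueness), then elliptic Schauder theory on bounded subdomains to show the representation is such a solution (existence and regularity).

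\emph{Uniqueness.} Let $w\in C^{2+\alpha}(\mathcal D)\cap C^0(\overline{\mathcal D})$ solve $\mathcal L w - cw = -f$ in $\mathcal D$ with $w=g$ on $\partial\mathcal D$. Fix $x$ and set $\tau_R := \tau\wedge\inf\{t:|X_t|\ge R\}\wedge R$. Apply It\^o's formula to the process
\[
Y_t = e^{-\int_0^t c(X_s)\,\mathrm{d}s}\,w(X_t) + \int_0^t e^{-\int_0^s c(X_r)\,\mathrm{d}r} f(X_s)\,\mathrm{d}s
\]
up to $\tau_R$. The drift term is $e^{-\int c}\,(\mathcal L w - cw + f)(X_t)=0$, and the stochastic integral is a true martingale on $[0,\tau_R]$ because $\nabla w$ and $\sigma$ are bounded on $\{|x|\le R\}$. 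Hence $w(x)=\mathbb E_x[Y_{\tau_R}]$. Now let $R\to\infty$, using that $\tau_R\to\tau$ almost surely:
\begin{itemize}
\item If $\mathcal D$ is bounded, uniform ellipticity gives $\mathbb E_x[\tau]<\infty$ (use the test function $e^{\mu x_1}$) and $w$ is bounded, so dominated convergence applies.
\item If instead $\tau<\infty$ a.s. and $w$ has polynomial growth, use the polynomial moment bounds $\mathbb E\sup_{s\le t}|X_s|^p<\infty$ implied by the linear growth of $b,\sigma$.
\end{itemize}
Either way $Y_{\tau_R}\to Y_\tau$, whose expectation is the representation $u(x)$, so $w=u$.

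\emph{Existence and regularity.} Let $\mathcal D_n$ be bounded $C^{2+\alpha}$ domains exhausting $\mathcal D$, with $\partial\mathcal D\cap\{|x|<n\}\subset\partial\mathcal D_n$. Classical Schauder theory (Gilbarg--Trudinger, Thm.~6.14) gives a unique $u_n\in C^{2+\alpha}(\overline{\mathcal D_n})$ solving the equation in $\mathcal D_n$ with boundary data equal to $g$ on $\partial\mathcal D$ and equal to $u$ on the artificial boundary. This requires $c\ge0$ or a standard shift; I would assume $c\ge 0$, as in the applications here. By the verification step on $\mathcal D_n$ and the strong Markov property at the exit time of $\mathcal D_n$, $u_n=u$ on $\mathcal D_n$, so $u$ is $C^{2+\alpha}$ on each $\mathcal D_n$ and solves the equation everywhere in $\mathcal D$.

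The delicate point is the circularity in prescribing $u$ itself as artificial boundary data, which needs continuity of $u$. I would break it as follows. First establish continuity of $u$ on $\overline{\mathcal D}$ from regularity of $\partial\mathcal D$: uniform ellipticity plus the exterior sphere condition makes every boundary point regular, so $\tau$ is continuous in the starting point and the process is Feller. Then solve on balls $B\Subset\mathcal D$ with continuous boundary data $u|_{\partial B}$, and obtain boundary regularity near $\partial\mathcal D$ from local Schauder estimates, which use the $C^{2+\alpha}$ data $g$.

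\emph{Polynomial growth of $\nabla u$.} Apply interior and boundary Schauder estimates on unit balls centred at $x$ (intersected with $\mathcal D$). They give $|\nabla u(x)|\lesssim \sup_{B_1(x)}|u| + \|f\|_{C^\alpha} + \|g\|_{C^{2+\alpha}}$ with constants uniform in $x$, and the first term grows polynomially by assumption. The main obstacle I expect is this uniformity: it needs the H\"older norms of $b,\sigma,c$ and the ellipticity constant to be uniform on unit balls, and $\partial\mathcal D$ to have uniformly bounded $C^{2+\alpha}$ character. For unbounded domains I would state these as implicit assumptions of the theorem.
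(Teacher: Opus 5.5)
The step that fails is the passage $R\to\infty$ in your uniqueness argument when $\mathcal D$ is unbounded. The bounds $\mathbb{E}\sup_{s\le t}|X_s|^p<\infty$ hold for each fixed horizon $t$. But $\tau_R\le R$ with $R\to\infty$, so these bounds give no control of $\mathbb{E}_x|X_{\tau_R}|^p$ uniform in $R$. Hence you get no uniform integrability of $\{w(X_{\tau_R})\}_R$, which is exactly what $\mathbb{E}_x[Y_{\tau_R}]\to\mathbb{E}_x[Y_\tau]$ needs.

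No argument can close this, because under the stated hypotheses the uniqueness claim is false. Take $d=1$, $\mathcal D=(0,\infty)$, $b=0$, $\sigma=1$, $c=f=g=0$. Then $\tau<\infty$ almost surely and $u\equiv0$, but $w(x)=x$ is a second polynomially growing classical solution; a half-space with $w(x)=x_1$ does the same in any dimension. Here $X_{\tau\wedge t}$ is a nonnegative martingale with mean $x$ that converges almost surely to $0$, so your limit interchange is precisely what breaks.

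You need an extra hypothesis that restores uniform integrability. For example, assume $\sup_{t\ge0}\mathbb{E}_x|X_{\tau\wedge t}|^{q}<\infty$ for some $q$ exceeding the growth degree of the competitor $w$ (obtainable from a Foster--Lyapunov condition like the one in \Cref{rem:step1prime_lyapunov}), together with $\mathbb{E}_x\int_0^\tau|f(X_s)|\,\mathrm{d}s<\infty$. With that, your verification argument goes through verbatim.

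The same missing uniform integrability affects your derivation of continuity of $u$ in the unbounded case. Continuity of $\tau$ in the starting point does not suffice globally, so localize instead: use the strong Markov property at the exit from $B_r(x_0)\cap\mathcal D$, together with local boundedness of $u$. The paper's own sketch has the same hole. It writes out only the bounded case, defers the rest to ``slight modifications'', and invokes the maximum principle, which gives no uniqueness on unbounded domains without growth control.

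Otherwise your route is genuinely different from the paper's and sounder. The paper applies It\^o's formula to $u$ itself before knowing $u\in C^2$, then infers $(\mathcal Lu-cu+f)(x)=0$ from the vanishing of a time integral, which is circular as written. You instead get regularity from Schauder theory and only identify the Schauder solution with $u$ probabilistically. For bounded $\mathcal D$ your argument is complete. You can even skip the exhaustion there: extend $g$ to $C^{2+\alpha}(\overline{\mathcal D})$ and apply Gilbarg--Trudinger Thm.~6.14 directly.

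Your extra hypothesis $c\ge0$ is necessary rather than cosmetic, and a shift does not remove it. On $\mathcal D=(0,\pi)$ with $\mathcal L=\tfrac12\partial_{xx}$ and $c\equiv-\tfrac12$, the homogeneous problem has solutions $A\sin x$, and for $g\equiv1$, $f\equiv0$ one gets $u(x)=\mathbb{E}_x[e^{\tau/2}]=\infty$. The paper's uniform-integrability claim for $\{Y_t\}$ silently uses such a sign condition.

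One small repair: a general classical solution $w\in C^2(\mathcal D)\cap C^0(\overline{\mathcal D})$ need not have $\nabla w$ bounded up to $\partial\mathcal D$. So apply It\^o only up to the exit from $\{\mathrm{dist}(\cdot,\partial\mathcal D)>\varepsilon\}$ and let $\varepsilon\to0$ using continuity of $w$.
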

\begin{proof}[Sketch of proof]
We write the proof in the case in which $\mathcal{D}$ is bounded, as the other case just requires slight modifications. Let
\begin{talign*}
u(x)
=\mathbb{E}_x\Bigl[
  e^{-\displaystyle\int_0^{\tau}c(X_s)\,\mathrm{d}s}\,g(X_{\tau})
  + \int_0^{\tau}
    e^{-\displaystyle\int_0^s c(X_r)\,\mathrm{d}r}\,f(X_s)\,\mathrm{d}s
\Bigr].
\end{talign*}
Fix \(x\in\mathcal D\), and for \(r>0\) set \(\tau_r=\tau\wedge r\).  Define the process
\begin{talign*}
Y_t
=
e^{-\displaystyle\int_0^{t\wedge\tau_r}c(X_s)\,\mathrm{d}s}\,u\bigl(X_{t\wedge\tau_r}\bigr)
\;+\;
\int_0^{t\wedge\tau_r}
e^{-\displaystyle\int_0^s c(X_r)\,\mathrm{d}r}\,f(X_s)\,\mathrm{d}s.
\end{talign*}
By Itô's formula,
\begin{talign*}
dY_t
=
e^{-\int_0^{t\wedge\tau_r}c(X_s)\,\mathrm{d}s}\,
\bigl(\mathcal{L}u - c\,u + f\bigr)\bigl(X_{t\wedge\tau_r}\bigr)\,\mathrm{d}t
\;+\;\mathrm{d}M_t,
\end{talign*}
where \(M_t\) is a local martingale. 
By the Itô computation in the proof we have, for each fixed $r>0$ and $t\ge0$,
\begin{talign*}
\mathbb{E}_x[Y_t]
\;=\;
u(x)
\;+\;
\int_0^t
\mathbb{E}_x\!\Bigl[
  e^{-\int_0^{s\wedge\tau_r}c(X_u)\,\mathrm{d}u}\,
  (\mathcal{L}u - c\,u + f)\bigl(X_{s\wedge\tau_r}\bigr)
\Bigr]
\,\mathrm{d}s.
\end{talign*}
Here $\tau_r=\tau\wedge r$.  Since $c,f$ are bounded on $\overline{\mathcal D}$ and $u|_{\partial\mathcal D}=g$ is continuous (hence bounded), the family $\{Y_t\}$ is uniformly integrable.  Thus we may first let $t\to\infty$ (so $t\wedge\tau_r\to\tau_r$) and apply dominated convergence to get
\begin{talign*}
\mathbb{E}_x[Y_\infty]
\;=\;
u(x)
\;+\;
\int_0^\infty
\mathbb{E}_x\!\Bigl[
  e^{-\int_0^{s\wedge\tau_r}c(X_u)\,\mathrm{d}u}\,
  (\mathcal{L}u - c\,u + f)\bigl(X_{s\wedge\tau_r}\bigr)
\Bigr]
\,\mathrm{d}s.
\end{talign*}
Next let $r\to\infty$.  By the boundary condition $u(X_{\tau})=g(X_{\tau})$ and dominated convergence again,
\begin{talign*}
\mathbb{E}_x[Y_\infty]
\;=\;
\mathbb{E}_x\!\Bigl[
  e^{-\int_0^{\tau}c(X_s)\,\mathrm{d}s}\,g(X_{\tau})
  +\!\int_0^{\tau}
    e^{-\int_0^s c(X_u)\,\mathrm{d}u}\,f(X_s)\,\mathrm{d}s
\Bigr]
\;=\;u(x),
\end{talign*}
by the very definition of $u$.  Comparing the two expressions for $\mathbb{E}_x[Y_\infty]$ shows
\begin{talign*}
\int_0^\infty
\mathbb{E}_x\!\Bigl[
  e^{-\int_0^{s\wedge\tau}c(X_u)\,\mathrm{d}u}\,
  (\mathcal{L}u - c\,u + f)\bigl(X_{s\wedge\tau}\bigr)
\Bigr]
\,\mathrm{d}s
=0.
\end{talign*}
But the integrand
\begin{talign*}
s\;\mapsto\;
\mathbb{E}_x\!\Bigl[
  e^{-\int_0^{s\wedge\tau}c(X_u)\,\mathrm{d}u}\,
  (\mathcal{L}u - c\,u + f)\bigl(X_{s\wedge\tau}\bigr)
\Bigr]
\end{talign*}
is continuous at $s=0$ with value $(\mathcal{L}u - c\,u + f)(x)$.  The only way its integral over $[0,\infty)$ can vanish is if its value at $s=0$ vanishes.  Hence
\begin{talign*}
(\mathcal{L}u - c\,u + f)(x)\;=\;0,
\end{talign*}
as required.
Continuity of \(u\) up to \(\partial\mathcal D\) gives the boundary condition \(u=g\).  Uniqueness follows from the (strong) maximum principle.
\end{proof}

\begin{remark}[Local vs.\ global Hölder assumptions]
In Theorems~\ref{thm:feynman_kac_parabolic} and \ref{thm:feynman_kac}, the global assumptions 
$c,f\in C^{\alpha,\alpha/2}$ (parabolic) or $c,f\in C^\alpha$ (elliptic) can be relaxed to 
\emph{local} Hölder continuity:
\begin{talign*}
b,\sigma \in C^{\alpha,\alpha/2}_{\mathrm{loc}},\qquad
c,f \in C^{\alpha,\alpha/2}_{\mathrm{loc}}
\quad\text{(parabolic)},
\end{talign*}
or
\begin{talign*}
b,\sigma \in C^{\alpha}_{\mathrm{loc}},\qquad
c,f \in C^{\alpha}_{\mathrm{loc}}
\quad\text{(elliptic)}.
\end{talign*}
Assume uniform ellipticity, linear growth for $b,\sigma$, and that $c$ is bounded below
($c \ge -C$) so that $e^{-\int c}$ has controlled moments; also assume $f,g$ have at most polynomial growth.
Then the Feynman--Kac representation yields a function $u$ which satisfies
\begin{talign*}
\partial_t u + \mathcal{L}_t u - c u = -f \quad\text{in the classical sense on } (0,T)\times\mathbb{R}^d,
\end{talign*}
resp.\ $\mathcal{L}u - c u = -f$ on $\mathcal{D}$, with
\begin{talign*}
u \in C^{2+\alpha,1+\alpha/2}_{\mathrm{loc}}((0,T)\times\mathbb{R}^d)
\quad\text{or}\quad
u \in C^{2+\alpha}_{\mathrm{loc}}(\mathcal{D}).
\end{talign*}
If, in addition, the coefficients and $g$ are Hölder \emph{up to the boundary} and 
$\partial\mathcal{D}\in C^{2+\alpha}$, then $u$ enjoys the corresponding global/up-to-boundary 
regularity ($C^{2+\alpha,1+\alpha/2}$ in the parabolic case, $C^{2+\alpha}$ in the elliptic case) 
and the boundary condition holds classically. Without the up-to-boundary Hölder control, the 
boundary condition holds in the sense of continuous trace, and the interior regularity remains local.
Uniqueness in these classes follows from the (parabolic/elliptic) maximum principle under 
$c \ge -C$ and standard growth conditions.
\end{remark}

\subsubsection{Value function regularity and relation to optimal control} \label{eq:vf_regularity}

\begin{theorem} \label{thm:vf_regularity}
    Suppose that $b\colon\mathbb{R}^d\to\mathbb{R}^d$, $\sigma\colon\mathbb{R}^d\to\mathbb{R}^{d\times d}$ satisfy the usual conditions guaranteeing a unique strong solution $X$ of the SDE (measurability, local Lipschitzness, polynomial growth), that $\mathcal{S} = A \cup B \subset \mathbb{R}^d$ is a closed set, and that $g : \partial \mathcal{S} \to \mathbb{R}$, $f : \overline{\mathcal{S}^c} \to \mathbb{R}$ are bounded below and has at most polynomial growth, where $\mathcal{S}^c = \mathbb{R}^d \setminus \mathcal{S}$. Consider the hitting time SOC problem with state costs (a more general version of \eqref{eq:control_problem_def_main}-\eqref{eq:controlled_SDE_main}):
    \begin{talign} 
    \label{eq:control_problem_def_app}
        &\min_{u \in \mathcal{U}} \mathbb{E}[\int_0^{\tau} \big( \frac{1}{2} \|u(X^u_t)\|^2 + f(X^u_t) \big) \, \mathrm{d}t + g(X^u_{\tau})], \\
        &\text{where } \mathrm{d}X^u_t = (b(X^u_t) + \sigma(X^u_t) u(X^u)) \, \mathrm{d}t + \sigma(X^u_t) \mathrm{d}W_t, \quad X^u_0 \sim p_0,
        \label{eq:controlled_SDE_app}
    \end{talign}
    and the corresponding value function, which is a more general version of \eqref{eq:value_function}:
    \begin{talign} 
    \label{eq:value_function_general}
        V(x) = \min_{u \in \mathcal{U}} \mathbb{E}[ \int_0^{\tau} \big(\frac{1}{2}\|u(X^u_t)\|^2 + f(X^u_t)\big) \, \mathrm{d}t + g(X^u_{\tau})| X_0 = x],
    \end{talign}
    By \Cref{thm:value_functions}, \Cref{rem:equivalence}, and the time-independence of the problem, $V$ also admits the following path integral characterization:
    \begin{talign*}
        V(x,t) = - \log \mathbb{E}\big[ \exp \big( - \int_t^{\tau} f(
        X_{s}) \, \mathrm{d}s - g(X_{\tau}) \big) \, | \, X_t = x \big].
    \end{talign*}
    If additionally, for some $0 < \alpha < 1$,
    \begin{itemize}
    \item $f \in C^{\alpha}(\overline{\mathcal{S}^c})$ and $g \in C^{2+\alpha}(\partial \mathcal{S})$,
    \item $b, \sigma \in C^{\alpha}(\overline{\mathcal{S}^c})$,
    \item There exist $\Lambda, \lambda > 0$ such that $\lambda < \|\sigma \sigma^{\top}(x) \| < \Lambda$, for all $x \in \overline{\mathcal{S}^c}$,
    \item The boundary $\partial\mathcal{S}$ is $C^{2+\alpha}$,
    \item Either $\mathcal{S}^c$ is bounded, or we assume that $\tau$ is finite almost surely,
    \end{itemize}
    then we have that $V \in C^{2+\alpha}(\mathcal{S}^c) \cap C(\overline{\mathcal{S}^c})$, and that it solves a stationary Hamilton-Jacobi-Bellman equation:
    \begin{talign} \label{eq:stationary_HJB}
    \begin{cases}
    \tfrac12\,\mathrm{Tr}\bigl(\sigma(x)\,\sigma(x)^\top\,\nabla^2 V(x)\bigr)
     \;+\; b(x)\cdot \nabla V(x)
     \;-\;\tfrac12\,\big\|\sigma(x)^\top \nabla V(x)\big\|^2
     = f(x), 
     & x\in \mathcal{S}^c,\\[6pt]
    V(x) = g(x),
     & x\in \partial\mathcal{S}.
    \end{cases}
    \end{talign}
    Also, for any $x \in \mathcal{S}^c$, the minimum in equations \eqref{eq:control_problem_def_app} and \eqref{eq:value_function_general}
    is attained by the control function $u^{\star} : \mathbb{R}^d \to \mathbb{R}^d$, defined for all $x\in \mathcal{S}^c$ as 
    \begin{talign} \label{eq:u_star_hat_v}
        u^{\star}(x) = - \sigma^{\top}(x) \nabla V(x). 
    \end{talign}
\end{theorem}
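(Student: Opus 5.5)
The plan is to go through the exponential (Cole--Hopf) transform and then apply the elliptic Feynman--Kac theorem. Set
\[
\varphi(x) := e^{-V(x)} = \mathbb{E}_x\Big[\exp\Big(-\int_0^{\tau} f(X_s)\,\mathrm{d}s - g(X_\tau)\Big)\Big],
\]
where the second equality is the path-integral characterization already stated (from \Cref{thm:value_functions} and \Cref{rem:equivalence}).

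\textbf{Step 1: $\varphi$ is a classical solution.} Apply \Cref{thm:feynman_kac} on the domain $\mathcal{D} = \mathcal{S}^c$ with killing rate $c = f$, source term $0$, and boundary data $e^{-g}$.
\begin{itemize}
\item $f$ is bounded below, so $e^{-\int f}$ has controlled moments.
\item Since $g \in C^{2+\alpha}(\partial\mathcal{S})$ is bounded below, $e^{-g} \in C^{2+\alpha}$, at least locally.
\item Because $f, g \ge -C$, we get $0 < \varphi \le e^{C\,\mathbb{E}\tau + C}$, which at least has polynomial growth. On bounded $\mathcal{S}^c$ this is simply the bounded case of the theorem.
\end{itemize}
The theorem then gives $\varphi \in C^{2+\alpha}(\mathcal{S}^c) \cap C(\overline{\mathcal{S}^c})$, solving
\[
\mathcal{L}\varphi - f\varphi = 0 \ \text{ in } \mathcal{S}^c, \qquad \varphi = e^{-g} \ \text{ on } \partial\mathcal{S}.
\]

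\textbf{Step 2: strict positivity of $\varphi$.} The integrand $\exp(-\int f - g)$ is strictly positive, so $\varphi > 0$. Hence $V = -\log\varphi$ inherits the $C^{2+\alpha}(\mathcal{S}^c) \cap C(\overline{\mathcal{S}^c})$ regularity.

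\textbf{Step 3: derive the HJB equation.} Substitute $\varphi = e^{-V}$ exactly as in the "BKE $\leftrightarrow$ HJB" sketch in the main text:
\[
\nabla\varphi = -\nabla V\, e^{-V}, \qquad \nabla^2\varphi = \big(-\nabla^2 V + \nabla V\nabla V^\top\big)e^{-V}.
\]
Dividing $\mathcal{L}\varphi - f\varphi = 0$ by $-e^{-V}$ gives
\[
\tfrac12\mathrm{Tr}(\sigma\sigma^\top\nabla^2 V) + b\cdot\nabla V - \tfrac12\|\sigma^\top\nabla V\|^2 = f,
\]
with $V = g$ on $\partial\mathcal{S}$. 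This is \eqref{eq:stationary_HJB}.

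\textbf{Step 4: optimality of $u^\star = -\sigma^\top\nabla V$ (verification).} For an admissible Markov control $u$, apply Itô to $V(X^u_t)$ up to $\tau \wedge \tau_R$, where $\tau_R$ is the exit time from a ball. By the HJB equation,
\[
\mathrm{d}V(X^u) = \big(f - \tfrac12\|u\|^2 + \tfrac12\|u + \sigma^\top\nabla V\|^2\big)\,\mathrm{d}t + \mathrm{d}M_t,
\]
with $M$ a local martingale. This uses the completion of squares
\[
\tfrac12\|\sigma^\top\nabla V\|^2 + \langle \sigma^\top\nabla V, u\rangle = \tfrac12\|u + \sigma^\top\nabla V\|^2 - \tfrac12\|u\|^2 .
\]
Taking expectations and rearranging,
\[
V(x) = \mathbb{E}\Big[\int_0^{\tau\wedge\tau_R}\big(\tfrac12\|u\|^2 + f\big)\,\mathrm{d}t + V(X^u_{\tau\wedge\tau_R})\Big] - \tfrac12\,\mathbb{E}\int_0^{\tau\wedge\tau_R}\|u + \sigma^\top\nabla V\|^2\,\mathrm{d}t.
\]
Let $R \to \infty$ and use $V = g$ on $\partial\mathcal{S}$. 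This shows the cost of every $u$ is at least $V(x)$, with equality iff $u = u^\star$ along the path. The same identity at $u = u^\star$ gives attainment, consistent with \Cref{lem:variational_path}.

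\textbf{Main obstacle.} The limit $R \to \infty$ is the delicate step, both in Step 4 and in Step 1 when $\mathcal{S}^c$ is unbounded. It needs uniform integrability of $V(X^u_{\tau\wedge\tau_R})$ and of the running cost. My plan there:
\begin{itemize}
\item Handle the running cost by monotone convergence, since $f$ is bounded below and the integrand is nonnegative up to a constant times time.
\item Handle the terminal term using the polynomial growth of $V$ together with moment bounds for $X^u$ coming from the growth assumptions on $b, \sigma$ and the integrability of admissible $u$.
\item Restrict to controls $u$ with finite cost. For those, Girsanov (\Cref{cor:KL_path}) makes $\mathbb{P}^u$ have finite KL divergence relative to $\mathbb{P}$, which transfers the a.s.\ finiteness of $\tau$ to $\tau < \infty$ under $\mathbb{P}^u$.
\end{itemize}
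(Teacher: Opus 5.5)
Your route is the paper's: Feynman--Kac for $\varphi=e^{-V}$ with killing rate $c=f$ and boundary data $e^{-g}$, then Cole--Hopf, then It\^o with completion of the square for verification. Your linear problem $\mathcal{L}\varphi-f\varphi=0$, $\varphi|_{\partial\mathcal{S}}=e^{-g}$ is the correct one; the paper's proof writes $\mathcal{L}u=-f$, $u=g$ there, which is a slip.

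\textbf{Sign error in Step 3.} Since $\mathcal{L}\varphi=e^{-V}\bigl(-b\cdot\nabla V-\tfrac12\mathrm{Tr}(\sigma\sigma^\top\nabla^2V)+\tfrac12\|\sigma^\top\nabla V\|^2\bigr)$, dividing $\mathcal{L}\varphi-f\varphi=0$ by $-e^{-V}$ gives
\[
\tfrac12\mathrm{Tr}(\sigma\sigma^\top\nabla^2V)+b\cdot\nabla V-\tfrac12\|\sigma^\top\nabla V\|^2=-f,
\]
not $=f$. The ``$=f$'' in the displayed statement is itself a sign typo, harmless only when $f\equiv0$ (the committor case). The paper's own verification step \eqref{eq:verification_2} uses $\mathcal{L}V=\tfrac12\|\sigma^\top\nabla V\|^2-f$. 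Your algebra reproduces the typo, and the error propagates. Your drift for $V(X^u)$ in Step 4 contains $+f$, and rearranging that would give running cost $\tfrac12\|u\|^2-f$. Your final identity, with the correct running cost $\tfrac12\|u\|^2+f$, follows only from the $-f$ form of the HJB. That is the form you should derive and carry through.

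\textbf{Integrability claims.} Two of these fail as stated.

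The bound $\varphi\le e^{C\mathbb{E}\tau+C}$ applies Jensen in the wrong direction. You only get $\varphi(x)\le e^{C}\,\mathbb{E}_x[e^{C\tau}]$, which can be $+\infty$ when $f$ takes negative values. For example, take $f\equiv-C$ on a bounded domain with $C$ at least the principal Dirichlet eigenvalue of $-\mathcal{L}$; then $V=-\infty$. So ``$f$ bounded below'' gives neither finiteness of $\varphi$ nor the polynomial-growth hypothesis needed in the unbounded-domain case of the Feynman--Kac theorem. Under $f\ge0$ (the committor setting, and the hypothesis of \Cref{thm:V_1_V_2_appendix}) you simply have $0<\varphi\le e^{-\inf g}$.

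In the limit $R\to\infty$ of Step 4, moment bounds for $X^u$ derived from the growth of $b,\sigma$ hold only on bounded time intervals. They do not give uniform integrability of $V(X^u_{\tau\wedge\tau_R})$ over the unbounded random horizon $\tau$. If $V$ is bounded (e.g.\ $f\ge0$ and $g$ bounded, as for $\Phi=-\log q_\xi$), bounded convergence closes the argument; otherwise you need an additional Lyapunov-type condition.

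The paper does not address these limits at all; it just asserts $\mathbb{E}[S^u_\tau\mid X^u_0=x]=0$. With these corrections, your localized verification is more careful than the source.
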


\begin{proof}
\emph{Step 1: Applying Feynman-Kac to show that HJB holds.} Using the additional regularity assumptions, and looking at the formula $V(x) = - \log \mathbb{E}\big[ \exp \big( - \int_0^{\tau} f(X_t) \, \mathrm{d}t - g(X_{\tau}) \big) \, | \, X_0 = x \big]$, Theorem \ref{thm:feynman_kac} implies that $\exp(-V) \in C^{2+\alpha}(\mathcal{S}^c) \cap C^{0}(\overline{\mathcal{S}^c})$, and that it solves the following PDE, which is a stationary backward Kolmogorov equation.
\begin{talign*}
\begin{cases}
\tfrac12\mathrm{Tr} \bigl(\sigma\sigma^\top(x)\,\nabla^2 u(x)\bigr)
 + b(x)\cdot \nabla u(x) = -f(x), & x\in\mathcal S^c,\\[6pt]
u(x)=g(x), & x\in\partial\mathcal S.
\end{cases}
\end{talign*}
Applying the Hopf-Cole transformation, we obtain that $V \in C^{2+\alpha}(\mathcal{S}^c) \cap C^{0}(\overline{\mathcal{S}^c})$, and that it solves the stationary Hamilton-Jacobi-Bellman equation \eqref{eq:stationary_HJB}.

\emph{Step 2: Deriving the expression of the optimal control.}
Since $V \in C^{2+\alpha}(\mathcal{S}^c)$, for any control $u \in \mathcal{U}$, by Itô's lemma, we have that
\begin{talign} 
\begin{split} \label{eq:verification_1}
    V(X^u_{\tau}) - V(X^u_0) &= \int_0^{\tau} \big( \langle b(X^u_s,s) + \sigma(X^u_s,s) u(X^u_s,s), \nabla V(X^u_s,s) \rangle \\ &\qquad + \frac{1}{2} \sum_{i,j=1}^{d}  (\sigma \sigma^{\top})_{ij} (X^u_s,s) \partial_{x_i} \partial_{x_j} V(X^u_s,s) \big) \, \mathrm{d}s + S^u_t,
\end{split}
\end{talign}
where $S^u_{\tau} = \int_0^{\tau} \nabla V(X^u_s,s)^{\top} \sigma (X^u_s,s) \, \mathrm{d}W_s$. Note that by \eqref{eq:HJB_setup},
\begin{talign} 
\begin{split} \label{eq:verification_2}
    &\langle b(X^u_s,s) + \sigma(X^u_s,s) u(X^u_s), \nabla V(X^u_s) \rangle \\ &\qquad + \frac{1}{2} \sum_{i,j=1}^{d}  (\sigma \sigma^{\top})_{ij} (X^u_s) \partial_{x_i} \partial_{x_j} V(X^u_s) \\ &= \frac{1}{2} \| (\sigma^{\top} \nabla V)(X^u_s) \|^2 - f(X^u_s) + \langle \sigma(X^u_s,s) u(X^u_s), \nabla V(X^u_s) \rangle \\ &= \frac{1}{2}\| (\sigma^{\top} \nabla V)(X^u_s) + u(X^u_s) \|^2 - \frac{1}{2}\|u(X^u_s) \|^2 - f(X^u_s),
\end{split}
\end{talign}
and this implies that
\begin{talign} \label{eq:no_exp}
    g(X^u_{\tau}) - V(X^u_0) = \int_0^{\tau} \big( \frac{1}{2}\| (\sigma^{\top} \nabla V)(X^u_s) + u(X^u_s) \|^2 - \frac{1}{2}\|u(X^u_s) \|^2 - f(X^u_s) \big) \, \mathrm{d}s + S^u_{\tau}.
\end{talign}
Since $\mathbb{E} [ S^u_{\tau} \, | \, X^u_0 = x ] = 0$, rearranging \eqref{eq:no_exp} and taking the conditional expectation with respect to $X^u_0$ yields the following result:
\begin{talign*} 
    J(x;u) = \mathbb{E} \big[ \int_0^{\tau} \frac{1}{2}\| (\sigma^{\top} \nabla V)(X^u_s) + u(X^u_s) \|^2 \, \mathrm{d}s \, | \, X^u_0 = x \big] + V(x),
\end{talign*}
where the cost functional $J$ is defined as
\begin{talign*}
    J(x;u) = \mathbb{E} \big[ \int_0^{\tau} \big( \frac{1}{2} \|u(X^u_t)\|^2 + f(X^u_t) \big) \, \mathrm{d}t + g(X^u_{\tau}) \, | \, X^u_0 = x \big]
\end{talign*}
Since $V(x) = \inf_{u \in \mathcal{U}} J(x;u)$, the infimum is attained at the optimal control $u^{\star}(x) = - (\sigma^{\top} \nabla V)(x)$.
\end{proof}

\subsubsection{Non-standard Dynkin representation theorems}

We develop non-standard (generalized) variations of the Dynkin representation of functionals of stochastic processes, which we will use in our proofs. To our knowledge, these specific results have not been stated or used before.

\begin{theorem}[Parabolic PDE/Dynkin representation (finite horizon)]
\label{thm:parabolic-dynkin}
Fix $T>0$. Let $X$ solve the time-inhomogeneous SDE
\begin{talign} \label{eq:X_td}
\mathrm{d}X_t \;=\; b(X_t,t)\,\mathrm{d}t \;+\; \sigma(X_t,t)\,\mathrm{d}W_t,\qquad X_0=x\in\mathbb{R}^d,
\end{talign}
where $b,\sigma$ are locally Lipschitz in $x$ (uniformly in $t\in[0,T]$) with linear growth, and set $a=\sigma \sigma^\top$.
Let $h_1:\mathbb{R}^d\times[0,T]\to\mathbb{R}$ and $h_2:\mathbb{R}^d\to\mathbb{R}$ be continuous.

Assume there exists $\eta\in C^{1,2}(\mathbb{R}^d \times [0,T])$ with at most polynomial growth such that
\begin{talign}\label{eq:parabolic-PDE}
\partial_t \eta(x,t)\;+\;\mathcal{L}_t \eta(x,t)\;+\;h_1(x,t)\;=\;0 \quad\text{for }(x,t)\in\mathbb{R}^d\times[0,T),
\qquad
\eta(x,T)=h_2(x),
\end{talign}
where $\mathcal{L}_t \eta := b(\cdot,t)\!\cdot\!\nabla \eta \;+\; \tfrac{1+\chi}{2}\mathrm{Tr}(a(\cdot,t)\nabla^2\eta)$, with $\chi \in (-1,+\infty)$ arbitrary.
Suppose moreover that
\begin{talign*}
\mathbb{E}\!\left[\int_0^{T}\!\!\|\sigma(X_t,t)^\top\nabla \eta(X_t,t)\|^2\, \mathrm{d}t\right]<\infty
\end{talign*}
which holds, for example, if $\nabla\eta$ has at most polynomial growth and $X$ has finite moments (the latter holds by the assumptions on $b$, $\sigma$).
Define
\begin{talign} \label{eq:F_def}
F \;:=\; \int_0^{T} h_1(X_t,t)\,\mathrm{d}t \;+\; h_2(X_T).
\end{talign}
Then the following representation holds:
\begin{talign}\label{eq:parabolic-dynkin-repr}
F
\;=\;
\eta(x,0)
\;+\;
\int_0^{T} \!\big\langle \sigma^{\top}\nabla \eta(X_t,t), \, \mathrm{d}W_t \big\rangle - \frac{\chi}{2} \int_0^{T} \mathrm{Tr}\big(\sigma \sigma^{\top} \nabla^2\eta(X_t,t) \big) \, \mathrm{d}t,
\end{talign}
and, in particular, $\;\mathbb{E}[F]=\eta(x,0) - \frac{\chi}{2} \mathbb{E}\big[\int_0^{T} \mathrm{Tr}\big(\sigma \sigma^{\top} \nabla^2\eta(X_t,t) \big) \, \mathrm{d}t \big]$.
\end{theorem}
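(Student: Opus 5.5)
The plan is to apply It\^o's formula to the process $t\mapsto \eta(X_t,t)$ on $[0,T]$, use the PDE \eqref{eq:parabolic-PDE} to replace the time derivative and first-order part, and then identify the leftover second-order term as the ``drift correction'' proportional to $\chi$. Since $\eta\in C^{1,2}$ and $X$ solves \eqref{eq:X_td} with generator $b\cdot\nabla + \tfrac12\mathrm{Tr}(a\nabla^2)$, It\^o gives
\begin{talign*}
\eta(X_T,T)-\eta(x,0) = \int_0^T\big(\partial_t\eta + b\cdot\nabla\eta + \tfrac12\mathrm{Tr}(a\nabla^2\eta)\big)(X_t,t)\,\mathrm{d}t + \int_0^T\langle\sigma^\top\nabla\eta(X_t,t),\mathrm{d}W_t\rangle .
\end{talign*}
Writing $\tfrac12\mathrm{Tr}(a\nabla^2\eta) = \tfrac{1+\chi}{2}\mathrm{Tr}(a\nabla^2\eta) - \tfrac{\chi}{2}\mathrm{Tr}(a\nabla^2\eta)$, the drift integrand becomes $(\partial_t\eta+\mathcal{L}_t\eta)(X_t,t) - \tfrac{\chi}{2}\mathrm{Tr}(a\nabla^2\eta)(X_t,t) = -h_1(X_t,t) - \tfrac{\chi}{2}\mathrm{Tr}(a\nabla^2\eta)(X_t,t)$ by \eqref{eq:parabolic-PDE} on $[0,T)$.

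Next I substitute the terminal condition $\eta(X_T,T)=h_2(X_T)$ and move $\int_0^T h_1\,\mathrm{d}t$ to the left, which yields exactly $F = \eta(x,0)+\int_0^T\langle\sigma^\top\nabla\eta,\mathrm{d}W_t\rangle - \tfrac{\chi}{2}\int_0^T\mathrm{Tr}(\sigma\sigma^\top\nabla^2\eta)\,\mathrm{d}t$, i.e.\ \eqref{eq:parabolic-dynkin-repr}, as a pathwise (a.s.) identity. The expectation statement then follows because the square-integrability assumption $\mathbb{E}\int_0^T\|\sigma^\top\nabla\eta\|^2\,\mathrm{d}t<\infty$ makes the stochastic integral a true $L^2$ martingale with zero mean; one also needs $F$ and the $\chi$-term integrable, which I would argue from polynomial growth of $\eta$ (hence of $h_2$ via the terminal condition) and the finite moments of $X$, or simply take the expectation identity to hold whenever both sides are well-defined.

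The only delicate point is that \eqref{eq:parabolic-PDE} holds only on $[0,T)$ and $\eta$ is $C^{1,2}$ up to $T$ by assumption, so It\^o's formula can be applied on $[0,T-\epsilon]$ and passed to the limit $\epsilon\downarrow0$ using continuity of $\eta$ and of the integrands; I would also localize with stopping times $\tau_R=\inf\{t:|X_t|\ge R\}$ to justify It\^o with merely locally bounded coefficients, then let $R\to\infty$ using non-explosion of $X$ (guaranteed by linear growth). I expect this limiting/integrability bookkeeping, rather than the algebra, to be the main (though routine) obstacle.
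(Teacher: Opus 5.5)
Your proposal is correct and is essentially the paper's own argument: apply It\^o's formula to $\eta(X_t,t)$, split $\tfrac12\mathrm{Tr}(a\nabla^2\eta)$ into $\tfrac{1+\chi}{2}\mathrm{Tr}(a\nabla^2\eta)$, which the PDE turns into $-h_1$, minus $\tfrac{\chi}{2}\mathrm{Tr}(a\nabla^2\eta)$, then substitute $\eta(\cdot,T)=h_2$ and take expectations using that the stochastic integral is a zero-mean $L^2$ martingale. Your extra remarks on integrability of $F$ and of the $\chi$-term address bookkeeping the paper leaves implicit; the localization step is harmless but unnecessary, since It\^o's formula for $C^{1,2}$ functions of continuous semimartingales already holds pathwise.
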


\begin{proof}
By Itô's formula for time-dependent test functions,
\begin{talign*}
\begin{split}
\mathrm{d}\eta(X_t,t)
&\;=\;
\big(\partial_t\eta(X_t,t) + \langle b(X_t,t), \nabla \eta(X_t,t) \rangle + \tfrac{1}{2}\mathrm{Tr}(a(X_t,t)\nabla^2\eta(X_t,t) ) \big)\,\mathrm{d}t
\\ &\quad +
\big\langle \sigma(X_t,t)^\top\nabla \eta(X_t,t),\, \mathrm{d}W_t \big\rangle \\
&\;=\;
\big(\partial_t\eta + \mathcal{L}_t\eta\big)(X_t,t)\,\mathrm{d}t
+
\big\langle \sigma(X_t,t)^\top\nabla \eta(X_t,t),\, \mathrm{d}W_t \big\rangle \\ &\quad - \tfrac{\chi}{2}\mathrm{Tr}(a(X_t,t)\nabla^2\eta(X_t,t) ) \,\mathrm{d}t.
\end{split}
\end{talign*}
Integrating from $0$ to $T$ and using the terminal condition $\eta(X_T,T)=h_2(X_T)$ yields
\begin{talign*}
\begin{split}
h_2(X_T)-\eta(x,0)
&\;=\;
\int_0^T \!\big(\partial_t\eta + \mathcal{L}_t\eta\big)(X_t,t)\,dt
\;+\;
\int_0^T \!\big\langle \sigma(X_t,t)^\top\nabla \eta(X_t,t),\, \mathrm{d}W_t \big\rangle \\ &\qquad - \tfrac{\chi}{2} \int_0^T \mathrm{Tr}(a(X_t,t)\nabla^2\eta(X_t,t) ) \,\mathrm{d}t.
\end{split}
\end{talign*}
Invoking the PDE \eqref{eq:parabolic-PDE}, we have $\partial_t\eta+\mathcal{L}_t\eta=-h_1$, hence
\begin{talign*}
\begin{split}
\int_0^{T} h_1(X_t,t)\,dt + h_2(X_T)
 &=
\eta(x,0) 
+ \!
\int_0^{T} \big\langle \sigma(X_t,t)^\top\nabla \eta(X_t,t),\, \mathrm{d}W_t \big\rangle \\ &\quad - \tfrac{\chi}{2} \int_0^T \mathrm{Tr}(a(X_t,t)\nabla^2\eta(X_t,t) ) \,\mathrm{d}t,
\end{split}
\end{talign*}
which is \eqref{eq:parabolic-dynkin-repr}. The stochastic integral is a square-integrable martingale with zero mean by the stated integrability condition, giving $\mathbb{E}[F]=\eta(x,0) - \frac{\chi}{2} \mathbb{E}\big[\int_0^{T} \mathrm{Tr}\big(\sigma \sigma^{\top} \nabla^2\eta(X_t,t) \big) \, \mathrm{d}t \big]$.
\end{proof}

\begin{remark}[Conditions for existence of a solution to the Dirichlet-Poisson problem \eqref{eq:Dirichlet-Poisson}] \label{rem:regularity_PDP}
Several alternative conditions are available for the existence of a solution to the Dirichlet-Poisson problem \eqref{eq:parabolic-PDE}. Applying the Feynman-Kac formula for parabolic terminal-value problems (\Cref{thm:feynman_kac_parabolic}), we have that the following conditions are sufficient:
\begin{itemize}[left=3pt]
    \item $b,\sigma : \mathbb{R}^d\times[0,T] \to\mathbb{R}^d,\mathbb{R}^{d\times d}$ are locally Lipschitz in $x$, Hölder continuous in $(x,t)$, and of at most linear growth, and $a:=\sigma\sigma^\top$ is uniformly elliptic.
    \item $h_1\in C^{\alpha,\alpha/2}(\mathbb{R}^d \times [0,T])$, and $h_2\in C^{2+\alpha}(\mathbb{R}^d)$, and have at most polynomial growth.
\end{itemize}
In particular, these conditions imply that $\eta \in C^{2+\alpha,1+\alpha/2}(\mathbb{R}^d \times [0,T))\cap C(\mathbb{R}^d \times [0,T])$ defined as $\eta(x,t) = \mathbb{E}\big[\int_t^{T}\!h_1(X_s,s)\,ds + h_2(X_{T})\, | \, X_t = x \big]$ is the unique classical solution of the Dirichlet-Poisson problem \eqref{eq:parabolic-PDE}.
\end{remark}

\begin{remark}[Alternative approach through the Clark-Ocone theorem from Malliavin calculus]
    Given a functional $F$ of the form \eqref{eq:F_def} with the appropriate regularity conditions on $h_1$, $h_2$, $b$, $\sigma$, the Clark-Ocone theorem provides the following representation:
    \begin{talign} \label{eq:F_Clark_Ocone}
      F
      \;=\;
      \mathbb{E}[F|X_0]
      \;+\;
      \int_0^T \langle \mathbb{E}\bigl[D_t F \mid \mathcal{F}_t\bigr], \mathrm{d}W_t \rangle,
    \end{talign}
    where $D_t F$ is the Malliavin derivative of $F$. By the definition of the Malliavin derivative, and exchanging the gradient and the conditional expectation, it is possible to prove that $\mathbb{E}\bigl[D_t F \mid \mathcal{F}_t\bigr] = \sigma^{\top}(X_t,t) \nabla_{X_t} \mathbb{E}\big[ \int_t^{T} h_1(X_s,s)\,\mathrm{d}s + h_T(X_T) \big] := \sigma^{\top}(X_t,t) \nabla_{X_t} \mathbb{E}[F_t(X)|X_t]$, where we define $F_t(X) = \int_t^{T} h_1(X_s,s)\,\mathrm{d}s \;+\; h_T(X_T)$. Plugging this into \eqref{eq:F_Clark_Ocone} yields
    \begin{talign} \label{eq:F_Clark_Ocone_2}
      F
      \;=\;
      \mathbb{E}[F_0|X_0]
      \;+\;
      \int_0^T \langle \sigma^{\top}(X_t,t) \nabla_{X_t} \mathbb{E}[F_t(X)|X_t], \mathrm{d}W_t \rangle,
    \end{talign}
    which means we recover the result of \eqref{thm:parabolic-dynkin} in the particular case $\kappa = 1$ by setting $\eta(x) = \mathbb{E}[F_t(X)\mid X_t = x]$. The Dynkin approach is preferable because it applies to the case $\kappa \neq 1$ and also to the case in which the horizon is a stopping time (\cref{thm:pde-dynkin}), while the Clark--Ocone approach does not translate as easily.
\end{remark}

\begin{theorem}[PDE/Dynkin representation for exit-time functionals]
\label{thm:pde-dynkin}
Let $X$ solve the SDE
\[
dX_t=b(X_t)\,dt+\sigma(X_t)\, \mathrm{d}W_t,\qquad X_0=x\in\mathbb{R}^d,
\]
with $b,\sigma$ locally Lipschitz with linear growth and $a=\sigma\sigma^\top$ uniformly elliptic.
Let $\mathcal{D}\subset\mathbb{R}^d$ be an open set and $\tau:=\tau_{\mathcal{D}}=\inf\{t\ge0:\,X_t\notin \mathcal{D}\}$.
Let $h_1:\mathcal{D}\to\mathbb{R}$ and $h_2:\partial \mathcal{D}\to\mathbb{R}$ be continuous.

Assume there exists $\eta \in C^2(\mathcal{D})\cap C(\overline{\mathcal{D}})$ 
solving the Dirichlet--Poisson problem
\begin{talign} \label{eq:Dirichlet-Poisson}
\mathcal{L} \eta(x)+h_1(x)=0\quad\text{for }x\in \mathcal{D},
\qquad
\eta(x)=h_2(x)\quad\text{for }x\in\partial \mathcal{D},
\end{talign}
where $\mathcal{L} \eta=b\!\cdot\!\nabla \eta+\tfrac{1+\chi}{2}\mathrm{Tr}(a\nabla^2 \eta)$, with $\chi \in (-1,+\infty)$ arbitrary.
Suppose moreover that
\begin{talign*}
\mathbb{E}\!\left[\int_0^{\tau}\!\!\|\sigma(X_t)^\top\nabla \eta(X_t)\|^2\,dt\right]<\infty
\quad\text{(e.g.\ holds if $\mathcal{D}$ is bounded and $\nabla u$ is bounded on $\overline{\mathcal{D}}$).}
\end{talign*}
Define
\begin{talign*}
F \;:=\; \int_0^{\tau} h_1(X_t)\,dt \;+\; h_2(X_\tau).
\end{talign*}
Then
\begin{talign*}
F \;=\; \eta(x) \;+\; \int_0^{\tau} \langle \sigma(X_t)^{\!\top}\nabla \eta(X_t), \mathrm{d}W_t \rangle 
- \frac{\chi}{2}
\int_0^{\tau} \mathrm{Tr}\big( \sigma \sigma^{\top} \nabla^2_x \eta(X^{v,\kappa}_t) \big) \, \mathrm{d}t,
\end{talign*}
and, in particular, $\;\mathbb{E}[F]=\eta(x) - \frac{\chi}{2}
\mathbb{E}\big[ \int_0^{\tau} \mathrm{Tr}\big( \sigma \sigma^{\top} \nabla^2_x \eta(X^{v,\kappa}_t) \big) \, \mathrm{d}t \big]$.
\end{theorem}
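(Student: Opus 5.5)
The plan is to mirror the proof of \Cref{thm:parabolic-dynkin}, replacing the fixed horizon $T$ by the exit time $\tau$ and handling the unboundedness of $\tau$ by localization. First I will apply It\^o's formula to $\eta(X_t)$ on $[0,\tau\wedge r]$ for a localizing sequence of stopping times. Concretely, let $\tau_n := \inf\{t\ge 0 : \mathrm{dist}(X_t,\partial\mathcal{D}) \le 1/n \text{ or } |X_t|\ge n\}$, so that $X$ stays in a compact subset of $\mathcal{D}$ where $\eta\in C^2$ and It\^o applies. This gives
\begin{equation*}
\eta(X_{\tau_n\wedge r}) - \eta(x) = \int_0^{\tau_n\wedge r}\big(b\cdot\nabla\eta + \tfrac12\mathrm{Tr}(a\nabla^2\eta)\big)(X_t)\,\mathrm{d}t + \int_0^{\tau_n\wedge r}\langle\sigma^\top\nabla\eta(X_t),\mathrm{d}W_t\rangle .
\end{equation*}
Next I will rewrite the drift as $\mathcal{L}\eta - \tfrac{\chi}{2}\mathrm{Tr}(a\nabla^2\eta)$, exactly as in the parabolic case, and then use \eqref{eq:Dirichlet-Poisson} to replace $\mathcal{L}\eta$ by $-h_1$. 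This yields the identity
\begin{equation*}
\int_0^{\tau_n\wedge r} h_1(X_t)\,\mathrm{d}t + \eta(X_{\tau_n\wedge r}) = \eta(x) + \int_0^{\tau_n\wedge r}\langle\sigma^\top\nabla\eta,\mathrm{d}W_t\rangle - \tfrac{\chi}{2}\int_0^{\tau_n\wedge r}\mathrm{Tr}(a\nabla^2\eta)(X_t)\,\mathrm{d}t,
\end{equation*}
which is pathwise.

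Then I will pass to the limit $n\to\infty$, $r\to\infty$. Uniform ellipticity gives $\tau<\infty$ almost surely in the intended settings (bounded $\mathcal{D}$), which I will assume or cite, and then $\tau_n\wedge r\to\tau$. Continuity of $\eta$ on $\overline{\mathcal{D}}$ and of paths gives $\eta(X_{\tau_n\wedge r})\to\eta(X_\tau)=h_2(X_\tau)$. The Lebesgue integrals converge pathwise, since $\int_0^\tau$ of the integrands is finite along the path. This needs a mild remark: I will interpret the $\mathrm{Tr}$ term as the limit, which exists because every other term converges. The stochastic integral converges in $L^2$, and hence almost surely along a subsequence, to $\int_0^\tau\langle\sigma^\top\nabla\eta,\mathrm{d}W_t\rangle$. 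This follows from the assumed bound $\mathbb{E}\int_0^\tau\|\sigma^\top\nabla\eta\|^2<\infty$ via It\^o's isometry on the stopped martingale, giving the pathwise representation of $F$.

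For the expectation, the stochastic integral is an $L^2$-bounded martingale stopped at $\tau$, so it has mean zero; taking expectations gives the formula for $\mathbb{E}[F]$. The main obstacle is the limit step, especially the boundary behavior: $\nabla^2\eta$ need not extend to $\partial\mathcal{D}$, so the $\chi$-term's integrability up to $\tau$ is not automatic. I would handle this either by noting the $\chi$-term equals $\tfrac{2\chi}{1+\chi}$ times $\int(-h_1 - b\cdot\nabla\eta)$, which is controlled by continuity and the gradient bound, or by adding a mild integrability hypothesis.
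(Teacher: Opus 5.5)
Your proposal is correct and follows essentially the same route as the paper: It\^o's formula on the stopped process, substitution of $\mathcal{L}\eta=-h_1$ to produce the $-\tfrac{\chi}{2}\mathrm{Tr}(a\nabla^2\eta)$ correction, passage to the limit using the $L^2$ bound on $\int\sigma^\top\nabla\eta\,\mathrm{d}W$, and then taking expectations with the mean-zero stochastic integral. Two of your steps are more careful than the paper's localization by $\tau\wedge n$ alone. First, you localize away from $\partial\mathcal{D}$. The paper instead applies It\^o's formula up to the boundary, where $\eta$ need not be $C^2$, and invokes monotone convergence for a possibly signed $h_1$. Second, you make the $\chi$-term integrable up to $\tau$ by using the PDE together with uniform ellipticity, which turns the $\sigma^\top\nabla\eta$ bound into an $L^2$ bound on $\nabla\eta$.
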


\begin{proof}
Fix $n\in\mathbb{N}$ and set $\tau_n:=\tau\wedge n$. Itô's formula applied to $\eta(X_{t\wedge\tau_n})$ gives
\begin{talign*}
\begin{split}
\eta(X_{t\wedge\tau_n})
&=
\eta(x)
+
\int_0^{t\wedge\tau_n}\!\!
\big( \langle b(X_s), \nabla \eta(X_s) \rangle +\tfrac{1}{2}\mathrm{Tr}(a\nabla^2 \eta(X_s)) \big)
\, \mathrm{d}s
\\ &\quad +
\int_0^{t\wedge\tau_n}\!\langle \sigma(X_s)^\top\nabla \eta(X_s), \mathrm{d}W_s \rangle \\ 
&=
\eta(x)
+
\int_0^{t\wedge\tau_n}\!\! \big( \mathcal{L} \eta(X_s) - \tfrac{\chi}{2} \mathrm{Tr}(a\nabla^2 \eta(X_s)) \big)\, \mathrm{d}s
\\ &\quad +
\int_0^{t\wedge\tau_n}\!\langle \sigma(X_s)^\top\nabla \eta(X_s), \mathrm{d}W_s \rangle.
\end{split}
\end{talign*}
Using $\mathcal{L} \eta=-h_1$ on $\mathcal{D}$,
\begin{talign*}
\eta(X_{t\wedge\tau_n})
\! = \!
\eta(x)
\! - \!
\int_0^{t\wedge\tau_n}\!\!\big( h_1(X_s) \! + \! \tfrac{\chi}{2} \mathrm{Tr}(a\nabla^2 \eta(X_s)) \big)\, \mathrm{d}s
\! + \!
\int_0^{t\wedge\tau_n}\!\langle \sigma(X_s)^\top\nabla \eta(X_s), \mathrm{d}W_s \rangle.
\end{talign*}
Let $t\to\infty$. Then $t\wedge\tau_n\uparrow \tau_n$, $X_{t\wedge\tau_n}\to X_{\tau_n}$, and
\begin{talign*}
\eta(X_{\tau_n})
=
\eta(x)
-
\int_0^{\tau_n}\!\big( h_1(X_s) + \tfrac{\chi}{2} \mathrm{Tr}(a\nabla^2 \eta(X_s))\big)\, \mathrm{d}s
+
\int_0^{\tau_n}\!\langle \sigma(X_s)^\top\nabla \eta(X_s), \mathrm{d}W_s \rangle.
\end{talign*}
Since $X_{\tau}\in\partial D$ and $\eta|_{\partial \mathcal{D}}=h_2$, we have $\eta(X_{\tau})=h_2(X_\tau)$ and $\eta(X_{\tau_n})=h_2(X_{\tau_n})$ on $\{\tau\le n\}$, while $\eta(X_{\tau_n})=\eta(X_n)$ on $\{\tau>n\}$.

By the square-integrability assumption, $\big(\int_0^{t\wedge\tau} \langle \sigma^\top\nabla u, \mathrm{d}W \rangle \big)_{t\ge0}$ is a square-integrable martingale, hence optional stopping justifies taking expectations in (2) with $t=\infty$:
\begin{talign*}
\mathbb{E}\!\left[\eta(X_{\tau_n})+\int_0^{\tau_n}\!\big( h_1(X_s) + \tfrac{\chi}{2} \mathrm{Tr}(a\nabla^2 \eta(X_s))\big)\,ds\right] = \eta(x).
\end{talign*}
Let $n\to\infty$. The monotone convergence theorem yields
\(
\int_0^{\tau_n} h_1(X_s)\,ds \uparrow \int_0^{\tau} h_1(X_s)\,ds
\),
and $\eta(X_{\tau_n})\to \eta(X_{\tau})=h_2(X_\tau)$ a.s.; dominated convergence (using continuity of $\eta,h_2$ and the integrability hypothesis) gives
\begin{talign}
\mathbb{E}[F] - \frac{\chi}{2}
\mathbb{E}\big[ \int_0^{\tau} \mathrm{Tr}\big( \sigma \sigma^{\top} \nabla^2_x \eta(X^{v,\kappa}_t) \big) \, \mathrm{d}t \big] =\eta(x).
\end{talign}
Returning to (2), take $n\to\infty$ in $L^2$ (by the Burkholder--Davis--Gundy inequality and the square-integrability assumption) to obtain
\begin{talign}
h_2(X_\tau)+\int_0^{\tau}\! h_1(X_s)\,ds
=
\eta(x)+\int_0^{\tau}\! \langle \sigma(X_s)^\top\nabla \eta(X_s), \mathrm{d}W_s \rangle -
\frac{\chi}{2}\int_0^{\tau_n}\! \mathrm{Tr}(a\nabla^2 \eta(X_s))\, \mathrm{d}s,
\end{talign}
which is the desired identity.
\end{proof}

\begin{remark}[Conditions for existence of a solution to the Dirichlet-Poisson problem \eqref{eq:Dirichlet-Poisson}] \label{rem:regularity_DP}
As in the parabolic case, several alternative conditions are available for the existence of a solution to the Dirichlet-Poisson problem \eqref{eq:Dirichlet-Poisson}. Applying the Feynman-Kac formula (\Cref{thm:feynman_kac}), we have that the following conditions are sufficient:
\begin{itemize}[left=3pt]
    \item $\mathcal{D}\subset\mathbb{R}^d$ be a domain with $C^{2+\alpha}$ boundary $\partial\mathcal{D}$ for some $0<\alpha<1$,
    \item $b,\sigma\in C^{\alpha}(\overline{\mathcal D})$, and $\sigma$ is uniformly elliptic.
    \item $h_1\in C^{\alpha}(\overline{\mathcal D})$, and $h_2\in C^{2+\alpha}(\partial\mathcal D)$.
    \item Either $\mathcal{D}$ is bounded, or $\tau$ is finite almost surely, and $h_1$, $h_2$
    at most polynomial growth. 
\end{itemize}
In particular, these conditions imply that $\eta \in C^{2+\alpha}(\mathcal D)\cap C^0(\overline{\mathcal D})$ defined as 
\begin{talign*}
\eta(x) = \mathbb{E}\!\left[\int_0^{\tau}\!h_1(X_s)\,ds + h_2(X_{\tau})\, | \, X_0 = x \right]
\end{talign*}
is the unique classical solution of the Dirichlet-Poisson problem \eqref{eq:Dirichlet-Poisson}.
\end{remark}

\subsubsection{Full support of SDE solutions with non-degenerate noise}

Let $C_x([0,T];\mathbb{R}^d)$ denote the space of continuous paths $w:[0,T]\to\mathbb{R}^d$ with $w(0)=x$,
equipped with the uniform norm $\|w\|_\infty=\sup_{t\in[0,T]}|w(t)|$.
Let
\begin{talign*}
H := \Big\{ h\in C_0([0,T];\mathbb{R}^d)\ :\ h \text{ abs.\,cont. and } \dot h\in L^2([0,T];\mathbb{R}^d)\Big\}
\end{talign*}
be the Cameron--Martin space.

\begin{theorem}[Full support of the law of the SDE solution]
\label{thm:full_supp}
Fix $T>0$ and $x\in\mathbb{R}^d$.
Consider the SDE
\begin{talign*}
\mathrm dX_t \;=\; b(X_t,t)\,\mathrm dt \;+\; \sigma(X_t,t)\,\mathrm{d}W_t,\qquad X_0=x.
\end{talign*}
where $B$ is a standard $d$-dimensional Brownian motion.
Assume:
\begin{itemize}
\item[\em (i)] $b(\cdot,t)$ and $\sigma(\cdot,t)$ are locally Lipschitz with linear growth, uniformly in $t\in[0,T]$ (well posedness);
\item[\em (ii)] $\sigma\in C(\mathbb{R}^d \times [0,T];\mathbb{R}^{d\times d})$ is \emph{uniformly elliptic}: there exists $c>0$ such that
\begin{talign*}
\xi^\top a(x,t)\,\xi \ge c\,|\xi|^2\qquad \forall\,\xi,x\in\mathbb{R}^d,\ \forall t\in[0,T],
\end{talign*}
where $a(x,t):=\sigma(x,t)\sigma(x,t)^\top$.
\end{itemize}
Let $\mu$ be the law of $X$ on $C_x([0,T];\mathbb{R}^d)$. Then
\begin{talign*}
\operatorname{supp}(\mu) \;=\; C_x([0,T];\mathbb{R}^d).
\end{talign*}
Equivalently, for every $f\in C_x([0,T];\mathbb{R}^d)$ and $\varepsilon>0$,
\begin{talign*}
\mathbb P \big(\sup_{t\le T}|X_t - f(t)|<\varepsilon\big) \;>\; 0.
\end{talign*}
\end{theorem}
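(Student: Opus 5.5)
This is the Stroock--Varadhan support theorem in its non-degenerate form. The plan is to go through the skeleton (controlled ODE) characterization and then upgrade it to full support using uniform ellipticity. The support theorem \citep{stroock1972onthesupport} states that, under (i) and smoothness (first obtained under $C^2$ regularity with bounded derivatives, then extended to locally Lipschitz coefficients by localization), $\operatorname{supp}(\mu)$ is the closure in $\|\cdot\|_\infty$ of the set of skeletons $\{x^h : h\in H\}$. Here $x^h$ solves the controlled ODE
\[
\dot x^h(t) = \tilde b(x^h(t),t) + \sigma(x^h(t),t)\,\dot h(t), \qquad x^h(0)=x,
\]
where $\tilde b$ is the Stratonovich-corrected drift. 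If we prefer to avoid the Stratonovich correction (which needs differentiability of $\sigma$), we can instead give a direct Girsanov argument, described below; this is the route I would actually write out.

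\textbf{Step 1 (approximation by smooth paths).} Since $C^1$ paths starting at $x$ are dense in $C_x([0,T];\mathbb{R}^d)$, it suffices to treat $f\in C^1$. For such $f$, define the control
\[
\theta_t := \sigma(X_t,t)^{-1}\big(\dot f(t) - b(X_t,t)\big).
\]
This is well defined and locally bounded by uniform ellipticity (which gives $|\sigma^{-1}|\le c^{-1/2}$) together with continuity of $b$. Under the controlled dynamics $\mathrm{d}Y = (b+\sigma\theta)\,\mathrm{d}t + \sigma\,\mathrm{d}W$, the error $Z=Y-f$ satisfies the driftless equation $\mathrm{d}Z_t = \sigma(Y_t,t)\,\mathrm{d}W_t$ with $Z_0=0$. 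This equation has bounded coefficients as long as $Y$ stays in a compact neighbourhood of the graph of $f$. A Bernstein/BDG-type exponential martingale inequality then gives $\mathbb{P}(\sup_{t\le T}|Z_t|<\varepsilon)>0$, in fact close to $1$ after a small-time subdivision. The simplest route is to replace $\theta$ by $\theta\cdot\mathbf{1}_{\{t<\tau_\varepsilon\}}$, where $\tau_\varepsilon$ is the exit time of $Y$ from the $\varepsilon$-tube around $f$. This control is bounded, so Novikov's condition holds trivially.

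\textbf{Step 2 (positivity transfers via Girsanov).} Let $E=\{\sup_{t\le T}|X_t-f(t)|<\varepsilon\}$. By \Cref{thm:girsanov} with this bounded $\theta$, the laws of $X$ and $Y$ on $C_x([0,T];\mathbb{R}^d)$ are mutually absolutely continuous. It therefore suffices to show $\mathbb{P}(Y\in E)>0$. On the event that $Y$ stays in the tube, $Y-f$ is the martingale $\int_0^{\cdot}\sigma(Y_s,s)\,\mathrm{d}W_s$, whose integrand is bounded by $\Lambda:=\sup_{\text{tube}}\|\sigma\|$.

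\textbf{Step 3 (main obstacle).} The key step is showing that this local martingale stays in a small ball with positive probability. The quadratic variation is at most $\Lambda^2 t$, so the Dambis--Dubins--Schwarz time change represents each coordinate as a Brownian motion run up to time at most $\Lambda^2 T$. A $d$-dimensional Brownian motion has positive probability of remaining in $B(0,\varepsilon)$ over a finite time interval, but the coordinates are correlated, so a direct coordinate-wise time change is delicate.

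The cleaner route is to subdivide $[0,T]$ into $N$ intervals of length $\delta$. On each interval, Doob's $L^2$ inequality gives
\[
\mathbb{P}\Big(\sup_{\text{interval}}|\Delta Z|\ge \tfrac{\varepsilon}{2}\Big) \le \frac{4d\Lambda^2\delta}{\varepsilon^2} < 1 .
\]
Re-centering at the start of each interval, i.e. restarting the control from the current position and using the Markov property, the probability that every increment stays within the tolerance is bounded below by a product of $N$ positive conditional probabilities. Hence $\mathbb{P}(Y\in E)>0$.

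Combining Steps 1--3 with density of $C^1$ paths gives $\mathbb{P}(\sup_{t\le T}|X_t-f(t)|<\varepsilon)>0$ for every $f\in C_x([0,T];\mathbb{R}^d)$ and every $\varepsilon>0$, which is exactly $\operatorname{supp}(\mu)=C_x([0,T];\mathbb{R}^d)$.
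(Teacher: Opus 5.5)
Your route differs from the paper's. The paper invokes the Stroock--Varadhan support theorem as a black box and then checks that every polygonal path $p$ is a skeleton, via $\dot h=\sigma(p,t)^{-1}(p'-b(p,t))$. You instead try to prove full support directly, with a Girsanov change of measure plus a tube estimate. That would be a real gain. It needs only continuity and uniform ellipticity of $\sigma$, and it sidesteps the smoothness hypotheses and the Stratonovich drift correction of the classical support theorem; the paper's skeleton silently drops that correction, which is harmless only because $\sigma$ is invertible. Your Steps 1--2 are sound once two fixes are made. First, $\theta$ must be a feedback of the controlled process, $\theta_t=\sigma(Y_t,t)^{-1}\bigl(\dot f(t)-b(Y_t,t)\bigr)$, not of $X_t$. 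Second, it must be stopped at the exit time $\tau$ of the $\varepsilon$-tube, so that it is bounded by $\|\sigma^{-1}\|\le c^{-1/2}$.

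The gap is in Step 3. Bounding each increment by $\sup_{[t_k,t_{k+1}]}|Z_t-Z_{t_k}|<\varepsilon/2$ does not bound $Z$ itself. With your control the error $Z=Y-f=\int_0^{\cdot}\sigma(Y_s,s)\,\mathrm{d}W_s$ is never corrected, so on the intersection event you only get $|Z_t|<N\varepsilon/2$. Your Doob estimate forces $N\ge 4d\Lambda^2T/\varepsilon^2$, which is large exactly when $\varepsilon$ is small. Shrinking the per-interval tolerances cannot rescue this. Doob needs $\varepsilon_k^2\gtrsim d\Lambda^2\delta_k$, hence $\sum_k\varepsilon_k^2\gtrsim d\Lambda^2T$, which is incompatible with $\sum_k\varepsilon_k<\varepsilon$ once $\varepsilon^2\lesssim d\Lambda^2 T$. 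Re-centering and the Markov property do not remove the accumulation unless the drift actively pulls the process back. Relatedly, the Step 1 claim that the probability is close to $1$ is false; it is typically exponentially small in $T/\varepsilon^2$, and only positivity is available or needed.

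The fix is to use your freedom in choosing the bounded control to reset the error at every grid time. On $[t_k,t_{k+1}]$, give $Y$ the drift $\delta^{-1}\bigl(f(t_{k+1})-Y_{t_k}\bigr)$, i.e.\ $\theta_t=\sigma(Y_t,t)^{-1}\bigl(\delta^{-1}(f(t_{k+1})-Y_{t_k})-b(Y_t,t)\bigr)$, stopped at $\tau$; this is still bounded. Let $M=\int_0^{\cdot\wedge\tau}\sigma(Y_s,s)\,\mathrm{d}W_s$. Then $Y_{t_{k+1}}-f(t_{k+1})=M_{t_{k+1}}-M_{t_k}$, and for $t\in[t_k,t_{k+1}]$,
\[
|Y_t-f(t)|\le |Y_{t_k}-f(t_k)|+\omega_f(\delta)+|M_t-M_{t_k}|,
\]
where $\omega_f$ is the modulus of continuity of $f$ and the first term is the previous increment (zero for $k=0$). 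Choose $\delta$ with $\omega_f(\delta)<\varepsilon/3$ and $9d\Lambda^2\delta/\varepsilon^2<1$. Conditional Doob on each interval, with tolerance $\varepsilon/3$, and the tower property then give $\mathbb{P}(Y\in E)\ge\bigl(1-9d\Lambda^2\delta/\varepsilon^2\bigr)^{N}>0$. By induction the trajectory never reaches the tube boundary on this event, so the stopping is consistent. This version also works for every continuous $f$, which makes your $C^1$ density step unnecessary.
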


\begin{proof}
For $h\in H$, consider the \emph{skeleton ODE}
\begin{talign*}
\phi^h(t) \;=\; x \;+\; \int_0^t b(\phi^h(s),s)\,\mathrm ds \;+\; \int_0^t \sigma(\phi^h(s),s)\,\dot h(s)\,\mathrm ds,\qquad h\in H,
\end{talign*}
By the Stroock--Varadhan support theorem (\cite{stroock1972onthesupport} for the original paper, \cite{stroock1997multidimensional}, \cite{ikeda2014stochastic} for a textbook coverage), 
\begin{talign*}
\operatorname{supp}(\mu) \;=\; \overline{\{\phi^h:\ h\in H\}}^{\|\cdot\|_\infty}.
\tag{$\ast$}
\end{talign*}

We show the right-hand side equals $C_x([0,T];\mathbb{R}^d)$. 
Let $p\in C_x([0,T];\mathbb{R}^d)$ be a polygonal path. Then $p$ is absolutely continuous with $p'\in L^2([0,T])$. 
Define
\begin{talign*}
\dot h(t) \;:=\; \sigma\big(p(t),t\big)^{-1}\Big(p'(t)-b(p(t),t)\Big),
\qquad h(t)=\int_0^t \dot h(s)\,ds.
\end{talign*}
Uniform nondegeneracy of $\sigma$ implies $\sigma(p(t),t)^{-1}$ exists and is uniformly bounded,
hence $\dot h\in L^2$ and $h\in H$.
By construction, $p$ satisfies the skeleton ODE, so by uniqueness of solutions $\phi^h=p$.
Thus every polygonal path lies in $\{\phi^h: h\in H\}$, and since polygonal paths are uniformly dense in $C_x([0,T];\mathbb{R}^d)$, taking the closure in \((\ast)\) yields the claim.
\end{proof}

\begin{remark}
If the diffusion is degenerate, full support can fail. For example,
$\,\mathrm dX_t=(\mathrm{d}W_t^{(1)},\,0)$ in $\mathbb{R}^2$ has support contained in $\{(w_1,w_2): w_2\equiv 0\}$.
Likewise, if $\sigma(\cdot,t) \equiv 0$ on a subinterval, the process is deterministic there and cannot approximate paths that wiggle on that subinterval.
\end{remark}

Now, let $C_{x,y}([0,T];\mathbb{R}^d)$ be the space of continuous paths $w:[0,T]\to\mathbb{R}^d$ with $w(0)=x$ and $w(T)=y$.

\begin{theorem}[Full support of the diffusion bridge]
\label{thm:full_supp_bridge}
Under the assumptions of Theorem~\ref{thm:full_supp} (in particular, uniform ellipticity and well posedness on $[0,T]$), fix $x,y\in\mathbb{R}^d$ and let
$\mathbb{P}^{x,y}$ be a regular conditional law of the solution $X$ given $X_0=x$ and $X_T=y$, viewed as a probability measure on $C_{x,y}([0,T];\mathbb{R}^d)$.
Then
\begin{talign*}
\operatorname{supp}(\mathbb{P}^{x,y}) \;=\; C_{x,y}([0,T];\mathbb{R}^d).
\end{talign*}
Equivalently, for every $f\in C_{x,y}([0,T];\mathbb{R}^d)$ and every $\varepsilon>0$,
\begin{talign*}
\mathbb{P}^{x,y}\big( \{w:\ \sup_{t\le T}|w(t)-f(t)|<\varepsilon\}\big) \;>\; 0.
\end{talign*}
\end{theorem}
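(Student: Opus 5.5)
The plan is to combine Theorem~\ref{thm:full_supp} with the Markov structure of the bridge, written as a Doob $h$-transform, and to handle the endpoint by small-time heat-kernel estimates. Under uniform ellipticity and the regularity in Theorem~\ref{thm:full_supp}, the transition density $p(z,s;w,t)$ exists, is continuous and strictly positive for $s<t$, and satisfies two-sided Aronson-type Gaussian bounds. I take this as standard input: $c_1 (t-s)^{-d/2} e^{-C_1|z-w|^2/(t-s)} \le p \le C_2 (t-s)^{-d/2} e^{-c_2|z-w|^2/(t-s)}$ for $t-s\le T$.

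\emph{Step 1 (splitting).} Fix $f\in C_{x,y}([0,T];\mathbb{R}^d)$ and $\varepsilon>0$. By uniform continuity of $f$, choose $s<T$ with $\sup_{t\in[s,T]}|f(t)-y|<\varepsilon/4$. Set $\rho=\varepsilon/4$ and define two events:
\begin{itemize}
\item $E_1=\{\sup_{t\le s}|w(t)-f(t)|<\varepsilon/2,\ |w(s)-y|<\delta\}$, with $\delta\le\rho$ to be fixed later;
\item $E_2=\{\sup_{t\in[s,T]}|w(t)-y|<2\rho\}$.
\end{itemize}
On $E_1\cap E_2$ we have $\sup_{t\le T}|w-f|<\varepsilon$, so it suffices to show $\mathbb{P}^{x,y}(E_1\cap E_2)>0$.

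\emph{Step 2 (first piece).} For $E\in\mathcal{F}_s$ the bridge satisfies
\[
\mathbb{P}^{x,y}(E)=\mathbb{E}_x\big[\mathbf{1}_E\, p(X_s,s;y,T)\big]/p(x,0;y,T).
\]
The weight $p(\cdot,s;y,T)$ is strictly positive, so the bridge restricted to $[0,s]$ is equivalent to the unconditioned law. The event $E_1$ is an open neighbourhood of a path in $C_x([0,s];\mathbb{R}^d)$; to make it nonempty, modify $f$ near $s$ so that it ends within $\delta$ of $y$, which is possible since $|f(s)-y|<\rho$. Theorem~\ref{thm:full_supp}, applied on $[0,s]$, then gives $\mathbb{P}_x(E_1)>0$, and hence $\mathbb{P}^{x,y}(E_1)>0$.

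\emph{Step 3 (endpoint piece).} By the Markov property of the bridge,
\[
\mathbb{P}^{x,y}(E_1\cap E_2)=\mathbb{E}^{x,y}\big[\mathbf{1}_{E_1}\,\mathbb{P}^{X_s,y}_{[s,T]}(E_2)\big],
\]
where $\mathbb{P}^{z,y}_{[s,T]}$ is the bridge from $z$ at time $s$ to $y$ at time $T$. It therefore suffices to show $\inf_{|z-y|<\delta}\mathbb{P}^{z,y}_{[s,T]}(E_2^c)<1$, after possibly moving $s$ closer to $T$ (Step 2 still applies for each such $s$). Let $h=T-s$ and let $\sigma_\rho$ be the exit time of the unconditioned process from $B(y,2\rho)$. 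By the strong Markov property,
\[
\mathbb{P}^{z,y}(E_2^c)=\frac{\mathbb{E}_z\big[\mathbf{1}_{\sigma_\rho<T}\,p(X_{\sigma_\rho},\sigma_\rho;y,T)\big]}{p(z,s;y,T)}.
\]
\begin{itemize}
\item Numerator: since $|X_{\sigma_\rho}-y|=2\rho$, it is at most $\sup_{r\le h} C_2 r^{-d/2}e^{-4c_2\rho^2/r}\le C\rho^{-d}e^{-2c_2\rho^2/h}$.
\item Denominator: it is at least $c_1 h^{-d/2}e^{-C_1\delta^2/h}$.
\end{itemize}
Taking $\delta^2\le c_2\rho^2/C_1$, the ratio is at most $C'(h/\rho^2)^{d/2}e^{-c_2\rho^2/h}$, which tends to $0$ as $h\to0$. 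So for $h$ small enough it is below $1/2$ uniformly over $|z-y|<\delta$. Combining with Step 2 gives $\mathbb{P}^{x,y}(E_1\cap E_2)\ge \tfrac12\,\mathbb{P}^{x,y}(E_1)>0$.

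The main obstacle is Step 3, the uniform control of short bridges near the pinned endpoint. It is exactly here that the Gaussian lower bound on $p$ is indispensable: the support theorem alone says nothing about conditioning on the null event $\{X_T=y\}$. If two-sided Aronson bounds are felt to be too strong an assumption, the fallback is to reverse time with the drift $\tilde b$, which turns the endpoint piece into another application of Theorem~\ref{thm:full_supp}. Positivity of the $h$-transform weight still needs continuity and positivity of the density.
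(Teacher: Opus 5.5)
Your proof is correct, modulo the heat-kernel input you flag, and it takes a genuinely different route from the paper.

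\textbf{The paper's route.} The paper never splits $[0,T]$. It approximates $f$ by a polygonal $p\in C_{x,y}$ and uses Theorem~\ref{thm:full_supp} to get $\mathbb{P}(U_p)>0$ for the $\varepsilon/2$-tube $U_p$. It then conditions the unconditioned law on $\{X_T\in B(y,\delta)\}$ to find $z_\delta\in B(y,\delta)$ with $g(z_\delta):=\mathbb{P}(U_f\mid X_T=z_\delta)\ge c$, and lets $\delta\downarrow 0$ via semicontinuity of $g$. This is shorter and uses only continuity and positivity of the density, but all the difficulty is pushed into that limit.
\begin{itemize}
\item The bound $c=\mathbb{P}(U_p)/\mathbb{P}(X_T\in B(y,\delta))$ rests on $U_p\subset\{|X_T-y|<\delta\}$, which holds only for $\delta=\varepsilon/2$. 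Shrinking $\delta$ forces thinner tubes, whose probability is of order $e^{-C/\delta^2}$, far smaller than $\mathbb{P}(X_T\in B(y,\delta))\asymp\delta^d$. So the argument as written gives no lower bound uniform in $\delta$.
\item For the open set $U_f$, continuity of bridges in the endpoint would only yield $g(y)\le\liminf_{z\to y}g(z)$. That is the reverse of the inequality the paper uses.
\end{itemize}

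\textbf{What your route buys.} Your decomposition uses absolute continuity of the $h$-transform on $[0,s]$, so Theorem~\ref{thm:full_supp} applies there, plus the exit-time estimate for the short terminal bridge. This supplies exactly the control at the pinned endpoint that the paper's limit needs; your Step~3 is where that work is actually done. It also fixes the natural ($h$-transform) version of $\mathbb{P}^{x,y}$, which is what makes the statement meaningful at a fixed $y$.

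\textbf{Three small points.}
\begin{itemize}
\item The reduction in Step~3 should read $\sup_{|z-y|<\delta}\mathbb{P}^{z,y}_{[s,T]}(E_2^c)<1$ rather than $\inf$. Your computation does give the uniform bound $\tfrac12$, so nothing breaks.
\item The Gaussian bounds as you state them (all $z,w$, all $t-s\le T$) fail under mere linear growth. For Ornstein--Uhlenbeck the lower bound fails at $z=w$ with $|z|\to\infty$. You only use them for points of $\overline{B}(y,2\rho)$ and times at most $h$, so state them locally.
\item Uniformity in $z$ is more than you need. Since $\mathbb{P}^{x,y}(E_1)>0$ and $X_s\in B(y,\delta)$ on $E_1$, pointwise positivity of $\mathbb{P}^{z,y}_{[s,T]}(E_2)$ suffices. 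By the first-exit decomposition this equals $p^{B}(z,s;y,T)/p(z,s;y,T)$, with $p^{B}$ the density killed on leaving $B=B(y,2\rho)$. Strict positivity of this Dirichlet heat kernel on the connected ball is a purely local, qualitative input that can replace the two-sided bounds. It removes the constraint $\delta^2\le c_2\rho^2/C_1$ and the need to shrink $h$ further, and makes the time-reversal fallback unnecessary.
\end{itemize}
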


\begin{proof}
Write $\mathbb{P}$ for the (unconditioned) law of $X$ on $C_x([0,T];\mathbb{R}^d)$.
By Theorem~\ref{thm:full_supp}, $\operatorname{supp}(\mathbb{P})=C_x([0,T];\mathbb{R}^d)$.
Under uniform ellipticity, the transition density $p(t,x;s,z)$ exists, is continuous and strictly positive for $t<s$.

Fix $f\in C_{x,y}$ and $\varepsilon>0$.
Choose a polygonal path $p\in C_{x,y}$ with $\|p-f\|_\infty<\varepsilon/2$.
By Theorem~\ref{thm:full_supp},
\begin{talign*}
\mathbb{P}\big( U_p\big)\;>\;0
\qquad\text{where}\quad
U_p:=\Big\{w:\ \sup_{t\le T}|w(t)-p(t)|<\varepsilon/2\Big\}.
\end{talign*}
Note that $U_p\subset U_f:=\{w:\ \sup_{t\le T}|w(t)-f(t)|<\varepsilon\}$ and, since $p(T)=y$, also
$U_p\subset\{w:\ |w(T)-y|<\varepsilon/2\}$.

Set $\delta:=\varepsilon/2$ and consider the conditional probability
\begin{talign*}
\mathbb{P}\big(U_f\,\big|\, X_T\in B(y,\delta)\big)
=\frac{\mathbb{P}\big(U_f\cap\{X_T\in B(y,\delta)\}\big)}{\mathbb{P}\big(X_T\in B(y,\delta)\big)}
\;\ge\; \frac{\mathbb{P}(U_p)}{\mathbb{P}\big(X_T\in B(y,\delta)\big)}
\;>\;0,
\end{talign*}
because $\mathbb{P}\big(X_T\in B(y,\delta)\big)>0$ by positivity of the transition density.

Let $g(z):=\mathbb{P}\big(U_f\,\big|\, X_T=z\big)$ denote a version of the regular conditional probability.
By the Feller property of the diffusion (which follows from the standing assumptions) and continuity of the transition density, the map $z\mapsto g(z)$ is Borel and locally bounded, and for each small ball $B(y,\delta)$ the above conditional probability is a weighted average of $g$ over $B(y,\delta)$ with strictly positive weights proportional to $p(T,x,z)$.
Hence there exists $z_\delta\in B(y,\delta)$ with $g(z_\delta)\ge c>0$, where
$c:=\mathbb{P}(U_p)/\mathbb{P}\big(X_T\in B(y,\delta)\big)$.
Choosing $\delta_n\downarrow 0$ and a corresponding $z_{\delta_n}\to y$, and using the (lower semi)continuity in $z$ of $g$, we obtain
$g(y)\ge\limsup_{n\to\infty} g(z_{\delta_n}) \ge c>0$.
That is,
\begin{talign*}
\mathbb{P}^{x,y}(U_f) \;=\; g(y) \;>\; 0,
\end{talign*}
which proves that every open ball $U_f$ around $f\in C_{x,y}$ has positive bridge probability. Therefore $\operatorname{supp}(\mathbb{P}^{x,y})=C_{x,y}$.
\end{proof}

\subsection{Proof of Theorem \ref{thm:V_1_V_2}: Optimality of fixed points} \label{subsec:optimality}

We prove the following statement, which is more general because it handles state costs:
\begin{theorem} \label{thm:V_1_V_2_appendix}
    Suppose that $b\colon\mathbb{R}^d\to\mathbb{R}^d$, $\sigma\colon\mathbb{R}^d\to\mathbb{R}^{d\times d}$ satisfy the usual conditions guaranteeing a unique strong solution $X$ of the SDE (measurability, local Lipschitzness, polynomial growth), that $\mathcal{S} = A \cup B \subset \mathbb{R}^d$ is a closed set, and that $g : \partial \mathcal{S} \to \mathbb{R}$, $f : \overline{\mathcal{S}^c} \to \mathbb{R}$ is bounded below and has at most polynomial growth. Let $\mathcal{U}$ denote the class of time-independent feedback controls $u : \mathbb{R}^d \to \mathbb{R}^d$. We consider the following fixed point problems: 
    \begin{enumerate}[left=-2pt,label=(\roman*)]
    \item \emph{Variational fixed-point problem:}
    \begin{talign} \label{eq:fixed_point_1_prime} \tag{FP1'}
    \begin{cases}
        \forall x \in \mathcal{S}^c, \, V_1(x) = \inf_{u \in \mathcal{U}} \big\{ \mathbb{E}[\int_0^{\tau(T)} \big( \frac{1}{2}\|u(X_t^{u})\|^2 + f(X_t^{u}) \big) \, \mathrm{d}t + 
        V_1(X^{u}_{\tau(T)})
        \, | \, X^{u}_0 = x] \big\}, \\
        \forall x \in \partial \mathcal{S}, \, V_1(x) = g(x),
    \end{cases}
    \end{talign}
    \item \emph{Path integral fixed-point problem:}
    \begin{talign} \label{eq:fixed_point_2_prime} \tag{FP2'}
    \begin{cases}
        \forall x \in \mathcal{S}^c, \, V_2(x) = - \log \mathbb{E}\big[ \exp \big( - \int_0^{\tau(T)} f(X_t) \, \mathrm{d}t - V_2(X_{\tau(T)}) \big)
        \, | \, X_0 = x \big], \\
        \forall x \in \partial \mathcal{S}, \, V_2(x) = g(x).
    \end{cases}
    \end{talign}
    \end{enumerate}
    Then,
    \begin{enumerate}[left=-2pt,label=(\roman*)]
        \item Assume, in addition, the regularity conditions of \Cref{thm:vf_regularity} (so that $V \in C^{2+\alpha}(\mathcal{S}^c) \cap C(\overline{\mathcal{S}^c})$ solves the stationary HJB equation \eqref{eq:stationary_HJB} and $u^{\star} = -\sigma^{\top}\nabla V$ is admissible), that $f \ge 0$ and $g \ge K$ for some constant $K > 0$, and that the equilibrium $\rho$ of the reference dynamics $\mathrm{d}X_t = b(X_t)\,\mathrm{d}t + \sigma(X_t)\,\mathrm{d}W_t$ has a finite $m$-th moment. Then every \emph{continuous} solution $V_1$ of \eqref{eq:fixed_point_1_prime} that is bounded below and has polynomial growth of degree at most $m$ equals the value function $V$ defined by \eqref{eq:value_function_general}, and the infimum in \eqref{eq:fixed_point_1_prime} is attained by the optimal control $u^{\star} = -\sigma^{\top}\nabla V$.
        \item If $V_2$ is a solution of \eqref{eq:fixed_point_2_prime} that is bounded below and has at most polynomial growth, then $V_2$ is equal to the value function $V$ defined by \eqref{eq:value_function_general}.
    \end{enumerate}
\end{theorem}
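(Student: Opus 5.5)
The plan follows the roadmap announced after \Cref{thm:V_1_V_2}. We first prove (ii) by a telescoping argument on the multiplicative functional. We then reduce (i) to (ii) by showing that any admissible solution of \eqref{eq:fixed_point_1_prime} also solves \eqref{eq:fixed_point_2_prime}.

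\emph{Part (ii).} Set $\varphi = e^{-V_2}$, extended by $e^{-g}$ on $\partial\mathcal{S}$, and write $\tau_n$ for the $n$-th capped restart time. That is, $\tau_0=0$ and $\tau_{n+1} = \tau_n + \tau(T)\circ\theta_{\tau_n}$, where $\theta$ is the shift, with $\tau_{n+1}=\tau_n$ once the process is in $\mathcal{S}$. Consider the process
\[
M_n = \exp\Big(-\int_0^{\tau_n\wedge\tau} f(X_t)\,\mathrm{d}t\Big)\,\varphi(X_{\tau_n\wedge\tau}).
\]
By the strong Markov property at $\tau_n$ and the fixed-point identity \eqref{eq:fixed_point_2_prime} applied at $X_{\tau_n}$, we get $\mathbb{E}[M_{n+1}\mid\mathcal{F}_{\tau_n}] = M_n$. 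The tower property then gives $\varphi(x) = \mathbb{E}_x[M_n]$ for all $n$. Since $\tau_n \ge nT$ on $\{\tau > \tau_n\}$ and $\tau<\infty$ a.s., we have $M_n \to \exp(-\int_0^\tau f\,\mathrm{d}t - g(X_\tau))$ a.s.

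Passing to the limit in expectation uses dominated convergence. Because $V_2$ is bounded below and $f$ is bounded below, $\varphi$ is bounded above and the exponential factor is bounded on bounded time intervals. We still need a uniform bound, and we plan to obtain it in either of two ways. One is to use $f\ge0$ when it is available. The other is to note that the event $\{\tau>\tau_n\}$ has probability tending to $0$, combined with the bound $\varphi\le e^{-\inf V_2}$, which requires only that $\int f$ be bounded below along paths. If $f$ can be negative, the second route needs polynomial-growth moment bounds. The limit yields $V_2 = -\log\mathbb{E}_x[\exp(-\int_0^\tau f - g)]$, which equals $V$ by \Cref{thm:value_functions}.

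\emph{Part (i).} We show that $V_1$ solves \eqref{eq:fixed_point_2_prime}. The key tool is the Girsanov/Donsker--Varadhan duality already used in the proof of \Cref{thm:value_functions} and \Cref{lem:variational_path}. That argument applies verbatim with the stopping time $\tau(T)$ and terminal cost $V_1(X_{\tau(T)})$ in place of $g(X_\tau)$. It yields
\[
\inf_u \mathbb{E}_x\Big[\int_0^{\tau(T)}\big(\tfrac12\|u\|^2+f\big)\,\mathrm{d}t + V_1(X^u_{\tau(T)})\Big] = -\log\mathbb{E}_x\big[e^{-\int_0^{\tau(T)} f - V_1(X_{\tau(T)})}\big].
\]
We rely on continuity, lower-boundedness and polynomial growth of $V_1$, as \Cref{thm:value_functions} requires, together with \Cref{rem:equivalence} to pass from adapted controls to the feedback class $\mathcal{U}$. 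Combining this with \eqref{eq:fixed_point_1_prime} shows that $V_1$ satisfies \eqref{eq:fixed_point_2_prime}, so $V_1=V$ by part (ii).

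For the attainment claim, now that $V_1=V\in C^{2+\alpha}$, we rerun the verification computation of \Cref{thm:vf_regularity} (Step 2) up to $\tau(T)$ instead of $\tau$. This gives
\[
J = V(x) + \mathbb{E}\int_0^{\tau(T)}\tfrac12\|\sigma^\top\nabla V+u\|^2\,\mathrm{d}t,
\]
so the infimum is attained by $u^\star = -\sigma^\top\nabla V$.

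\emph{Main obstacle.} We expect the hard part to be integrability in part (i), rather than the algebra. Two things need justification. First, the stochastic integral $\int_0^{\tau(T)}\nabla V^\top\sigma\,\mathrm{d}W$ must have zero mean under arbitrary admissible controls. Second, the duality must hold for an unbounded terminal cost $V_1$. The hypotheses $f\ge0$ and $g\ge K>0$ and the finite $m$-th moment of $\rho$ are there for exactly these points. Positivity gives $V\ge K$ and keeps $e^{-V_1}$ bounded by $e^{-K}$. The moment hypothesis, after localizing with $\tau(T)\wedge\tau_R$ and letting $R\to\infty$, controls $\mathbb{E}|X_{\tau(T)}|^m$ and hence $\mathbb{E}\,V_1(X_{\tau(T)})$ through the degree-$m$ growth of $V_1$. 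We would try first to localize and then apply Fatou's lemma on one side and dominated convergence on the other.
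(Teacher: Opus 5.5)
\textbf{There is a genuine gap in part (i).} Your reduction of \eqref{eq:fixed_point_1_prime} to \eqref{eq:fixed_point_2_prime} does not go through.

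The infimum in \eqref{eq:fixed_point_1_prime} ranges over the class $\mathcal{U}$ of \emph{time-independent} feedback controls. The Gibbs/Donsker--Varadhan identity of \Cref{lem:variational_path} instead computes the infimum over all adapted controls. For the capped problem, the horizon is $\tau(T)=\tau\wedge T$ and the terminal cost is $V_1$ at the cap. Its minimizer is $-\sigma^{\top}\nabla_x W(x,t)$, where $W(x,t)=-\log\mathbb{E}[\exp(-\int_t^{\tau\wedge T}f(X_s)\,\mathrm{d}s-V_1(X_{\tau\wedge T}))\mid X_t=x]$. This $W$ depends on $t$ unless you already know $V_1=V$. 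Only in that case is $e^{-\int_0^{t\wedge\tau}f}\,e^{-V(X_{t\wedge\tau})}$ a martingale, which forces $W\equiv V$.

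\Cref{rem:equivalence} does not rescue this. Its projection $\tilde u(t,x)=\mathbb{E}[u_t\mid X^u_t=x]$ is a time-dependent Markov control, not an element of $\mathcal{U}$. The time-independence invoked in \Cref{thm:vf_regularity} comes from the uncapped problem, where time is a symmetry, and capping at $T$ destroys it.

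Write $J_T(x;u)$ for the cost inside \eqref{eq:fixed_point_1_prime} and $\mathcal{T}_2$ for the right-hand side of \eqref{eq:fixed_point_2_prime}. All you can legitimately conclude is
\[
V_1=\inf_{u\in\mathcal{U}}J_T(\cdot;u)\ \ge\ \inf_{u\in\mathcal{A}}J_T(\cdot;u)=\mathcal{T}_2[V_1].
\]
So $V_1$ is only a \emph{supersolution} of \eqref{eq:fixed_point_2_prime}, and part (ii) cannot be applied. The real obstruction is this one-sidedness, not the integrability issues you list as the ``main obstacle''.

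\textbf{What the paper does instead.} It proves two separate bounds on $w:=V_1-V$.

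For $w\ge0$: $\mathcal{T}_2$ is monotone and, since $f\ge0$, preserves lower bounds. Hence the iterates $\mathcal{T}_2^{\,n}[V_1]$ decrease, and monotone convergence applied to $e^{-\mathcal{T}_2^{\,n}[V_1]}$ shows they converge to a fixed point of \eqref{eq:fixed_point_2_prime}. By (ii) that fixed point is $V$, so $V_1\ge V$.

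For the reverse bound:
\begin{itemize}
\item Plug the time-independent competitor $u^\star=-\sigma^\top\nabla V\in\mathcal{U}$ into \eqref{eq:fixed_point_1_prime} and complete the square with the HJB equation up to $\tau(T)$. This gives $w(x)\le\mathbb{E}[w(X^\star_{\tau(T)})\mid X^\star_0=x]$.
\item Iterating with the Markov property gives $w(x)\le\mathbb{E}_x[w(X^\star_{\tau(nT)})]$. Since $w=0$ on $\partial\mathcal{S}$, the right-hand side equals $\mathbb{E}_x[w(X^\star_{nT})\mathbf{1}_{\{\tau>nT\}}]$.
\item To send $n\to\infty$, the paper integrates against an invariant measure $\hat\rho$ of the optimally controlled diffusion ($\hat\rho\propto\rho e^{-2V}$ in the reversible case). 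The finite $m$-th moment and the degree-$m$ growth of $w$ give uniform integrability, so Vitali yields $\int w\,\mathrm{d}\hat\rho\le 0$.
\item Together with $w\ge0$, the full support of $\hat\rho$ and the continuity of $V_1$ force $w\equiv0$.
\end{itemize}
In your plan neither the continuity of $V_1$ nor the moment hypothesis on $\rho$ does real work, which is a symptom of this missing upper bound.

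Your attainment step and your part (ii) follow the paper. In (ii), the domination genuinely needs $f\ge0$, which is the route the paper takes. With only $f\ge -c$, the factor $\exp(-\int_0^{\tau\wedge nT}f)$ can reach $e^{c(\tau\wedge nT)}$. Your second route would therefore need exponential moments of $\tau$, not polynomial moment bounds.
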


The proof has two parts. Part 1 (Steps 1--2) establishes statement (ii) --- that every solution of \eqref{eq:fixed_point_2_prime} equals $V$. Part 2 (Steps 3--6) establishes statement (i) --- that every solution of \eqref{eq:fixed_point_1_prime} equals $V$ and that the infimum is attained by $u^{\star} = -\sigma^{\top}\nabla V$ --- using Part 1.

\paragraph{Part 1: Proof of statement (ii)}
\emph{Step 1: Rewriting the solution of \eqref{eq:fixed_point_2_prime}.} Let $(\mathcal{F}_t)_{t \ge 0}$ be the filtration for the Brownian motion. Now, if $V_2$ solves \eqref{eq:fixed_point_2_prime}, for all $x \in \mathcal{S}^c$ we obtain that
\begin{talign}
\begin{split} \label{eq:v_T_2T}
    &V_2(x) = - \log \mathbb{E}\big[ \exp \big( - \int_0^{\tau(T)} f(X_t) \, \mathrm{d}t - \mathrm{1}_{\mathcal{S}}(X_{\tau(T)}) g(X_{\tau(T)}) - 
    \mathrm{1}_{\mathcal{S}^c}(X_{\tau(T)}) V_2(X_{\tau(T)}) \big)
    \, | \, X_0 = x \big] \\
    &\overset{(i)}{=} - \log \mathbb{E}\big[ \exp \big( - \int_0^{\tau(T)} f(X_t) \, \mathrm{d}t - \mathrm{1}_{\mathcal{S}}(X_{\tau(T)}) g(X_{\tau(T)}) \\ &\qquad\qquad\qquad + \mathrm{1}_{\mathcal{S}^c}(X_{\tau(T)})
    \log \mathbb{E}\big[ \exp \big( - \int_0^{\tau(T)} f(\tilde{X}_t) \, \mathrm{d}t - 
    V_2(\tilde{X}_{\tau(T)}) \big)
    \, | \, \tilde{X}_0 = X_{T} \big] \big)
    \, | \, X_0 = x \big] \\
    &= - \log \mathbb{E}\big[ \exp \big( - \int_0^{\tau(T)} f(X_t) \, \mathrm{d}t - \mathrm{1}_{\mathcal{S}}(X_{\tau(T)}) g(X_{\tau(T)}) \big) \\ &\qquad\qquad \times \big( \mathrm{1}_{\mathcal{S}}(X_{\tau(T)}) + \mathrm{1}_{\mathcal{S}^c}(X_{\tau(T)}) \mathbb{E}\big[ \exp \big( - \int_0^{\tau(T)} f(\tilde{X}_t) \, \mathrm{d}t - 
    V_2(\tilde{X}_{\tau(T)}) \big)
    \, | \, \tilde{X}_0 = X_{T} \big] \big)
    \, | \, X_0 = x \big] \\
    &\overset{(ii)}{=} - \log \mathbb{E}\big[ \exp \big( - \int_0^{\tau(T)} f(X_t) \, \mathrm{d}t - \mathrm{1}_{\mathcal{S}}(X_{\tau(T)}) g(X_{\tau(T)}) \big) \\ &\qquad\qquad \times \big( \mathrm{1}_{\mathcal{S}}(X_{\tau(T)}) + \mathrm{1}_{\mathcal{S}^c}(X_{\tau(T)}) \mathbb{E}\big[ \exp \big( - \int_T^{\tau(2T)} f(X_t) \, \mathrm{d}t - 
    V_2(X_{\tau(2T)}) \big)
    \, | \, \mathcal{F}_{T}
    \big] \big)
    \, | \, X_0 = x \big] \\
    &= - \log \mathbb{E}\big[ \mathbb{E} \big[ \exp \big( - \int_0^{\tau(2T)} f(X_t) \, \mathrm{d}t - \mathrm{1}_{\mathcal{S}}(X_{\tau(2T)}) g(X_{\tau(2T)}) - 
    \mathrm{1}_{\mathcal{S}^c}(X_{\tau(2T)}) V_2(X_{\tau(2T)}) \big) \, | \, \mathcal{F}_{\tau(T)} \big]  
    \, | \, X_0 = x \big] \\
    &\overset{(iii)}{=} - \log \mathbb{E}\big[ \exp \big( - \int_0^{\tau(2T)} f(X_t) \, \mathrm{d}t - \mathrm{1}_{\mathcal{S}}(X_{\tau(2T)}) g(X_{\tau(2T)}) - 
    \mathrm{1}_{\mathcal{S}^c}(X_{\tau(2T)}) V_2(X_{\tau(2T)}) \big)
    \, | \, X_0 = x \big].
\end{split}
\end{talign}
In this chain of equalities, (i) holds by the fixed point equation \eqref{eq:fixed_point_1_prime}, (ii) holds because 
\begin{talign*}
\mathbb{E}\big[ \exp \big( - \int_0^{\tau(T)} f(\tilde{X}_t) \, \mathrm{d}t - V_2(\tilde{X}_{\tau(T)}) \big)
\, | \, \tilde{X}_0 = X_{T} \big] = \mathbb{E}\big[ \exp \big( - \int_T^{\tau(2T)} f(X_t) \, \mathrm{d}t - 
V_2(X_{\tau(2T)}) \big)
\, | \, \mathcal{F}_{T}
\big],
\end{talign*}
because time is a symmetry of the problem, and (iii) holds by the tower property of conditional expectations. 

If we apply the argument in equation \eqref{eq:v_T_2T} inductively, and defining
\begin{talign*}
\Phi(X;n) &:= \int_0^{\tau(nT)} f(X_t) \, \mathrm{d}t + \mathrm{1}_{\mathcal{S}}(X_{\tau(nT)}) g(X_{\tau(nT)}) + \mathrm{1}_{\mathcal{S}^c}(X_{\tau(nT)}) V_2(X_{\tau(nT)}),
\end{talign*}
we obtain that for any $n \geq 1$, 
\begin{talign} \label{eq:v_2_n}
    V_2(x) = - \log \mathbb{E}\big[ \exp \big( - \Phi(X;n) \big)
    \, | \, X_0 = x \big].
\end{talign}

\emph{Step 2: Taking the limit $n \to \infty$ and proving equality to the value function.}
Next, we prove that 
\begin{talign} \label{eq:limit_equality}
\lim_{n \to +\infty} \mathbb{E}\big[ \exp \big( - \Phi(X;n) \big) \, | \, X_0 = x \big] = \mathbb{E}\big[ \lim_{n \to +\infty} \exp \big( - \Phi(X;n) \big) \, | \, X_0 = x \big].
\end{talign}
To exchange limit and expectation, we use the dominated convergence theorem. 
By assumption, $g$ and $V_2$ are bounded below by a constant $K > 0$, and $f \geq 0$. Hence, for all $n \geq 1$,
\begin{talign*}
\begin{split}
0 &< \exp \bigl(-\Phi(X;n)\bigr)
     =\exp\Bigl(-\!\int_{0}^{\tau(nT)}\!f(X_t)\,dt
      -\mathbf 1_{\mathcal S}(X_{\tau(nT)})g(X_{\tau(nT)})
      -\mathbf 1_{\mathcal S^{c}}(X_{\tau(nT)})V_2(X_{\tau(nT)})\Bigr)
     \\ &\le e^{-K},
\end{split}
\end{talign*}
which means that there is a dominating function, and that equality \eqref{eq:limit_equality} holds. Observe that
\begin{talign*}
    \lim_{n \to +\infty} \Phi(X;n) = \int_0^{\tau} f(X_t) \, \mathrm{d}t + g(X_{\tau}),
\end{talign*}
Thus, by equation \eqref{eq:v_2_n} and by continuity of the logarithm and exponential function, we get that
\begin{talign*}
\begin{split}
    V_2(x) &= - \lim_{n\to \infty} \log \mathbb{E}\big[ \exp \big( - \Phi(X;n) \big)
    \, | \, X_0 = x \big] \\ &= - \log \big( \lim_{n\to \infty} \mathbb{E}\big[ \exp \big( - \Phi(X;n) \big)
    \, | \, X_0 = x \big] \big) \\ &= - \log \mathbb{E}\big[ \lim_{n\to \infty} \exp \big( - \Phi(X;n) \big)
    \, | \, X_0 = x \big] \\ &= - \log \mathbb{E}\big[ \exp \big( - \int_0^{\tau} f(X_t) \, \mathrm{d}t - g(X_{\tau}) \big)
    \, | \, X_0 = x \big] := V(x).
\end{split}
\end{talign*}
This concludes the proof of statement (ii).

\paragraph{Part 2: Proof of statement (i)}
We record two facts about the freely-running optimally-controlled diffusion
\begin{talign} \label{eq:controlled_sde_prelim}
\mathrm{d}X^{\star}_t = \big(b - \sigma\sigma^{\top}\nabla V\big)(X^{\star}_t)\,\mathrm{d}t + \sigma(X^{\star}_t)\,\mathrm{d}W_t
\end{talign}
induced by the optimal feedback $u^{\star} = -\sigma^{\top}\nabla V$ (equivalently, the Doob $h$-transform of the reference dynamics with $h := e^{-V}$, whose drift is $b + \sigma\sigma^{\top}\nabla\log h$): the form of its equilibrium measure, and a mean-vanishing lemma.

\begin{remark}[The controlled equilibrium $\hat\rho$] \label{rem:step1prime_rho}
When the reference dynamics are reversible with respect to $\rho$, the process \eqref{eq:controlled_sde_prelim} is again reversible and its equilibrium on $\mathbb{R}^d$ is the $h^2$-tilt
\begin{talign} \label{eq:hat_rho}
\hat\rho \propto \rho\, h^2 = \rho\, e^{-2V}.
\end{talign}
Indeed, $h = e^{-V}$ solves the killed equation $\mathcal{L}h = -f h$ (see the proof of \Cref{thm:vf_regularity}), and for $\rho$-reversible $\mathcal{L}$ a direct computation then gives $(\mathcal{L}^{\star})^{*}(\rho\, e^{-2V}) = 0$, where $\mathcal{L}^{\star}$ is the generator of \eqref{eq:controlled_sde_prelim}. Full support of $\hat\rho$ on $\mathcal{S}^c$ holds because $e^{-2V} > 0$ and $\rho > 0$ there. In the general non-reversible case $\hat\rho$ need not take the form \eqref{eq:hat_rho}; moreover, since $\nabla V$ may be discontinuous across $\partial\mathcal{S}$, some care is required as to whether the relevant object is the global invariant measure or the quasi-stationary distribution under absorption. The argument uses only the existence of a full-support invariant probability measure $\hat\rho$ with a finite $m$-th moment.
\end{remark}

The required moment condition is most naturally placed on the reference equilibrium. In the reversible case, using \eqref{eq:hat_rho} together with $e^{-V} \le e^{-K}$ (which holds since $f \ge 0$ and $g \ge K$ force $V \ge K$),
\begin{talign} \label{eq:moment_transfer}
\mathbb{E}_{\hat\rho}[\|X\|^{m}] = \frac{\mathbb{E}_{\rho}[\|X\|^{m}\, e^{-2V}]}{\mathbb{E}_{\rho}[e^{-2V}]} \le \frac{e^{-2K}}{\mathbb{E}_{\rho}[e^{-2V}]}\,\mathbb{E}_{\rho}[\|X\|^{m}],
\end{talign}
so $\int\|y\|^{m}\,\hat\rho(\mathrm{d}y) < \infty$ is implied by
\begin{talign} \label{eq:moment_H} \tag{H}
\int \|x\|^{m}\,\rho(\mathrm{d}x) < \infty,
\end{talign}
i.e.\ the equilibrium $\rho$ of the reference dynamics $\mathrm{d}X_t = b(X_t)\,\mathrm{d}t + \sigma\,\mathrm{d}B_t$ has a finite $m$-th moment. Since the tilt $e^{-2V}$ is bounded, \eqref{eq:moment_H} implies the finite-moment requirement on $\hat\rho$ and is the natural, solution-independent hypothesis to assume.

\begin{lemma}[Vanishing in mean against the controlled equilibrium] \label{lem:step1prime_tail}
Let $X^{\star}$ solve \eqref{eq:controlled_sde_prelim} and suppose it admits an invariant probability measure $\hat\rho$ with full support on $\mathcal{S}^c$. Let $\tau = \inf\{t \ge 0 : X^{\star}_t \in \mathcal{S}\}$, $\tau(nT) = \tau \wedge nT$, and let $w : \overline{\mathcal{S}^c} \to \mathbb{R}$ be measurable with $w|_{\partial\mathcal{S}} = 0$ and $|w(y)| \le C(1 + \|y\|^{m})$ for some $C, m > 0$. If
\begin{enumerate}[label=(\roman*), left=2pt]
    \item $\tau < \infty$ almost surely, and
    \item $\int \|y\|^{m}\,\hat\rho(\mathrm{d}y) < \infty$,
\end{enumerate}
then $\lim_{n\to\infty}\mathbb{E}_{\hat\rho}\big[ w(X^{\star}_{\tau(nT)}) \big] = 0$, where $\mathbb{E}_{\hat\rho}$ is the expectation for the process started from $\hat\rho$.
\end{lemma}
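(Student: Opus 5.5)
The plan is to prove pointwise convergence to zero and then pass the limit inside the expectation. The passage will use a uniform integrability bound that comes from stationarity of $\hat\rho$. Concretely, we write
\begin{talign*}
\mathbb{E}_{\hat\rho}\big[w(X^{\star}_{\tau(nT)})\big] = \mathbb{E}_{\hat\rho}\big[w(X^{\star}_{\tau(nT)})\,\mathbf{1}_{\{\tau > nT\}}\big],
\end{talign*}
since on $\{\tau \le nT\}$ we have $X^{\star}_{\tau(nT)} = X^{\star}_{\tau} \in \partial\mathcal{S}$, where $w$ vanishes. By hypothesis (i), $\mathbf{1}_{\{\tau > nT\}} \to 0$ almost surely. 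Hence the integrand $w(X^{\star}_{\tau(nT)})\mathbf{1}_{\{\tau>nT\}} = w(X^{\star}_{nT})\mathbf{1}_{\{\tau>nT\}}$ tends to zero almost surely. What remains is to justify exchanging the limit and the expectation.

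The key point is that, for the process started from its invariant law $\hat\rho$, the marginal of the freely running diffusion $X^{\star}_{nT}$ is exactly $\hat\rho$ for every $n$. On $\{\tau > nT\}$, the stopped process coincides with the freely running $X^{\star}_{nT}$, so
\begin{talign*}
\big|w(X^{\star}_{\tau(nT)})\big|\mathbf{1}_{\{\tau>nT\}} \le C\big(1+\|X^{\star}_{nT}\|^{m}\big)\mathbf{1}_{\{\tau>nT\}}.
\end{talign*}
The family $\{C(1+\|X^{\star}_{nT}\|^m)\}_n$ is identically distributed, each member having law equal to the pushforward of $\hat\rho$ under $y\mapsto C(1+\|y\|^m)$. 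This common law is integrable by (ii). An identically distributed integrable family is uniformly integrable: $\sup_n \mathbb{E}[Z_n \mathbf{1}_{\{Z_n > M\}}] = \mathbb{E}_{\hat\rho}[Z\mathbf{1}_{\{Z>M\}}] \to 0$ as $M \to \infty$. Multiplying by indicators preserves uniform integrability. Vitali's convergence theorem then upgrades the almost sure convergence to $L^1$ convergence, giving $\mathbb{E}_{\hat\rho}[w(X^{\star}_{\tau(nT)})]\to 0$.

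An alternative route, which avoids Vitali, is to bound the expectation directly. Fix $M$ and split $\{\|X^\star_{nT}\| \le M\}$ from its complement. The first piece is at most $C(1+M^m)\,\mathbb{P}_{\hat\rho}(\tau>nT)\to 0$. The second piece is at most $\mathbb{E}_{\hat\rho}[C(1+\|Y\|^m)\mathbf{1}_{\{\|Y\|>M\}}]$, which is uniform in $n$ and small for large $M$.

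The main obstacle is the identification of the marginal of $X^{\star}_{nT}$ with $\hat\rho$. This requires that $\hat\rho$ be invariant for the unstopped SDE \eqref{eq:controlled_sde_prelim} on the whole space, so that the process can be run past $\partial\mathcal{S}$. \Cref{rem:step1prime_rho} flags exactly this subtlety, since $\nabla V$ may be discontinuous across $\partial\mathcal{S}$. I would address it by taking as given, per the lemma's hypothesis, that the SDE is well posed globally (for instance, with any measurable extension of the drift inside $\mathcal{S}$) and that $\hat\rho$ is invariant for that global process. Only the path up to $\tau$ enters the integrand, so the choice of extension does not matter. A second caveat concerns hypothesis (i): the statement $\tau<\infty$ almost surely must hold under $\mathbb{P}_{\hat\rho}$, that is, for $\hat\rho$-almost every starting point. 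I would read (i) in this sense.
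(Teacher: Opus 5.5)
Your proposal is correct and follows the paper's proof essentially step for step. You reduce the integrand to $w(X^{\star}_{nT})\mathbf{1}_{\{\tau>nT\}}$, use stationarity under $\hat\rho$ to get an identically distributed, integrable and hence uniformly integrable dominating family, and conclude from almost-sure convergence via Vitali. Your two caveats are the paper's own implicit assumptions (cf.\ \Cref{rem:step1prime_rho}): that $\hat\rho$ is invariant for the unstopped dynamics, and that (i) holds $\mathbb{P}_{\hat\rho}$-almost surely.
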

\begin{proof}
Since $w = 0$ on $\partial\mathcal{S}$ and $X^{\star}_{\tau(nT)} \in \partial\mathcal{S}$ on $\{\tau \le nT\}$, we have $w(X^{\star}_{\tau(nT)}) = w(X^{\star}_{nT})\,\mathbf{1}_{\tau > nT}$, and on $\{\tau > nT\}$ the stopped and free processes coincide. Started from the invariant measure, the free marginal is stationary, $X^{\star}_{nT} \sim \hat\rho$ for every $n$, so the variables $|w(X^{\star}_{nT})|$ are identically distributed with common mean $\int |w|\,\mathrm{d}\hat\rho \le C\int(1+\|y\|^{m})\,\hat\rho(\mathrm{d}y) < \infty$ by (ii); an identically-distributed integrable family is uniformly integrable. By (i), $\mathbf{1}_{\tau > nT} \to 0$ almost surely, hence $w(X^{\star}_{nT})\mathbf{1}_{\tau > nT} \to 0$ almost surely. Uniform integrability together with almost-sure convergence gives, by Vitali's theorem, $\mathbb{E}_{\hat\rho}[w(X^{\star}_{nT})\mathbf{1}_{\tau > nT}] \to 0$.
\end{proof}

The infimum in \eqref{eq:fixed_point_1_prime} is over time-independent feedback controls $u \in \mathcal{U}$ (as the notation $u(X^u_t)$ indicates); we prove statement (i) directly, entirely within this class. Under its hypotheses --- the regularity of \Cref{thm:vf_regularity} (so that $V \in C^{2+\alpha}(\mathcal{S}^c) \cap C(\overline{\mathcal{S}^c})$ solves the stationary HJB equation \eqref{eq:stationary_HJB} and $u^{\star} = -\sigma^{\top}\nabla V$ is admissible), continuity of $V_1$, condition \eqref{eq:moment_H} on the reference equilibrium (which, by \Cref{rem:step1prime_rho} and \eqref{eq:moment_transfer}, endows the controlled equilibrium $\hat\rho$ with full support on $\mathcal{S}^c$ and a finite $m$-th moment), and $f \ge 0$, $g, V_1 \ge K > 0$ --- let $V_1$ solve \eqref{eq:fixed_point_1_prime} with polynomial growth of degree at most $m$, and set $w := V_1 - V$, which vanishes on $\partial\mathcal{S}$. We show $w \equiv 0$.

\emph{Step 3: Lower bound ($w \ge 0$).} Time-independent feedback controls form a subset of the admissible controls of \Cref{def:admissible_controls}, so restricting the infimum in \eqref{eq:fixed_point_1_prime} to them can only increase it. Combined with the Gibbs variational identity of \Cref{thm:value_functions} (equivalently \Cref{lem:variational_path}) for the unrestricted infimum,
\begin{talign} \label{eq:step1prime_super}
\begin{split}
V_1(x) &= \inf_{u \text{ time-indep.\ feedback}} J_T(x;u) \ \ge\ \inf_{u \in \mathcal{A}} J_T(x;u) \\
&= -\log \mathbb{E}\big[ \exp\big( -\!\int_0^{\tau(T)} f(X_t)\,\mathrm{d}t - V_1(X_{\tau(T)}) \big) \,\big|\, X_0 = x \big] =: \mathcal{T}_2[V_1](x),
\end{split}
\end{talign}
where $J_T(x;u) := \mathbb{E}[\int_0^{\tau(T)} (\tfrac12\|u(X^u_t)\|^2 + f(X^u_t))\,\mathrm{d}t + V_1(X^u_{\tau(T)}) \,|\, X^u_0 = x]$ and $X$ (no superscript) is the reference process. Thus $V_1 \ge \mathcal{T}_2[V_1]$, i.e.\ $V_1$ is a supersolution of \eqref{eq:fixed_point_2_prime}. As $\mathcal{T}_2$ (with boundary values pinned at $g$ on $\partial\mathcal{S}$) is monotone and, since $f \ge 0$, maps functions bounded below by $K$ to functions bounded below by $K$, the iterates $\mathcal{T}_2^{\,n}[V_1]$ decrease to a fixed point of \eqref{eq:fixed_point_2_prime} that is bounded below by $K$ and, being sandwiched between $K$ and $V_1$, of at most polynomial growth. By Part~1 (statement (ii)) this fixed point equals $V$, so $V_1 \ge V$, i.e.\ $w \ge 0$.

\emph{Step 4: Upper bound in mean ($\int w\,\mathrm{d}\hat\rho \le 0$).} The feedback $u^{\star} = -\sigma^{\top}\nabla V$ is a valid time-independent competitor in \eqref{eq:fixed_point_1_prime}:
\begin{talign} \label{eq:step1prime_competitor}
V_1(x) \le \mathbb{E}\Big[ \int_0^{\tau(T)} \big( \tfrac12 \|u^{\star}(X^{\star}_t)\|^2 + f(X^{\star}_t) \big)\,\mathrm{d}t + V_1(X^{\star}_{\tau(T)}) \,\Big|\, X^{\star}_0 = x \Big].
\end{talign}
Applying It\^o's lemma to $V \in C^{2+\alpha}(\mathcal{S}^c)$ up to $\tau(T)$ and completing the square with the HJB equation \eqref{eq:stationary_HJB}, exactly as in the derivation of \eqref{eq:no_exp} (see also \eqref{eq:no_exp_step_4}), gives for any control $u$
\begin{talign} \label{eq:step1prime_completion}
\begin{split}
&\mathbb{E}\Big[ \int_0^{\tau(T)} \big( \tfrac12 \|u(X^{u}_t)\|^2 + f(X^{u}_t) \big)\,\mathrm{d}t + V(X^{u}_{\tau(T)}) \,\Big|\, X^{u}_0 = x \Big] \\
&\qquad = V(x) + \mathbb{E}\Big[ \int_0^{\tau(T)} \tfrac12 \big\| (\sigma^\top \nabla V)(X^u_t) + u(X^u_t) \big\|^2 \, \mathrm{d}t \,\Big|\, X^u_0 = x \Big];
\end{split}
\end{talign}
with $u = u^{\star} = -\sigma^{\top}\nabla V$ the last term vanishes, so the right-hand side of \eqref{eq:step1prime_competitor} with $V$ in place of $V_1$ equals $V(x)$. Subtracting, for every $x \in \mathcal{S}^c$,
\begin{talign} \label{eq:step1prime_subharm}
w(x) \le \mathbb{E}\big[ w(X^{\star}_{\tau(T)}) \,|\, X^{\star}_0 = x \big].
\end{talign}
Since $w = 0$ on $\partial\mathcal{S}$, iterating \eqref{eq:step1prime_subharm} through the Markov property over the windows $[0,T],[T,2T],\dots$, exactly as in the telescoping argument \eqref{eq:v_T_2T}, yields
\begin{talign} \label{eq:step1prime_iterate}
w(x) \le \mathbb{E}\big[ w(X^{\star}_{\tau(nT)}) \,|\, X^{\star}_0 = x \big], \qquad n \ge 1.
\end{talign}
Integrating \eqref{eq:step1prime_iterate} against $\hat\rho$ and applying \Cref{lem:step1prime_tail} (whose hypotheses hold by $\tau < \infty$ almost surely and \eqref{eq:moment_transfer}--\eqref{eq:moment_H}),
\begin{talign} \label{eq:step1prime_mean}
\int_{\mathcal{S}^c} w\,\mathrm{d}\hat\rho \ \le\ \mathbb{E}_{\hat\rho}\big[ w(X^{\star}_{\tau(nT)}) \big] \ \xrightarrow[n\to\infty]{}\ 0, \qquad\text{hence}\qquad \int_{\mathcal{S}^c} w\,\mathrm{d}\hat\rho \le 0.
\end{talign}

\emph{Step 5: Conclusion.} From $w \ge 0$ and $\int_{\mathcal{S}^c} w\,\mathrm{d}\hat\rho \le 0$ with $\hat\rho$ of full support on $\mathcal{S}^c$, we get $w = 0$ $\hat\rho$-almost everywhere, hence on a dense subset of $\mathcal{S}^c$; by continuity of $w = V_1 - V$, $w \equiv 0$, i.e.\ $V_1 = V$. This proves the first claim of statement (i), entirely within the time-independent feedback class; the infimum is attained in Step 6 below.

\begin{remark}[Point-start alternative, without continuity or an invariant measure] \label{rem:step1prime_lyapunov}
If one prefers not to assume $V_1$ continuous (nor the existence of $\hat\rho$), the pointwise bound $V_1 \le V$ can be obtained instead from a point-start moment condition. From \eqref{eq:step1prime_iterate}, since $w|_{\partial\mathcal{S}} = 0$ and $|w(y)| \le C(1+\|y\|^m)$, the Cauchy--Schwarz inequality gives
\begin{talign*}
\big| \mathbb{E}_x[ w(X^{\star}_{\tau(nT)}) ] \big| \le C\big( 2 + 2\,\mathbb{E}_x[\|X^{\star}_{\tau(nT)}\|^{2m}] \big)^{1/2}\,\mathbb{P}_x(\tau > nT)^{1/2}.
\end{talign*}
If $\sup_{n} \mathbb{E}_x[\|X^{\star}_{\tau(nT)}\|^{2m}] < \infty$ --- automatic when $\mathcal{S}^c$ is bounded, and more generally guaranteed by a Foster--Lyapunov drift condition ($\exists\, W \in C^2$, $\|y\|^{2m} \le c_1 W + c_2$, $\mathcal{L}^{u^\star} W \le c_3$, $\mathbb{E}_x[\tau] < \infty$, whence $\mathbb{E}_x[\|X^{\star}_{\tau(nT)}\|^{2m}] \le c_1(W(x) + c_3\,\mathbb{E}_x[\tau]) + c_2$) --- and since $\tau < \infty$ a.s.\ gives $\mathbb{P}_x(\tau > nT) \to 0$, the right-hand side tends to $0$ for each fixed $x$, so $w(x) \le 0$ pointwise. Combined with the lower bound $w \ge 0$, this yields $V_1 = V$ without assuming continuity of $V_1$.
\end{remark}

\emph{Step 6: Attainment of the infimum by $u^{\star} = -\sigma^{\top}\nabla V$.}
Under the assumptions of \Cref{thm:vf_regularity}, $V \in C^{2+\alpha}(\mathcal{S}^c)$, and since $V_1 = V$, we also have that $V_1 \in C^{2+\alpha}(\mathcal{S}^c)$. Reproducing the same argument in Step 2 of the proof of \Cref{thm:vf_regularity} for $V_1$, we obtain
\begin{talign} 
\begin{split} \label{eq:no_exp_step_4}
    &V_1(X^u_{\tau(T)}) - V_1(X^u_0) = \int_0^{\tau(T)} \big( \frac{1}{2}\| (\sigma^{\top} \nabla V_1)(X^u_s) + u(X^u_s) \|^2 - \frac{1}{2}\|u(X^u_s) \|^2 - f(X^u_s) \big) \, \mathrm{d}s + S^u_{\tau(T)}, \\
    &\implies \mathbb{E}[\int_0^{\tau(T)} \big( \frac{1}{2}\|u(X_t^{u})\|^2 + f(X^{u}_t) \big) \, \mathrm{d}t + 
    V_1(X^{u}_{\tau(T)})
    \, | \, X^{u}_0 = x] \\ &\qquad\quad = \mathbb{E} \big[ \int_0^{\tau} \frac{1}{2}\| (\sigma^{\top} \nabla V_1)(X^u_s) + u(X^u_s) \|^2 \, \mathrm{d}s \, | \, X^u_0 = x \big] + V_1(x).
\end{split}
\end{talign}
Thus, the optimal control for the variational problem \eqref{eq:fixed_point_1_prime} is also $u^{\star}(x) = - (\sigma^{\top} \nabla V_1)(x) = - (\sigma^{\top} \nabla V)(x)$.

\subsection{The Value Matching loss for finite-horizon problems: definition and optimality} \label{subsec:vm_finite}
\begin{theorem}[First-order optimality of the Value Matching (VM) loss with finite horizon] \label{thm:vm_finite}
    Consider the finite-horizon SOC problem
    \begin{talign} \label{eq:finite_horizon_SOC_1}
        &\min_{u \in \mathcal{U}} \mathbb{E}\big[ \int_0^T \big( \frac{1}{2}\|u(X^u_t,t)\|^2 + f(X^u_t,t) \big) \, \mathrm{d}t + g(X^u_T) \big], \\
        \mathrm{s.t.} \quad &\mathrm{d}X^u_t = \big( b(X^u_t,t) + \sigma(X^u_t,t) u(X^u_t,t) \big) \, \mathrm{d}t + \sigma(X^u_t,t) \, \mathrm{d}W_t, \quad X^u_0 \sim p_0.
        \label{eq:finite_horizon_SOC_2}
    \end{talign}
    Given an arbitrary $\kappa \in (0,+\infty)$
    and an arbitrary drift $v \in C^1(\mathbb{R}^d \times [0,T]; \mathbb{R}^d)$ such that $v(\cdot,t)$ is locally Lipschitz with linear growth, uniformly in $t\in[0,T]$,
    consider trajectories $X^{v,\kappa}$ defined as
    \begin{talign} \label{eq:X_v_kappa_SDE}
        \mathrm{d}X^{v,\kappa}_t = v(X^{v,\kappa}_t,t) \, \mathrm{d}t + \sqrt{\kappa} \sigma(X^{v,\kappa}_t,t) \, \mathrm{d}W_t,
    \end{talign}
    with $X^{v,\kappa}_0$ distributed according to $\tilde{p}_0$ such that $\mathrm{supp}(p_0) \subseteq \mathrm{supp}(\tilde{p}_0)$.

    For an arbitrary $\alpha \in (0,1)$, consider the space of functions 
    \begin{talign*}
    \begin{split}
    \Phi &= \big\{ \phi : \mathbb{R}^d \times [0,T] \to \mathbb{R} \, | \, \phi \in C^{2+\alpha,1+\alpha/2}(\mathbb{R}^d \times [0,T]), \\ &\qquad\qquad\qquad\qquad\qquad \phi(\cdot,t), \nabla_x \phi(\cdot,t), \nabla_x^2 \phi(\cdot,t) \text{grow at most polynomially for all } t \in [0,T]
    \big\},
    \end{split}
    \end{talign*}
    where $C^{2+\alpha,1+\alpha/2}(\mathbb{R}^d \times [0,T])$ is a parabolic Hölder space (see \Cref{def:holder_space}).
    Every $\phi \in \Phi$ gives rise to a control $u(x,t) = - \sigma^{\top}(x,t) \nabla_x \phi (x,t)$.
    
    We define the Value Matching (VM) loss as
    \begin{talign}
    \begin{split} \label{eq:L_VM_finite}
        \mathcal{L}_{\mathrm{VM}_{v,\kappa}}(\phi) &= 
        \mathbb{E} \bigl[ \frac{1}{2} \bigl( \int_0^{T} \langle \nabla_x \phi(X^{v,\kappa}_t,t), \frac{1}{\kappa} \mathrm{d}X^{v,\kappa}_t - ( b(X^{v,\kappa}_t,t) - \frac{1}{2} \sigma \sigma^{\top} \nabla_x \phi(X^{v,\kappa}_t,t) ) \, \mathrm{d}t \rangle \\ &\qquad\quad + \frac{1}{\kappa} \phi(X^{v,\kappa}_0,0) + (1 - \frac{1}{\kappa}) \big( \phi(X^{v,\kappa}_T,T) - \int_0^{T} \partial_t \phi(X^{v,\kappa}_t,t) \, \mathrm{d}t \big) \\ &\qquad\quad - \int_0^T f(X^{v,\kappa}_t,t) \, \mathrm{d}t - g(X^{v,\kappa}_T)
        \bigr)^2 \bigr]
    \end{split}    
    \end{talign}
    Suppose that $\hat{\phi}$ is a first-order critical point of $\mathcal{L}_{\mathrm{VM}_{v,\kappa}}$.
    That is, for any perturbation $\eta \in \Phi$, we have that $\frac{\partial}{\partial \epsilon} \mathcal{L}_{\mathrm{VM}_{v,\kappa}}(\hat{\phi} + \epsilon \eta) \rvert_{\epsilon = 0} = 0$. Then, $\hat{\phi}$ satisfies 
    $u^{\star}(x,t) = - \sigma^{\top}(x,t) \nabla \hat{\phi}(x,t)$ for all $(x,t) \in \mathbb{R}^d \times [0,T]$, where 
    $u^{\star}$ is
    the optimal control of the problem \eqref{eq:finite_horizon_SOC_1}-\eqref{eq:finite_horizon_SOC_2}.
\end{theorem}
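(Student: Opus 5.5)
Write the integrand of the VM loss as a path functional $F_{\hat\phi}(X^{v,\kappa})$, so that $\mathcal{L}_{\mathrm{VM}_{v,\kappa}}(\phi)=\tfrac12\mathbb{E}[F_\phi^2]$. Then $\partial_\epsilon\mathcal{L}(\hat\phi+\epsilon\eta)|_{\epsilon=0}=\mathbb{E}[F_{\hat\phi}\,G_{\hat\phi}(\eta)]$, where $G_{\hat\phi}(\eta)$ is the linearization of $F$ in direction $\eta$.

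Substituting $\mathrm{d}X^{v,\kappa}_t=v\,\mathrm{d}t+\sqrt\kappa\sigma\,\mathrm{d}W_t$ and expanding $\phi(X_T,T)-\phi(X_0,0)$ by Itô's formula, the loss becomes $F_{\hat\phi}=\int_0^T h_1(X_t,t)\,\mathrm{d}t$ plus a boundary term $h_2(X_T)=\hat\phi(X_T,T)-g(X_T)$. The stochastic integrals $\tfrac{1}{\sqrt\kappa}\langle\sigma^\top\nabla\hat\phi,\mathrm{d}W\rangle$ and $(1-\tfrac1\kappa)\sqrt\kappa\langle\sigma^\top\nabla\hat\phi,\mathrm{d}W\rangle$ combine into $\sqrt\kappa\langle\sigma^\top\nabla\hat\phi,\mathrm{d}W\rangle$ plus drift corrections. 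Here $h_1$ contains the HJB residual
\[
R_{\hat\phi}=\partial_t\hat\phi+b\cdot\nabla\hat\phi+\tfrac12\mathrm{Tr}(\sigma\sigma^\top\nabla^2\hat\phi)-\tfrac12\|\sigma^\top\nabla\hat\phi\|^2+f
\]
together with $v$-dependent terms. The linearization takes the form
\[
G_{\hat\phi}(\eta)=\int_0^T\bigl\langle\nabla\eta,\tfrac1\kappa\mathrm{d}X_t-(b-\sigma\sigma^\top\nabla\hat\phi)\,\mathrm{d}t\bigr\rangle+\tfrac1\kappa\eta(X_0,0)+(1-\tfrac1\kappa)\Bigl(\eta(X_T,T)-\int_0^T\partial_t\eta\,\mathrm{d}t\Bigr).
\]
Expanding $\eta(X_T,T)$ by Itô as well gives $G_{\hat\phi}(\eta)=\eta(X_0,0)+\int_0^T\langle\sqrt\kappa\sigma^\top\nabla\eta,\mathrm{d}W_t\rangle+\int_0^T\bigl(\langle\nabla\eta,v-b+\sigma\sigma^\top\nabla\hat\phi\rangle+\tfrac{\kappa-1}{2}\mathrm{Tr}(\sigma\sigma^\top\nabla^2\eta)\bigr)\,\mathrm{d}t$, up to the exact bookkeeping of $\kappa$ factors.

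The key step uses the non-standard Dynkin representation, \Cref{thm:parabolic-dynkin}, applied to the process $X^{v,\kappa}$ with diffusion $\sqrt\kappa\sigma$ and a suitable $\chi$, to rewrite $F_{\hat\phi}$ itself in the form
\[
F_{\hat\phi}=\eta^\star(X_0,0)+\int_0^T\langle\sqrt\kappa\sigma^\top\nabla\eta^\star,\mathrm{d}W_t\rangle+(\text{drift correction}).
\]
Here $\eta^\star$ solves a linear parabolic PDE whose first-order coefficient is chosen so that the drift correction is exactly the negative of the drift term appearing in $G_{\hat\phi}(\eta^\star)$. Concretely, $\eta^\star$ should solve
\[
\partial_t\eta^\star+(b-\sigma\sigma^\top\nabla\hat\phi)\cdot\nabla\eta^\star+\tfrac{1}{2}\mathrm{Tr}(\sigma\sigma^\top\nabla^2\eta^\star)=-(\text{residual source }R_{\hat\phi}),\qquad \eta^\star(\cdot,T)=\hat\phi(\cdot,T)-g,
\]
which is the linearized HJB generator under the $\hat\phi$-controlled drift. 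Its existence and membership in $\Phi$ come from \Cref{thm:feynman_kac_parabolic} and \Cref{rem:regularity_PDP}, since the coefficients inherit Hölder regularity and polynomial growth from $\hat\phi\in\Phi$. The arrangement gives $F_{\hat\phi}=-G_{\hat\phi}(-\eta^\star)$ pathwise, so taking the perturbation $\eta=-\eta^\star$ yields
\[
0=\partial_\epsilon\mathcal{L}=-\mathbb{E}[F_{\hat\phi}^2].
\]
Hence $F_{\hat\phi}=0$ almost surely.

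Finally, $F_{\hat\phi}\equiv0$ almost surely must be turned into the HJB equation pointwise. Its conditional expectation and quadratic variation must vanish, and the full support results \Cref{thm:full_supp,thm:full_supp_bridge} (under uniform ellipticity) allow localizing along arbitrary path neighborhoods. From this, $R_{\hat\phi}=0$ on $\mathbb{R}^d\times[0,T)$ and $\hat\phi(\cdot,T)=g$ on $\mathrm{supp}$, extended everywhere by continuity, with $\mathrm{supp}(p_0)\subseteq\mathrm{supp}(\tilde p_0)$ covering initial points. Uniqueness for the HJB, obtained through its Cole–Hopf linearization and \Cref{thm:feynman_kac_parabolic}, gives $\hat\phi=V$, and the verification argument in \Cref{thm:vf_regularity} gives $u^\star=-\sigma^\top\nabla\hat\phi$.

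The main obstacle is the algebra of the middle step: choosing the PDE for $\eta^\star$ and the value of $\chi$ so that the $\kappa$-dependent Hessian terms and the $v$-dependent drift terms cancel exactly between $F_{\hat\phi}$ and $G_{\hat\phi}(\eta^\star)$. This must work for arbitrary $v$, without any uncontrolled $v$-terms remaining. My first attempt will absorb every $v$-dependence into the $\mathrm{d}X$ integrals, so that only the martingale part and $\mathcal{L}$-terms survive.
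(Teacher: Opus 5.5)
Your middle step is sound, and slightly more direct than the paper's. The $(1-\frac1\kappa)$ correction makes the loss integrand collapse exactly to $F_{\hat\phi}=\hat\phi(X_T,T)-g(X_T)-\int_0^T R_{\hat\phi}(X_t,t)\,\mathrm{d}t$, with no $v$-dependence and no stochastic integral. So your $h_1=-R_{\hat\phi}$ contains no $v$-terms, and the ``main obstacle'' you anticipate does not arise. It\^o's formula for $\eta^\star$ along $X^{v,\kappa}$ then gives $F_{\hat\phi}=G_{\hat\phi}(\eta^\star)$ pathwise for arbitrary $v$. The paper reaches the same identity by detouring through a Girsanov change of measure to the process with drift $b-\sigma\sigma^\top\nabla\hat\phi$ and back via an importance weight. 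Both routes give $\partial_\epsilon\mathcal{L}=-\mathbb{E}[F_{\hat\phi}^2]$, hence $F_{\hat\phi}=0$ a.s. The full-support step works because $F_{\hat\phi}$ is a sup-norm-continuous path functional, not because some ``quadratic variation'' vanishes; $F_{\hat\phi}$ has no martingale part.

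The genuine gap is your last step. $F_{\hat\phi}\equiv0$ does \emph{not} imply $R_{\hat\phi}\equiv0$ and $\hat\phi(\cdot,T)=g$, and $\hat\phi=V$ is false in general. Take any smooth $\psi:[0,T]\to\mathbb{R}$ with $\psi(0)=0$ and set $\hat\phi=V+\psi(t)$. Then $R_{\hat\phi}=\psi'(t)$ and $\hat\phi(\cdot,T)-g=\psi(T)$, so $F_{\hat\phi}\equiv\psi(T)-\int_0^T\psi'=\psi(0)=0$. The loss vanishes and $\hat\phi$ is a critical point, yet $\hat\phi\neq V$.

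What the localization actually yields, by comparing paths that differ only on a short time window or only at the endpoint, is weaker: $R_{\hat\phi}(x,t)=r(t)$ and $\hat\phi(x,T)-g(x)=c$, with $c=\int_0^T r$. You need an extra gauge step. Define $\tilde\phi:=\hat\phi-c+\int_t^T r(s)\,\mathrm{d}s$; it solves the HJB with terminal data $g$, so uniqueness gives $\tilde\phi=V$. Hence $\nabla_x\hat\phi=\nabla_x V$ and $\hat\phi(\cdot,0)=V(\cdot,0)$, which is exactly what the theorem asserts and no more.

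The paper avoids this issue with a different ending. It uses $F\equiv0$ on paths of $X^{u^\star}$ to get $\frac{\mathrm{d}\mathbb{P}^{u^\star}}{\mathrm{d}\mathbb{P}^{\hat\phi}}(X^{u^\star})=\exp\bigl(V(X_0,0)-\hat\phi(X_0,0)\bigr)$. The common time-$0$ marginal then pins $\hat\phi(\cdot,0)=V(\cdot,0)$ on $\mathrm{supp}(p_0)$. Finally $\mathrm{KL}(\mathbb{P}^{u^\star}\|\mathbb{P}^{\hat\phi})=0$ gives equality of the controls, without ever claiming $\hat\phi=V$ for $t>0$.
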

\begin{proof}
The proof consists of a few steps. To begin with, it is helpful to rewrite the loss as follows:
\begin{talign}
\begin{split} \label{eq:L_VM_finite_rewritten}
    \mathcal{L}_{\mathrm{VM}_{v,\kappa}}(\phi) &= 
    \mathbb{E} \bigl[ \frac{1}{2} \bigl( \int_0^{T} \langle \nabla_x \phi(X^{v,\kappa}_t,t), \mathrm{d}X^{v,\kappa}_t - ( b(X^{v,\kappa}_t,t) - \frac{1}{2} \sigma \sigma^{\top} \nabla_x \phi(X^{v,\kappa}_t,t) ) \, \mathrm{d}t \rangle \\ &\qquad\quad + \phi(X^{v,\kappa}_0,0) - \int_0^T f(X^{v,\kappa}_t,t) \, \mathrm{d}t - g(X^{v,\kappa}_T)
    \\ &\qquad\quad + (1 - \frac{1}{\kappa}) \big( \phi(X^{v,\kappa}_T,T) - \phi(X^{v,\kappa}_0,0) \\ &\qquad\qquad\qquad\qquad - \int_0^{T} \langle \nabla_x \phi(X^{v,\kappa}_t,t), \mathrm{d}X^{v,\kappa}_t \rangle - \int_0^{T} \partial_t \phi(X^{v,\kappa}_t,t) \, \mathrm{d}t \big)
    \bigr)^2 \bigr],
\end{split}    
\end{talign}
as it allows us to see the additional terms that appear when $\kappa \neq 1$.
\paragraph{Step 1: Computing the directional derivative of $\mathcal{L}_{\mathrm{VM}_{v,\kappa}}$ along a perturbation $\eta$} Observe that for an arbitrary perturbation $\eta$,
\begin{talign}
\begin{split}
    &\mathcal{L}_{\mathrm{VM}_{v,\kappa}}(\phi + \epsilon \eta) 
    \\ &= 
    \mathbb{E} \bigl[ \frac{1}{2} \bigl( \int_0^{T} \langle \nabla_x (\phi+\epsilon \eta)(X^{v,\kappa}_t,t), \mathrm{d}X^{v,\kappa}_t - ( b(X^{v,\kappa}_t,t) - \frac{1}{2} \sigma \sigma^{\top} \nabla_x (\phi+\epsilon \eta)(X^{v,\kappa}_t,t) ) \, \mathrm{d}t \rangle \\ &\qquad\quad + (\phi+\epsilon \eta)(X^{v,\kappa}_0,0) - \int_0^T f(X^{v,\kappa}_t,t) \, \mathrm{d}t - g(X^{v,\kappa}_T)
    \\ &\qquad\quad + (1 - \frac{1}{\kappa}) \big( (\phi+\epsilon \eta)(X^{v,\kappa}_T,T) - (\phi+\epsilon \eta)(X^{v,\kappa}_0,0) \\ &\qquad\qquad\qquad\qquad - \int_0^{T} \langle \nabla_x (\phi+\epsilon \eta)(X^{v,\kappa}_t,t), \mathrm{d}X^{v,\kappa}_t \rangle - \int_0^{T} \partial_t (\phi+\epsilon \eta)(X^{v,\kappa}_t,t) \, \mathrm{d}t \big)
    \bigr)^2 \bigr]
\end{split} \\
\begin{split} \label{eq:partial_epsilon_hat_L_VM}
    &\implies \frac{\partial}{\partial \epsilon} \mathcal{L}_{\mathrm{VM}_{v,\kappa}}(\phi + \epsilon \eta) \rvert_{\epsilon = 0} 
    \\ &\quad = \mathbb{E} \big[ \bigl( \int_0^{T} \langle \nabla_x \phi(X^{v,\kappa}_t,t), \mathrm{d}X^{v,\kappa}_t - ( b(X^{v,\kappa}_t,t) - \frac{1}{2} \sigma \sigma^{\top} \nabla_x \phi(X^{v,\kappa}_t,t) ) \, \mathrm{d}t \rangle \\ &\qquad\quad \ + \phi(X^{v,\kappa}_0,0) - \int_0^T f(X^{v,\kappa}_t,t) \, \mathrm{d}t - g(X^{v,\kappa}_T) 
    \\ &\qquad\quad \ + (1 - \frac{1}{\kappa}) \big( \phi(X^{v,\kappa}_T,T) - \phi(X^{v,\kappa}_0,0) 
    - \int_0^{T} \langle \nabla_x \phi(X^{v,\kappa}_t,t), \mathrm{d}X^{v,\kappa}_t \rangle - \int_0^{T} \partial_t \phi(X^{v,\kappa}_t,t) \, \mathrm{d}t \big)
    \bigr) \\ &\qquad\quad \times 
    \bigl( \int_0^{T} 
    \langle \nabla_x \eta(X^{v,\kappa}_t,t), \mathrm{d}X^{v,\kappa}_t - ( b(X^{v,\kappa}_t,t) - \sigma \sigma^{\top} \nabla_x \phi(X^{v,\kappa}_t,t) ) \, \mathrm{d}t \rangle 
    + \eta(X^{v,\kappa}_0,0) \\ &\qquad\qquad + (1 - \frac{1}{\kappa}) \big( \eta(X^{v,\kappa}_T,T) - \eta(X^{v,\kappa}_0,0) 
    - \int_0^{T} \langle \nabla_x \eta(X^{v,\kappa}_t,t), \mathrm{d}X^{v,\kappa}_t \rangle - \int_0^{T} \partial_t \eta(X^{v,\kappa}_t,t) \, \mathrm{d}t \big) \bigr) \big].
\end{split}
\end{talign}
Applying Itô's lemma to $\phi$ yields
\begin{talign}
\begin{split} \label{eq:phi_ito}
    &\int_0^{T} \langle \nabla_x \phi(X^{v,\kappa}_t,t), \mathrm{d}X^{v,\kappa}_t \rangle \\ &= \phi(X^{v,\kappa}_T,T) - \phi(X^{v,\kappa}_0,0) - \int_0^{T} \big( \partial_t \phi(X^{v,\kappa}_t,t) + \frac{\kappa}{2} \mathrm{Tr}\big( \sigma \sigma^{\top} \nabla^2_x \phi(X^{v,\kappa}_t,t) \big) \big) \, \mathrm{d}t, 
    \\ &\int_0^{T} \langle \nabla_x \eta(X^{v,\kappa}_t,t), \mathrm{d}X^{v,\kappa}_t \rangle \\ &= \eta(X^{v,\kappa}_T,T) - \eta(X^{v,\kappa}_0,0) - \int_0^{T} \big( \partial_t \eta(X^{v,\kappa}_t,t) + \frac{\kappa}{2} \mathrm{Tr}\big( \sigma \sigma^{\top} \nabla^2_x \eta(X^{v,\kappa}_t,t) \big) \big) \, \mathrm{d}t, 
\end{split}
\end{talign}
and plugging this into the right-hand side of \eqref{eq:partial_epsilon_hat_L_VM}, we obtain
\begin{talign*}
\begin{split} 
    &\frac{\partial}{\partial \epsilon} \mathcal{L}_{\mathrm{VM}_{v,\kappa}}(\phi + \epsilon \eta) \rvert_{\epsilon = 0} 
    \\ &\quad = \mathbb{E} \big[ \bigl( \! - \! \int_0^{T} \big(\langle \nabla_x \phi(X^{v,\kappa}_t,t), b(X^{v,\kappa}_t,t) \! - \! \frac{1}{2} \sigma \sigma^{\top} \nabla_x \phi(X^{v,\kappa}_t,t) \rangle \! + \! \partial_t \phi(X^{v,\tilde{\sigma}}_t,t) \! + \! \frac{\kappa}{2} \mathrm{Tr}\big( \sigma \sigma^{\top} \nabla^2_x \phi(X^{v,\tilde{\sigma}}_t,t) \big) \big) \, \mathrm{d}t \\ &\qquad\quad \ + \phi(X^{v,\kappa}_T,T) - \int_0^T f(X^{v,\kappa}_t,t) \, \mathrm{d}t - g(X^{v,\kappa}_T) 
    + (1 - \frac{1}{\kappa}) 
    \frac{\kappa}{2} \int_0^{T} \mathrm{Tr}\big( \sigma \sigma^{\top} \nabla^2_x \phi(X^{v,\tilde{\sigma}}_t,t) \big) \, \mathrm{d}t
    \bigr) \\ &\qquad\quad \times 
    \bigl( \int_0^{T} 
    \langle \nabla_x \eta(X^{v,\kappa}_t,t), \mathrm{d}X^{v,\kappa}_t - ( b(X^{v,\kappa}_t,t) - \sigma \sigma^{\top} \nabla_x \phi(X^{v,\kappa}_t,t) ) \, \mathrm{d}t \rangle + \eta(X^{v,\kappa}_0,0) 
    \\ &\qquad\qquad \ + (1 - \frac{1}{\kappa}) \frac{\kappa}{2} \int_0^{T} \mathrm{Tr}\big( \sigma \sigma^{\top} \nabla^2_x \eta(X^{v,\kappa}_t,t) \big) \, \mathrm{d}t \bigr) \big].
    \\ &\quad = \mathbb{E} \big[ \bigl( - \int_0^{T} \big(\langle \nabla_x \phi(X^{v,\kappa}_t,t), b(X^{v,\kappa}_t,t) - \frac{1}{2} \sigma \sigma^{\top} \nabla_x \phi(X^{v,\kappa}_t,t) \rangle + \partial_t \phi(X^{v,\kappa}_t,t) \big) \, \mathrm{d}t \\ &\qquad\quad \ + \phi(X^{v,\kappa}_T,T) - \int_0^T f(X^{v,\kappa}_t,t) \, \mathrm{d}t - g(X^{v,\kappa}_T) 
    - \frac{1}{2} \int_0^{T} \mathrm{Tr}\big( \sigma \sigma^{\top} \nabla^2_x \phi(X^{v,\kappa}_t,t) \big) \, \mathrm{d}t
    \bigr) \\ &\qquad\quad \times 
    \bigl( \int_0^{T} 
    \langle \nabla_x \eta(X^{v,\kappa}_t,t), \sqrt{\kappa} \sigma(X^{v,\kappa}_t,t) \mathrm{d}W_t \! + \! (v(X^{v,\kappa}_t,t) \! - \! b(X^{v,\kappa}_t,t) b(X^{v,\kappa}_t,t) \! + \! \sigma \sigma^{\top} \nabla_x \phi(X^{v,\kappa}_t,t) ) \, \mathrm{d}t \rangle \\ &\qquad\qquad \ + \eta(X^{v,\kappa}_0,0) - \frac{1}{2} \big( \frac{1}{\kappa} - 1 \big) \int_0^{T} \mathrm{Tr}\big( \kappa \sigma \sigma^{\top} \nabla^2_x \eta(X^{v,\kappa}_t,t) \big) \, \mathrm{d}t \bigr) \big].
\end{split}    
\end{talign*}

We apply the Girsanov theorem (\Cref{thm:girsanov}\emph{(iv)}) to perform the change of variable from $W$ to $\hat{B}$:
\begin{talign*}
    \hat{B}_t = W_t - \kappa^{-\frac{1}{2}} \int_0^t \sigma^{-1} (v(X^{v,\kappa}_s,s) - b(X^{v,\kappa}_s,s) + 
    \sigma \sigma^{\top} \nabla_x \phi(X^{v,\kappa}_s,s) ) \, \mathrm{d}s,
\end{talign*}
and we define $\bar{X}^{\kappa}$ in analogy with $X^{v,\kappa}$, replacing $W$ by $\hat{B}$ (and using that $X^{v,\kappa}$ satisfies $\mathrm{d}X^{v,\kappa}_t = v(X^{v,\kappa}_t,t) \, \mathrm{d}t + \sqrt{\kappa} \sigma(X^{v,\kappa}_t,t) \, \mathrm{d}W_t$):
\begin{align}
\begin{split} \label{eq:bar_X_kappa}
    \mathrm{d}\bar{X}^{\kappa}_t &= v(\bar{X}^{\kappa}_t,t) \, \mathrm{d}t + \sqrt{\kappa} \sigma(\bar{X}^{\kappa}_t,t) \, \mathrm{d}\hat{B}_t \\ &= \big( b(\bar{X}^{\kappa}_s,s) \! - \! \sigma \sigma^{\top} \nabla_x \phi(\bar{X}^{\kappa}_s,s)
    \big) \, \mathrm{d}t \! + \! \sqrt{\kappa} \sigma(\bar{X}^{\kappa}_t,t) \, \mathrm{d}W_t,
\end{split}
\end{align}
That is, we reexpress the right-hand side of \eqref{eq:partial_epsilon_hat_L_VM} in terms of 
$\hat{B}$ and $\bar{X}^{\kappa}$ using the importance weight, and subsequently write $\hat{B}$ in terms of $W$:
\begin{talign} 
\begin{split} \label{eq:L_VM_F}
    &\frac{\partial}{\partial \epsilon} \mathcal{L}_{\mathrm{VM}_{v,\kappa}}(\phi + \epsilon \eta) \rvert_{\epsilon = 0} \\ &= \mathbb{E} \big[ 
    \bigl(
    - \int_0^{T} \big(\langle \nabla_x \phi(\bar{X}^{\kappa}_t,t), b(\bar{X}^{\kappa}_t,t) - \frac{1}{2} \sigma \sigma^{\top} \nabla_x \phi(\bar{X}^{\kappa}_t,t) \rangle + \partial_t \hat{\phi}(\bar{X}^{\kappa}_t,t) \big) \, \mathrm{d}t \\ &\qquad\quad \ + \phi(\bar{X}^{\kappa}_T,T) - \int_0^T f(\bar{X}^{\kappa}_t,t) \, \mathrm{d}t - g(\bar{X}^{\kappa}_T) 
    - \frac{1}{2} \int_0^{T} \mathrm{Tr}\big( \sigma \sigma^{\top} \nabla^2_x \hat{\phi}(\bar{X}^{\kappa}_t,t) \big) \, \mathrm{d}t
    \bigr) 
    \\ &\qquad \times
    \bigl( \int_0^{T} \langle 
    \nabla_x \eta(\bar{X}^{\kappa}_t,t), 
    (v(\bar{X}^{\kappa}_t,t) - b(\bar{X}^{\kappa}_t,t) + 
    \sigma \sigma^{\top} \nabla_x \phi(\bar{X}^{\kappa}_t,t) ) \, \mathrm{d}t + \sqrt{\kappa} \sigma(\bar{X}^{\kappa}_t,t) \, \mathrm{d}\hat{B}_t
    \rangle \\ &\qquad\qquad + \eta(\bar{X}^{\kappa}_0,0) 
    - \frac{1}{2} \big( \frac{1}{\kappa} - 1 \big) \int_0^{T} \mathrm{Tr}\big( \kappa \sigma \sigma^{\top} \nabla^2_x \eta(\bar{X}^{\kappa}_t,t) \big) \, \mathrm{d}t \bigr)
    \bigr) \\ &\qquad \times \exp\big( \int_0^{T} \langle (\sqrt{\kappa} \sigma)^{-1} (v(\bar{X}^{\kappa}_s,s) - b(\bar{X}^{\kappa}_s,s) + 
    \sigma \sigma^{\top} \nabla_x \phi(\bar{X}^{\kappa}_s,s) ), \mathrm{d}\hat{B}_s \rangle \\ &\qquad\qquad\quad + \frac{1}{2} \int_0^{T} \| (\sqrt{\kappa} \sigma)^{-1} (v(\bar{X}^{\kappa}_s,s) - b(\bar{X}^{\kappa}_s,s) + 
    \sigma \sigma^{\top} \nabla_x \phi(\bar{X}^{\kappa}_s,s) ) \|^2 \, \mathrm{d}s \big)
    \big]
    \\ &= \mathbb{E} \big[ 
    \bigl( 
    - \int_0^{T} \big(\langle \nabla_x \phi(\bar{X}^{\kappa}_t,t), b(\bar{X}^{\kappa}_t,t) - \frac{1}{2} \sigma \sigma^{\top} \nabla_x \phi(\bar{X}^{\kappa}_t,t) \rangle + \partial_t \hat{\phi}(\bar{X}^{\kappa}_t,t) \big) \, \mathrm{d}t \\ &\qquad\quad \ + \phi(\bar{X}^{\kappa}_T,T) - \int_0^T f(\bar{X}^{\kappa}_t,t) \, \mathrm{d}t - g(\bar{X}^{\kappa}_T) 
    - \frac{1}{2} \int_0^{T} \mathrm{Tr}\big( \sigma \sigma^{\top} \nabla^2_x \hat{\phi}(\bar{X}^{\kappa}_t,t) \big) \, \mathrm{d}t
    \bigr) \\ &\qquad \times \bigl( \int_0^{T} \langle \sqrt{\kappa} \sigma^{\top} \nabla_x \eta(\bar{X}^{\kappa}_t,t), \mathrm{d}W_t \rangle + \eta(\bar{X}^{\kappa}_0,0) 
    - \frac{1}{2} \big( \frac{1}{\kappa} - 1 \big) \int_0^{T} \mathrm{Tr}\big( \kappa \sigma \sigma^{\top} \nabla^2_x \eta(\bar{X}^{\kappa}_t,t) \big) \, \mathrm{d}t
    \bigr) \\ &\qquad \times \mathrm{IW}(W,0,T)
    \big],
\end{split}
\end{talign}
where we used the definition
\begin{talign*}
\begin{split}
    \mathrm{IW}(W,t,t') &= \exp\big( \int_t^{t'} \langle (\sqrt{\kappa} \sigma)^{-1} (v(\bar{X}^{\kappa}_s,s) - b(\bar{X}^{\kappa}_s,s) + 
    \sigma \sigma^{\top} \nabla_x \phi(\bar{X}^{\kappa}_s,s) ), \mathrm{d}W_s \rangle \\ &\qquad\quad - \frac{1}{2} \int_t^{t'} \| (\sqrt{\kappa} \sigma)^{-1} (v(\bar{X}^{\kappa}_s,s) - b(\bar{X}^{\kappa}_s,s) + 
    \sigma \sigma^{\top} \nabla_x \phi(\bar{X}^{\kappa}_s,s) ) \|^2 \, \mathrm{d}s \big).
\end{split}
\end{talign*}

\paragraph{Step 2: Applying the Dynkin representation theorem}
Next, we define the functionals $F : C([0,T];\mathbb{R}^d) \to \mathbb{R}$:
\begin{talign}
\begin{split} \label{eq:def_F}
    F(\bar{X}^{\kappa}) &= - \int_0^{T} \big(\langle \nabla_x \phi(\bar{X}^{\kappa}_t,t), b(\bar{X}^{\kappa}_t,t) - \frac{1}{2} \sigma \sigma^{\top} \nabla_x \phi(\bar{X}^{\kappa}_t,t) \rangle + \partial_t \hat{\phi}(\bar{X}^{\kappa}_t,t) \big) \, \mathrm{d}t \\ &\quad + \phi(\bar{X}^{\kappa}_T,T) - \int_0^T f(\bar{X}^{\kappa}_t,t) \, \mathrm{d}t - g(\bar{X}^{\kappa}_T) 
    - \frac{1}{2} \int_0^{T} \mathrm{Tr}\big( \sigma \sigma^{\top} \nabla^2_x \hat{\phi}(\bar{X}^{\kappa}_t,t) \big) \, \mathrm{d}t
    \\ &=: \int_0^{T} h_1(\bar{X}^{\kappa}_t) \, \mathrm{d}t + h_2(\bar{X}^{\kappa}_{T}).
\end{split}
\end{talign}
We can apply the Dynkin representation theorem (\Cref{thm:parabolic-dynkin}) to the functional $F(\bar{X}^{\kappa})$. That is, setting $\sigma \gets \sqrt{\kappa} \sigma$, $\chi \gets \frac{1}{\kappa}-1 \in (-1,+\infty)$, $X \gets \bar{X}^{\kappa}$, $b \gets b - 
\sigma \sigma^{\top}$, we can write
\begin{talign} \label{eq:F_eta_star}
    F(\bar{X}^{\kappa}) = \eta^{\star}(\bar{X}^{\kappa}_0,t) + \int_0^{T} \langle \sqrt{\kappa} \sigma^{\top} \nabla_x \eta^{\star}(\bar{X}^{\kappa}_t,t), \mathrm{d}W_t \rangle - \frac{1}{2}(\frac{1}{\kappa} \! - \! 1) \int_0^{T} \mathrm{Tr}\big( \kappa \sigma \sigma^{\top} \nabla^2_x \eta^{\star}(\bar{X}^{\kappa}_t,t) \big) \, \mathrm{d}t,
\end{talign}
where $\eta^{\star}$ is the solution to the 
parabolic PDE
\begin{talign} 
\begin{split} 
\label{eq:parabolic_pde_2}
&\partial_t \eta^{\star}(x,t) + \mathcal{L} \eta^{\star}(x,t)+h_1(x,t)=0\quad\text{for }x\in \mathbb{R}^d, t \in [0,T],
\qquad
\eta^{\star}(x,T)=h_2(x)\quad\text{for }x\in\mathbb{R}^d, \\
&\mathcal{L} \eta^{\star}= (b  - 
\sigma \sigma^{\top} \nabla \phi)\!\cdot\!\nabla \eta^{\star}+\tfrac{1}{2}\mathrm{Tr}(\sigma \sigma^{\top}\nabla^2 \eta^{\star}).
\end{split}
\end{talign}
A solution $\eta^{\star} \in C^{2+\alpha,1+\alpha/2}(\mathbb{R}^d \times [0,T))\cap C(\mathbb{R}^d \times [0,T])$ to this problem such that 
\begin{talign*}
\mathbb{E}\!\left[\int_0^{\tau}\!\!\|\sigma(\bar{X}^{\kappa}_t,t)^\top\nabla \eta^{\star}(\bar{X}^{\kappa}_t,t)\|^2\,dt\right]<\infty
\end{talign*}
is guaranteed to exist by \Cref{rem:regularity_PDP}. The conditions stated in this remark are met because there exists $\alpha \in (0,1)$ such that the following facts hold:
\begin{itemize}[left=3pt]
\item $b, \sigma \in \mathcal{C}^{\alpha,\alpha/2}(\mathbb{R}^d \times [0,T])$ and $\sigma$ is uniformly elliptic.
\item $h_1\in C^{\alpha,\alpha/2}(\mathbb{R}^d \times [0,T])$ and $h_2\in C^{2+\alpha}(\mathbb{R}^d)$, because for all $\phi \in \Phi$, $\phi \in C^{2+\alpha,1+\alpha/2}(\mathbb{R}^d \times [0,T])$, and because $f \in C^{\alpha,\alpha/2}(\mathbb{R}^d \times [0,T])$, $g \in C^{2+\alpha}(\mathbb{R}^d)$, and $b, \sigma \in \mathcal{C}^{\alpha,\alpha/2}(\mathbb{R}^d \times [0,T])$.
\item $h_1$ and $h_2$ have at most polynomial growth because $\phi$, $\nabla_x \phi$, $\nabla_x^2 \phi$, $f$ and $g$ all have at most polynomial growth.
\end{itemize}
If we plug \eqref{eq:def_F}-\eqref{eq:F_eta_star} into equation \eqref{eq:L_VM_F}, we obtain that
\begin{talign} 
\begin{split} \label{eq:L_VM_F_strat_2}
    &\frac{\partial}{\partial \epsilon} \mathcal{L}_{\mathrm{VM}_{v}}(\phi + \epsilon \eta) \rvert_{\epsilon = 0} \\ &= \mathbb{E} \big[
    \bigl(
    \int_0^{T} \langle \sqrt{\kappa} \sigma^{\top} \nabla_x \eta^{\star}(\bar{X}^{\kappa}_t,t), \mathrm{d}W_t \rangle + \eta(\bar{X}^{\kappa}_0,0)
    - \frac{1}{2} \big( \frac{1}{\kappa} - 1 \big) \int_0^{T} \mathrm{Tr}\big( \kappa \sigma \sigma^{\top} \nabla^2_x \eta^{\star}(\bar{X}^{\kappa}_t,t) \big) \, \mathrm{d}t
    \bigr) \\ &\qquad \times \bigl( \int_0^{T} \langle \sqrt{\kappa} \sigma^{\top} \nabla_x \eta(\bar{X}^{\kappa}_t,t), \mathrm{d}W_t \rangle + \eta(\bar{X}^{\kappa}_0,0)
    - \frac{1}{2} \big( \frac{1}{\kappa} - 1 \big) \int_0^{T} \mathrm{Tr}\big( \kappa \sigma \sigma^{\top} \nabla^2_x \eta(\bar{X}^{\kappa}_t,t) \big) \, \mathrm{d}t
    \bigr) \\ &\qquad \times \mathrm{IW}(B,0,T)
    \big]
\end{split}
\end{talign}

\paragraph{Step 3: Choosing the right perturbation $\eta$ to obtain an almost sure equality}
Thus, if we set 
$\eta(x,t) = - \eta^{\star}(x,t)$, 
we have that
\begin{talign*}
\begin{split}
    &\frac{\partial}{\partial \epsilon} \mathcal{L}_{\mathrm{VM}_{v,\kappa}}(\phi + \epsilon \eta) \rvert_{\epsilon = 0} 
    \\ &= - \mathbb{E} \big[
    \bigl(
    \int_0^{T} \langle \sqrt{\kappa} \sigma^{\top} \nabla_x \eta^{\star}(\bar{X}^{\kappa}_t,t), \mathrm{d}W_t \rangle + \eta(\bar{X}^{\kappa}_0,0)
    - \frac{1}{2} \big( \frac{1}{\kappa} - 1 \big) \int_0^{T} \mathrm{Tr}\big( \kappa \sigma \sigma^{\top} \nabla^2_x \eta^{\star}(\bar{X}^{\kappa}_t,t) \big) \, \mathrm{d}t
    \bigr)^2 \\ &\qquad\quad \times \mathrm{IW}(W,0,T) \big]
    \\ &= - \mathbb{E}[F(\bar{X}^{\kappa})^2 \times \mathrm{IW}(W,0,T)]
    \\ &= - \mathbb{E}[F(X^{v,\kappa})^2].
\end{split}    
\end{talign*}
Hence, if $\hat{\phi}$ satisfies that $\frac{\partial}{\partial \epsilon} \mathcal{L}_{\mathrm{VM}_{v,\kappa}}(\hat{\phi} + \epsilon \eta) \rvert_{\epsilon = 0} = 0$ for all perturbations $\eta$, we need that for $\hat{\phi}$, $F(X^{v,\kappa}) = 0$ almost surely, which by the definition of $F$ in \eqref{eq:def_F} 
means that almost surely,
\begin{talign}
\begin{split} \label{eq:tilde_F_zero}
    &F(X^{v,\kappa}) = - \int_0^{T} \big(\langle \nabla_x \hat{\phi}(X^{v,\kappa}_t,t), b(X^{v,\kappa}_t,t) - \frac{1}{2} \sigma \sigma^{\top} \nabla_x \hat{\phi}(X^{v,\kappa}_t,t) \rangle + \partial_t \hat{\phi}(X^{v,\kappa}_t,t) \big) \, \mathrm{d}t \\ &\quad + \hat{\phi}(X^{v,\kappa}_T,T) - \int_0^T f(X^{v,\kappa}_t,t) \, \mathrm{d}t - g(X^{v,\kappa}_T) 
    - \frac{1}{2} \int_0^{T} \mathrm{Tr}\big( \sigma \sigma^{\top} \nabla^2_x \hat{\phi}(X^{v,\kappa}_t,t) \big) \, \mathrm{d}t = 0.
\end{split}
\end{talign}

\paragraph{Step 4: Writing down the Radon-Nikodym derivative $\frac{\mathrm{d}\mathbb{P}^{u^{\star}}}{\mathrm{d}\mathbb{P}^{\hat{\phi}}}$}
Let $\mathbb{P}^{\hat{\phi}}$ be the path probability measure for the process $X^{\hat{\phi}}$ controlled by $- \sigma^{\top} \nabla \hat{\phi}$ with initial distribution $p_0$, which satisfies the SDE
\begin{talign*}
    \mathrm{d}X^{\hat{\phi}} = \big( b(X^{\phi}_t,t) - \sigma \sigma^{\top} \nabla \hat{\phi}(X^{\phi}_t,t) \big) \, \mathrm{d}t + \sigma(X^{\phi}_t,t) \, \mathrm{d}W_t. 
\end{talign*}
If $\mathbb{P}$ is the measure of the uncontrolled process, Corollary \ref{cor:Girsanov_dX_t} of the Girsanov theorem implies that for any process $Y$ such that its probability measure is absolutely continuous with respect to $\mathbb{P}$ and/or $\mathbb{P}^{\hat{\phi}}$\footnote{According to the Girsanov theorem, the processes $Y$ of this form are those which have the same diffusion coefficient as $X$ and $X^{\hat{\phi}}$, that is, diffusion coefficient $\sigma$.},
\begin{talign}
\begin{split} \label{eq:dP_dP_hatphi}
    \frac{\mathrm{d}\mathbb{P}}{\mathrm{d}\mathbb{P}^{\hat{\phi}}}(Y) &= \frac{\mathrm{d}\mathbb{P}^{\hat{\phi}}}{\mathrm{d}\mathbb{P}}(Y)^{-1} \\ &= \exp \big(-\!\int_0^{T}\big\langle - \sigma^{\top} \nabla \hat{\phi}(Y_s,s),\,\sigma^{-1} \big( \mathrm{d}Y_s - b(s,Y_s) \, \mathrm{d}s \big) - \tfrac12 \big( - \sigma^{\top} \nabla \hat{\phi}(Y_s,s) \big) \, \mathrm{d}s \big\rangle \big)
    \\ &= \exp \big(\!\int_0^{T}\big\langle 
    \nabla \hat{\phi}(Y_s,s), \mathrm{d}Y_s - (b(s,Y_s) - \frac{1}{2} \sigma \sigma^{\top} \nabla \hat{\phi}(Y_s,s)) \, \mathrm{d}s \big\rangle \big).
\end{split}
\end{talign}
Also, for any process $Y$ whose probability measure is absolutely continuous with respect to $\mathbb{P}$ and/or $\mathbb{P}^{u^{\star}}$, the path integral characterization of the optimal path measure (equation \eqref{eq:path_integral_characterization} in \Cref{thm:value_functions}) reads
\begin{talign}
\begin{split}
\frac{\mathrm{d}\mathbb{P}^{u^{\star}}}{\mathrm{d}\mathbb{P}}(Y) &= \exp \big( V(Y_0,0) - \int_0^T f(Y_t,t) \, \mathrm{d}t - g(Y_T) \big).
\label{eq:dP_u_star_dP}
\end{split}    
\end{talign}
Hence, for any such process $Y$,
\begin{talign}
\begin{split} \label{eq:P_u_star_P_v_Y}
    \frac{\mathrm{d}\mathbb{P}^{u^{\star}}}{\mathrm{d}\mathbb{P}^{\hat{\phi}}}(Y) &= \frac{\mathrm{d}\mathbb{P}^{u^{\star}}}{\mathrm{d}\mathbb{P}}(Y) \times \frac{\mathrm{d}\mathbb{P}}{\mathrm{d}\mathbb{P}^{\hat{\phi}}}(Y) \\ &= \exp \big( \int_0^T \langle \nabla_x \hat{\phi}(Y_t,t), \mathrm{d}Y_t - ( b(Y_t,t) - \frac{1}{2} \sigma \sigma^{\top} \nabla_x \hat{\phi}(Y_t,t) ) \, \mathrm{d}t
    \rangle  \\ &\qquad\qquad\qquad + V(Y_0,0) - \int_0^T f(Y_t,t) \, \mathrm{d}t - g(Y_t) \big),
\end{split}
\end{talign}

At this point, we have to treat separately the case \emph{(i)} in which $\kappa = 1$ and $\tilde{p}_0$ is mutually absolutely continuous with respect to $p_0$, which is simpler, and the general case \emph{(ii)} in which $\kappa \in (0,\infty)$ and/or we only have that $\mathrm{supp}(p_0) \subseteq \mathrm{supp}(\tilde{p}_0)$.

\paragraph{Step 5(i): Concluding the proof for the case $\kappa = 1$, $p_0$ and $\tilde{p}_0$ mutually absolutely continuous}
Observe that if we undo the Itô's lemma change \eqref{eq:phi_ito} in equation \eqref{eq:tilde_F_zero} and set $\kappa = 1$, we obtain 
\begin{talign*}
\begin{split}
    &\int_0^{T} \langle \nabla_x \hat{\phi}(X^{v,1}_t,t), \mathrm{d}X^{v,1}_t - ( b(X^{v,1}_t,t) - \frac{1}{2} \sigma \sigma^{\top} \nabla_x \hat{\phi}(X^{v,1}_t,t) ) \, \mathrm{d}t \rangle \\ &\qquad\quad \ + \hat{\phi}(X^{v,1}_0,0) - \int_0^T f(X^{v,1}_t,t) \, \mathrm{d}t - g(X^{v,1}_T) = 0. 
\end{split}    
\end{talign*}
Let $\mathbb{P}^{v,1}$ be the path probability measure of $X^{v,1}$.
$\mathbb{P}^{v,1}$ is absolutely continuous with respect to $\mathbb{P}$, which means that almost surely,
\begin{talign}
\begin{split} \label{eq:P_u_star_P_v}
    \frac{\mathrm{d}\mathbb{P}^{u^{\star}}}{\mathrm{d}\mathbb{P}^{\hat{\phi}}}(X^{v,1}) &= \frac{\mathrm{d}\mathbb{P}^{u^{\star}}}{\mathrm{d}\mathbb{P}}(X^{v,1}) \times \frac{\mathrm{d}\mathbb{P}}{\mathrm{d}\mathbb{P}^{\hat{\phi}}}(X^{v,1}) \\ &= \exp \big( \int_0^T \langle \nabla_x \hat{\phi}(X^{v,1}_t,t), \mathrm{d}X^{v,1}_t - ( b(X^{v,1}_t,t) - \frac{1}{2} \sigma \sigma^{\top} \nabla_x \hat{\phi}(X^{v,1}_t,t) ) \, \mathrm{d}t
    \rangle  \\ &\qquad\qquad\qquad + V(X^{v,1}_0,0) - \int_0^T f(X^{v,1}_t,t) \, \mathrm{d}t - g(X^{v,1}_t) \big)
    \\ &= \exp \big( V(X^{v,1}_0,0) - \hat{\phi}(X^{v,1}_0,0) \big),
\end{split}
\end{talign}
where the last equality holds almost surely by equation 
\eqref{eq:tilde_F_zero}.

Observe that since both $\mathbb{P}^{u^{\star}}$ and $\mathbb{P}^{\hat{\phi}}$ have marginal $p_0$ at time $t=0$, for almost every $x \in \mathbb{R}^d$ we have that
\begin{talign*}
\begin{split}
    1 = \frac{p_0(x)}{p_0(x)} 
    = \mathbb{E}[\frac{\mathrm{d}\mathbb{P}^{u^{\star}}}{\mathrm{d}\mathbb{P}^{\hat{\phi}}}(X^{v,1})|X^{v,1}_0 = x] = \exp \big( V(x,0) - \hat{\phi}(x,0) \big).
\end{split}    
\end{talign*}
which proves that $\hat{\phi}(x,0) = V(x,0)$ for almost every $x \in 
\mathrm{supp}(p_0)$, and by continuity of $\hat{\phi}$ and $V$, the equality holds for all $x \in \mathrm{supp}(p_0)$. 

Next, we prove that the control $-\sigma^{\top} \nabla \hat{\phi}$ is the optimal control. 
Plugging the equality $\hat{\phi}(x,0) = V(x,0)$ into equation \eqref{eq:P_u_star_P_v} implies that trajectories $X^{v,1}$ satisfy almost surely
\begin{talign*}
\begin{split}
    \frac{\mathrm{d}\mathbb{P}^{u^{\star}}}{\mathrm{d}\mathbb{P}^{\hat{\phi}}}(X^{v,1}) &= 1,
\end{split}
\end{talign*}
And since $\mathbb{P}^{u^{\star}}$ is absolutely continuous with respect to $\mathbb{P}^{v,1}$ because $X^{u^{\star}}$ and $X^{v,1}$ share the same diffusion coefficient and initial distribution, the trajectories $X^{u^{\star}}$ also satisfy almost surely that 
\begin{talign*}
\begin{split}
    \frac{\mathrm{d}\mathbb{P}^{u^{\star}}}{\mathrm{d}\mathbb{P}^{\hat{\phi}}}(X^{u^{\star}}) &= 1,
\end{split}
\end{talign*}
Applying Girsanov once more,
\begin{talign} \label{eq:girsanov_app_KL}
    \mathbb{E}\big[ \frac{1}{2} \int_0^T \| u^{\star}(X^{u^{\star}}_t,t) + \sigma^{\top} \nabla \hat{\phi}(X^{u^{\star}}_t,t) \|^2 \, \mathrm{d}t \big] = \mathrm{KL}(\mathbb{P}^{u^{\star}}||\mathbb{P}^{\hat{\phi}}) = \mathbb{E}\big[ \log \frac{\mathrm{d}\mathbb{P}^{u^{\star}}}{\mathrm{d}\mathbb{P}^{\hat{\phi}}}(X^{u^{\star}}) \big] = 0.
\end{talign}
This means that $u^{\star}(X^{u^{\star}}_t,t) = - \sigma^{\top} \nabla V(X^{u^{\star}}_t,t)$ is equal to $- \sigma^{\top} \nabla \hat{\phi}(X^{u^{\star}}_t,t)$ almost surely, and by (i) continuity of $\sigma$, $\nabla V$ and $\nabla \hat{\phi}$, and (ii) the fact that the support of the measure of $X^{u^{\star}}_t$ is all of $\mathbb{R}^d$ for $t \in (0,T]$, we have that they are equal for all $(x,t) \in \mathbb{R}^d \times [0,T]$. 

\paragraph{Step 5(ii): Concluding the proof for the general case}
When $\kappa \neq 1$, 
the probabilistic interpretation in terms of $\frac{\mathrm{d}\mathbb{P}^{u^{\star}}}{\mathrm{d}\mathbb{P}^{\hat{\phi}}}$ as in equation \eqref{eq:P_u_star_P_v} does not directly hold because the support of the probability measure of $X^{v,\kappa}$ does not overlap with the support of $\mathbb{P}^{u^{\star}}$ and/or $\mathbb{P}^{\hat{\phi}}$, since the corresponding processes have different quadratic variations as their diffusion coefficients are different. 

Instead, we will use an argument based on a certain set of trajectories $X^{v,\kappa}$ being dense in a certain space, and the continuity of a certain functional. 

Namely, define the sets 
\begin{talign*}
\begin{split}
    &C_{\mathrm{supp}(\tilde{p}_0)}([0,T];\mathbb{R}^d) = \{ w \in C([0,T];\mathbb{R}^d) \, | \, w(0) \in \mathrm{supp}(\tilde{p}_0) \}, \\
    &\mathcal{C}^{v,\gamma}_{\mathrm{supp}(\tilde{p}_0),F} = \{ X^{v,\gamma} \in C([0,T];\mathbb{R}^d) \, | \, X^{v,\gamma}_0 \in \mathrm{supp}(\tilde{p}_0), \, X^{v,\gamma} \text{ solves } \eqref{eq:X_v_kappa_SDE}, \, F(X^{v,\gamma}) = 0 \},
\end{split}
\end{talign*}
both equipped with the uniform norm $\|w\|_\infty=\sup_{t\in[0,T]}|w(t)|$.

Let $\mathbb{P}^{v,\kappa}$ be the law of the solution $X^{v,\kappa}$ to the SDE \eqref{eq:X_v_kappa_SDE} with initial distribution $\tilde{p}_0$. 
In Step 3 of this proof, we have shown that $\mathbb{P}^{v,\kappa}$-almost surely, $F(X^{v,\gamma}) = 0$, or equivalently, that $\mathcal{C}^{v,\gamma}_{\mathrm{supp}(\tilde{p}_0),F}$ is dense in $\mathrm{supp}(\mathbb{P}^{v,\kappa})$ in the uniform norm.

By \cref{thm:full_supp} and using the regularity, growth and uniform ellipticity assumptions 
on $v$, we have that $\mathrm{supp}(\mathbb{P}^{v,\kappa})$ is equal to $C_{\mathrm{supp}(\tilde{p}_0)}([0,T];\mathbb{R}^d)$ in the uniform norm. Hence, we conclude that $\mathcal{C}^{v,\gamma}_{\mathrm{supp}(\tilde{p}_0),F}$ is dense in $C_{\mathrm{supp}(\tilde{p}_0)}([0,T];\mathbb{R}^d)$ in the uniform norm.

By Lemma \ref{lem:exp_D_t_F}, the functional $F$ defined in \eqref{eq:def_F} over $C_{\mathrm{supp}(\tilde{p}_0)}([0,T];\mathbb{R}^d)$ is continuous in the uniform norm. 
Since $F$ takes value zero on $\mathcal{C}^{v,\gamma}_{\mathrm{supp}(\tilde{p}_0),F}$ by construction, and $\mathcal{C}^{v,\gamma}_{\mathrm{supp}(\tilde{p}_0),F}$ is dense in $C_{\mathrm{supp}(\tilde{p}_0)}([0,T];\mathbb{R}^d)$, we conclude that $F$ is the zero functional on $C_{\mathrm{supp}(\tilde{p}_0)}([0,T];\mathbb{R}^d)$.

In particular, since $C_{\mathrm{supp}(p_0)}([0,T];\mathbb{R}^d) \subseteq C_{\mathrm{supp}(\tilde{p}_0)}([0,T];\mathbb{R}^d)$ because $\mathrm{supp}(p_0) \subseteq \mathrm{supp}(\tilde{p}_0)$ by assumption, for all trajectories $X^{u^{\star}}$ we obtain that $F(X^{u^{\star}}) = 0$. By Itô's lemma,
\begin{talign*}
\begin{split}
    &\int_0^{T} \langle \nabla_x \hat{\phi}(X^{u^{\star}}_t,t), \mathrm{d}X^{u^{\star}}_t \rangle \\ &= \hat{\phi}(X^{u^{\star}}_T,T) - \hat{\phi}(X^{u^{\star}}_0,0) - \int_0^{T} \big( \partial_t \hat{\phi}(X^{u^{\star}}_t,t) + \frac{1}{2} \mathrm{Tr}\big( \sigma \sigma^{\top} \nabla^2_x \hat{\phi}(X^{u^{\star}}_t,t) \big) \big) \, \mathrm{d}t,
\end{split}
\end{talign*}
which allows us to reexpress $F(X^{u^{\star}})$ as follows:
\begin{talign*}
\begin{split}
    0 &= F(X^{u^{\star}}) = \int_0^{T} \langle \nabla_x \hat{\phi}(X^{u^{\star}}_t,t), \mathrm{d}X^{u^{\star}}_t - ( b(X^{u^{\star}}_t,t) - \frac{1}{2} \sigma \sigma^{\top} \nabla_x \hat{\phi}(X^{u^{\star}}_t,t) ) \, \mathrm{d}t \rangle \\ &\qquad\qquad\qquad + \hat{\phi}(X^{u^{\star}}_0,0) - \int_0^T f(X^{u^{\star}}_t,t) \, \mathrm{d}t - g(X^{u^{\star}}_T). 
\end{split}
\end{talign*}
As in Step 5(i), this equality now implies that 
\begin{talign*}
\frac{\mathrm{d}\mathbb{P}^{u^{\star}}}{\mathrm{d}\mathbb{P}^{\hat{\phi}}}(X^{u^{\star}}) = \frac{\mathrm{d}\mathbb{P}^{u^{\star}}}{\mathrm{d}\mathbb{P}}(X^{u^{\star}}) \times \frac{\mathrm{d}\mathbb{P}}{\mathrm{d}\mathbb{P}^{\hat{\phi}}}(X^{u^{\star}}) = \exp \big( V(X^{u^{\star}}_0,0) - \hat{\phi}(X^{u^{\star}}_0,0) \big),
\end{talign*}
and the remainder of the proof follows analogously to Step 5(i).
\end{proof}

\begin{lemma} \label{lem:exp_D_t_F}
    Let $F : C([0,T];\mathbb{R}^d) \to \mathbb{R}$ be the functional 
    \begin{talign*}
    \begin{split}
        F(X) &= - \int_0^{T} \big(\langle \nabla_x \hat{\phi}(X_t,t), b(X_t,t) - \frac{1}{2} \sigma \sigma^{\top} \nabla_x \hat{\phi}(X_t,t) \rangle + \partial_t \hat{\phi}(X_t,t) \big) \, \mathrm{d}t \\ &\quad + \hat{\phi}(X_T,T) - \int_0^T f(X_t,t) \, \mathrm{d}t - g(X_T) 
        - \frac{1}{2} \int_0^{T} \mathrm{Tr}\big( \sigma \sigma^{\top} \nabla^2_x \hat{\phi}(X_t,t) \big) \, \mathrm{d}t \\ &:= \int_0^{T} h_1(X_t,t) \, \mathrm{d}t + h_2(X_T),
    \end{split}
    \end{talign*}
    where $h_1 : \mathbb{R}^d \times [0,T] \to \mathbb{R}$ and $h_2 : \mathbb{R}^d \to \mathbb{R}$. $F$ is continuous in the uniform norm of $C([0,T];\mathbb{R}^d)$.
\end{lemma}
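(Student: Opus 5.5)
The functional $F(X)=\int_0^T h_1(X_t,t)\,\mathrm{d}t+h_2(X_T)$ is the sum of a time integral of a continuous function evaluated along the path and a terminal evaluation. I would therefore prove continuity of each piece separately in the uniform norm, using only the regularity coming from $\hat\phi\in\Phi$ and the coefficient assumptions of \Cref{thm:vm_finite}.

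\textbf{Step 1: continuity of $h_1$ and $h_2$.} Check that
\begin{talign*}
h_1(x,t) = -\langle \nabla_x\hat\phi, b-\tfrac12\sigma\sigma^\top\nabla_x\hat\phi\rangle - \partial_t\hat\phi - f - \tfrac12\mathrm{Tr}(\sigma\sigma^\top\nabla_x^2\hat\phi)
\end{talign*}
is jointly continuous on $\mathbb{R}^d\times[0,T]$. This follows because $\hat\phi\in C^{2+\alpha,1+\alpha/2}$, so $\partial_t\hat\phi$, $\nabla_x\hat\phi$ and $\nabla_x^2\hat\phi$ are continuous, and because $b$, $\sigma$ and $f$ are (Hölder) continuous. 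Likewise $h_2=\hat\phi(\cdot,T)-g$ is continuous, since $g\in C^{2+\alpha}$.

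\textbf{Step 2: localization to a compact set.} Fix $X\in C([0,T];\mathbb{R}^d)$ and a sequence $X^n\to X$ uniformly. Put $R=\|X\|_\infty+1$. For $n$ large, every path $X^n$ takes values in the closed ball $\bar B_R$. On the compact set $K=\bar B_R\times[0,T]$, $h_1$ is bounded and uniformly continuous. Hence
\begin{talign*}
\sup_{t\in[0,T]}|h_1(X^n_t,t)-h_1(X_t,t)| \le \omega_K(\|X^n-X\|_\infty)\to0,
\end{talign*}
where $\omega_K$ is the modulus of continuity of $h_1$ on $K$. Integrating over $[0,T]$ gives
\begin{talign*}
\Big|\int_0^T h_1(X^n_t,t)\,\mathrm{d}t-\int_0^T h_1(X_t,t)\,\mathrm{d}t\Big|\le T\,\omega_K(\|X^n-X\|_\infty)\to0.
\end{talign*}
Dominated convergence would give the same conclusion, with the bound $\sup_K|h_1|$ as dominating function. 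For the terminal term, $|X^n_T-X_T|\le\|X^n-X\|_\infty\to 0$, so continuity of $h_2$ gives $h_2(X^n_T)\to h_2(X_T)$.

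\textbf{Step 3: conclusion.} Adding the two limits shows $F(X^n)\to F(X)$. Since $C([0,T];\mathbb{R}^d)$ with the uniform norm is a metric space, sequential continuity is continuity.

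There is no real obstacle here. The only point needing care is that $h_1$ need not be uniformly continuous, or even bounded, on all of $\mathbb{R}^d$, because of the polynomial growth allowed in $\Phi$. This is exactly why the argument first confines all paths to a common compact ball before invoking uniform continuity.
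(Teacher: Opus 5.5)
Your proof is correct and follows the paper's argument. Both confine all paths $X^n$ (for $n$ large) and $X$ to a common compact ball, then use uniform continuity of $h_1$ and $h_2$ on that compact set to make $\int_0^T |h_1(X^n_t,t)-h_1(X_t,t)|\,\mathrm{d}t + |h_2(X^n_T)-h_2(X_T)|$ small; your Step~1 additionally verifies the joint continuity of $h_1$ and $h_2$ from the regularity of $\hat\phi$, $b$, $\sigma$, $f$, $g$, which the paper takes for granted.
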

\begin{proof}
    If $(X^n)_{n \geq 0} \subset C([0,T];\mathbb{R}^d)$ is such that $X^n \to X$ in the uniform norm, then 
    \begin{talign}
    \begin{split} \label{eq:H_n}
        |F(X)-F(X^n)| \leq \int_0^{T} \big| h_1(X_t,t) - h_1(X^n_t,t) \big| \, \mathrm{d}t + \big| h_2(X_T) - h_2(X^n_T)\big|.
    \end{split}
    \end{talign}
    Since there exists $N$ such that for all $n \geq N$, $\|X_n - X\|_{\infty} \leq 1$, we have that
    \begin{talign*}
        \sup_{n} \| X_n \|_{\infty} \leq \max\{ \max_{i \in [n-1]}\{ \| X_n \|_{\infty} \}, \| X \|_{\infty} + 1 \},
    \end{talign*}
    which means that the set $(X^n_t)_{n \geq 0, t \in [0,T]}$ is contained in a compact set of $\mathbb{R}^d$. Since $h_1$ and $h_2$ are continuous, they are uniformly continuous over compact sets, which implies that for any $\epsilon > 0$, the right-hand side of \eqref{eq:H_n} is smaller than $\epsilon$ for all $n$ large enough. This proves that $F$ is continuous.
\end{proof}

\begin{remark}[Value Matching loss function with separate architectures for the actor and critic] \label{rem:separate_architectures}
    Observe that the argument in \Cref{thm:vm_finite} also works if we parameterize the control $u$ as a separate neural network instead of obtaining it from the approximate value function as $- \sigma^{\top} \nabla V$. The resulting loss function is:
    \begin{talign}
    \begin{split} \label{eq:L_VM_finite_separate}
        \mathcal{L}_{\mathrm{VM}_{v,\kappa}}(\phi,u) &= 
        \mathbb{E} \bigl[ \frac{1}{2} \bigl( \int_0^{T} \langle (\sigma^{T})^{-1} u(X^{v,\kappa}_t,t), \frac{1}{\kappa} \mathrm{d}X^{v,\kappa}_t - ( b(X^{v,\kappa}_t,t) + \frac{1}{2} \sigma u(X^{v,\kappa}_t,t) ) \, \mathrm{d}t \rangle \\ &\qquad\quad + \frac{1}{\kappa} \phi(X^{v,\kappa}_0,0) + (1 - \frac{1}{\kappa}) \big( \phi(X^{v,\kappa}_T,T) - \int_0^{T} \partial_t \phi(X^{v,\kappa}_t,t) \, \mathrm{d}t \big) \\ &\qquad\quad - \int_0^T f(X^{v,\kappa}_t,t) \, \mathrm{d}t - g(X^{v,\kappa}_T)
        \bigr)^2 \bigr]
    \end{split}    
    \end{talign}
    Several deep learning works on SOC \citep{nusken2021solving,domingo2024stochastic,domingoadjoint2025} parameterize the control $u$. Thus, Value Matching can also be applied to their settings by considering an additional network $\phi$, or by adding a head to the $u$ architecture whose output is $\phi$. Similarly, the Value Matching loss function for hitting time problems can also be written with separate architectures for the actor and critic.
\end{remark}

\subsection{The Value Matching loss for hitting time problems ($T = \infty$): definition and optimality} \label{subsec:vm_loss}

\begin{theorem}[First-order optimality of the Value Matching (VM) loss with uncapped hitting times] \label{thm:vm_loss_hitting}
    Consider the following SOC problem, which as a horizon giving by the hitting time $\tau$ to the set $\mathcal{S}$:
    \begin{talign} \label{eq:hitting_time_SOC_1}
        &\min_{u \in \mathcal{U}} \mathbb{E}\big[ \int_0^{\tau} \big( \frac{1}{2}\|u(X^u_t)\|^2 + f(X^u_t) \big) \, \mathrm{d}t + g(X^u_T) \big], \\
        \mathrm{s.t.} \quad &\mathrm{d}X^u_t = \big( b(X^u_t) + \sigma(X^u_t) u(X^u_t) \big) \, \mathrm{d}t + \sigma(X^u_t) \, \mathrm{d}W_t, \quad X^u_0 \sim p_0.
        \label{eq:hitting_time_SOC_2}
    \end{talign}
    Given an arbitrary $\kappa \in (0,+\infty)$
    and an arbitrary drift $v \in C^1(\mathbb{R}^d; \mathbb{R}^d)$ such that $v(\cdot,t)$ is locally Lipschitz with linear growth, uniformly in $t\in[0,+\infty)$, consider trajectories $X^{v,\kappa}$ defined as
    \begin{talign} \label{eq:X_v_kappa}
        \mathrm{d}X^{v,\kappa}_t = v(X^{v,\kappa}_t) \, \mathrm{d}t + \sqrt{\kappa} \sigma(X^{v,\kappa}_t) \, \mathrm{d}W_t,
    \end{talign}
    with $X^{v,\kappa}_0$ distributed according to $\tilde{p}_0$ such that $p_0$ is absolutely continuous with respect to $\tilde{p}_0$.
    
    Consider the space of functions 
    \begin{talign*}
    \Phi = \{ \phi : \mathbb{R}^d \to \mathbb{R} \, | \, \phi \in C^{2+\alpha}(\mathbb{R}^d), \, \phi, \nabla \phi, \nabla^2 \phi \text{ grow at most polynomially} \},
    \end{talign*}
    where $C^{2+\alpha}(\mathbb{R}^d)$ is a Hölder space (\Cref{def:holder_space}). 
    Define the Value Matching (VM) loss as
    \begin{talign}
    \begin{split} \label{eq:VM_uncapped}
        \mathcal{L}_{\mathrm{VM}_{v,\kappa}}(\phi) &= 
        \mathbb{E} \bigl[ \bigl( \int_0^{\tau} \langle \nabla \phi(X^{v,\kappa}_t), \frac{1}{\kappa}\mathrm{d}X^{v,\kappa}_t - ( b(X^{v,\kappa}_t) - \frac{1}{2} \sigma \sigma^{\top} \nabla \phi(X^{v,\kappa}_t) ) \, \mathrm{d}t \rangle \\ &\qquad \ + \frac{1}{\kappa} \phi(X^{v,\kappa}_0) + (1 - \frac{1}{\kappa}) \phi(X^{v,\kappa}_\tau) - \int_0^{\tau} f(X^{v,\kappa}_t) \, \mathrm{d}t - g(X^{v,\kappa}_\tau)
        \bigr)^2 \bigr]
    \end{split}    
    \end{talign}
    Suppose that $\hat{\phi}$ is a first-order critical point of $\mathcal{L}_{\mathrm{VM}_{v,\kappa}}$.
    That is, for any perturbation $\eta \in \Phi$, we have that $\frac{\partial}{\partial \epsilon} \mathcal{L}_{\mathrm{VM}_{v,\kappa}}(\hat{\phi} + \epsilon \eta) \rvert_{\epsilon = 0} = 0$. Then, $\hat{\phi}$ satisfies $\hat{\phi}(x) = V(x)$ and $u^{\star}(x) = - \sigma^{\top}(x) \nabla \hat{\phi}(x)$ for all $x \in \mathbb{R}^d$, where $V$ and $u^{\star}$ are the value function and the optimal control of the problem \eqref{eq:hitting_time_SOC_1}-\eqref{eq:hitting_time_SOC_2}.
\end{theorem}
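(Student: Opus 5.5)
I will follow the same four-step template as the proof of \Cref{thm:vm_finite}, replacing the fixed horizon $T$ by the stopping time $\tau$ and the parabolic Dynkin theorem by its elliptic, exit-time counterpart (\Cref{thm:pde-dynkin}).

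\textbf{Step 1: directional derivative and change of measure.} First rewrite the loss \eqref{eq:VM_uncapped} by splitting $\frac{1}{\kappa}\mathrm{d}X = \mathrm{d}X + (\frac{1}{\kappa}-1)\mathrm{d}X$. Applying It\^o's lemma up to $\tau$ to $\int_0^\tau\langle\nabla\phi,\mathrm{d}X^{v,\kappa}\rangle$ turns the $(1-\frac{1}{\kappa})$ correction into $-\frac12(\frac1\kappa-1)\int_0^\tau \mathrm{Tr}(\kappa\sigma\sigma^\top\nabla^2\phi)\,\mathrm{d}t$, as in \eqref{eq:phi_ito}. Differentiating in $\epsilon$ then gives $\mathbb{E}[R(\phi)\cdot S(\eta)]$, where $R$ is the residual and $S$ is linear in $\eta$:
\[
S(\eta)=\int_0^\tau\langle\nabla\eta,\mathrm{d}X^{v,\kappa}-(b-\sigma\sigma^\top\nabla\phi)\,\mathrm{d}t\rangle+\eta(X_0)-\tfrac12(\tfrac1\kappa-1)\int_0^\tau\mathrm{Tr}(\kappa\sigma\sigma^\top\nabla^2\eta)\,\mathrm{d}t .
\]
Next apply Girsanov (\Cref{thm:girsanov}, random-horizon version) with the shift $\kappa^{-1/2}\sigma^{-1}(v-b+\sigma\sigma^\top\nabla\phi)$. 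This re-expresses everything along $\bar X^\kappa$, which solves $\mathrm{d}\bar X^\kappa=(b-\sigma\sigma^\top\nabla\phi)\,\mathrm{d}t+\sqrt\kappa\sigma\,\mathrm{d}W$ stopped at its own hitting time, weighted by an importance weight $\mathrm{IW}$. In these coordinates $S(\eta)=\eta(\bar X_0)+\int_0^\tau\langle\sqrt\kappa\sigma^\top\nabla\eta,\mathrm{d}W\rangle-\frac12(\frac1\kappa-1)\int_0^\tau\mathrm{Tr}(\kappa\sigma\sigma^\top\nabla^2\eta)\,\mathrm{d}t$. The residual becomes $F(\bar X^\kappa)=\int_0^\tau h_1(\bar X_t)\,\mathrm{d}t+h_2(\bar X_\tau)$, where
\[
h_1=-\langle\nabla\hat\phi,b-\tfrac12\sigma\sigma^\top\nabla\hat\phi\rangle-\tfrac12\mathrm{Tr}(\sigma\sigma^\top\nabla^2\hat\phi)-f, \qquad h_2=\hat\phi-g\ \text{on }\partial\mathcal S .
\]

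\textbf{Step 2: elliptic Dynkin representation and choice of perturbation.} Apply \Cref{thm:pde-dynkin} with $\sigma\gets\sqrt\kappa\sigma$, $\chi=\frac1\kappa-1$, drift $b-\sigma\sigma^\top\nabla\hat\phi$, and $\mathcal D=\mathcal S^c$. Let $\eta^\star$ solve the Dirichlet–Poisson problem $(b-\sigma\sigma^\top\nabla\hat\phi)\cdot\nabla\eta^\star+\frac12\mathrm{Tr}(\sigma\sigma^\top\nabla^2\eta^\star)+h_1=0$ in $\mathcal S^c$, with $\eta^\star=h_2$ on $\partial\mathcal S$. Existence and regularity come from \Cref{rem:regularity_DP}, using a $C^{2+\alpha}$ boundary, uniform ellipticity, $\tau<\infty$ a.s.\ and polynomial growth. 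The theorem gives $F=S(\eta^\star)$. Then:
\begin{itemize}
\item Extend $\eta^\star$ to an element of $\Phi$ on $\mathbb R^d$. Only its values on $\overline{\mathcal S^c}$ matter along stopped paths, and a $C^{2+\alpha}$ extension with polynomial growth exists by standard extension theorems.
\item Take $\eta=-\eta^\star$. The first-order condition becomes $0=-\mathbb E[F(\bar X^\kappa)^2\,\mathrm{IW}]=-\mathbb E[F(X^{v,\kappa})^2]$.
\item Hence $F(X^{v,\kappa})=0$ almost surely.
\end{itemize}

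\textbf{Step 3: identification with $V$.} When $\kappa=1$ and $p_0\ll\tilde p_0$, combine \eqref{eq:dP_dP_hatphi} with the path-integral characterization \eqref{eq:path_integral_characterization} up to $\tau$. Since $F=0$, this gives $\frac{\mathrm d\mathbb P^{u^\star}}{\mathrm d\mathbb P^{\hat\phi}}=\exp(V(X_0)-\hat\phi(X_0))$. Conditioning on $X_0$ and using that both measures have marginal $p_0$ yields $\hat\phi=V$ on $\mathrm{supp}(p_0)$. The KL identity \eqref{eq:girsanov_app_KL} then gives $\nabla\hat\phi=\nabla V$ along optimal paths, and full support (\Cref{thm:full_supp}) plus continuity extends this everywhere. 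For $\kappa\neq1$ the path laws are mutually singular, so I will argue as in Step 5(ii) of \Cref{thm:vm_finite}: $F$ vanishes on a set of paths that is dense by \Cref{thm:full_supp}, hence on all continuous paths, hence on $X^{u^\star}$, and we reduce to the $\kappa=1$ computation.

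\textbf{Main obstacle.} The hardest step is this density/continuity argument, because the functional now involves the exit time $\tau$, and $w\mapsto\tau(w)$ is not continuous in the uniform norm (paths can graze $\partial\mathcal S$). I would first restrict to paths that cross $\partial\mathcal S$ transversally, on which $\tau$ is continuous; by the support theorem and regularity of the boundary these form a set of full measure and are dense. Lemma \ref{lem:exp_D_t_F} is then adapted on that set, using local uniform continuity of $h_1,h_2$. A simpler alternative avoids path functionals altogether: localize to finite horizons $T$, apply the already-proved \Cref{thm:vm_finite}-style reasoning pathwise on $\{\tau\le T\}$, and let $T\to\infty$ using $\tau<\infty$ a.s. A secondary technical issue is justifying Girsanov up to the unbounded time $\tau$; I would handle it with $\tau\wedge n$ localization and uniform integrability, as in the proof of \Cref{thm:pde-dynkin}.
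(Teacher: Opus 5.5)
Your Steps 1--2 and the $\kappa=1$ identification follow the paper's proof: isolate the $(1-\frac1\kappa)$ terms, apply It\^o and Girsanov to $\bar X^\kappa$, use the elliptic Dynkin representation with $\chi=\frac1\kappa-1$, take $\eta=-\eta^\star$, and then use $\frac{\mathrm d\mathbb P^{u^\star}}{\mathrm d\mathbb P^{\hat\phi}}=e^{V(X_0)-\hat\phi(X_0)}$. You are also more careful than the paper about extending $\eta^\star$ off $\overline{\mathcal S^c}$ to an admissible perturbation; this needs Schauder regularity up to $\partial\mathcal S$, not just the interior regularity of \Cref{rem:regularity_DP}.

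The real difference is the $\kappa\neq1$ transfer step.
\begin{itemize}
\item The paper handles the discontinuity of $w\mapsto\tau(w)$ by slicing. It disintegrates along $(X_0,X_\tau)=(x,y)$ and works on slices $\mathcal C^{\mathrm{stop}}_{x,y,\infty,s}$ of fixed exit time $s$. There the functional is $\int_0^s h_1+h_2(y)$, \Cref{lem:exp_D_t_F} applies verbatim, and it invokes \Cref{thm:full_supp_bridge} slice by slice.
\item You keep the unconditioned law and restrict to the set $G$ of paths at which $\tau$ is continuous.
\end{itemize}
Your route is valid and arguably more robust. The paper's slicing implicitly conditions on the null event $\{\tau=s\}$. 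It also needs full support for bridges constrained to avoid $\mathcal S$ before $s$, which \Cref{thm:full_supp_bridge} (a fixed-time bridge of the unstopped diffusion) does not literally provide. The price of your route is one extra ingredient, boundary regularity.

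To make it rigorous:
\begin{itemize}
\item[(i)] Work on freely running paths in $C([0,\infty);\mathbb R^d)$, continuing $X^{u^\star}$ past $\tau$ by any uniformly elliptic diffusion. This leaves its law on $[0,\tau]$ unchanged, which is all $F$ sees. On stopped paths $F$ is not continuous, because approximants can linger near $X_\tau$ outside $\mathcal S$ for an arbitrarily long time.
\item[(ii)] Replace ``transversal'', which is meaningless for diffusion paths, by: $w\in G$ iff $w$ visits $\mathrm{int}\,\mathcal S$ in every interval $(\tau(w),\tau(w)+\epsilon)$. Closedness of $\mathcal S$ gives lower semicontinuity of $\tau$, and this condition gives upper semicontinuity, so $F$ is continuous at each $w\in G$ in the locally uniform topology.
\item[(iii)] $G$ has full measure under both laws, by the strong Markov property at $\tau$ and regularity of each point of the $C^{2+\alpha}$ boundary for $\mathrm{int}\,\mathcal S$. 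Full measure under $\mathbb P^{u^\star}$ is exactly what lets you conclude $F(X^{u^\star})=0$ almost surely. Density of $\{F=0\}\cap G$ then comes from \Cref{thm:full_supp} with starting points in $\mathrm{supp}(\tilde p_0)$.
\end{itemize}

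Your ``simpler alternative'' of localizing to finite horizons does not help: on $\{\tau\le T\}$ the functional still depends on $\tau$, so drop it.

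Finally, as you yourself note, the loss only sees $\hat\phi$ on $\overline{\mathcal S^c}$, so $\hat\phi=V$ can only be concluded there. Going from $\nabla\hat\phi=\nabla V$ plus equality on $\mathrm{supp}(p_0)$ to equality of values requires integrating along paths inside $\mathcal S^c$, a step your sketch leaves implicit.
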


The proof uses the structure of the proof of \Cref{thm:vm_finite}. First, we rewrite the loss function to isolate the dependence on $\kappa$:
\begin{talign*}
\begin{split}
    \mathcal{L}_{\mathrm{VM}_{v,\kappa}}(\phi) &= 
    \mathbb{E} \bigl[ \bigl( \int_0^{\tau} \langle \nabla \phi(X^{v,\kappa}_t), \mathrm{d}X^{v,\kappa}_t - ( b(X^{v,\kappa}_t) - \frac{1}{2} \sigma \sigma^{\top} \nabla \phi(X^{v,\kappa}_t) ) \, \mathrm{d}t \rangle \\ &\qquad \ + \phi(X^{v,\kappa}_0) - \int_0^{\tau} f(X^{v,\kappa}_t) \, \mathrm{d}t - g(X^{v,\kappa}_\tau)
    \\ &\qquad \ + (1 - \frac{1}{\kappa})\big( \phi(X^{v,\kappa}_\tau) - \phi(X^{v,\kappa}_0) - \int_0^{\tau} \langle \nabla \phi(X^{v,\kappa}_t), \mathrm{d}X^{v,\kappa}_t \rangle \big)
    \bigr)^2 \bigr]
\end{split}    
\end{talign*}

\paragraph{Step 1: Computing the directional derivative of $\mathcal{L}_{\mathrm{VM}_{v,\kappa}}$ along a perturbation $\eta$}
Observe that for an arbitrary perturbation $\eta$,
\begin{talign}
\begin{split}
    &\mathcal{L}_{\mathrm{VM}_{v,\kappa}}(\phi + \epsilon \eta) 
    \\ &= 
    \mathbb{E} \bigl[ \frac{1}{2} \bigl( \int_0^{\tau} \langle \nabla (\phi+\epsilon \eta)(X^{v,\kappa}_t), \mathrm{d}X^{v,\kappa}_t - ( b(X^{v,\kappa}_t) - \frac{1}{2} \sigma \sigma^{\top} \nabla (\phi+\epsilon \eta)(X^{v,\kappa}_t) ) \, \mathrm{d}t \rangle \\ &\qquad\quad + (\phi+\epsilon \eta)(X^{v,\kappa}_0) - \int_0^{\tau} f(X^{v,\kappa}_t) \, \mathrm{d}t - g(X^{v,\kappa}_{\tau}) 
    \\ &\qquad\quad + \! (1 \! - \! \frac{1}{\kappa}) 
    \big( (\phi \! + \! \epsilon \eta)(X^{v,\kappa}_{\tau}) - (\phi \! + \! \epsilon \eta)(X^{v,\kappa}_0) 
    - \int_0^{\tau} \langle \nabla (\phi \! + \! \epsilon \eta)(X^{v,\kappa}_t), \mathrm{d}X^{v,\kappa}_t \rangle 
    \big)
    \bigr)^2 \bigr],
\end{split} \\
\begin{split} \label{eq:partial_epsilon_hat_L_stopping}
    &\implies \frac{\partial}{\partial \epsilon} \mathcal{L}_{\mathrm{VM}_{v,\kappa}}(\phi + \epsilon \eta) \rvert_{\epsilon = 0} 
    \\ &\quad = \mathbb{E} \big[ \bigl( \int_0^{\tau} \langle \nabla \phi(X^{v,\kappa}_t), \mathrm{d}X^{v,\kappa}_t - ( b(X^{v,\kappa}_t) - \frac{1}{2} \sigma \sigma^{\top} \nabla \phi(X^{v,\kappa}_t) ) \, \mathrm{d}t \rangle \\ &\qquad\quad \ + \phi(X^{v,\kappa}_0) - \int_0^{\tau} f(X^{v,\kappa}_t) \, \mathrm{d}t - g(X^{v,\kappa}_{\tau}) 
    \\ &\qquad\quad \ + (1 - \frac{1}{\kappa}) \big( \phi(X^{v,\kappa}_{\tau}) - \phi(X^{v,\kappa}_0) 
    - \int_0^{\tau} \langle \nabla \phi(X^{v,\kappa}_t), \mathrm{d}X^{v,\kappa}_t \rangle 
    \big)
    \bigr) \\ &\qquad\quad \times 
    \bigl( \int_0^{\tau} 
    \langle \nabla \eta(X^{v,\kappa}_t), \mathrm{d}X^{v,\kappa}_t - ( b(X^{v,\kappa}_t) - \sigma \sigma^{\top} \nabla \phi(X^{v,\kappa}_t) ) \, \mathrm{d}t \rangle 
    + \eta(X^{v,\kappa}_0) \\ &\qquad\qquad + (1 - \frac{1}{\kappa})\big( \eta(X^{v,\kappa}_\tau) - \eta(X^{v,\kappa}_0) - \int_0^{\tau} \langle \nabla \eta(X^{v,\kappa}_t), \mathrm{d}X^{v,\kappa}_t \rangle \big) \bigr) \big].
\end{split}
\end{talign}
Applying Itô's lemma on $\phi$ and $\eta$ yields:
\begin{talign}
\begin{split} \label{eq:phi_eta_ito}
    \int_0^{\tau} \langle \nabla_x \phi(X^{v,\kappa}_t), \mathrm{d}X^{v,\kappa}_t \rangle &= \phi(X^{v,\kappa}_{\tau}) - \phi(X^{v,\kappa}_0) - \frac{\kappa}{2} \int_0^{\tau} \mathrm{Tr}\big( \sigma \sigma^{\top} \nabla^2_x \phi(X^{v,\kappa}_t) \big) \, \mathrm{d}t, 
    \\ \int_0^{\tau} \langle \nabla_x \eta(X^{v,\kappa}_t), \mathrm{d}X^{v,\kappa}_t \rangle &= \eta(X^{v,\kappa}_{\tau}) - \eta(X^{v,\kappa}_0) - \frac{\kappa}{2} \int_0^{\tau} \mathrm{Tr}\big( \sigma \sigma^{\top} \nabla^2_x \eta(X^{v,\kappa}_t) \big) \, \mathrm{d}t.
\end{split}
\end{talign}
When we plug this into the right-hand side of \eqref{eq:partial_epsilon_hat_L_stopping}, we obtain:
\begin{talign*}
\begin{split}
    &\frac{\partial}{\partial \epsilon} \mathcal{L}_{\mathrm{VM}_{v,\kappa}}(\phi + \epsilon \eta) \rvert_{\epsilon = 0} 
    \\ &\quad = \mathbb{E} \big[ \bigl( - \int_0^{\tau} \big(\langle \nabla \phi(X^{v,\kappa}_t), b(X^{v,\kappa}_t) - \frac{1}{2} \sigma \sigma^{\top} \nabla \phi(X^{v,\kappa}_t) \rangle \big) \, \mathrm{d}t \\ &\qquad\quad \ + \phi(X^{v,\kappa}_{\tau}) - \int_0^{\tau} f(X^{v,\kappa}_t) \, \mathrm{d}t - g(X^{v,\kappa}_{\tau}) 
    - \frac{1}{2} \int_0^{\tau} \mathrm{Tr}\big( \sigma \sigma^{\top} \nabla^2_x \phi(X^{v,\kappa}_t) \big) \, \mathrm{d}t
    \bigr) \\ &\qquad\quad \times 
    \bigl( \int_0^{\tau} 
    \langle \nabla \eta(X^{v,\kappa}_t), \mathrm{d}X^{v,\kappa}_t - ( b(X^{v,\kappa}_t) - \sigma \sigma^{\top} \nabla \phi(X^{v,\kappa}_t) ) \, \mathrm{d}t \rangle 
    + \eta(X^{v,\kappa}_0) \\ &\qquad\qquad \ - \frac{1}{2}(\frac{1}{\kappa} - 1) \int_0^{\tau} \mathrm{Tr}\big( \kappa \sigma \sigma^{\top} \nabla^2_x \eta(X^{v,\kappa}_t) \big) \, \mathrm{d}t \bigr) \big].
\end{split}    
\end{talign*}
Applying the Girsanov change of variables from $X^{v,\kappa}$ to $\bar{X}^{\kappa}$, the analog of \eqref{eq:L_VM_F} is:
\begin{talign} 
\begin{split} \label{eq:L_VM_F_uncapped}
    &\frac{\partial}{\partial \epsilon} \mathcal{L}_{\mathrm{VM}_{v,\kappa}}(\phi + \epsilon \eta) \rvert_{\epsilon = 0}
    \\ &= \mathbb{E} \big[ 
    \bigl( 
    - \int_0^{\tau} \big(\langle \nabla \phi(\bar{X}^{\kappa}_t), b(\bar{X}^{\kappa}_t) - \frac{1}{2} \sigma \sigma^{\top} \nabla \phi(\bar{X}^{\kappa}_t) \rangle \big) \, \mathrm{d}t \\ &\qquad\quad \ + \phi(\bar{X}^{\kappa}_{\tau}) - \int_0^{\tau} f(\bar{X}^{\kappa}_t) \, \mathrm{d}t - g(\bar{X}^{\kappa}_{\tau}) 
    - \frac{1}{2} \int_0^{\tau} \mathrm{Tr}\big( \sigma \sigma^{\top} \nabla^2_x \phi(\bar{X}^{\kappa}_t) \big) \, \mathrm{d}t
    \bigr) \\ &\qquad \times \bigl( \int_0^{\tau} \langle \sqrt{\kappa} \sigma^{\top} \nabla \eta(\bar{X}^{\kappa}_t), \mathrm{d}W_t \rangle \! + \! \eta(\bar{X}^{\kappa}_0) \! - \! \frac{1}{2}(\frac{1}{\kappa} \! - \! 1) \int_0^{\tau} \mathrm{Tr}\big( \kappa \sigma \sigma^{\top} \nabla^2_x \eta(\bar{X}^{\kappa}_t) \big) \, \mathrm{d}t \bigr) \mathrm{IW}(W,0,\tau)
    \big],
\end{split}
\end{talign}
where $\bar{X}^{\kappa}$ is a solution of $\mathrm{d}\bar{X}^{\kappa}_t \! = \! \big(b(\bar{X}^{\kappa}_s) \! - \! 
\sigma \sigma^{\top} \nabla \phi(\bar{X}^{\kappa}_s) \big) \, \mathrm{d}t \! + \! \sqrt{\kappa} \sigma(\bar{X}^{\kappa}_t) \, \mathrm{d}W_t$. 

\paragraph{Step 2: Applying the Dynkin representation theorem} The analog of the functional $F$
defined in \eqref{eq:def_F} is
\begin{talign}
\begin{split} \label{eq:def_F_2}
    F(\bar{X}^{\kappa}) &= - \int_0^{\tau} \langle \nabla \phi(\bar{X}^{\kappa}_t), b(\bar{X}^{\kappa}_t) - \frac{1}{2} \sigma \sigma^{\top} \nabla \phi(\bar{X}^{\kappa}_t) \rangle \, \mathrm{d}t \\ &\quad + \phi(\bar{X}^{\kappa}_{\tau}) - \int_0^{\tau} f(\bar{X}^{\kappa}_t) \, \mathrm{d}t - g(\bar{X}^{\kappa}_{\tau}) 
    - \frac{1}{2} \int_0^{\tau} \mathrm{Tr}\big( \sigma \sigma^{\top} \nabla^2_x \phi(\bar{X}^{\kappa}_t) \big) \, \mathrm{d}t \\ &=: \int_0^{\tau} h_1(\bar{X}^{\kappa}_t) \, \mathrm{d}t + h_2(\bar{X}^{\kappa}_{\tau}).
\end{split}
\end{talign}
We can apply the Dynkin representation theorem (\Cref{thm:pde-dynkin}) to the functional $F(\bar{X}^{\kappa})$. That is, setting $\sigma \gets \sqrt{\kappa} \sigma$, $\chi \gets \frac{1}{\kappa}-1 \in (-1,+\infty)$, $X \gets \bar{X}^{\kappa}$, we can write 
\begin{talign} \label{eq:F_eta_star_hitting}
    F(\bar{X}^{\kappa}) = \eta^{\star}(\bar{X}^{\kappa}_0) + \int_0^{\tau} \langle \kappa \sigma^{\top} \nabla \eta^{\star}(\bar{X}^{\kappa}_t), \mathrm{d}W_t \rangle - \frac{1}{2}(\frac{1}{\kappa} \! - \! 1) \int_0^{\tau} \mathrm{Tr}\big( \kappa \sigma \sigma^{\top} \nabla^2_x \eta^{\star}(\hat{X}^{\kappa}_t) \big) \, \mathrm{d}t,
\end{talign}
where $\eta^{\star}$ is the solution to the Dirichlet--Poisson problem
\begin{talign} 
\begin{split} \label{eq:Dirichlet-Poisson_2}
&\mathcal{L} \eta^{\star}(x)+h_1(x)=0\quad\text{for }x\in \mathcal{S}^c,
\qquad
\eta^{\star}(x)=h_2(x)\quad\text{for }x\in\partial \mathcal{S}, \\
&\mathcal{L} \eta^{\star}= (b  - 
\sigma \sigma^{\top} \nabla \phi)\!\cdot\!\nabla \eta^{\star}+\tfrac{1}{2}\mathrm{Tr}(\sigma \sigma^{\top}\nabla^2 \eta^{\star}).
\end{split}
\end{talign}
A solution $\eta^{\star} \in C^{2+\alpha}(\mathcal{S}^c)\cap C(\overline{\mathcal{S}^c})$ to this problem such that $\mathbb{E}\!\left[\int_0^{T}\!\!\|\sigma(\bar{X}^{\kappa}_t)^\top\nabla \eta^{\star}(\bar{X}^{\kappa}_t)\|^2\,dt\right]<\infty$ is guaranteed to exist by \Cref{rem:regularity_PDP}. 
The conditions stated in this remark are met because
and because there exists $\alpha \in (0,1)$ such that the following facts hold:
\begin{itemize}[left=3pt]
\item $b, \sigma \in \mathcal{C}^{\alpha}(\overline{\mathcal{S}^c})$ (in fact they are $\mathcal{C}^{\alpha}(\mathbb{R}^d)$) and $\sigma$ is uniformly elliptic.
\item The boundary $\partial \mathcal{S}$ is $C^{2+\alpha}$.
\item $h_1\in C^{\alpha}(\overline{\mathcal{S}^c})$ and $h_2\in C^{2+\alpha}(\partial{\mathcal{S}})$, because for all $\phi \in \Phi$, $\phi \in C^{2+\alpha}(\mathbb{R}^d)$, and because $f \in C^{\alpha}(\mathbb{R}^d)$, $g \in C^{2+\alpha}(\mathbb{R}^d)$, and $b, \sigma \in \mathcal{C}^{\alpha}(\overline{\mathcal{S}^c})$.
\item $h_1$ and $h_2$ have at most polynomial growth because $\phi$, $\nabla \phi$, $\nabla^2 \phi$, $f$ and $g$ all have at most polynomial growth.
\end{itemize}

Hence, if we plug equations \eqref{eq:def_F_2}-\eqref{eq:F_eta_star_hitting} into \eqref{eq:L_VM_F_uncapped}, we obtain that
\begin{talign*} 
\begin{split} 
    &\frac{\partial}{\partial \epsilon} \mathcal{L}_{\mathrm{VM}_{v}}(\phi + \epsilon \eta) \rvert_{\epsilon = 0} \\ &= \mathbb{E} \big[ \big( \eta^{\star}(\bar{X}^{\kappa}_0) + \int_0^{\tau} \langle \kappa \sigma^{\top} \nabla \eta^{\star}(\bar{X}^{\kappa}_t), \mathrm{d}W_t \rangle - \frac{1}{2}(\frac{1}{\kappa} \! - \! 1) \int_0^{\tau} \mathrm{Tr}\big( \kappa \sigma \sigma^{\top} \nabla^2_x \eta^{\star}(X^{v,\kappa}_t) \big) \, \mathrm{d}t \big) \\ &\qquad \times \bigl( \eta(\bar{X}^{\kappa}_0) + \int_0^{\tau} \langle \kappa \sigma^{\top} \nabla \eta(\bar{X}^{\kappa}_t), \mathrm{d}W_t \rangle - \frac{1}{2}(\frac{1}{\kappa} \! - \! 1) \int_0^{\tau} \mathrm{Tr}\big( \kappa \sigma \sigma^{\top} \nabla^2_x \eta(X^{v,\kappa}_t) \big) \, \mathrm{d}t \bigr) \mathrm{IW}(W,0,T) \big]
\end{split}
\end{talign*}

\paragraph{Step 3: Choosing the right perturbation $\eta$ to obtain an almost sure equality}
And as in the proof of \Cref{thm:vm_finite}, if we pick $\eta(x) = - \eta^{\star}(x)$, we obtain that \begin{talign*}
\begin{split}
    \frac{\partial}{\partial \epsilon} \mathcal{L}_{\mathrm{VM}_{v,\kappa}}(\phi + \epsilon \eta) \rvert_{\epsilon = 0} = - \mathbb{E}[F(\bar{X}^{\kappa})^2 \mathrm{IW}(W,0,T)] = - \mathbb{E}[F(X^{v,\kappa})^2],
\end{split}    
\end{talign*}
In this case, it is convenient to further define the diffusion bridge process $X^{v,\kappa,x,y}$ as the solution of the SDE \eqref{eq:X_v_kappa} conditioned on the fact that $X^{v,\kappa,x,y}_0 = x$ and $X^{v,\kappa,x,y}_{\tau} = y$. Let $\mathbb{P}^{v,\kappa,x,y}$ be the path probability measure of the diffusion bridge process $X^{v,\kappa,x,y}$. If we let $p_{v,\kappa}$ be the joint distribution of $(X^{v,\kappa}_0,X^{v,\kappa}_\tau)$ observe that through the tower property of conditional expectation, we can rewrite 
\begin{talign*}
    \frac{\partial}{\partial \epsilon} \mathcal{L}_{\mathrm{VM}_{v,\kappa}}(\hat{\phi} + \epsilon \eta) \rvert_{\epsilon = 0} = - \mathbb{E}[F(X^{v,\kappa})^2] = - \int_{(\mathbb{R}^d)^2} \mathbb{E}[F(X^{v,\kappa,x,y})^2] \, \mathrm{d}p_{v,\kappa}(x,y),
\end{talign*}
Thus, if $\hat{\phi}$ satisfies that $\frac{\partial}{\partial \epsilon} \mathcal{L}_{\mathrm{VM}_{v,\kappa}}(\hat{\phi} + \epsilon \eta) \rvert_{\epsilon = 0} = 0$ for all perturbations $\eta$, we need that when $\phi = \hat{\phi}$, for $p_{v,\kappa}$-almost every $(x,y)$, $\mathbb{E}[F(X^{v,\kappa,x,y})^2] = 0$. Hence, for $p_{v,\kappa}$-almost every $(x,y)$, we must have that for $\mathbb{P}^{v,\kappa,x,y}$-almost every path $X^{v,\kappa,x,y}$, $F(X^{v,\kappa,x,y}) = 0$.

\paragraph{Step 4: Writing down the Radon-Nikodym derivative $\frac{\mathrm{d}\mathbb{P}^{u^{\star}}}{\mathrm{d}\mathbb{P}^{\hat{\phi}}}$} The analog of \eqref{eq:P_u_star_P_v_Y} is as follows: for any process $Y$ whose probability measure is absolutely continuous with respect to $\mathbb{P}$ and/or $\mathbb{P}^{u^{\star}}$,
\begin{talign}
\begin{split} \label{eq:P_u_star_P_v_Y_uncapped}
    &\frac{\mathrm{d}\mathbb{P}^{u^{\star}}}{\mathrm{d}\mathbb{P}^{\hat{\phi}}}(Y) = \frac{\mathrm{d}\mathbb{P}^{u^{\star}}}{\mathrm{d}\mathbb{P}}(Y) \times \frac{\mathrm{d}\mathbb{P}}{\mathrm{d}\mathbb{P}^{\hat{\phi}}}(Y) \\ &= \exp \big( \int_0^{\tau} \langle \nabla \hat{\phi}(Y_t), \mathrm{d}Y_t - ( b(Y_t) - \frac{1}{2} \sigma \sigma^{\top} \nabla \hat{\phi}(Y_t) ) \, \mathrm{d}t
    \rangle  
    + V(Y_0) - \int_0^{\tau} f(Y_t) \, \mathrm{d}t - g(Y_{\tau}) \big).
\end{split}
\end{talign}
\paragraph{Step 5(i): Concluding the proof for the case $\kappa = 1$, $p_0$ and $\tilde{p}_0$ mutually absolutely continuous} The argument is the same as in Step 5(i) of the proof of \Cref{thm:vm_finite}: we apply Itô's lemma and use the resulting equation to conclude that $\mathbb{P}^{v,1}$-almost surely,
\begin{talign}
\begin{split} \label{eq:dP_u_star_dP_hatphi_uncapped}
\frac{\mathrm{d}\mathbb{P}^{u^{\star}}}{\mathrm{d}\mathbb{P}^{\hat{\phi}}}(X^{v,1}) &= \frac{\mathrm{d}\mathbb{P}^{u^{\star}}}{\mathrm{d}\mathbb{P}}(X^{v,1}) \times \frac{\mathrm{d}\mathbb{P}}{\mathrm{d}\mathbb{P}^{\hat{\phi}}}(X^{v,1}) \\ &= \exp \big( \int_0^{\tau} \langle \nabla \hat{\phi}(X^{v,1}_t), \mathrm{d}X^{v,1}_t - ( b(X^{v,1}_t) - \frac{1}{2} \sigma \sigma^{\top} \nabla \hat{\phi}(X^{v,1}_t) ) \, \mathrm{d}t
    \rangle  \\ &\qquad\qquad\qquad + V(X^{v,1}_0) - \int_0^{\tau} f(X^{v,1}_t) \, \mathrm{d}t - g(X^{v,1}_{\tau}) \big)
    \\ &= \exp \big( V(X^{v,1}_0) - \hat{\phi}(X^{v,1}_0) \big).
\end{split}
\end{talign}
Then we use that $\mathbb{P}^{u^{\star}}$ and $\mathbb{P}^{\hat{\phi}}$ both have marginal $p_0$ at time $t=0$ to conclude that $V(x) = \hat{\phi}(x)$ for all $x \in \mathrm{supp}(p_0)$, which then implies that $\frac{\mathrm{d}\mathbb{P}^{u^{\star}}}{\mathrm{d}\mathbb{P}^{\hat{\phi}}}(X^{v,1}) = 1$, and by absolute continuity of $\mathbb{P}^{u^{\star}}$ with respect to the probability measure of $X^{v,1}$, $\frac{\mathrm{d}\mathbb{P}^{u^{\star}}}{\mathrm{d}\mathbb{P}^{\hat{\phi}}}(X^{u^{\star}}) = 1$. And applying the Girsanov theorem in analogy with \eqref{eq:girsanov_app_KL}, and using that $u^* = - \sigma^{\top} \nabla V$ and $- \sigma^{\top} \nabla \hat{\phi}$ are continuous, we obtain that $u^*(x) = - \sigma^{\top} \nabla V(x) = - \sigma^{\top} \nabla \hat{\phi}(x)$ for all $x \in \mathbb{R}^d$, which by the non-degeneracy of $\sigma$ implies that $\nabla V(x) = \nabla \hat{\phi}(x)$ for all $x \in \mathbb{R}^d$. Thus, using the fundamental theorem of calculus, we have that for any $x \in \mathbb{R}^d$ and any $y \in \mathrm{supp}(p_0)$,
\begin{talign*}
    V(x) - \hat{\phi}(x) = V(y) - \hat{\phi}(y) + \int_0^1 \langle \nabla V(t x + (1-t)y) - \nabla \hat{\phi}(t x + (1-t)y), x - y \rangle \, \mathrm{d}t = 0,
\end{talign*}
which concludes the proof.

\paragraph{Step 5(ii): Concluding the proof for the general case}  The argument is similar to the one in Step 5(ii) of the proof of \Cref{thm:vm_finite}, with some changes.

For any $s \geq 0$ and any $x, y \in \mathbb{R}^d$, define
\begin{talign*}
\begin{split}
    &\mathcal{C}^{\mathrm{stop}}_{x,y,\infty,s}
    \! = \! \{g\in C([0,\infty);\mathbb{R}^d) | g(t)=g(\tau_g) = y\ \text{for all }t\ge \tau_g, g(0) = x, \tau_g = s \}, \\
    &\mathcal{C}^{\mathrm{stop},v,\gamma}_{x,y,F} = \{ X^{v,\gamma} \in 
    \mathcal{C}^{\mathrm{stop}}_{x,y,\infty, s} \, | \, X^{v,\gamma} \text{ solves } \eqref{eq:X_v_kappa} \text{ on } [0,\tau], \, F(X^{v,\gamma}) = 0 \},
\end{split}    
\end{talign*}
where $\tau_g$ denotes the hitting time of the function $g$ on the set $\mathcal{S}$.
Observe that when restricted on $\mathcal{C}^{\mathrm{stop}}_{x,y,\infty, s}$, $F$ takes the form
\begin{talign*}
\begin{split}
    F(X) &= - \int_0^{s} \langle \nabla \hat{\phi}(X_t), b(X_t) - \frac{1}{2} \sigma \sigma^{\top} \nabla \hat{\phi}(X_t) \rangle \, \mathrm{d}t \\ &\quad + \hat{\phi}(y) - \int_0^{s} f(X_t) \, \mathrm{d}t - g(y) 
    - \frac{1}{2} \int_0^{s} \mathrm{Tr}\big( \sigma \sigma^{\top} \nabla^2_x \hat{\phi}(X_t) \big) \, \mathrm{d}t \\ &= \int_0^{s} h_1(X_t) \, \mathrm{d}t + h_2(y),
\end{split}    
\end{talign*}
which means that the restriction of $F$ to $\mathcal{C}^{\mathrm{stop}}_{x,y,\infty, s}$ is continuous in the uniform norm over $[0,s]$, by the argument of Lemma \ref{lem:exp_D_t_F}.

Recall the notation $\mathbb{P}^{v,\kappa,x,y}$ and $p_{v,\kappa}$ defined in Step 3 of this proof.
In Step 3, we obtained that for $p_{v,\kappa}$-almost every $(x,y) \in (\mathbb{R}^d)^2$, we have that for $\mathbb{P}^{v,\kappa,x,y}$-almost every path $X^{v,\kappa,x,y}$, $F(X^{v,\kappa,x,y}) = 0$. Equivalently, we have that for $p_{v,\kappa}$-almost every $(x,y) \in (\mathbb{R}^d)^2$, $\mathcal{C}^{\mathrm{stop},v,\gamma}_{x,y,F}$ is dense in $\mathrm{supp}(\mathbb{P}^{v,\kappa,x,y})$ in the uniform norm over $[0,s]$. 

By \Cref{thm:full_supp_bridge} and using the regularity, growth and uniform ellipticity assumptions 
on $v$, we have that $\mathrm{supp}(\mathbb{P}^{v,\kappa,x,y})$ is equal to $\mathcal{C}^{\mathrm{stop}}_{x,y,\infty,s}$ in the uniform norm. Hence, $\mathcal{C}^{\mathrm{stop},v,\gamma}_{x,y,F}$ is dense in $\mathcal{C}^{\mathrm{stop}}_{x,y,\infty,s}$ in the uniform norm.

Since $F$ takes value zero on $\mathcal{C}^{\mathrm{stop},v,\gamma}_{x,y,F}$ by construction, and $\mathcal{C}^{\mathrm{stop},v,\gamma}_{x,y,F}$ is dense in $\mathcal{C}^{\mathrm{stop}}_{x,y,\infty,s}$ for a.e. $(x,y) \in (\mathbb{R}^d)^2$, and $F$ is continuous when restricted to $\mathcal{C}^{\mathrm{stop}}_{x,y,\infty,s}$, we conclude that $F$ is the zero functional on $\mathcal{C}^{\mathrm{stop}}_{x,y,\infty,s}$ for a.e. $(x,y) \in (\mathbb{R}^d)^2$. 

Now, observe that if we define 
\begin{talign*} 
    \mathcal{C}^{\mathrm{stop}}_{\mathrm{supp}(\tilde{p}_0),\infty}
    \! = \! \{g\in C([0,\infty);\mathbb{R}^d) \, | \, g(t)=g(\tau_g)\ \text{for all }t\ge \tau_g, g(0) \in \mathrm{supp}(\tilde{p}_0) \}, 
\end{talign*}
we have that 
\begin{talign*}
    \mathcal{C}^{\mathrm{stop}}_{\mathrm{supp}(\tilde{p}_0),\infty} = \bigcup_{s \in (0,\infty), x \in \mathrm{supp}(\tilde{p}_0), y \in \partial \mathcal{S}} \mathcal{C}^{\mathrm{stop}}_{x,y,\infty,s}
\end{talign*}
Hence, $F(X) = 0$ for almost every $X \in \mathcal{C}^{\mathrm{stop}}_{\mathrm{supp}(\tilde{p}_0),\infty}$. Since $p_0$ is absolutely continuous with respect to $\tilde{p}_0$, we obtain that almost surely,
\begin{talign*}
    F(X^{u^{\star}}) = 0.
\end{talign*}
Then, by Itô's lemma,
\begin{talign*}
\begin{split}
    \int_0^{\tau} \langle \nabla \hat{\phi}(X^{u^{\star}}_t), \mathrm{d}X^{u^{\star}}_t \rangle = \hat{\phi}(X^{u^{\star}}_{\tau}) - \hat{\phi}(X^{u^{\star}}_0) - \int_0^{\tau} \frac{1}{2} \mathrm{Tr}\big( \sigma \sigma^{\top} \nabla^2_x \hat{\phi}(X^{u^{\star}}_t) \big) \, \mathrm{d}t,
\end{split}
\end{talign*}
which allows us to reexpress $F(X^{u^{\star}})$ almost surely as follows:
\begin{talign*}
\begin{split}
    0 &= F(X^{u^{\star}}) = \int_0^{\tau} \langle \nabla \hat{\phi}(X^{u^{\star}}_t), \mathrm{d}X^{u^{\star}}_t - ( b(X^{u^{\star}}_t) - \frac{1}{2} \sigma \sigma^{\top} \nabla \hat{\phi}(X^{u^{\star}}_t) ) \, \mathrm{d}t \rangle \\ &\qquad\qquad\qquad + \hat{\phi}(X^{u^{\star}}_0) - \int_0^{\tau} f(X^{u^{\star}}_t) \, \mathrm{d}t - g(X^{u^{\star}}_{\tau}). 
\end{split}
\end{talign*}
As in Step 5(i), this equality now implies that 
\begin{talign*}
\frac{\mathrm{d}\mathbb{P}^{u^{\star}}}{\mathrm{d}\mathbb{P}^{\hat{\phi}}}(X^{u^{\star}}) = \frac{\mathrm{d}\mathbb{P}^{u^{\star}}}{\mathrm{d}\mathbb{P}}(X^{u^{\star}}) \times \frac{\mathrm{d}\mathbb{P}}{\mathrm{d}\mathbb{P}^{\hat{\phi}}}(X^{u^{\star}}) = \exp \big( V(X^{u^{\star}}_0) - \hat{\phi}(X^{u^{\star}}_0) \big),
\end{talign*}
and the remainder of the proof follows analogously to Step 5(i).

\subsection{Proof of Theorem \ref{thm:capped_vm_loss}: The Value Matching loss for hitting time problems} \label{subsec:pf_VM_hitting}
We prove the result for the SOC problem including a state cost $f$, which means that the VM loss function reads:
\begin{talign}
\begin{split} \label{eq:VM_loss_f}
    \mathcal{L}_{\mathrm{VM}_{v,\kappa}}(\phi) &= 
    \mathbb{E} \bigl[ \frac{1}{2} \bigl( \int_0^{\tau(T)} \langle \nabla \phi(X^{v,\kappa}_t), \frac{1}{\kappa} \mathrm{d}X^{v,\kappa}_t - ( b(X^{v,\kappa}_t) - \frac{1}{2} \sigma \sigma^{\top} \nabla \phi(X^{v,\kappa}_t) ) \, \mathrm{d}t \rangle \\ &\qquad + \! \frac{1}{\kappa} \phi(X^{v,\kappa}_0) \! - \! \int_0^{\tau(T)} f(X^{v,\kappa}_t) \, \mathrm{d}t 
    \! + \! (\mathrm{1}_{\mathcal{S}}(X^{v,\kappa}_{\tau(T)}) \! - \! \frac{1}{\kappa})
    \phi(X^{v,\kappa}_{\tau(T)})
    \! - \! \mathrm{1}_{\mathcal{S}}(X^{v,\kappa}_{\tau(T)}) g(X^{v,\kappa}_{\tau(T)})
    \bigr)^2 \bigr].
\end{split}
\end{talign}
We follow the same steps as in the proof of \Cref{thm:vm_loss_hitting}. In this case, Steps 2, 4 and 5 present additional complexity. We start by rewriting the loss function to flesh out the dependence on $\kappa$:
\begin{talign*}
\begin{split}
    \mathcal{L}_{\mathrm{VM}_{v,\kappa}}(\phi) &= 
    \mathbb{E} \bigl[ \bigl( \int_0^{\tau(T)} \langle \nabla_x \phi(X^{v,\kappa}_t), \mathrm{d}X^{v,\kappa}_t - ( b(X^{v,\kappa}_t) - \frac{1}{2} \sigma \sigma^{\top} \nabla_x \phi(X^{v,\kappa}_t) ) \, \mathrm{d}t \rangle \\ &\qquad \ + \phi(X^{v,\kappa}_0) - \int_0^{\tau(T)} f(X^{v,\kappa}_t) \, \mathrm{d}t 
    - \mathrm{1}_{\mathcal{S}^c}(X^{v,\kappa}_{\tau(T)}) \phi(X^{v,\kappa}_{\tau(T)}) - \mathrm{1}_{\mathcal{S}}(X^{v,\kappa}_{\tau(T)}) g(X^{v,\kappa}_{\tau(T)})
    \\ &\qquad \ + (1 - \frac{1}{\kappa}) \big( \phi(X^{v,\kappa}_{\tau(T)}) - \phi(X^{v,\kappa}_0) 
    - \int_0^{\tau(T)} \langle \nabla_x \phi(X^{v,\kappa}_t), \mathrm{d}X^{v,\kappa}_t \rangle 
    \big)
    \bigr)^2 \bigr]
\end{split}
\end{talign*}

\paragraph{Step 1: Computing the directional derivative of $\mathcal{L}_{\mathrm{VM}_{v,\kappa}}$ along a perturbation $\eta$} For an arbitrary perturbation $\eta \in \Phi$, we can write
\begin{talign}
\begin{split}
    &\mathcal{L}_{\mathrm{VM}_{v,\kappa}}(\phi) \\ &= \! 
    \mathbb{E} \bigl[ \frac{1}{2} \bigl( \int_0^{\tau(T)} \langle \nabla (\phi+\epsilon \eta)(X^{v,\kappa}_t), \mathrm{d}X^{v,\kappa}_t - ( b(X^{v,\kappa}_t) - \frac{1}{2} \sigma \sigma^{\top} \nabla (\phi+\epsilon \eta)(X^{v,\kappa}_t) ) \, \mathrm{d}t \rangle \\ &\qquad + \! (\phi+\epsilon \eta)(X^{v,\kappa}_0) \! - \! \int_0^{\tau(T)} f(X^{v,\kappa}_t) \, \mathrm{d}t 
    \! - \! \mathrm{1}_{\mathcal{S}^c}(X^{v,\kappa}_{\tau(T)}) (\phi\!+ \!\epsilon \eta)(X^{v,\kappa}_{\tau(T)}) \! - \! \mathrm{1}_{\mathcal{S}}(X^{v,\kappa}_{\tau(T)}) g(X^{v,\kappa}_{\tau(T)})
    \\ &\qquad + \! (1 \! - \! \frac{1}{\kappa}) \big( (\phi \! + \! \epsilon \eta)(X^{v,\kappa}_{\tau(T)}) \! - \! (\phi \! + \! \epsilon \eta)(X^{v,\kappa}_0) 
    \! - \! \int_0^{\tau(T)} \langle \nabla (\phi \! + \! \epsilon \eta)(X^{v,\kappa}_t), \mathrm{d}X^{v,\kappa}_t \rangle \big) \bigr)^2 \bigr]
\end{split}
\end{talign}
Hence,
\begin{talign}
\begin{split} \label{eq:grad_VM_hitting_time}
    &\frac{\partial}{\partial \epsilon} \mathcal{L}_{\mathrm{VM}_{v,\kappa}}(\phi + \epsilon \eta) \rvert_{\epsilon = 0} \\ &\qquad = \mathbb{E} \bigl[ \bigl( \int_0^{\tau(T)} \langle \nabla \phi(X^{v,\kappa}_t), \mathrm{d}X^{v,\kappa}_t - ( b(X^{v,\kappa}_t) - \frac{1}{2} \sigma \sigma^{\top} \nabla \phi(X^{v,\kappa}_t) ) \, \mathrm{d}t \rangle \\ &\qquad\qquad \ + \phi(X^{v,\kappa}_0) - \int_0^{\tau(T)} f(X^{v,\kappa}_t) \, \mathrm{d}t 
    - \mathrm{1}_{\mathcal{S}^c}(X^{v,\kappa}_{\tau(T)}) \phi(X^{v,\kappa}_{\tau(T)}) - \mathrm{1}_{\mathcal{S}}(X^{v,\kappa}_{\tau(T)}) g(X^{v,\kappa}_{\tau(T)})
    \\ &\qquad\qquad \ + (1 - \frac{1}{\kappa}) \big( \phi(X^{v,\kappa}_{\tau(T)}) - \phi(X^{v,\kappa}_0) 
    - \int_0^{\tau(T)} \langle \nabla \phi(X^{v,\kappa}_t), \mathrm{d}X^{v,\kappa}_t \rangle \big) \bigr) \\ &\qquad\qquad \times \big( \int_0^{\tau(T)} 
    \langle \nabla \eta(X^{v,\kappa}_t), \mathrm{d}X^{v,\kappa}_t - ( b(X^{v,\kappa}_t) - \sigma \sigma^{\top} \nabla \phi(X^{v,\kappa}_t) ) \, \mathrm{d}t \rangle 
    \\ &\qquad\qquad\quad \ + \eta(X^{v,\kappa}_0) - \mathrm{1}_{\mathcal{S}^c}(X^{v,\kappa}_{\tau(T)}) \eta(X^{v,\kappa}_{\tau(T)}) \\ &\qquad\qquad\quad \ + (1 - \frac{1}{\kappa}) \big( \eta(X^{v,\kappa}_{\tau(T)}) - \eta(X^{v,\kappa}_0) 
    - \int_0^{\tau(T)} \langle \nabla \eta(X^{v,\kappa}_t), \mathrm{d}X^{v,\kappa}_t \rangle \big)\big) \bigr]
\end{split}
\end{talign}
Applying Itô's lemma to $\phi$ and $\eta$ yields
\begin{talign}
\begin{split} \label{eq:phi_eta_ito_stopping}
    &\int_0^{\tau(T)} \langle \nabla_x \phi(X^{v,\kappa}_t), \mathrm{d}X^{v,\kappa}_t \rangle = \phi(X^{v,\kappa}_{\tau(T)}) - \phi(X^{v,\kappa}_0) - \frac{\kappa}{2} \int_0^{\tau(T)} \mathrm{Tr}\big( \sigma \sigma^{\top} \nabla^2_x \phi(X^{v,\kappa}_t) \big) \, \mathrm{d}t, \\
    &\int_0^{\tau(T)} \langle \nabla_x \eta(X^{v,\kappa}_t), \mathrm{d}X^{v,\kappa}_t \rangle = \eta(X^{v,\kappa}_{\tau(T)}) - \eta(X^{v,\kappa}_0) - \frac{\kappa}{2} \int_0^{\tau(T)} \mathrm{Tr}\big( \sigma \sigma^{\top} \nabla^2_x \eta(X^{v,\kappa}_t) \big) \, \mathrm{d}t.
\end{split}
\end{talign}
Plugging this into the right-hand side of \eqref{eq:grad_VM_hitting_time} yields
\begin{talign}
\begin{split} \label{eq:grad_VM_hitting_time_2}
    &\frac{\partial}{\partial \epsilon} \mathcal{L}_{\mathrm{VM}_{v,\kappa}}(\phi + \epsilon \eta) \rvert_{\epsilon = 0} \\ &= \mathbb{E} \bigl[ 
    \bigl( - \int_0^{\tau(T)} \langle \nabla \phi(X^{v,\kappa}_t), b(X^{v,\kappa}_t) - \frac{1}{2} \sigma \sigma^{\top} \nabla \phi(X^{v,\kappa}_t) \rangle \, \mathrm{d}t - \frac{1}{2} \int_0^{\tau(T)} \mathrm{Tr}\big( \sigma \sigma^{\top} \nabla^2_x \phi(X^{v,\kappa}_t) \big) \, \mathrm{d}t \\ &\qquad\quad - \int_0^{\tau(T)} f(X^{v,\kappa}_t) \, \mathrm{d}t 
    + \mathrm{1}_{\mathcal{S}}(X^{v,\kappa}_{\tau(T)}) \big( \phi(X^{v,\kappa}_{\tau(T)}) - g(X^{v,\kappa}_{\tau(T)}) \big) \bigr)
    \\ &\qquad \times \big( \int_0^{\tau(T)} 
    \langle \nabla \eta(X^{v,\kappa}_t), \mathrm{d}X^{v,\kappa}_t - ( b(X^{v,\kappa}_t) - \sigma \sigma^{\top} \nabla \phi(X^{v,\kappa}_t) ) \, \mathrm{d}t \rangle 
    \\ &\qquad\qquad + \eta(X^{v,\kappa}_0) - \mathrm{1}_{\mathcal{S}^c}(X^{v,\kappa}_{\tau(T)}) \eta(X^{v,\kappa}_{\tau(T)}) + \frac{1}{2}(1 - \frac{1}{\kappa}) \int_0^{\tau(T)} \mathrm{Tr}\big( \kappa \sigma \sigma^{\top} \nabla^2_x \eta(X^{v,\kappa}_t) \big) \, \mathrm{d}t \big) \bigr]
\end{split}    
\end{talign}
And after applying a change of variables through the Girsanov theorem in analogy with \eqref{eq:L_VM_F}, we obtain that:
\begin{talign} 
\begin{split} \label{eq:L_VM_F_capped}
    &\frac{\partial}{\partial \epsilon} \mathcal{L}_{\mathrm{VM}_{v,\kappa}}(\phi + \epsilon \eta) \rvert_{\epsilon = 0}
    \\ &= \mathbb{E} \big[ 
    \bigl( - \int_0^{\tau(T)} \langle \nabla \phi(\bar{X}^{\kappa}_t), b(\bar{X}^{\kappa}_t) - \frac{1}{2} \sigma \sigma^{\top} \nabla \phi(\bar{X}^{\kappa}_t) \rangle \, \mathrm{d}t - \frac{1}{2} \int_0^{\tau(T)} \mathrm{Tr}\big( \sigma \sigma^{\top} \nabla^2_x \phi(\bar{X}^{\kappa}_t) \big) \, \mathrm{d}t \\ &\qquad\quad - \int_0^{\tau(T)} f(\bar{X}^{\kappa}_t) \, \mathrm{d}t 
    + \mathrm{1}_{\mathcal{S}}(\bar{X}^{\kappa}_{\tau(T)}) \big( \phi(\bar{X}^{\kappa}_{\tau(T)}) - g(\bar{X}^{\kappa}_{\tau(T)}) \big) \bigr)
    \\ &\qquad \times \bigl( \eta(\bar{X}^{\kappa}_0) + \int_0^{\tau(T)} \langle \sqrt{\kappa} \sigma^{\top} \nabla \eta(\bar{X}^{\kappa}_t), \mathrm{d}W_t \rangle \! + \! \frac{1}{2}(1 \! - \! \frac{1}{\kappa}) \int_0^{\tau(T)} \mathrm{Tr}\big( \kappa \sigma \sigma^{\top} \nabla^2_x \eta(X^{v,\kappa}_t) \big) \, \mathrm{d}t \\ &\qquad\qquad - \mathrm{1}_{\mathcal{S}^c}(\bar{X}^{\kappa}_{\tau(T)}) \eta(\bar{X}^{\kappa}_{\tau(T)}) \bigr) \times \mathrm{IW}(W,0,\tau)
    \big],
\end{split}
\end{talign}
where $\bar{X}^{\kappa}$ is a solution of $\mathrm{d}\bar{X}^{\kappa}_t \! = \! \big(b(\bar{X}^{\kappa}_s) \! - \! \frac{1}{2} \sigma \sigma^{\top} \nabla \phi(\bar{X}^{\kappa}_s) \big) \, \mathrm{d}t \! + \! \sqrt{\kappa} \sigma(\bar{X}^{\kappa}_t,t) \, \mathrm{d}W_t$.
We define the truncated functional $F^{(T)}$ as the first factor in the expectation in equation \eqref{eq:grad_VM_hitting_time_2}:
\begin{talign}
\begin{split} \label{eq:F_T_def}
    F^{(T)}(\bar{X}^{\kappa}) &= - \int_0^{\tau(T)} \langle \nabla \phi(\bar{X}^{\kappa}_t), b(\bar{X}^{\kappa}_t) - \frac{1}{2} \sigma \sigma^{\top} \nabla \phi(\bar{X}^{\kappa}_t) \rangle \, \mathrm{d}t - \frac{1}{2} \int_0^{\tau(T)} \mathrm{Tr}\big( \sigma \sigma^{\top} \nabla^2_x \phi(\bar{X}^{\kappa}_t) \big) \, \mathrm{d}t \\ &\quad - \int_0^{\tau(T)} f(\bar{X}^{\kappa}_t) \, \mathrm{d}t 
    + \mathrm{1}_{\mathcal{S}}(\bar{X}^{\kappa}_{\tau(T)}) \big( \phi(\bar{X}^{\kappa}_{\tau(T)}) - g(\bar{X}^{\kappa}_{\tau(T)}) \big).
\end{split}    
\end{talign}
\paragraph{Step 2(i): Applying the Dynkin representation theorem to $F$} This step is exactly as Step 2 of \Cref{thm:vm_loss_hitting}. $F(\bar{X}^{\kappa})$ is defined as in equation \eqref{eq:F_eta_star} and $\eta^{\star}$
is defined as the solution of the Dirichlet--Poisson problem \eqref{eq:Dirichlet-Poisson_2}, and through the same argument we conclude that 
\begin{talign} \label{eq:F_eta_star_2}
F(\bar{X}^{\kappa}) = \eta^{\star}(\bar{X}^{\kappa}_0) + \int_0^{\tau} \langle \sqrt{\kappa} \sigma^{\top} \nabla \eta^{\star}(\bar{X}^{\kappa}_t), \mathrm{d}W_t \rangle 
- \frac{1}{2}(\frac{1}{\kappa} \! - \! 1) \int_0^{\tau} \mathrm{Tr}\big( \kappa \sigma \sigma^{\top} \nabla^2_x \eta^{\star}(\bar{X}^{\kappa}_t) \big) \, \mathrm{d}t.
\end{talign}

\paragraph{Step 2(ii): Obtaining a representation of the truncated functional $F^{(T)}$ in terms of $\eta^{\star}$} And observe that if $(\mathcal{F}_t)_{t \geq 0}$ is the filtration associated to the Brownian motion $B$,
\begin{talign}
\begin{split} \label{eq:cond_exp_F}
    &\mathbb{E}[F(\bar{X}^{\kappa})|\mathcal{F}_T] \\ &= \mathbb{E} \big[ - \int_0^{\tau} \langle \nabla \phi(\bar{X}^{\kappa}_t), b(\bar{X}^{\kappa}_t) - \frac{1}{2} \sigma \sigma^{\top} \nabla \phi(\bar{X}^{\kappa}_t) \rangle \, \mathrm{d}t \\ &\quad + \phi(\bar{X}^{\kappa}_{\tau}) - \int_0^{\tau} f(\bar{X}^{\kappa}_t) \, \mathrm{d}t - g(\bar{X}^{\kappa}_{\tau}) 
    - \frac{1}{2} \int_0^{\tau} \mathrm{Tr}\big( \sigma \sigma^{\top} \nabla^2_x \phi(\bar{X}^{\kappa}_t) \big) \, \mathrm{d}t \big| \mathcal{F}_T \big] \\ &= 
    - \int_0^{\tau(T)} \langle \nabla \phi(\bar{X}^{\kappa}_t), b(\bar{X}^{\kappa}_t) - \frac{1}{2} \sigma \sigma^{\top} \nabla \phi(\bar{X}^{\kappa}_t) \rangle \, \mathrm{d}t \\ &\quad - \int_0^{\tau(T)} f(\bar{X}^{\kappa}_t) \, \mathrm{d}t + \mathrm{1}_{\mathcal{S}}(\bar{X}^{\kappa}_{\tau(T)}) \big( \phi(\bar{X}^{\kappa}_{\tau(T)}) - g(\bar{X}^{\kappa}_{\tau(T)}) \big) 
    - \frac{1}{2} \int_0^{\tau(T)} \mathrm{Tr}\big( \sigma \sigma^{\top} \nabla^2_x \phi(\bar{X}^{\kappa}_t) \big) \, \mathrm{d}t \\ &\quad + \mathrm{1}_{\mathcal{S}^c}(\bar{X}^{\kappa}_{\tau(T)}) \mathbb{E}[ - \int_T^{\tau} \langle \nabla \phi(\bar{X}^{\kappa}_t), b(\bar{X}^{\kappa}_t) - \frac{1}{2} \sigma \sigma^{\top} \nabla \phi(\bar{X}^{\kappa}_t) \rangle \, \mathrm{d}t \\ &\qquad\qquad\qquad\quad \ + \! \phi(\bar{X}^{\kappa}_{\tau}) \! - \! \int_T^{\tau} f(\bar{X}^{\kappa}_t) \, \mathrm{d}t \! - \! g(\bar{X}^{\kappa}_{\tau}) 
    \! - \! \frac{1}{2} \int_T^{\tau} \mathrm{Tr}\big( \sigma \sigma^{\top} \nabla^2_x \phi(\bar{X}^{\kappa}_t) \big) \, \mathrm{d}t | \mathcal{F}_T ].
\end{split}
\end{talign}
Next, we have that
\begin{talign}
\begin{split} \label{eq:cond_exp_eta_star}
    &\mathbb{E}[ - \int_T^{\tau} \langle \nabla \phi(\bar{X}^{\kappa}_t), b(\bar{X}^{\kappa}_t) - \frac{1}{2} \sigma \sigma^{\top} \nabla \phi(\bar{X}^{\kappa}_t) \rangle \, \mathrm{d}t \\ &\quad + \! \phi(\bar{X}^{\kappa}_{\tau}) \! - \! \int_T^{\tau} f(\bar{X}^{\kappa}_t) \, \mathrm{d}t \! - \! g(\bar{X}^{\kappa}_{\tau}) 
    \! - \! \frac{1}{2} \int_T^{\tau} \mathrm{Tr}\big( \sigma \sigma^{\top} \nabla^2_x \phi(\bar{X}^{\kappa}_t) \big) \, \mathrm{d}t | \mathcal{F}_T ] \\ &= \mathbb{E}[ - \int_T^{\tau} \langle \nabla \phi(\bar{X}^{\kappa}_t), b(\bar{X}^{\kappa}_t) - \frac{1}{2} \sigma \sigma^{\top} \nabla \phi(\bar{X}^{\kappa}_t) \rangle \, \mathrm{d}t \\ &\qquad + \! \phi(\bar{X}^{\kappa}_{\tau}) \! - \! \int_T^{\tau} f(\bar{X}^{\kappa}_t) \, \mathrm{d}t \! - \! g(\bar{X}^{\kappa}_{\tau}) 
    \! - \! \frac{1}{2} \int_T^{\tau} \mathrm{Tr}\big( \sigma \sigma^{\top} \nabla^2_x \phi(\bar{X}^{\kappa}_t) \big) \, \mathrm{d}t | \bar{X}^{\kappa}_{T} ] \\ &= \eta^{\star}(\bar{X}^{\kappa}_{T}) 
    - \mathbb{E} \big[ \frac{1}{2}(\frac{1}{\kappa} \! - \! 1) \int_0^{\tau} \mathrm{Tr}\big( \kappa \sigma \sigma^{\top} \nabla^2_x \eta^{\star}(\tilde{X}^{\kappa}_t) \big) \, \mathrm{d}t  | \tilde{X}^{\kappa}_{0} = \bar{X}^{\kappa}_{T} \big],
\end{split}    
\end{talign}
where the process $\tilde{X}^{\kappa}$ satisfies the same SDE as $\bar{X}^{\kappa}$.
In this equation, the first equality holds by the Markov property of SDEs, and the second equality holds by the following equation, which is true for all $x \in \mathcal{S}^c$ by the time symmetry of the problem, and the relation in \eqref{eq:F_eta_star_2}:
\begin{talign*}
\begin{split}
&\mathbb{E}[ - \int_T^{\tau} \langle \nabla \phi(\bar{X}^{\kappa}_t), b(\bar{X}^{\kappa}_t) - \frac{1}{2} \sigma \sigma^{\top} \nabla \phi(\bar{X}^{\kappa}_t) \rangle \, \mathrm{d}t \\ &\qquad + \! \phi(\bar{X}^{\kappa}_{\tau}) \! - \! \int_T^{\tau} f(\bar{X}^{\kappa}_t) \, \mathrm{d}t \! - \! g(\bar{X}^{\kappa}_{\tau}) 
\! - \! \frac{1}{2} \int_T^{\tau} \mathrm{Tr}\big( \sigma \sigma^{\top} \nabla^2_x \phi(\bar{X}^{\kappa}_t) \big) \, \mathrm{d}t | \bar{X}^{\kappa}_{T} = x] \\
&= \mathbb{E}[ - \int_0^{\tau} \langle \nabla \phi(\bar{X}^{\kappa}_t), b(\bar{X}^{\kappa}_t) - \frac{1}{2} \sigma \sigma^{\top} \nabla \phi(\bar{X}^{\kappa}_t) \rangle \, \mathrm{d}t \\ &\qquad + \! \phi(\bar{X}^{\kappa}_{\tau}) \! - \! \int_0^{\tau} f(\bar{X}^{\kappa}_t) \, \mathrm{d}t \! - \! g(\bar{X}^{\kappa}_{\tau}) 
\! - \! \frac{1}{2} \int_0^{\tau} \mathrm{Tr}\big( \sigma \sigma^{\top} \nabla^2_x \phi(\bar{X}^{\kappa}_t) \big) \, \mathrm{d}t | \bar{X}^{\kappa}_{0} = x] \\ &= \mathbb{E}[F(\bar{X}^{\kappa})\,|\,\bar{X}^{\kappa}_0=x] = \eta^{\star}(x) - \mathbb{E} \big[ \frac{1}{2}(\frac{1}{\kappa} \! - \! 1) \int_0^{\tau} \mathrm{Tr}\big( \kappa \sigma \sigma^{\top} \nabla^2_x \eta^{\star}(\bar{X}^{\kappa}_t) \big) \, \mathrm{d}t  | \bar{X}^{\kappa}_{0} = x \big].
\end{split}
\end{talign*}
Plugging the definition of $F^{(T)}(\bar{X}^{\kappa})$ (equation \eqref{eq:F_T_def}) and equation \eqref{eq:cond_exp_eta_star} into the right-hand side of \eqref{eq:cond_exp_F} yields:
\begin{talign} 
\begin{split} \label{eq:exp_F_T}
    - \mathbb{E} \big[ \frac{1}{2}(\frac{1}{\kappa} \! - \! 1) \int_0^{\tau} \mathrm{Tr}\big( \kappa \sigma \sigma^{\top} \nabla^2_x \eta^{\star}(\bar{X}^{\kappa}_t) \big) \, \mathrm{d}t  | \bar{X}^{\kappa}_{0} = x \big] \big).
\end{split}
\end{talign}
From equation \eqref{eq:F_eta_star_2}, we also obtain that
\begin{talign}
\begin{split} \label{eq:E_F_F_T_2}
    \mathbb{E}[F(\bar{X}^{\kappa})|\mathcal{F}_T] &= \eta^{\star}(\bar{X}^{\kappa}_0) + \int_0^{\tau(T)} \langle \sqrt{\kappa} \sigma^{\top} \nabla \eta^{\star}(\bar{X}^{\kappa}_t), \mathrm{d}W_t \rangle 
    - \frac{1}{2} \big( \frac{1}{\kappa} - 1 \big) \int_0^{\tau(T)} \mathrm{Tr}\big( \kappa \sigma \sigma^{\top} \nabla^2_x \eta^{\star}(\bar{X}^{\kappa}_t) \big) \, \mathrm{d}t \\ &\quad 
    -\mathrm{1}_{\mathcal{S}^c}(\bar{X}^{\kappa}_{\tau(T)}) \frac{1}{2} \big( \frac{1}{\kappa} - 1 \big) \mathbb{E}[ \int_{T}^{\tau} \mathrm{Tr}\big( \kappa \sigma \sigma^{\top} \nabla^2_x \eta^{\star}(\bar{X}^{\kappa}_t) \big) \, \mathrm{d}t ].
\end{split}    
\end{talign}
Equations \eqref{eq:exp_F_T} and \eqref{eq:E_F_F_T_2} together yield
\begin{talign}
\begin{split} \label{eq:F_T_eta_star}
    F^{(T)}(\bar{X}^{\kappa}) &= \eta^{\star}(\bar{X}^{\kappa}_0) + \int_0^{\tau(T)} \langle \sqrt{\kappa} \sigma^{\top} \nabla \eta^{\star}(\bar{X}^{\kappa}_t), \mathrm{d}W_t \rangle \\ &\quad 
    - \frac{1}{2} \big( \frac{1}{\kappa} - 1 \big) \int_0^{\tau(T)} \mathrm{Tr}\big( \kappa \sigma \sigma^{\top} \nabla^2_x \eta^{\star}(\bar{X}^{\kappa}_t) \big) \, \mathrm{d}t - \mathrm{1}_{\mathcal{S}^c}(\bar{X}^{\kappa}_{\tau(T)}) \eta^{\star}(\bar{X}^{\kappa}_{\tau(T)}).
\end{split}    
\end{talign}
Hence, if we plug the definition of $F^{(T)}(\bar{X}^{\kappa})$ (equation \eqref{eq:F_T_def}) and equation \eqref{eq:F_T_eta_star} into the right-hand side of \eqref{eq:grad_VM_hitting_time_2}, we obtain that
\begin{talign*} 
\begin{split} 
    &\frac{\partial}{\partial \epsilon} \mathcal{L}_{\mathrm{VM}_{v}}(\phi + \epsilon \eta) \rvert_{\epsilon = 0} \\ &= \mathbb{E} \big[ \big( 
    \eta^{\star}(\bar{X}^{\kappa}_0) + \int_0^{\tau(T)} \langle \sqrt{\kappa} \sigma^{\top} \nabla \eta^{\star}(\bar{X}^{\kappa}_t), \mathrm{d}W_t \rangle \\ &\qquad\quad 
    - \frac{1}{2} \big( \frac{1}{\kappa} - 1 \big) \int_0^{\tau(T)} \mathrm{Tr}\big( \kappa \sigma \sigma^{\top} \nabla^2_x \eta^{\star}(\bar{X}^{\kappa}_t) \big) \, \mathrm{d}t - \mathrm{1}_{\mathcal{S}^c}(\bar{X}^{\kappa}_{\tau(T)}) \eta^{\star}(\bar{X}^{\kappa}_{\tau(T)})
    \big) \\ &\qquad \times \bigl( 
    \eta(\bar{X}^{\kappa}_0) + \int_0^{\tau(T)} \langle \sqrt{\kappa} \sigma^{\top} \nabla \eta(\bar{X}^{\kappa}_t), \mathrm{d}W_t \rangle \\ &\qquad\qquad 
    - \frac{1}{2} \big( \frac{1}{\kappa} - 1 \big) \int_0^{\tau(T)} \mathrm{Tr}\big( \kappa \sigma \sigma^{\top} \nabla^2_x \eta(\bar{X}^{\kappa}_t) \big) \, \mathrm{d}t - \mathrm{1}_{\mathcal{S}^c}(\bar{X}^{\kappa}_{\tau(T)}) \eta(\bar{X}^{\kappa}_{\tau(T)})
    \bigr) \times \mathrm{IW}(W,0,T) \big]
\end{split}
\end{talign*}

\paragraph{Step 3: Choosing the right perturbation $\eta$ to obtain an almost sure equality} If we pick $\eta(x) = - \eta^{\star}(x)$, 
we obtain that 
\begin{talign*}
\begin{split}
    \frac{\partial}{\partial \epsilon} \mathcal{L}_{\mathrm{VM}_{v,\kappa}}(\phi + \epsilon \eta) \rvert_{\epsilon = 0} = - \mathbb{E}[F^{(T)}(\bar{X}^{\kappa})^2 \cdot \mathrm{IW}(W,0,T)] = - \mathbb{E}[F^{(T)}(X^{v,\kappa})^2].
\end{split}    
\end{talign*}
As in Step 3 of the proof of \Cref{thm:vm_loss_hitting}, it is convenient to further define the diffusion bridge process $X^{v,\kappa,x,y,T}$ as the solution of the SDE \eqref{eq:X_v_kappa} conditioned on the fact that $X^{v,\kappa,x,y}_0 = x$ and $X^{v,\kappa,x,y}_{\tau(T)} = y$. Let $\mathbb{P}^{v,\kappa,x,y,T}$ be the path probability measure of the diffusion bridge process $X^{v,\kappa,x,y,T}$. If we let $p_{v,\kappa,T}$ be the joint distribution of $(X^{v,\kappa}_0,X^{v,\kappa}_{\tau(T)})$ observe that through the tower property of conditional expectation, we can rewrite 
\begin{talign*}
    \frac{\partial}{\partial \epsilon} \mathcal{L}_{\mathrm{VM}_{v,\kappa}}(\phi + \epsilon \eta) \rvert_{\epsilon = 0} = - \mathbb{E}[F^{(T)}(X^{v,\kappa})^2] = - \int_{(\mathbb{R}^d)^2} \mathbb{E}[F^{(T)}(X^{v,\kappa,x,y,T})^2] \, \mathrm{d}p_{v,\kappa,T}(x,y),
\end{talign*}
Thus, if $\hat{\phi}$ satisfies that $\frac{\partial}{\partial \epsilon} \mathcal{L}_{\mathrm{VM}_{v,\kappa}}(\hat{\phi} + \epsilon \eta) \rvert_{\epsilon = 0} = 0$ for all perturbations $\eta$, we need that when $\phi = \hat{\phi}$, for $p_{v,\kappa}$-almost every $(x,y)$, $\mathbb{E}[F^{(T)}(X^{v,\kappa,x,y})^2] = 0$. Hence, for $p_{v,\kappa,T}$-almost every $(x,y)$, we must have that for $\mathbb{P}^{v,\kappa,x,y,T}$-almost every path $X^{v,\kappa,x,y,T}$, $F^{(T)}(X^{v,\kappa,x,y,T}) = 0$.

\paragraph{Step 4: Writing down the Radon-Nikodym derivative $\frac{\mathrm{d}\mathbb{P}^{u^{\star}}}{\mathrm{d}\mathbb{P}^{\hat{\phi}}}$} 
The next step of the proof involves an argument similar to the one in Step 4 of the proofs of \Cref{thm:vm_finite} and \Cref{thm:vm_loss_hitting}. Let $Y_{[0,\tau(T)]}$ be an arbitrary process defined over $[0,\tau(T)]$ whose probability measure is absolutely continuous with respect to $\mathbb{P}$ and/or $\mathbb{P}^{u^{\star}}$ over $[0,\tau(T)]$.
In analogy with \eqref{eq:dP_dP_hatphi}, by Corollary \ref{cor:Girsanov_dX_t} of the Girsanov theorem,
\begin{talign}
\begin{split} \label{eq:dP_u_star_dP_capped_1}
\frac{\mathrm{d}\mathbb{P}}{\mathrm{d}\mathbb{P}^{\hat{\phi}}}(Y_{[0,\tau(T)]}) &= 
\exp \big(\!\int_0^{\tau(T)}\big\langle \sigma^{\top} \nabla \hat{\phi}(Y_s), \mathrm{d}Y_s - (b(Y_s) - \frac{1}{2} \sigma \sigma^{\top} \nabla \hat{\phi}(Y_s)) \, \mathrm{d}s \big\rangle \big).
\end{split}
\end{talign}

Next, observe that if $X, \tilde{X} \sim \mathbb{P}$ over $[0,\tau]$, and we let $X_{[0,\tau(T)]}$ be the restriction of $X$ to $[0,\tau]$,
\begin{talign*}
\begin{split}
    \frac{\mathrm{d}\mathbb{P}^{u^{\star}}}{\mathrm{d}\mathbb{P}}(X_{[0,\tau(T)]}) &= \mathbb{E}[ \frac{\mathrm{d}\mathbb{P}^{u^{\star}}}{\mathrm{d}\mathbb{P}}(X) | \mathcal{F}_{T}] = \mathbb{E}[ \exp \big( V(X_0) - \int_0^{\tau} f(X_t) \, \mathrm{d}t - g(X_{\tau}) \big) | \mathcal{F}_{T}] \\ &= \exp \big( V(X_0) - \int_0^{\tau(T)} f(X_t) \, \mathrm{d}t \big) \mathbb{E}[ \exp \big( \int_{\tau(T)}^{\tau} f(X_t) \, \mathrm{d}t - g(X_{\tau}) \big) | \mathcal{F}_{T}] \\ &= \exp \big( V(X_0) - \int_0^{\tau(T)} f(X_t) \, \mathrm{d}t \big) \mathbb{E}[ \exp \big( \int_{0}^{\tau} f(\tilde{X}_t) \, \mathrm{d}t - g(\tilde{X}_{\tau}) \big) | \tilde{X}^v_0 = X_{\tau(T)} ] \\ &= \exp \big( V(X^v_0) - \int_0^{\tau(T)} f(X^v_t) \, \mathrm{d}t - V(X^v_{\tau(T)}) \big).
\end{split}
\end{talign*}
Thus, for any process $Y_{[0,\tau(T)]}$ whose probability measure is absolutely continuous with respect to $\mathbb{P}$, $\frac{\mathrm{d}\mathbb{P}^{u^{\star}}}{\mathrm{d}\mathbb{P}}(Y_{[0,\tau(T)]})$, we have that $\frac{\mathrm{d}\mathbb{P}^{u^{\star}}}{\mathrm{d}\mathbb{P}}(Y_{[0,\tau(T)]}) = \exp \big( V(Y_0) - \int_0^{\tau(T)} f(Y_t) \, \mathrm{d}t - V(Y_{\tau(T)}) \big)$.
In analogy with equation \eqref{eq:P_u_star_P_v}, 
we obtain that 
\begin{talign}
\begin{split} \label{eq:difference_V_phi}
    \frac{\mathrm{d}\mathbb{P}^{u^{\star}}}{\mathrm{d}\mathbb{P}^{\hat{\phi}}}(Y_{[0,\tau(T)]}) &= \frac{\mathrm{d}\mathbb{P}^{u^{\star}}}{\mathrm{d}\mathbb{P}}(Y_{[0,\tau(T)]}) \times \frac{\mathrm{d}\mathbb{P}}{\mathrm{d}\mathbb{P}^{\hat{\phi}}}(Y_{[0,\tau(T)]}) \\ &= \exp \big( 
    \int_0^{\tau(T)}\big\langle \sigma^{\top} \nabla \hat{\phi}(Y_s), \mathrm{d}Y_s - (b(Y_s) - \frac{1}{2} \sigma \sigma^{\top} \nabla \hat{\phi}(Y_s)) \, \mathrm{d}s \big\rangle
    \\ &\qquad\qquad\qquad + V(Y_0) - \int_0^{\tau(T)} f(Y_t) \, \mathrm{d}t - V(Y_{\tau(T)}) \big),
\end{split}    
\end{talign}
and more generally, for any $k \geq 0$,
\begin{talign}
\begin{split} \label{eq:difference_V_phi_k}
    \frac{\mathrm{d}\mathbb{P}^{u^{\star}}}{\mathrm{d}\mathbb{P}^{\hat{\phi}}}(Y_{[\tau(kT),\tau((k+1)T)]}) 
    &= \exp \big( 
    \int_{\tau(kT)}^{\tau((k+1)T)}\big\langle \sigma^{\top} \nabla \hat{\phi}(Y_s,s), \mathrm{d}Y_s - (b(Y_s) - \frac{1}{2} \sigma \sigma^{\top} \nabla \hat{\phi}(Y_s,s)) \, \mathrm{d}s \big\rangle
    \\ &\qquad\qquad\qquad + V(Y_{\tau(kT)}) - \int_{\tau(kT)}^{\tau((k+1)T)} f(Y_t,t) \, \mathrm{d}t - V(Y_{\tau((k+1)T)}) \big).
\end{split}    
\end{talign}
\paragraph{Step 5(i): Concluding the proof for the case $\kappa = 1$, $p_0$ and $\tilde{p}_0$ mutually absolutely continuous}
The argument is the same as in Step 5(i) of the proof of \Cref{thm:vm_finite} and \Cref{thm:vm_loss_hitting}. We apply Itô's lemma (equation \eqref{eq:phi_eta_ito_stopping}) into the definition of $F^{(T)}$, and set $\kappa=1$, to obtain that almost surely,
\begin{talign}
\begin{split} \label{eq:F_T_zero}
    0 &= F^{(T)}(X^{v,1}) \\ &= - \int_0^{\tau(T)} \langle \nabla \hat{\phi}(X^{v,1}_t), b(X^{v,1}_t) - \frac{1}{2} \sigma \sigma^{\top} \nabla \hat{\phi}(X^{v,1}_t) \rangle \, \mathrm{d}t - \frac{1}{2} \int_0^{\tau(T)} \mathrm{Tr}\big( \sigma \sigma^{\top} \nabla^2_x \hat{\phi}(X^{v,1}_t) \big) \, \mathrm{d}t \\ &\quad - \int_0^{\tau(T)} f(X^{v,1}_t) \, \mathrm{d}t 
    + \mathrm{1}_{\mathcal{S}}(X^{v,1}_{\tau(T)}) \big( \hat{\phi}(X^{v,1}_{\tau(T)}) - g(X^{v,1}_{\tau(T)}) \big) \\ &= \int_0^{\tau(T)} \langle \nabla \hat{\phi}(X^{v,1}_t), \mathrm{d}X^{v,1}_t - ( b(X^{v,1}_t) - \frac{1}{2} \sigma \sigma^{\top} \nabla \hat{\phi}(X^{v,1}_t) ) \, \mathrm{d}t \rangle \\ &\qquad + \hat{\phi}(X^{v,1}_0) - \int_0^{\tau(T)} f(X^{v,1}_t) \, \mathrm{d}t 
    - \mathrm{1}_{\mathcal{S}^c}(X^{v,1}_{\tau(T)}) \hat{\phi}(X^{v,1}_{\tau(T)}) - \mathrm{1}_{\mathcal{S}}(X^{v,1}_{\tau(T)}) g(X^{v,1}_{\tau(T)}).
\end{split}
\end{talign}
Let $\mathbb{P}^{v,1}$ be the probability measure of $X^{v,1}$ with initial distribution $\tilde{p}_0$. Since $\mathbb{P}^{v,1}$ and $\mathbb{P}$ are mutually absolutely continuous because they share the same diffusion coefficient and their initial marginals are mutually absolutely continuous, equation \eqref{eq:difference_V_phi} holds for $X^{v,1}_{[0,\tau(T)]}$. Plugging \eqref{eq:F_T_zero} into \eqref{eq:difference_V_phi} we can write
\begin{talign*}
\begin{split}
    \frac{\mathrm{d}\mathbb{P}^{u^{\star}}}{\mathrm{d}\mathbb{P}^{\hat{\phi}}}(X^{v,1}_{[0,\tau(T)]}) &= \exp \big( V(X^{v,1}_0) \! - \! \hat{\phi}(X^{v,1}_0) \! - \! V(X^{v,1}_{\tau(T)}) \! + \! \mathrm{1}_{\mathcal{S}^c}(X^{v,1}_{\tau(T)}) \hat{\phi}(X^{v,1}_{\tau(T)}) \! + \! \mathrm{1}_{\mathcal{S}}(X^{v,1}_{\tau(T)}) g(X^{v,1}_{\tau(T)}) \big) \\ &=
    \exp \big( \mathrm{1}_{\mathcal{S}^c}(X^{v,1}_{0}) \big( V(X^{v,1}_0) \! - \! \hat{\phi}(X^{v,1}_0) \big) \! - \! \mathrm{1}_{\mathcal{S}^c}(X^{v,1}_{\tau(T)}) \big( V(X^{v,1}_{\tau(T)}) \! - \! \hat{\phi}(X^{v,1}_{\tau(T)}) \big)
\end{split}
\end{talign*}
Here, the second equality holds because $V(x) = g(x)$ for $x \in \mathcal{S}$ by definition of the value function $V$, and because $V(x) = \hat{\phi}(x)$ for $x \in \mathcal{S}$, by virtue of equation \eqref{eq:F_T_zero}, since when $X^{v,1}_0 \in \mathcal{S}$, $\tau(T) = 0$.
More generally, this argument shows that for any $k \geq 0$, 
\begin{talign}
\begin{split} \label{eq:RND_telescope}
\frac{\mathrm{d}\mathbb{P}^{u^{\star}}}{\mathrm{d}\mathbb{P}^{\hat{\phi}}}(X_{[\tau(kT),\tau((k+1)T)]}^{v,1}) &= \exp \big( \mathrm{1}_{\mathcal{S}^c}(X^{v,1}_{\tau(kT)}) \big( V(X^{v,1}_{\tau(kT)}) - \hat{\phi}(X^{v,1}_{\tau(kT)}) \big) \\ &\qquad\quad - \mathrm{1}_{\mathcal{S}^c}(X^{v,1}_{\tau((k+1)T)}) \big( V(X^{v,1}_{\tau((k+1)T)}) - \hat{\phi}(X^{v,1}_{\tau((k+1)T)}) \big) \big).
\end{split}
\end{talign}
Let $X^{v,1}_{[0,\tau]} := X^{v,1}$ be an uncapped trajectory. We apply equation
\eqref{eq:dP_u_star_dP_hatphi_uncapped} from the proof of \Cref{thm:vm_loss_hitting}, and use a telescoping argument as follows:
\begin{talign}
\begin{split} \label{eq:telescoping_arg}
    \frac{\mathrm{d}\mathbb{P}^{u^{\star}}}{\mathrm{d}\mathbb{P}^{\hat{\phi}}}(X_{[0,\tau]}^{v,1}) &= 
    \exp \big( \int_0^{\tau} \langle \nabla \hat{\phi}(X^{v,1}_t), \mathrm{d}X^{v,1}_t - ( b(X^{v,1}_t) - \frac{1}{2} \sigma \sigma^{\top} \nabla \hat{\phi}(X^{v,1}_t) ) \, \mathrm{d}t
    \rangle  \\ &\qquad\qquad\qquad + V(X^{v,1}_0) - \int_0^{\tau} f(X^{v,1}_t) \, \mathrm{d}t - g(X^{v,1}_{\tau}) \big)
    \\ &= \prod_{k=0}^{+\infty} \exp \big( \int_{\tau(kT)}^{\tau((k+1)T)} \langle \nabla \hat{\phi}(X^{v,1}_t), \mathrm{d}X^{v,1}_t - ( b(X^{v,1}_t) - \frac{1}{2} \sigma \sigma^{\top} \nabla \hat{\phi}(X^{v,1}_t) ) \, \mathrm{d}t
    \rangle  \\ &\qquad\qquad\qquad + V(X^{v,1}_{\tau(kT)}) - \int_{\tau(kT)}^{\tau((k+1)T)} f(X^{v,1}_t) \, \mathrm{d}t - V(X^{v,1}_{\tau((k+1)T)}) \big) \\
    &= \prod_{k=0}^{+\infty} \frac{\mathrm{d}\mathbb{P}^{u^{\star}}}{\mathrm{d}\mathbb{P}^{\hat{\phi}}}(X_{[\tau(kT),\tau((k+1)T)]}^{v,1})
    \\ &= \prod_{k=0}^{+\infty} \exp \big( 
    \mathrm{1}_{\mathcal{S}^c}(X^{v,1}_{\tau(kT)}) \big( V(X^{v,1}_{\tau(kT)}) - \hat{\phi}(X^{v,1}_{\tau(kT)}) \big) \\ &\qquad\qquad\qquad - \mathrm{1}_{\mathcal{S}^c}(X^{v,1}_{\tau((k+1)T)}) \big( V(X^{v,1}_{\tau((k+1)T)}) - \hat{\phi}(X^{v,1}_{\tau((k+1)T)}) \big)
    \big) 
    \\ &= \exp \big( \mathrm{1}_{\mathcal{S}^c}(X^{v,1}_{0}) \big( V(X^{v,1}_0) - \phi(X^{v,1}_0) \big) \big).
\end{split}
\end{talign}
In the equation above, infinite products are well defined because beyond a certain integer determined by the stopping time $\tau$, all the factors are 1. The third equality holds by \eqref{eq:difference_V_phi_k}, and the fourth equality holds by \eqref{eq:RND_telescope}.

Like in Steps 5(i) of the proofs of \Cref{thm:vm_finite} and \Cref{thm:vm_loss_hitting}, since both $\mathbb{P}^{u^{\star}}$ and $\mathbb{P}^{\hat{\phi}}$ have marginal $p_0$ at time $t=0$, for all $x \in \mathbb{R}^d$ we have that
\begin{talign*}
\begin{split}
    1 = \frac{p_0(x)}{p_0(x)} = \frac{\mathrm{d}\mathbb{P}^{u^{\star}}_0(x)}{\mathrm{d}\mathbb{P}^{\hat{\phi}}_0(x)} = \mathbb{E}[\frac{\mathrm{d}\mathbb{P}^{u^{\star}}}{\mathrm{d}\mathbb{P}^{\hat{\phi}}}(X^{v,\kappa})|X^{v,\kappa}_0] = \exp \big( V(X^{v,\kappa}_0) - \phi(X^{v,\kappa}_0) \big).
\end{split}    
\end{talign*}
which proves that $\phi(x) = V(x)$ for all $x \in \mathbb{R}^d$. Plugging this into equation \eqref{eq:telescoping_arg} implies that for all trajectories $X^v$,
\begin{talign*}
\begin{split}
    \frac{\mathrm{d}\mathbb{P}^{u^{\star}}}{\mathrm{d}\mathbb{P}^{\hat{\phi}}}(X_{[0,\tau(T)]}^{v,1}) &= 1,
\end{split}
\end{talign*}
By the absolute continuity of $\mathbb{P}^{u^{\star}}$ with respect to the probability measure of $X^{v,1}$, we obtain that $\frac{\mathrm{d}\mathbb{P}^{u^{\star}}}{\mathrm{d}\mathbb{P}^{\hat{\phi}}}(X_{[0,\tau(T)]}^{u^{\star}}) = 1$. And applying Girsanov once more,
\begin{talign} \label{eq:girsanov_app_KL_capped}
    \mathbb{E}\big[ \frac{1}{2} \int_0^{\tau(T)} \| u^{\star}(X^{u^{\star}}_t) + \sigma^{\top} \nabla \hat{\phi}(X^{u^{\star}}_t) \|^2 \, \mathrm{d}t \big] = \mathrm{KL}(\mathbb{P}^{u^{\star}}||\mathbb{P}^{\hat{\phi}}) = \mathbb{E}\big[ \log \frac{\mathrm{d}\mathbb{P}^{u^{\star}}}{\mathrm{d}\mathbb{P}^{v}}(X^{u^{\star}}) \big] = 0.
\end{talign}
This means that $u^{\star} = - \sigma^{\top} \nabla \hat{\phi}$ almost surely, and by continuity, we have that they are equal everywhere.

\paragraph{Step 5(ii): Concluding the proof for the general case} The argument follows the structure of Step 5(ii) of the proof of \Cref{thm:vm_loss_hitting}.

For any $s \in [0,T]$ and any $x, y \in \mathbb{R}^d$, define
\begin{talign*}
\begin{split}
    &\mathcal{C}^{\mathrm{stop}(T)}_{x,y,\infty,s}
    \! = \! \{g\in C([0,T];\mathbb{R}^d) \, | \, g(t)=g(\tau(T)_g) = y\ \text{for all }t\ge \tau(T)_g, g(0) = x, \tau(T)_g = s \}, \\
    &\mathcal{C}^{\mathrm{stop}(T),v,\gamma}_{x,y,F} = \{ X^{v,\gamma} \in 
    \mathcal{C}^{\mathrm{stop}(T)}_{x,y,\infty, s} \, | \, X^{v,\gamma} \text{ solves } \eqref{eq:X_v_kappa} \text{ on } [0,\tau(T)], \, F(X^{v,\gamma}) = 0 \},
\end{split}    
\end{talign*}
where $\tau(T)_g$ denotes the capped hitting time of the function $g$ on the set $\mathcal{S}$. Observe that when restricted on $\mathcal{C}^{\mathrm{stop}(T)}_{x,y,\infty, s}$, the truncated functional $F^{(T)}$ takes the form
\begin{talign*}
\begin{split}
    F^{(T)}(X) &= - \int_0^{s} \langle \nabla \hat{\phi}(X_t), b(X_t) - \frac{1}{2} \sigma \sigma^{\top} \nabla \hat{\phi}(X_t) \rangle \, \mathrm{d}t - \frac{1}{2} \int_0^{s} \mathrm{Tr}\big( \sigma \sigma^{\top} \nabla^2_x \hat{\phi}(X_t) \big) \, \mathrm{d}t \\ &\quad - \int_0^{s} f(X_t) \, \mathrm{d}t 
    + \mathrm{1}_{\mathcal{S}}(y) \big( \hat{\phi}(y) - g(y) \big).
\end{split}
\end{talign*}
which means that the restriction of $F$ to $\mathcal{C}^{\mathrm{stop}}_{x,y,\infty, s}$ is continuous in the uniform norm over $[0,s]$, by the argument of Lemma \ref{lem:exp_D_t_F}.

Recall the notation $\mathbb{P}^{v,\kappa,x,y,T}$ and $p_{v,\kappa,T}$ defined in Step 3 of this proof.
In Step 3, we obtained that for $p_{v,\kappa,T}$-almost every $(x,y) \in (\mathbb{R}^d)^2$, we have that for $\mathbb{P}^{v,\kappa,x,y,T}$-almost every path $X^{v,\kappa,x,y,T}$, $F^{(T)}(X^{v,\kappa,x,y,T}) = 0$. Equivalently, we have that for $p_{v,\kappa}$-almost every $(x,y) \in (\mathbb{R}^d)^2$, $\mathcal{C}^{\mathrm{stop}(T),v,\gamma}_{x,y,F}$ is dense in $\mathrm{supp}(\mathbb{P}^{v,\kappa,x,y})$ in the uniform norm over $[0,s]$. 

By \Cref{thm:full_supp_bridge} and using the regularity, growth and uniform ellipticity assumptions 
on $v$, we have that $\mathrm{supp}(\mathbb{P}^{v,\kappa,x,y,T})$ is equal to $\mathcal{C}^{\mathrm{stop}(T)}_{x,y,\infty,s}$ in the uniform norm. Hence, $\mathcal{C}^{\mathrm{stop}(T),v,\gamma}_{x,y,F}$ is dense in $\mathcal{C}^{\mathrm{stop}(T)}_{x,y,\infty,s}$ in the uniform norm.

Since $F^{(T)}$ takes value zero on $\mathcal{C}^{\mathrm{stop}(T),v,\gamma}_{x,y,F}$ by construction, and $\mathcal{C}^{\mathrm{stop}(T),v,\gamma}_{x,y,F}$ is dense in $\mathcal{C}^{\mathrm{stop}(T)}_{x,y,\infty,s}$ for a.e. $(x,y) \in (\mathbb{R}^d)^2$, and $F^{(T)}$ is continuous when restricted to $\mathcal{C}^{\mathrm{stop}(T)}_{x,y,\infty,s}$, we conclude that $F^{(T)}$ is the zero functional on $\mathcal{C}^{\mathrm{stop}(T)}_{x,y,\infty,s}$ for a.e. $(x,y) \in (\mathbb{R}^d)^2$. 

Now, observe that if we define 
\begin{talign*} 
    \mathcal{C}^{\mathrm{stop}(T)}_{\mathrm{supp}(\tilde{p}_0),\infty}
    \! = \! \{g\in C([0,\infty);\mathbb{R}^d) \, | \, g(t)=g(\tau(T)_g)\ \text{for all }t\ge \tau(T)_g, g(0) \in \mathrm{supp}(\tilde{p}_0) \}, 
\end{talign*}
we have that 
\begin{talign*}
    \mathcal{C}^{\mathrm{stop}(T)}_{\mathrm{supp}(\tilde{p}_0),\infty} = \bigcup_{s \in (0,\infty), x \in \mathrm{supp}(\tilde{p}_0), y \in \partial \mathcal{S}} \mathcal{C}^{\mathrm{stop}(T)}_{x,y,\infty,s}
\end{talign*}
Hence, $F^{(T)}(X) = 0$ for almost every $X \in \mathcal{C}^{\mathrm{stop}(T)}_{\mathrm{supp}(\tilde{p}_0),\infty}$. Since $p_0$ is absolutely continuous with respect to $\tilde{p}_0$, we obtain that almost surely,
\begin{talign*}
    F^{(T)}(X^{u^{\star}}) = 0.
\end{talign*}
Then, by Itô's lemma,
\begin{talign*}
\begin{split}
    \int_0^{\tau(T)} \langle \nabla \hat{\phi}(X^{u^{\star}}_t), \mathrm{d}X^{u^{\star}}_t \rangle = \hat{\phi}(X^{u^{\star}}_{\tau(T)}) - \hat{\phi}(X^{u^{\star}}_0) - \int_0^{\tau(T)} \frac{1}{2} \mathrm{Tr}\big( \sigma \sigma^{\top} \nabla^2_x \hat{\phi}(X^{u^{\star}}_t) \big) \, \mathrm{d}t,
\end{split}
\end{talign*}
which allows us to reexpress $F(X^{u^{\star}})$ almost surely as follows:
\begin{talign*}
\begin{split}
    0 &= F(X^{u^{\star}}) = 
    \int_0^{\tau(T)} \langle \nabla \hat{\phi}(X^{u^{\star}}_t), \mathrm{d}X^{u^{\star}}_t - ( b(X^{u^{\star}}_t) - \frac{1}{2} \sigma \sigma^{\top} \nabla \hat{\phi}(X^{u^{\star}}_t) ) \, \mathrm{d}t \rangle \\ &\qquad\qquad\qquad + \hat{\phi}(X^{u^{\star}}_0) - \int_0^{\tau(T)} f(X^{u^{\star}}_t) \, \mathrm{d}t 
    - \mathrm{1}_{\mathcal{S}^c}(X^{u^{\star}}_{\tau(T)}) \hat{\phi}(X^{u^{\star}}_{\tau(T)}) - \mathrm{1}_{\mathcal{S}}(X^{u^{\star}}_{\tau(T)}) g(X^{u^{\star}}_{\tau(T)}).
\end{split}
\end{talign*}
As in Step 5(i), this equality now implies that 
\begin{talign*}
\begin{split}
\frac{\mathrm{d}\mathbb{P}^{u^{\star}}}{\mathrm{d}\mathbb{P}^{\hat{\phi}}}(X^{u^{\star}}_{[0,\tau(T)]}) &= \frac{\mathrm{d}\mathbb{P}^{u^{\star}}}{\mathrm{d}\mathbb{P}}(X^{u^{\star}}_{[0,\tau(T)]}) \times \frac{\mathrm{d}\mathbb{P}}{\mathrm{d}\mathbb{P}^{\hat{\phi}}}(X^{u^{\star}}_{[0,\tau(T)]}) \\ &= 
\exp \big( \mathrm{1}_{\mathcal{S}^c}(X^{v,1}_{0}) \big( V(X^{v,1}_0) \! - \! \hat{\phi}(X^{v,1}_0) \big) \! - \! \mathrm{1}_{\mathcal{S}^c}(X^{v,1}_{\tau(T)}) \big( V(X^{v,1}_{\tau(T)}) \! - \! \hat{\phi}(X^{v,1}_{\tau(T)}) \big),
\end{split}
\end{talign*}
and the remainder of the proof follows analogously to Step 5(i), with $X^{u^{\star}}$ taking the role of $X^{v,1}$. 

\begin{corollary} \label{cor:particular_VM}
    When 
    \begin{talign*}
        v(x) = 
        \tfrac{1+\kappa}{2}\big(b(x) - 2D\bar \phi(x)\big) + \tfrac{\kappa-1}{2}\big(\tilde b(x) - 2D\nabla \bar{\tilde{\phi}}(x) \big),
    \end{talign*}
    where $\bar{\phi} = \texttt{sg}(\phi)$ is the stopped gradient version of $\phi$,
    the value matching loss function with state cost $f$ reads
    \begin{talign*}
    \begin{split}
        &\mathcal{L}_{\mathrm{VM}_{v,\kappa}}(\phi) \\ &= \mathbb{E} \Bigl[ \frac{1}{2} \Bigl( \int_0^{\tau(T)} \big\langle \nabla \phi(X^{v,\kappa}_t), \frac{1}{\sqrt{\kappa}} \sigma \, \mathrm{d}W_t
        + \big( \frac{1}{\kappa} - 1 \big) \big( b(X^{v,\kappa}_t) - D \nabla \log \rho(X^{v,\kappa}_t) - \nabla \cdot D(X^{v,\kappa}_t) \big) \, \mathrm{d}t \\ &\qquad\qquad\qquad\qquad + D \big(
        \nabla \phi(X^{v,\kappa}_t) \!-\!(\frac{1}{\kappa}\!+\!1)\!\nabla \bar{\phi}(X^{v,\kappa}_t)
        \!+ \!(\tfrac{1}{\kappa}\!-\!1) \nabla \bar{\tilde{\phi}}(X^{v,\kappa}_t)
        \big) \, \mathrm{d}t
        \big\rangle \\ &\qquad\quad + \! \frac{1}{\kappa} \phi(X^{v,\kappa}_0) \! - \! \int_0^{\tau(T)} f(X^{v,\kappa}_t) \, \mathrm{d}t 
        \! + \! (\mathrm{1}_{\mathcal{S}}(X^{v,\kappa}_{\tau(T)}) \! - \! \frac{1}{\kappa})
        \phi(X^{v,\kappa}_{\tau(T)})
        \! - \! \mathrm{1}_{\mathcal{S}}(X^{v,\kappa}_{\tau(T)}) g(X^{v,\kappa}_{\tau(T)})
        \Bigr)^2 \Bigr].
    \end{split}    
    \end{talign*}
\end{corollary}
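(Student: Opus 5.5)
This corollary is a direct substitution into the capped-hitting-time VM loss \eqref{eq:VM_loss_f}, which already carries the state cost $f$. The plan is to expand the stochastic integrand $\langle \nabla\phi, \tfrac{1}{\kappa}\mathrm{d}X^{v,\kappa}_t - (b - D\nabla\phi)\,\mathrm{d}t\rangle$, recalling $\tfrac12\sigma\sigma^\top = D$. The boundary terms $\tfrac1\kappa\phi(X_0)$, $-\int_0^{\tau(T)} f\,\mathrm{d}t$, $(\mathbf 1_{\mathcal S}-\tfrac1\kappa)\phi(X_{\tau(T)})$ and $-\mathbf 1_{\mathcal S}g$ are untouched. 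Only the bracket inside the time integral changes.

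First, insert the SDE $\mathrm{d}X^{v,\kappa}_t = v\,\mathrm{d}t + \sqrt\kappa\,\sigma\,\mathrm{d}W_t$. This gives $\tfrac1\kappa \mathrm{d}X_t = \tfrac{1}{\sqrt\kappa}\sigma\,\mathrm{d}W_t + \tfrac1\kappa v\,\mathrm{d}t$, which produces the martingale term. The integrand drift becomes $\tfrac1\kappa v - b + D\nabla\phi$.

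Second, substitute $v$ and use the reverse drift \eqref{eq:reverse_drift}, $\tilde b = -b + 2D\nabla\log\rho + 2\nabla\cdot D$. The $b$-coefficient collects to $\tfrac{1}{\kappa}\big(\tfrac{1+\kappa}{2} - \tfrac{\kappa-1}{2}\big) - 1 = \tfrac1\kappa - 1$. The $\tilde b$ contribution leaves $\tfrac{\kappa-1}{2\kappa}\big(2D\nabla\log\rho + 2\nabla\cdot D\big) = -(\tfrac1\kappa - 1)\big(D\nabla\log\rho + \nabla\cdot D\big)$. Together these give $(\tfrac1\kappa-1)(b - D\nabla\log\rho - \nabla\cdot D)$.

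Third, collect the gradient terms. The expression $\tfrac{1}{\kappa}\tfrac{1+\kappa}{2}(-2D\nabla\bar\phi)$ equals $-(\tfrac1\kappa+1)D\nabla\bar\phi$. Likewise $\tfrac1\kappa\tfrac{\kappa-1}{2}(-2D\nabla\bar{\tilde\phi})$ equals $(\tfrac1\kappa-1)D\nabla\bar{\tilde\phi}$. Adding $D\nabla\phi$ gives exactly the stated bracket $D\big(\nabla\phi - (\tfrac1\kappa+1)\nabla\bar\phi + (\tfrac1\kappa-1)\nabla\bar{\tilde\phi}\big)$.

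I read the statement's ``$2D\bar\phi$'' as a typo for $2D\nabla\bar\phi$, consistent with \eqref{eq:approximate_v}. With that reading there is no real obstacle; the only care needed is the sign bookkeeping for the $\tilde b$ and $\nabla\cdot D$ pieces in the state-dependent case. In the constant-$\sigma$ case $\nabla\cdot D = 0$, and the result reduces to \eqref{eq:final_vm}.
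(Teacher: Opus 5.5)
Your proposal is correct and is essentially the paper's proof: substitute $\mathrm{d}X^{v,\kappa}_t = v\,\mathrm{d}t + \sqrt{\kappa}\,\sigma\,\mathrm{d}W_t$ and the given $v$ into \eqref{eq:VM_loss_f}, use \eqref{eq:reverse_drift}, and collect terms. The only difference is bookkeeping: the paper first groups the drift as $\tfrac{1-\kappa}{2\kappa}(b-\tilde b)$ and then expands $\tilde b$, and it reads $2D\bar\phi$ as $2D\nabla\bar\phi$ exactly as you do. You were also right to start from \eqref{eq:VM_loss_f}, whose drift term is $-(b - D\nabla\phi)\,\mathrm{d}t$. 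With the $-\tfrac{1}{\kappa}b\,\mathrm{d}t$ written in \eqref{eq:L_VM_v_kappa} instead, the $b$-contribution would cancel and the stated formula would not follow.
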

\begin{proof}
    Observe that
    \begin{talign*}
    \begin{split}
        &\frac{1}{\kappa} \mathrm{d}X^{v,\kappa}_t - ( b(X^{v,\kappa}_t) - D \nabla \phi(X^{v,\kappa}_t) ) \, \mathrm{d}t 
        \\ &= \!\big( \frac{1+\kappa}{2\kappa}\big(b(X^{v,\kappa}_t) - 2D\nabla \bar{\phi}(X^{v,\kappa}_t) \big) + \frac{\kappa-1}{2\kappa} \big(\tilde{b}(X^{v,\kappa}_t) 
        - 2 D \nabla \bar{\tilde{\phi}}(X^{v,\kappa}_t)
        \big) \big) \, \mathrm{d}t \!+ \!\frac{1}{\sqrt{\kappa}} \sigma \, \mathrm{d}W_t \\ &\quad 
        - ( b(X^{v,\kappa}_t) - D \nabla \phi(X^{v,\kappa}_t) ) \, \mathrm{d}t
        \\ &=\big( \frac{1-\kappa}{2\kappa} \big( b(X^{v,\kappa}_t) - \tilde{b}(X^{v,\kappa}_t) \big) + D \big(
        \nabla \phi(X^{v,\kappa}_t) \!-\!(\frac{1}{\kappa}\!+\!1)\!\nabla \bar{\phi}(X^{v,\kappa}_t)
        \!+ \!(\frac{1}{\kappa}\!-\!1) \nabla \bar{\tilde{\phi}}(X^{v,\kappa}_t)
        \big) \big) \, \mathrm{d}t \\ &\quad + \!\frac{1}{\sqrt{\kappa}} \sigma \, \mathrm{d}W_t
        \\ &=\big( \frac{1-\kappa}{\kappa} \big( b(X^{v,\kappa}_t) - D \nabla \log \rho(X^{v,\kappa}_t) - \nabla \cdot D(X^{v,\kappa}_t) \big) \\ &\qquad + D \big(
        \nabla \phi(X^{v,\kappa}_t) \!-\!(\frac{1}{\kappa}\!+\!1)\!\nabla \bar{\phi}(X^{v,\kappa}_t)
        \!+ \!(\frac{1}{\kappa}\!-\!1) \nabla \bar{\tilde{\phi}}(X^{v,\kappa}_t)
        \big) \big) \, \mathrm{d}t + \!\frac{1}{\sqrt{\kappa}} \sigma \, \mathrm{d}W_t,
    \end{split}
    \end{talign*}
    where we used that $\tilde b = - b + 2 D \nabla \log \rho + 2 \nabla \cdot D$.
    Plugging this into equation \eqref{eq:VM_loss_f} concludes the proof.
\end{proof}

\section{Value Matching with RKHS parameterizations: quartic optimization landscapes} \label{sec:convergence_proof}

In this section, we prove examine the optimization landscapes of the Value Matching loss functions introduced in \Cref{sec:first_order_optimality}, under the assumption that the class $\Phi$ that we use to learn the value function $V$ is a ball of a reproducing kernel Hilbert space (RKHS) $\mathcal{H}$. Equivalently, if $\mathcal{F}$ is the feature space associated to $\mathcal{H}$, and $\psi : \mathbb{R}^d \times [0,T] \to \mathcal{F}$ is the feature map of $\mathcal{H}$, by construction
\begin{talign}
    \mathcal{H} = \big\{ \phi : \mathbb{R}^d \times [0,T] \to \mathbb{R} \, \big| \, \exists \theta \in \mathcal{F}, \ \mathrm{s.t.} \ \forall (x,t) \in \mathbb{R}^d \times [0,T], \ \phi(x,t) = \langle \theta, \psi(x,t) \rangle_{\mathcal{F}} \big\},
\end{talign}
and given $\phi \in \mathcal{H}$, the RKHS norm of $h$ is defined as
\begin{talign}
    \|\phi\|_{\mathcal{H}} = \inf \big\{ \|\theta\|_{\mathcal{F}} \, | \, \forall (x,t) \in \mathbb{R}^d \times [0,T], \ \phi(x,t) = \langle \theta, \psi(x,t) \rangle_{\mathcal{F}} \big\}.
\end{talign}
Thus, for some $R > 0$, we define the class $\Phi$ as follows:
\begin{talign}
    \Phi = \big\{ \phi : \mathbb{R}^d \times [0,T] \to \mathbb{R} \, \big| \, \exists \theta \in \mathcal{F}, \ \mathrm{s.t.} \ \phi(x,t) = \langle \theta, \psi(x,t) \rangle_{\mathcal{F}}, \ \| \theta\|_{\mathcal{F}} \leq R \big\}.
\end{talign}
Here are two common choices for $\mathcal{F}$:
\begin{enumerate}[label=(\roman*),left=0pt]
\item Finite-dimensional RKHS: If we set $\mathcal{F} = \mathbb{R}^{m}$, we have that
\begin{talign}
    \Phi = \big\{ \phi : \mathbb{R}^d \times [0,T] \to \mathbb{R} \, \big| \, \exists \theta \in \mathbb{R}^{m}, \ \mathrm{s.t.} \ \phi(x,t) = \langle \theta, \psi(x,t) \rangle_{\mathcal{F}}, \ \| \theta\|_2 \leq R \big\}.
\end{talign}
\item Gaussian features: If we set $\mathcal{F} = \mathbb{R}^{m}$ and for all $(x,t) \in \mathbb{R}^d \times [0,T]$, $\psi(x,t) \in L^2(\mathbb{R}^d \times [0,T])$ is defined such that for some $\epsilon > 0$,
\begin{talign}
[\psi(x,t)](x',t') = \frac{1}{(2\psi \epsilon^2)^{d/2}} \exp\big( - \frac{\|(x',t') - (x,t)\|^2}{2 \epsilon^2} \big),
\end{talign}
we have that
\begin{talign}
\begin{split}
    \Phi = \big\{ \phi : \mathbb{R}^d \times [0,T] \to \mathbb{R} \, &\big| \, \exists \theta \in L^2(\mathbb{R}^d \times [0,T]), \\ &\qquad \mathrm{s.t.} \ \phi(x,t) = \int_{\mathbb{R}^d \times [0,T]}  \theta(x',t') [\psi(x,t)](x',t') \, \mathrm{d}(x',t'), \\ &\qquad\qquad \int_{\mathbb{R}^d \times [0,T]} \theta(x,t)^2 \, \mathrm{d}(x,t) \leq R \big\}.
\end{split}
\end{talign}
\end{enumerate}

\paragraph{Value Matching loss for finite-horizon problems with RKHS parameterization} If we replace $\phi(x,t) = \langle \theta, \psi(x,t) \rangle_{\mathcal{F}}$ into \eqref{eq:L_VM_finite}, we obtain that

\begin{talign}
\begin{split}
    &\mathcal{L}_{\mathrm{VM}_{v,\kappa}}(\theta) \\ &= 
    \mathbb{E} \bigl[ \frac{1}{2} \bigl( \int_0^{T} \langle \nabla_x \langle \theta, \psi(X^{v,\kappa}_t,t) \rangle_{\mathcal{F}},  \frac{1}{\kappa} \mathrm{d}X^{v,\kappa}_t - ( b(X^{v,\kappa}_t,t) - \frac{1}{2} \sigma \sigma^{\top} \nabla_x \langle \theta, \psi(X^{v,\kappa}_t,t) \rangle_{\mathcal{F}} ) \, \mathrm{d}t \rangle \\ &\qquad\quad + \frac{1}{\kappa} \langle \theta, \psi(X^{v,\kappa}_0,0) \rangle_{\mathcal{F}} + (1 - \frac{1}{\kappa})\big( \langle \theta, \psi(X^{v,\kappa}_T,T) \rangle_{\mathcal{F}} - \int_0^{T} \partial_t \langle \theta, \psi(X^{v,\kappa}_t,t) \rangle_{\mathcal{F}} \, \mathrm{d}t \big) \\ &\qquad\quad - \int_0^T f(X^{v,\kappa}_t,t) \, \mathrm{d}t - g(X^{v,\kappa}_T)
    \bigr)^2 \bigr]
    \\ &= 
    \mathbb{E} \bigl[ \frac{1}{2} \bigl( - \langle \theta, \big( \int_0^{T} \langle \nabla_x \psi(X^{v,\kappa}_t,t), \frac{1}{2} \sigma \sigma^{\top} \nabla_x \psi(X^{v,\kappa}_t,t) \rangle \, \mathrm{d}t \big) \theta \rangle_{\mathcal{F}}
    \\ &\qquad\quad + \big\langle \theta, \int_0^{T} \langle \nabla_x \psi(X^{v,\kappa}_t,t), \frac{1}{\kappa} \mathrm{d}X^{v,\kappa}_t - b(X^{v,\kappa}_t,t) \, \mathrm{d}t \rangle + \frac{1}{\kappa} \psi(X^{v,\kappa}_0,0) \\ &\qquad\qquad\quad \ + (1 - \frac{1}{\kappa})\big(\psi(X^{v,\kappa}_T,T) 
    - \int_0^{T} \partial_t \psi(X^{v,\kappa}_t,t) \, \mathrm{d}t \big) \big\rangle_{\mathcal{F}} \\ &\qquad\quad 
    - \int_0^T f(X^{v,\kappa}_t,t) \, \mathrm{d}t - g(X^{v,\kappa}_T)
    \bigr)^2 \bigr]. 
\end{split}    
\end{talign}
Observe that this is a quartic loss function in $\theta$. More work is required to understand the optimization landscape of this loss function.

\paragraph{Value Matching loss for hitting time problems ($T=+\infty$) with RKHS parameterization} If we replace $\phi(x,t) = \langle \theta, \psi(x,t) \rangle_{\mathcal{F}}$ into \eqref{eq:VM_uncapped}, we obtain that
\begin{talign}
\begin{split}
    &\mathcal{L}_{\mathrm{VM}_{v,\kappa}}(\theta) \\ &= 
    \mathbb{E} \bigl[ \bigl( \int_0^{\tau} \langle \nabla \langle \theta, \psi(X^{v,\kappa}_t) \rangle_{\mathcal{F}}, \frac{1}{\kappa}\mathrm{d}X^{v,\kappa}_t - ( b(X^{v,\kappa}_t) - \frac{1}{2} \sigma \sigma^{\top} \nabla \langle \theta, \psi(X^{v,\kappa}_t) \rangle_{\mathcal{F}} ) \, \mathrm{d}t \rangle \\ &\qquad \ + \frac{1}{\kappa} \langle \theta, \psi(X^{v,\kappa}_0) \rangle_{\mathcal{F}} + (1 - \frac{1}{\kappa}) \langle \theta, \psi(X^{v,\kappa}_{\tau}) \rangle_{\mathcal{F}} - \int_0^{\tau} f(X^{v,\kappa}_t) \, \mathrm{d}t - g(X^{v,\kappa}_\tau)
    \bigr)^2 \bigr]
    \\ &= 
    \mathbb{E} \bigl[ \bigl( - \big\langle \theta, \big(\int_0^{\tau} \langle \nabla \psi(X^{v,\kappa}_t), \frac{1}{2} \sigma \sigma^{\top} \nabla \psi(X^{v,\kappa}_t) \rangle \, \mathrm{d}t \big) \theta \rangle_{\mathcal{F}}
    \\ &\qquad\quad + \big\langle \theta, \int_0^{\tau} \langle \nabla_x \psi(X^{v,\kappa}_t), \frac{1}{\kappa} \mathrm{d}X^{v,\kappa}_t - b(X^{v,\kappa}_t) \, \mathrm{d}t \rangle + \frac{1}{\kappa} \psi(X^{v,\kappa}_0) 
    + (1 - \frac{1}{\kappa})\psi(X^{v,\kappa}_T) \big\rangle_{\mathcal{F}} \\
    &\qquad\quad 
    - \int_0^{\tau} f(X^{v,\kappa}_t) \, \mathrm{d}t - g(X^{v,\kappa}_\tau)
    \bigr)^2 \bigr].
\end{split}    
\end{talign}
This is also a quartic loss function in $\theta$.

\paragraph{Value Matching loss for hitting time problems (finite $T$) with RKHS parameterization} 
If we replace $\phi(x,t) = \langle \theta, \psi(x,t) \rangle_{\mathcal{F}}$ into \eqref{eq:VM_loss_f}, we obtain that

\begin{talign}
\begin{split}
    &\mathcal{L}_{\mathrm{VM}_{v,\kappa}}(\theta) \\ &= 
    \mathbb{E} \bigl[ \frac{1}{2} \bigl( \int_0^{\tau(T)} \langle \nabla \langle \theta, \psi(X^{v,\kappa}_t) \rangle_{\mathcal{F}}, \frac{1}{\kappa} \mathrm{d}X^{v,\kappa}_t - ( b(X^{v,\kappa}_t) - \frac{1}{2} \sigma \sigma^{\top} \nabla \langle \theta, \psi(X^{v,\kappa}_t) \rangle_{\mathcal{F}} ) \, \mathrm{d}t \rangle \\ &\qquad + \! \frac{1}{\kappa} \langle \theta, \psi(X^{v,\kappa}_0) \rangle_{\mathcal{F}} \! - \! \int_0^{\tau(T)} f(X^{v,\kappa}_t) \, \mathrm{d}t 
    \! + \! (\mathrm{1}_{\mathcal{S}}(X^{v,\kappa}_{\tau(T)}) \! - \! \frac{1}{\kappa})
    \langle \theta, \psi(X^{v,\kappa}_{\tau(T)}) \rangle_{\mathcal{F}}
    \! - \! \mathrm{1}_{\mathcal{S}}(X^{v,\kappa}_{\tau(T)}) g(X^{v,\kappa}_{\tau(T)})
    \bigr)^2 \bigr] \\
     &= \mathbb{E} \bigl[ \bigl( - \big\langle \theta, \big(\int_0^{\tau} \langle \nabla \psi(X^{v,\kappa}_t), \frac{1}{2} \sigma \sigma^{\top} \nabla \psi(X^{v,\kappa}_t) \rangle \, \mathrm{d}t \big) \theta \rangle_{\mathcal{F}}
    \\ &\qquad\quad + \big\langle \theta, \int_0^{\tau} \langle \nabla \psi(X^{v,\kappa}_t), \frac{1}{\kappa} \mathrm{d}X^{v,\kappa}_t - b(X^{v,\kappa}_t) \, \mathrm{d}t \rangle + \frac{1}{\kappa} \psi(X^{v,\kappa}_0) 
    + (1 - \frac{1}{\kappa})\psi(X^{v,\kappa}_T) \big\rangle_{\mathcal{F}} \\
    &\qquad\quad  
    - \int_0^{\tau} f(X^{v,\kappa}_t) \, \mathrm{d}t - g(X^{v,\kappa}_\tau)
    \bigr)^2 \bigr].
\end{split}
\end{talign}
This is also a quartic loss function in $\theta$.

\section{First-order guarantees for existing losses}

\subsection{Guarantees for the Feynman-Kac loss function by  \cite{strahan2023predicting}} \label{subsec:strahan}
\cite{strahan2023predicting} parameterize the committor using a neural network $\hat{q}$ and consider the following loss 
\begin{talign} 
\begin{split} \label{eq:strahan_0}
    \mathcal{L}_{\mathrm{FKE}}(\mathfrak{q}) &= \mathbb{E}_{X_0 \sim p_0} \big[ \big(\mathbb{E}[\mathfrak{q}(X_{\tau(T)})| X_0\big] - \mathfrak{q}(X_{0}) \big)^2 \boldsymbol{1}_{(A \cup B)^{c}}(X_{0}) \\ &\qquad\qquad\qquad + \lambda \big( \mathfrak{q}(X_{0}) - \boldsymbol{1}_{B}(X_0)(X_0)\big)^2 \boldsymbol{1}_{A \cup B}(X_{0}) \big],
\end{split}
\end{talign}
where $X$ solves the uncontrolled SDE \eqref{eq:SDE_bg_state_dependent}, and $p_0$ is an arbitrary distribution.
Note that this loss is not scalable, because it requires estimating the conditional expectation $\mathbb{E}[\mathfrak{q}(X_{\tau(T)})| X_0\big]$ for every single $X_0$, which requires sampling multiple trajectories for every single starting point $X_0$. We also consider the related losses, which are similar but scalable:
\begin{talign} 
\begin{split} \label{eq:strahan_1}
    \bar{\mathcal{L}}_{\mathrm{FKE}}(\mathfrak{q}) &= \mathbb{E}_{X_0 \sim p_0} \big[ \mathbb{E}\big[ \big( \mathfrak{q}(X_{\tau(T)}) - \mathfrak{q}(X_{0}) \big)^2 | X_0\big] \boldsymbol{1}_{(A \cup B)^{c}}(X_{0}) \\ &\qquad\qquad\qquad + \lambda \big( \mathfrak{q}(X_{0}) - \boldsymbol{1}_{B}(X_0)\big)^2 \boldsymbol{1}_{A \cup B}(X_{0}) \big],
\end{split}
\end{talign}
and
\begin{talign} 
\begin{split} \label{eq:strahan_2}
    \widehat{\mathcal{L}}_{\mathrm{FKE}}(\mathfrak{q}) &= \mathbb{E}_{X_0 \sim p_0} \big[ \mathbb{E}\big[ \big( \bar{\mathfrak{q}}(X_{\tau(T)}) - \mathfrak{q}(X_{0}) \big)^2 | X_0\big] \boldsymbol{1}_{(A \cup B)^{c}}(X_{0}) \\ &\qquad\qquad\qquad + \lambda \big(\mathfrak{q}(X_{0}) - \boldsymbol{1}_{B}(X_0)\big)^2 \boldsymbol{1}_{A \cup B}(X_{0}) \big],
\end{split}
\end{talign}
where $\bar{\mathfrak{q}} = \texttt{stopgrad}(\mathfrak{q})$. Next, we prove that the losses $\mathcal{L}_{\mathrm{FKE}}$ and $\widehat{\mathcal{L}}_{\mathrm{FKE}}$ are first-order optimal, but $\bar{\mathcal{L}}_{\mathrm{FKE}}$ is not.
\begin{proposition}
    For any $\lambda > 0$, the loss $\mathcal{L}_{\mathrm{FKE}}$ in equation \eqref{eq:strahan_0} is first-order optimal, i.e. if $\hat{\mathfrak{q}}$ satisfies that $\frac{\partial}{\partial \epsilon} \mathcal{L}_{\mathrm{FKE}}(\hat{\mathfrak{q}} + \epsilon \eta) \rvert_{\epsilon = 0} = 0$ for all perturbations $\eta$, then $\hat{\mathfrak{q}}$ is equal to the committor function $q$.
\end{proposition}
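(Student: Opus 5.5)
The plan is to compute the first variation of $\mathcal{L}_{\mathrm{FKE}}$ directly and then choose perturbations that force the residual to vanish. The loss splits into an interior term and a boundary term. Write $P_T\mathfrak{q}(x) := \mathbb{E}[\mathfrak{q}(X_{\tau(T)})\,|\,X_0=x]$ and $R(x) := P_T\hat{\mathfrak{q}}(x)-\hat{\mathfrak{q}}(x)$. Note that $P_T$ is linear in $\mathfrak{q}$. Differentiating at $\epsilon=0$ gives
\begin{talign*}
\tfrac12\tfrac{\partial}{\partial\epsilon}\mathcal{L}_{\mathrm{FKE}}(\hat{\mathfrak{q}}+\epsilon\eta)\big|_{\epsilon=0} = \mathbb{E}_{p_0}\big[ R\,(P_T\eta-\eta)\,\boldsymbol{1}_{(A\cup B)^c} + \lambda(\hat{\mathfrak{q}}-\boldsymbol{1}_B)\,\eta\,\boldsymbol{1}_{A\cup B}\big].
\end{talign*}

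The first step handles the boundary. I take perturbations $\eta$ supported in $A\cup B$, smooth and compactly supported in the interior, or approximated so as to charge $\partial(A\cup B)$. For $x\in(A\cup B)^c$, the path $X$ started at $x$ evaluates $\eta$ only at $X_{\tau(T)}$, and this point lies in $\partial\mathcal{S}$ or in $\mathcal{S}^c$. So for $\eta$ supported in the interior of $A\cup B$, the interior term vanishes and we get $\mathbb{E}_{p_0}[\lambda(\hat{\mathfrak{q}}-\boldsymbol{1}_B)\eta\boldsymbol{1}_{A\cup B}]=0$ for all such $\eta$. Since $\lambda>0$, this gives $\hat{\mathfrak{q}}=\boldsymbol{1}_B$ $p_0$-a.e. on $A\cup B$, and by continuity on $\mathrm{supp}(p_0)\cap(A\cup B)$. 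Here I will assume $p_0$ has full support, extending to $\partial\mathcal{S}$ by continuity. With the boundary term now zero for every $\eta$, the first-order condition reduces to $\mathbb{E}_{p_0}[R\,(P_T\eta-\eta)\boldsymbol{1}_{\mathcal{S}^c}]=0$ for all $\eta$, and here we may restrict to $\eta$ vanishing on $\mathcal{S}$.

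The second step, which is the main obstacle, is to show that the operator $\eta\mapsto (P_T\eta-\eta)|_{\mathcal{S}^c}$, acting on functions vanishing on $\partial\mathcal{S}$, has range dense enough in $L^2(p_0|_{\mathcal{S}^c})$ to force $R=0$. The trick I would try is the analogue of the Dynkin choice in \Cref{thm:vm_loss_hitting}: pick $\eta$ solving the resolvent-type equation $P_T\eta-\eta=-R$ on $\mathcal{S}^c$ with $\eta|_{\partial\mathcal{S}}=0$. Because $\tau<\infty$ a.s. and the sub-Markov kernel of $X_{\tau(T)}$ restricted to $\mathcal{S}^c$ is strictly substochastic uniformly, with $\sup_x\mathbb{P}_x(\tau>nT)\to0$, the Neumann series $\eta=\sum_{n\ge0}(P_T^{\mathcal{S}^c})^n R$ converges. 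Plugging this in gives $-\mathbb{E}_{p_0}[R^2\boldsymbol{1}_{\mathcal{S}^c}]=0$, so $R=0$ $p_0$-a.e. on $\mathcal{S}^c$. The delicate points are the uniform contraction, which needs either bounded $\mathcal{S}^c$ or a Lyapunov-type bound as in \Cref{rem:step1prime_lyapunov}, and ensuring that the constructed $\eta$ lies in the admissible perturbation class, for which continuity follows from Feller regularity.

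Finally, $R=0$ together with the boundary values says that $\hat{\mathfrak{q}}$ solves the fixed-point problem $\hat{\mathfrak{q}}(x)=\mathbb{E}_x[\hat{\mathfrak{q}}(X_{\tau(T)})]$ on $\mathcal{S}^c$, with $\hat{\mathfrak{q}}=\boldsymbol{1}_B$ on $\mathcal{S}$. I then conclude exactly as in Part 1 of the proof of \Cref{thm:V_1_V_2_appendix}. Iterating via the tower property gives $\hat{\mathfrak{q}}(x)=\mathbb{E}_x[\hat{\mathfrak{q}}(X_{\tau(nT)})]$, and letting $n\to\infty$ by dominated convergence, using boundedness of $\hat{\mathfrak{q}}$ (or polynomial growth together with uniform integrability) and $\tau<\infty$ a.s., yields $\hat{\mathfrak{q}}(x)=\mathbb{E}_x[\boldsymbol{1}_B(X_\tau)]=q(x)$.
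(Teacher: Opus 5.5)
Your proof is correct under the extra hypotheses you flag, but it takes a longer route than the paper, and the step you call the main obstacle is unnecessary.

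\textbf{The paper's argument.} The paper never builds a test direction out of $\hat{\mathfrak{q}}$. It observes that $\mathcal{L}_{\mathrm{FKE}}$ is a convex quadratic in $\mathfrak{q}$ with $\mathcal{L}_{\mathrm{FKE}}(q)=0$. Testing the critical-point condition on the single direction $\eta=q-\hat{\mathfrak{q}}$ then gives $0=\mathcal{L}_{\mathrm{FKE}}(q)\ge\mathcal{L}_{\mathrm{FKE}}(\hat{\mathfrak{q}})\ge 0$. So the interior and boundary residuals vanish simultaneously $p_0$-a.e., and uniqueness of the fixed-point characterization finishes exactly as in your last paragraph.

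\textbf{Where your construction becomes redundant.} In your notation, $P_Tq=q$ on $\mathcal{S}^c$ and $q=\boldsymbol{1}_B$ on $\mathcal{S}$. Substituting $\eta=q-\hat{\mathfrak{q}}$ into your first-variation formula therefore gives $P_T\eta-\eta=-R$ on $\mathcal{S}^c$. The criticality condition becomes $-\mathbb{E}_{p_0}[R^2\boldsymbol{1}_{\mathcal{S}^c}+\lambda(\hat{\mathfrak{q}}-\boldsymbol{1}_B)^2\boldsymbol{1}_{\mathcal{S}}]=0$ directly. In particular, after your Step 1, $q-\hat{\mathfrak{q}}$ is an explicit solution of your resolvent problem ($P_T\eta-\eta=-R$ on $\mathcal{S}^c$, $\eta|_{\mathcal{S}}=0$). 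You can therefore drop:
\begin{itemize}
\item the Neumann series;
\item the uniform bound $\sup_x\mathbb{P}_x(\tau>nT)\to 0$, which does not follow from $\tau<\infty$ a.s. and need not hold on unbounded $\mathcal{S}^c$ such as the $\mathbb{R}^2$ benchmarks;
\item the Lyapunov condition;
\item the Feller argument for admissibility of $\eta$;
\item the separate boundary step.
\end{itemize}

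\textbf{What each approach buys.} Your construction produces the descent direction intrinsically from $\hat{\mathfrak{q}}$, without invoking convexity or a known zero-loss function. That is the Dynkin-type template the paper needs for the non-convex (quartic) Value Matching losses. For this quadratic loss, though, the convexity shortcut is both shorter and valid in greater generality.
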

\begin{proof}
    Applying \Cref{thm:V_1_V_2}(ii), the committor function $q$ is characterized as the only function $q$ such that
    \begin{talign} \label{eq:char_committor}
    \begin{cases}
        \forall x \in \mathbb{R}^d, \qquad &q(x) = \mathbb{E}\big[ q(X_{\tau(T)}) | X_0 = x\big], \\
        \forall x \in A, \qquad &q(x) = 0, \\
        \forall x \in B, \qquad &q(x) = 1.
    \end{cases}    
    \end{talign}
    Observe that for an arbitrary $\mathfrak{q}$ and an arbitrary perturbation $\eta$,
    \begin{talign}
    \begin{split}
        &\mathcal{L}_{\mathrm{FKE}}(\mathfrak{q} + \epsilon \eta) \\ &= \mathbb{E}_{X_0 \sim p_0} \big[ \big(\mathbb{E}\big[ (\mathfrak{q} + \epsilon \eta)(X_{\tau(T)}) | X_0 = x\big] - (\mathfrak{q} + \epsilon \eta)(X_{0}) \big)^2 \boldsymbol{1}_{(A \cup B)^{c}}(X_{0}) \\ &\qquad\qquad\qquad\qquad\qquad\qquad\quad + \lambda \big((\mathfrak{q} + \epsilon \eta)(X_{0}) - \boldsymbol{1}_{B}(X_0)\big)^2 \boldsymbol{1}_{A \cup B}(X_{0}) \big],
    \end{split} \\
    \begin{split} \label{eq:partial_epsilon_L_FKE}
        &\implies \frac{\partial}{\partial \epsilon} \mathcal{L}_{\mathrm{FKE}}(\mathfrak{q} + \epsilon \eta) \rvert_{\epsilon = 0} \\ &\qquad\qquad\qquad = \mathbb{E}_{X_0 \sim p_0} \big[ \big( \mathbb{E}\big[\mathfrak{q}(X_{\tau(T)}) | X_0\big] - \mathfrak{q}(X_{0}) \big) \big( \mathbb{E}\big[\eta(X_{\tau(T)}) | X_0\big] - \eta(X_{0}) \big) \boldsymbol{1}_{(A \cup B)^{c}}(X_{0}) \\ &\qquad\qquad\qquad\qquad\qquad + \lambda \big( \mathfrak{q}(X_{0}) - \boldsymbol{1}_{B}(X_0)\big) \eta(X_{0}) \boldsymbol{1}_{A \cup B}(X_{0}) \big]
    \end{split}
    \end{talign}
    By the characterization \eqref{eq:char_committor}, observe that the committor function $q$ satisfies that 
    $\mathcal{L}_{\mathrm{FKE}}(q) = 0$. Let $\hat{\mathfrak{q}}$ be a function that satisfies the first-order optimality condition. Then, by convexity and smoothness, we obtain that
    \begin{talign}
    \begin{split}
        0 = \mathcal{L}_{\mathrm{FKE}}(q) \geq \mathcal{L}_{\mathrm{FKE}}(\hat{\mathfrak{q}}) + \frac{\partial}{\partial \epsilon} \mathcal{L}_{\mathrm{FKE}}\big(\hat{\mathfrak{q}} + \epsilon (q-\hat{\mathfrak{q}}) \big) \rvert_{\epsilon = 0} = \mathcal{L}_{\mathrm{FKE}}(\hat{\mathfrak{q}}) \geq 0.
    \end{split}
    \end{talign}
    Hence, we have that $\mathcal{L}_{\mathrm{FKE}}(\hat{\mathfrak{q}}) = 0$ as well, and this implies the equalities $\mathbb{E}_{X_0 \sim p_0} \big[ \big(\mathbb{E}[\hat{\mathfrak{q}}(X_{\tau(T)})| X_0\big] - \hat{\mathfrak{q}}(X_{0}) \big)^2 \boldsymbol{1}_{(A \cup B)^{c}}(X_{0}) \big] = 0$ and $\mathbb{E}_{X_0 \sim p_0} \big[ \big(\hat{\mathfrak{q}}(X_{0}) - \boldsymbol{1}_{B}(X_0)\big)^2 \boldsymbol{1}_{A \cup B}(X_{0}) \big] = 0$. Hence, we have that $\hat{\mathfrak{q}}$ satisfies the conditions in equation \eqref{eq:char_committor} almost surely, and thus for all $x \in \mathbb{R}^d$ by continuity. Thus, by the uniqueness of the characterization \eqref{eq:char_committor}, we obtain that $\hat{\mathfrak{q}} = q$.
\end{proof}
\begin{proposition}
    For any $\lambda > 0$, the loss $\widehat{\mathcal{L}}_{\mathrm{FKE}}$ in equation \eqref{eq:strahan_2}
    is first-order optimal, i.e. if $\hat{\mathfrak{q}}$ satisfies that $\frac{\partial}{\partial \epsilon} \widehat{\mathcal{L}}_{\mathrm{FKE}}(\hat{\mathfrak{q}} + \epsilon \eta) \rvert_{\epsilon = 0} = 0$ for all perturbations $\eta$, then $\hat{\mathfrak{q}}$ is equal to the committor function $q$.
\end{proposition}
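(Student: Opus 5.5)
The plan is to compute the first variation of $\widehat{\mathcal{L}}_{\mathrm{FKE}}$ and show that, because the terminal value is a stopped-gradient copy, a critical point must satisfy the fixed-point system \eqref{eq:char_committor} pointwise. Uniqueness then comes from \Cref{thm:V_1_V_2}(ii), exactly as in the previous proposition. The stop-gradient matters: only the $\mathfrak{q}(X_0)$ argument is perturbed, so the loss is effectively a regression of $\mathfrak{q}(X_0)$ onto the frozen target $\bar{\mathfrak{q}}(X_{\tau(T)})$.

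\textbf{Step 1: first variation.} Differentiate at $\epsilon=0$ with $\bar{\mathfrak{q}}$ held fixed, evaluated at $\hat{\mathfrak{q}}$ after differentiation. This gives
\begin{talign*}
\frac{\partial}{\partial\epsilon}\widehat{\mathcal{L}}_{\mathrm{FKE}}(\hat{\mathfrak{q}}+\epsilon\eta)\rvert_{\epsilon=0}
= 2\,\mathbb{E}_{X_0\sim p_0}\big[\eta(X_0)\,R(X_0)\big],
\end{talign*}
where
\begin{talign*}
R(x)=\big(\hat{\mathfrak{q}}(x)-\mathbb{E}[\hat{\mathfrak{q}}(X_{\tau(T)})\mid X_0=x]\big)\boldsymbol{1}_{(A\cup B)^c}(x)+\lambda\big(\hat{\mathfrak{q}}(x)-\boldsymbol{1}_B(x)\big)\boldsymbol{1}_{A\cup B}(x).
\end{talign*}
To obtain this, I pull $\eta(X_0)$ and $\hat{\mathfrak{q}}(X_0)$ out of the inner conditional expectation, which is legitimate since both are $\sigma(X_0)$-measurable. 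This step only requires integrability of $\hat{\mathfrak{q}}(X_{\tau(T)})$, which holds under the polynomial growth and finite-moment conventions used throughout.

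\textbf{Step 2: from vanishing variation to $R\equiv 0$.} Choose $\eta=R$, or a continuous approximation of it if $R$ lies outside the admissible perturbation class. Then $\mathbb{E}_{p_0}[R(X_0)^2]=0$, so $R=0$ $p_0$-almost everywhere. Because $\lambda>0$, splitting on the disjoint sets $A\cup B$ and $(A\cup B)^c$ yields two conditions. On $A\cup B$, $\hat{\mathfrak{q}}=\boldsymbol{1}_B$, that is, $\hat{\mathfrak{q}}=0$ on $A$ and $\hat{\mathfrak{q}}=1$ on $B$. On $(A\cup B)^c$, $\hat{\mathfrak{q}}(x)=\mathbb{E}[\hat{\mathfrak{q}}(X_{\tau(T)})\mid X_0=x]$. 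Both hold $p_0$-a.e.

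\textbf{Step 3: upgrade to everywhere and conclude.} Assume $p_0$ has full support and $\hat{\mathfrak{q}}$ is continuous. The map $x\mapsto\mathbb{E}_x[\hat{\mathfrak{q}}(X_{\tau(T)})]$ is continuous by the Feller property together with continuity of the capped exit functional under uniform ellipticity. Therefore both identities extend to every point, and $\hat{\mathfrak{q}}$ solves \eqref{eq:char_committor}. The uniqueness part of \Cref{thm:V_1_V_2}(ii) then gives $\hat{\mathfrak{q}}=q$.

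Applying that theorem requires one reformulation. I would work with the rescaled version $q_\xi$, setting $\Phi_2=-\log q_\xi$ so that it is bounded below. Note that the fixed-point relation is linear in $\hat{\mathfrak{q}}$, so it passes through the affine map \eqref{eq:rescaled_committor_committor}. Alternatively, I would run the telescoping argument of Part 1 of that proof directly on $\hat{\mathfrak{q}}$. This iterates the identity to $\hat{\mathfrak{q}}(x)=\mathbb{E}_x[\hat{\mathfrak{q}}(X_{\tau(nT)})]$ and lets $n\to\infty$ using $\tau<\infty$ a.s. It also needs dominated convergence, which follows from boundedness of $\hat{\mathfrak{q}}$ or from a Lemma~\ref{lem:step1prime_tail}-type bound.

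\textbf{Main obstacle.} I expect the hardest part to be this last limit together with the a.e.-to-everywhere upgrade. The telescoping is clean only when $\hat{\mathfrak{q}}$ is bounded. Otherwise I would need a uniform-integrability control on $\hat{\mathfrak{q}}(X_{nT})\boldsymbol{1}_{\tau>nT}$, for which I would try the stationary-start Vitali argument of Lemma~\ref{lem:step1prime_tail}. By contrast, Steps 1 and 2 are routine.
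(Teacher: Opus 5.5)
Your proposal is correct and follows essentially the paper's route. You compute the first variation with the target frozen, pick a perturbation that turns it into minus a squared residual, obtain \eqref{eq:char_committor} almost everywhere, upgrade by continuity, and conclude by the uniqueness of that characterization via \Cref{thm:V_1_V_2}(ii); your single choice $\eta=R$ just merges the paper's two successive perturbations, first $\mathbb{E}[\hat{\mathfrak{q}}(X_{\tau(T)})\mid X_0]-\hat{\mathfrak{q}}(X_0)$ and then $\hat{\mathfrak{q}}-\boldsymbol{1}_B$ on $A\cup B$.

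Your extra care at the final uniqueness step is warranted, since the paper invokes it without stating the class in which uniqueness holds. Of your two options there, prefer the direct telescoping on $\hat{\mathfrak{q}}$ under a boundedness or uniform-integrability assumption. The $-\log q_\xi$ route presupposes $\xi+(1-2\xi)\hat{\mathfrak{q}}>0$, which you do not know until you have shown $\hat{\mathfrak{q}}=q$.
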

\begin{proof}
Observe that for an arbitrary $\mathfrak{q}$ and an arbitrary perturbation $\eta$,
\begin{talign}
\begin{split}
    &\widehat{\mathcal{L}}_{\mathrm{FKE}}(\mathfrak{q} + \epsilon \eta) = \mathbb{E}_{X_0 \sim p_0} \big[ \mathbb{E}\big[ \big( \mathfrak{q}(X_{\tau(T)}) - (\mathfrak{q} + \epsilon \eta)(X_{0}) \big)^2 | X_0\big] \boldsymbol{1}_{(A \cup B)^{c}}(X_{0}) \\ &\qquad\qquad\qquad\qquad\qquad\qquad\quad + \lambda \big( (\mathfrak{q} + \epsilon \eta)(X_{0}) - \boldsymbol{1}_{B}(X_0)\big)^2 \boldsymbol{1}_{A \cup B}(X_{0}) \big],
\end{split} \\
\begin{split} \label{eq:partial_epsilon_hat_L_FKE}
    &\implies \frac{\partial}{\partial \epsilon} \widehat{\mathcal{L}}_{\mathrm{FKE}}(\mathfrak{q} + \epsilon \eta) \rvert_{\epsilon = 0} \\ &\qquad\qquad\qquad = \mathbb{E}_{X_0 \sim p_0} \big[ \mathbb{E}\big[\langle \mathfrak{q}(X_{\tau(T)}) - \mathfrak{q}(X_{0}), - \eta(X_{0}) \rangle | X_0\big] \boldsymbol{1}_{(A \cup B)^{c}}(X_{0}) \\ &\qquad\qquad\qquad\qquad\qquad + \lambda \langle \mathfrak{q}(X_{0}) - \boldsymbol{1}_{B}(X_0), \eta(X_{0}) \rangle \boldsymbol{1}_{A \cup B}(X_{0}) \big]
    \\ &\qquad\qquad\qquad = \mathbb{E}_{X_0 \sim p_0} \big[ \langle \mathbb{E}\big[ \mathfrak{q}(X_{\tau(T)}) | X_0\big] - \mathfrak{q}(X_{0}), - \eta(X_{0}) \rangle \boldsymbol{1}_{(A \cup B)^{c}}(X_{0}) \\ &\qquad\qquad\qquad\qquad\qquad + \lambda \langle \mathfrak{q}(X_{0}) - \boldsymbol{1}_{B}(X_0), \eta(X_{0}) \rangle \boldsymbol{1}_{A \cup B}(X_{0}) \big],
\end{split}
\end{talign}
Now, if we set $\eta(X_{0}) = \mathbb{E}\big[ \mathfrak{q}(X_{\tau(T)}) | X_0\big] - \mathfrak{q}(X_{0})$, the right-hand side becomes
\begin{talign}
\begin{split}
    &- \mathbb{E}_{X_0 \sim p_0} \big[ \big( \mathbb{E}\big[ \mathfrak{q}(X_{\tau(T)}) | X_0\big] - \mathfrak{q}(X_{0}) \big)^2 \boldsymbol{1}_{(A \cup B)^{c}}(X_{0}) \\ &\qquad\qquad + \lambda \langle \mathfrak{q}(X_{0}) - \boldsymbol{1}_{B}(X_0), \mathbb{E}\big[ \mathfrak{q}(X_{\tau(T)}) | X_0\big] - \mathfrak{q}(X_{0}) \rangle \boldsymbol{1}_{A \cup B}(X_{0}) \big] \\ &= - \mathbb{E}_{X_0 \sim p_0} \big[ \big( \mathbb{E}\big[ \mathfrak{q}(X_{\tau(T)}) | X_0\big] - \mathfrak{q}(X_{0}) \big)^2 \boldsymbol{1}_{(A \cup B)^{c}}(X_{0}) \big], 
\end{split}    
\end{talign}
where the last equality holds because when $X_0 \in A \cup B$, the capped hitting time satisfies $\tau(T) = 0$, and thus, $\mathbb{E}\big[ \mathfrak{q}(X_{\tau(T)}) | X_0\big] = \mathfrak{q}(X_{0})$. We conclude that any minimizer of $\widehat{\mathcal{L}}_{\mathrm{FKE}}$ must satisfy for almost all $x \in \mathbb{R}^d$ that
\begin{talign} \label{eq:first_cond_char}  
\mathbb{E}\big[ \mathfrak{q}(X_{\tau(T)}) | X_0 = x\big] = \mathfrak{q}(x).
\end{talign}
Next, we set $\eta$ such that for $x \in A \cup B$, $\eta(x) = \mathfrak{q}(x) - \boldsymbol{1}_{B}(x)$. Plugging this into the right-hand side of \eqref{eq:partial_epsilon_hat_L_FKE}, and using that equation \eqref{eq:first_cond_char} must hold for any minimizer, we have that for any minimizer,
\begin{talign}
    \mathbb{E}_{X_0 \sim p_0} \big[ \big( \mathfrak{q}(X_{0}) - \boldsymbol{1}_{B}(X_0) \big)^2 \boldsymbol{1}_{A \cup B}(X_{0}) \big] = 0,
\end{talign}
which means that $\mathfrak{q}(x) = \boldsymbol{1}_{B}(x)$ for almost all $x \in A \cup B$. This concludes the proof because we have shown that any minimizer $\hat{q}$ must satisfy all the conditons in the characterization \eqref{eq:char_committor} almost surely, and thus for all $x$ by continuity.
\end{proof}
\begin{theorem}
    The loss $\bar{\mathcal{L}}_{\mathrm{FKE}}$ in equation \eqref{eq:strahan_1} is not first-order optimal. In particular, the committor function $q$ is not a critical point of $\bar{\mathcal{L}}_{\mathrm{FKE}}$, i.e. there exists a perturbation $\eta$ such that $\frac{\partial}{\partial \epsilon} \bar{\mathcal{L}}_{\mathrm{FKE}}(q + \epsilon \eta) \rvert_{\epsilon = 0} < 0$.
\end{theorem}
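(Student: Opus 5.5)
The plan is to compute the directional derivative of $\bar{\mathcal{L}}_{\mathrm{FKE}}$ at the committor $q$ explicitly and exhibit a perturbation $\eta$ that makes it strictly negative. Differentiating \eqref{eq:strahan_1} under the expectation gives
\begin{talign*}
\frac{\partial}{\partial \epsilon} \bar{\mathcal{L}}_{\mathrm{FKE}}(q + \epsilon \eta) \rvert_{\epsilon = 0} &= 2\,\mathbb{E}_{X_0 \sim p_0}\big[ \mathbb{E}\big[ (q(X_{\tau(T)}) - q(X_0))(\eta(X_{\tau(T)}) - \eta(X_0)) \,|\, X_0\big] \boldsymbol{1}_{(A \cup B)^c}(X_0) \\ &\qquad + \lambda (q(X_0) - \boldsymbol{1}_B(X_0))\,\eta(X_0)\, \boldsymbol{1}_{A\cup B}(X_0) \big].
\end{talign*}
The boundary term vanishes identically, since $q = \boldsymbol{1}_B$ on $A\cup B$. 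In the interior term, the martingale property $\mathbb{E}[q(X_{\tau(T)})\,|\,X_0] = q(X_0)$ from \eqref{eq:char_committor} kills the contribution $-\eta(X_0)\,\mathbb{E}[q(X_{\tau(T)}) - q(X_0)\,|\,X_0]$. What remains is $2\,\mathbb{E}\big[\boldsymbol{1}_{(A\cup B)^c}(X_0)\,\mathbb{E}[(q(X_{\tau(T)}) - q(X_0))\,\eta(X_{\tau(T)})\,|\,X_0]\big]$, which is a conditional covariance between $q(X_{\tau(T)})$ and $\eta(X_{\tau(T)})$ given $X_0$.

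To make this negative, I would take $\eta$ supported in the interior, so that $\eta = 0$ on $A\cup B$ and the boundary term stays zero for every $\epsilon$. The natural first choice is $\eta = -\psi\, q$ with $\psi \ge 0$ a smooth bump compactly supported in $(A\cup B)^c$. The integrand then becomes
\[
-\mathbb{E}\big[(q(X_{\tau(T)}) - q(X_0))\, q(X_{\tau(T)})\,\psi(X_{\tau(T)}) \,|\, X_0\big].
\]
If $\psi$ were constant, this would equal $-\mathrm{Var}(q(X_{\tau(T)})\,|\,X_0)$, again by the martingale property. With a bump I would write $\psi = 1 - (1-\psi)$ and choose $\psi \equiv 1$ on a large compact $K \subset (A\cup B)^c$. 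Then the error term involves only paths with $X_{\tau(T)} \in (A\cup B)^c \setminus K$, which is small once $K$ is large.

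The expected obstacle is showing the variance term is strictly positive on a set of $X_0$ of positive $p_0$-mass. By the support theorem (\Cref{thm:full_supp}) under uniform ellipticity, starting from an interior $x$, the endpoint $X_{\tau(T)}$ lands with positive probability both in $B$, where $q=1$, and near $A$ or in the interior, where $q<1$. Hence $q(X_{\tau(T)})$ is nondegenerate and $\mathrm{Var}(q(X_{\tau(T)})\,|\,X_0 = x) > 0$ for every interior $x$, as long as $p_0$ charges $(A\cup B)^c$.

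A cleaner alternative avoids the truncation limit: the stopped-gradient version differs from $\bar{\mathcal{L}}_{\mathrm{FKE}}$ exactly by the extra $\eta(X_{\tau(T)})$ term. So the derivative at $q$ equals $2\,\mathbb{E}[\mathrm{Cov}(q(X_{\tau(T)}), \eta(X_{\tau(T)})\,|\,X_0)]$, and a choice $\eta = -q\chi$ with $\chi$ a cutoff makes this strictly negative by dominated convergence as $\chi \uparrow \boldsymbol{1}_{(A\cup B)^c}$. Since $q$ is not a critical point, the loss is not first-order optimal.
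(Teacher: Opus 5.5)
Your first-variation computation is correct, including the reduction via $\mathbb{E}[q(X_{\tau(T)})\,|\,X_0]=q(X_0)$ to $2\,\mathbb{E}[\boldsymbol{1}_{(A\cup B)^c}(X_0)\,\mathrm{Cov}(q(X_{\tau(T)}),\eta(X_{\tau(T)})\,|\,X_0)]$. The gap is your self-imposed requirement that $\eta$ vanish on $A\cup B$.

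With $\eta=-\psi q$ and $\psi$ supported in $(A\cup B)^c$, consider the remainder $\mathbb{E}[(q(X_{\tau(T)})-q(X_0))\,q(X_{\tau(T)})\,(1-\psi(X_{\tau(T)}))\,|\,X_0]$. It is \emph{not} confined to paths ending in $(A\cup B)^c\setminus K$. Every path absorbed in $B$ has $1-\psi=1$ and $q=1$ at its endpoint, so it contributes $(1-q(X_0))\,\mathbb{P}(X_{\tau(T)}\in B\,|\,X_0)$. This is an order-one quantity that does not shrink as $K$ grows.

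Your ``cleaner alternative'' has the same defect. As $\chi\uparrow\boldsymbol{1}_{(A\cup B)^c}$, the limit is the covariance with $-q\boldsymbol{1}_{(A\cup B)^c}$ rather than with $-q$. That limit equals $-\mathbb{E}[(q(X_T)-q(X_0))\,q(X_T)\,\boldsymbol{1}_{\{\tau>T\}}\,|\,X_0]$, whose sign is not controlled.

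For instance, take standard Brownian motion with $A=(-\infty,0]$, $B=[1,\infty)$ (so $q(x)=x$ in between), $X_0=0.9$ and $T=0.01$. The reflection principle gives $\mathbb{E}[(X_T-0.9)\,X_T\,\boldsymbol{1}_{\{\tau>T\}}]\approx-0.023$. So at this starting point your $\eta$ makes a \emph{positive} contribution: the discarded $B$-term ($\approx0.032$) is almost four times the conditional variance ($\approx0.0085$). A $p_0$ concentrated near such points makes your directional derivative positive.

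The restriction is unnecessary. As you note yourself, the penalty contributes $2\lambda\,\mathbb{E}[(q-\boldsymbol{1}_B)(X_0)\,\eta(X_0)\,\boldsymbol{1}_{A\cup B}(X_0)]=0$ for \emph{every} $\eta$, since $q=\boldsymbol{1}_B$ on $A\cup B$. It only has to vanish to first order, not for all $\epsilon$.

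The paper's proof uses $\eta=-q$, and its final formula requires this on all of $\mathcal{X}$ (so $\eta=-1$ on $B$). This is the perturbation $q\mapsto(1-\epsilon)q$. The penalty becomes $\lambda\epsilon^2\,\mathbb{E}[\boldsymbol{1}_B(X_0)]$, and the interior increment becomes $(1-\epsilon)(q(X_{\tau(T)})-q(X_0))$. The derivative is then exactly $-2\,\mathbb{E}[\boldsymbol{1}_{(A\cup B)^c}(X_0)\,\mathrm{Var}(q(X_{\tau(T)})\,|\,X_0)]$, with no truncation or limit.

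Your support-theorem argument, that $q(X_{\tau(T)})$ is non-degenerate given an interior $X_0$, then yields strict negativity. That step is one the paper merely asserts, so it is a useful addition.

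If you genuinely want perturbations vanishing on $A\cup B$ (as under the boundary-enforcing parameterization of \Cref{subsec:parameterizing_committor}), the first variation is the linear functional $\eta\mapsto2\,\mathbb{E}[\boldsymbol{1}_{(A\cup B)^c}(X_0)\,(q(X_T)-q(X_0))\,\eta(X_T)\,\boldsymbol{1}_{\{\tau>T\}}]$. You would then need a separate argument that this functional is not identically zero, and the proposal provides none.
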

\begin{proof}
Observe that for an arbitrary $\mathfrak{q}$ and an arbitrary perturbation $\eta$,
\begin{talign}
\begin{split}
    &\bar{\mathcal{L}}_{\mathrm{FKE}}(\hat{q} + \epsilon \eta) = \mathbb{E}_{X_0 \sim p_0} \big[ \mathbb{E}\big[ \big( (\mathfrak{q} + \epsilon \eta)(X_{\tau(T)}) - (\mathfrak{q} + \epsilon \eta)(X_{0}) \big)^2 | X_0\big] \boldsymbol{1}_{(A \cup B)^{c}}(X_{0}) \\ &\qquad\qquad\qquad\qquad\qquad\qquad\quad + \lambda \big( (\mathfrak{q} + \epsilon \eta)(X_{0}) - e^{g}(X_0)\big)^2 \boldsymbol{1}_{A \cup B}(X_{0}) \big],
\end{split} \\
\begin{split} \label{eq:partial_epsilon_2}
    &\implies \frac{\partial}{\partial \epsilon} \bar{\mathcal{L}}_{\mathrm{FKE}}(\mathfrak{q} + \epsilon \eta) \rvert_{\epsilon = 0} = \\ &\qquad\qquad\qquad \mathbb{E}_{X_0 \sim p_0} \big[ \mathbb{E}\big[\big( \mathfrak{q}(X_{\tau(T)}) - \mathfrak{q}(X_{0})\big) \big( \eta(X_{\tau(T)}) - \eta(X_{0}) \big) | X_0\big] \boldsymbol{1}_{(A \cup B)^{c}}(X_{0}) \\ &\qquad\qquad\qquad\qquad\qquad + \lambda \big( \mathfrak{q}(X_{0}) - e^{g}(X_0) \big) \eta(X_{0}) \boldsymbol{1}_{A \cup B}(X_{0}) \big],
\end{split}
\end{talign}
The committor function $q$ satisfies the characterization \eqref{eq:char_committor}, which means that $\mathbb{E}_{X_0 \sim p_0} \big[ \big( q(X_{0}) - e^{g}(X_0) \big) \eta(X_{0}) \boldsymbol{1}_{A \cup B}(X_{0}) \big] = 0$ for any perturbation $\eta$. Now, if we set $\eta(x) = - q(x)$ for any $x \in (A \cup B)^c$, we obtain that the right-hand side of \eqref{eq:partial_epsilon_2} becomes:
\begin{talign}
    \frac{\partial}{\partial \epsilon} \bar{\mathcal{L}}_{\mathrm{FKE}}(q + \epsilon \eta) \rvert_{\epsilon = 0} = - \mathbb{E}_{X_0 \sim p_0} \big[ \mathbb{E}\big[\big( q(X_{\tau(T)}) - q(X_{0}) \big)^2 | X_0\big] \boldsymbol{1}_{(A \cup B)^{c}}(X_{0}) \big].
\end{talign}
Note that the right-hand side is negative, which means that this $\eta$ is a descent direction of $\mathcal{L}_{\mathrm{FKE}}$ on the committor function $q$, and thus, the committor function is not a critical point of the loss.
\end{proof}

\subsection{Guarantees of the loss function by \cite{mitchell2024committor}} \label{subsec:committor}

\cite{mitchell2024committor} parameterize the committor using a neural network $\hat{q}$ and consider the following loss, which we denote by logarithmic committor loss:
\begin{talign}
    \mathcal{L}_{\mathrm{LOG}}(\mathfrak{q}) &= \mathbb{E}_{X_0 \sim p_0} \big[ \big( \log \mathbb{E}[\bar{\mathfrak{q}}(X_{\tau(T)}) \boldsymbol{1}_{(A \cup B)^{c}}(X_{\tau(T)}) + \boldsymbol{1}_{B}(X_{\tau(T)}) | X_0\big] - \log \mathfrak{q}(X_{0}) \big)^2 \big],
\end{talign}
where $\bar{\mathfrak{q}} = \texttt{stopgrad}(\mathfrak{q})$\footnote{Even though the loss they write in their equation 25 does not make the stopped gradient explicit, they comment on it in the subsequent paragraph.}. Note that this loss is hard to scale, because it requires estimating a conditional expectation with respect to $X_0$ at each point $X_0$.

\begin{theorem}
    The loss $\mathcal{L}_{\mathrm{LOG}}$ is first-order optimal, i.e. if $\hat{\mathfrak{q}}$ satisfies that $\frac{\partial}{\partial \epsilon} \mathcal{L}_{\mathrm{LOG}}(\hat{\mathfrak{q}} + \epsilon \eta) \rvert_{\epsilon = 0} = 0$ for all perturbations $\eta$, then $\hat{\mathfrak{q}}$ is equal to the committor function $q$.
\end{theorem}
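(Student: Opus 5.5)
The argument mirrors the proof for $\widehat{\mathcal{L}}_{\mathrm{FKE}}$. The key observation is that the stopped gradient sits inside the conditional expectation, so only the second logarithm depends on $\epsilon$. Write
\[
G_{\mathfrak{q}}(x) := \mathbb{E}\big[\bar{\mathfrak{q}}(X_{\tau(T)}) \boldsymbol{1}_{(A \cup B)^{c}}(X_{\tau(T)}) + \boldsymbol{1}_{B}(X_{\tau(T)}) \,|\, X_0 = x\big].
\]
This quantity is frozen under perturbation. Differentiating at $\epsilon=0$ gives
\[
\frac{\partial}{\partial \epsilon} \mathcal{L}_{\mathrm{LOG}}(\mathfrak{q}+\epsilon\eta)\Big|_{\epsilon=0} = -2\,\mathbb{E}_{X_0\sim p_0}\Big[\big(\log G_{\mathfrak{q}}(X_0) - \log \mathfrak{q}(X_0)\big)\frac{\eta(X_0)}{\mathfrak{q}(X_0)}\Big].
\]
This uses the standing positivity of $\mathfrak{q}$, which is needed for the logarithm to make sense. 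I will choose the admissible perturbation
\[
\eta = \mathfrak{q}\,\big(\log G_{\mathfrak{q}} - \log \mathfrak{q}\big).
\]
The derivative then becomes $-2\,\mathbb{E}_{p_0}\big[(\log G_{\mathfrak{q}}-\log\mathfrak{q})^2\big]$. At a critical point $\hat{\mathfrak{q}}$ this forces $\hat{\mathfrak{q}} = G_{\hat{\mathfrak{q}}}$ $p_0$-a.e. Since $p_0$ has full support, continuity of both sides upgrades this to every $x$.

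Next I unpack this identity into the characterization \eqref{eq:char_committor}, splitting by the starting point.
\begin{itemize}
\item If $x\in A\cup B$, then $\tau(T)=0$ and $X_{\tau(T)}=x$. Hence $G_{\hat{\mathfrak{q}}}(x)=\boldsymbol{1}_B(x)$, so $\hat{\mathfrak{q}}=1$ on $B$ and $\hat{\mathfrak{q}}=0$ on $A$.
\item If $x\notin A\cup B$, then $G_{\hat{\mathfrak{q}}}(x)=\mathbb{E}[\hat{\mathfrak{q}}(X_{\tau(T)})\,|\,X_0=x]$ once the boundary values just derived are used.
\end{itemize}
The $A$ case clashes with taking $\log\hat{\mathfrak{q}}$. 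I will handle this either by reading the equation on $A$ in the limiting sense $\log\hat{\mathfrak{q}}\to-\infty$, or, more cleanly, by interpreting the loss with the rescaled targets $\xi,1-\xi$ as in \eqref{eq:kbe_xi}. With the rescaled targets the log-equation on $\mathcal{S}^c$ is exactly \eqref{eq:fixed_point_2} with $\Phi_2=-\log\hat{\mathfrak{q}}$. I will carry out the argument so it covers both readings.

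Finally I invoke uniqueness. With $\Phi_2 := -\log \hat{\mathfrak{q}}$, the identity $\hat{\mathfrak{q}}=G_{\hat{\mathfrak{q}}}$ says precisely that $\Phi_2$ solves the path-integral fixed-point problem \eqref{eq:fixed_point_2} with terminal cost $g=-\log\boldsymbol{1}_B$ (or its $\xi$-rescaled version). \Cref{thm:V_1_V_2}(ii) then gives $\Phi_2=\Phi$, i.e.\ $\hat{\mathfrak{q}}=q$. Equivalently, without logarithms, $\hat{\mathfrak{q}}$ satisfies \eqref{eq:char_committor}, whose solution is unique by the telescoping argument of Part~1 of \Cref{thm:V_1_V_2_appendix}, just as in the $\mathcal{L}_{\mathrm{FKE}}$ propositions. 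For the converse direction, that $q$ is indeed critical, I note that at $q$ the residual $\log G_q-\log q$ vanishes identically.

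I expect the main obstacle to be the degenerate boundary $A$, where $\log 0$ appears. Two issues arise there: the hypotheses of \Cref{thm:V_1_V_2}(ii) need a potential bounded below, and the dominated-convergence step needs $g\ge K>0$. I would first try the $\xi$-rescaling. If that is unavailable, I would instead apply the linear uniqueness for \eqref{eq:char_committor} directly, using boundedness $0\le\hat{\mathfrak{q}}\le 1$ as the dominating function, since $\mathfrak{q}$ is a probability-valued network. A secondary point is making sure the chosen $\eta$ lies in the admissible perturbation class. I would handle this by a smooth approximation plus continuity of the derivative in $\eta$.
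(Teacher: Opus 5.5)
Your proposal is correct and follows the paper's route. Both differentiate with the stopped-gradient target frozen and deduce $\hat{\mathfrak{q}}(x)=\mathbb{E}[\hat{\mathfrak{q}}(X_{\tau(T)})\boldsymbol{1}_{(A\cup B)^c}(X_{\tau(T)})+\boldsymbol{1}_B(X_{\tau(T)})\,|\,X_0=x]$ almost everywhere. Both then use $\tau(T)=0$ on $A\cup B$ to recover \eqref{eq:char_committor} and invoke its uniqueness; you differ only in choosing the explicit perturbation $\eta=\hat{\mathfrak{q}}(\log G_{\hat{\mathfrak{q}}}-\log\hat{\mathfrak{q}})$ and in confronting the $\log 0$ problem on $A$, which the paper passes over, and the simplest fix is to take $p_0$ supported on $\mathcal{S}^c$, since $\boldsymbol{1}_B$ in the target already encodes the boundary data and your bounded-convergence uniqueness argument then gives $\hat{\mathfrak{q}}=q$ on $\mathcal{S}^c$.
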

\begin{proof}
    Observe that for an arbitrary $\mathfrak{q}$ and an arbitrary perturbation $\eta$,
    \begin{talign}
    \begin{split}
        &\mathcal{L}_{\mathrm{LOG}}(\mathfrak{q} + \epsilon \eta) \\ &= \mathbb{E}_{X_0 \sim p_0} \big[ \big( \log \mathbb{E}[\bar{\mathfrak{q}}(X_{\tau(T)}) \boldsymbol{1}_{(A \cup B)^{c}}(X_{\tau(T)}) + \boldsymbol{1}_{B}(X_{\tau(T)}) \boldsymbol{1}_{A \cup B}(X_{\tau(T)}) | X_0\big] - \log (\mathfrak{q}+\epsilon \eta)(X_{0}) \big)^2 \big],
    \end{split} \\
    \begin{split} \label{eq:partial_epsilon_L_log}
        &\implies \frac{\partial}{\partial \epsilon} \mathcal{L}_{\mathrm{LOG}}(\mathfrak{q} + \epsilon \eta) \rvert_{\epsilon = 0} \\ &\quad = \mathbb{E}_{X_0 \sim p_0} \big[ \big( \log \mathfrak{q}(X_{0}) - \log \mathbb{E}[\bar{\mathfrak{q}}(X_{\tau(T)}) \boldsymbol{1}_{(A \cup B)^{c}}(X_{\tau(T)}) + 
        \boldsymbol{1}_{B}
        (X_{\tau(T)}) \boldsymbol{1}_{A \cup B}(X_{\tau(T)}) | X_0\big] \big) \frac{\eta(X_0)}{\mathfrak{q}(X_{0})} \big].
    \end{split}
    \end{talign}
    Thus, the condition $\frac{\partial}{\partial \epsilon} \mathcal{L}_{\mathrm{LOG}}(\hat{\mathfrak{q}} + \epsilon \eta) \rvert_{\epsilon = 0} = 0$ implies that:
    \begin{talign}
    \begin{split}
        0 = \mathbb{E}_{X_0 \sim p_0} \big[ \big( \log \hat{\mathfrak{q}}(X_{0}) - \log \mathbb{E}[\hat{\mathfrak{q}}(X_{\tau(T)}) \boldsymbol{1}_{(A \cup B)^{c}}(X_{\tau(T)}) + 
        \boldsymbol{1}_{B}
        (X_{\tau(T)}) \boldsymbol{1}_{A \cup B}(X_{\tau(T)}) | X_0\big] \big) \frac{\eta(X_0)}{\hat{\mathfrak{q}}(X_{0})} \big].
    \end{split}    
    \end{talign}
    For this to hold true for any perturbation $\eta$, we need that for almost any $x \in \mathbb{R}^d$, 
    \begin{talign}
    \begin{split}
        \log \hat{\mathfrak{q}}(x) = \log \mathbb{E}[\hat{\mathfrak{q}}(X_{\tau(T)}) \boldsymbol{1}_{(A \cup B)^{c}}(X_{\tau(T)}) + 
        \boldsymbol{1}_{B}
        (X_{\tau(T)}) \boldsymbol{1}_{A \cup B}(X_{\tau(T)}) | X_0 = x\big] \\
        \iff \hat{\mathfrak{q}}(x) = \mathbb{E}[\hat{\mathfrak{q}}(X_{\tau(T)}) \boldsymbol{1}_{(A \cup B)^{c}}(X_{\tau(T)}) + 
        \boldsymbol{1}_{B}
        (X_{\tau(T)}) \boldsymbol{1}_{A \cup B}(X_{\tau(T)}) | X_0 = x\big].
    \end{split}
    \end{talign}
    Since when $x \in \mathcal{S}$, $\tau(T) = 0$, this equation is equivalent to equation \eqref{eq:char_committor}, which implies that its only solution is the committor function $q$.
\end{proof}

\section{Related work}

\subsection{Stochastic optimal control for sampling and inference} Our work builds on and extends prior works on deep learning methods for general SOC problems, as well as works that study rare event simulation from the SOC perspective. The Value Matching loss is related to Soft Actor Critic. We discuss relevant references in these three subareas below. 

\textbf{Solving SOC Problems with Deep Learning}. \citet{weinan2017deep,han2018solving} pioneered deep learning SOC algorithms based on solving forward-backward SDEs (FBSDEs). Building on those, \citep{nusken2021solving} instead introduced the moment loss and the log-variance loss. All these losses are intimately connected to the Value Matching loss that we introduce (see \Cref{subsec:comparison_SOC_losses} for a thorough comparison). \citet{kappen2016adaptive,hua2024efficient} proposed importance sampling-based approaches for adaptive and off-policy learning. Stochastic optimal control matching~\citep{domingo2024stochastic} rewrote the cross-entropy method as a regression objective with variance reduction. Adjoint matching~\citep{domingoadjoint2025} solved the SOC problem by explicitly integrating the adjoint ODE while proposing a more computationally efficient adjoint-matching objective. \citet{pidstrigach2025conditioning} reformulate and extend the framework of \citep{domingo2024stochastic} from a Malliavin calculus perspective. For a recent review, we refer to~\citep{domingo2024stochastic,domingo2024taxonomy}.

\textbf{Soft Actor-Critic methods}. The Soft Actor-Critic (SAC) method was introduced in RL by ~\citep{haarnoja2018soft}, and has become the most prominent approach to off-policy RL in many areas. Since SOC is KL-regularized RL with SDE trajectories, SAC can be adapted to SOC problems, and \cite{zhou2025solving} studies its convergence in the episodic setting, i.e. in the case where the loss function is constructed at a trajectory level. We compare the VM loss to actor-critic methods in detail in \Cref{subsec:SAC}.

\textbf{SOC in Rare Events}. The closest works to us are~\cite{hartmann2012efficient,hartmann2013characterization}, which first introduce SOC to characterize rare events. In particular, \citet{hartmann2013characterization} contains a control formulation with hitting times which is very related to ours; we devote \Cref{subsec:hartmann} to a thorough comparison with~\citep{hartmann2013characterization}. Subsequent works have been extended using neural network parameterization and improved objectives~\citep{yan2022learning,holdijk2024stochastic}, although their settings are different as they do not consider hitting time problems. 
A related rare event SOC problem with fixed horizon as well is estimating Doob's h-transforms. 
\citet{singh2023variational,singh2024splitting,du2024doob} proposed deep learning approaches to learn Doob's h-transforms.
Very few works in SOC or RL have proposed principled algorithms for stopping time horizon problems, one of them being \cite{ribera2025random_horizons} which shows that the REINFORCE algorithm can be extended to stopping time horizons.

\subsection{Machine learning for rare event sampling} Below, we summarize the main existing methods for rare event sampling, which are not based on SOC.

\textbf{Estimating Committor Function}. \citet{khoo2019solving} first proposed to parameterize the committor function by neural networks and solve the variational formulation of backward Kolmogorov equation (BKE)~\citep{vanden2006towards}. To mitigate the sampling difficulty in reactive region, Li et al.~\cite{li2019computing} proposed to use metadynamics and temperature annealing to accelerate the sampling and training procedure. Later on, the idea of active importance sampling was introduced by~\citep{rotskoff2022active} to use learned committor function with umbrella sampling to augment the sampling process. \citep{hasyim2022supervised} further improved the approach by including supervised learning and better umbrella initialization. \citep{kang2024computing} extended this idea by using a different design of bias potential related to the minimum variance estimator of the reaction rate. In addition to solving the variational formulation for BKE, ~\citep{li2022semigroup,strahan2023predicting} learn to match the committor function through the Feynman-Kac formulation in a self-consistency manner. \citet{mitchell2024committor} instead matches the logarithmic of the committor function to emphasize regions near one side.

\textbf{Transition Path Sampling}. Transition path sampling methods were established by ~\citet{bolhuis2002transition} and provided a way to estimate reaction rate. \citet{peters2006obtaining} formulated learning committor function as a maximum likelihood objective (the neural network parameterization was introduced in~\citep{jung2019artificial}) and further rewritten by~\citep{sun2022multitask} as a classification problem when transition paths from each configuration are available. \citet{hummer2004transition} built the connection between the posterior probability of transition path given a configuration and the committor function. The idea was leveraged in~\citep{jung2023machine} in combination with a path sampling method to learn the committor function.

\subsection{Comparison with the SOC committor functions framework of \cite{hartmann2013characterization}} \label{subsec:hartmann}
The beginning of our 
\Cref{sec:methodology}
mirrors closely the SOC formulation of the committor function problem exposed in Hartmann et al.~\cite[Sec.~6.1,~7.2]{hartmann2013characterization}, but beyond that, our work differs substantially from theirs. These are the biggest differences:
\begin{enumerate}[label=(\roman*),left=3pt]
    \item \emph{Uncapped vs. capped hitting times}: The framework of \cite{hartmann2013characterization} only considers uncapped hitting times $\tau$, which as we discuss in \Cref{subsec:SOC_hitting_times} can make algorithms potentially very expensive to run when both $A$ and $B$ are very hard to reach. In \Cref{subsec:SOC_hitting_times}, we show that the SOC problem with uncapped hitting times is equivalent to two fixed-point problems that involve hitting times $\tau(T)$ capped at an arbitrary horizon $T \in (0,+\infty) \cup \{+\infty\}$. In \Cref{sec:solving_SOC}, the algorithms that we propose use capped hitting times.
    \item \emph{Algorithmic improvements}: Up to the fact that we use capped hitting times, the algorithmic realization proposed by \citet[Sec.~7.3]{hartmann2013characterization} is essentially the direct backpropagation approach that we describe in \Cref{subsec:kl_method}. However, this loss function is not entirely principled, because we are not able to backpropagate through the hitting time, and it does not handle off-policy sampling, all of which motivates our development of the Value Matching loss.
    \item \emph{Choosing the terminal costs appropriately}: \citet{hartmann2013characterization} considers terminal costs of the form $g(x) = - \log \mathrm{1}_{B}(x)$, which are ill-posed numerically due to $-\log 0 = +\infty$. 
    As we describe in \Cref{subsec:parameterizing_committor}, we reformulate the problem to learning the rescaled committor $q^{(\xi)}_B$, which does have well-behaved terminal costs.
    \item \emph{Architectural differences}: \citet{hartmann2013characterization} parameterize the learned committor using a finite mixture of feature functions, which may not scale well to high-dimensional problems. The architecture that we introduce in  \Cref{subsec:parameterizing_committor} 
    is scalable and is critical to the empirical success of our method.
    \item \emph{Reactive flux preserving simulation}: One major limitation of the SOC formulation is that it requires sampling transition paths. We derive the stationary distribution of the controlled process and a reactive flux preserving process with effectively lower energy barriers for sampling transition paths.
\end{enumerate}

\subsection{Comparison with the FBSDE and moment log-variance SOC losses} \label{subsec:comparison_SOC_losses}

The Value Matching loss function builds on existing works on deep learning SOC loss functions. In this subsection we compare the time-dependent, finite-horizon value matching loss function introduced in \Cref{thm:vm_finite} to these loss functions, which were also developed in the time-dependent, finite-horizon setting.

Given the control problem \eqref{eq:finite_horizon_SOC_1}-\eqref{eq:finite_horizon_SOC_2}, the forward and backward SDEs (FBSDEs) are the coupled equations:
\begin{talign} \label{eq:forward_SDE}
    \mathrm{d}X_s &= b(X_s,s) \, \mathrm{d}s + \sigma(X_s,s) \, \mathrm{d}W_s, \qquad X_0 \sim p_0, \\
    \mathrm{d}Y_s &= \big( - f(X_s,s) + \frac{1}{2} \|Z_s\|^2 \big) \, \mathrm{d}s + \langle Z_s, \mathrm{d}W_s \rangle, \qquad Y_T = g(X_T)
    \label{eq:backward_SDE}
\end{talign}
Setting $Y_s = V(X_s,s)$ and $Z_s = - u^{\star}(X_s,s) = \sigma^{\top} \nabla V(X_s,s)$ solves \eqref{eq:forward_SDE}-\eqref{eq:backward_SDE}. 

FBSDEs were used to design Monte Carlo methods for SOC in the early nineties, a notable mention being \emph{least squares Monte Carlo}, in which $(Z_s)_{0 \leq s \leq T}$ is approximated backwards in time by iteratively solving least squares regression problems (see \cite[Chapter~3]{pham2009continuous}). \cite{weinan2017deep,han2018solving} developed pioneering deep learning SOC methods based on FBSDEs. They parameterized the control function using a neural network $u = u_{\theta}$, and considered the following object:
\begin{talign} \label{eq:Y_T}
    Y_T(y_0,u) = y_0 + \int_0^T \big( - f(X_s,s) + \frac{1}{2}\|u(X_s,s)\|^2 \big) \, \mathrm{d}s - \int_0^T \langle u(X_s,s), \mathrm{d}W_s \rangle,
\end{talign}
which is a solution of the backward SDE \eqref{eq:backward_SDE} with $Z_s = - u(X_s,s)$, and with the initial condition $Y_0 = y_0$ instead of the final condition $Y_T = g(X_T)$. The loss function they consider is precisely the $L^2$ discrepancy between $Y_T(y_0,u)$ and its target value $g(X_T)$:
\begin{talign} \label{eq:L_FBSDE}
    \mathcal{L}_{\mathrm{FBSDE}}(u,y_0) = \mathbb{E}\big[\frac{1}{2}\big( Y_T(y_0,u) - g(X_T) \big)^2 \big].
\end{talign}
That is, $y_0 \in \mathbb{R}^d$ and $u=u_{\theta}$ are optimized jointly.
In the setting that they consider, the initial distribution is a Dirac delta: $p_0 = \delta_{x_0}$, which means that at optimality $y^{\star}_0 = V(x_0,0)$. 

A shortcoming of the loss $\mathcal{L}_{\mathrm{FBSDE}}$ is that the uncontrolled trajectories $X$ are usually quite different from the optimally controlled ones, which means that the control may not be well learned at regions of interest. To remedy this, \citet[Sec.~III.B]{hartmann2019variational} introduced a generalized FBSDE system:
\begin{talign} \label{eq:general_forward_SDE}
    \mathrm{d}X^v_s &= \big( b(X^v_s,s) + \sigma(X^v_s,s) v(X^v_s,s) \big) \, \mathrm{d}s + \sigma(X^v_s,s) \, \mathrm{d}W_s, \qquad X_0 \sim p_0, \\
    \mathrm{d}Y_s &= \big( - f(X_s,s) + \langle v(X^v_s,s), Z_s \rangle + \frac{1}{2} \|Z_s\|^2 \big) \, \mathrm{d}s + \langle Z_s, \mathrm{d}W_s \rangle, \qquad Y_T = g(X_T)
    \label{eq:general_backward_SDE}
\end{talign}
where $v$ is an arbitrary vector-valued function with the appropriate regularity conditions. In analogy with \eqref{eq:Y_T}, \cite{nusken2021solving} considered the object
\begin{talign} 
\begin{split} \label{eq:Y_T_nusken}
    Y^v_T(y_0,u) &= y_0 + \int_0^T \big( - f(X^v_s,s) - \langle u(X^v_s,s), v(X^v_s,s) \rangle + \frac{1}{2} \|u(X^v_s,s)\|^2 \big) \, \mathrm{d}s \\ &\quad - \int_0^T \langle u(X^v_s,s), \mathrm{d}W_s \rangle,
\end{split}
\end{talign}
and in analogy with \eqref{eq:L_FBSDE}, they introduced the moment loss function\footnote{Together with the moment loss, \cite{nusken2021solving} also introduced the log-variance loss, in which the parameter $y_0$ is replaced with an empirical average: $\mathcal{L}_{\mathrm{LogVar}_{v}}(u) \! = \! \mathrm{Var}[ Y^v_T(0,u) \! - \! g(X^v_T) ] \! = \! \mathbb{E} [ ( Y^v_T(0,u) \! - \! g(X^v_T) )^2 ] \! - \! \mathbb{E} [ Y^v_T(0,u) \! - \! g(X^v_T) ]^2$. Although the log-variance loss has been comparatively more popular than the moment loss, it is not as closely related to the Value Matching loss.}, which generalizes the FBSDE loss as follows:
\begin{talign} \label{eq:Mom_v}
    \mathcal{L}_{\mathrm{Mom}_v}(u,y_0) = \mathbb{E}\big[ \frac{1}{2} \big( Y^v_T(y_0,u) - g(X^v_T) \big)^2 \big].
\end{talign}
\cite{nusken2021solving} also consider the setting $p_0 = \delta_{x_0}$, under which $y^{\star}_0 = V(x_0,0)$ as well. 

Next, we compare $\mathcal{L}_{\mathrm{Mom}_v}$ with the Value Matching loss function. If we set $\kappa = 1$ and we substitute equation \eqref{eq:general_forward_SDE} into equation \eqref{eq:L_VM_finite_rewritten}, the VM loss reads
\begin{talign}
\begin{split} \label{eq:L_VM_finite_rewritten_2}
    \mathcal{L}_{\mathrm{VM}_{v,1}}(\phi) &= 
    \mathbb{E} \bigl[ \frac{1}{2} \bigl( \int_0^{T} \langle \sigma^{\top} \nabla_x \phi(X^{v}_t,t), (v(X^v_s,s) + \frac{1}{2} \sigma^{\top} \nabla_x \phi(X^{v}_t,t) ) \, \mathrm{d}t + \mathrm{d}W_t \rangle \\ &\qquad\quad + \phi(X^{v}_0,0) - \int_0^T f(X^{v}_t,t) \, \mathrm{d}t - g(X^{v}_T) \bigr)^2 \bigr],
\end{split}    
\end{talign}
And identifying $-\sigma^{\top} \nabla_x \phi(x,t)$ with $u(x,t)$, we can further rewrite this as
\begin{talign}
    \mathcal{L}_{\mathrm{VM}_{v,1}}(\phi) = \mathbb{E}\big[ \frac{1}{2} \big( Y^v_T(\phi(X^v_0,0),-\sigma^{\top} \nabla_x \phi) - g(X^v_T) \big)^2 \big].
\end{talign}
That is, besides the control parameterization $u = -\sigma^{\top} \nabla_x \phi$, the main difference between the moment loss and the VM loss is that instead of having an additional parameter $y_0 \in \mathbb{R}^d$ to fit the initial condition of the backward SDE, we use the learned value function $\phi(X^v_0,0)$ itself. This is replacement is intuitive, since in the case $p_0 = \delta_{x_0}$ the optimal $y^{\star}_0$ is the value function $V(x_0,0)$. Moreover, since $\phi(\cdot,0)$ is a function instead of a vector in $\mathbb{R}^d$, the VM loss can handle arbitrary initial distributions $p_0$.

\begin{remark}[Failure mode of the moment and log-variance loss function with arbitrary $p_0$]
    Running the moment loss function with an arbitrary initial distribution $p_0$ yields undesired behavior. Namely, if we decompose $Y^v_T(y_0,u) = y_0 + Y^v_T(u)$, we have that
    \begin{talign}
    \begin{split}
        \mathcal{L}_{\mathrm{Mom}_v}(u,y_0) &= \mathbb{E}\big[ \frac{1}{2} \big( y_0 + Y^v_T(u) - g(X^v_T) \big)^2 \big] \\ &= \mathbb{E}\big[ \frac{1}{2} \big( y_0 + \mathbb{E}\big[ Y^v_T(u) - g(X^v_T) \big] - \mathbb{E}\big[ Y^v_T(u) - g(X^v_T) \big] \\ &\qquad\quad + \mathbb{E}\big[ Y^v_T(u) - g(X^v_T) \big| X^v_0 \big] - \mathbb{E}\big[ Y^v_T(u) - g(X^v_T) \big| X^v_0 \big] + Y^v_T(u) - g(X^v_T) \big)^2 \big] \\ &= \frac{1}{2} \big( y_0 + \mathbb{E}\big[ Y^v_T(u) - g(X^v_T) \big] \big)^2 \\ &\quad + \frac{1}{2} \mathbb{E} \big[ \big( \mathbb{E}\big[ Y^v_T(u) - g(X^v_T) \big| X^v_0 \big] - \mathbb{E}\big[ Y^v_T(u) - g(X^v_T) \big] \big)^2 \big] \\ &\quad + \frac{1}{2} \mathbb{E} \big[ \big( Y^v_T(u) - g(X^v_T) - \mathbb{E}\big[ Y^v_T(u) - g(X^v_T) \big| X^v_0 \big] \big)^2 \big]
    \end{split}
    \end{talign}
    Thus,
    \begin{talign}
    \begin{split}
        \mathrm{argmin}_{y_0} \mathcal{L}_{\mathrm{Mom}_v}(u,y_0) &= \frac{1}{2} \mathbb{E} \big[ \big( \mathbb{E}\big[ Y^v_T(u) - g(X^v_T) \big| X^v_0 \big] - \mathbb{E}\big[ Y^v_T(u) - g(X^v_T) \big] \big)^2 \big] \\ &\quad + \frac{1}{2} \mathbb{E} \big[ \big( Y^v_T(u) - g(X^v_T) - \mathbb{E}\big[ Y^v_T(u) - g(X^v_T) \big| X^v_0 \big] \big)^2 \big]
    \end{split}    
    \end{talign}
    Thus, when $p_0$ is not a Dirac delta, the moment loss is imposing a trade-off between making $Y^v_T(u) - g(X^v_T) - \mathbb{E}\big[ Y^v_T(u) - g(X^v_T) \big| X^v_0 \big]$ small, which is the $L^2$ discrepancy of the generalized backward SDE \eqref{eq:general_backward_SDE} with the choice $y_0 = \mathbb{E}\big[ Y^v_T(u) - g(X^v_T) \big| X^v_0 \big]$, and making $\mathbb{E}\big[ Y^v_T(u) - g(X^v_T) \big| X^v_0 \big] - \mathbb{E}\big[ Y^v_T(u) - g(X^v_T) \big]$ small, which is an undesirable term that should not be there as it makes the loss function biased. The log-variance loss is also biased for arbitrary $p_0$, by an analogous argument. Switching to the VM loss is the fix to correct this bias.
\end{remark}

\begin{remark}[Setting $v = \texttt{sg}(u_{\theta})$]
    \cite{nusken2021solving} propose to choose $v = \texttt{sg}(u)$ as the stopped gradient version of the learned control $u_{\theta}$, and refer to all the methods that sample using the detached version of $u_{\theta}$ as Iterative Diffusion Optimization (IDO) algorithms. Our REACT framework also leverages the learned rescaled committor to sample the trajectories, and hence falls within the IDO setup, but we have an additional knob, $\kappa$, which lets us sample at arbitrary noise level. Sampling trajectories at different noise levels had not been considered by previous works.
\end{remark}

\begin{remark}[Comparison with the BSDE loss from \cite{nusken2023interpolating}]
    The work \cite{nusken2023interpolating} by the same authors introduces the BSDE loss, which in our setting matches the VM loss with $\kappa = 1$ as written in \eqref{eq:L_VM_finite_rewritten_2}. That is, instead of using an additional parameter $y_0$, the BSDE loss does use $\phi(X^{v}_0,0)$ as an approximation to the value function $V(X^{v}_0,0)$. In fact, their formulation is more general in that they consider elliptic and parabolic boundary value problems, and in their section 6.2, they study committor problems in a toy (non-standard) setting. Interestingly, they use the BSDE loss for the backward Kolmogorov equation \eqref{eq:BKE_bg}, instead of the HJB equation \eqref{eq:HJB_setup} like we do. Using the HJB is the superior approach because it yields a control problem that naturally allows us to sample rollouts with controlled SDEs. Besides this, other significant technical improvements of our work over \cite{nusken2023interpolating} include being able to handle different noise levels $\kappa > 0$, and our first-order optimality analysis.
\end{remark}

\subsection{Comparison with REINFORCE-style loss functions}
\label{subsec:girsanov_gradient}

REINFORCE-style loss functions like the one introduced by~\cite{hua2024efficient} exploit Girsanov's theorem to express the gradient of the SOC objective as a path-space expectation.
The starting point is the SOC cost functional for the capped problem:
\begin{equation} \label{eq:J_theta}
J(\theta) = \mathbb{E}\Big[\frac{1}{2}\int_0^{\tau(T)} |u_\theta(X^\theta_t)|^2\,\mathrm{d}t + \Phi_\theta(X^\theta_{\tau(T)})\Big],
\end{equation}
where $u_\theta = -\sigma^T\nabla\phi_\theta$ is the feedback control induced by the current value estimate, $X^\theta$ solves~\eqref{eq:DBP_SDE}, and $\Phi_\theta$ denotes the terminal cost (either $g$ on $\partial\mathcal{S}$ or $\texttt{sg}(\phi_\theta)$ otherwise). By applying Girsanov's theorem, one can show that the gradient $\nabla_\theta J(\theta)$ can be written as an expectation over the controlled process \emph{without differentiating through the SDE}:
\begin{equation} \label{eq:girsanov_grad}
\nabla_\theta J(\theta) = \mathbb{E}\Big[\Big(\frac{1}{2}\int_0^{\tau(T)} |u_\theta(X^\theta_t)|^2\,\mathrm{d}t + \Phi_\theta(X^\theta_{\tau(T)})\Big) \int_0^{\tau(T)} \big\langle \nabla_\theta u_\theta(X^\theta_t),\,\mathrm{d}W_t\big\rangle\Big].
\end{equation}
This is a REINFORCE-like estimator: the SOC cost multiplies a stochastic integral that depends on the parameter gradient of the control, but not on the gradient of the trajectory itself. This estimator does not require backpropagating through the dynamics and yields an unbiased gradient,
but its main limitation is its variance: as with any REINFORCE-type estimator, the stochastic integral factor can lead to high variance, particularly for long trajectories or when the control is far from optimal (see \cite{domingo2024taxonomy} for a comparison including this loss function on toy SOC settings). We tried this loss on the toy settings reported in \Cref{tab:main_result}, but obtained poor results and hence we do not report the values. 

\subsection{Comparison with the Soft Actor-Critic method and the work \citep{zhou2025solving}} \label{subsec:SAC}
The Soft Actor--Critic (SAC) method was introduced by ~\citet{haarnoja2018soft} as an off-policy, maximum-entropy reinforcement learning algorithm designed to
achieve both high sample efficiency and stability. By combining the actor--critic paradigm with entropy maximization, SAC encourages exploratory yet robust policies, mitigating premature convergence to
suboptimal deterministic behaviors. It leverages double Q-learning, target networks, and a reparameterized stochastic actor update to
achieve low-variance gradient estimates. SAC has become a standard
baseline for continuous control tasks (e.g., MuJoCo locomotion) and is
used widely in robotics, autonomous driving, and other real-world
decision-making problems where efficient, stable, and exploratory policy
learning is critical.

While SAC was originally formulated with an entropy regularization term, it is straight-forward to replace it by a KL regularization term. Since SOC can be viewed as KL-regularized RL with SDE trajectories, KL-regularized SAC can be applied to solve SOC problems. \cite{zhou2025solving} study the convergence of KL-regularized SAC for SOC in the episodic on-policy setting. Namely, they aim to learn the initial value function $V(\cdot,0)$, the gradient of the value function $\nabla_x V$, and the optimal control $u^{\star}$ using separate neural networks, that they refer to as the \emph{critics} $\phi : \mathbb{R}^d \to \mathbb{R}$, $\omega : \mathbb{R}^d \times [0,T] \to \mathbb{R}^d$ and the \emph{actor} $u : \mathbb{R}^d \times [0,T] \to \mathbb{R}^d$.
They sample trajectories $X^{u}$ using the learned actor $u$,
which are solutions of:\footnote{Their work considers general SOC objectives of the form $\mathbb{E}[\int_0^T r(X^{u}_t,u(X^{u}_t,t),t) \, \mathrm{d}t + g(X^{u}_t)]$ and controlled drifts of the form $b(x,u(x,t),t)$. Our work focuses on the control-affine quadratic-cost setting, which is the most general one that admits a probabilistic interpretation (\Cref{rem:SOC_probabilistic}). We translate their framework in the control-affine quadratic-cost setting for ease of comparison.}
\begin{talign}
    \mathrm{d}X^{u} = \big( b(X^{u}_t,t) + \sigma(X^{u}_t,t) u(X^{u}_t,t) \big) \, \mathrm{d}t + \sigma(X^{u}_t,t) \, \mathrm{d}W_t.
\end{talign}
They consider the following loss function for the critics:
\begin{talign}
\begin{split}
    \mathcal{L}_{c}(\phi,\omega) &= \mathbb{E}\big[ \frac{1}{2} \big( 
    \phi(X^{\bar{u}}_0) - \int_0^T \big( f(X^{\bar{u}}_t,t) + \frac{1}{2} \|\bar{u}(X^{\bar{u}}_t,t)\|^2 \big) \, \mathrm{d}t \\ &\qquad\qquad - \int_0^T \langle \sigma^{\top} \omega(X^{\bar{u}}_t,t), \mathrm{d}W_t \rangle - g(X^{\bar{u}}_T) \big)^2 \big],
\end{split}
\end{talign}
where $\bar{u} = \texttt{sg}(u)$. And they use the following loss function for the actor:
\begin{talign}
\begin{split}
    \mathcal{L}_{a}(u) &= \mathbb{E}\big[ \frac{1}{2} \int_0^T \big\| u(X^{\bar{u}}_t,t) + \sigma^{\top} 
    \bar{\omega}(X^{\bar{u}}_t,t)
    \big\|^2 \, \mathrm{d}t \big],
\end{split}
\end{talign}
where $\bar{\omega} = \texttt{sg}(\omega)$.
In their work, they analyze the function-space dynamics associated to these losses. Namely, they study the curves $(\phi^{\tau})_{\tau \geq 0}$, $(\omega^{\tau})_{\tau \geq 0}$, $(u^{\tau})_{\tau \geq 0}$ that satisfy the system of ODEs for all $(x,t) \in \mathbb{R}^d \times [0,T]$:
\begin{talign}
\begin{split} \label{eq:system_three_networks}
    \frac{\mathrm{d}}{\mathrm{d}\tau} \phi^{\tau}(x) &= - \alpha_a \frac{\delta}{\delta \phi} \mathcal{L}_{c}(\phi^{\tau},\omega^{\tau})(x) = - \alpha_c \big( \phi^{\tau}(x) - V_{u^{\tau}}(x,0) \big), \\
    \frac{\mathrm{d}}{\mathrm{d}\tau} \omega^{\tau}(x,t) &= - \alpha_c \frac{\delta}{\delta \omega} \mathcal{L}_{c}(\phi^{\tau},\omega^{\tau})(x,t) = - \alpha_a \rho^{\tau}(x,t) \sigma(x,t) \sigma(x,t)^{\top} \big( \omega^{\tau}(x,t) -  V_{u^{\tau}}(x,t) \big), \\
    \frac{\mathrm{d}}{\mathrm{d}\tau} u^{\tau}(x,t) &= - \alpha_a \frac{\delta}{\delta \omega} \mathcal{L}_{a}(\phi^{\tau},\omega^{\tau})(x,t) = - \alpha_a \rho^{\tau}(x,t) \big( u(x,t) + \sigma^{\top} \bar{\omega}(x,t) \big),
\end{split}
\end{talign}
where $V_{u^{\tau}}(x) = \mathbb{E}\big[ \int_0^T \big( f(X^{u^{\tau}}_t,t) + \frac{1}{2} \|{u^{\tau}}(X^{u^{\tau}}_t,t)\|^2 \big) \, \mathrm{d}t + g(X^{u^{\tau}}_t) \big| X^{u^{\tau}}_0 = x \big]$ and $\rho^{\tau}(x,t)$ is the density of law of $X^{u^{\tau}}_t$ at $x$. Under appropriate growth regularity assumptions, \cite[Thm.~1]{zhou2025solving} establishes a first-order optimality result similar to our \Cref{thm:vm_finite}, i.e. if $(\hat{\phi},\hat{\omega},\hat{u})$ is a fixed point of \eqref{eq:system_three_networks}, then $\hat{u}$ is the optimal control, $\hat{\phi}$ is the value function, and $\hat{\omega}$ is its gradient. 

Letting $J(u) = \mathbb{E}\big[ \int_0^T \big( f(X^{u}_t,t) + \frac{1}{2} \|{u}(X^{u}_t,t)\|^2 \big) \, \mathrm{d}t + g(X^{u}_t) \big]$ be the control objective, and assuming that it has a modulus of continuity, \cite[Thm.~1]{zhou2025solving} proves that
\begin{talign}
    \mathcal{L}^{\tau} \leq \mathcal{L}^{0} e^{-c\tau},
\end{talign}
where $\mathcal{L}^{\tau} := J(u^{\tau}) - J(u^{\star}) + \mathcal{L}_c(\phi^{\tau},\omega^{\tau})$, and $c$ is a constant depending on the parameters of the problem.

With respect to the framework of \cite{zhou2025solving}, ours approach is simpler because we have a single functional $\mathcal{L}_{\mathrm{VM}_{v,\kappa}}$ and a single neural network, although we can also handle the case in which the critic and the actor are parameterized by different neural networks (\Cref{rem:separate_architectures}).

For on-policy sampling with $\kappa=1$, using their arguments, we could write a similar linear convergence result for the functional gradient of the finite-horizon Value Matching loss in \Cref{thm:vm_finite}. We could also write a similar linear convergence result for the functional gradient of the finite-horizon Value Matching loss in \Cref{thm:vm_loss_hitting}. The arguments that they use break for off-policy sampling, and it seems that the only approach to understand the optimization landscape in the off-policy, $\kappa \neq 1$ regime is under RKHS parameterizations, as we sketch in \Cref{sec:convergence_proof}. We leave all these tasks as future work.

Another interesting direction of future work is to translate the Value Matching loss functions to the standard KL-regularized RL setting. In particular, the first-order optimality argument in \Cref{subsec:pf_VM_hitting} can be adapted to an action-level actor-critic setting, and the resulting loss function is a variation of the standard SAC method.

\section{Discrete-time, discrete-space committor and Value Matching}
\label{sec:discrete_mc}

The committor problem and the stochastic-optimal-control (SOC) reformulation of
the main text were developed for diffusions on $\mathbb{R}^d$. The same
constructions extend to discrete-time Markov chains on a general measurable
state space, including finite and countable state spaces. We organize the
appendix so that the committor problem is introduced first, in the stationary
(time-homogeneous) setting, and the Value Matching machinery follows as a tool
for its sample-based solution.
\Cref{subsec:discrete_tpt_stationary,subsec:discrete_rescaled,subsec:discrete_soc}
develop the discrete-time, discrete-space analog of the committor problem in
the stationary setting: transition path theory (TPT)
following~\citet{helfmann2020extending}, the rescaled committor and discrete
Cole--Hopf transform, and the associated KL-regularized SOC formulation.
\Cref{subsec:discrete_vm_finite} then establishes a generic, self-contained
finite-horizon, time-inhomogeneous discrete VM result with a running cost and
a terminal cost defined everywhere on $\mathcal{S}$, mirroring the
continuous~\Cref{thm:vm_finite}; this is the technical tool needed for the
next subsection, since the hitting-time committor loss must be capped at a
user-chosen horizon $N$, which turns the problem into a finite-horizon SOC.
\Cref{subsec:discrete_vm_stat} then adapts the Value Matching loss to the
(uncapped) hitting time $\tau$ via the cap $\tau(N) = \min(\tau, N)$,
mirroring the role of $\tau(T)$ in the continuous Value Matching loss
of~\Cref{sec:value_matching}; its first-order optimality is deduced
from~\Cref{subsec:discrete_vm_finite} by specialization.

\subsection{Stationary Markov chain TPT}
\label{subsec:discrete_tpt_stationary}

\paragraph{Setup.}
Let $(\mathcal{S}, \mathcal{B})$ be a measurable state space and let
$(X_n)_{n \ge 0}$ be a time-homogeneous Markov chain on $\mathcal{S}$ with
transition kernel $P : \mathcal{S} \times \mathcal{B} \to [0, 1]$, so that
$\mathbb{P}(X_{n+1} \in C \mid X_n = x) = P(x, C)$. We assume the chain admits a
stationary distribution $\rho$ ($\rho P = \rho$) and denote by $\tilde P$ the
corresponding \emph{time-reversed kernel}, defined by the duality
\begin{talign} \label{eq:disc_time_reverse}
    \rho(dx)\, \tilde P(x, dy) \;=\; \rho(dy)\, P(y, dx).
\end{talign}
When $\mathcal{S}$ is finite or countable, we write $P(x, y)$ for the one-step
transition probability and identify $\rho$ with the corresponding probability mass
function; the duality~\eqref{eq:disc_time_reverse} then reads
$\rho(x) \tilde P(x, y) = \rho(y) P(y, x)$. The chain is \emph{reversible} when
$\tilde P = P$, equivalently $\rho(x) P(x, y) = \rho(y) P(y, x)$ (detailed
balance). We let $A, B \subset \mathcal{S}$ be two disjoint measurable target sets
and write
\begin{talign} \label{eq:disc_tau}
    \tau \;=\; \inf\{ n \ge 0 : X_n \in A \cup B \}
\end{talign}
for the (uncapped) hitting time of $\mathcal{S}_{AB} := A \cup B$. We assume
$\tau < \infty$ $\mathbb{P}_x$-almost surely for $\rho$-a.e.\ $x$; this is automatic
if $\mathcal{S}$ is finite or countable and the chain is irreducible. $P$ acts on
test functions by $(P f)(x) = \int f(y)\, P(x, dy)$.

\paragraph{Forward and backward committors.}
The \emph{forward committor} is the probability that the chain reaches $B$ before
$A$,
\begin{talign} \label{eq:disc_q_def}
    q(x) \;=\; \mathbb{P}_x\!\big( X_\tau \in B \big),
\end{talign}
and satisfies the (discrete) backward Kolmogorov equation
\begin{talign} \label{eq:disc_BKE}
    \begin{cases}
        q(x) = (P q)(x), & x \notin A \cup B, \\
        q(x) = 0, & x \in A, \\
        q(x) = 1, & x \in B.
    \end{cases}
\end{talign}
The \emph{backward committor} $\tilde q(x)$ is the probability that the chain,
observed at $x$ under the stationary law, last visited $A$ rather than $B$, i.e.\
the time-reversed analog of $q$. Equivalently, $\tilde q$ is the unique bounded
solution of the BKE under $\tilde P$,
\begin{talign} \label{eq:disc_BKE_back}
    \begin{cases}
        \tilde q(x) = (\tilde P \tilde q)(x), & x \notin A \cup B, \\
        \tilde q(x) = 1, & x \in A, \\
        \tilde q(x) = 0, & x \in B.
    \end{cases}
\end{talign}
When the chain is reversible, $\tilde P = P$ and $\tilde q = 1 - q$; cf.~\citet[Sec.~3]{helfmann2020extending}.

\paragraph{Reactive density and reactive current.}
The \emph{reactive density} at $x$ is the joint probability (under the stationary
law) that the chain is at $x$ and on a transition segment from $A$ to $B$,
\begin{talign} \label{eq:disc_reactive_density}
    \rho_{\mathrm{R}}(x) \;=\; \rho(x)\, q(x)\, \tilde q(x),
\end{talign}
and the \emph{reactive current} on the directed edge $x \to y$ is the joint
probability that $X_n = x$, $X_{n+1} = y$ at stationarity, and the segment
$[n, n+1]$ lies on a reactive trajectory:
\begin{talign} \label{eq:disc_reactive_current}
    f^{\mathrm{R}}(x, y) \;=\; \rho(x)\, \tilde q(x)\, P(x, y)\, q(y),
    \qquad x, y \in \mathcal{S}.
\end{talign}
(On a general state space, replace the right-hand side by
$f^{\mathrm{R}}(x, dy) = \rho(dx)\, \tilde q(x)\, P(x, dy)\, q(y)$.) The reactive
current is \emph{discrete-divergence-free} on $(A \cup B)^c$: for every
$x \notin A \cup B$,
\begin{talign} \label{eq:disc_div_free}
    \sum_{y \in \mathcal{S}} f^{\mathrm{R}}(x, y) \;=\; \sum_{y \in \mathcal{S}} f^{\mathrm{R}}(y, x),
\end{talign}
expressing that every reactive trajectory entering a non-target state must
subsequently leave it; see~\citet[Prop.~3.5]{helfmann2020extending}.

\paragraph{Cuts and the reaction rate.}
A \emph{cut} is any (measurable) partition $\mathcal{S} = S_- \sqcup S_+$ with
$A \subseteq S_-$ and $B \subseteq S_+$. The associated \emph{edge cut} is the set
of directed edges
\begin{talign} \label{eq:disc_edge_cut}
    \Sigma \;=\; \{ (x, y) \in \mathcal{S} \times \mathcal{S} : x \in S_-,\ y \in S_+ \},
\end{talign}
and the \emph{two sides} of $\Sigma$ are exactly $S_-$ and $S_+$. The
\emph{reaction rate} is the net reactive flux across any such cut,
\begin{talign} \label{eq:disc_rate_cut}
    \nu_{\mathrm{R}} \;=\; \sum_{(x, y) \in \Sigma} \big[ f^{\mathrm{R}}(x, y) - f^{\mathrm{R}}(y, x) \big].
\end{talign}
By the divergence-free property~\eqref{eq:disc_div_free}, this value does not
depend on the choice of cut. Taking $S_- = A$ and $S_+ = A^c$ and using
$\tilde q|_A = 1$, $q|_A = 0$ gives the equivalent \emph{boundary form}
\begin{talign} \label{eq:disc_rate_boundary}
    \nu_{\mathrm{R}} \;=\; \sum_{x \in A,\, y \notin A} \rho(x)\, P(x, y)\, q(y)
    \;=\; \sum_{x \in A} \rho(x)\, (P q)(x),
\end{talign}
which holds in general (no reversibility assumption); cf.~\citet[Sec.~3.4]{helfmann2020extending}.
As in the continuous case~\eqref{eq:rate_constants_bg}, the basin probabilities
$p_A = \sum_x (1 - q(x))\, \rho(x)$ and $p_B = \sum_x q(x)\, \rho(x)$ define the
directional rate constants $k_{AB} = \nu_{\mathrm{R}} / p_A$ and
$k_{BA} = \nu_{\mathrm{R}} / p_B$. (Sums become integrals against $\rho(dx)$ on
general state spaces.)

\paragraph{Reversible case.}
When detailed balance holds, $\tilde P = P$ and $\tilde q = 1 - q$, so
\eqref{eq:disc_reactive_density}--\eqref{eq:disc_reactive_current} become
\begin{talign} \label{eq:disc_reactive_rev}
    \rho_{\mathrm{R}}(x) = \rho(x)\, q(x)\, (1 - q(x)),
    \qquad
    f^{\mathrm{R}}(x, y) = \rho(x)\, (1 - q(x))\, P(x, y)\, q(y),
\end{talign}
mirroring~\eqref{eq:reactive_density_rev}--\eqref{eq:reactive_current_rev}. In this
case the rate admits the symmetric \emph{Dirichlet-form representation}
\begin{talign} \label{eq:disc_rate_vol}
    \nu_{\mathrm{R}} \;=\; \tfrac{1}{2} \sum_{x, y \in \mathcal{S}} \rho(x)\, P(x, y)\, (q(y) - q(x))^2,
\end{talign}
which depends only on the forward committor and is the discrete analog of the
volume formula~\eqref{eq:reaction_rate_vol}; see~\citet[Sec.~3.5]{helfmann2020extending}.
No analogous forward-only representation is available in the irreversible case,
and one must work with both committors via~\eqref{eq:disc_reactive_current}--\eqref{eq:disc_rate_cut}.

\paragraph{Source and sink.}
The discrete divergence~\eqref{eq:disc_div_free} fails on $A$ and on $B$: each
state $x \in A$ acts as a \emph{source} with one-step rate
$\sigma_A(x) = \sum_y f^{\mathrm{R}}(x, y) - \sum_y f^{\mathrm{R}}(y, x) = \rho(x)\, (Pq)(x) \ge 0$
(using $\tilde q(x) = 1$ and $q(x) = 0$ on $A$), and each $x \in B$ acts as a
\emph{sink} with one-step rate
$\sigma_B(x) = -\big( \sum_y f^{\mathrm{R}}(x, y) - \sum_y f^{\mathrm{R}}(y, x) \big) = \rho(x)\, (\tilde P \tilde q)(x) \ge 0$.
Both the total source rate $\sum_{x \in A} \sigma_A(x)$ and the total sink rate
$\sum_{x \in B} \sigma_B(x)$ equal $\nu_{\mathrm{R}}$.

\subsection{Rescaled committor and discrete Cole--Hopf transform}
\label{subsec:discrete_rescaled}

\paragraph{Rescaled committors.}
As in the continuous setting, fix $\xi \in (0, \tfrac{1}{2})$ and define the
\emph{rescaled forward committor}
\begin{talign} \label{eq:disc_q_xi}
    q_{\xi}(x) \;=\; \xi + (1 - 2\xi)\, q(x) \;\in\; [\xi,\, 1-\xi],
\end{talign}
and analogously $\tilde q_{\xi}(x) = \xi + (1 - 2\xi)\, \tilde q(x)$. The boundary
conditions become $q_{\xi}|_A = \xi$, $q_{\xi}|_B = 1 - \xi$, and $q_{\xi}$
satisfies the same one-step recursion as $q$ off $A \cup B$:
\begin{talign} \label{eq:disc_q_xi_BKE}
    q_{\xi}(x) \;=\; (P q_{\xi})(x), \qquad x \notin A \cup B,
\end{talign}
since $P \mathbf{1} = \mathbf{1}$.

\paragraph{Discrete Cole--Hopf and HJB equation.}
Since $q_{\xi} \ge \xi > 0$, the value function
\begin{talign} \label{eq:disc_Phi}
    \Phi(x) \;=\; -\log q_{\xi}(x), \qquad \Phi(x) \in [-\log(1-\xi),\, -\log \xi],
\end{talign}
is bounded. Taking $-\log$ of~\eqref{eq:disc_q_xi_BKE} yields the discrete
Hamilton--Jacobi--Bellman (HJB) equation
\begin{keyeqbox}
\vspace{-10pt}
\begin{talign} \label{eq:disc_HJB}
    \begin{cases}
        \Phi(x) = -\log \big( P\, e^{-\Phi} \big)(x), & x \notin A \cup B, \\
        \Phi(x) = g(x), & x \in A \cup B,
    \end{cases}
\end{talign}
\end{keyeqbox}
with terminal cost
\begin{talign} \label{eq:disc_g}
    g(x) = \begin{cases} -\log \xi, & x \in A, \\ -\log(1 - \xi), & x \in B. \end{cases}
\end{talign}
The right-hand side of~\eqref{eq:disc_HJB} is the discrete \emph{log-sum-exp}
operator, the natural discrete analog of the Cole--Hopf--transformed
continuous-time HJB equation~\eqref{eq:HJB_setup}. The function $\Phi$ admits the
path-integral characterization
\begin{talign} \label{eq:disc_Phi_path_integral}
    \Phi(x) \;=\; -\log \mathbb{E}_x\!\big[ \exp\!\big( -g(X_\tau) \big) \big]
\end{talign}
obtained by iterating~\eqref{eq:disc_HJB} from $n = 0$ to $\tau$ and using the
tower property; this is the discrete analog
of~\eqref{eq:value_function_path_integral}.

\subsection{SOC formulation for stationary Markov chains}
\label{subsec:discrete_soc}

\paragraph{Controlled Markov chain.}
The discrete analog of the controlled SDE~\eqref{eq:controlled_SDE_main} is a
Markov chain $X^u$ whose transition kernel $P^u$ is absolutely continuous with
respect to $P$ and parameterized by a \emph{control}
$u : \mathcal{S} \times \mathcal{S} \to \mathbb{R}$ via a Boltzmann tilt:
\begin{talign} \label{eq:disc_controlled_kernel}
    P^u(x, dy) \;=\; \frac{ e^{u(x, y)} }{ (P\, e^{u(x, \cdot)})(x) }\, P(x, dy).
\end{talign}
The denominator normalizes $P^u(x, \cdot)$ to a probability measure. This is the
discrete analog of an additive drift shift on the log-density of the transition
kernel; the choice of additive constant in $u(x, \cdot)$ does not affect $P^u$, so
without loss of generality we restrict attention to controls of the form
$u(x, y) = u(y)$, which yields the same set of reachable kernels.

\paragraph{Hitting-time SOC problem.}
Given a reference distribution $\mu_0$ on $\mathcal{S} \setminus (A \cup B)$,
consider the optimal-control problem
\begin{talign} \label{eq:disc_control_problem}
    \min_{u}\; \mathbb{E}^{u}\!\!
    \left[\; \sum_{n = 0}^{\tau - 1} \mathrm{KL}\!\big( P^u(X^u_n, \cdot) \,\big\|\, P(X^u_n, \cdot) \big)
        \;+\; g(X^u_{\tau}) \;\right],
    \qquad X^u_0 \sim \mu_0,
\end{talign}
with $\tau$ the (uncapped) hitting time of~\eqref{eq:disc_tau} and $g$ the
terminal cost~\eqref{eq:disc_g}. The per-step KL divergence is the discrete analog
of the quadratic running cost $\tfrac{1}{2}\|u\|^2$
in~\eqref{eq:control_problem_def_main}: it penalizes the controlled kernel from
drifting far from the reference $P$, and reduces to
$\tfrac{1}{2}\|\sigma u\|^2_D + o(\Delta t)$ in the small-step limit.

\paragraph{Optimal control and discrete Doob's $h$-transform.}
By the dynamic programming principle, the value function of
\eqref{eq:disc_control_problem} satisfies the Bellman equation
$V(x) = \min_u \{ \mathrm{KL}(P^u(x,\cdot)\|P(x,\cdot)) + (P^u V)(x) \}$ on
$(A \cup B)^c$, with terminal condition $V|_{A \cup B} = g$. Solving the inner
minimization in closed form (a KL-regularized Bellman update, with optimum at the
soft-min) yields $V(x) = -\log (P\, e^{-V})(x)$, which matches the HJB
equation~\eqref{eq:disc_HJB} and identifies $V \equiv \Phi$. The optimizing control
admits the closed form
\begin{talign} \label{eq:disc_u_star}
    u^{\star}(x, y) \;=\; -\,\Phi(y) \;=\; \log q_{\xi}(y),
\end{talign}
giving the \emph{optimally controlled kernel}
\begin{keyeqbox}
\vspace{-10pt}
\begin{talign} \label{eq:disc_X_u_star}
    P^{u^\star}(x, dy) \;=\; \frac{q_{\xi}(y)}{(P q_{\xi})(x)}\, P(x, dy)
    \;=\; \frac{q_{\xi}(y)}{q_{\xi}(x)}\, P(x, dy), \qquad x \notin A \cup B,
\end{talign}
\end{keyeqbox}
where the last equality uses~\eqref{eq:disc_q_xi_BKE}. This is exactly the
\emph{discrete Doob's $h$-transform} of $P$ with harmonic function $h = q_{\xi}$,
the discrete analog of~\eqref{eq:X_u_star_main}. We summarize as a proposition
mirroring~\Cref{prop:SOC}.

\begin{thmbox}
\begin{proposition} \label{prop:disc_SOC}
Let $\mu_0$ be any probability measure with $\mu_0(\mathcal{S} \setminus (A \cup B)) = 1$,
and assume $\tau < \infty$ $\mathbb{P}_x$-a.s.\ for $\mu_0$-a.e.\ $x$. Then a
minimizer of the SOC problem~\eqref{eq:disc_control_problem} is the Markovian
control~\eqref{eq:disc_u_star}, which induces the optimally controlled
kernel~\eqref{eq:disc_X_u_star}. The corresponding value function is $\Phi$
in~\eqref{eq:disc_Phi}, and the optimal kernel is the discrete Doob's
$h$-transform of $P$ with $h = q_{\xi}$.
\end{proposition}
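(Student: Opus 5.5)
The plan is a discrete verification argument, mirroring Step 2 of the proof of \Cref{thm:vf_regularity}, with the Gibbs variational identity (\Cref{lem:variational_path}) used one step at a time.

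\textbf{Per-step Gibbs identity.} Fix $x \notin A\cup B$ and any kernel $Q(x,\cdot) \ll P(x,\cdot)$. For any bounded measurable $\Psi$, the Donsker--Varadhan identity gives
\[
\mathrm{KL}\big(Q(x,\cdot)\,\|\,P(x,\cdot)\big) + (Q\Psi)(x) = -\log\big(P e^{-\Psi}\big)(x) + \mathrm{KL}\big(Q(x,\cdot)\,\|\,P^{\Psi}(x,\cdot)\big),
\]
where $P^{\Psi}(x,dy) \propto e^{-\Psi(y)}P(x,dy)$. Take $\Psi=\Phi$, with $\Phi$ extended by $g$ on $A\cup B$; this $\Phi$ is bounded by \eqref{eq:disc_Phi}. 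By the discrete HJB equation \eqref{eq:disc_HJB}, the first term on the right equals $\Phi(x)$. The kernel $P^{\Phi}$ is exactly \eqref{eq:disc_X_u_star}, because $e^{-\Phi}=q_\xi$ and $(Pq_\xi)(x)=q_\xi(x)$ by \eqref{eq:disc_q_xi_BKE}. This proves the identification with the discrete Doob $h$-transform.

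\textbf{Telescoping.} Apply the identity with $Q=P^u$ at $X^u_n$ for $n<\tau$ and sum. Since $\Phi(X^u_\tau)=g(X^u_\tau)$, this should give
\[
\mathbb{E}^u\Big[\sum_{n<\tau}\mathrm{KL}\big(P^u(X^u_n,\cdot)\,\|\,P(X^u_n,\cdot)\big) + g(X^u_\tau)\Big]
= \mathbb{E}_{\mu_0}[\Phi(X_0)] + \mathbb{E}^u\Big[\sum_{n<\tau}\mathrm{KL}\big(P^u(X^u_n,\cdot)\,\|\,P^{u^\star}(X^u_n,\cdot)\big)\Big].
\]
To justify this, first stop at $\tau\wedge N$. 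The martingale-difference terms $\Phi(X^u_{n+1}) - (P^u\Phi)(X^u_n)$ have zero mean because $\Phi$ is bounded. Then let $N\to\infty$: the KL sums increase monotonically, and $\mathbb{E}^u[\Phi(X^u_{\tau\wedge N})]$ converges by bounded convergence, provided $\tau<\infty$ $\mathbb{P}^u$-a.s.

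The right-hand side is minimized by $u^\star$, giving value $\mathbb{E}_{\mu_0}[\Phi(X_0)]$. Pointwise, for $\mu_0=\delta_x$, this identifies $V(x)=\Phi(x)$; the path-integral formula \eqref{eq:disc_Phi_path_integral} gives an independent check.

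\textbf{Main obstacle.} The delicate point is handling controls for which $\tau=\infty$ with positive probability under $\mathbb{P}^u$, and confirming that $\tau<\infty$ holds under $P^{u^\star}$ itself. For the optimal control, absolute continuity suffices: $P^{u^\star}$ has density $q_\xi(X_{\tau\wedge N})/q_\xi(X_0)$, which is bounded, with respect to $P$ on $\mathcal{F}_{\tau\wedge N}$, so almost-sure finiteness of $\tau$ transfers from $P$ to $P^{u^\star}$.

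For a general $u$ with $\mathbb{P}^u(\tau=\infty)>0$, I would compare costs at the stopped level $\tau\wedge N$. There the identity holds with $\Phi(X^u_N)$ replacing $g$ on the event $\{\tau>N\}$, and since $\Phi$ and $g$ both take values in $[-\log(1-\xi),-\log\xi]$, the discrepancy is bounded and multiplied by $\mathbb{P}^u(\tau>N)$. If the cost of $u$ is finite, the accumulated KL remains finite. I would either restrict the admissible class to controls with $\tau<\infty$ a.s., as the statement implicitly does, or argue that the infinite-horizon cost dominates $\liminf_N$ of the stopped costs, using $g\ge -\log(1-\xi)>0$. This step needs care but is not deep.
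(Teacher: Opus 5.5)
Your proof is correct, but it runs in the opposite direction from the paper's argument. The paper goes from the value function to the HJB equation. It invokes the dynamic programming principle to write $V(x)=\min_u\{\mathrm{KL}(P^u(x,\cdot)\|P(x,\cdot))+(P^uV)(x)\}$ and solves the inner problem with the same soft-min (Donsker--Varadhan) identity you use. It then concludes that $V$ solves \eqref{eq:disc_HJB}, asserts $V\equiv\Phi$, and reads off $u^\star$ as the argmin. That is short, but the identification $V\equiv\Phi$ tacitly relies on two things the paper does not argue. The first is the DPP for a random-horizon problem. The second is uniqueness of bounded solutions of \eqref{eq:disc_HJB} with boundary data $g$, which would itself need a telescoping argument using $\tau<\infty$, as in Part 1 of the proof of \Cref{thm:V_1_V_2_appendix}.

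Your verification argument is the discrete analog of Step 2 in the proof of \Cref{thm:vf_regularity}. It starts from the explicit $\Phi=-\log q_\xi$, whose HJB property is immediate from \eqref{eq:disc_q_xi_BKE}, and needs neither ingredient. The price is controlling the stopping time. You handle this correctly by stopping at $\tau\wedge N$ and, for $u^\star$, by using the bounded density $q_\xi(X_{\tau\wedge N})/q_\xi(X_0)$. Your route also gives more. The exact decomposition of the cost in \eqref{eq:disc_control_problem}, $\mathbb{E}_{\mu_0}[\Phi(X_0)]+\mathbb{E}^u\big[\sum_{n<\tau}\mathrm{KL}\big(P^u(X^u_n,\cdot)\,\|\,P^{u^\star}(X^u_n,\cdot)\big)\big]$, yields uniqueness of the optimal kernel on visited states. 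It also holds for arbitrary adapted kernels $Q(x,\cdot)\ll P(x,\cdot)$, so it does not depend on the paper's restriction to controls of the form $u(y)$.

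Your `main obstacle' resolves more easily than you suggest. Suppose $\mathbb{E}^u[\sum_{n<\tau}\mathrm{KL}]<\infty$. The KL divergence on $\sigma(X_{n\wedge\tau}:n\ge 0)$ is the increasing limit of the KL divergences on $\sigma(X_{n\wedge\tau}:n\le N)$, and by the chain rule these equal $\mathbb{E}^u[\sum_{n<\tau\wedge N}\mathrm{KL}]$. So the stopped-path law under $u$ is absolutely continuous with respect to the reference law, which gives $\mathbb{P}^u(\tau=\infty)=0$. If instead $\mathbb{E}^u[\sum_{n<\tau}\mathrm{KL}]=\infty$, the cost is $+\infty$ because $g>0$.

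Your alternative liminf route would not work as phrased. On $\{\tau=\infty\}$ the stopped costs carry the terminal term $\Phi(X^u_N)\ge-\log(1-\xi)>0$, while the limiting cost carries none. Positivity of $g$ therefore gives the inequality in the wrong direction.
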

\end{thmbox}

\paragraph{Optimal trajectories are not reactive.}
As in the continuous case (cf.\ the discussion after~\eqref{eq:X_u_star_main}),
the chain governed by $P^{u^\star}$ is formally analogous to the discrete Doob's
$h$-transform with $h = q$ (which would generate the law of the chain conditioned
on $\{X_\tau \in B\}$, i.e.\ reactive trajectories), but with $q$ replaced by
$q_{\xi}$. Because $q_{\xi} \in [\xi, 1 - \xi]$ instead of vanishing on $A$ and
saturating to $1$ on $B$, the tilted kernel does not fully repel trajectories from
$A$ nor fully absorb them at $B$, so the path measure of $P^{u^\star}$ is not the
law of the original chain conditioned on $\{X_\tau \in B\}$. Reactive trajectories
can nevertheless be recovered from $q_\xi$ by inverting~\eqref{eq:disc_q_xi} to
obtain $q$ and simulating the discrete Doob's $h$-transform with $h = q$.

\subsection{Value Matching for finite-horizon SOC on time-inhomogeneous Markov chains}
\label{subsec:discrete_vm_finite}

Building on the stationary KL-regularized SOC
problem~\eqref{eq:disc_control_problem} of~\Cref{subsec:discrete_soc}, we now
develop a generic discrete-time finite-horizon SOC result with fixed horizon
$N$, running cost $f_n$, terminal cost $g$ defined everywhere on $\mathcal{S}$,
and \emph{no} absorbing sets $A, B$ or hitting time. The chain need not be
time-homogeneous and the critic depends on the time index~$n$. This is the
direct discrete analog of~\Cref{thm:vm_finite}; the first-order optimality
result of the committor-driven Value Matching loss
of~\Cref{subsec:discrete_vm_stat} below (\Cref{thm:disc_VM}) will follow from
the result of this subsection (\Cref{thm:disc_VM_finite}) by specialization to
the time-homogeneous chain absorbed on $A \cup B$ with $f \equiv 0$ and an
indicator-gated terminal cost.

\paragraph{Setup.}
Let $(X_n)_{n = 0}^N$ be a Markov chain on $\mathcal{S}$ with reference
transition kernels $P_n : \mathcal{S} \times \mathcal{B} \to [0, 1]$,
$0 \le n < N$, and initial distribution $X_0 \sim \rho_0$. Fix bounded
measurable running costs $f_n : \mathcal{S} \to \mathbb{R}$ ($0 \le n < N$) and
a bounded measurable terminal cost $g : \mathcal{S} \to \mathbb{R}$. There are
no distinguished sets $A, B$; the only stopping time involved is the
deterministic horizon $N$.

\paragraph{Controlled chain and SOC problem.}
The controlled kernel is obtained, as in~\eqref{eq:disc_controlled_kernel}, by
a Boltzmann tilt with a (now time-dependent) control
$u_n : \mathcal{S} \times \mathcal{S} \to \mathbb{R}$:
\begin{talign} \label{eq:disc_controlled_kernel_n}
    P^{u_n}_n(x, dy) \;=\; \frac{ e^{u_n(x, y)} }{ (P_n\, e^{u_n(x, \cdot)})(x) }\, P_n(x, dy),
\end{talign}
with the same reduction $u_n(x, y) = u_n(y)$. The discrete analog
of the finite-horizon SOC problem
\eqref{eq:finite_horizon_SOC_1}--\eqref{eq:finite_horizon_SOC_2}
of~\Cref{thm:vm_finite} is
\begin{keyeqbox}
\vspace{-5pt}
\begin{talign} \label{eq:disc_control_problem_n}
    \min_{u = (u_n)_{0 \le n < N}}\;
    \mathbb{E}^{u}\!\!\left[\, \sum_{n = 0}^{N - 1}\!\Big( \mathrm{KL} \big( P^{u_n}_n(X^u_n, \cdot) \,\big\|\, P_n(X^u_n, \cdot) \big) + f_n(X^u_n) \Big)
        \;+\; g(X^u_N) \,\right],
    \quad X^u_0 \sim \rho_0.
\end{talign}
\end{keyeqbox}
The per-step KL is the discrete analog of $\tfrac{1}{2}\|u\|^2$
in~\eqref{eq:finite_horizon_SOC_1} and mirrors the stationary KL term
of~\eqref{eq:disc_control_problem} in~\Cref{subsec:discrete_soc}; there is no
hitting-time stopping here, since the cost accumulates over a fixed number of
steps and terminates with $g(X^u_N)$, irrespective of which states have been
visited.

\paragraph{Value function and HJB.}
By dynamic programming, the value function
\begin{talign*}
    V_n(x) \;=\; \min_{u}\, \mathbb{E}^{u}\!\left[\, \sum_{k = n}^{N - 1}\!\Big( \mathrm{KL}\big( P^{u_k}_k \,\big\|\, P_k \big)(X^u_k) + f_k(X^u_k) \Big) + g(X^u_N) \,\Big\vert\, X^u_n = x \right]
\end{talign*}
satisfies the discrete HJB recursion (running cost included)
\begin{keyeqbox}
\vspace{-10pt}
\begin{talign} \label{eq:disc_HJB_n}
    \begin{cases}
        V_n(x) \;=\; f_n(x) \;-\; \log\!\big( P_n\, e^{-V_{n+1}} \big)(x), & x \in \mathcal{S},\; 0 \le n < N, \\
        V_N(x) \;=\; g(x), & x \in \mathcal{S}.
    \end{cases}
\end{talign}
\end{keyeqbox}
This is the discrete analog of the Cole--Hopf--transformed continuous HJB
equation of~\Cref{thm:vm_finite}; the closed-form inner minimization is the
same Donsker--Varadhan / soft-min identity as in~\Cref{prop:disc_SOC}, now
with an additive running cost $f_n(x)$ that does not affect the minimizer.
Iterating~\eqref{eq:disc_HJB_n} from $n$ to $N$ yields the path-integral
characterization
\begin{talign} \label{eq:disc_V_n_path_integral}
    V_n(x) \;=\; -\log \mathbb{E}\!\left[ \exp\!\bigg(\, -\!\!\sum_{k = n}^{N - 1} f_k(X_k) \,-\, g(X_N) \bigg) \,\Big\vert\, X_n = x \right],
\end{talign}
i.e.\ the discrete analog of the Feynman--Kac formula underlying
\eqref{eq:value_function_path_integral}.

\paragraph{Optimal control.}
The closed-form minimizer in the inner KL+linear problem
of~\eqref{eq:disc_control_problem_n} is
\begin{talign} \label{eq:disc_u_star_n}
    u^{\star}_n(x, y) \;=\; -\,V_{n+1}(y),
\end{talign}
giving the time-inhomogeneous optimal kernel
\begin{keyeqbox}
\vspace{-10pt}
\begin{talign} \label{eq:disc_X_u_star_n}
    P^{u^\star}_n(x, dy) \;=\; \frac{ e^{-V_{n+1}(y)} }{ (P_n\, e^{-V_{n+1}})(x) }\, P_n(x, dy), \qquad x \in \mathcal{S},\; 0 \le n < N.
\end{talign}
\end{keyeqbox}

\paragraph{Per-step Bellman residual.}
For a time-indexed critic $\phi = (\phi_n)_{0 \le n \le N}$ with
$\phi_n : \mathcal{S} \to \mathbb{R}$, define the per-step Bellman residual
\begin{talign} \label{eq:disc_residual_n}
    D^{\phi}_n(x) \;=\; \phi_n(x) \,-\, f_n(x) \,+\, \log\!\big( (P_n\, e^{-\phi_{n+1}})(x) \big),
    \qquad 0 \le n < N.
\end{talign}
The HJB~\eqref{eq:disc_HJB_n} is equivalent to $D^V_n(x) = 0$ for all
$x \in \mathcal{S}$ and $0 \le n < N$; together with $V_N = g$, this
uniquely determines $V$. The quantity $D^{\phi}_n(x)$ is the discrete analog
of the integrand
$\partial_t\phi + \mathcal{L}\phi - \tfrac{1}{2}\|\sigma^{\top}\nabla\phi\|^2 + f$
that drives~\eqref{eq:L_VM_finite}; iterating
$\phi_N(X_N) - \phi_0(X_0) = \sum_{n=0}^{N-1} [\phi_{n+1}(X_{n+1}) - \phi_n(X_n)]$
plays the role of It\^o's lemma.

\paragraph{Discrete VM loss.}
Mirroring the single-trajectory squared functional~\eqref{eq:L_VM_finite}
of~\Cref{thm:vm_finite} (at $\kappa = 1$), we replace the It\^o integrand by
$\sum_{n} D^{\phi}_n(X_n)$ and the terminal correction by
$\phi_N(X_N) - g(X_N)$. Sampling can be performed under an arbitrary off-policy
reference family $K = (K_n)_{0 \le n < N}$ of kernels with
$K_n(x, \cdot) \ll P_n(x, \cdot)$ (analogous to the role of $v$
in~\Cref{thm:vm_finite}); we denote by $\mathbb{E}^{K}$ the path expectation
under $K$ with $X_0 \sim \tilde\rho_0$, where $\tilde\rho_0$ satisfies
$\mathrm{supp}(\rho_0) \subseteq \mathrm{supp}(\tilde\rho_0)$.

\begin{defbox}
\begin{definition}[Discrete VM loss, generic finite-horizon SOC]
\label{def:disc_VM_finite}
With the setup above, the \emph{generic finite-horizon discrete Value
Matching loss} of a time-indexed critic $\phi = (\phi_n)_{0 \le n \le N}$ is
\begin{talign} \label{eq:disc_L_VM_finite}
    \mathcal{L}_{\mathrm{VM}, N}^{\mathrm{disc}}(\phi)
    \;=\; \mathbb{E}^{K}\Big[\, \tfrac{1}{2}\,
    \Big(\, \phi_N(X^K_N) \;-\; g(X^K_N) \;-\; \sum_{n = 0}^{N - 1} D^{\phi}_n(X^K_n) \,\Big)^{\!2}\, \Big]\!,
\end{talign}
with $D^{\phi}_n$ given by~\eqref{eq:disc_residual_n}.
\end{definition}
\end{defbox}

\begin{thmbox}
\begin{theorem}[First-order optimality of the discrete VM loss, finite-horizon case]
\label{thm:disc_VM_finite}
Let $\mathcal{F}$ be a class of bounded time-indexed functions
$\phi = (\phi_n)_{0 \le n \le N}$, $\phi_n : \mathcal{S} \to \mathbb{R}$,
that contains the value function $V = (V_n)$ and is closed under bounded
perturbations: for every $0 \le m \le N$ and every bounded
$\zeta : \mathcal{S} \to \mathbb{R}$, the perturbation $\eta$ with
$\eta_n = \zeta\,\mathbf{1}_{\{n = m\}}$ belongs to $\mathcal{F}$. Assume
further that for every $0 \le n < N$ and every $x$ in the support of the
marginal of $X^K_n$ the support of $K_n(x, \cdot)$ contains the support of
$P_n(x, \cdot)$. If $\hat\phi \in \mathcal{F}$ is a first-order critical point
of $\mathcal{L}_{\mathrm{VM}, N}^{\mathrm{disc}}$ --- i.e.\ for every
perturbation $\eta \in \mathcal{F}$,
$\tfrac{d}{d\varepsilon} \mathcal{L}_{\mathrm{VM}, N}^{\mathrm{disc}}(\hat\phi + \varepsilon \eta)\big|_{\varepsilon = 0} = 0$ --- then
$\hat\phi_n(x) = V_n(x)$ for all $0 \le n \le N$ and all $x \in \mathcal{S}$
in the support of the marginal of $X^K_n$, and the induced
kernel~\eqref{eq:disc_controlled_kernel_n} with $u_n = -\hat\phi_{n+1}$
coincides with the optimal kernel~\eqref{eq:disc_X_u_star_n}.
\end{theorem}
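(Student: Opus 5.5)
The plan is to mirror the continuous argument of \Cref{thm:vm_finite}: exhibit one admissible perturbation along which the directional derivative equals $-\mathbb{E}^K[R(\hat\phi)^2]$. In the discrete setting this should be easier, because the Dynkin representation (\Cref{thm:pde-dynkin}, \Cref{thm:parabolic-dynkin}) can be replaced by an explicit backward recursion, and no Girsanov change of measure should be needed. Write
\[
R(\phi)(X) := \phi_N(X_N) - g(X_N) - \sum_{n<N} D^\phi_n(X_n)
\]
for the residual inside the square. Since $\tfrac{d}{d\varepsilon}\log(P_n e^{-\phi_{n+1}-\varepsilon\eta_{n+1}}) = -P^{\phi}_n\eta_{n+1}$, where $P^\phi_n(x,dy)\propto e^{-\phi_{n+1}(y)}P_n(x,dy)$ is the tilted kernel \eqref{eq:disc_controlled_kernel_n} with $u_n=-\phi_{n+1}$, the first variation is
\[
\tfrac{d}{d\varepsilon}\mathcal{L}^{\mathrm{disc}}_{\mathrm{VM},N}(\phi+\varepsilon\eta)\big|_{\varepsilon=0}
= \mathbb{E}^K\big[R(\phi)\,G_\eta\big],
\qquad
G_\eta := \eta_N(X_N) - \sum_{n<N}\big(\eta_n(X_n) - (P^\phi_n\eta_{n+1})(X_n)\big).
\]
Here $G_\eta$ is linear in $\eta$. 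By linearity of the derivative and closure of $\mathcal{F}$ under single-time bounded perturbations, vanishing of the derivative for every $\eta_n=\zeta\mathbf 1_{\{n=m\}}$ implies vanishing for every bounded time-indexed $\eta$.

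The key step is to pick $\eta^\star$ so that $G_{\eta^\star}=R(\hat\phi)$ holds pathwise. Comparing terms, it suffices to solve the backward recursion
\[
\eta^\star_N=\hat\phi_N-g,\qquad \eta^\star_n = D^{\hat\phi}_n + P^{\hat\phi}_n\eta^\star_{n+1}\quad(0\le n<N).
\]
This recursion is the discrete Dirichlet--Poisson problem; it is explicitly solvable and yields bounded functions, since $\hat\phi$, $f$, $g$ are bounded and $P^{\hat\phi}_n$ is Markov. Plugging $\eta^\star$ into the first variation gives $0=\mathbb{E}^K[R(\hat\phi)^2]$, so $R(\hat\phi)=0$ holds $\mathbb{P}^K$-almost surely. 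This is the discrete counterpart of Step 3 in the proof of \Cref{thm:vm_finite}.

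It then remains to deduce $\hat\phi=V$ from the almost-sure identity. Set $W_n=\hat\phi_n-V_n$. Since $R(V)\equiv 0$ by \eqref{eq:disc_HJB_n}, subtracting gives
\[
R(\hat\phi)=W_N(X_N)-\sum_{n<N}\Big(W_n(X_n)+\log\big(P^{V}_n e^{-W_{n+1}}\big)(X_n)\Big),
\]
with $P^V_n$ the optimal kernel \eqref{eq:disc_X_u_star_n}. The plan is to peel this identity backward in time. Fix $X_0,\dots,X_{N-1}$ and vary $X_N$ over $\mathrm{supp}\,K_{N-1}(X_{N-1},\cdot)\supseteq\mathrm{supp}\,P_{N-1}(X_{N-1},\cdot)=\mathrm{supp}\,P^V_{N-1}(X_{N-1},\cdot)$. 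This shows that $W_N$ is constant on that support. Hence $\log(P^V_{N-1}e^{-W_N})$ equals minus that constant, and the $X_N$-dependence collapses. Iterating down to $n=0$ should then force the $W_n$ to vanish on the relevant supports.

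I expect this last identification to be the main obstacle. The delicate point is bookkeeping the constants produced at each peeling step: a priori they depend on the conditioning state, and one must show they cannot accumulate into a nonzero shift. Ruling this out presumably uses the terminal pinning through $\phi_N-g$ together with the fact that a global constant shift of $\phi$ changes $R$ by that constant, so it is detected by the loss. Once $\hat\phi_n=V_n$ is established on the supports of the marginals of $X^K_n$, the identity $u_n=-\hat\phi_{n+1}=-V_{n+1}=u^\star_n$ gives equality of the kernels by \eqref{eq:disc_u_star_n}.
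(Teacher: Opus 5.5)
Your first variation and the backward recursion for $\eta^\star$ are correct, and they match the paper's Steps~1--2: they give $R(\hat\phi)=0$ $\mathbb{P}^K$-a.s. The gap is the final identification, and it cannot be closed, because the statement as written is false. Since $\mathcal{L}^{\mathrm{disc}}_{\mathrm{VM},N}\ge 0$, every zero of the loss is a critical point, and zeros of the loss need not equal $V$.

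The mechanism you hope for fails. First, $\phi_N-g$ is not pinned; it only enters through $R$. Second, a time-uniform shift does change $R$ by that constant, but time-dependent shifts need not. For $\hat\phi_n=V_n+c_n$ with constants $c_n$, the HJB \eqref{eq:disc_HJB_n} gives $D^{\hat\phi}_n=c_n-c_{n+1}$. Hence $R(\hat\phi)=c_N-(c_0-c_N)=2c_N-c_0$, which vanishes identically for $c_N=1$, $c_0=2$, for any $K$ (take $\mathcal{F}$ to be all bounded time-indexed functions).

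In your peeling this appears as a factor $2$. Once $W_N\equiv a(X_{N-1})$ on $\operatorname{supp}K_{N-1}(X_{N-1},\cdot)$, both $W_N(X_N)$ and $-\log(P^V_{N-1}e^{-W_N})(X_{N-1})$ equal $a(X_{N-1})$. The residual becomes $2a(X_{N-1})-W_{N-1}(X_{N-1})-\sum_{n<N-1}(\cdots)$, which only ties $W_{N-1}$ to $2a$; nothing forces $a=0$.

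The branch-dependent constants you worried about also break the kernel claim. Take $N=2$, $f\equiv g\equiv 0$ (so $V\equiv 0$), $K=P$, $X_0=x_0$, $P_0(x_0,\cdot)=\tfrac12(\delta_{y_1}+\delta_{y_2})$ and $P_1(y_i,\cdot)=\delta_{z_i}$. Set $\hat\phi_2(z_1)=0$, $\hat\phi_2(z_2)=1$, $\hat\phi_1(y_1)=0$, $\hat\phi_1(y_2)=2$, $\hat\phi_0(x_0)=-\log\tfrac{1+e^{-2}}{2}$, with arbitrary bounded values elsewhere. Then $D^{\hat\phi}_0(x_0)=0$ and $D^{\hat\phi}_1(y_i)=\hat\phi_1(y_i)-\hat\phi_2(z_i)$, so $R(\hat\phi)=2\hat\phi_2(z_i)-\hat\phi_1(y_i)=0$ on both paths. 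Yet $P^{\hat\phi}_0(x_0,\cdot)$ weights $y_1,y_2$ in the ratio $1:e^{-2}$ rather than uniformly.

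The paper's Step~3 makes the same unjustified leap: it asserts that $\hat\phi_N-g$ being constant on the relevant supports forces it to vanish. So this is not a weakness of your route relative to the paper; the statement itself needs repair.

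The natural fix is the sign of the sum. With $P_n=I+\Delta t\,\mathcal{L}$ and $f_n=f\,\Delta t$, one has $D^\phi_n\approx-\Delta t\,(\partial_t\phi+\mathcal{L}\phi-\tfrac12|\sigma^\top\nabla\phi|^2+f)$. So the discretization of \eqref{eq:L_VM_finite} at $\kappa=1$ is $\phi_N-g+\sum_n D^\phi_n$, not $\phi_N-g-\sum_n D^\phi_n$.

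With that residual, your first two steps go through unchanged (same recursion for $\eta^\star$), and $R(\hat\phi)=W_0(X_0)+\sum_{n<N}\bigl(W_{n+1}(X_{n+1})+\log(P^V_n e^{-W_{n+1}})(X_n)\bigr)$. Your backward peeling then works: each bracket is forced to vanish. That means $W_{n+1}$ is $P_n(x,\cdot)$-a.e. constant for $x$ in the support of $X^K_n$, so the kernels coincide, and finally $W_0=0$ on the support of $X^K_0$.

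Even then you recover only $\hat\phi_0=V_0$ and the optimal kernels, not $\hat\phi_n=V_n$ for $n\ge 1$: shifts $c_n$ with $c_0=0$ remain invisible. This mirrors \Cref{thm:vm_finite}, where only the control is identified.
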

\end{thmbox}

\begin{proof}
The argument mirrors that of~\Cref{thm:vm_finite}, specialized to the discrete
time grid: discrete telescoping plays the role of It\^o's lemma, and the
backward HJB recursion~\eqref{eq:disc_HJB_n} plays the role of the
Cole--Hopf--transformed continuous HJB. We proceed in three steps.

\emph{Step 1: Directional derivative.} Fix $\phi = (\phi_n)_{0 \le n \le N}$
and a bounded time-indexed perturbation $\eta = (\eta_n)_{0 \le n \le N}$ with
$\eta_n : \mathcal{S} \to \mathbb{R}$. For $0 \le n < N$, the chain rule applied
to~\eqref{eq:disc_residual_n} gives
\begin{talign*}
    \frac{d}{d\varepsilon} \log\!\big( P_n e^{-(\phi_{n+1} + \varepsilon \eta_{n+1})} \big)(x) \Big|_{\varepsilon = 0}
    = -\,\frac{(P_n[\eta_{n+1}\, e^{-\phi_{n+1}}])(x)}{(P_n e^{-\phi_{n+1}})(x)}
    = -\,(P^{\phi}_n \eta_{n+1})(x),
\end{talign*}
where the second equality uses the Boltzmann-tilted
kernel~\eqref{eq:disc_controlled_kernel_n} with $u_n = -\phi_{n+1}$,
\begin{talign*}
    P^{\phi}_n(x, dy) \;:=\; \frac{e^{-\phi_{n+1}(y)}}{(P_n e^{-\phi_{n+1}})(x)}\, P_n(x, dy).
\end{talign*}
Hence the linearization of $D^{\phi}_n$ at $\phi$ along $\eta$ is
\begin{talign} \label{eq:disc_dD}
    d^{\phi}_n[\eta](x) \;:=\; \frac{d}{d\varepsilon} D^{\phi + \varepsilon \eta}_n(x) \Big|_{\varepsilon = 0}
    \;=\; \eta_n(x) \,-\, (P^{\phi}_n \eta_{n+1})(x).
\end{talign}
Set $F^{\phi} := \phi_N(X^K_N) - g(X^K_N) - \sum_{n = 0}^{N-1} D^{\phi}_n(X^K_n)$,
so that $\mathcal{L}_{\mathrm{VM}, N}^{\mathrm{disc}}(\phi)
= \tfrac{1}{2}\mathbb{E}^K[(F^{\phi})^2]$. Then
\begin{talign} \label{eq:disc_dF}
    \frac{d}{d\varepsilon} F^{\phi + \varepsilon \eta} \Big|_{\varepsilon = 0}
    \;=\; \eta_N(X^K_N) \,-\, \sum_{n = 0}^{N - 1}\!\big[\, \eta_n(X^K_n) - (P^{\phi}_n \eta_{n+1})(X^K_n) \,\big],
\end{talign}
and the first-order critical-point condition for $\hat\phi$ reads
\begin{talign} \label{eq:disc_FOC}
    0 \;=\; \mathbb{E}^{K}\!\left[\, F^{\hat\phi} \cdot \frac{d}{d\varepsilon} F^{\hat\phi + \varepsilon \eta} \Big|_{\varepsilon = 0}\, \right]
    \qquad \forall\, \eta \in \mathcal{F}.
\end{talign}
By the closure of $\mathcal{F}$ under localized perturbations and the linearity
of~\eqref{eq:disc_dF} in $\eta$, the condition~\eqref{eq:disc_FOC} extends to
every bounded time-indexed $\eta = (\eta_n)_{0 \le n \le N}$ (such $\eta$ are
finite linear combinations of the localized perturbations
$\zeta\,\mathbf{1}_{\{n = m\}}$).

\emph{Step 2: $F^{\hat\phi} = 0$ $\mathbb{P}^K$-almost surely.}
Define the bounded time-indexed function $\eta^{\star} = (\eta^{\star}_n)_{0 \le n \le N}$
by the backward recursion
\begin{talign} \label{eq:disc_eta_star}
    \eta^{\star}_N(x) \;:=\; \hat\phi_N(x) - g(x), \qquad
    \eta^{\star}_n(x) \;:=\; D^{\hat\phi}_n(x) + (P^{\hat\phi}_n \eta^{\star}_{n+1})(x), \quad 0 \le n < N.
\end{talign}
Each $\eta^{\star}_n$ is bounded because $\hat\phi$, $g$, $f_n$ are bounded and
$\log(P_n e^{-\hat\phi_{n+1}})$ is bounded (uniformly in $x$) by boundedness of
$\hat\phi$. Substituting~\eqref{eq:disc_eta_star} into~\eqref{eq:disc_dF} and
telescoping using $\eta^{\star}_n - P^{\hat\phi}_n \eta^{\star}_{n+1} = D^{\hat\phi}_n$:
\begin{talign*}
    \frac{d}{d\varepsilon} F^{\hat\phi + \varepsilon \eta^{\star}} \Big|_{\varepsilon = 0}
    = \eta^{\star}_N(X^K_N) - \sum_{n = 0}^{N - 1}\!\big[\eta^{\star}_n(X^K_n) - (P^{\hat\phi}_n \eta^{\star}_{n+1})(X^K_n)\big]
    = \eta^{\star}_N(X^K_N) - \sum_{n = 0}^{N - 1} D^{\hat\phi}_n(X^K_n)
    \;=\; F^{\hat\phi},
\end{talign*}
where the last equality uses $\eta^{\star}_N = \hat\phi_N - g$. Substituting
this into~\eqref{eq:disc_FOC} with $\eta = \eta^{\star}$ yields
$\mathbb{E}^K[(F^{\hat\phi})^2] = 0$, hence
\begin{talign} \label{eq:disc_F_zero}
    F^{\hat\phi} \;=\; 0 \qquad \mathbb{P}^K\text{-almost surely}.
\end{talign}

\emph{Step 3: Per-state identification on $\mathrm{supp}(X^K_n)$.} We show
$G_m(x) := \mathbb{E}^K[F^{\hat\phi}\,|\, X^K_m = x] = 0$ for every
$x \in \mathrm{supp}(X^K_m)$ and every $0 \le m \le N$, by forward induction on
$m$. Although $F^{\hat\phi} = 0$ a.s.\ by~\eqref{eq:disc_F_zero} implies $G_m = 0$
trivially, we derive the conditional-expectation identities directly from
localized first-order conditions because they support the per-state extraction
below.

For $m = 0$, take the localized perturbation
$\eta_n = \zeta\,\mathbf{1}_{\{n = 0\}}$ with $\zeta : \mathcal{S} \to \mathbb{R}$
bounded. Then~\eqref{eq:disc_dF} gives
$\tfrac{d}{d\varepsilon} F^{\hat\phi + \varepsilon \eta}|_{\varepsilon = 0} = -\,\zeta(X^K_0)$,
and~\eqref{eq:disc_FOC} reads $\mathbb{E}^K[F^{\hat\phi}\,\zeta(X^K_0)] = 0$ for
every bounded $\zeta$. Hence $G_0 \equiv 0$ on $\mathrm{supp}(X^K_0)$.

For $1 \le m < N$, take $\eta_n = \zeta\,\mathbf{1}_{\{n = m\}}$, for
which~\eqref{eq:disc_dF} simplifies to
\begin{talign*}
    \frac{d}{d\varepsilon} F^{\hat\phi + \varepsilon \eta} \Big|_{\varepsilon = 0}
    \;=\; -\,\zeta(X^K_m) \,+\, (P^{\hat\phi}_{m-1} \zeta)(X^K_{m-1}).
\end{talign*}
Then~\eqref{eq:disc_FOC} gives
\begin{talign*}
    \mathbb{E}^K\!\big[ F^{\hat\phi}\, \zeta(X^K_m) \big]
    \;=\; \mathbb{E}^K\!\big[ F^{\hat\phi}\, (P^{\hat\phi}_{m-1} \zeta)(X^K_{m-1}) \big].
\end{talign*}
Conditioning the right-hand side on $X^K_{m-1}$ (which is $\sigma(X^K_{m-1})$-measurable)
and using the inductive hypothesis $G_{m-1} \equiv 0$ on $\mathrm{supp}(X^K_{m-1})$,
\begin{talign*}
    \mathbb{E}^K\!\big[ F^{\hat\phi}\, (P^{\hat\phi}_{m-1} \zeta)(X^K_{m-1}) \big]
    \;=\; \mathbb{E}^K\!\big[ G_{m-1}(X^K_{m-1}) \cdot (P^{\hat\phi}_{m-1} \zeta)(X^K_{m-1}) \big]
    \;=\; 0,
\end{talign*}
so $\mathbb{E}^K[F^{\hat\phi} \zeta(X^K_m)] = 0$ for every bounded $\zeta$, and
$G_m \equiv 0$ on $\mathrm{supp}(X^K_m)$. The case $m = N$ uses
$\eta_n = \zeta\,\mathbf{1}_{\{n = N\}}$, for which
$\tfrac{d}{d\varepsilon} F^{\hat\phi + \varepsilon \eta}|_{\varepsilon = 0}
= \zeta(X^K_N) + (P^{\hat\phi}_{N-1} \zeta)(X^K_{N-1})$, and the same conditioning
argument with $G_{N-1} \equiv 0$.

\emph{Backward extraction of $D^{\hat\phi}_n = 0$ and $\hat\phi = V$.} We combine
$F^{\hat\phi} = 0$ $\mathbb{P}^K$-a.s.\ from~\eqref{eq:disc_F_zero} with
$G_m \equiv 0$ on $\mathrm{supp}(X^K_m)$ for every $m$ to extract per-state
identities by backward induction on $n$ from $n = N$ down to $n = 0$.

For $n = N$, take any $x \in \mathrm{supp}(X^K_N)$. Conditioning $F^{\hat\phi} = 0$
on $X^K_N = x$ (forward Markov of $X^K$ under $K$) yields
$(\hat\phi_N(x) - g(x)) = \mathbb{E}^K\!\big[ \sum_{k = 0}^{N-1} D^{\hat\phi}_k(X^K_k) \,\big|\, X^K_N = x \big]$,
and combining with $G_N(x) = 0$ identifies $\hat\phi_N(x) - g(x)$ as a
$\sigma(X^K_{N-1})$-measurable random variable on $\{X^K_N = x\}$. By the
support-containment assumption, the law of $X^K_N$ given $X^K_{N-1} = y$ has
support $\mathrm{supp}(K_{N-1}(y, \cdot)) \supseteq \mathrm{supp}(P_{N-1}(y, \cdot))$
for $y \in \mathrm{supp}(X^K_{N-1})$, so the only way $\hat\phi_N - g$ can be
constant on these supports for $\mathbb{P}^K$-a.e.\ $y$ is to vanish identically
on $\mathrm{supp}(X^K_N)$. Hence $\hat\phi_N(x) = g(x) = V_N(x)$ for
$x \in \mathrm{supp}(X^K_N)$, and substituting back gives $D^{\hat\phi}_{N-1}(y) = 0$
on $\mathrm{supp}(X^K_{N-1})$. The HJB recursion~\eqref{eq:disc_HJB_n} at time
$N-1$ then identifies $\hat\phi_{N-1}(y) = V_{N-1}(y)$ on $\mathrm{supp}(X^K_{N-1})$.

The inductive step from $n+1$ to $n$ is identical: assuming
$\hat\phi_{n+1} = V_{n+1}$ on $\mathrm{supp}(X^K_{n+1})$ and using
support-containment $\mathrm{supp}(P_n(x, \cdot)) \subseteq \mathrm{supp}(K_n(x, \cdot))
\subseteq \mathrm{supp}(X^K_{n+1})$ for $x \in \mathrm{supp}(X^K_n)$, the
quantity $\log(P_n e^{-\hat\phi_{n+1}})(x) = \log(P_n e^{-V_{n+1}})(x)$ for
$x \in \mathrm{supp}(X^K_n)$. The induction hypothesis and the per-state
identity $G_n \equiv 0$ together force $D^{\hat\phi}_n(x) = 0$ on
$\mathrm{supp}(X^K_n)$, whence $\hat\phi_n(x) = V_n(x)$ on
$\mathrm{supp}(X^K_n)$ by~\eqref{eq:disc_HJB_n}.

\emph{Optimal kernel identification.} For every $x \in \mathrm{supp}(X^K_n)$
($0 \le n < N$), the support-containment assumption implies that
$P^{\hat\phi}_n(x, dy) = e^{-\hat\phi_{n+1}(y)} / (P_n e^{-\hat\phi_{n+1}})(x) \cdot P_n(x, dy)$
and $P^{u^{\star}}_n(x, dy) = e^{-V_{n+1}(y)} / (P_n e^{-V_{n+1}})(x) \cdot P_n(x, dy)$
agree on $\mathrm{supp}(P_n(x, \cdot)) \subseteq \mathrm{supp}(X^K_{n+1})$
because $\hat\phi_{n+1} = V_{n+1}$ there. Hence the induced
kernel~\eqref{eq:disc_controlled_kernel_n} with $u_n = -\hat\phi_{n+1}$ coincides
with the optimal kernel~\eqref{eq:disc_X_u_star_n} on these supports.
\end{proof}

The off-policy reference $K_n$ may be the time-indexed analog
of~\eqref{eq:disc_K_choice}, including the on-policy choice $K_n = P^{u^\star}_n$
(in which case the loss is the discrete analog of the
$\kappa = 1$, $v = u^{\star}$ form of~\eqref{eq:L_VM_finite}).

\paragraph{Kernel tempering as the discrete analog of $\sqrt{\kappa}\,\sigma\,dW$.}
The natural discrete counterpart of the noise rescaling $\sqrt{\kappa}\,\sigma\,dW$
in~\eqref{eq:X_v_kappa_SDE} is the \emph{tempered} sampling kernel
\begin{talign} \label{eq:disc_K_tempered}
    K^{(\kappa)}_n(x, \{y\}) \;\propto\; K_n(x, \{y\})^{1/\kappa},
    \qquad \kappa > 0,
\end{talign}
normalized so that $\sum_y K^{(\kappa)}_n(x, \{y\}) = 1$. Tempering preserves the
support of $K_n(x, \cdot)$ for every $\kappa > 0$, recovers $K_n$ at $\kappa = 1$,
concentrates on $\arg\max_y K_n(x, \{y\})$ as $\kappa \to 0^{+}$, and converges
to the uniform distribution on $\mathrm{supp}(K_n(x, \cdot))$ as $\kappa \to \infty$.
The connection to the continuous noise rescaling can be seen via an
Euler--Maruyama discretization: if $K_n$ corresponds to a Gaussian step
$\mathcal{N}(y;\, x + v(x)\Delta t,\, \sigma\sigma^{\top}\Delta t)$, then
\eqref{eq:disc_K_tempered} gives
$\mathcal{N}(y;\, x + v(x)\Delta t,\, \kappa\,\sigma\sigma^{\top}\Delta t)$,
exactly matching the small-step law of the SDE~\eqref{eq:X_v_kappa_SDE} with
noise $\sqrt{\kappa}\,\sigma\,dW$. When $K_n = P^{\phi}_n$
(the Boltzmann-tilted kernel of~\eqref{eq:disc_controlled_kernel_n} with
$u_n = -\phi_{n+1}$, cf.\ Step~1 of the proof above), tempering yields the
soft-policy at temperature $\kappa$,
$K^{(\kappa)}_n(x, \{y\}) \propto P_n(x, \{y\})\, e^{-\phi_{n+1}(y)/\kappa}$.

\paragraph{Why no $(1 - 1/\kappa)$ correction is needed in discrete time.}
In the continuous case, the $(1 - 1/\kappa)$ terms in~\eqref{eq:L_VM_finite}
arise from It\^o's lemma applied under $X^{v,\kappa}$: the bracketed correction
$\phi(X^{v,\kappa}_T, T) - \phi(X^{v,\kappa}_0, 0) - \int_0^T \langle \nabla_x \phi, dX^{v,\kappa} \rangle - \int_0^T \partial_t \phi\, dt$
equals $\tfrac{\kappa}{2} \int_0^T \mathrm{tr}(\sigma\sigma^{\top} \nabla^2 \phi)\, dt$
in expectation, and the prefactor $(1 - 1/\kappa)$ is tuned so that the loss
remains unbiased at $\phi = V$ when the noise level is rescaled. In discrete time,
the per-step Bellman residual $D^{\phi}_n(x)$ in~\eqref{eq:disc_residual_n} is
defined entirely in terms of the controlled kernel $P_n$ and does not depend on
the sampling kernel $K_n$; the path-wise telescoping identity
$\phi_N(X^K_N) - \phi_0(X^K_0) = \sum_{n=0}^{N-1} [\phi_{n+1}(X^K_{n+1}) - \phi_n(X^K_n)]$
holds for any kernel, with no quadratic-variation correction. Consequently,
under sampling from $K^{(\kappa)}_n$ instead of $K_n$, the loss functional
inside the square in~\eqref{eq:disc_L_VM_finite} still vanishes at $\phi = V$
pointwise on trajectories, and \Cref{thm:disc_VM_finite} applies verbatim with
$K_n$ replaced by $K^{(\kappa)}_n$ (the support-containment assumption is
preserved by~\eqref{eq:disc_K_tempered} for every $\kappa > 0$). The discrete
$\kappa$-rescaling thus enters purely as a sampler hyperparameter that
modulates the variance of the empirical loss estimator, without inducing any
structural $(1 - 1/\kappa)$ correction term in the loss itself.

\subsection{Value Matching for the hitting-time committor problem}
\label{subsec:discrete_vm_stat}

We now adapt the Value Matching (VM) loss of \Cref{sec:value_matching} to the
stationary Markov chain setting and derive its first-order optimality as a
specialization of~\Cref{thm:disc_VM_finite}. As in the continuous case, the
loss involves a \emph{capped} hitting time $\tau(N) = \min(\tau, N)$ with a
user-chosen horizon $N \in \mathbb{N}$. Capping is needed because $\tau$,
although finite almost surely, can be arbitrarily long, whereas in practice one
can only run the chain for a finite number of steps. The critic
$\phi : \mathcal{S} \to \mathbb{R}$ is time-independent, mirroring the
time-independent critic in~\Cref{def:VM}.

\paragraph{Per-state Bellman residual.}
For a candidate value function $\phi : \mathcal{S} \to \mathbb{R}$, define the
\emph{per-state Bellman residual}
\begin{talign} \label{eq:disc_residual}
    D^{\phi}(x) \;:=\; \phi(x) \;+\; \log\!\big( (P\, e^{-\phi})(x) \big),
    \qquad x \notin A \cup B.
\end{talign}
This is the discrete analog of the continuous Bellman residual
$\mathcal{L}\phi - \tfrac{1}{2}\|\sigma^{\top}\nabla\phi\|^2$ that appears
in the Cole--Hopf--transformed continuous HJB equation~\eqref{eq:HJB_setup};
the discrete HJB equation~\eqref{eq:disc_HJB} is equivalent to $D^{\Phi}(x) = 0$
for all $x \notin A \cup B$, which together with $\Phi|_{A \cup B} = g$
uniquely determines $\Phi$. Equivalently, $D^{\phi}(x)$ measures the failure of
the Boltzmann-tilted kernel
\begin{talign} \label{eq:disc_P_phi}
    P^{\phi}(x, dy) \;=\; \frac{ e^{-\phi(y)} }{ (P\, e^{-\phi})(x) }\, P(x, dy)
\end{talign}
to coincide, at state $x$, with the optimal kernel~\eqref{eq:disc_X_u_star}.

\paragraph{Sampling kernel.}
Mirroring the off-policy drift $v$ in~\Cref{thm:capped_vm_loss}, let $K$ be an
arbitrary reference (off-policy) kernel with $K(x, \cdot) \ll P(x, \cdot)$ for
all $x \notin A \cup B$, and let $\mathbb{P}^{K}$ denote the path measure of
the chain $(X^{K}_n)_{0 \le n \le \tau(N)}$ with kernel $K$ and
$X^{K}_0 \sim \mu_0$, where $\mu_0$ is a probability measure with
$\mu_0(\mathcal{S} \setminus (A \cup B)) = 1$. Natural choices include the
reference $K = P$, the on-policy $K = P^{\phi}$ (with a detached
$\mathtt{stopgrad}$ on the critic, as in~\eqref{eq:approximate_v}), and richer
off-policy choices analogous to~\eqref{eq:v_rescaled_reactive} that we
discuss after~\Cref{thm:disc_VM}.

\paragraph{Definition of the discrete VM loss.}
Mirroring the single-trajectory squared functional~\eqref{eq:L_VM_v_kappa}
of~\Cref{thm:capped_vm_loss} (and matching the structure of the generic
finite-horizon loss~\eqref{eq:disc_L_VM_finite} of~\Cref{def:disc_VM_finite}),
the discrete Value Matching loss combines the terminal residual at the capped
hitting time $\tau(N)$ with the sum of interior Bellman residuals along the
sampled trajectory.

\begin{defbox}
\begin{definition}[Discrete Value Matching loss, stationary case]
\label{def:disc_VM}
For $K$, $\mu_0$, and $N$ as above, and a critic
$\phi : \mathcal{S} \to \mathbb{R}$, the \emph{stationary discrete Value
Matching loss} is
\begin{talign} \label{eq:disc_L_VM}
    \mathcal{L}_{\mathrm{VM}}^{\mathrm{disc}}(\phi)
    \;=\; \mathbb{E}^{K} \bigg[\, \tfrac{1}{2}\, \Big(
    \boldsymbol{1}_{A \cup B}(X^{K}_{\tau(N)})\, \big( \phi(X^{K}_{\tau(N)}) - g(X^{K}_{\tau(N)}) \big)
    \;-\; \sum_{n = 0}^{\tau(N) - 1} D^{\phi}(X^{K}_n) \Big)^{\!\!2} \,\bigg]\!,
\end{talign}
with $D^{\phi}$ given by~\eqref{eq:disc_residual}.
\end{definition}
\end{defbox}

The loss is non-negative and vanishes at $\phi = \Phi$: the terminal residual
is zero on hitting trajectories (since $\Phi|_{A \cup B} = g$), the interior
Bellman residuals satisfy $D^{\Phi}(X^{K}_n) = 0$ on $(A \cup B)^c$ by
HJB~\eqref{eq:disc_HJB}, and capped trajectories
(with $X^{K}_{\tau(N)} = X^{K}_N \notin A \cup B$) contribute zero terminal
residual via the indicator $\boldsymbol{1}_{A \cup B}$. The indicator plays
the role of the continuous $\boldsymbol{1}_{\mathcal{S}^c}\phi(X^{v,\kappa}_{\tau(T)})$
correction in~\eqref{eq:L_VM_v_kappa}: it selects between the hit-time
terminal cost $g$ and a self-consistent cap.

\paragraph{First-order optimality.}
The next theorem is the discrete analog of~\Cref{thm:capped_vm_loss}.

\begin{thmbox}
\begin{theorem}[First-order optimality of the discrete VM loss, capped case]
\label{thm:disc_VM}
Let $\mathcal{F}$ be a class of bounded functions
$\phi : \mathcal{S} \to \mathbb{R}$ that contains $\Phi$ and is closed under
bounded perturbations: for every $x \in \mathcal{S}$ the indicator
$\eta = \boldsymbol{1}_{\{x\}}$ lies in $\mathcal{F}$. Assume that for every
$x \notin A \cup B$ in the support of the occupation measure of $X^{K}$, the
support of $K(x, \cdot)$ contains the support of $P(x, \cdot)$, and that
$\mathbb{P}^{K}\!\big( X^{K}_{\tau(N)} \in A \cup B \big) > 0$.
If $\hat\phi \in \mathcal{F}$ is a first-order critical point of
$\mathcal{L}_{\mathrm{VM}}^{\mathrm{disc}}$ --- i.e.\ for every perturbation
$\eta \in \mathcal{F}$,
$\tfrac{d}{d\varepsilon}\, \mathcal{L}_{\mathrm{VM}}^{\mathrm{disc}}(\hat\phi + \varepsilon\, \eta)\big|_{\varepsilon = 0} = 0$
--- then $\hat\phi(x) = \Phi(x)$ for all $x \in \mathcal{S}$ in the support of
the marginals of $X^{K}$, and the induced kernel $P^{\hat\phi}$ coincides
with the optimal kernel~\eqref{eq:disc_X_u_star}.
\end{theorem}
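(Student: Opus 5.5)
The plan is to deduce \Cref{thm:disc_VM} from \Cref{thm:disc_VM_finite} by embedding the capped hitting-time problem into a finite-horizon, time-inhomogeneous problem on the absorbed chain, following the template announced at the start of \Cref{subsec:discrete_vm_finite}. Concretely, I introduce the chain stopped at $A\cup B$. Its reference kernels are $P_n(x,\cdot)=P(x,\cdot)$ for $x\notin A\cup B$ and $P_n(x,\cdot)=\delta_x$ for $x\in A\cup B$, with horizon $N$ and running cost $f_n\equiv 0$. The sampling kernels $K_n$ are frozen in the same way at $A\cup B$. The terminal cost is gated by the indicator: $\tilde g(x)=g(x)$ on $A\cup B$ and $\tilde g(x)=\phi(x)$ otherwise. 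Because of that last choice, the embedding is not literally an instance of \Cref{def:disc_VM_finite}.

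I would handle this directly rather than force the specialization. I set $F^{\phi}:=\boldsymbol{1}_{A\cup B}(X^K_{\tau(N)})(\phi-g)(X^K_{\tau(N)})-\sum_{n<\tau(N)}D^{\phi}(X^K_n)$ and redo Steps~1--2 of the proof of \Cref{thm:disc_VM_finite} with a time-independent critic. The directional derivative of $D^{\phi}$ along $\eta$ is $\eta-P^{\phi}\eta$, exactly as in \eqref{eq:disc_dD}. The derivative of $F^{\phi}$ is therefore $\boldsymbol{1}_{A\cup B}(X_{\tau(N)})\eta(X_{\tau(N)})-\sum_{n<\tau(N)}[\eta(X_n)-(P^{\hat\phi}\eta)(X_n)]$.

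The key step, and the main obstacle, is to construct an $\eta^{\star}$ that makes this derivative equal to $F^{\hat\phi}$ pathwise. Time-independence of $\eta$ is the difficulty: in the finite-horizon proof $\eta^{\star}_n$ was built by a backward recursion in $n$, and that is no longer available. The cap also means a trajectory may end at $X_N\notin A\cup B$ with no terminal term. My first attempt is a Poisson/Dirichlet problem on $(A\cup B)^c$ for the tilted chain $P^{\hat\phi}$: set $\eta^{\star}=\hat\phi-g$ on $A\cup B$, and solve $\eta^{\star}-P^{\hat\phi}\eta^{\star}=D^{\hat\phi}$ off $A\cup B$. Formally $\eta^{\star}(x)=\mathbb{E}^{P^{\hat\phi}}_x[\sum_{n<\tau}D^{\hat\phi}(X_n)+(\hat\phi-g)(X_\tau)]$, which is bounded if the tilted chain hits $A\cup B$ with uniformly integrable $\tau$. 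Boundedness of $\hat\phi$ makes $P^{\hat\phi}$ mutually absolutely continuous with $P$ with bounded density ratios, which helps here. Telescoping then gives the derivative as $F^{\hat\phi}-\boldsymbol{1}_{(A\cup B)^c}(X_N)\eta^{\star}(X_N)$, so a capped boundary term survives, just as in \eqref{eq:F_T_eta_star} of the continuous capped proof.

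To kill this term I would not choose $\eta=\eta^{\star}$ alone. Instead I will argue as the continuous case does, through the uncapped loss together with the fixed-point characterization of \Cref{thm:V_1_V_2}(ii) in its discrete form. If that fails, the fallback is the localized-perturbation route of Step~3 of \Cref{thm:disc_VM_finite}. Taking $\eta=\boldsymbol{1}_{\{x\}}$ for each $x$ yields the linear system $\mathbb{E}^K[F^{\hat\phi}\,(\text{derivative along }\boldsymbol{1}_{\{x\}})]=0$. I would show that this system, together with the positivity $\mathbb{P}^K(X_{\tau(N)}\in A\cup B)>0$, forces $F^{\hat\phi}=0$ a.s.; the positivity assumption is what rules out the degenerate shift $\hat\phi=\Phi+c$, which would otherwise cancel on capped paths.

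Once $F^{\hat\phi}=0$ holds $\mathbb{P}^K$-a.s., the extraction goes as in \Cref{thm:disc_VM_finite}. I condition on the last state and use the support containment $\mathrm{supp}\,P(x,\cdot)\subseteq\mathrm{supp}\,K(x,\cdot)$, which shows that the residual on each one-step transition must be constant across the successors of $x$. This gives $D^{\hat\phi}=0$ on the occupation support and $\hat\phi=g$ on reached boundary points. The discrete HJB \eqref{eq:disc_HJB} and the uniqueness of the fixed point then give $\hat\phi=\Phi$. Finally, $P^{\hat\phi}=P^{u^\star}$ follows from \eqref{eq:disc_X_u_star} on those supports.
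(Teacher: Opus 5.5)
Your central step, $F^{\hat\phi}=0$ $\mathbb{P}^K$-a.s., is never established, and the reason you give for abandoning your first attempt is a miscalculation. Take $\eta^\star=\hat\phi-g$ on $A\cup B$ and $\eta^\star-P^{\hat\phi}\eta^\star=D^{\hat\phi}$ off $A\cup B$, and substitute directly into your own derivative formula. Since $X^K_n\notin A\cup B$ for $n<\tau(N)$,
\[
\boldsymbol{1}_{A\cup B}(X^K_{\tau(N)})\,\eta^\star(X^K_{\tau(N)})-\sum_{n<\tau(N)}\big(\eta^\star-P^{\hat\phi}\eta^\star\big)(X^K_n)=F^{\hat\phi}
\]
on every path, capped ones included. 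Neither $F^{\hat\phi}$ nor its derivative has any term at a capped endpoint $X^K_N\notin A\cup B$, so no boundary term survives, and no telescoping or martingale property is needed. The same happens in the continuous proof: the term $-\boldsymbol{1}_{\mathcal{S}^c}\eta^\star(\bar X^{\kappa}_{\tau(T)})$ in \eqref{eq:F_T_eta_star} appears in both $F^{(T)}$ and its derivative. Criticality along $\eta^\star$ therefore gives $\mathbb{E}^K[(F^{\hat\phi})^2]=0$ at once. This needs the Dirichlet--Poisson problem to have a bounded solution that is an admissible direction, e.g.\ $\mathcal{S}$ finite and $A\cup B$ reachable from every interior state under $P$, hence under $P^{\hat\phi}$, which is equivalent to $P$ since $\hat\phi$ is bounded. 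Instead you defer to two unexecuted alternatives, so the proposal has a hole exactly where the work is. Your diagnosis of the obstruction is still apt. The paper's proof reuses the time-indexed recursion $\eta^\star_n$ of \Cref{thm:disc_VM_finite}, which is not an admissible direction for a time-independent critic: each $\boldsymbol{1}_{\{x\}}$ is a sum over $m$ of time-localized perturbations, not conversely. The time-independent $\eta^\star$ above is the correct replacement.

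The extraction step fails, and it cannot be repaired under the stated hypotheses. $F^{\hat\phi}=0$ only says that along each path the terminal gap $\boldsymbol{1}_{A\cup B}(\hat\phi-g)(X^K_{\tau(N)})$ equals the partial sum of residuals. Constancy across successors constrains such sums but does not force each $D^{\hat\phi}(x)$ or $(\hat\phi-g)(z)$ to vanish. Take $\mathcal{S}=\{1,a,b\}$, $A=\{a\}$, $B=\{b\}$, $K(1,\cdot)=P(1,\cdot)=p\,\delta_a+(1-p)\,\delta_b$ with $0<p<1$, $\mu_0=\delta_1$, and any $N\ge 1$. For every $c\in\mathbb{R}$, $\hat\phi=\Phi+c\,\boldsymbol{1}_{\{a,b\}}+2c\,\boldsymbol{1}_{\{1\}}$ has $D^{\hat\phi}(1)=c$, so $F^{\hat\phi}=c-c=0$ on both paths. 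It is a global minimizer, hence critical, and $\mathbb{P}^K(X^K_{\tau(N)}\in A\cup B)=1$. Yet $\hat\phi(1)=\Phi(1)+2c\neq\Phi(1)$ for $c\neq 0$. So positivity does not remove the shift degeneracy as you claim: the degenerate family shifts by $c$ on $A\cup B$ and by $2c$ inside, not uniformly. Pinning $\phi=g$ on $A\cup B$, as the parameterization of \Cref{subsec:parameterizing_committor} does, removes this example but not the underlying problem, because $F^{\hat\phi}=0$ still only pins path sums. Let $K=P$, $P(1,\cdot)=\delta_2$, $P(2,\cdot)$ supported exactly on $\{1,a,b\}$, $N=2$ and $\mu_0=\delta_1$. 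Then $\hat\phi(1)=\Phi(1)$, $\hat\phi(2)=\Phi(2)+c$ gives $D^{\hat\phi}(1)=-c$, $D^{\hat\phi}(2)=c$ and zero loss, although $2$ lies in the support of $X^K_1$. Identifying $\hat\phi$ with $\Phi$ on all marginal supports thus needs hypotheses beyond those stated, and neither your sketch nor the paper's ``backward extraction'' supplies them.
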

\end{thmbox}

\begin{proof}
We deduce the theorem from~\Cref{thm:disc_VM_finite} by specializing the
generic finite-horizon setup to the absorbed chain on $A \cup B$ with $f \equiv 0$
and an indicator-gated terminal cost. Define the absorbing kernel
$\tilde P_n \equiv \tilde P$ by $\tilde P(x, \cdot) = P(x, \cdot)$ for
$x \notin A \cup B$ and $\tilde P(x, \{x\}) = 1$ for $x \in A \cup B$, so that
under $\tilde P$ the chain freezes upon hitting $A \cup B$ and the deterministic
horizon $N$ realizes the capped hitting time $\tau(N) = \min(\tau, N)$. Set
$\tilde f_n \equiv 0$, take the same off-policy reference $K$ (extended to
$A \cup B$ by $K(x, \cdot) = \delta_x$ so that absorbed-chain semantics match),
and consider the time-indexed critic
\begin{talign*}
    \tilde\phi_n(x) \;=\;
    \begin{cases}
        \phi(x), & x \notin A \cup B,\; 0 \le n \le N, \\
        g(x),    & x \in A \cup B,\; 0 \le n \le N.
    \end{cases}
\end{talign*}
Three observations link~\eqref{eq:disc_L_VM} to~\eqref{eq:disc_L_VM_finite} under
this specialization. (i) Because $\tilde\phi$ is constant in $n$ on
$(A \cup B)^c$ and on $A \cup B$ separately, the time-indexed Bellman
residual $D^{\tilde\phi}_n(x) = \tilde\phi_n(x) - 0 + \log(\tilde P\, e^{-\tilde\phi_{n+1}})(x)$
reduces, for $x \in A \cup B$, to $g(x) + \log e^{-g(x)} = 0$ (absorbing kernel
with $\tilde\phi_{n+1}(x) = g(x)$) and, for $x \notin A \cup B$, to the
stationary residual
$D^{\phi}(x) = \phi(x) + \log(P\, e^{-\phi})(x)$ of~\eqref{eq:disc_residual},
since $\tilde P(x, \cdot) = P(x, \cdot)$ and
$\tilde\phi_{n+1}(y) = \phi(y)$ for $y \notin A \cup B$, while
$\tilde\phi_{n+1}(y) = g(y) = \phi(y)$ for $y \in A \cup B$ under the
identification of $\phi|_{A \cup B}$ with $g$. (ii) The horizon-$N$ sum
$\sum_{n=0}^{N-1} D^{\tilde\phi}_n(X^K_n)$ therefore collapses to
$\sum_{n=0}^{\tau(N) - 1} D^{\phi}(X^K_n)$, since the chain is frozen after
$\tau(N)$ and the residuals on $A \cup B$ vanish. (iii) The terminal residual
$\tilde\phi_N(X^K_N) - g(X^K_N)$ equals $\phi(X^K_{\tau(N)}) - g(X^K_{\tau(N)})$ on
hitting trajectories ($X^K_{\tau(N)} \in A \cup B$, where $\tilde\phi_N = g$ so
the residual is $g - g = 0$; the term we want is recovered by viewing
$\phi|_{A \cup B} = g$ and so $\phi(X^K_{\tau(N)}) - g(X^K_{\tau(N)}) = 0$ on
hits as well) and $\phi(X^K_N) - g(X^K_N)$ on capped trajectories. The
indicator $\boldsymbol{1}_{A \cup B}(X^K_{\tau(N)})$ in~\eqref{eq:disc_L_VM}
exactly encodes the gating: on capped paths the indicator vanishes and we
recover zero terminal residual, matching the generic-loss expression
$\tilde\phi_N - g$ evaluated under the natural absorbing-kernel extension with
$g$ understood as $g|_{A \cup B}$. With this identification,
\eqref{eq:disc_L_VM} equals~\eqref{eq:disc_L_VM_finite} for the absorbed-chain
specialization (modulo a measure-preserving change of variables for the chain
beyond $\tau$).

Steps 1--3 of the proof of~\Cref{thm:disc_VM_finite} now apply verbatim, with
two simplifications: the chain is time-homogeneous so all
$D^{\tilde\phi}_n \equiv D^{\phi}$ (and likewise $P^{\tilde\phi}_n \equiv P^{\phi}$
of~\eqref{eq:disc_P_phi}) on $(A \cup B)^c$, and the deterministic horizon
$N$ in the sums and the terminal residual is realized as the random capped
hitting time $\tau(N)$ via absorption. The hitting-positive-probability
assumption $\mathbb{P}^K(X^K_{\tau(N)} \in A \cup B) > 0$ plays the role of
$X^K_N$ having support on the boundary states in the generic theorem, which
enters in the per-state extraction of $\hat\phi = g$ on
$\mathrm{supp}(X^K_{\tau(N)}) \cap (A \cup B)$. Concretely:

\emph{Step 2 (recursive perturbation).} Define $\eta^{\star}$ on the absorbed
chain by $\eta^{\star}_{\tau(N)}(x) = \boldsymbol{1}_{A \cup B}(x)\,(\hat\phi(x) - g(x))$
and $\eta^{\star}_n(x) = D^{\hat\phi}(x) + (P^{\hat\phi} \eta^{\star}_{n+1})(x)$
for $0 \le n < \tau(N)$, extended by $0$ on $A \cup B$. The same telescoping
identity as in~\eqref{eq:disc_eta_star} gives
$\tfrac{d}{d\varepsilon} F^{\hat\phi + \varepsilon \eta^{\star}}|_{\varepsilon = 0} = F^{\hat\phi}$,
whence $\mathbb{E}^K[(F^{\hat\phi})^2] = 0$ and $F^{\hat\phi} = 0$
$\mathbb{P}^K$-almost surely, where
$F^{\hat\phi} := \boldsymbol{1}_{A \cup B}(X^K_{\tau(N)})\,(\hat\phi(X^K_{\tau(N)}) - g(X^K_{\tau(N)})) - \sum_{n = 0}^{\tau(N) - 1} D^{\hat\phi}(X^K_n)$.

\emph{Step 3 (per-state identification).} Localized perturbations
$\eta = \boldsymbol{1}_{\{x\}}$ in $\mathcal{F}$ correspond, in the
time-indexed setup, to perturbing $\tilde\phi_n$ at a single state $x$ for all
$n$; this is a finite linear combination of the
$\zeta\,\mathbf{1}_{\{n = m\}}$-perturbations of~\Cref{thm:disc_VM_finite},
so the same conditional-expectation identities $G_m \equiv 0$ on
$\mathrm{supp}(X^K_m)$ hold. The backward extraction yields $\hat\phi = g$ on
$\mathrm{supp}(X^K_{\tau(N)}) \cap (A \cup B)$ (using the
hitting-positive-probability assumption) and $D^{\hat\phi}(x) = 0$ for
$x \in (A \cup B)^c$ in the support of the occupation measure of $X^K$.
By~\eqref{eq:disc_residual} and the discrete HJB~\eqref{eq:disc_HJB}, the
latter is equivalent to $\hat\phi$ satisfying the HJB equation on the support,
whose unique bounded solution with boundary data $\hat\phi|_{A \cup B} = g$ is
$\Phi$. The induced kernel identity $P^{\hat\phi} = P^{u^{\star}}$ then follows
from~\eqref{eq:disc_X_u_star}.
\end{proof}

\paragraph{Remark: uncapped variant.}
If $\mathbb{E}^{K}[\tau] < \infty$ and $\phi$ is bounded, the cap can be
removed: every trajectory hits $A \cup B$ in finitely many steps almost surely,
and the indicator in~\eqref{eq:disc_L_VM} becomes identically $1$. The
resulting \emph{uncapped} discrete VM loss
\begin{talign} \label{eq:disc_L_VM_uncapped}
    \mathcal{L}_{\mathrm{VM}, \infty}^{\mathrm{disc}}(\phi)
    \;=\; \mathbb{E}^{K}\!\!\left[\, \tfrac{1}{2} \Big(\,
    \phi(X^{K}_{\tau}) \;-\; g(X^{K}_{\tau}) \;-\; \sum_{n = 0}^{\tau - 1} D^{\phi}(X^{K}_n) \,\Big)^{\!2} \,\right]\!,
\end{talign}
is the discrete analog of the uncapped continuous loss of
\Cref{thm:vm_loss_hitting}; its first-order critical points coincide with
$\Phi$ on the support of the occupation measure of $X^{K}$ by the same
argument as above. As in the continuous case, the proof for the
capped loss~\eqref{eq:disc_L_VM} builds on the uncapped argument, with the
indicator $\boldsymbol{1}_{A \cup B}(X^{K}_{\tau(N)})$ controlling the
contribution of capped trajectories. In practice the capped form is preferred
because $\tau$ can be prohibitively long even when finite a.s.

\paragraph{Off-policy / off-kernel choices.}
A natural choice for $K$ mirroring~\eqref{eq:v_rescaled_reactive} uses the
current estimate of $\Phi$ to form a discrete analog of the $\kappa$-rescaled
reactive process, e.g.\
\begin{talign} \label{eq:disc_K_choice}
    K(x, dy) \;\propto\; q_{\xi}(y)^{(1 + \kappa)/2}\, \tilde q_{\xi}(y)^{(\kappa - 1)/2}\, P(x, dy),
\end{talign}
which at $\kappa = 1$ reduces to the optimal kernel~\eqref{eq:disc_X_u_star} and
at $\kappa < 1$ pushes mass more aggressively from $A$ toward $B$. During
training one uses detached approximations of $q_{\xi}$, $\tilde q_{\xi}$ in $K$
(the discrete analog of the stop-gradient $\texttt{sg}(\phi)$ trick
of~\eqref{eq:approximate_v}).

\subsection{Remarks and limitations}
\label{subsec:discrete_remarks}

\paragraph{Connection to \citet{helfmann2020extending}.}
The TPT objects of~\Cref{subsec:discrete_tpt_stationary} are the
discrete-time, discrete-space specialization of the (time-homogeneous)
framework of \citet{helfmann2020extending}; the time-inhomogeneous,
finite-horizon TPT formulation of~\citet[Sec.~3]{helfmann2020extending} is
not reproduced here, since~\Cref{subsec:discrete_vm_finite} addresses
instead the generic finite-horizon SOC problem of~\Cref{thm:vm_finite}, in
which absorbing sets $A, B$ play no role. The novelty of this appendix lies
in (i) the discrete Cole--Hopf
transform~\eqref{eq:disc_Phi}--\eqref{eq:disc_HJB}, which casts the rescaled
committor problem as a discrete HJB equation; (ii) the SOC
formulation~\eqref{eq:disc_control_problem} with KL-regularized one-step
cost, recovering the discrete Doob's $h$-transform with $h = q_\xi$ as the
optimal kernel; and (iii) the discrete Value Matching
losses~\eqref{eq:disc_L_VM} and~\eqref{eq:disc_L_VM_finite}, together with
their first-order optimality results~\Cref{thm:disc_VM,thm:disc_VM_finite},
which extend the off-policy training scheme of~\Cref{sec:value_matching} to
Markov chains in both the committor-driven hitting-time setting and the
generic finite-horizon SOC setting.

\paragraph{Bridge to the continuous formulation.}
Substituting $P_n = I + \Delta t\, \mathcal{L}_n + o(\Delta t)$ for a
continuous-time generator $\mathcal{L}_n$ and rescaling time by $\Delta t$, the
BKE $q = P q$ becomes $\mathcal{L} q = 0$, recovering~\eqref{eq:BKE_bg} in the
$\Delta t \to 0$ limit. The discrete reactive
current~\eqref{eq:disc_reactive_current} correspondingly converges (up to
division by $\Delta t$) to the continuous reactive
current $J_{\mathrm{R}}$ of~\eqref{eq:reactive_current_bg}. To leading order in
$\Delta t$, the per-state Bellman residual~\eqref{eq:disc_residual} expands as
$D^{\phi}(x) = \Delta t\, \big( \mathcal{L}\phi(x) - \tfrac{1}{2}\|\sigma^{\top}\nabla\phi(x)\|^{2} \big) + o(\Delta t)$,
which is the continuous Bellman residual appearing in
the Cole--Hopf--transformed HJB equation~\eqref{eq:HJB_setup}, and the discrete
VM loss~\eqref{eq:disc_L_VM} formally reproduces the continuous VM
loss~\eqref{eq:L_VM_v_kappa} in the $\Delta t \to 0$ limit.

\paragraph{Practical considerations.}
On finite or countable state spaces the critic $\phi(x)$ (or $\phi_n(x)$) can
be parameterized in tabular form or by a neural network; in the
time-inhomogeneous case one adds a positional encoding of the time index $n$.
On general state spaces one inherits the architectural choices
of~\Cref{subsec:parameterizing_committor}. Evaluating the Bellman
residual~\eqref{eq:disc_residual} requires the log-sum-exp
$\log (P\, e^{-\phi})(x)$; when $|\mathcal{S}|$ is large this can be
approximated by Monte Carlo with samples $Y \sim P(x, \cdot)$.

\paragraph{Out of scope.}
This appendix does not address (a) continuous-time, discrete-space Markov chains
(jump processes), which require their own infinitesimal generator framework
(see~\citet{zhu2025mdns} for a recent line of work using SOC on continuous-time Markov chains in
discrete state spaces, applied to neural sampling from $\pi \propto e^{-U}$ via
masked-diffusion path-measure alignment)
nor (b) periodically forced dynamics, which is also covered
by~\citet{helfmann2020extending}. Both are natural follow-ups that fit within
the same SOC + Value Matching pattern, and we leave their treatment to future
work.

\section{Additional experiment details}

\subsection{Reaction rate estimation}
\label{appendix:rate}

For reversible overdamped Langevin systems, the reaction rate $\nu_R$ can be interpreted as the total probability flux of reactive trajectories from $A$ to $B$ through any dividing surface $\Sigma$:
\begin{talign}
\nu_R = Z_{AB}\int_{\Sigma} J_{\mathrm{R}}(x)\cdot n_{\Sigma}(x)\,\mathrm{d}S(x).
\end{talign}
Equivalently, this surface integral admits the volume representation~\citep{vanden2006towards}
\begin{talign}
\nu_R = \beta^{-1}\int_{\mathcal{X}} |\nabla q(x)|^2\,\rho_{\mathrm{eq}}(x)\,\mathrm{d}x,
\end{talign}
which is the form used for Monte Carlo evaluation in our experiments.

The same rate can also be interpreted dynamically as the long-time frequency of reactive transitions,
\begin{talign}
\nu_R = \lim_{T\rightarrow \infty} \frac{N_T}{T},
\end{talign}
where $N_T$ is the number of reactive trajectories observed over a trajectory of length $T$. The directional rate constants are
\begin{talign}
k_{AB} = \lim_{T\rightarrow \infty} \frac{N_T}{T_A}, \qquad
k_{BA} = \lim_{T\rightarrow \infty} \frac{N_T}{T_B},
\end{talign}
where $T_A$ and $T_B$ are the total times for which the last visited metastable set was $A$ and $B$, respectively, with $T_A+T_B=T$. These quantities satisfy
\begin{talign}
k_{AB} = \frac{\nu_R}{p_A}, \qquad
k_{BA} = \frac{\nu_R}{p_B}, \qquad
K_{\mathrm{eq}} = \frac{k_{AB}}{k_{BA}} = \frac{p_B}{p_A},
\end{talign}
where
\begin{talign}
p_A = \lim_{T\rightarrow \infty} \frac{T_A}{T}
    = \int_{\mathcal{X}} \rho_{\mathrm{eq}}(x)\,(1-q(x))\,\mathrm{d}x, \qquad
p_B = \lim_{T\rightarrow \infty} \frac{T_B}{T}
    = \int_{\mathcal{X}} \rho_{\mathrm{eq}}(x)\,q(x)\,\mathrm{d}x.
\end{talign}
Finally, the reactive weight is
\begin{talign}
Z_{AB} = \int_{\mathcal{X}} \rho_{\mathrm{eq}}(x)\,q(x)(1-q(x))\,\mathrm{d}x,
\end{talign}
and the normalized reactive density is $\rho_{\mathrm{R}}(x)=Z_{AB}^{-1}\rho_{\mathrm{eq}}(x)q(x)(1-q(x))$.

\subsection{Experiment setups}
\label{appendix:system_details}

This section summarizes the six systems used in the experiments. The first three are reversible overdamped Langevin benchmarks; the remaining three test generalized reversible or non-reversible dynamics. The overdamped references are computed by finite-element methods, while the underdamped and non-reversible references are computed by finite-difference methods.

\subsubsection{Dynamics and systems}

\paragraph{Overdamped Langevin benchmarks.}
For the triple-well, Müller--Brown, and rugged Müller--Brown systems we simulate overdamped Langevin dynamics with potentials scaled by the reported temperature, $U=V/(k_BT)$. The triple-well potential is
\begin{equation}
\label{eq:triplewell}
\begin{aligned}
V_{\mathrm{TW}}(x,y) &= 3e^{-x^2-(y-1/3)^2}-3e^{-x^2-(y-5/3)^2}
-5e^{-(x-1)^2-y^2}-5e^{-(x+1)^2-y^2} \\
&\quad +0.2x^4+0.2(y-1/3)^4.
\end{aligned}
\end{equation}
The Müller--Brown potential is
\begin{equation}
\begin{aligned}
V_{\mathrm{MB}}(x,y) &= -200e^{-(x-1)^2-10y^2}-100e^{-x^2-10(y-1/2)^2} \\
&\quad -170e^{-6.5(x+1/2)^2+11(x+1/2)(y-3/2)-6.5(y-3/2)^2} \\
&\quad +15e^{0.7(x+1)^2+0.6(x+1)(y-1)+0.7(y-1)^2},
\end{aligned}
\end{equation}
and the rugged Müller--Brown potential is $V_{\mathrm{RMB}}(x,y)=V_{\mathrm{MB}}(x,y)+9\sin(10\pi x)\sin(10\pi y)$. We use $k_BT=0.5,10.0,15.0$ for the triple-well, Müller--Brown, and rugged Müller--Brown systems, respectively. The corresponding metastable states are
\begin{itemize}[leftmargin=*]
    \item triple-well: $A=(-1.0,0)$, $B=(1.0,0)$;
    \item Müller--Brown: $A=(-0.558,1.442)$, $B=(0.623,0.028)$;
    \item rugged Müller--Brown: $A=(-0.57,1.43)$, $B=(0.56,0.044)$.
\end{itemize}

\paragraph{Underdamped double-well.}
For the underdamped benchmark we use the phase-space dynamics
\begin{equation}
\mathrm{d}x_t=v_t\,\mathrm{d}t,\qquad
\mathrm{d}v_t=\left(-4x_t(x_t^2-1)-\gamma v_t\right)\mathrm{d}t+\sqrt{2\gamma k_BT}\,\mathrm{d}W_t,
\end{equation}
with $\gamma=1.0$ and $k_BT=1.0$. The states are the Hamiltonian sublevel sets $A=\{H(x,v)<0.3,\,x<0\}$ and $B=\{H(x,v)<0.3,\,x>0\}$, where $H(x,v)=\frac12v^2+(x^2-1)^2$.

\paragraph{Non-reversible systems.}
The Maier--Stein dynamics are
\begin{equation}
\label{eq:maier-stein-sde}
\mathrm{d}X_t=
\begin{pmatrix}
u_t-u_t^3-\beta u_t v_t^2\\-(1+u_t^2)v_t\end{pmatrix}\mathrm{d}t+\sqrt{0.1}\,\mathrm{d}W_t,
\end{equation}
with $\beta=10.0$, $A=(-1.0,0)$, and $B=(1.0,0)$. 

The periodically-driven system uses the static potential $V_{\mathrm{TW}}/4$ and the time-dependent drift
\begin{equation}
\mathrm{d}X_t=\left(-\nabla (V_{\mathrm{TW}}/4)(X_t)+1.4\cos(2\pi t/1.8)(-y_t,x_t)\right)\mathrm{d}t+\sqrt{0.1}\,\mathrm{d}W_t,
\end{equation}
with the same $A=(-1.0,0)$ and $B=(1.0,0)$. Its committor is time-dependent and satisfies the time-dependent BKE.

\subsubsection{Hyperparameters}
\label{appendix:hyperparameters}

We use the same MLP architecture for all REACT experiments. The initial distribution $\mu_0$ is sampled by parallel tempering (PT). In the tables, ``iterate'' denotes the biased density proportional to $\exp(-\beta U)|\nabla q|^2$, and ``reactive'' denotes the density proportional to $\exp(-\beta U)q(1-q)$.

\begin{table}[t]
\centering
\caption{Hyperparameters for the overdamped Langevin system experiments.}
\label{tab:hparams}
\begin{tabular}{lccc}
\toprule
\textbf{Hyperparameter} & \textbf{Triple-well} & \textbf{Müller} & \textbf{Rugged Müller} \\
\midrule
\multicolumn{4}{c}{\textbf{Model}} \\
\midrule
Network architecture & \multicolumn{3}{c}{MLP} \\
Network size & \multicolumn{3}{c}{(32, 64, 128, 64, 32)} \\
\midrule
\multicolumn{4}{c}{\textbf{Training}} \\
\midrule
Learning rate & \multicolumn{3}{c}{0.001} \\
Training iterations & \multicolumn{3}{c}{20,000} \\
Batch size & \multicolumn{3}{c}{2,048} \\
$\epsilon$ & \multicolumn{3}{c}{0.025} \\
$\xi$ & \multicolumn{3}{c}{5.0} \\
$\mu_0$ & \multicolumn{3}{c}{iterate (DBP)/reactive (VM)} \\
Shooting & \multicolumn{3}{c}{committor} \\
\midrule
\multicolumn{4}{c}{\textbf{Inference}} \\
\midrule
Evaluation samples & \multicolumn{3}{c}{100,000} \\
\bottomrule
\end{tabular}
\end{table}

\begin{table}[t]
\centering
\caption{Hyperparameters for the underdamped and non-reversible experiments.}
\label{tab:hparams_additional}
\begin{tabular}{lccc}
\toprule
\textbf{Hyperparameter} & \textbf{Underdamped DW} & \textbf{Maier--Stein} & \textbf{Periodic TW} \\
\midrule
\multicolumn{4}{c}{\textbf{Model}} \\
\midrule
Network architecture & \multicolumn{3}{c}{MLP} \\
Network size & \multicolumn{3}{c}{(32, 64, 128, 64, 32)} \\
\midrule
\multicolumn{4}{c}{\textbf{Training}} \\
\midrule
Learning rate & \multicolumn{3}{c}{$0.001$} \\
Training iterations & \multicolumn{3}{c}{$40{,}000$} \\
Batch size & \multicolumn{3}{c}{$4{,}096$} \\
$\xi$ & \multicolumn{3}{c}{$5.0$} \\
$\epsilon$ & $H<0.3$ & $0.025$ & $0.025$ \\
$\mu_0$ & \multicolumn{3}{c}{iterate} \\
Shooting & \multicolumn{3}{c}{committor} \\
\bottomrule
\end{tabular}
\end{table}

\paragraph{System-specific notes.}
For the underdamped double-well, $\epsilon$ denotes the Hamiltonian threshold rather than a Euclidean radius, and PT is performed in phase space. For Maier--Stein, the stationary density is not available in closed form at $\beta=10$, so PT uses the equilibrium ($\beta=1$) potential only as a proposal. For the periodically-driven system, the network input is augmented with $(\cos(2\pi t/T_{\mathrm{per}}),\sin(2\pi t/T_{\mathrm{per}}))$.

\subsection{Baseline methods}
\label{appendix:baseline}
We implement two baselines based on the variational formulation in \Cref{eq:kbe_variational} for the reversible overdamped Langevin systems.
For the first baseline, we directly optimize \Cref{eq:kbe_variational} over the equilibrium ensemble.
For the second baseline, we apply importance sampling to enhance the sampling of reactive regions proposed by Kang et al.~\cite{kang2024computing}. We obtain the ground truth by the finite-element method implemented in~\cite{baratta2023dolfinx}.

\subsubsection{Baseline 1 (BKE)} The first baseline we consider is to solve the variational formulation of BKE via neural network parameterization.

\textbf{Equilibrium sample generation}.
We generate samples using PT, following the same procedure as we sample our initial distribution.
After tuning, we collected a total of 300,000 samples.

\textbf{Training details}. The objective is given by \Cref{eq:kbe_variational} plus boundary conditions.
We simply add these two objectives together.
When calculating the boundary conditions, we add random small Gaussian noises with std 0.025 to $A$ and $B$, and enforce boundary conditions for all these random samples.
We train the committor with a batch size of 2,048 for 100,000 iterations, using Adam with a learning rate of 0.0004.
We clip the gradient to norm 1. We parameterize the committor function using a four-layer MLP with 32 hidden units per layer.
The network outputs a single scalar, which is passed through a sigmoid layer to ensure values lie within [0, 1].

\textbf{Evaluation details}. We evaluate the reaction rate with 100,000 equilibrium samples by PT.

\subsubsection{Baseline 2 (BKE IS)}

The second baseline combines the variational formulation with importance sampling to ensure optimization over transition regions (as they contribute most errors).
We first generate samples with parallel tempering using the same setting as Baseline 1 to collect 100,000 samples from the original target.
Each sample is assigned a weight of 1.
We then start to optimize the committor.
After every 1,000 optimization steps, we generate 20,000 new samples  using parallel tempering from the biased distribution: $\exp(-U)|\nabla q|^2$ where $q$ is the current committor estimate.
Each of these new samples is associated with an unnormalized importance weight proportional to$-\log |\nabla q|^2$.
We then apply a softmax normalization to these weights and multiply each by the number of new samples (i.e., 20,000), resulting in a rescaled weight for each sample. 
These newly generated samples, along with their weights, are then appended to the existing dataset.
We train the network for a total of 20,000 steps (20 rounds of sample generation and augmentation).
At each training iteration, we randomly sample a batch of 512 samples from the dataset of samples and weights, and then optimize the committor with the importance-weighted variational formulation following Kang et al.~\cite{kang2024computing}.
\par
For boundary conditions, network parameterization, other training setups, and evaluation details, we use the same setups as in Baseline 1.

\subsubsection{Transition path sampling}

To obtain hitting-time transition path ensemble for the reversible overdamped Langevin systems, we use the transition path sampling \citep{dellago1998transition} algorithm with two-way shooting. The settings for three systems are exactly the same. The states are defined within a radius of 0.025 to the boundary condition $A$ and $B$. We use the implementation of the algorithm from \citep{du2024doob} at \href{https://github.com/plainerman/Variational-Doob}{GitHub Repo}.

\subsection{Additional results}
\label{appendix:additional_viz}

We include additional visualizations on the learned committor function and the errors compared with the ground truth committor function.

\begin{figure}
\centering
    \begin{subfigure}{0.9\textwidth}
        \centering
        \includegraphics[width=\linewidth]{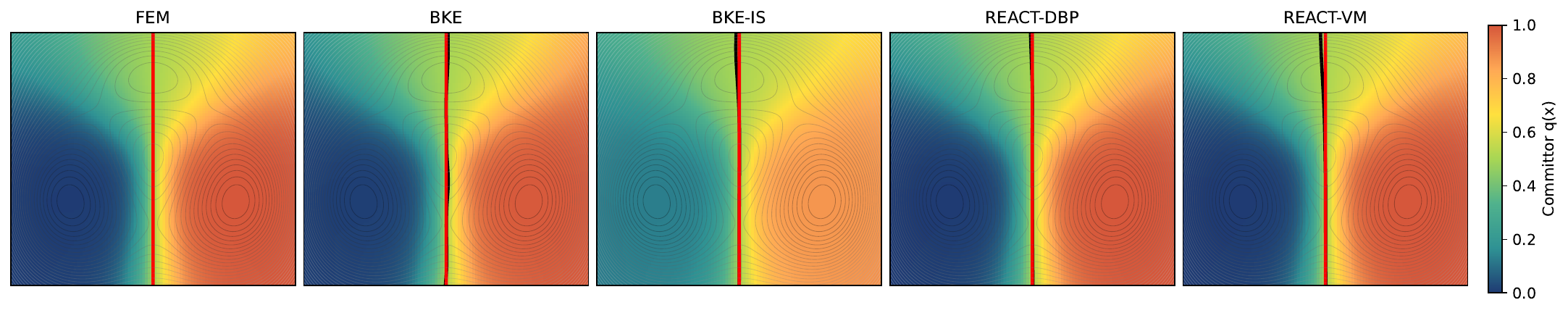}
        \caption{Triple-well potential}
    \end{subfigure}

    \medskip

    \begin{subfigure}{0.9\textwidth}
        \centering
        \includegraphics[width=\linewidth]{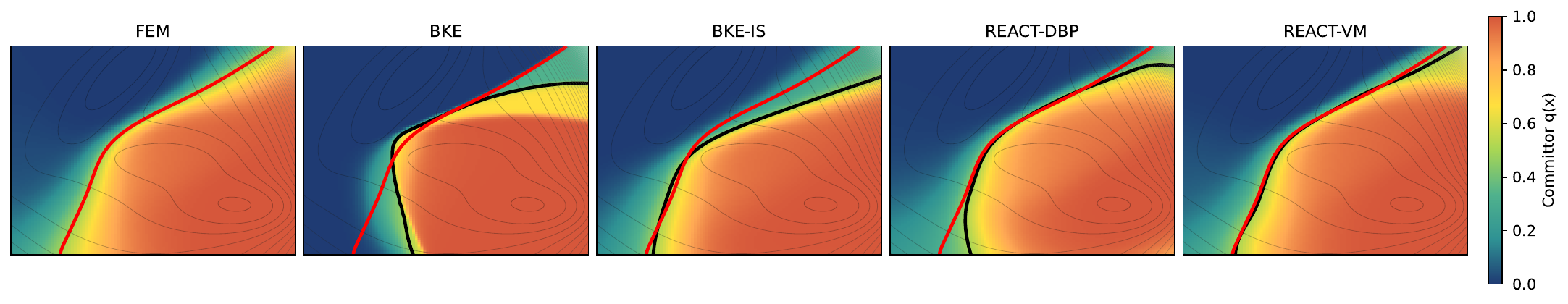}
        \caption{Müller--Brown potential}
    \end{subfigure}

    \medskip

    \begin{subfigure}{0.9\textwidth}
        \centering
        \includegraphics[width=\linewidth]{figure/rugged_muller_committor_all.pdf}
        \caption{Rugged Müller--Brown potential}
    \end{subfigure}

    \caption{Ground-truth and learned committors on the three overdamped systems. Red curves mark the finite-element $q=0.5$ transition surface; black curves mark the $q=0.5$ surface learned by each method. All REACT-VM visualizations use $\kappa=1.0$.}
    \label{fig:viz_committor_all}
\end{figure}

\begin{figure}
\centering
{\small
\setlength{\tabcolsep}{3pt}
\begin{tabular}{@{}c@{\hspace{3pt}}c@{\hspace{3pt}}c@{\hspace{3pt}}c@{\hspace{3pt}}c@{}}
    & \textbf{TPS paths} & \textbf{REACT-VM paths} & \textbf{TPS hitting times} & \textbf{REACT-VM hitting times} \\
    \shortstack{\textbf{Triple-}\\\textbf{well}} &
    \includegraphics[width=0.20\textwidth]{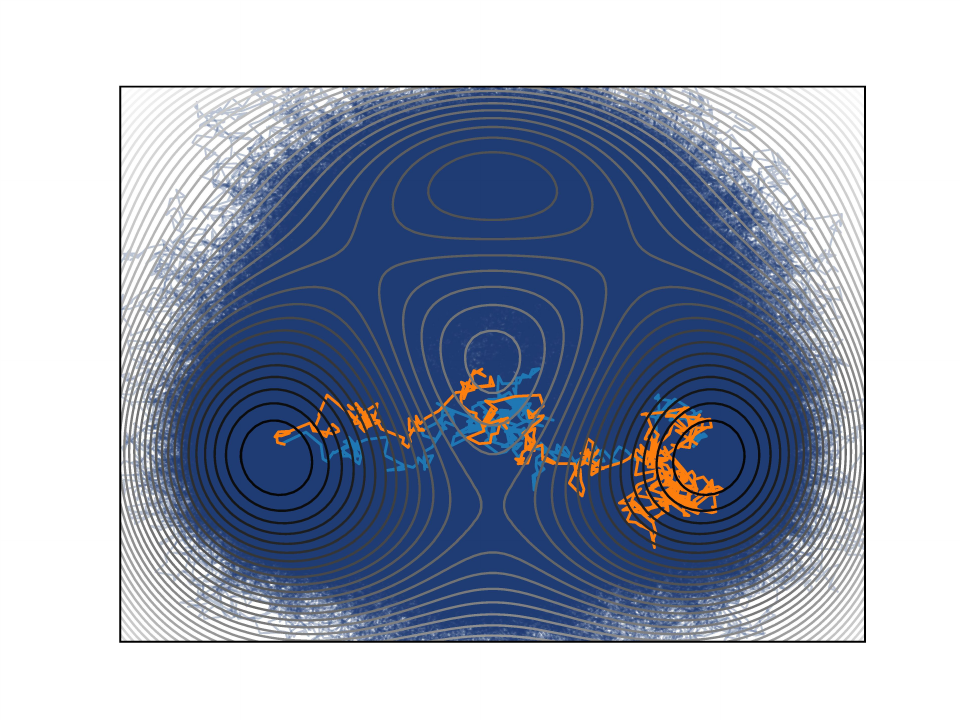} &
    \includegraphics[width=0.20\textwidth]{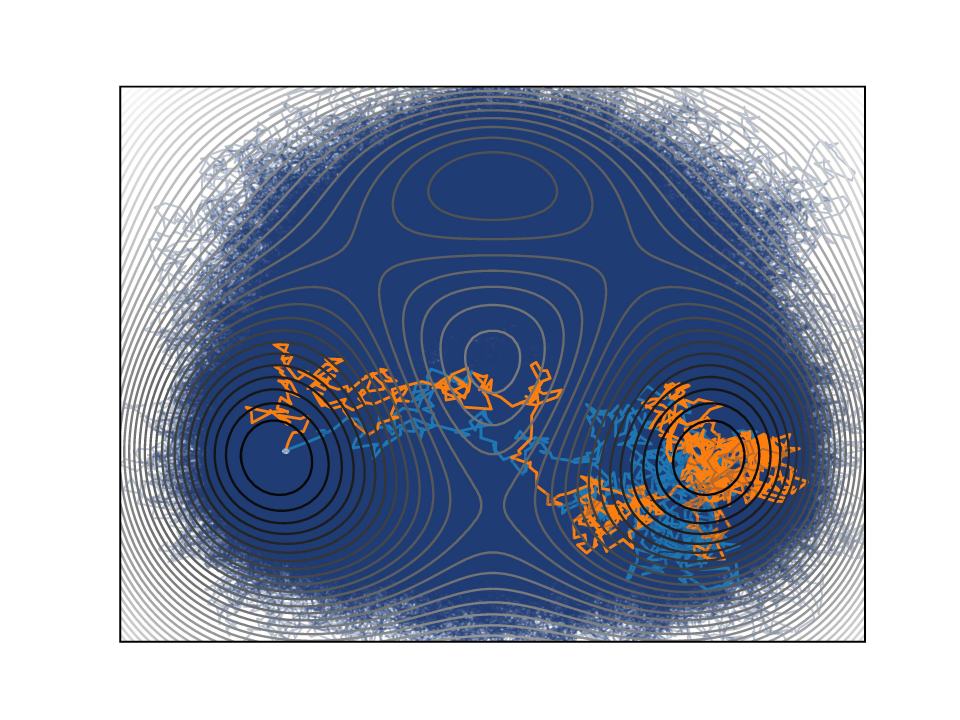} &
    \includegraphics[width=0.20\textwidth]{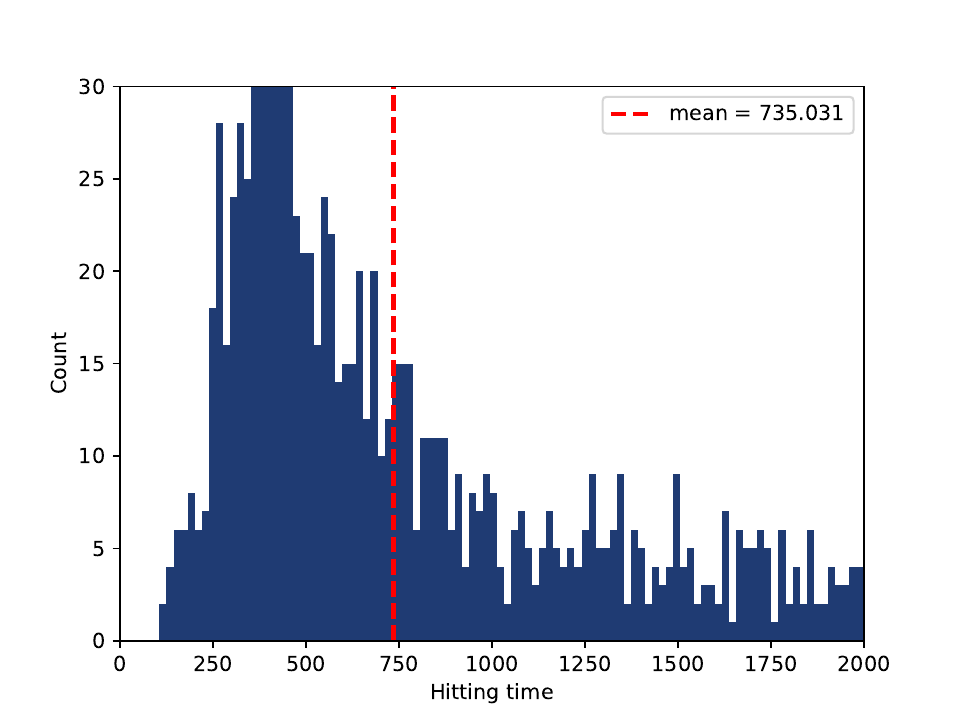} &
    \includegraphics[width=0.20\textwidth]{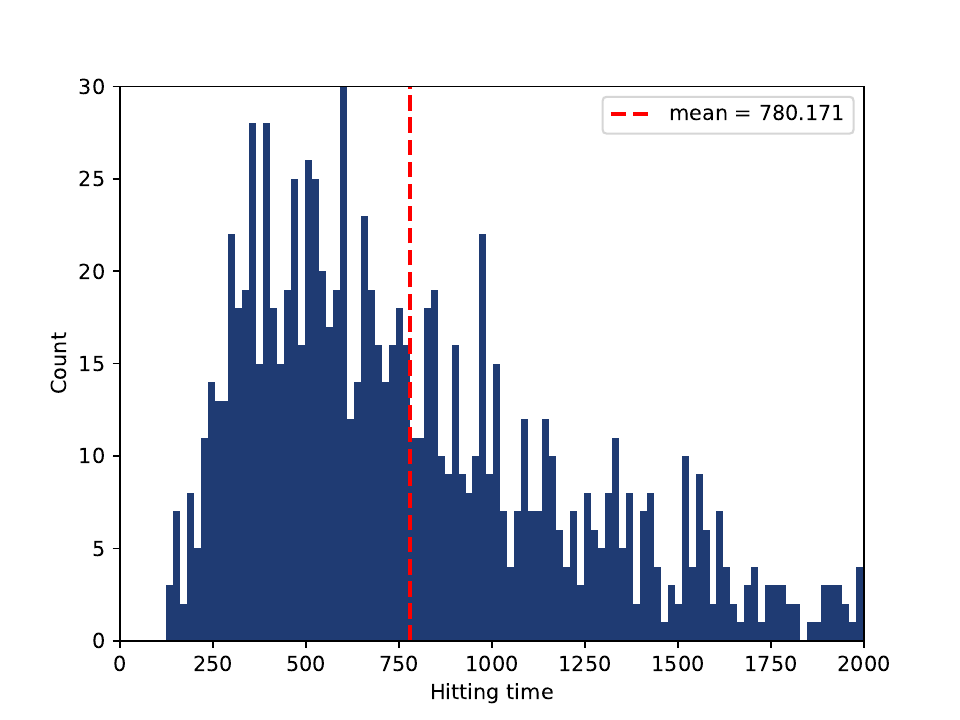} \\
    \shortstack{\textbf{Müller--}\\\textbf{Brown}} &
    \includegraphics[width=0.20\textwidth]{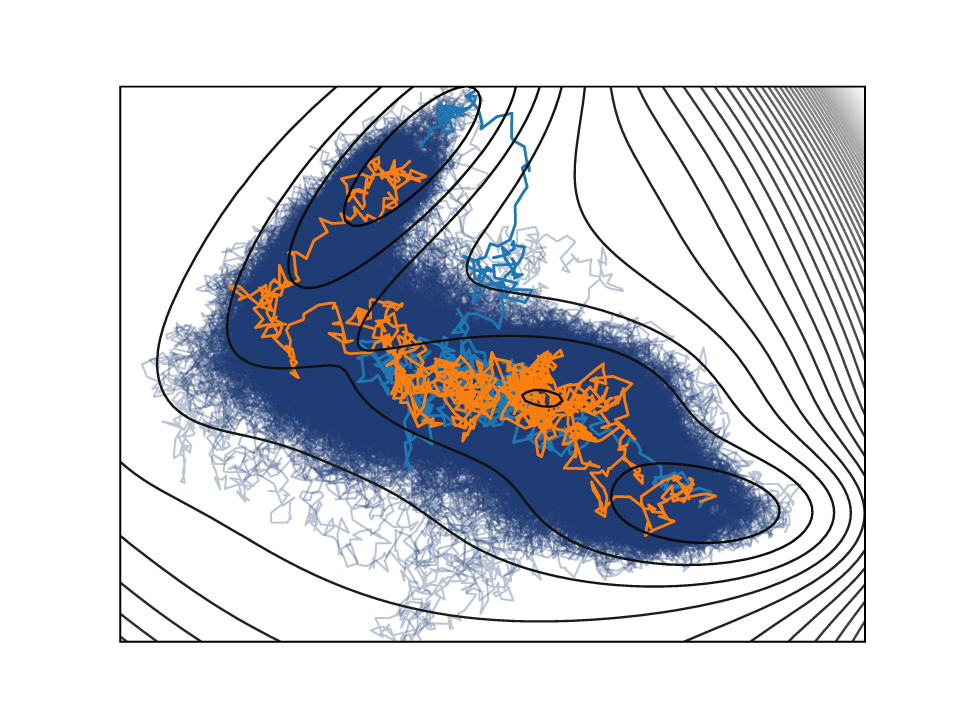} &
    \includegraphics[width=0.20\textwidth]{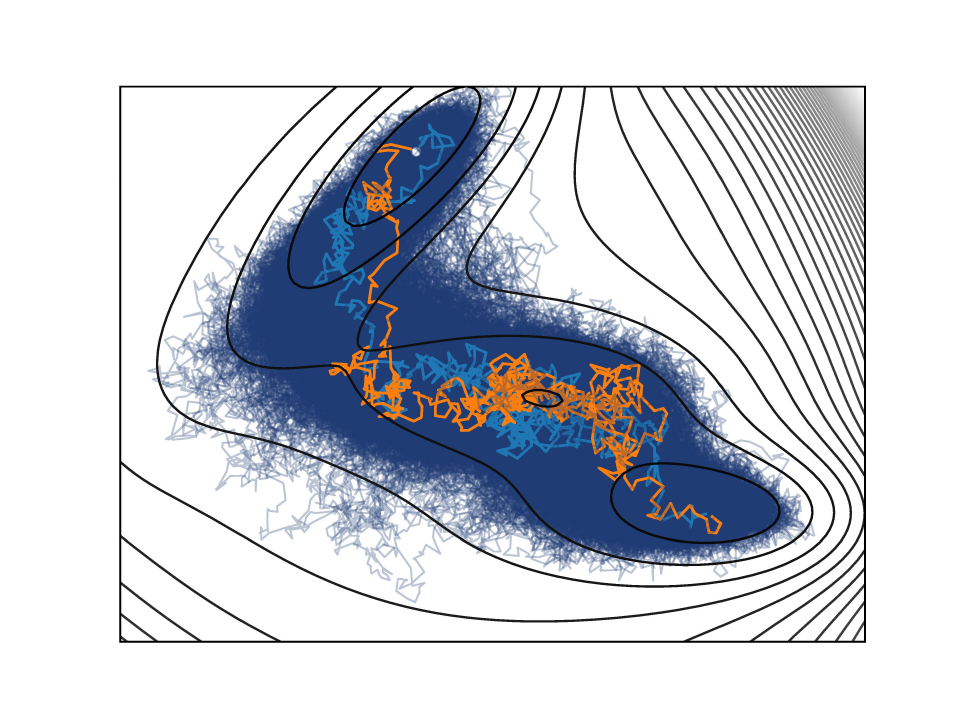} &
    \includegraphics[width=0.20\textwidth]{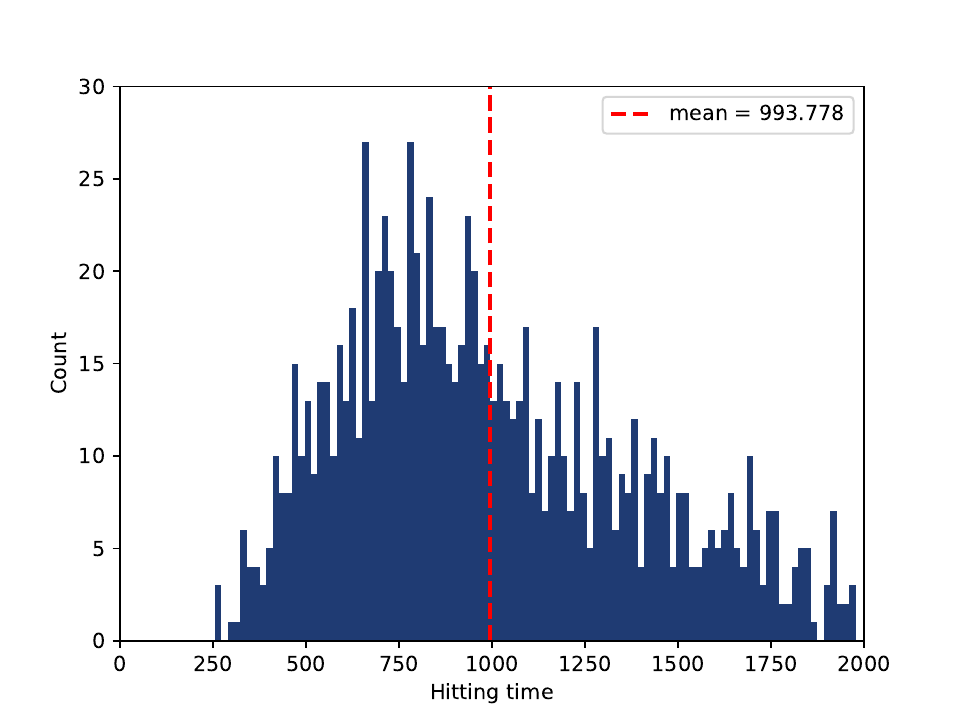} &
    \includegraphics[width=0.20\textwidth]{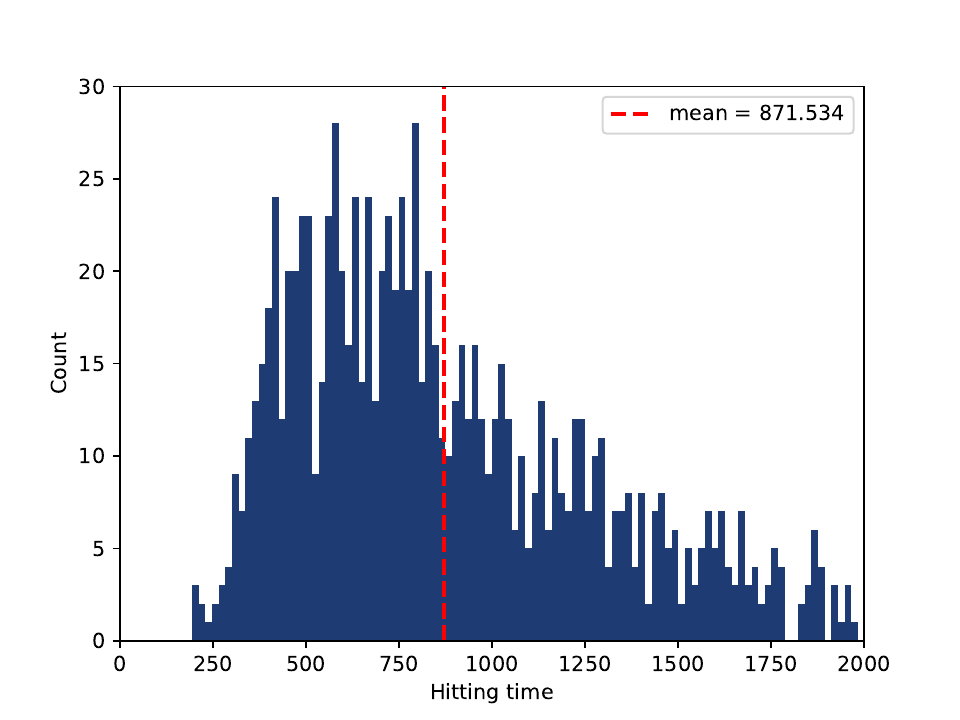} \\
    \shortstack{\textbf{Rugged}\\\textbf{Müller--}\\\textbf{Brown}} &
    \includegraphics[width=0.20\textwidth]{figure/rugged_two_way_transition_paths.pdf} &
    \includegraphics[width=0.20\textwidth]{figure/react_rugged_transition_paths.pdf} &
    \includegraphics[width=0.20\textwidth]{figure/rugged_two_way_hitting_time_histogram.pdf} &
    \includegraphics[width=0.20\textwidth]{figure/react_rugged_hitting_time_histogram.pdf}
\end{tabular}}
\caption{Transition-path ensembles and hitting-time distributions for TPS and REACT-VM on the three overdamped potentials. Two representative trajectories are highlighted in each path ensemble. REACT-VM results use $\kappa=1.0$ for all systems.}
\label{fig:viz_tps}
\end{figure}

\begin{figure}
\centering
    \begin{subfigure}{0.31\textwidth}
        \centering
        \includegraphics[width=\linewidth]{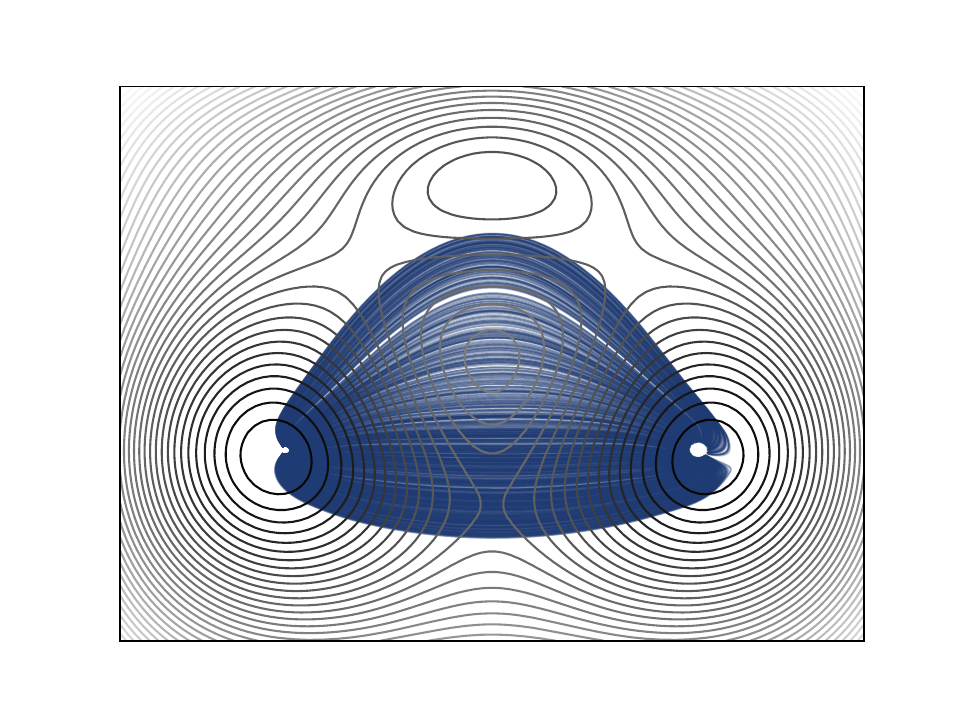}
        \caption{Triple-well potential}
    \end{subfigure}
    \hfill
    \begin{subfigure}{0.31\textwidth}
        \centering
        \includegraphics[width=\linewidth]{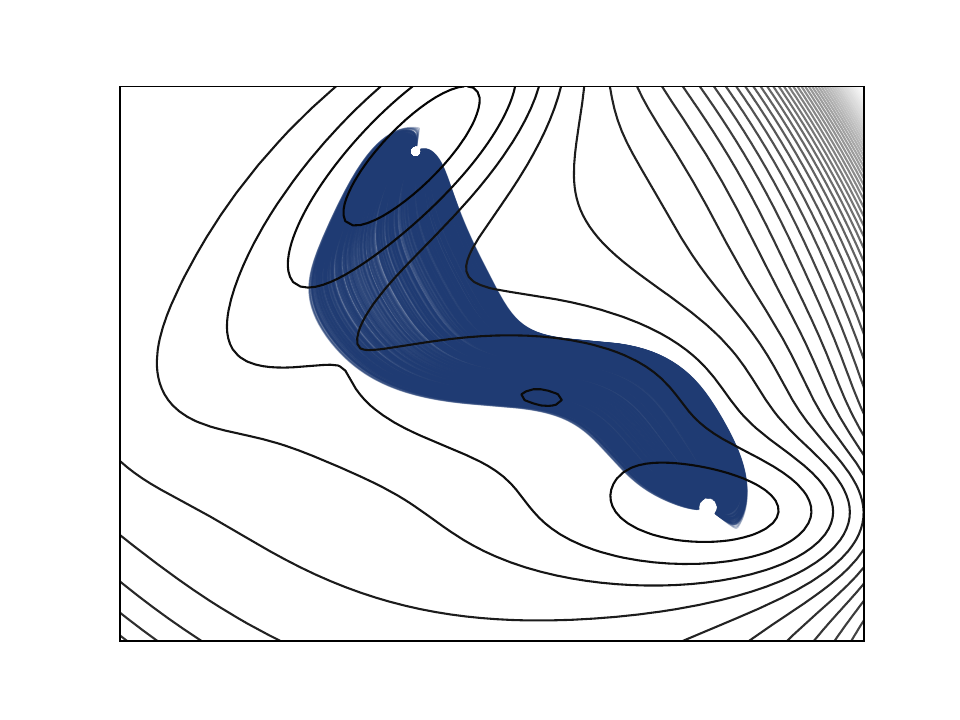}
        \caption{Müller--Brown potential}
    \end{subfigure}
    \hfill
    \begin{subfigure}{0.31\textwidth}
        \centering
        \includegraphics[width=\linewidth]{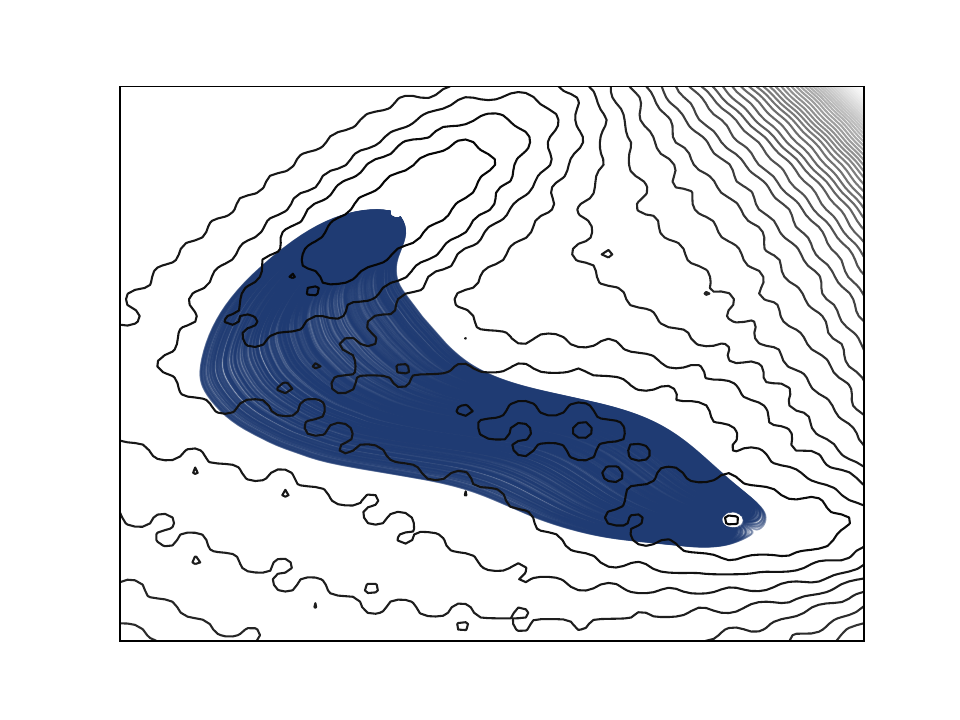}
        \caption{Rugged Müller--Brown potential}
    \end{subfigure}

    \caption{Reactive flux visualizations on the three overdamped potentials with $\kappa=0$.}
    \label{fig:viz_flux}
\end{figure}

\begin{figure}
\centering

    \begin{subfigure}{0.9\textwidth}
        \centering
        \includegraphics[width=\linewidth]{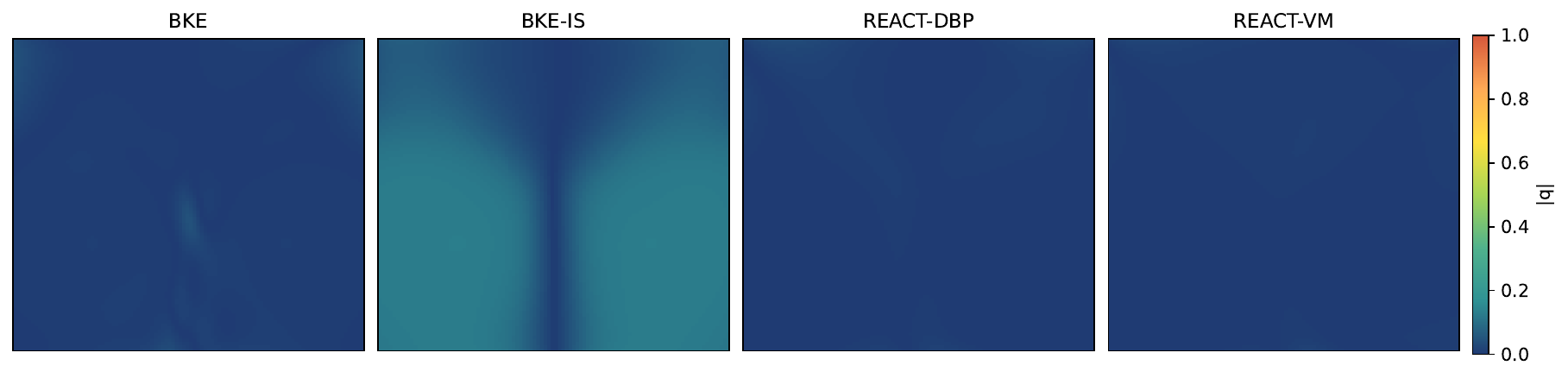}
        \caption{Triple-well potential}
    \end{subfigure}

    \medskip

    \begin{subfigure}{0.9\textwidth}
        \centering
        \includegraphics[width=\linewidth]{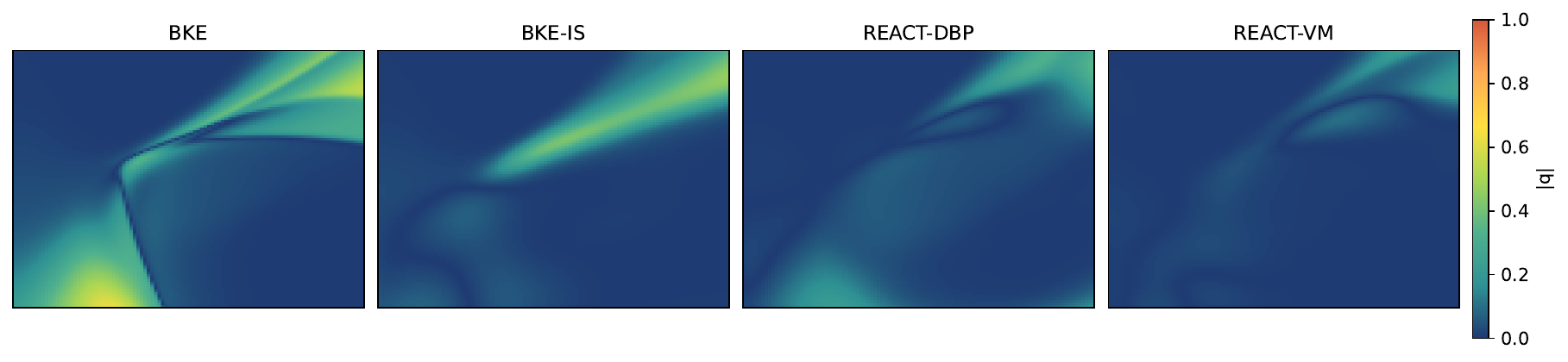}
        \caption{Müller--Brown potential}
    \end{subfigure}

    \medskip

    \begin{subfigure}{0.9\textwidth}
        \centering
        \includegraphics[width=\linewidth]{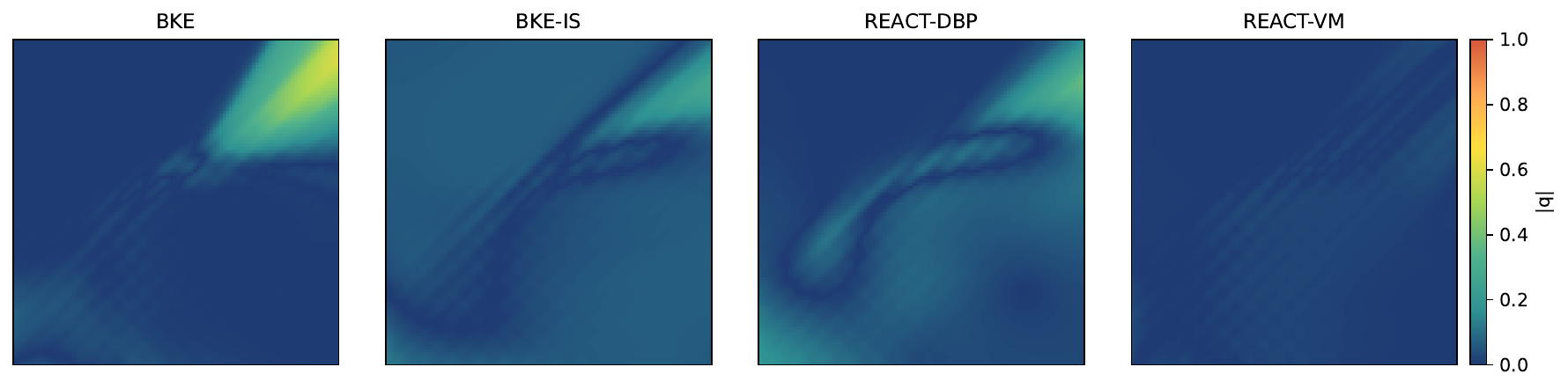}
        \caption{Rugged Müller--Brown potential}
    \end{subfigure}

    \caption{Mean absolute error maps between ground-truth and learned committor functions on the three overdamped systems.}
    \label{fig:error_committor}
\end{figure}

\begin{figure}
\centering

    \begin{subfigure}{0.9\textwidth}
        \centering
        \includegraphics[width=\linewidth]{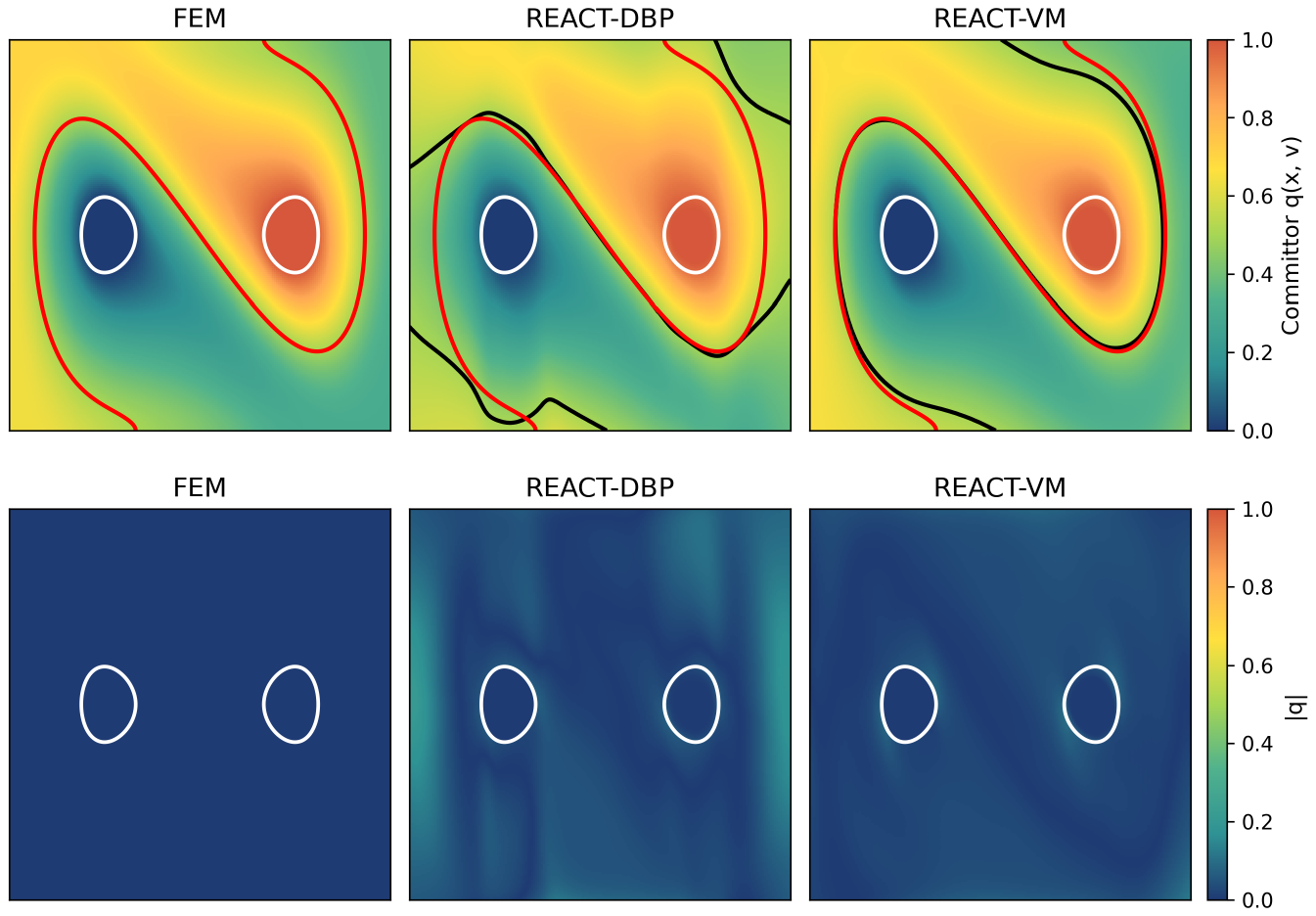}
    \end{subfigure}

    \caption{Mean absolute error map between ground-truth and learned committor functions on the underdamped double-well system.}
    \label{fig:error_committor_underdamped_doublewell}
\end{figure}

\begin{figure}
\centering

    \begin{subfigure}{0.9\textwidth}
        \centering
        \includegraphics[width=\linewidth]{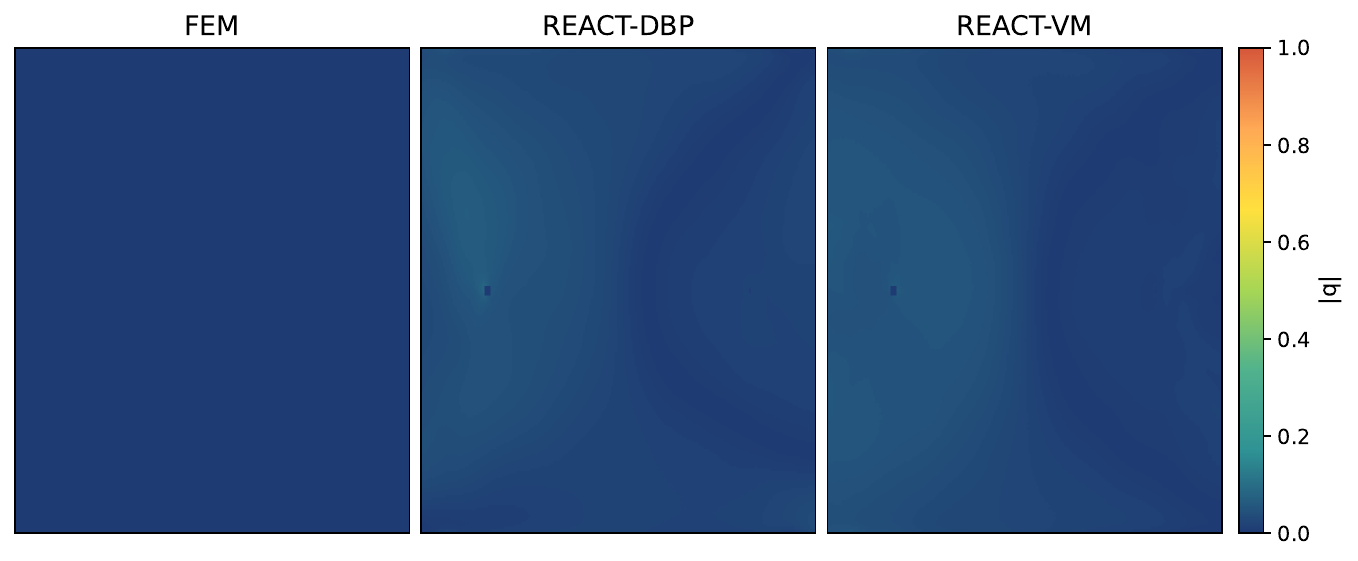}
    \end{subfigure}

    \caption{Mean absolute error map between ground-truth and learned committor functions on the Maier--Stein system.}
    \label{fig:error_committor_maier}
\end{figure}

\begin{figure}
\centering

    \begin{subfigure}{0.9\textwidth}
        \centering
        \includegraphics[width=\linewidth]{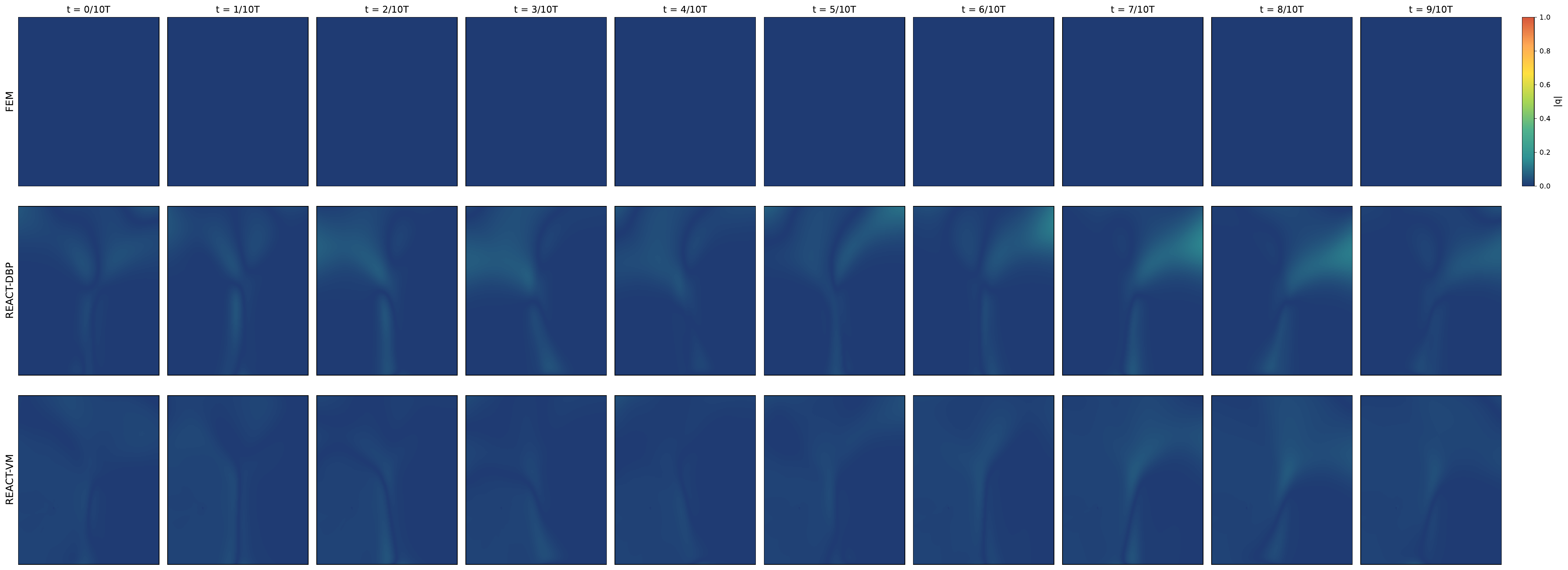}
    \end{subfigure}

    \caption{Mean absolute error map between ground-truth and learned committor functions on the periodically-driven triple-well system.}
    \label{fig:error_committor_periodic}
\end{figure}

\end{document}